\icmltitlerunning{Implicit Bias of Gradient Descent for Non-Homogeneous Deep Networks}
\begin{document}

\twocolumn[
\icmltitle{Implicit Bias of Gradient Descent for Non-Homogeneous Deep Networks}



\icmlsetsymbol{equal}{*}

\begin{icmlauthorlist}
\icmlauthor{Yuhang Cai}{equal,yyy}
\icmlauthor{Kangjie Zhou}{equal,comp}
\icmlauthor{Jingfeng Wu}{yyy}
\icmlauthor{Song Mei}{yyy}
\icmlauthor{Michael Lindsey}{yyy,sch}
\icmlauthor{Peter L. Bartlett}{yyy,gddd}
\end{icmlauthorlist}

\icmlaffiliation{yyy}{University of California, Berkeley}
\icmlaffiliation{comp}{Columbia University}
\icmlaffiliation{sch}{Lawrence Berkeley National Laboratory}
\icmlaffiliation{gddd}{Google DeepMind}

\icmlcorrespondingauthor{Yuhang Cai}{willcai@berkeley.edu}
\icmlcorrespondingauthor{Kangjie Zhou}{kz2326@columbia.edu}
\icmlcorrespondingauthor{Peter L. Bartlett}{peter@berkeley.edu}

\icmlkeywords{Machine Learning, ICML}

\vskip 0.3in
]



\printAffiliationsAndNotice{\icmlEqualContribution} 

\begin{abstract}
We establish the asymptotic implicit bias of gradient descent (GD) for generic non-homogeneous deep networks under exponential loss. Specifically, we characterize three key properties of GD iterates starting from a sufficiently small empirical risk, where the threshold is determined by a measure of the network's non-homogeneity. First, we show that a normalized margin induced by the GD iterates increases nearly monotonically. Second, we prove that while the norm of the GD iterates diverges to infinity, the iterates themselves converge in direction. Finally, we establish that this directional limit satisfies the Karush–Kuhn–Tucker (KKT) conditions of a margin maximization problem. Prior works on implicit bias have focused exclusively on homogeneous networks; in contrast, our results apply to a broad class of non-homogeneous networks satisfying a mild near-homogeneity condition. In particular, our results apply to networks with residual connections and non-homogeneous activation functions, thereby resolving an open problem posed by \citet{ji2020directional}.
\end{abstract}

\section{Introduction} \label{sec:intro}

Deep networks often have an enormous amount of parameters and are theoretically capable of \emph{overfitting} the training data. 
However, in practice, deep networks trained via \emph{gradient descent} (GD) or its variants often generalize well. 
This is commonly attributed to the \emph{implicit bias} of GD, in which GD finds a certain solution that prevents overfitting \citep{zhang2021understanding,neyshabur2017pac,bartlett2021deep}.
Understanding the implicit bias of GD is one of the central topics in deep learning theory.

The implicit bias of GD is relatively well-understood when the network is \emph{homogeneous} \citep[see][and references therein]{soudry2018implicit,ji2018risk,lyu2020gradient,ji2020directional,wu2023implicit}. 
For linear networks trained on linearly separable data, GD diverges in norm while converging in direction to the maximum margin solution \citep{soudry2018implicit,ji2018risk,wu2023implicit}.
Similar results have been established for generic homogeneous networks that include a class of deep networks, assuming that the network at initialization can separate the training data. 
Specifically, \citet{lyu2020gradient} showed that the normalized margin induced by GD increases nearly monotonically, and that the limiting direction of a subsequence of the GD iterates satisfies the Karush-Kuhn-Tucker (KKT) condition of a margin maximization problem.
Moreover, if the network is definable in an o-minimal structure (see \Cref{sec:prelim}, which is satisfied for most networks), \citet{ji2020directional} showed that \emph{gradient flow} (GF, that is, GD with infinitesimal stepsizes) converges in direction and that the limiting direction aligns with the direction of the gradient.

However, the implicit bias of GD remains largely unknown when the network is \emph{non-homogeneous}, an arguably more common case in deep learning (there are a few exceptions, which will be discussed later in \Cref{sec:related}). 
For instance, networks with residual connections or non-homogeneous activation functions are inherently non-homogeneous. 
As posted as an open problem by \citet{ji2020directional}, it is unclear whether the implicit bias results developed for homogeneous networks can be extended to non-homogeneous networks.

\paragraph{Contributions.} 
In this work, we establish the implicit bias of GD for a broad set of non-homogeneous but definable networks under exponential loss. 
Starting from the simpler case of \emph{gradient flow} (GF), we identify two natural conditions of \emph{near-homogeneity} and \emph{strong separability}, respectively.  
The former condition requires the homogeneous error to grow slower than the output of the network, and the latter condition requires GF to attain a sufficiently small empirical risk depending on the homogeneous error of the network. 
Under these two conditions, we prove the following implicit bias results of GF for non-homogeneous networks:
\begin{enumerate}[leftmargin=*]
\item GF induces a normalized margin that increases nearly monotonically. 
\item GF converges in its direction while diverging in its norm. 
\item The limiting direction of GF satisfies the KKT conditions of a margin maximization problem.
\end{enumerate}

Our near-homogeneity condition covers many commonly used deep networks. In particular, it applies to networks with residual connections and nearly homogeneous activation functions. In addition, we provide structural rules for computing the near-homogeneity order of a network based on that of each layer in the network. 

Our strong separability condition is a generalization of the separability condition used in the prior homogeneous analysis \citep{lyu2020gradient,ji2020directional}. In particular, it reduces to the separability condition when the network is homogeneous.
Later in \Cref{sec:example}, we demonstrate that this condition is satisfiable by any non-degenerate near-homogeneous network. 
Moreover, we show that GF reaches this strong separability condition from zero initialization for training a two-layer network with residual connections.

Finally, we extend the above results from GF to GD with an arbitrarily large stepsize under additional technical conditions. 
Altogether, we extend the implicit bias of GD from homogeneous cases \citep{lyu2020gradient,ji2020directional} to non-homogeneous cases, thereby addressing the open problem posted by \citet{ji2020directional}.

\paragraph{Notation.} 
For two positive-valued functions $f(x)$ and $g(x)$, we write  $f(x)\lesssim g(x)$ (or $f(x)\gtrsim g(x)$)
if $f(x) \le cg(x)$ (or $f(x) \ge cg(x)$) for some constant 
$c \in (0, +\infty)$. 
We write 
$f(x) \eqsim g(x) $ if $f(x) \lesssim g(x) \lesssim f(x)$.
We use $\Rbb_{\ge 0}[x]$ to denote the set of all univariate polynomials with non-negative real coefficients.
For a polynomial $\homop$, we use $\deg \homop$ to denote its degree.
We use $\nabla$ to denote the gradient of a scalar function or the Jacobian of a vector-valued function. We use $\|\cdot \|$ to denote the $\ell^2$-norm of a vector or the operator norm of a matrix. We use $[n]$ to denote the set $\{1,2,\ldots,n\}$. 
We use $\text{conv} \mathcal{A}$ to denote the convex hull of a set $\mathcal{A}$ in Euclidean space. 
Throughout the paper, we define $\phi(x) := \log (1 / (nx))$ 
and $\Phi(x) := \log \phi(x) - 2 / \phi(x)$, where $n$ is the number of samples. 

\subsection{Related Works} \label{sec:related} 
We discuss related papers in the remainder of this section.

\paragraph{Homogeneous Networks.}
We first review prior implicit bias results for deep homogeneous networks. 
In this case, \citet{lyu2020gradient} showed that GD induces a nearly increasing normalized margin, and the direction of a subsequence of GD iterates converges, the limit of which can be characterized by the KKT conditions of a margin maximization problem. 
Part of these results are generalized to \emph{steepest descent} by  \citet{tsilivis2024flavors}.
Using the notion of definability, \citet{ji2020directional} further showed the directional convergence and alignment for gradient flow. 
In the special case of two-layer homogeneous networks, \citet{chizat2020implicit} characterized the limiting direction of (Wasserstein) gradient flow in the large-width limit.
For a two-layer Leaky ReLU network with symmetric data, \citet{lyu2021gradient} showed that gradient flow eventually converges to a linear classifier. 
Different from these works, we aim to establish implicit bias of GD for non-homogeneous deep networks. 

\paragraph{Non-homogeneous Networks.}
Before our work, there are a few papers that extend the implicit bias results from homogeneous cases to certain special non-homogeneous cases \citep{nacson2019lexicographic,chatterji2021does,kunin2023asymmetric,cai2024large}.
The works by \citet{nacson2019lexicographic,kunin2023asymmetric} considered a special non-homogeneous network, which is homogeneous when viewed as a function of a subset of the trainable parameters while other parameters are fixed. The homogeneity orders for different subsets of parameters might be different. However, their results cannot cover networks with many commonly used non-homogeneous activation functions.
The work by \citet{chatterji2021does} showed the margin improvement for GD with small stepsizes for MLPs with a special type of near-homogeneous activation functions. As a consequence, their results do not allow networks that use general non-homogeneous activation functions or residual connections. 
In comparison, we handle a large class of non-homogeneous networks satisfying a natural definition of near-homogeneity, covering far more commonly used deep networks.

The work by \citet{cai2024large} is most relevant to us, in which they proved the margin improvement of GD for near-$1$-homogeneous networks (see their Assumption 1 and our \Cref{def:nearhomo} in \Cref{sec:prelim}).
Our work can be viewed as an extension of theirs by handling near-$M$-homogeneous networks for general $M \ge 1$, as well as proving that GF and large-stepsize GD converge in direction to the KKT point of a margin maximization problem. 







\section{Preliminaries} \label{sec:prelim}

In this section, we set up the problem and introduce basic mathematical tools used in our analysis.

\paragraph{Locally Lipschitz Functions and Clarke Subdifferential.} 
For a function $f: D \to \Rbb$ defined on an open set $D$, we say $f$ is \emph{locally Lipschitz} if, for every $x\in D$, there exists a neighborhood $U$ of $x$ such that $f|_U$ is Lipschitz continuous.  
By Rademacher's theorem, a locally Lipschitz function is differentiable almost everywhere \citep{borwein2000convex}. 
The {\it Clarke subdifferential} of a locally Lipschitz function $f$ at $x\in D$ is defined as 
\[
\begin{aligned}
\partial f(x) \coloneqq \text{conv} \bigg\{ 
 \lim_{i\to \infty} \nabla f(x_i) : x= \lim_{i\to \infty} x_i, \\
\text{where}\ x_i \in D \ \text{and}\ \nabla f(x_i) \ \text{exists}
  \bigg\},
\end{aligned}
\]
which is nonempty, convex, and compact \citep{clarke1975generalized}. 
In particular, if $f$ is continuously differentiable at $x$, then $\partial f(x) = \{\nabla f(x)\}$. 
Elements of $\partial f(x)$ are called {\it subgradients}. 


\paragraph{Gradient Flow.} 
Let $(\xB_i, y_i)_{i=1}^n$ be a binary classification dataset, where $\xB_i \in \Rbb^d$ and $y_i \in \{\pm 1\}$ for all $i\in [n]$.  We denote a network by $f(\param; \cdot) : \Rbb^d \to \Rbb$, where $\param \in \Rbb^D$ are the trainable parameters.   
Thoughout the paper, we assume $f(\param;\xB_i)$ is locally Lipschitz with respect to $\param$ for every $i \in [n]$.
We focus on the empirical risk under the exponential loss defined as 
\begin{equation}
\label{eq:loss}
        \Loss \big( \param \big) \coloneqq \frac{1}{n} \sum_{i=1}^n \ell \big( y_i f(\param; \xB_i) \big),\quad  \ell(x) := e^{-x}.
\end{equation}
A curve $z$ from an interval $I$ to some Euclidean space $\Rbb^m$ is called an {\it arc} if it is absolutely continuous on any compact subinterval of $I$. 
Clearly, the composition of an arc and a locally Lipschitz function is still an arc. 
Following \citet{lyu2020gradient,ji2020directional},
we define \emph{gradient flow} as an arc from $[0, + \infty)$ to $\Rbb^D$ that satisfies
\begin{equation}
    \label{eq:GF}
    \tag{GF}
\frac{\mathrm{d} \param_t}{ \mathrm{d} t} \in - \partial \Loss \big( \param_t\big), \quad \text{for almost every}\ t\ge 0. 
\end{equation}

\paragraph{Homogeneity and Near-Homogeneity.} 
Let $M\ge 1$ be an integer. 
Recall that a locally Lipschitz network $f(\param; \xB)$ is \emph{$M$-homogeneous} \citep{lyu2020gradient,ji2020directional} if for every $\xB\in(\xB_i)_{i=1}^n$,
\begin{equation}\label{eq:homogeneous}
\text{for all $a>0$ and $\param\in\Rbb^D$},\ f(a \param; \xB) = a^M f(\param; \xB).
\end{equation}
One can verify that the above is equivalent to:
for every $\xB\in(\xB_i)_{i=1}^n$, $\thetaB\in\Rbb^D$, and $\boldsymbol{h} \in  \partial_{\param} f(\param ;\xB)$,
\begin{equation}
\label{eq:homo-2}
\langle \boldsymbol{h}, \param \rangle - M  f(\param ;\xB) = 0.
\end{equation}
We generalize this definition by introducing the following near-homogeneous condition, which provides a natural quantification of the homogeneity error of the network $f$: 
\begin{definition}[Near-$M$-homogeneity]
    \label{def:nearhomo}
Let $M\ge 1$ be an integer. A network $f(\param; \xB)$ is called \emph{near-$M$-homogeneous}, if there exist $\homop, \homoq \in \Rbb_{\ge 0} [x]$ with $\deg \homop, \deg \homoq \le M$ such that the following holds for every $\xB\in(\xB_i)_{i=1}^n$, $\thetaB\in\Rbb^D$, and $\boldsymbol{h} \in  \partial_{\param} f(\param ;\xB)$:
\begin{assumpenum}
\item [(A1).]
$|\langle \boldsymbol{h}, \param \rangle - M  f(\param ;\xB)| \le \homop^\prime (\|\param\|)$;
    
\item [(A2).]
$\| \boldsymbol{h} \| \le \homoq^\prime (\|\param\|)$;

\item [(A3).]
$|f(\param;\xB)| \le \homoq(\|\param\|)$.  
\end{assumpenum}
\end{definition}

We make a few remarks on \Cref{def:nearhomo}. 
First, our near-homogeneity condition is modified from \Cref{eq:homo-2} instead of \Cref{eq:homogeneous}. In this way, our near-homogeneity condition implicitly puts regularity conditions on the subgradient of the network, which will be useful in our analysis.  

Second, every $M$-homogeneous (see \Cref{eq:homogeneous}) network for $M\ge 1$ is also near-$M$-homogeneous. 
We see this by setting $\homop(x)=0$ and $\homoq(x) = C (x^M+1)$ for a sufficiently large constant $C>1$ in \Cref{def:nearhomo}. 

Third, a near-$M$-homogeneous network is also near-$(M+1)$-homogeneous according to \Cref{def:nearhomo}. Thus when we say a network is near-$M$-homogeneous, the degree $M$ should be interpreted as the minimum degree $M$ such that \Cref{def:nearhomo} is satisfied. 
We will see that this interpretation is necessary when we introduce the strong separability condition in \Cref{sec:margin-direct}.

Finally, we point out that many commonly used deep networks are near-homogeneous but not homogeneous. Examples include networks using residual connections or non-homogeneous activation functions.
This will be further elaborated in \Cref{sec:near-homo-nn}.


\paragraph{O-minimal Structure and Definable Functions.}
The o-minimal structure and definable functions 
were introduced to deep learning theory by \citet{davis2020stochastic,ji2020directional} for studying the convergence and implicit bias of subgradient methods. 
We briefly review these notions below.

An o-minimal structure is a collection $\Scal = (\Scal_n)_{n=1}^\infty$, where each $\Scal_n$ is a collection of subsets of $\Rbb^n$ that satisfies the following properties. First, $\Scal_n$ contains all algebraic sets in $\Rbb^n$, i.e., zero sets of real-coefficient polynomials on $\Rbb^n$. Second, $\Scal_n$ is closed under finite union, finite intersection, complement, Cartesian product, and projection. Third, $\Scal_1$ consists of finite unions of open intervals and points in $\Rbb^1$. A set $A \subset \Rbb^n$ is {\it definable} if $A \in \Scal_n$. A function $f: D \to \Rbb^m$ with $D \subset \Rbb^n$ is {\it definable} if its graph is in $\Scal_{n+m}$. See \Cref{sec:o-minimal} for a more detailed introduction.

By the definition of o-minimal structure, the set of definable functions is closed under algebraic operations, composition, inversion, and taking maxima or minima. 
The work by \citet{wilkie1996model} established the existence of an o-minimal structure in which polynomials and the exponential function are definable. 
By the closure property, commonly used mappings in deep learning are all definable with respect to this o-minimal structure, including fully connected layers, convolutional layers, ReLU activation, max-pooling layers, residual connections, and cross-entropy loss. We refer the readers to \citet{ji2020directional} for more discussion.


Throughout this paper, we assume that 
there exists an o-minimal structure such that $t\mapsto\exp(t)$ and the network $\param\mapsto f(\param; \xB_i)$, $i=1,\dots,n$, are all definable. 
This assumption, together with the local Lipschitzness, allows us to apply the chain rule (see \Cref{lem:chain-rule-clark} in \Cref{sec:o-minimal}) to GF defined by subgradients. 
Moreover, this assumption allows us to leverage the desingularizing function (see \Cref{def: designularizing function} in \Cref{sec:proof:direct}, and \citet{ji2020directional}) to show the directional convergence. Specifically, we apply two Kurdyka–Łojasiewicz inequalities (See \Cref{lem:KL1,lem:KL2}) under the o-minimal structure to establish the existence of the desingularizing function.



\section{Implicit Bias of Gradient Flow} \label{sec:margin-direct}
In this section, we establish the implicit bias of gradient flow for near-homogeneous networks. 
Our first assumption is that the network is near-$M$-homogeneous (see \Cref{def:nearhomo}). 
\begin{assumption}[Near-homogeneous network]
\label{asp:nearhomo}
Let $M\ge 1$.
We assume that the network $f$ is near-$M$-homogeneous with polynomials $\homop(\cdot)$ and $\homoq(\cdot)$, as described in \Cref{def:nearhomo}.
\end{assumption}


Under \Cref{asp:nearhomo}, let
$\homop(x) := \sum_{i=0}^M a_i x^i$. The following function is handy for our presentation: 

\begin{equation}\label{eq:def-pa}
\homop_a (x) := \sum_{i=1}^{M-1} \frac{(i+1)a_{i+1}}{M-i} x^{i} + \frac{a_1}{M-1/2}.
\end{equation}
One intuition behind the choice of $\homop_a$ is that, for any $f$ satisfying \Cref{asp:nearhomo}, $g\coloneqq f-p_a$ satisfies a one-sided inequality of the homogeneous condition \eqref{eq:homo-2}. This is demonstrated in \Cref{sec:proof: sanity check gf}.

\paragraph{Normalized and Modified Margins.}
Under \Cref{asp:nearhomo}, we define the \emph{normalized margin} \citep{lyu2020gradient} as 
\begin{equation}
\label{eq: normalized margin}
    \Normalmargin(\param) \coloneqq  \min_{i \in [n]}\frac{ y_i f(\param;\xB_i)}{\paramNorm^M}.
\end{equation}
The normalized margin is hard to analyze directly, due to the hard minimum in its definition. Instead, we analyze a \emph{modified margin}, which increases monotonically and approximates the normalized margin well. Specifically, the \emph{modified margin} for gradient flow is defined as
\begin{equation}
\label{eq: modified margin}
        \GFmargin(\param) \coloneqq \frac{\phi\big(\Loss(\param)\big) - \homop_a(\paramNorm)}{\paramNorm^M},
\end{equation}
where $\phi(x) := \log\big(1/(nx)\big)$ and $\homop_a$ is given by \Cref{eq:def-pa}. 

In the definition of the modified margin, the $\phi(\Loss(\param))$ term produces a soft minimum of $(y_i f(\param; \xB_i))_{i=1}^n$, which approximates the hard margin $\min_{i\in [n]} y_i f(\param;\xB_i)$. This idea appears in prior analysis for homogeneous networks \citep{lyu2020gradient,ji2020directional}. 
The offset term $\homop_a(\paramNorm)$ is our innovation, which controls the homogeneous error when the network is non-homogeneous. 
Note that as $\|\param\|$ grows, the offset term $\homop_a(\paramNorm)$ grows slower than the main term $\phi(\Loss(\param))$ according to \Cref{def:nearhomo}. Therefore our modified margin is a good approximation of the normalized margin. 

Our second assumption ensures GF can reach a state in which the network strongly separates the data.
\begin{assumption}[Strong separability condition]
\label{asp:initial-cond-gf}
Let $f(\param;\xB)$ be a network satisfying \Cref{asp:nearhomo}.
Assume that there exists a time $s>0$ such that $\param_s$ given by \Cref{eq:GF} satisfies 
\begin{equation}
\label{eq: initial-cond-gf}
    \Loss(\param_s) < 
        e^{-\homop_a(\|\param_s\|)}/n,
\end{equation}
where $\homop_a$ is defined in \Cref{eq:def-pa} and $n$ is the number of samples.
\end{assumption}

Our \Cref{asp:initial-cond-gf} is a natural extension of the separability condition (that is, \(\Loss(\param_s) < 1/n\) for some $s$) used in the analysis of homogeneous networks \citep{lyu2020gradient,ji2020directional}. Specifically, for homogeneous networks, we can set $\homop=0$, in which $\homop_a=0$ by its definition in \Cref{eq:def-pa}. Then \Cref{asp:initial-cond-gf} reduces to the separability condition.

For non-homogeneous networks, \Cref{asp:initial-cond-gf} requires GF to attain an empirical risk that is sufficiently small compared to a function of the homogeneous error. Note that this condition can be satisfied by any non-degenerate near-homogeneous network that is able to separate the training data, which will be discussed further in \Cref{sec:example}.

In detail, \Cref{asp:nearhomo,asp:initial-cond-gf}, and the choice of $\homop_a$ together force the ``leading term'' in the network to be exactly $M$-homogeneous. Therefore, the near-homogeneity order in \Cref{def:nearhomo} must be understood as the minimum possible one. 
This is precisely explained in the following lemma, whose proof is deferred to \Cref{sec:proof: sanity check gf}.

\begin{lemma}
[Near-homogeneity order]
\label{lem: Sanity check for gradient flow}
Let $f$ be such that
\[
    f(\param; \xB) = \sum_{i=0}^\infty f^{(i)}(\param;\xB),
\]
where $f^{(i)}(\param;\xB)$ is $i$-homogeneous with respect to $\param$. 
If $f$ satisfies \Cref{asp:nearhomo,asp:initial-cond-gf}, 
then for every $j\in[n]$, we must have 
\begin{align*}
    f^{(i)}(\cdot;\xB_j) \begin{cases}
        = 0, & \text{if } i>M, \\
        \ne 0, & \text{if } i=M.
    \end{cases}
\end{align*}
Furthermore, we have $f^{(M)}(\param_s; \xB_j) >0$ for all $j\in [n]$. 
\end{lemma}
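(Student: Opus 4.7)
The plan is to separate the two conclusions: the vanishing of $f^{(i)}(\cdot;\xB_j)$ for $i>M$ follows from the near-homogeneity assumption alone, whereas the non-vanishing and the sign of $f^{(M)}(\param_s;\xB_j)$ require combining near-homogeneity with strong separability via a precise radial asymptotic expansion of $f$ around its leading homogeneous piece.

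For the first part I would argue by contradiction. Suppose $f^{(i^*)}(\cdot;\xB_j)\not\equiv 0$ for some $i^*>M$. Because $f^{(i^*)}$ is $i^*$-homogeneous and continuous, the set $\{\tilde\param\in\Rbb^D:f^{(i^*)}(\tilde\param;\xB_j)\ne 0\}$ is open and nonempty, so by Rademacher's theorem together with Fubini I can pick a unit vector $\tilde\param$ in this set such that $r\mapsto f(r\tilde\param;\xB_j)$ and each $r\mapsto f^{(i)}(r\tilde\param;\xB_j)$ is differentiable at almost every $r>0$. At every such $r$, Euler's identity applied to each homogeneous component gives
\[
\langle \nabla_{\param} f(r\tilde\param;\xB_j),\, r\tilde\param\rangle - M\,f(r\tilde\param;\xB_j) \;=\; \sum_{i}(i-M)\,r^{i}\,f^{(i)}(\tilde\param;\xB_j),
\]
a polynomial in $r$ whose leading term $(i^*-M)f^{(i^*)}(\tilde\param;\xB_j)\,r^{i^*}$ grows faster than $r^{M-1}$ as $r\to\infty$. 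This violates \Cref{asp:nearhomo}, which bounds the left-hand side in absolute value by $\homop^\prime(r)$, a polynomial of degree at most $M-1$.

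For the second part, fix $\tilde\param := \param_s/\|\param_s\|$ and set $g_j(r) := f(r\tilde\param;\xB_j) = \sum_{i=0}^{M} c_i r^i$ with $c_i := f^{(i)}(\tilde\param;\xB_j)$ (the truncation at $M$ uses the first part). The near-homogeneity bound rewrites as $|r g_j'(r) - M g_j(r)|\le \homop^\prime(r)$ at differentiable $r$. Using the identity $(r^{-M} g_j(r))' = r^{-M-1}(r g_j'(r) - M g_j(r))$ together with $r^{-M}g_j(r)\to c_M$ as $r\to\infty$, integrating from $r$ to $\infty$ yields
\[
|c_M - r^{-M}g_j(r)| \;\le\; \int_r^{\infty} s^{-M-1}\homop^\prime(s)\,\mathrm{d}s \;=\; \sum_{i=1}^{M}\frac{i\,a_i}{M+1-i}\,r^{i-M-1}.
\]
Multiplying by $r^M$ and reindexing $k=i-1$ gives $|c_M r^M - g_j(r)|\le \sum_{k=0}^{M-1}\frac{(k+1)a_{k+1}}{M-k}r^k$, which is bounded by $\homop_a(r)$ from \eqref{eq:def-pa} with a slack of exactly $\frac{a_1}{M(2M-1)}\ge 0$ coming from the choice $a_1/(M-1/2)$ rather than $a_1/M$ for the constant term. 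Strong separability (\Cref{asp:initial-cond-gf}) implies $y_j f(\param_s;\xB_j) > \homop_a(\|\param_s\|)$ strictly for every $j\in[n]$, so substituting $r=\|\param_s\|$ and combining yields $y_j f^{(M)}(\param_s;\xB_j)>0$. This proves $f^{(M)}(\cdot;\xB_j)\not\equiv 0$ and gives the stated positivity (the $y_j$ factor being absorbed by the sign convention used in the lemma statement).

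The main technical obstacle is the coefficient bookkeeping in the second step: matching the integrated tail precisely to $\homop_a$ in \eqref{eq:def-pa} and verifying that the nonstandard constant term $a_1/(M-1/2)$ leaves exactly the positive slack required to propagate the strict inequality of strong separability through to the strict positivity of $y_j f^{(M)}(\param_s;\xB_j)$. A secondary issue is justifying the pointwise Euler identities at Clarke subgradients rather than classical gradients, which I would handle by working at almost-every radial point via Rademacher's theorem and invoking the chain rule for definable locally Lipschitz functions (\Cref{lem:chain-rule-clark}) from the preliminaries.
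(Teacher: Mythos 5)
Your proposal is correct in substance and reaches the same conclusion via essentially the same mechanism as the paper, namely radial integration of the near-homogeneity bound along the ray through $\param_s$. The packaging differs: the paper subtracts $\homop_a(\|\cdot\|)$ from $f$, shows the ratio $h(\alpha) = g(\alpha\param_s;\xB)/(\alpha\|\param_s\|)^M$ is non-decreasing in $\alpha$, and reads off positivity of the limit from $h(1)>0$; you instead integrate $(r^{-M}g_j(r))'$ from $r$ to $\infty$ to get the two-sided bound $|f^{(M)}(\param_s;\xB_j) - f(\param_s;\xB_j)| \le \homop_a(\|\param_s\|)$ directly, which is in effect a restatement of the error estimate appearing in \Cref{thm:homogenization}, and then plug in strong separability. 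Both are genuinely the same idea; your route makes the link to the homogenization theorem more explicit and gives the (slightly tighter, though unneeded) slack accounting.

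Two remarks. First, on the sign: you correctly observe that what your argument actually produces is $y_j f^{(M)}(\param_s;\xB_j) > 0$, while the lemma is stated as $f^{(M)}(\param_s;\xB_j)>0$. The paper's own proof (via $h(1) = g(\param_s;\xB)/\|\param_s\|^M > 0$) has the same implicit sign convention, since $g(\param_s;\xB_j) > 0$ only follows from strong separability when $y_j = 1$; your note that the label is being absorbed is the right reading, and flagging it is a good catch. Second, on vanishing for $i > M$: your contradiction argument takes the sum $\sum_i (i-M) c_i r^i$ and treats it as a polynomial with a well-defined leading term $(i^*-M)c_{i^*}r^{i^*}$, but a priori the decomposition $f = \sum_{i\ge 0} f^{(i)}$ could have infinitely many nonzero components, in which case there may be no maximal $i^*$ and the term-by-term differentiation along the ray needs justification. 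The paper sidesteps this by simply asserting the vanishing from \Cref{asp:nearhomo}; a clean fix in either exposition is to invoke (A3) to bound $|\sum_i c_i r^i|$ by a degree-$M$ polynomial and appeal to the existence of the homogenization limit $\lim_{r\to\infty} r^{-M} f(r\tilde\param;\xB)$ (\Cref{thm:homogenization}, which uses only \Cref{asp:nearhomo}) to kill the positive powers. This is a shared gap, not a defect particular to your argument.
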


\paragraph{Margin Improvement.}
We are ready to present our first main theorem on the margin improvement of GF.
The proof is deferred to \Cref{sec:proof:margin}. 
\begin{theorem}
[Risk convergence and margin improvement]
\label{thm: Margin improving and convergence}
Suppose that \Cref{asp:nearhomo,asp:initial-cond-gf} hold. 
For $(\param_t)_{t>s}$ given by  \Cref{eq:GF}, we have:
\begin{itemize}[leftmargin=*]
\item 
For all $t>s$, the risk and the parameter norm satisfy
\begin{equation*}
     \Loss(\param_t) < e^{-\homop_a(\paramNormt)}/n.
\end{equation*}
Furthermore, 
we have
\begin{align*}
\Loss(\param_t) \eqsim  \frac{1}{t(\log t)^{2-2/M}},\quad 
\paramNormt \eqsim  (\log t)^{\frac{1}{M}} ,
\end{align*}
where $\eqsim$ hides constants that depend on $M$, $\GFmargin(\param_s)$, and coefficients of $\homoq$.
\item 
The modified margin $\GFmargin(\param_t)$ is positive, increasing, and upper bounded. Moreover,  
$\GFmargin(\param_t)$ is an $\epsilon_t$-multiplicative approximation of $\Normalmargin(\param_t)$, that is, 
\[
\GFmargin(\param_t) \le \Normalmargin(\param_t) \le \big(1+\epsilon_t\big) \cdot \GFmargin(\param_t),\quad  \text{for all}\ t>s, 
\]
where 
\[\epsilon_t :=  \frac{\log n + \homop_a(\paramNormt)}{\LinkFun(\Loss(\param_t))-\homop_a(\paramNormt)}  =\Ocal \big((\log t)^{-1/M}\big)\to 0.\] 
\end{itemize}
\end{theorem}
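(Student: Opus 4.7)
The plan is a smoothed-margin analysis in the spirit of \citet{lyu2020gradient}, with the near-homogeneity error absorbed by the offset $\homop_a$. Writing $\rho_t := \paramNormt$ for brevity, the starting point is the trivial log-sum-exp bound $\min_i y_i f(\param; \xB_i) \ge \phi(\Loss(\param))$, valid as soon as $\Loss(\param) < 1/n$. Combining this with part (a) of \Cref{def:nearhomo} applied sample-by-sample to the exponential-loss subgradient yields, for every $\boldsymbol{g} \in \partial \Loss(\param)$,
\[
\langle -\boldsymbol{g}, \param\rangle \;\ge\; \Loss(\param)\bigl(M\phi(\Loss(\param)) - \homop'(\paramNorm)\bigr).
\]
Along GF, the Clarke chain rule (available by definability and local Lipschitzness) gives $\rho_t \dot{\rho}_t = \langle -\dot{\param}_t, \param_t\rangle$ and $\tfrac{d}{dt}\phi(\Loss(\param_t)) = \|\dot{\param}_t\|^2 / \Loss(\param_t)$, so Cauchy--Schwarz produces the key ODE inequality $\rho_t\, \tfrac{d}{dt}\phi(\Loss(\param_t)) \ge (M\phi(\Loss(\param_t)) - \homop'(\rho_t))\dot{\rho}_t$.

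Expanding $\tfrac{d}{dt}\GFmargin(\param_t)$ from the quotient $(\phi(\Loss(\param_t)) - \homop_a(\rho_t))/\rho_t^M$ and substituting this bound, the nonnegativity of the time derivative reduces to the polynomial comparison $M\homop_a(\rho) - \rho\, \homop_a'(\rho) \ge \homop'(\rho)$ for $\rho \ge 0$. I will verify this coefficient-by-coefficient from \Cref{eq:def-pa}: each degree-$i$ coefficient of $\homop'$ (with $i \ge 1$) gets a prefactor $M/(M-i) \ge 1$, and the unusual constant term $a_1/(M-1/2)$ in $\homop_a$ dominates the constant $a_1$ of $\homop'$ with positive slack $a_1/(2M-1)$; the same comparison also shows $M\homop_a \ge \homop'$ coefficient-wise, so $\dot{\rho}_t \ge 0$ whenever $A(t) := \phi(\Loss(\param_t)) - \homop_a(\rho_t) > 0$. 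The hypothesis \Cref{asp:initial-cond-gf} gives $A(s) > 0$; I propagate this to all $t > s$ by a continuity argument---if $t^\star$ were the first zero of $A$, then monotonicity of $\GFmargin$ on $[s, t^\star)$ combined with $\rho(t^\star) \ge \rho(s) > 0$ would force $A(t^\star) > 0$, a contradiction. This simultaneously delivers the strict risk bound $\Loss(\param_t) < e^{-\homop_a(\rho_t)}/n$, the positivity and monotonicity of $\GFmargin$, and the fact that $\rho_t$ is nondecreasing.

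For the quantitative rates, upper-bounding $\GFmargin(\param_t)$ by $\max_i \homoq(\rho_t)/\rho_t^M$ (via part (c) of \Cref{def:nearhomo}) and lower-bounding it by $\GFmargin(\param_s)$, together with $\deg \homop_a \le M-1$, pins $\phi(\Loss(\param_t)) \eqsim \rho_t^M$. Since $\rho_t \ge \rho_s > 0$, the lower bound on $\|\nabla\Loss\|^2$ from the first paragraph forces $\Loss(\param_t) \to 0$, whence $\rho_t \to \infty$. Combining that lower bound with the routine upper bound $\|\nabla\Loss\| \le \homoq'(\rho_t)\, \Loss$ (part (b) of \Cref{def:nearhomo}) gives
\[
\tfrac{d}{dt}\bigl(1/\Loss(\param_t)\bigr) \;=\; \|\nabla\Loss(\param_t)\|^2 / \Loss(\param_t)^2 \;\eqsim\; \rho_t^{2M-2} \;\eqsim\; \bigl(\log(1/\Loss(\param_t))\bigr)^{2-2/M},
\]
and integrating the elementary ODE $u' \eqsim (\log u)^{2-2/M}$ yields $1/\Loss(\param_t) \eqsim t(\log t)^{2-2/M}$ and $\rho_t \eqsim (\log t)^{1/M}$. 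The multiplicative margin comparison is algebra from the two-sided log-sum-exp bound $\phi(\Loss) \le \min_i y_i f_i \le \phi(\Loss) + \log n$: subtracting $\homop_a(\rho_t)$ yields $\GFmargin \le \Normalmargin$ and $\Normalmargin - \GFmargin \le (\log n + \homop_a(\rho_t))/\rho_t^M$, which rearranges to the stated $\epsilon_t$; the rate $\epsilon_t = \Ocal((\log t)^{-1/M})$ is immediate from $\phi(\Loss(\param_t)) \eqsim \log t$ and $\homop_a(\rho_t) = \Ocal((\log t)^{(M-1)/M})$.

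The main obstacle is the polynomial engineering of $\homop_a$ in the second paragraph: the monotonicity ODE essentially forces both the weights $(i+1)/(M-i)$ and the nonstandard $1/(M-1/2)$ constant term, with the latter motivated specifically by the $M=1$ corner case---the more natural $1/(M-1)$ expression would divide by zero there. Beyond this, the remaining ingredients---Clarke-chain-rule bookkeeping (standard in the o-minimal setting assumed here) and the ODE bootstrap for the rates---are routine.
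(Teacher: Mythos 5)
Your proposal is correct and follows essentially the same route as the paper: a smoothed-margin analysis that (i) bounds $\langle-\bar\partial\Loss,\param\rangle$ from below via near-homogeneity, (ii) combines it with Cauchy--Schwarz to control $\tfrac{d}{dt}\phi(\Loss_t)$ against $\dot\rho_t$, (iii) reduces monotonicity of $\GFmargin$ to the polynomial inequality $x\homop_a'(x)+\homop'(x)\le M\homop_a(x)$ (the paper's Lemma~\ref{lem:property-pa}), and (iv) propagates $\Loss_t<e^{-\homop_a(\rho_t)}/n$ by continuity (the paper's Theorem~\ref{thm:gamma-a-increase} via continuous induction), with the same ODE bootstrap for the rates and the same log-sum-exp sandwich for $\epsilon_t$. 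The only cosmetic differences are that the paper isolates an explicit radial/spherical decomposition of $-\dot\Loss_t$ (Lemma~\ref{lem:GF-risk-decrease}) rather than quoting Cauchy--Schwarz directly, and proves boundedness of $\Normalmargin$ from part~(A1) of Definition~\ref{def:nearhomo} (via Lemma~\ref{lem:near-homogeneity}) rather than invoking part~(A3) as you do; both are equivalent in effect.
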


This result generalizes Theorem 4.1 in \citet{lyu2020gradient} from homogeneous networks to near-$M$-homogeneous networks for $M\ge 1$.
In particular, we recover their results when the network is $M$-homogeneous, in which we set $\homop_a(x) = 0$. When the network is non-homogeneous, our \Cref{asp:initial-cond-gf} is stronger than the separability condition in \citet{lyu2020gradient}. This is one of the key conditions that enables our analysis for non-homogeneous networks. In fact, \Cref{asp:initial-cond-gf} is necessary for generic near-homogeneous model to exhibit implicit bias. We illustrate this point via the following example, whose proof is deferred to \Cref{sec:proof:counter-eg}.

\begin{example}[Necessity of \Cref{asp:initial-cond-gf}]
\label{eg:counter-eg init cond}
Assume we only have one sample: $(x,y)=(1,1) \in \mathbb{R}^2$ and our model is $f(\param)=\param^M+M|\param|^{M-1}$ for $\param \in \mathbb{R}$ and some odd integer $M\ge 3$. Then, $f$ satisfies \Cref{asp:nearhomo} with $\homop(\param)=|\param|^M$. If $f$ does not satisfy \Cref{asp:initial-cond-gf} at $t=s$, we have
$$
\param_t\le 0, \quad y\cdot \param_t x\le 0 \quad \text{and} \quad \Loss(\param_t)\ge 1,
$$
for all $t\ge s$ and $\param_t$ following \cref{eq:GF}.
     
\end{example}


\paragraph{Directional Convergence.}
Our next theorem establishes the directional convergence of GF for non-homogeneous networks. The proof is deferred to \Cref{sec:proof:direct}.

\begin{theorem}[Directional convergence]
\label{thm: directional convergence}
Under the setting of \Cref{thm: Margin improving and convergence}, let $\tilde{\param}_t \coloneqq \param_t / \|\param_t\|$ be the direction of  \Cref{eq:GF}. Then the curve swept by $\tilde{\param}_t$ has finite length. Therefore, 
the directional limit $\param_* \coloneqq \lim_{t\to \infty} \tilde{\param}_t$ exists.
\end{theorem}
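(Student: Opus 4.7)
The overall strategy is to adapt the directional convergence argument of \citet{ji2020directional} for homogeneous networks to the near-homogeneous setting, by exploiting the definability assumption via a desingularizing function (\Cref{def: designularizing function}) applied to the modified margin $\GFmargin$ rather than to the normalized margin $\Normalmargin$. The reason is that $\GFmargin$ has been engineered (through the $\homop_a$ offset) to absorb the non-homogeneous error terms arising from \Cref{def:nearhomo}(i), so it behaves analytically like the scale-invariant margin ratio in the homogeneous case even though it is not itself scale-invariant.

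First, I would reparameterize time by $\tau := \log \|\param_t\|$. By \Cref{thm: Margin improving and convergence}, $\|\param_t\| \eqsim (\log t)^{1/M}$ and is eventually strictly increasing, so this change of variables is well-defined for $t$ large enough and $\tau \to \infty$. Under this reparameterization, the length of the spherical curve is
\[
\mathrm{length}(\tilde{\param}) \;=\; \int_s^\infty \|\dot{\tilde{\param}}_t\|\, \mathrm{d} t \;=\; \int \frac{\|\dot{\param}_t^\perp\|}{\|\param_t\|}\, \mathrm{d} t,
\]
where $\dot{\param}_t^\perp$ is the component of $\dot{\param}_t \in -\partial \Loss(\param_t)$ perpendicular to $\param_t$. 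The goal is to show this integral is finite.

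Second, I would compute $\frac{\mathrm{d}}{\mathrm{d} t} \GFmargin(\param_t)$ using the chain rule for Clarke subdifferentials (\Cref{lem:chain-rule-clark}), and obtain a Polyak--{\L}ojasiewicz-type lower bound of the form
\[
\frac{\mathrm{d}}{\mathrm{d} t}\GFmargin(\param_t) \;\gtrsim\; \frac{\|\dot{\param}_t^\perp\|^2}{\|\param_t\|^{M+1}\,\Loss(\param_t)} \;-\; (\text{lower order errors from } \homop_a).
\]
The derivation proceeds by writing $\partial \Loss$ in terms of $\partial_\param f$, invoking \Cref{def:nearhomo}(i) to bound the mismatch $|\langle \boldsymbol{h}, \param\rangle - M f|$ by $\homop'(\|\param\|)$, and observing that this mismatch is \emph{precisely} what $\homop_a(\|\param\|)$ in \eqref{eq: modified margin} cancels out (by the choice of coefficients in \eqref{eq:def-pa}). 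Using the rate estimates from \Cref{thm: Margin improving and convergence} ($\Loss \eqsim 1/(t (\log t)^{2-2/M})$ and $\|\param_t\| \eqsim (\log t)^{1/M}$), the error terms contribute an integrable quantity in $t$.

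Third, since $\Loss$, $f$, $\exp$, and polynomials are all definable in the given o-minimal structure, the modified margin $\GFmargin$ is definable, and so is $\tau \mapsto \GFmargin_\infty - \GFmargin(\param_{t(\tau)})$ where $\GFmargin_\infty := \lim_t \GFmargin(\param_t)$ exists by \Cref{thm: Margin improving and convergence}. I would invoke the existence of a desingularizing function $\Psi$ for this definable monotone trajectory, in the form used by \citet{ji2020directional}: on a neighborhood of $0$, $\Psi$ is a $C^1$ concave definable function with $\Psi(0)=0$, $\Psi' > 0$, and
\[
\frac{\mathrm{d}}{\mathrm{d} t}\Psi\bigl(\GFmargin_\infty - \GFmargin(\param_t)\bigr) \;\lesssim\; -\,\frac{\|\dot{\param}_t^\perp\|}{\|\param_t\|}
\]
after applying Cauchy--Schwarz to the PL-type bound together with the rate estimates. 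Integrating from $s$ to $\infty$ yields $\int_s^\infty \|\dot{\tilde{\param}}_t\|\,\mathrm{d} t \le \Psi(\GFmargin_\infty - \GFmargin(\param_s)) < \infty$, proving finite length and hence the existence of the limit $\param_*$.

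The main obstacle is the second step: establishing the PL-type inequality cleanly in the near-homogeneous setting. In the homogeneous case, Euler's identity gives an \emph{exact} relation $\langle \boldsymbol{h}, \param\rangle = M f$, which makes $\GFmargin$ essentially a function on the sphere and converts margin increase into tangential gradient norm directly. Here we only have this identity up to $\homop'(\|\param\|)$, and we must verify that (a) the offset $\homop_a$ cancels the leading error in the time derivative of $\GFmargin$, not just in its definition, and (b) the residual errors, when integrated against $\mathrm{d} t$ along the trajectory, remain summable under the asymptotic rates of \Cref{thm: Margin improving and convergence}. Once this bookkeeping is done, the desingularizing-function argument goes through essentially as in the homogeneous case.
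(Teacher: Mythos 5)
Your high-level strategy is the same as the paper's: establish a Polyak--{\L}ojasiewicz-type lower bound on $\frac{\mathrm{d}}{\mathrm{d}t}\log\GFmargin(\param_t)$ in terms of $\|\bar\partial_\perp\Loss_t\|^2$ (this is exactly the lower bound in \Cref{thm:gamma-a-increase}, where the $\homop_a$ offset cancels all the near-homogeneity error \emph{exactly} thanks to \Cref{lem:property-pa} --- there are no residual error terms to ``absorb by integrability''), and then invoke a desingularizing function for the definable modified margin as in \citet{ji2020directional}. But there is a genuine gap in your third step.

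Your proposed inequality
\[
\frac{\mathrm{d}}{\mathrm{d} t}\Psi\bigl(\gamma_* - \GFmargin(\param_t)\bigr) \;\lesssim\; -\,\frac{\|\dot{\param}_t^\perp\|}{\|\param_t\|}
\]
does not follow from ``Cauchy--Schwarz applied to the PL bound.'' Unwinding it, you would need
\[
\Psi'(\gamma_*-\GFmargin(\param_t))\,\frac{\|\bar\partial_\perp\Loss_t\|}{\rho_t^M\Loss_t}\;\gtrsim\;1,
\]
but the Kurdyka--{\L}ojasiewicz inequalities that exist for a definable $\GFmargin$ control $\Psi'\cdot\|\param\|\cdot\|\bar\partial\GFmargin(\param)\|$, i.e.\ the \emph{full} gradient of the margin in parameter space, not its spherical component alone. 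The bridge from $\|\bar\partial\GFmargin\|$ to $\|\bar\partial_\perp\Loss_t\|$ is exactly where the argument becomes delicate: one has to split into two regimes according to whether $\|\tilde\partial_\perp\GFmargin\|\gtrsim\rho_t^{1/8}\|\tilde\partial_r\GFmargin\|$ (where \citet[Lemma~3.6]{ji2020directional} applies directly) or not (where one has to use \citet[Lemma~3.7]{ji2020directional}, paying a $\rho_t^{1+\lambda}$ factor, which must then be compensated by showing that the radial parts of $\tilde\partial a_t$ and $\tilde\partial\GFmargin$ are polynomially far apart, \Cref{lem: Inequalities between a and gamma}). Without this case analysis --- which is the core of \Cref{lem: Existence of desingularizing function} --- the desingularizing function argument does not close.

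A second, more structural issue you do not address: because $\GFmargin$ contains the radial correction $\homop_a(\|\param\|)$, the minimal-norm subgradient $\bar\partial\GFmargin(\param_t)$ is no longer proportional to $-\bar\partial\Loss_t$ (unlike the homogeneous case). The paper therefore works with a carefully chosen element $\tilde\partial\GFmargin(\param_t)$ of the full Clarke subdifferential that includes the $\homop_a'(\rho_t)\tilde\param_t$ correction (\Cref{eq:tilde-partial-margin-GD}), and proves separately (\Cref{lem:Decomposition of tilde subdifferential}) that its spherical part $\tilde\partial_\perp\GFmargin$ is exactly $-\bar\partial_\perp\Loss_t/(\Loss_t\rho_t^M)$ while its radial part has the right sign. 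This is what makes the decomposition $\frac{\mathrm{d}\GFmargin}{\mathrm{d}t}=\|\tilde\partial_r\GFmargin\|\|\bar\partial_r\Loss_t\|+\|\tilde\partial_\perp\GFmargin\|\|\bar\partial_\perp\Loss_t\|$ valid (\Cref{lem:decomp-radial-spherical}), and it is precisely the step that has no analogue in the homogeneous setting. Your proposal treats $\GFmargin$ as ``essentially a function on the sphere,'' but because of the $\homop_a(\rho_t)$ term it is not, and this subtlety must be handled explicitly before any KL-type reasoning can be applied.

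In summary: the roadmap is right, and you correctly identified where the non-homogeneity enters and that the offset $\homop_a$ is what makes the PL inequality clean. But the desingularizing step requires (i) the tilde-subdifferential device to restore the radial/spherical decomposition, (ii) a two-case argument using two different KL inequalities, and (iii) a polynomial separation bound between $\|\tilde\partial_r a_t\|$ and $\|\tilde\partial_r\GFmargin\|$; a single application of ``Cauchy--Schwarz plus a desingularizing function'' is insufficient.
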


Our \Cref{thm: directional convergence} extends Theorem 3.1 in \citet{ji2020directional} from $M$-homogeneous networks to near-$M$-homogeneous networks for $M\ge 1$. 
Our \Cref{asp:nearhomo,asp:initial-cond-gf} allow the application of 
tools (specifically the desingularizing functions) from \citet{ji2020directional} for showing the direction convergence in the non-homogeneous cases. 


\paragraph{KKT Convergence.}
Provided with the directional convergence of GF,  
our next step is to characterize the limiting direction.
To this end, we need to understand the asymptotic behavior of the near-homogeneous network $f$ as $\|\param\| \to \infty$. 
Since $f$ is near-$M$-homogeneous, one can expect that $f$ is close to an $M$-homogeneous function for large $\param$. Motivated by this, we define the \emph{homogenization} of $f$ as 

\begin{equation}
\label{eq: homogenized f}
\homoPredictor (\param; \xB) \coloneqq \lim_{r \to +\infty} \frac{f(r \param; \xB)}{r^M}.
\end{equation}
The well-definedness, continuity, and differentiability of $\homoPredictor$ are guaranteed by the near-homogeneity of $f$ (see \Cref{thm:homogenization} in \Cref{sec:example}). 
We will show that the limiting direction of GF satisfies the KKT conditions of the following margin maximization problem:
\begin{equation}
\label{eq: KKT}
    \mbox{minimize} \, \| \param\|^2, \quad \text{s.t.} \, \min_{i \in [n]} y_i \homoPredictor(\param;\xB_i) \ge 1. \tag{P}
\end{equation}
This generalizes the margin maximization problem in \citep{lyu2020gradient} from homogeneous to non-homogeneous cases.
It is worth noting that when $f$ is already homogeneous, \Cref{eq: KKT} reduces to the same optimization problem as that in \citep{lyu2020gradient}.

To ensure the limiting direction satisfies the KKT conditions of \Cref{eq: KKT}, we need to compare the gradients of $f$ and $\homoPredictor$. This comparison requires an additional regularity assumption.

\begin{assumption}[Weak-homogeneous gradient]
\label{asp:strongerhomo}
Assume that the network $f(\param;\xB)$ is continuously differentiable with respect to $\param$ for $\xB \in (\xB_i)_{i=1}^n$ and that $\lim_{r \to \infty} {\nabla f (r \param; \xB)}/{r^{M-1}}$ exists for all $\param$ and $\xB \in (\xB_i)_{i=1}^n$. Assume that the limit
\begin{equation*}
    (\nabla f)_{\homo} (\param; \xB) := \lim_{r \to \infty} \frac{\nabla f (r \param; \xB)}{r^{M-1}}
\end{equation*}
satisfies 
\begin{equation*}
    \left\vert \nabla f(\param; \xB) - (\nabla f)_\homo (\param; \xB) \right\vert \le \homor \left( \| \param \| \right),
\end{equation*}
where $\homor: \Rbb_{\ge 0} \to \Rbb_{\ge 0}$ is a function such that
\begin{equation*}
    \lim_{x \to + \infty} \frac{\homor(x)}{x^{M - 1}} = 0.
\end{equation*}
\end{assumption}



It is worth noting that for $M \ge 2$, \Cref{asp:nearhomo,asp:strongerhomo} are satisfied as long as $\nabla_{\param} f(\param;\xB)$ is component-wise near-$(M-1)$-homogeneous. See \Cref{lem:near-homo gradients lead to near-homo functions} for a precise statement. In comparison, our \Cref{asp:strongerhomo} is less restrictive and allows for $M=1$.
The next theorem shows that the limiting direction of GF satisfies the KKT conditions of \eqref{eq: KKT}, with its proof included in \Cref{sec:proof:KKT}. 

\begin{theorem}[KKT convergence]
\label{thm: KKT convergence} 
Under the same setting of \Cref{thm: directional convergence}, and additionally assume \Cref{asp:strongerhomo} hold.
Then the rescaled limiting direction 
$$
\param_* / \big(\min_{i \in [n]} y_i f(\param_* ;\xB_i)\big)^{1/M}
$$ 
satisfies the KKT conditions of \Cref{eq: KKT}, where $\param_*$ is the directional limit in \Cref{thm: directional convergence}.
\end{theorem}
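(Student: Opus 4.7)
}
The plan is to adapt the dual-variable argument of \citet{lyu2020gradient,ji2020directional} to the near-homogeneous regime, using \Cref{asp:strongerhomo} to control the non-homogeneous perturbation in the gradient. I first write out the KKT conditions of \Cref{eq: KKT} at the rescaled direction $\bar\param := \param_*/\gamma_*^{1/M}$, where $\gamma_* := \min_{i\in[n]} y_i \homoPredictor(\param_*;\xB_i)$: there must exist nonnegative multipliers $\lambda_1,\dots,\lambda_n$ satisfying stationarity $\bar\param = \sum_i \lambda_i y_i \nabla\homoPredictor(\bar\param;\xB_i)$, primal feasibility $y_i\homoPredictor(\bar\param;\xB_i)\ge 1$, and complementary slackness $\lambda_i(y_i\homoPredictor(\bar\param;\xB_i)-1)=0$. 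Positivity of $\gamma_*$ follows from \Cref{thm: Margin improving and convergence} together with $y_i f(\param_t;\xB_i)/\|\param_t\|^M \to y_i\homoPredictor(\param_*;\xB_i)$ (a consequence of $\tilde\param_t\to\param_*$, $\|\param_t\|\to\infty$, and the definition in \Cref{eq: homogenized f}), while primal feasibility is then immediate by $M$-homogeneity of $\homoPredictor$ after rescaling.

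The natural candidates for the multipliers are the exponential-loss weights $\lambda_i(t) \propto e^{-y_i f(\param_t;\xB_i)}$, normalized by a common factor involving $\phi(\Loss(\param_t))$ and a power of $\|\param_t\|$ so that the weighted combination $\sum_i \lambda_i(t) y_i \nabla f(\param_t;\xB_i)$ matches the scale of $\param_t$. Complementary slackness is the cleanest part: for any non-support index $i$ with $y_i\homoPredictor(\param_*;\xB_i) > \gamma_*$, the ratio $e^{-y_i f(\param_t;\xB_i)}/(n\Loss(\param_t))$ decays like $\exp(-(y_i\homoPredictor(\param_*;\xB_i) - \gamma_* + o(1))\|\param_t\|^M)$, whereas the normalization grows only polynomially in $\|\param_t\|$, forcing $\lambda_i(t)\to 0$ along such indices. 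The asymptotics $\phi(\Loss(\param_t))\eqsim \|\param_t\|^M$ from \Cref{thm: Margin improving and convergence} ensure boundedness of $(\lambda_i(t))$, so compactness yields a subsequential limit in the nonnegative orthant.

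The decisive step is stationarity. Starting from $\dot\param_t = \tfrac{1}{n}\sum_i e^{-y_i f(\param_t;\xB_i)} y_i \nabla f(\param_t;\xB_i)$, I divide by $\|\param_t\|^{M-1}$ and invoke \Cref{asp:strongerhomo} to substitute $\nabla f(\param_t;\xB_i)/\|\param_t\|^{M-1} = \nabla\homoPredictor(\tilde\param_t;\xB_i) + o(1)$; combined with $\tilde\param_t\to\param_*$ from \Cref{thm: directional convergence} and the continuity of $\nabla\homoPredictor$ on the unit sphere, the right-hand side converges to a nonnegative combination $\sum_i \lambda_i^* y_i \nabla\homoPredictor(\param_*;\xB_i)$. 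It remains to show that the left-hand side $\dot\param_t$ is asymptotically parallel to $\param_*$; this follows from the alignment identity established in the course of \Cref{thm: directional convergence}, namely $\langle \dot\param_t, \param_t\rangle/(\|\dot\param_t\|\,\|\param_t\|)\to 1$, which in turn uses the near-Euler identity from \Cref{asp:nearhomo}(a) up to a $\homop'(\|\param_t\|)$-error. Matching the two sides and rescaling by $\gamma_*^{1/M}$ yields the stationarity identity for $\bar\param$. The main obstacle is the simultaneous quantitative control of the two non-homogeneous error sources: the Euler error $\homop'(\|\param\|) \lesssim \|\param\|^{M-1}$, and the gradient error bounded by $\homor(\|\param\|)$, which \Cref{asp:strongerhomo} only guarantees to be $o(\|\param\|^{M-1})$. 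Both must be shown negligible after normalization by $\|\param_t\|^M$ and $\|\param_t\|^{M-1}$ respectively, and stable under the weighted integration along the divergent time interval, so that the convergence survives the exponential reweighting. Once these bounds are in place, the remainder of the argument mirrors the homogeneous KKT analysis of \citet{lyu2020gradient}, with $\homoPredictor$ and $(\nabla f)_\homo$ playing the roles of $f$ and $\nabla f$.
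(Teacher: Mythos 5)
Your overall architecture matches the paper's: verify the KKT conditions at the rescaled limit, take the exponential loss weights as candidate multipliers, and show that the stationarity condition follows from an asymptotic alignment between $\param_t$ and a suitably weighted sum of gradients of the homogenization. Complementary slackness and feasibility are indeed the easier parts.

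However, the decisive claim in your stationarity step is incorrect and cannot be patched as stated. You write that the asymptotic alignment $\langle \dot\param_t, \param_t\rangle/(\|\dot\param_t\|\,\|\param_t\|)\to 1$ ``follows from the alignment identity established in the course of \Cref{thm: directional convergence}.'' No such identity is established there. \Cref{thm: directional convergence} only proves that the arc length $\int_s^\infty \|\bar\partial_\perp \Loss_t\|/\rho_t\,\mathrm{d} t$ is finite, which is far weaker than $\|\bar\partial_\perp\Loss_t\|/\|\bar\partial\Loss_t\|\to 0$; integrability of the normalized spherical gradient does not force the cosine to $1$ along the full time sequence. In fact the paper explicitly disclaims this: just after \Cref{thm: KKT convergence}, it notes that \citet{ji2020directional} established parameter--gradient alignment in the homogeneous case, and that extending this alignment to the near-homogeneous setting is ``left as future work.'' What the paper actually proves is only a \emph{subsequential} alignment. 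The mechanism is \Cref{lem: Characterization of beta}, which gives an integral inequality
\[
\int_{t_1}^{t_2} \left( \frac{1 - p_1(t)}{(\beta(t) + p_2(t))^2} - 1 \right) \frac{\mathrm{d}}{\mathrm{d}\tau}\log\rho(\tau)\,\mathrm{d}\tau \le \frac{1}{M}\log\frac{\GFmargin(\param_{t_2})}{\GFmargin(\param_{t_1})},
\]
followed by a mean-value argument (\Cref{cor: beta bound}): since the right-hand side is bounded while $\log\rho_t$ diverges, there must exist times $t_*$ in every far-out window where $\beta(t_*)$ is forced close to $1$. This produces a subsequence $t_k$ with $\beta(t_k)\to 1$, and only along that subsequence does $\hat\param_{t_k}$ become an $(\epsilon_k,\delta_k)$-KKT point; one then invokes the directional limit $\tilde\param_{t_k}\to\param_*$ from \Cref{thm: directional convergence} to pass the approximate KKT property to $\param_*$. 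Your plan replaces this entire argument with an unproven full-sequence alignment, so the proposal as written has a genuine gap exactly at the step you yourself identify as ``decisive.''

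A secondary issue is that you define your alignment quantity with the raw gradient $\nabla f$, whereas the paper is careful to work with the homogenized proxy $\hB_\homo(\param_t) := \frac{1}{n}\sum_i e^{-\bar f_i(\param_t)} y_i \nabla\homoPredictor(\param_t;\xB_i)$ and the cosine $\beta(t) = \langle\param_t,\hB_\homo(\param_t)\rangle/(\|\param_t\|\|\hB_\homo(\param_t)\|)$. Passing between $\nabla f$ and $\nabla\homoPredictor$ costs an additive $\homor(\rho_t) = o(\rho_t^{M-1})$ error per sample (\Cref{asp:strongerhomo}), and this error has to be shown negligible relative to a \emph{lower} bound on $\|\hB_\homo\|$ of order $\Loss_t\rho_t^{M-1}$ (established via \Cref{lem:lowerbound-homo-margin} and \Cref{lem:f-min}). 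You allude to this control, but without the subsequential integral estimate it does not by itself recover stationarity.
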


This theorem is a generalization of 
Theorem 4.4 in \citet{lyu2020gradient} from homogeneous networks to non-homogeneous networks. 
Note that our margin maximization problem \Cref{eq: KKT} is defined for the homogenization of the non-homogeneous network. 
This is because, asymptotically, every near-homogeneous network $f$ can be approximated by its homogenization $f_{\homo}$ (see \Cref{thm:homogenization} in \Cref{sec:example}). As an important implication, understanding the implicit bias of a near homogeneous network can be reduced to understanding that of its homogenization.
It is worth noting that \citet{ji2020directional} also established the asymptotic alignment between parameters and gradients, which we leave as future work.





We have presented our results on the implicit bias for non-homogeneous networks. In the next two sections, we discuss the satisfiability of \Cref{asp:nearhomo,asp:initial-cond-gf}, respectively.




\section{Near-Homogeneity Condition} \label{sec:near-homo-nn}

In this section, we verify that a large class of building blocks used in deep learning are near-homogeneous, and, by a composition rule, networks constructed using these blocks are also near-homogeneous. 

We denote a block by $s(\param;\xB)$, where $\param$ are the trainable parameters in this block and $\xB$ is the input (and the output of the preceding block). 
We use $s_{\param}(\xB)$ as a shorthand for $s(\param;\xB)$. 
Then, a network is defined as
\begin{equation}\label{eq:network}
    f(\param;\xB) := s^1_{\param_1} \circ s^2_{\param_2} \circ \cdots \circ s^L_{\param_L}(\xB),
\end{equation}
where $\param = (\param_i)_{i=1}^L$ and $s^i$ is the $i$-th block. Here and in sequel, we assume all the blocks are locally Lipschitz and definable with respect to some o-minimal structure.

To deal with the compositional structure of networks, we need to introduce the following generalized definition of near-homogeneity, which takes both trainable parameters and input into account.

\begin{definition}
[Near-$(M, N)$-homogeneity]
\label{def:dual-homo}
Let $M, N$ be two non-negative integers such that $M + N \ge 1$.
A function $s(\param; \xB)$ is called \emph{near-$(M, N)$-homogeneous},
if there exist $\homop_s, \homoq_s,\homor_s, \homot_s \in \Rbb_{\ge 0} [x]$ with $\deg \homop_s, \deg\homoq_s \le M$ and $ \deg \homor_s, \deg \homot_s \le N $ such that the following holds
for every $\xB\in(\xB_i)_{i=1}^n$, $\thetaB$, and $(\hB_{\param}, \hB_{\xB}) \in  \partial s(\param ;\xB)$:
\begin{assumpenum}
    \item [(B1).] 
  \(
     | \la\hB_{\param}, \param \ra  - M s(\param;\xB) | \le \homop_s^\prime (\|\param\|) \homor_s(\|\xB\|) \),  
     
     \(  
     | \la \hB_{\xB}, \xB \ra - N  s(\param;\xB) | \le \homop_s (\|\param\|) \homor_s^\prime(\|\xB\|);
  \)
  

  \item [(B2).]
    \(\| \hB_{\param} \| \le \homoq_s^\prime (\|\param\|) \homot_s(\|\xB\|)\), 
    \(\| \hB_{\xB} \| \le \homoq_s (\|\param\|) \homot_s^\prime(\|\xB\|);\)
 \item [(B3).] 
    \(
        \|s(\param;\xB)\| \le \homoq_s (\|\param\|) \homot_s(\|\xB\|).
    \) 
\end{assumpenum}
A block $s(\param;\xB) : \Rbb^{d_1} \times \Rbb^{d_2} \to \Rbb^{d_3} $ is called near-$(M, N)$-homogeneous if all of its components,
$ s(\param;\xB)_{i}$ for $i \in [d_3]$, 
are near-$(M,N)$-homogeneous with the same polynomials $\homop_s, \homoq_s,\homor_s, \homot_s$.
\end{definition}

From \Cref{def:dual-homo}, it is easily seen that a near-$(0, N)$-homogeneous block $s(\param; \xB)$ must be independent of $\param$ and near-$N$-homogeneous in $\xB$, and vice versa.
Below are a few examples of near-homogeneous blocks used in practice.
\begin{example}
\label{eg:Examples of dual homogeneous blocks}
The following blocks are near-homogeneous:
\begin{assumpenum}
    \item For $\param = (A, b)$, the linear mapping $s(\param;\xB) = A\xB + b$ is near-$(1,1)$-homogeneous.
    \item Let $M\ge 1$ be an integer. Then for $\param = (A, b)$, the perceptron layer $s(\param; \xB) = \phi^M(A\xB + b)$ is near-$(M, M)$-homogeneous, where the activation function $\phi$ is one of the following: ReLU, Softplus, GELU, Swish, SiLU, and Leaky ReLU.
    \item Max pooling layer, average pooling layer, convolution layer, and residual connection are near-$(0,1)$-homogeneous. 
    \item The SwiGLU activation \citep{shazeer2020glu} is near-$(2,2)$-homogeneous.
    \item The linear self-attention \citep{zhang2024trained} is near-$(2,3)$-homogeneous and the ReLU attention \citep{wortsman2023replacing} is near-$(4,3)$-homogeneous.
\end{assumpenum}
\end{example}

The following lemma suggests that near-homogeneity is preserved under functional composition and tensor product. Note the residual connection mentioned in \Cref{eg:Examples of dual homogeneous blocks} is not a specific block mapping, but a way to enhance an existing block by adding the input from the previous block to the output of this block. Part C of \Cref{prop: Composition of block mappings} can thus be applied to compute the near-homogeneous order of networks with residual connections.
The proof of  \Cref{prop: Composition of block mappings} is deferred to \Cref{sec:proof:42}. 
\begin{lemma}
[Composition and multiplication rules]
\label{prop: Composition of block mappings}
Let the blocks $s^1(\param_1;\xB): \Rbb^{d_1} \times \Rbb^{d_2} \to \Rbb^{d_3}$ and $s^2(\param_2;\xB): \Rbb^{d_4} \times \Rbb^{d_5} \to \Rbb^{d_6}$ be near-$(M_1, M_2)$-homogeneous and near-$(M_3, M_4)$-homogeneous, respectively.
Then, the followings hold:
\begin{assumpenum}
\item Let $d_2 = d_6$, then $s^1_{\param_1} \circ s^2_{\param_2}(\xB)$ is near-$(M_1 + M_2M_3, M_2M_4)$-homogeneous. 
\item The tensor product $ s^1(\param_1;\xB) \otimes s^2(\param_2;\xB)$ is 
near-$(M_1 + M_3, M_2 + M_4)$-homogeneous. 
\item Let $(d_1, d_2, d_3) = (d_4, d_5, d_6)$, then $ s^1(\param_1;\xB) + s^2(\param_2;\xB)$ is
near-$(\max(M_1, M_3), \max(M_2, M_4))$-homogeneous.
\item Let $T: \Rbb^{d_3} \to \Rbb^{d_6}$ be a linear mapping, then $T(s^1(\param_1;\xB))$ is 
near-$(M_1, M_2)$-homogeneous. 
\item Let $f: \Rbb^{d_4} \times \Rbb^{d_5} \to \Rbb^{d_2}$ be near-$M_5$-homogeneous as a function of $\param_3$, then
$s^1_{\param_1} \circ f(\param_2;\xB)$ is near-$(M_1+ M_2 M_5)$-homogeneous. 
\end{assumpenum}
\end{lemma}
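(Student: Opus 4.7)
The plan is to verify the three conditions of \Cref{def:dual-homo} for each of the five constructions separately, using the chain rule for Clarke subdifferentials of definable locally Lipschitz functions (\Cref{lem:chain-rule-clark}) and carefully tracking polynomial degrees in $\|\param\|$ and $\|\xB\|$. Throughout, I write $\hB^{\cdot}_{\cdot}$ for a generic subgradient component chosen consistently.

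For the composition in \textbf{Part A}, set $g(\param_1, \param_2; \xB) := s^1(\param_1; s^2(\param_2; \xB))$ and $\yB := s^2(\param_2; \xB)$. The chain rule gives $\hB_{\param_1}^{g} = \hB_{\param_1}^{s^1}$, $\hB_{\param_2}^{g} = (J_{\param_2} s^2)^{\top} \hB_{\xB}^{s^1}$, and $\hB_\xB^{g} = (J_\xB s^2)^{\top} \hB_\xB^{s^1}$ (all $s^1$-subgradients evaluated at $\yB$). The central identity is
\[
\la \hB_\param^{g}, \param\ra = \la \hB_{\param_1}^{s^1}, \param_1\ra + \la \hB_\xB^{s^1}, (J_{\param_2} s^2)\param_2\ra,
\]
where the $j$-th component of $(J_{\param_2} s^2)\param_2$ equals $\la \nabla_{\param_2} s^2_j, \param_2\ra \approx M_3 s^2_j$ by the near-$(M_3, M_4)$-homogeneity of $s^2$, and then $\la \hB_\xB^{s^1}, \yB\ra \approx M_2 s^1$ by the near-$(M_1, M_2)$-homogeneity of $s^1$; combining gives $\la \hB_\param^{g}, \param\ra \approx (M_1 + M_2 M_3) g$. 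An analogous calculation with $\xB$ in place of $\param_2$ yields $\la \hB_\xB^{g}, \xB\ra \approx M_2 M_4 g$. The norm and function bounds then follow by multiplying the corresponding $\homoq, \homot$ bounds for $s^1$ and $s^2$ and substituting $\|\yB\| \le \homoq_{s^2}(\|\param_2\|)\homot_{s^2}(\|\xB\|)$.

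The remaining parts are variations on the same theme. \textbf{Part B} follows from the Leibniz rule applied to each product $(s^1)_i(s^2)_j$ in the tensor product: since the parameter blocks are disjoint, $\la \nabla[(s^1)_i (s^2)_j], \param\ra = (s^2)_j \la \nabla (s^1)_i, \param_1\ra + (s^1)_i \la \nabla (s^2)_j, \param_2\ra \approx (M_1 + M_3)(s^1)_i(s^2)_j$, and similarly for $\xB$. \textbf{Part D} is immediate because each component of $T s^1$ is a fixed linear combination of components of $s^1$, so all conditions transfer with polynomials scaled by $\|T\|$. For \textbf{Part C}, disjointness gives $\la \nabla(s^1+s^2), \param\ra \approx M_1 s^1 + M_3 s^2$; writing $M_1 s^1 + M_3 s^2 - M^*(s^1 + s^2) = (M_1 - M^*)s^1 + (M_3 - M^*)s^2$ with $M^* := \max(M_1, M_3)$, exactly one coefficient vanishes and the surviving $|M_i - M^*|\cdot |s^i|$ is bounded by a polynomial of degree $\le M^* - 1$ in $\|\param\|$ (since $M_i < M^*$), which fits inside the derivative of a degree-$M^*$ polynomial; the same argument handles $\max(M_2, M_4)$. \textbf{Part E} parallels Part A with $f$ in place of $s^2$: interpreting the near-$M_5$-homogeneity of $f$ as $\la \nabla f, (\param_2, \xB)\ra \approx M_5 f$ on the combined argument, the chain rule yields $\la \nabla g, (\param_1, \param_2, \xB)\ra \approx M_1 s^1 + M_2 M_5 s^1 = (M_1 + M_2 M_5) g$.

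\textbf{Main obstacle.} The chief difficulty is polynomial-degree bookkeeping in Part A: after substituting $\|\yB\| \le \homoq_{s^2}(\|\param_2\|)\homot_{s^2}(\|\xB\|)$ into $\homor_{s^1}(\|\yB\|)$ and $\homot_{s^1}'(\|\yB\|)$, the compound error polynomials must be shown to fit within degree $M_1 + M_2 M_3 - 1$ in $\|\param\|$ and $M_2 M_4$ in $\|\xB\|$, so that the aggregate error admits a factorization $\homop_g'(\|\param\|)\homor_g(\|\xB\|)$ with $\deg \homop_g \le M_1 + M_2 M_3$ and $\deg \homor_g \le M_2 M_4$. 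The accounting works because the outer degree $M_2$ of $\homor_{s^1}$ multiplies exactly with the inner degrees $M_3, M_4$ of $\homoq_{s^2}, \homot_{s^2}$, while the ``$-1$'' from differentiation (of $\homop_g$) matches the $-1$ coming from either $\homop_{s^2}'$ or $\homot_{s^1}'$. Once this accounting is executed rigorously in Part A, the same computation with $\homoq$ and $\homot$ in place of $\homop$ and $\homor$ yields the norm and function bounds, and the remaining parts require only minor modifications.
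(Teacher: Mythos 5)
Your proposal follows essentially the same route as the paper: verify conditions (B1)--(B3) of Definition~\ref{def:dual-homo} for each construction via the Clarke chain rule (Corollary~\ref{cor:subgrad-comp-block}) and polynomial-degree bookkeeping. The degree accounting you sketch for Part~A matches the paper's, and your direct treatments of Parts~B--E are also compatible with the paper, which proves~A and~B in detail and derives C, D, E as corollaries. Two points deserve a bit more care. First, the Clarke chain rule gives only a convex-hull inclusion, not the pointwise equalities $\hB^g_{\param_2}=(J_{\param_2}s^2)^\top\hB^{s^1}_\xB$ etc.; this is harmless because (B1)--(B3) are convex constraints (the paper notes this explicitly), but you should say so. Second, in Part~E your stated reading of $f$'s near-$M_5$-homogeneity as $\la \nabla f,(\param_2,\xB)\ra \approx M_5 f$ is not what the hypothesis says: $f$ is near-$M_5$-homogeneous only in the parameter block $\param_2$ (for each fixed $\xB$), i.e.\ $\la\nabla_{\param_2}f,\param_2\ra\approx M_5 f$, and the conclusion is single-index near-homogeneity, so one should extract only the $\param$-side of Part~A's computation (with $\homor_{s^2},\homot_{s^2}$ effectively constant) rather than a combined identity involving $\xB$. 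The conclusion you write is nevertheless the right one. Finally, in Part~C you should record that the sum $\homop_1'\homor_1 + \homop_2'\homor_2$ is bounded by the factored form $(\homop_1+\homop_2)'(\homor_1+\homor_2)$, so the composite error again fits the required $\homop'(\|\param\|)\homor(\|\xB\|)$ template with degrees $\le(\max(M_1,M_3),\max(M_2,M_4))$.
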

The following corollary establishes the near-homogeneity of a network composed of near-homogeneous blocks. We defer its proof to \Cref{sec:proof-43}. 

\begin{corollary}
[Near-homogeneous networks]
\label{cor: Near homogeneity order of networks}
Consider a network defined as \Cref{eq:network}.
If the block $s^i(\param_i;\xB) $ is near-$(M_1^i, M_2^i)$-homogeneous for $i\in [L]$, then the network $f$ is near-$(M_1, M_2)$-homogeneous with
\begin{equation}
\label{eq:composition_network_order}
M_1 := \sum_{j=1}^{L} M_1^j \cdot \prod_{i=1}^{j-1} M_2^i, \quad M_2 := \prod_{j=1}^{L} M_2^j.
\end{equation}
In particular, $f$ is near-$M_1$-homogeneous as a function of $\param$ for any fixed $\xB$.
\end{corollary}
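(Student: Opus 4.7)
The plan is to proceed by induction on the number $L$ of blocks, using Part A of \Cref{prop: Composition of block mappings} (the composition rule for near-homogeneous blocks) as the only nontrivial ingredient at each step. For the base case $L = 1$, the claim reduces to the assumption that $s^1_{\param_1}$ is near-$(M_1^1, M_2^1)$-homogeneous, and \Cref{eq:composition_network_order} collapses to $(M_1, M_2) = (M_1^1, M_2^1)$ once the empty product $\prod_{i=1}^{0}$ is read as $1$.

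For the inductive step, assume the statement holds for networks of $L-1$ blocks and set $g := s^2_{\param_2} \circ \cdots \circ s^L_{\param_L}$, so that $f = s^1_{\param_1} \circ g$. The inductive hypothesis applied to $g$ gives that $g$ is near-$(\tilde M_1, \tilde M_2)$-homogeneous with
\[
\tilde M_1 = \sum_{j=2}^{L} M_1^j \prod_{i=2}^{j-1} M_2^i, \qquad \tilde M_2 = \prod_{j=2}^{L} M_2^j.
\]
Applying Part A of \Cref{prop: Composition of block mappings} to $s^1_{\param_1} \circ g$ then yields that $f$ is near-$\bigl(M_1^1 + M_2^1 \tilde M_1,\; M_2^1 \tilde M_2\bigr)$-homogeneous. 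A one-line index-shift computation,
\[
M_1^1 + M_2^1 \tilde M_1 = M_1^1 + \sum_{j=2}^{L} M_1^j \prod_{i=1}^{j-1} M_2^i = \sum_{j=1}^{L} M_1^j \prod_{i=1}^{j-1} M_2^i = M_1,
\]
together with $M_2^1 \tilde M_2 = \prod_{j=1}^{L} M_2^j = M_2$, matches the formulas in \Cref{eq:composition_network_order} and closes the induction.

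The ``in particular'' statement follows by fixing $\xB$ in \Cref{def:dual-homo}: the quantities $\homor_s(\|\xB\|)$ and $\homot_s(\|\xB\|)$ (which appear without their derivatives on the $\param$-side inequalities) become non-negative constants. Setting $\homop(\cdot) := \homor_s(\|\xB\|)\, \homop_s(\cdot)$ and $\homoq(\cdot) := \homot_s(\|\xB\|)\, \homoq_s(\cdot)$, both of which lie in $\Rbb_{\ge 0}[x]$ with degree at most $M_1$, the $\param$-side inequalities of \Cref{def:dual-homo} specialize exactly to the three conditions of \Cref{def:nearhomo}, after identifying $\partial_\param f(\param;\xB)$ with the $\param$-component of the joint Clarke subgradient.

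Once \Cref{prop: Composition of block mappings} is granted, no serious obstacle remains; the remaining work is the single index shift above plus the polynomial bookkeeping for the specialization, neither of which is technically delicate. The mildest points requiring care are (a) reading the empty product as $1$ in the base case and (b) verifying that the non-negativity and degree constraints on $\homop, \homoq$ are preserved when the $\xB$-side polynomials are evaluated at a fixed input, both of which are immediate from $\homor_s, \homot_s \in \Rbb_{\ge 0}[x]$.
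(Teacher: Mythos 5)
Your proof is correct and essentially matches the paper's own argument: both proceed by induction on $L$ using Part A of \Cref{prop: Composition of block mappings} at each step, with only a cosmetic difference in direction — you peel off the outermost block $s^1$ and apply the inductive hypothesis to $s^2 \circ \cdots \circ s^L$, whereas the paper peels off the innermost block $s^L$ and applies the inductive hypothesis to $s^1 \circ \cdots \circ s^{L-1}$. The ``in particular'' specialization you carry out (absorbing the evaluated $\xB$-side polynomials into $\homop$ and $\homoq$) is the right idea; the only nit is that since \Cref{def:nearhomo} requires a single pair $(\homop, \homoq)$ valid for all $\xB \in (\xB_i)_{i=1}^n$, one should take the maximum of the constants $\homor_s(\|\xB_i\|)$ and $\homot_s(\|\xB_i\|)$ over the finitely many training inputs.
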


Based on \Cref{eg:Examples of dual homogeneous blocks}, \Cref{prop: Composition of block mappings} and \Cref{cor: Near homogeneity order of networks}, we can show that a broad class of commonly used networks are near-homogeneous, including the following examples:


\begin{example}
\label{eg: Near homogeneous networks}
The following networks are near-homogeneous:
\begin{assumpenum}
    \item An $L$-layer MLP with $k$-th power of ReLU activation is near-$(k^L-1)/(k-1)$-homogeneous, or near-$L$-homogeneous when $k=1$. The same holds when ReLU is replaced by other activation functions in \Cref{eg:Examples of dual homogeneous blocks} (B).
    \item VGG-$L$ \citep{simonyan2015very} is near-$L$-homogeneous where $L\in\{11, 13, 16, 19\}$. 
    \item Without batch normalization, ResNet-$L$ \citep{he2016deep} is near-$L$-homogeneous where $L \in \{18,34,50,101,152\}$, and DenseNet-$L$ \citep{huang2017densely} is near-$L$-homogeneous where $L \in \{121, 169, 201, 264\}$.
\end{assumpenum}
\end{example}

To conclude this section, we comment that  normalization layers and softmax map violate our near-homogeneity definitions. Intuitively, these blocks should be ``near-$0$-homogeneous'' as their outputs are bounded. However, our near-homogeneity definitions are only non-trivial for $M\ge 1$. 
As a consequence, the softmax attention architecture also violates our definitions of near-homogeneity, thus requiring a different treatment.
We believe that our notion of near-homogeneity can be generalized to include those components, which we leave as future work.


\section{Strong Separability Condition} \label{sec:example}

In this section, we discuss the satisfiability of \Cref{asp:initial-cond-gf}. 
We first give an intuitive explanation of why \Cref{asp:initial-cond-gf} should be expected to hold.
Note that \Cref{asp:initial-cond-gf} is equivalent to 
\[
\Loss(\param_s) < e^{-\homop_a(\|\param_s\|)}/n\ \Leftrightarrow \ 
\frac{\log\big(1/(n\Loss(\param_s))\big)}{\homop_a(\|\param_s\|)} > 1.
\]
Recall that $\deg \homop_a \le M-1$ by \Cref{asp:nearhomo}. Thus \Cref{asp:initial-cond-gf} can be understood as requiring GF to induce a ``lower-order'' smoothed margin which is at least $1$.
If the normalized margin \Cref{eq: normalized margin} is asymptotically positive (as $\|\param_t\|$ grows), then the ``lower-order'' smoothed margin 
must diverge, hence \Cref{asp:initial-cond-gf} must hold at some point. 



In what follows, we establish a sufficient condition under which a non-homogeneous network satisfies \Cref{asp:initial-cond-gf}.
To this end, we establish below several important properties of the \emph{homogenization} of a non-homogeneous network.

\paragraph{Homogenization.} 
The idea of homogenization appears in \Cref{sec:margin-direct} for constructing the margin maximization problem, where its KKT conditions characterize the limiting direction of GF. Our next theorem rigorously controls the approximation error between a non-homogeneous network and its homogenization.

\begin{proposition}[Homogenization]
\label{thm:homogenization}
Suppose that $f$ satisfies \Cref{asp:nearhomo}.
Then for every $\xB\in(\xB_i)_{i=1}^n$, the homogenization of $f(\param; \xB)$:
\begin{equation*}
    \homoPredictor (\param; \xB) := \lim_{r \to +\infty} \frac{f(r \param; \xB)}{r^M}
\end{equation*}
exists and is well-defined.
Moreover, as a function of $\param$, $\homoPredictor (\param; \xB)$ is continuous, differentiable almost everywhere, and $M$-homogeneous. 
We also have
\[
\text{for every } \param,\quad 
    \left\vert f(\param; \xB) - \homoPredictor (\param; \xB) \right\vert \le \homop_a( \left\| \param \right\| ),
\]
where $\homop_a$ is given by \Cref{eq:def-pa}.

If, in addition, $f(\param;\xB)$ satisfies \Cref{asp:strongerhomo},
then $\homoPredictor(\param;\xB)$ is continuously differentiable for every nonzero $\param$, and that $(\grad f)_\homo (\param; \xB) = \grad \homoPredictor (\param; \xB)$. 
\end{proposition}
\Cref{thm:homogenization} ensures the well-definedness, continuity, and differentiability of $\homoPredictor$.
Consequently, the following theorem guarantees that the strong separability condition can be satisfied by a near-homogeneous network, as long as its homogenization satisfies the (weak) separability condition of \cite{lyu2020gradient}.

\begin{proposition}
[A sufficient condition]
\label{cor:Initial condition via homogeneization}
Suppose that $f$ satisfies \Cref{asp:nearhomo}. Then $f$ admits a homogenization, denoted by $\homoPredictor$.
Assume that $\homoPredictor$ satisfies the weak separability condition, that is, 
\[
\sum_{i} \ell\big(- y_i \homoPredictor(\param^\prime ;\xB_i) \big) < 1\   \text{for some}\ \ \param^\prime.
\]
Then, there exists a constant $c>0$ such that $f$ with $\param_s := c \param^\prime $ satisfies  \Cref{asp:initial-cond-gf}.
\end{proposition}




To further elaborate the idea of homogenization, we provide the following compositional rule. 
\begin{proposition}[Homogenization of networks]\label{thm:block_compo_homogenization}
Consider a network given by \Cref{eq:network}, where each block $s^i (\param_i; \xB)$ is near-$(M_1^i, M_2^i)$-homogeneous (see \Cref{def:dual-homo}) for $i=1,\dots,L$.
Then each block $s^i (\param_i; \xB)$ admits a well-defined 
homogenization 
\begin{equation*}
s_{\homo}^i (\param_i; \xB) := \, \lim_{r_1, r_2 \to \infty} \frac{s^i (r_1 \param_i; r_2 \xB)}{r_1^{M_1^i} r_2^{M_2^i}}.
\end{equation*}
Moreover, the homogenization network \Cref{eq:network} is well-defined and satisfies 
\begin{equation*}
\homoPredictor (\param; \xB) = s_{\homo, \param_1}^1 \circ s_{\homo, \param_2}^2 \circ \cdots \circ s_{\homo, \param_L}^L (\xB).
\end{equation*}
\end{proposition}

\paragraph{A Two-Layer Network.}
We next provide a two-layer network example where GF with zero initialization can provably reach a point that satisfies \Cref{asp:initial-cond-gf}. The two-layer network is defined as 
\begin{equation}
\label{eq: toy-resnet}
    f(\param;\xB) := \wB_1^\top \xB + \wB_2^\top \xB + a_1 \varphi( \wB_1^\top \xB) - a_2 \varphi( - \wB_2^\top \xB),
\end{equation}
where $\param := (\wB_1, \wB_2, a_1, a_2)$ are the trainable parameters, and  $\varphi$ is the leaky ReLU activation function, 
\[
\varphi(x) := \max\{x, \alpha_L x\},\quad 0< \alpha_L<1.
\]
Our example \Cref{eq: toy-resnet} is motived by the two-layer network considered in \citet{lyu2021gradient}. However, their network is homogeneous, while ours is non-homogeneous (but near-$2$-homogeneous) due to the residual connections.
Similar to \citet{lyu2021gradient}, we consider a symmetric and linearly separable dataset.
\begin{assumption}[Dataset conditions] 
\label{asp: symmetric}
Assume that the dataset $(\xB_i, y_i)_{i=1}^n$ satisfies $\max_{i}\|\xB_i\|\le 1$ and $\min_{i} y_i\xB_i^\top \wB_* \ge \gamma$ for some unit vector $\wB_*$ and margin $\gamma>0$; moreover, 
$n$ is even and $(y_{i+n/2},\xB_{i+n/2}) = -(y_i,\xB_i) $ for $i=1,\ldots ,n/2$. 
\end{assumption}

Note that the network \Cref{eq: toy-resnet} is not Lipschitz continuous, thus \Cref{eq:GF} is not well-defined. Instead, we study the small stepsize limit of gradient descent for \Cref{eq: toy-resnet}, which is well-defined under our assumptions.
In the next theorem, we establish that the small-stepsize limit of gradient descent with zero initialization for \Cref{eq: toy-resnet} can satisfy \Cref{asp:initial-cond-gf}.
\begin{theorem}
[A two-layer network example]
\label{thm: Near-two homogeneous example achieves init bound}
Consider the network $f$ given by \eqref{eq: toy-resnet} and a dataset satisfying \Cref{asp: symmetric}.
Let 
$(\thetaB_t)_{t\ge 0}$ be the limit of GD iterates with initialization $\thetaB_0=\boldsymbol 0$ as the stepsize tends to zero. 
Then there exists $s>0$ such that $f$ with $\thetaB_s$ satisfies \Cref{asp:initial-cond-gf}.
\end{theorem}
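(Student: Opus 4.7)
Since the residual terms give $|\langle \boldsymbol{h},\thetaB\rangle-2f|\le|\wB_1^\top\xB+\wB_2^\top\xB|\le\sqrt{2}\|\thetaB\|$ (using $\|\xB_i\|\le 1$) while the $a\varphi$ terms are exactly $2$-homogeneous, I can take $\homop(x)=\sqrt{2}x^2/2$ and hence, via \Cref{eq:def-pa}, $\homop_a(x)=\sqrt{2}x$. The target inequality \Cref{eq: initial-cond-gf} becomes $\Loss(\thetaB_s)<e^{-\sqrt{2}\|\thetaB_s\|}/n$. My plan is to exploit the data pairing to reduce the dynamics to an essentially one-dimensional flow and then verify this bound in the asymptotic regime.

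First I would show that the pairing symmetry $(y_{i+n/2},\xB_{i+n/2}) = -(y_i,\xB_i)$ from \Cref{asp: symmetric}, combined with zero initialization, forces GD to preserve the invariant set $\{\wB_1=\wB_2,\,a_1=a_2\}$ at every step, and hence so does its small-stepsize limit. This follows by induction on the iterates: at any symmetric point, pairing the $i$-th and $(i+n/2)$-th summands of the loss gradient and applying the leaky-ReLU identities $\varphi(x)-\varphi(-x)=(1+\alpha_L)x$ and $\varphi'(x)+\varphi'(-x)=1+\alpha_L$ (a.e.) yields $\partial_{\wB_1}\Loss=\partial_{\wB_2}\Loss$ and $\partial_{a_1}\Loss=\partial_{a_2}\Loss$. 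Writing $(\wB,a)$ for the common values, the network collapses on this subspace to a linear classifier
\[
f(\thetaB_t;\xB)=\beta(t)\,\wB(t)^\top\xB,\qquad \beta(t):=2+a(t)(1+\alpha_L),
\]
and the reduced flow becomes $\dot\wB=\tfrac{\beta}{n}\sum_{i\le n/2}u_i\, e^{-\beta u_i^\top\wB}$, $\dot a=\tfrac{1+\alpha_L}{n}\sum_{i\le n/2}(u_i^\top\wB)\,e^{-\beta u_i^\top\wB}$, where $u_i:=y_i\xB_i$. A short calculation yields $\beta\dot\beta=(1+\alpha_L)^2\wB^\top\dot\wB$, so starting from $(\wB,a)=(0,0)$ the trajectory satisfies the conservation law $\beta(t)^2-(1+\alpha_L)^2\|\wB(t)\|^2=4$; in particular $\|\thetaB_t\|^2 = 2\|\wB(t)\|^2+2a(t)^2 = 4\beta(t)(\beta(t)-2)/(1+\alpha_L)^2$.

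Next I would analyze the reduced flow. The reduced loss $\tilde\Loss(\wB,a):=\tfrac{2}{n}\sum_{i\le n/2}e^{-\beta u_i^\top\wB}$ is smooth, and since $\beta\ge 2$ is a positive scalar that does not alter the direction of $\dot\wB$, the dynamics are---up to a time-dependent rescaling---standard exponential-loss gradient flow on the linearly separable data $\{u_i\}_{i=1}^{n/2}$ (which inherits margin $\gamma$ from \Cref{asp: symmetric}). Adapting the classical implicit-bias results for linear logistic regression \citep{soudry2018implicit,ji2018risk} to this time-rescaled flow yields $\|\wB(t)\|\to\infty$ and $\min_i u_i^\top\wB(t)\ge(\gamma-o(1))\|\wB(t)\|$; combined with the asymptotics $\beta\sim(1+\alpha_L)\|\wB\|$ and $\|\thetaB_t\|\sim 2\|\wB(t)\|$ from the conservation law, this gives
\[
\log\!\bigl(n\,\Loss(\thetaB_t)\bigr)+\sqrt{2}\,\|\thetaB_t\|\;\le\;-\gamma(1+\alpha_L)\,\|\wB(t)\|^{2}\bigl(1-o(1)\bigr)+2\sqrt{2}\,\|\wB(t)\|+\log n.
\]
The right-hand side is strictly negative once $\|\wB(t)\|$ exceeds a threshold depending only on $(\gamma,\alpha_L,n)$, and since $\|\wB(t)\|\to\infty$ this threshold is crossed at some finite $s$, yielding \Cref{asp:initial-cond-gf}.

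The main obstacle lies in the third step's appeal to directional convergence: the reduced flow $\dot\wB\propto\beta(\|\wB\|)\cdot\sum_i u_ie^{-\beta u_i^\top\wB}$ differs from vanilla linear logistic regression by the $\wB$-dependent scalar prefactor $\beta$, so the classical max-margin convergence arguments must be transported with care (e.g., by time-reparametrizing so that $\beta\,dt$ becomes the new time variable, or by directly bounding $\tfrac{d}{dt}\min_i u_i^\top\wB$ using the fact that $\dot\wB$ lies in the exponentially-weighted convex hull of the $u_i$). A secondary technical point is the well-definedness of the small-stepsize limit itself, since $f$ is not globally Lipschitz off the symmetric subspace; this is resolved by first establishing the symmetry at the discrete GD level (where each update is well-defined for every stepsize) and then passing to the $\eta\to 0$ limit within the smooth, globally well-posed reduced subspace.
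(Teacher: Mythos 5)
Your setup (the symmetry reduction, the collapse to $f=\beta(t)\wB^\top\xB$ with $\beta=2+(1+\alpha_L)a$, and the conservation law $\beta^2-(1+\alpha_L)^2\|\wB\|^2=4$, which is exactly \Cref{lem: Parameter balancing} after the substitution $c_L=(1+\alpha_L)/2$) matches the paper. Your computation of $\homop_a(x)=\sqrt{2}\,x$ is also correct. The gap is in the third step, and you have identified part of it yourself but understated its severity. Appealing to \citet{soudry2018implicit,ji2018risk} for the claim $\min_i u_i^\top\wB(t)\ge(\gamma-o(1))\|\wB(t)\|$ is not a small adaptation: the reduced flow is
\[
\dot\wB \;=\; \tfrac{\beta}{n}\sum_{i\le n/2} u_i\,e^{-\beta\,u_i^\top\wB},\qquad \beta=\sqrt{4+(1+\alpha_L)^2\|\wB\|^2},
\]
and the difficulty is not merely the scalar prefactor $\beta$ on the right-hand side but the $\beta$ \emph{inside the exponent}, which grows like $\|\wB\|$. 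Time-reparametrizing by $d\tau=\beta\,dt$ removes the prefactor but leaves $e^{-\beta\,u_i^\top\wB}$ in place, so the result is still not the logistic-regression flow; changing variables to $\vB=\beta\wB$ fixes the exponent but introduces a $(\dot\beta)\wB$ term in $\dot\vB$ of the same order as the main term, since $\beta$ depends on $\|\wB\|$. Neither of the fixes you sketch yields the classical flow, and the normalized-margin lower bound you need must therefore be established from scratch. As written, this is a genuine missing step, not a routine transport of a known result.

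The paper's proof avoids directional convergence entirely, and this is the substantive difference. It uses the elementary compare-with-a-fixed-point argument to show $\Loss(\param_T)\le(1+\log^2 T/4\gamma_*^2)/T$ (\Cref{lem: toy model loss upper bound}), then the conservation law to get $\|\wB_T\|^2\eqsim\log T$ (\Cref{lem: Parameter norm bound,lem: Parameter norm upper bound}), so that $\log(1/\Loss(\param_T))/\|\param_T\|\gtrsim\log T/\sqrt{\log T}\to\infty$, which dominates any linear $\homop_a$. This route requires only Jensen's inequality and an integration, and it gives a weaker (merely diverging) margin statement that is nevertheless sufficient. If you want to rescue your approach, the more promising path is your second suggestion: prove directly that $\min_i u_i^\top\wB_t\to\infty$ (rather than the full $(\gamma-o(1))\|\wB_t\|$) by exploiting that $\dot\wB$ lies in the convex cone of the $u_i$ and that $\Loss_t\to 0$; but you would still need to control the rate relative to $\|\param_t\|$, and at that point you have essentially re-derived the paper's loss-and-norm-rate estimates. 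Your secondary concern about the small-stepsize limit is handled correctly and matches the paper's treatment in \Cref{lem:symmetry-toy}.
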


\section{Implicit Bias of Gradient Descent} \label{sec:margin-direct-gd}
In this section, we extend our previous results for GF to gradient descent (GD), which is defined as 
\begin{equation}
\label{eq:GD}
    \tag{GD}
    \param_{t+1} := \param_t - \eta \nabla \Loss (\param_t),
\end{equation}
where $\eta > 0$ is a fixed stepsize.
Similarly, we will assume the network $f$ is near-$M$-homogeneous with $\homop$ and $\homoq$ (see \Cref{asp:nearhomo}).
For GD, the definition of $\homop_a$ needs to be slightly modified. Let $\homop(x) :=\sum_{i=0}^M a_i x^i$. For $M\ge 2$, we 
define
\begin{equation}
\label{eq:def-pa-gd}
\homop_a (x) := 
\begin{dcases}
\sum_{i=1}^{M-1} \frac{(i+1)a_{i+1}}{M-i} x^{i} + \frac{a_1}{M-1/2},& \text{if } x \ge 1, \\
\sum_{i=2}^{M-1} \frac{(i+1)a_{i+1}}{M-i} x^{i} \\
+ \frac{2 a_2}{M - 1} \frac{x^2 + 1}{2} + \frac{a_1}{M-1/2},& \text{if } 0 \le x < 1.
\end{dcases}
\end{equation}
For $M=1$, we define $\homop_a(x) \coloneqq a_1/(M-1/2)$. 
Compared to \Cref{eq:def-pa}, \Cref{eq:def-pa-gd} replaces the linear term $x$ with a quadratic polynomial $(x^2 + 1)/2$. This is purely for circumventing a technical issue when $\| \param \|$ is small (see the proof of \Cref{lem:gradient_hessian_bound} in \Cref{sec:proof:gd:prelim}). 

To handle the discretization error introduced by GD, we need a stronger version of the strong separability condition. 
\begin{assumption}
[Strong separability condition for GD]
\label{asp:initial-cond-gd}
Let $f(\param;\xB)$ be a network satisfying \Cref{asp:nearhomo}.
Assume that $f$ is twicely differentiable for $\param$ and 
that for some constant $A > 0$, we have
\begin{equation*}
    \left\| \nabla_{\param}^2 f(\param; \xB) \right\| \le
    \begin{dcases}
        A \left( \| \param \|^{M - 2} + 1 \right), & \text{if $M \ge 2$} \\
        A, & \text{if $M = 1$} \\
    \end{dcases} 
\end{equation*}
for every $\xB \in (\xB_i)_{i=1}^n$. 
For $\homop_a$ given by \cref{eq:def-pa-gd},
assume that $\deg \homop_a \ge 1$ if $M \ge 2$, and that there exists a time $s>0$ such that
\begin{equation*}
    \Loss(\param_s) < \min \left\{ \frac{1}{n e^2}, \ \frac{1}{B \eta} \right\} e^{-\homop_a(\|\param_s\|)},
\end{equation*}
where $B$ is a constant depending only on $(M, \homop, \homoq)$ and $A$. 
\end{assumption}
Note that in \Cref{asp:initial-cond-gd}, the stepsize $\eta$ can be arbitrarily large as long as the empirical risk $\Loss(\param_s)$ is of the order of $\Ocal(1/\eta)$.
We note that \Cref{cor:Initial condition via homogeneization} can also be adapted to guarantee the satisfiability of \Cref{asp:initial-cond-gd}.

We consider the following modified margin for GD:
\begin{equation}\label{eq:modified_margin_GD}
    \GDmargin(\param) \coloneqq \frac{\exp \left( \Phi \left( e^{\homop_a (\| \param \|)} \Loss(\param) \right) \right)}{\|\param\|^M},
\end{equation}
where $\Phi(x) := \log (\log \frac{1}{nx}) + \frac{2}{\log (nx)}$. 

In the following two theorems, we extend our results for GF to GD. The proofs are included in \Cref{sec:margin_improve_gd}.
\begin{theorem}
    [Risk convergence and margin improvement for GD]
    \label{thm: Margin improving and convergence-gd} 
    Suppose that \Cref{asp:nearhomo,asp:initial-cond-gd} hold. For $(\param_t)_{t\ge 0}$ given by \Cref{eq:GD} with any stepsize $\eta>0$, we have:
    \begin{itemize}[leftmargin=*]
        \item For all $t > s$, the risk and parameter norm satisfy
        \begin{equation*}
            \Loss (\param_t) < \min \left\{ \frac{1}{ne^2}, \frac{1}{B \eta} \right\} \cdot  e^{-\homop_a(\|\param_t \|)}.
        \end{equation*}
        Furthermore, as $t \to \infty$, we have
        \begin{align*}
            \Loss (\param_t) \eqsim \frac{1}{\eta t(\log \eta t)^{2-2/M}}, \quad
            \| \param_t \| \eqsim (\log \eta t)^{\frac{1}{M}},
        \end{align*}
        where $\eqsim$ hides constants that depend on $M$, $\GDmargin(\param_s)$, and coefficients of $\homoq$, but not $\eta$.
        
        \item The modified margin $\GDmargin(\param_t)$ in \Cref{eq:modified_margin_GD} is increasing and bounded. 
        Moreover, $\GDmargin(\param_t)$ is an $\epsilon_t$-multiplicative approximation of $\gamma(\param_t)$, that is, for all $t > s$,
            \begin{equation*}
                \GDmargin\big(\param_t\big) \le \gamma\big(\param_t\big) \le \big(1+\epsilon_t\big) \GDmargin\big(\param_t\big), 
            \end{equation*}
            where $\epsilon_t \to 0$ as $t \to \infty$. 
    \end{itemize}
\end{theorem}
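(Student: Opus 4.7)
The plan is to mirror the gradient-flow argument of \Cref{thm: Margin improving and convergence} while tracking the extra discretization error introduced by a finite stepsize. Three ingredients conspire to make this work: (i) the stronger risk requirement $\Loss(\param_s) < 1/(B\eta) \cdot e^{-\homop_a(\|\param_s\|)}$ in \Cref{asp:initial-cond-gd}, which provides a slack of order $1/(B\eta)$ to absorb second-order errors; (ii) the uniform Hessian bound $\|\nabla^2 f\| \lesssim \|\param\|^{M-2}+1$, which translates via an exponential-loss computation into an operator-norm bound on $\nabla^2 \Loss$; and (iii) the new correction term $2/\phi$ appearing inside $\Phi$, whose role is to cancel the quadratic Taylor remainder that arises when we differentiate the surrogate margin along one GD step. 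Note that $\homop_a$ in \Cref{eq:def-pa-gd} has been modified with a quadratic piece for $\|\param\| < 1$, precisely so that $e^{\homop_a(\|\param\|)}$ remains a useful reference scale even when $\|\param\|$ is small.

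I would run an induction on $t \geq s$ with the inductive hypotheses that \Cref{asp:initial-cond-gd} continues to hold at $\param_t$ and that $\GDmargin(\param_t) \geq \GDmargin(\param_s)$. The base case is assumption. For the inductive step I would first apply a Taylor expansion of $\Loss$ along the step,
\[
\Loss(\param_{t+1}) \le \Loss(\param_t) - \eta \|\nabla \Loss(\param_t)\|^2 + \tfrac{\eta^2}{2}\,\sup_{\xi\in[\param_t,\param_{t+1}]}\|\nabla^2 \Loss(\xi)\|\cdot \|\nabla \Loss(\param_t)\|^2.
\]
Using the Hessian bound of \Cref{asp:initial-cond-gd} together with the near-homogeneity bounds of \Cref{def:nearhomo}, one shows $\|\nabla^2 \Loss(\xi)\| \lesssim (\|\xi\|^{2M-2}+1)\,\Loss(\xi)$ on the segment, after which the smallness $\Loss(\param_t) < 1/(B\eta)\cdot e^{-\homop_a(\|\param_t\|)}$ of the inductive hypothesis converts $\eta\|\nabla^2\Loss(\xi)\|\le \tfrac12$ into an effective descent inequality. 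A routine but careful bookkeeping (controlling how $\|\param_{t+1}\|$ differs from $\|\param_t\|$ when $\eta\|\nabla\Loss\|$ is large) then propagates the upper bound on $\Loss(\param_{t+1})$.

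To establish monotonicity of $\GDmargin$ I would take the discrete logarithmic increment of \Cref{eq:modified_margin_GD},
\[
\log\GDmargin(\param_{t+1})-\log\GDmargin(\param_t)=\Phi\bigl(e^{\homop_a(\|\param_{t+1}\|)}\Loss(\param_{t+1})\bigr)-\Phi\bigl(e^{\homop_a(\|\param_t\|)}\Loss(\param_t)\bigr)-M\log\tfrac{\|\param_{t+1}\|}{\|\param_t\|},
\]
and expand both the $\Phi$ and $\log\|\param\|$ terms to second order in $\eta$. The near-homogeneity identity $\langle\nabla f(\param),\param\rangle = M f(\param) + O(\homop'(\|\param\|))$ makes the first-order pieces cancel, exactly as in the gradient-flow case; the residual is a second-order Taylor remainder. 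The key point is that $\Phi'(u)=\frac{-1}{u\log(1/(nu))}+\frac{2}{u\log^2(1/(nu))}$, whose second term is chosen so that $\Phi'(u)\cdot(\text{linear change in }u)$ plus the unwanted quadratic remainder becomes a manifestly non-negative quantity once multiplied through by $\eta\|\nabla\Loss\|^2$. This matching is the main technical obstacle, because it requires aligning three separate second-order expansions (of $\Loss$, of $\langle\nabla f,\param\rangle$, and of $\|\param\|^M$) and using the Hessian estimate from step (ii) to bound the remainder uniformly; the presence of $\homop_a$ inside $\Phi$ introduces further cross-terms that must be absorbed into $\homop'$ via \Cref{def:nearhomo}.

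Finally, the asymptotic rates follow once $\GDmargin$ is known to be monotone and bounded above. The upper bound comes from $\GDmargin(\param_t)\le\gamma(\param_t)\le \max_i y_i f(\param_t;\xB_i)/\|\param_t\|^M$, which is controlled by \Cref{thm:homogenization}. A bounded, increasing margin pins down a constant $c>0$ with $\Phi\bigl(e^{\homop_a(\|\param_t\|)}\Loss(\param_t)\bigr)\sim M\log\|\param_t\|+\log c$, and inverting $\Phi$ gives $\homop_a(\|\param_t\|)+\log(1/\Loss(\param_t))\asymp \|\param_t\|^M$. Combining this with a telescoped version of the descent inequality from step two — schematically $\Loss(\param_{t+1}) \le \Loss(\param_t) - c\eta\Loss(\param_t)^2\|\param_t\|^{2M-2}$ — and solving the resulting difference inequality yields $\Loss(\param_t)\eqsim 1/\bigl(\eta t(\log \eta t)^{2-2/M}\bigr)$ and $\|\param_t\|\eqsim(\log\eta t)^{1/M}$. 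The $\epsilon_t$-multiplicative approximation $\GDmargin(\param_t)\le\gamma(\param_t)\le(1+\epsilon_t)\GDmargin(\param_t)$ is obtained by substituting these rates back into the definitions of $\gamma$ and $\GDmargin$ and using $\Phi(x)-\phi(x)=o(\phi(x))$, mirroring the corresponding step for gradient flow.
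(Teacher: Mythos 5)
Your overall strategy mirrors the paper's: induct on $t$, Taylor-expand the modified loss along a GD step, bound the quadratic remainder using the Hessian assumption, and exploit the extra $1/(B\eta)$ slack in \Cref{asp:initial-cond-gd} together with the $2/\phi$ correction in $\Phi$ to make the quadratic error harmless. That high-level decomposition is correct.

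The gap is in the step you call "routine but careful bookkeeping." Your Taylor bound
\[
\Loss(\param_{t+1}) \le \Loss(\param_t) - \eta \|\nabla \Loss(\param_t)\|^2 + \tfrac{\eta^2}{2}\,\sup_{\xi\in[\param_t,\param_{t+1}]}\|\nabla^2 \Loss(\xi)\|\cdot \|\nabla \Loss(\param_t)\|^2
\]
requires controlling $\|\nabla^2 \Loss(\xi)\|$ for $\xi$ on the \emph{whole segment}, but the Hessian estimate from \Cref{lem:gradient_hessian_bound_L} is $\|\nabla^2\Loss(\xi)\| \lesssim \Loss(\xi)(\|\xi\|^{2M-2}+1)$, whose right-hand side involves $\Loss(\xi)$ at an intermediate point — a quantity not controlled by an integer-time induction hypothesis. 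To break the circularity, the paper's \Cref{thm:combined_argument} works with the interpolated path $\param_{t+\alpha} = \param_t - \alpha\eta\nabla\Loss(\param_t)$ for $\alpha\in[0,1]$ and runs a \emph{continuous} induction in $\alpha$: it shows the conjunction of four statements (norm bounds, loss decrease, and margin growth) for all $(t,\alpha)$ with $t+\alpha \le T$, then propagates to $t+\alpha > T$ by continuity. Inside the proof of its item (3) there is a further first-time-of-failure argument to establish $\ModifiedLoss_{t+\epsilon} < \ModifiedLoss_t$ for all $\epsilon\in(0,\alpha]$, which is exactly what closes the Taylor-remainder loop. Your proposal treats this as an afterthought, but it is the crux of why the discrete-time argument works; without spelling out the $\alpha$-interpolated induction (or an equivalent self-improving differential inequality along the segment), the Taylor bound is circular. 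Separately, the paper then combines the bounds on $\rho_{t+\alpha}^2$ and $\ModifiedLoss_{t+\alpha}$ through the \emph{convexity} of $\Phi$ (\Cref{lem: Convexity of Phi}), rather than through a direct second-order expansion of $\Phi$ as you describe — the convexity inequality is what lets the first-order lower bound on $\Phi(\ModifiedLoss_{t+\alpha})-\Phi(\ModifiedLoss_t)$ suffice. A small point: your formula $\Phi'(u)=-\frac{1}{u\phi(u)}+\frac{2}{u\phi(u)^2}$ has a sign error; the correct derivative is $\Phi'(u)=-\frac{1}{u\phi(u)}-\frac{2}{u\phi(u)^2}$ (both terms negative), and the paper uses precisely $-\Phi'(\ModifiedLoss_t) = \frac{1}{\ModifiedLoss_t\phi(\ModifiedLoss_t)} + \frac{2}{\ModifiedLoss_t\phi(\ModifiedLoss_t)^2}$ as the upper bound for the residual-corrected ratio.
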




\begin{theorem}[Directional and KKT convergence for GD]
\label{thm: directional convergence-gd}
Under \Cref{asp:nearhomo,asp:initial-cond-gd}, the same results in \Cref{thm: directional convergence} hold for \Cref{eq:GD} with any stepsize $\eta>0$. 
Under \Cref{asp:strongerhomo,asp:initial-cond-gd}, the same results in \Cref{thm: KKT convergence} hold for \Cref{eq:GD} with any stepsize $\eta>0$.
\end{theorem}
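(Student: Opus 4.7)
The plan is to adapt the arguments underlying \Cref{thm: directional convergence,thm: KKT convergence} to the discrete-time setting, using \Cref{thm: Margin improving and convergence-gd} in place of \Cref{thm: Margin improving and convergence} and the Hessian bound in \Cref{asp:initial-cond-gd} to control discretization errors. Since the structural ingredients (near-homogeneity, definability, strong separability, homogenization via \Cref{thm:homogenization}) are unchanged, the o-minimal machinery, in particular the definability of the modified margin $\GDmargin$ and the existence of an associated desingularizing function $\Psi$ through the Kurdyka--{\L}ojasiewicz inequality, carries over verbatim. What must be redone is how the per-step increment of $\GDmargin$ is related to the per-step increment of the parameter direction $\tilde{\param}_t := \param_t / \|\param_t\|$.

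\paragraph{Directional convergence.} The goal is to show $\sum_{t \ge s} \|\tilde{\param}_{t+1} - \tilde{\param}_t\| < \infty$. For \Cref{eq:GF}, finite arc length is obtained via a pointwise bound of the form $\|\dot{\tilde{\param}}_t\| \lesssim -\tfrac{d}{dt}\Psi\bigl(c - \GFmargin(\param_t)\bigr)$ which integrates. In discrete time, I would instead establish a telescoping bound $\|\tilde{\param}_{t+1} - \tilde{\param}_t\| \lesssim \Psi\bigl(c - \GDmargin(\param_t)\bigr) - \Psi\bigl(c - \GDmargin(\param_{t+1})\bigr) + R_t$, where $R_t$ collects the second-order remainder from a discrete Taylor expansion of $\GDmargin$ along the GD update. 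Using the Hessian bound in \Cref{asp:initial-cond-gd} together with the risk decay $\Loss(\param_t) \lesssim 1/(\eta t (\log \eta t)^{2 - 2/M})$ and norm growth $\|\param_t\| \eqsim (\log \eta t)^{1/M}$ from \Cref{thm: Margin improving and convergence-gd}, one can show $R_t$ is summable, so $\tilde{\param}_t$ is Cauchy and converges to some $\param_*$.

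\paragraph{KKT convergence.} With $\param_* = \lim_t \tilde{\param}_t$ in hand, the KKT conditions of \Cref{eq: KKT} at the rescaled vector $\param_* / \bigl(\min_i y_i \homoPredictor(\param_*; \xB_i)\bigr)^{1/M}$ are verified following the template of \Cref{thm: KKT convergence}. Decomposing $\nabla \Loss(\param_t) = -\tfrac{1}{n}\sum_i \ell'(y_i f(\param_t;\xB_i)) y_i \nabla f(\param_t;\xB_i)$ and using \Cref{asp:strongerhomo} together with \Cref{thm:homogenization} to replace $\nabla f$ by $\nabla \homoPredictor$ at cost $\homor(\|\param_t\|) = o(\|\param_t\|^{M-1})$, the GD update $\param_{t+1}-\param_t = -\eta \nabla \Loss(\param_t)$ combined with directional convergence and the concentration of $\ell'$ on the active set yields that $\param_*$ lies in the convex conic hull of $\{y_i \nabla \homoPredictor(\param_*;\xB_i)\}$ over active indices, which after rescaling is the stationarity condition of \Cref{eq: KKT}. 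Primal feasibility and complementary slackness follow from the margin-improvement statement and from $y_i \homoPredictor(\param_*;\xB_i) > 0$ for active indices, which is forced by the strong-separability condition via \Cref{lem: Sanity check for gradient flow}.

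\paragraph{Main obstacle.} The principal technical difficulty is bounding the discrete remainder $R_t$ uniformly over arbitrarily large stepsize $\eta$. The second-order term involves $\eta^2 \|\nabla^2 \GDmargin(\param_t)\| \cdot \|\nabla \Loss(\param_t)\|^2$, in which polynomial-in-$\|\param_t\|$ factors from the Hessian must be offset by the exponential decay $\|\nabla \Loss\|^2 \lesssim e^{-2 \homop_a(\|\param_t\|)}$. The quadratic replacement of the linear term in $\homop_a$ for $\|\param\| < 1$ in \Cref{eq:def-pa-gd}, the restriction $\deg \homop_a \ge 1$ for $M \ge 2$, and the factor $1/(B\eta)$ in \Cref{asp:initial-cond-gd} are precisely calibrated to make this trade-off work; executing this bookkeeping carefully, and ensuring that the same discretization error does not destroy asymptotic gradient--direction alignment needed for KKT stationarity, constitutes the technical heart of the argument.
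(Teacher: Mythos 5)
Your high-level strategy---apply the KL/desingularizing-function machinery to the GD modified margin $\GDmargin$, then verify KKT via gradient alignment---is the same as the paper's. But the specific bound you propose, $\|\tilde{\param}_{t+1}-\tilde{\param}_t\|\lesssim \Psi(\gamma_*-\GDmargin(\param_t))-\Psi(\gamma_*-\GDmargin(\param_{t+1}))+R_t$ with an additive second-order remainder $R_t$, is not what the paper does and does not obviously close. The desingularizing function $\Psi$ produced by the Kurdyka--{\L}ojasiewicz lemmas (\Cref{lem:KL1,lem:KL2}) controls only $\Psi'$ times a gradient norm; there is no control of $\Psi''$, so you cannot Taylor-expand $\Psi$ itself and dump a second-order remainder into a separately summable $R_t$. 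The paper absorbs the discrete Taylor error \emph{multiplicatively}: \Cref{lem: Decomposition of raidal and spherical parts for GD} (claims (c) and (d) in its proof) establishes $\GDmargin(\param_{t+1})-\GDmargin(\param_t)\ge c\,\eta\bigl(\|\partial_r\GDmargin(\param_t)\|\|\partial_r\Loss_t\|+\|\partial_\perp\GDmargin(\param_t)\|\|\partial_\perp\Loss_t\|\bigr)$ for a constant $c\in(0,1)$, where the Hessian-induced corrections cost only a constant fraction of the leading first-order term, and $\zeta_t\le\eta\|\partial_\perp\Loss_t\|/\rho_t$ holds exactly. This produces a clean telescoping bound $\zeta_t\le C\bigl(\Psi(\gamma_*-\GDmargin(\param_t))-\Psi(\gamma_*-\GDmargin(\param_{t+1}))\bigr)$ with \emph{no} additive residue; summation is then immediate.

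A second gap: applying the KL inequality in discrete time requires evaluating $\Psi'$ at a mean-value point $\gamma_*-\GDmargin(\param_{t+\alpha})$ for some $\alpha\in[0,1]$, whereas the margin increment and arc-length bounds are stated at $\param_t$. Transferring the KL inequality from $\param_{t+\alpha}$ back to $\param_t$ requires the interpolation estimates $\|\param_{t+\alpha}\|\le C\|\param_t\|$ and $\|\partial\GDmargin(\param_{t+\alpha})\|\le C\|\partial\GDmargin(\param_t)\|$ (\Cref{lem:gd_interpolation_bound}), which are nontrivial and which your proposal does not address. Finally, for the KKT part, ``concentration of $\ell'$ on the active set'' is not a step the paper uses and does not suffice by itself: since $\hB_{\homo}(\param_t)$ need not converge in direction, the paper proves the discrete analogue of \Cref{cor: beta bound}, namely the summation inequality \Cref{lem: Bound of beta in GD}, which together with the divergence of $\log\rho_t$ and the convergence of $\GDmargin(\param_t)$ forces a \emph{subsequence} of alignment cosines $\beta_t\to 1$; the KKT verification is then carried out at that subsequence via the explicit multipliers $\lambda_i$. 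Your sketch does not identify this subsequential alignment as the crux nor how to obtain it.
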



\section{Conclusion} \label{sec:conclusion}
We show the implicit bias of gradient descent (GD) for training generic non-homogeneous deep networks under exponential loss.
We show that the normalized margin induced by GD increases nearly monotonically.
Moreover, GD converges in direction, with the limiting direction satisfying the KKT conditions of a margin maximization problem.
Our results rely on a near-homogeneity condition and a strong separability condition, both of which are natural generalizations of the conditions used in prior implicit bias analysis for homogeneous networks. 
In particular, our results apply to networks with residual connections and non-homogeneous activation functions.



\section*{Acknowledgements}
We thank Fabian Pedregosa, Nathan Srebro, and Matus Telgarsky for their helpful comments and discussion.
We gratefully acknowledge the support of the NSF for FODSI through grant DMS-2023505, of the Founder's Postdoctoral Fellowship in Statistics at Columbia University, of the Sloan Fellowship by the U.S. Department of Energy, Office of Science, Office of Advanced Scientific Computing Research’s Applied Mathematics Competitive Portfolios program under Contract No. AC02-05CH11231, of the NSF and the Simons Foundation for the Collaboration on the Theoretical Foundations of Deep Learning through awards DMS-2031883 and \#814639, DMS-2210827, CCF-2315725, and of the ONR through MURI award N000142112431 and N00014-24-S-B001.

\section*{Impact Statement}


This paper presents work whose goal is to advance the field of 
Machine Learning. There are many potential societal consequences 
of our work, none of which we feel must be specifically highlighted here.



\bibliography{ref}
\bibliographystyle{icml2025}

\newpage
\appendix
\onecolumn



\clearpage

\section{O-minimal Structure} \label{sec:o-minimal}

In this section, we give a brief introduction to the o-minimal structure. We mainly follow the notation and definitions in \citet{ji2020directional}.

\begin{definition}
[O-minimal structure]
\label{def:O-minimal structure}
An O-minimal structure is a collection $\Scal = (\Scal_n)_{n=1}^\infty$ where $\Scal_n$ is a subset of $\Rbb^n$ that satisfies the following properties:
\begin{itemize}
    \item [1.] $ \Scal_1 $ is the collection of all finite unions of open intervals and points. 
    \item [2.] $ \Scal_n $ includes the zero sets of all polynomials (algebraic sets) on $\Rbb^n$: if $\homop \in \Rbb[x_1,\ldots ,x_n] $, then $\{ x \in \Rbb^n | \homop(x) = 0 \} \in \Scal_n$. 
    \item [3.] $ \Scal_n $ is closed under finite union, finite intersection,  and complement.  
    \item [4.] $ \Scal $ is closed under Cartesian product: if $A \in \Scal_m$ and $B \in \Scal_n$, then $A \times B \in \Scal_{m+n}$.
    \item [5.] $ \Scal $ is closed under projection  $ \prod_n $ onto the first $ n $ coordinates: if $ A \in \Scal_{n+1} $, then $ \prod_n(A) \in \Scal_n $. 
\end{itemize}
\end{definition}

Then we can define the definable functions.

\begin{definition}
[Definable sets and functions]
\label{def:Definable sets and functions}
Given an o-minimal structure $\Scal$, a set $A \subset \Rbb^n$ is \emph{definable} if $A \in \Scal_n$. A function $f: D \to \Rbb^m$ with $D \subset \Rbb^n$ is \emph{definable} if its graph is in $\Scal_{n+m}$.
\end{definition}

The following properties of definable sets and functions can then be derived (check \citet{coste2000introduction,loi2010lecture,van1996geometric}).

\begin{proposition}
[Property of definable functions]
\label{prop:Property of definable functions} We fix an arbitrary o-minimal structure $\Scal$. Then the following properties hold:
\begin{itemize}
    \item [1.] Given any $\alpha , \beta \in \Rbb$ and any definable functions $f,g : D \to \Rbb$, we have $\alpha f + \beta g $ and $fg$ are definable. If $g \neq 0$ on $D$, then $f/g$ is definable. If $f \ge 0$ on $D$, then $f^{\frac{1}{\ell}}$ is definable for any positive integer $\ell$. 
    \item [2.] Given a function $f:D \to \Rbb^m$, let $f_i$ denote the $i$-th coordinate of its output. Then $f$ is definable if and only if all $f_i$ are definable. 
    \item [3.] Any composition of definable functions is definable. 
    \item [4.] Any coordinate permutation of a definable set is definable. Consequently, if the  inverse of a definable function exists, it's also definable. 
    \item [4.] The image and preimage of a defianble set by a definable function is definable. Particulalry, given any real-valued definable function $f$, all of $f^{-1}(0), f^{-1} ( -\infty, 0 )$ and $ f^{-1} ( (0,\infty) )  $ are definable. 
    \item [5.] Any combination of finitely many definable functions with disjoint domains is definable. For example, the pointwise maximum and minimum of definable functions are definable. 
\end{itemize}
\end{proposition}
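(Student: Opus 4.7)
The plan is to derive all six conclusions directly from the five closure axioms in \Cref{def:O-minimal structure}, using the graph of a function as the primary object of manipulation. The overarching strategy is: to show that a new set or function is definable, express it as a finite combination of Cartesian products, algebraic sets, intersections, and projections applied to sets already known to be definable, and then invoke the axioms.

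A preliminary lemma to establish first is that definability is preserved under any coordinate permutation (part of claim 4), which is also needed repeatedly in the other parts. For the transposition swapping adjacent coordinates $(i, i+1)$ of $A \subset \mathbb{R}^n$, form $\mathbb{R}^2 \times A \subset \mathbb{R}^{n+2}$ (definable by the product axiom together with $\mathbb{R} \in \mathcal{S}_1$), intersect with the algebraic set $\{(z_1, z_2, x_1, \ldots, x_n): z_1 = x_{i+1},\ z_2 = x_i\}$, and then project twice onto the first $n$ coordinates; the result is the transposed set. Since every permutation is a product of adjacent transpositions, the general claim follows, and as an immediate consequence we obtain closure under projection onto any $k$ coordinates: permute the target coordinates to the front and apply axiom (5) iteratively. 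Claim 2 is then immediate: $\mathrm{graph}(f) = \bigcap_{i=1}^m G_i'$, where $G_i'$ is the lift to $\mathbb{R}^{n+m}$ of $\mathrm{graph}(f_i) \subset \mathbb{R}^{n+1}$ obtained via Cartesian product and permutation; conversely, each $\mathrm{graph}(f_i)$ is a projection of $\mathrm{graph}(f)$.

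With arbitrary projections available, the algebraic operations in claim 1 follow a uniform template. For $\alpha f + \beta g$, the graph is the projection onto the $(x, z)$-coordinates of $\mathrm{graph}(f) \times \mathrm{graph}(g) \times \mathbb{R}$ intersected with the algebraic set cut out by $x_1 = x_2$ and $z = \alpha y_1 + \beta y_2$. The product $fg$ and quotient $f/g$ on $\{g \ne 0\}$ are handled analogously by replacing the algebraic relation with $z = y_1 y_2$ or $z y_2 = y_1$ (the set $\{g \ne 0\}$ itself being definable by claim 5 applied below). For $f^{1/\ell}$ with $f \ge 0$, intersect $\mathrm{graph}(f) \times \mathbb{R}$ with the algebraic set $\{z^\ell = y\}$ and with $\mathbb{R}^{n+1} \times [0, \infty)$, which is definable since $[0,\infty) \in \mathcal{S}_1$, then project.

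Claims 3--6 then follow by the same pattern. For composition $g \circ f$ in claim 3, write
\[
\mathrm{graph}(g \circ f) = \pi_{x, z}\bigl( (\mathrm{graph}(f) \times \mathbb{R}^p) \cap (\mathbb{R}^n \times \mathrm{graph}(g)) \bigr).
\]
For the inverse in claim 4, its graph is the coordinate-swapped graph of $f$. For images and preimages in claim 5, use
\[
f(A) = \pi_{y}\bigl(\mathrm{graph}(f) \cap (A \times \mathbb{R}^m)\bigr), \qquad f^{-1}(B) = \pi_{x}\bigl(\mathrm{graph}(f) \cap (\mathbb{R}^n \times B)\bigr);
\]
the specific sets $f^{-1}(\{0\}),\ f^{-1}((-\infty,0)),\ f^{-1}((0,\infty))$ are definable because the singleton $\{0\}$ and the open rays lie in $\mathcal{S}_1$. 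For claim 6, the graph of a finite combination with disjoint domains is simply $\bigcup_i \mathrm{graph}(f_i)$, definable by finite union. Throughout, the main obstacle is a purely bookkeeping one---consistently tracking which coordinates are being eliminated or permuted---but no essential difficulty arises beyond the axiomatic closure properties themselves; the only mildly nontrivial step is the adjacent-transposition argument at the outset, which unlocks every subsequent construction.
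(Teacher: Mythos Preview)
The paper does not actually prove this proposition; it simply cites \citet{coste2000introduction,loi2010lecture,van1996geometric} and moves on. Your plan to derive everything from the axioms via graph manipulations is the standard route taken in those references, and the high-level architecture---establish coordinate permutations first, then build all other closure properties on top---is correct.

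However, your construction for the adjacent transposition does not work as written. Forming $\mathbb{R}^2 \times A$, intersecting with $\{z_1 = x_{i+1},\ z_2 = x_i\}$, and projecting onto the first $n$ coordinates yields $\{(x_{i+1}, x_i, x_1, \ldots, x_{n-2}) : x \in A\}$: this moves the swapped pair to the front and discards $x_{n-1}, x_n$ entirely, rather than producing the set with coordinates $i$ and $i{+}1$ swapped in place. The repair is to use $\mathbb{R}^n \times A$ with coordinates $(y_1, \ldots, y_n, x_1, \ldots, x_n)$, intersect with the algebraic set $\{y_j = x_j \text{ for } j \neq i, i{+}1\} \cap \{y_i = x_{i+1},\ y_{i+1} = x_i\}$, and then project $n$ times onto the first $n$ coordinates. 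Since every subsequent step in your outline (arbitrary projections, the graph constructions for claims 1--6) relies on having permutations available, this foundational lemma must be stated correctly; once fixed, the remainder of your argument goes through without issue.
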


Once we have these properties, we can observe that almost all the network structures are definable.

\begin{lemma}
[Definable network structures, Lemma B.2 in \citet{ji2020directional}]
\label{lem:Definable network structures}
Suppose there exist $k, d_0, d_1, \ldots ,d_k >0$ and $L$ definable functions $ (g_1,\ldots ,g_L) $ where $g_j: \Rbb^{d_0} \times \cdots \times \Rbb^{d_j-1} \times \Rbb^{k} \to \Rbb^{d_j}$. Let $h_1(x;\wB) \coloneqq g_1 (x, \wB)$ and for $2\le j \le L$, 
\[
h_j(x, \wB) \coloneqq g_j( x, h_1(x,\wB) ,\ldots ,h_{j-1}(x,\wB), \wB),
\]
then all $h_j$ are definable. It suffices if each output coordinate of $g_j$ is the minimum or maximum over some finite set of polynomials, which allows for linear, convolutional, ReLU, max-pooling layers, and skip connections.
\end{lemma}

One important corollary is that this allows us to composite and multiply the near-homogeous blocks.

\begin{corollary}
[Composition and multiplication of definable blocks]
\label{cor:Composition and multiplication of definable blocks}
Given two definable block mappings: $s_1(\param_1; \xB): \Rbb^{d_1} \times \Rbb^{d_2} \to \Rbb^{d_3}$ and $s_2(\param_2; \xB): \Rbb^{d_3} \times \Rbb^{d_4} \to \Rbb^{d_5}$, then, 
\begin{itemize}
    \item If $d_2= d_5$, the composition $s(\param; \xB) = s_1(\param_1; s_2(\param_2;\xB)) $ is definable.
    \item $s(\param; \xB) = s_1(\param_1; \xB) \otimes s_2(\param_2; \xB)$ is definable.
    \item If $d_1 = d_3$, $s(\param_1; \xB) = s_1(\param_1; \xB) \otimes s_2(\param_1; \xB)$ is definable.
\end{itemize}
\end{corollary}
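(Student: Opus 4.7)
The plan is to derive each of the three items as a direct consequence of the closure properties of definable functions listed in \Cref{prop:Property of definable functions} (coordinate projection, composition, coordinatewise products), combined with the coordinatewise criterion that a vector-valued map is definable iff each component is. No new o-minimal machinery is needed; the argument is purely structural.

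First I would handle the composition. Define the auxiliary ``lifting'' map
\[
G : \Rbb^{d_1} \times \Rbb^{d_3} \times \Rbb^{d_4} \to \Rbb^{d_1} \times \Rbb^{d_5},\qquad G(\param_1,\param_2,\xB) := \big(\param_1,\, s_2(\param_2;\xB)\big).
\]
Each output coordinate of $G$ is either a coordinate projection (a degree-one polynomial, definable by item~2 of \Cref{def:O-minimal structure}) or a coordinate of $s_2$, which is definable by the coordinatewise criterion (item~2 of \Cref{prop:Property of definable functions}) applied to the assumed definable $s_2$. Reassembling coordinates by the same criterion shows $G$ is definable. Under the hypothesis $d_2=d_5$, the map $s_1$ accepts the image of $G$ as input, so $s(\param;\xB) = s_1 \circ G(\param_1,\param_2,\xB)$ is a composition of two definable maps, and is therefore definable by item~3 of \Cref{prop:Property of definable functions}.

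Next I would handle the tensor product with independent parameters. The $(i,j)$-entry of $s_1(\param_1;\xB)\otimes s_2(\param_2;\xB)$ equals $(s_1)_i(\param_1;\xB)\cdot (s_2)_j(\param_2;\xB)$. Each factor, viewed trivially on the joint domain $\Rbb^{d_1}\times \Rbb^{d_3}\times \Rbb^{d_2}$ by precomposition with the appropriate coordinate projection (definable), is definable by items~1--2 of \Cref{prop:Property of definable functions}. Their product is definable by the multiplication part of item~1, and reassembling all $d_3 d_5$ entries by the coordinatewise criterion (item~2) yields definability of the full tensor product. For the third item, the diagonal embedding $(\param_1,\xB)\mapsto(\param_1,\param_1,\xB)$ is polynomial and thus definable; composing the result of item~2 with this embedding gives the shared-parameter case (under $d_1=d_3$) as a special instance, again by composition closure (item~3).

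The main obstacle is essentially bookkeeping rather than any genuine difficulty: one must verify that every auxiliary map used to re-index or duplicate arguments (projections, insertions, diagonal embeddings) is itself definable so that items~2 and~3 of \Cref{prop:Property of definable functions} apply. These are all polynomial maps, so definability follows directly from the definition of the o-minimal structure (item~2 of \Cref{def:O-minimal structure}). Once this is noted, the corollary reduces to chaining together the closure properties already established, and no nontrivial estimate is required.
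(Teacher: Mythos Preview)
Your proposal is correct and takes essentially the same approach as the paper: both derive the corollary from the closure properties in \Cref{prop:Property of definable functions}. The paper's one-line proof additionally cites \Cref{lem:Definable network structures} (the recursive-network definability lemma from \citet{ji2020directional}), whereas you work directly from the basic closure properties (coordinatewise criterion, composition, products) together with the observation that auxiliary projections and diagonal embeddings are polynomial and hence definable; your route is slightly more self-contained but otherwise equivalent.
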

\begin{proof}
    This is a direct consequence of Lemma \ref{lem:Definable network structures} and Proposition \ref{prop:Property of definable functions}.
\end{proof}

Definable functions have good properties, especially there is some stratification which makes them piecewise differentiable. However, the gradient, even the Clarke subdifferential, of a definable function, is not well-defined everywhere. To overcome this issue, we need the local Lipschitz continuity.

\begin{definition}
[Local Lipschitz continuity]
\label{def:Local Lipschitz continuity}
Given a function $f: D \to \Rbb$ with an open domain $D$, we say $f$ is locally Lipschitz continuous if for any $x \in D$, there exists a neighborhood $U$ of $x$ such that $f$ is Lipschitz continuous on $U$.
\end{definition}

Similar to the definability, the local Lipschitz continuity is preserved under composition and multiplication.
It's worth noting that the Clarke subdifferential of a locally Lipschitz function is well-defined everywhere.

\begin{lemma}
[Clarke subdifferential of locally Lipschitz functions, Corollary 6.1.2 in \citet{borwein2000convex}]
\label{lem:Clarke subdifferential of locally Lipschitz functions}
Given a locally Lipschitz function $f: D \to \Rbb$ with an open domain $D$, the Clarke subdifferential $\partial f(x)$ is nonempty, compact, and convex for all $x \in D$.
\end{lemma}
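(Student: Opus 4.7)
The plan is to verify the three claimed properties (nonemptiness, convexity, compactness) directly from the definition
\[
\partial f(x) = \text{conv} \Bigl\{ \lim_{i\to\infty} \nabla f(x_i) : x_i \to x,\ \nabla f(x_i) \text{ exists} \Bigr\},
\]
combined with Rademacher's theorem and basic facts from finite-dimensional convex analysis. Throughout I will work in a neighborhood $U \subset D$ of $x$ on which $f$ is $L$-Lipschitz; such $U$ exists by local Lipschitzness.

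First, for nonemptiness, I would invoke Rademacher's theorem to conclude that $f$ is differentiable almost everywhere in $U$, so the set of points of differentiability is dense. I can therefore pick a sequence $x_i \to x$ with $x_i \in U$ and $\nabla f(x_i)$ defined. The classical characterization of the Lipschitz constant as the essential supremum of $\|\nabla f\|$ gives $\|\nabla f(x_i)\| \le L$, so $\{\nabla f(x_i)\}$ is a bounded sequence in a finite-dimensional space and admits a convergent subsequence whose limit lies in $\partial f(x)$. Convexity is then immediate because $\partial f(x)$ is defined as a convex hull.

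For compactness, the core step is to show that the ``limit set''
\[
S(x) := \Bigl\{ \lim_{i\to\infty} \nabla f(x_i) : x_i \to x,\ \nabla f(x_i) \text{ exists} \Bigr\}
\]
is itself compact; then $\partial f(x) = \text{conv}\, S(x)$ is compact because the convex hull of a compact set in $\mathbb{R}^n$ is compact (via Carath\'eodory's theorem: every point of the convex hull is a convex combination of at most $n+1$ elements, and the continuous image of the compact simplex-times-$S(x)^{n+1}$ is compact). Boundedness of $S(x)$ follows from the uniform gradient bound $\|\nabla f(x_i)\| \le L$. For closedness of $S(x)$, I would use a diagonal argument: given $v_k \in S(x)$ with $v_k \to v$, each $v_k$ arises as $\lim_i \nabla f(x_i^{(k)})$ for a sequence $x_i^{(k)} \to x$ with $\nabla f(x_i^{(k)}) \to v_k$, so I can extract indices $i_k$ with $\|x_{i_k}^{(k)} - x\| \le 1/k$ and $\|\nabla f(x_{i_k}^{(k)}) - v_k\| \le 1/k$, giving $\nabla f(x_{i_k}^{(k)}) \to v$ and hence $v \in S(x)$.

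The argument above is essentially a repackaging of the standard proof in \citet{borwein2000convex}. The only nontrivial input is Rademacher's theorem (needed for nonemptiness) and Carath\'eodory's theorem (needed to pass compactness from $S(x)$ to $\text{conv}\, S(x)$); the remainder is definition chasing plus the diagonal extraction. The main obstacle — if one insists on a fully self-contained proof — is proving Rademacher's theorem, which we would instead cite.
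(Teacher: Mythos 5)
Your argument is correct and is essentially the standard textbook proof; the paper itself does not prove this lemma but merely cites it from \citet{borwein2000convex}, so there is no in-paper proof to compare against. All three steps are sound: Rademacher plus the local Lipschitz bound on gradients gives a bounded sequence of gradients (hence nonemptiness and boundedness of the limit set $S(x)$), the diagonal extraction correctly establishes closedness of $S(x)$, and Carath\'eodory's theorem correctly passes compactness from $S(x)$ to $\mathrm{conv}\, S(x)$ in finite dimensions.
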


Now we will assume all the models and block mappings are definable and locally Lipschitz. Since block mappings are multivalue functions, we can use the Clarke general Jacobian for them. Similarly, for locally Lipschitz functions to $\Rbb^n$, the general Jacobian is well-defined for all points.

\begin{corollary}
[Clarke Jacobian of locally Lipschitz functions, Proposition 2.6.2 in \citet{clarke1990optimization}]
\label{cor:Clarke Jacobian of locally Lipschitz functions}
Given a locally Lipschitz function $f: D \to \Rbb^n$ with an open domain $D \subseteq \Rbb^m$, the Clarke Jacobian $\partial f(x)$ is nonempty, compact, and convex for all $x \in D$.
\end{corollary}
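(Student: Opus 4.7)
The plan is to deduce this corollary by lifting the scalar Clarke-subdifferential result (Lemma C.5) to the vector-valued case, using the standard ingredients: Rademacher's theorem, a local Lipschitz bound on Jacobians, and the fact that in finite dimensions the convex hull of a compact set is compact. Throughout I view $\nabla f(x) \in \Rbb^{n \times m}$ as a point in Euclidean space $\Rbb^{nm}$ equipped with the operator norm (equivalent to any other norm), and I set
\[
\mathcal{J}(x) := \Bigl\{ \lim_{i\to\infty} \nabla f(x_i) \,:\, x_i \to x,\ \nabla f(x_i) \text{ exists}\Bigr\},\qquad \partial f(x) = \mathrm{conv}\,\mathcal{J}(x).
\]

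First I would fix $x \in D$ and, using local Lipschitz continuity, pick an open neighborhood $U \subset D$ of $x$ on which $f$ is Lipschitz with constant $L$. By Rademacher's theorem, $f$ is differentiable a.e.\ on $U$; in particular, there exists at least one sequence $x_i \to x$ with $\nabla f(x_i)$ defined. At every such point one has the coordinate-wise bound $\|\nabla f_k(x_i)\| \le L$ for $k\in[n]$, hence $\|\nabla f(x_i)\|_{\mathrm{op}} \le L$. Thus the Jacobians lie in the closed ball of radius $L$ in $\Rbb^{nm}$, so by Bolzano–Weierstrass one extracts a convergent subsequence. This gives $\mathcal{J}(x) \neq \emptyset$, and consequently $\partial f(x) \neq \emptyset$.

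Next I would establish that $\mathcal{J}(x)$ is compact. Boundedness is immediate from the uniform bound $\|\nabla f(x_i)\|_{\mathrm{op}} \le L$, which passes to limits. Closedness is the one step requiring a little care: if $J_k \in \mathcal{J}(x)$ with $J_k \to J$, then for each $k$ choose a sequence $\{x_k^{(i)}\}_i \to x$ with $\nabla f(x_k^{(i)}) \to J_k$, and pick an index $i(k)$ so that $\|x_k^{(i(k))} - x\| < 1/k$ and $\|\nabla f(x_k^{(i(k))}) - J_k\| < 1/k$. The diagonal sequence $y_k := x_k^{(i(k))}$ then satisfies $y_k \to x$, $\nabla f(y_k)$ exists, and $\nabla f(y_k) \to J$, so $J \in \mathcal{J}(x)$. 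Hence $\mathcal{J}(x)$ is closed and bounded in $\Rbb^{nm}$, i.e., compact. In finite dimensions the convex hull of a compact set is compact (e.g.\ via Carathéodory's theorem plus continuity of the convex-combination map on a compact simplex cross product), so $\partial f(x) = \mathrm{conv}\,\mathcal{J}(x)$ is compact. Convexity is true by construction.

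The argument is essentially a routine transcription of the scalar proof (Lemma C.5) to the vector-valued setting, and the only non-trivial step I expect to have to be careful with is the diagonal extraction showing that $\mathcal{J}(x)$ is closed; everything else reduces to the local Lipschitz bound, Rademacher, and basic facts about convex hulls of compact sets in Euclidean space. Note that no appeal to the near-homogeneity or definability framework of the paper is needed—this statement is a general fact about locally Lipschitz maps and serves only as a measurability/regularity tool for the subgradient dynamics analyzed elsewhere in the paper.
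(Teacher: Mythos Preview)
Your argument is correct and is essentially the standard textbook proof. Note, however, that the paper does not supply its own proof of this corollary at all: it is stated as a direct citation of Proposition~2.6.2 in Clarke's \emph{Optimization and Nonsmooth Analysis}, and the paper moves on immediately to defining the minimum-norm subgradient. So there is nothing to compare against beyond the observation that your route---Rademacher for nonemptiness, the local Lipschitz bound for boundedness, a diagonal extraction for closedness of $\mathcal{J}(x)$, and Carath\'eodory for compactness of the convex hull---is exactly the classical argument underlying Clarke's result. One cosmetic remark: the coordinate-wise bound $\|\nabla f_k(x_i)\|\le L$ is fine but unnecessary, since the Lipschitz inequality $\|f(y)-f(x)\|\le L\|y-x\|$ directly gives $\|\nabla f(x_i)\|_{\mathrm{op}}\le L$ at any point of differentiability.
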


Following the notation in \citet{ji2020directional}, we let $\bar \partial f(x)$ denote the unique minimum-norm subgradient: 
\begin{equation}
\label{eq:min-norm-sub-grad}
 \MinGrad f(x) \coloneqq \arg\min_{x^* \in \partial f(x)} \|x^*\|.   
\end{equation}
Another important property is that for definable functions, it admits a chain rule almost everywhere. 
\begin{lemma}
[Chain rule, Lemma B.9 in \citet{ji2020directional}]
\label{lem:chain-rule-clark}
Given a locally Lipschitz definable $f: D \to \Rbb$ with an open domain $D$, for any interval $I$ and any arc $z: I \to D$, it holds that for almost every $t \in I$ that 
\[
    \frac{\mathrm{d} f(z_t)}{ \mathrm{d}  t} = \bigg\langle z_t^*, \frac{\mathrm{d} z_t}{\mathrm{d} t} \bigg\rangle, \quad \text{ for all } z_t^* \in \partial f(z_t).  
\]
Moreover, for \eqref{eq:GF}, we have for almost every $t\ge 0$, 
\begin{align*}
&\frac{\mathrm{d} \param_t}{\mathrm{d}  t} = - \bar \partial \Loss_t, \quad \frac{\mathrm{d}  \Loss_t}{\mathrm{d} t} = - \| \bar \partial \Loss_t \|^2
\\
&\frac{\mathrm{d} \rho_t^2}{\mathrm{d}  t} = -2 \langle \param_t, \bar \partial \Loss_t  \rangle, \quad \frac{\mathrm{d} \GFmargin (\param_t)}{\mathrm{d} t} =    -2 \langle \bar \partial \GFmargin(\param_t), \bar \partial \Loss_t  \rangle.
\end{align*}
\end{lemma}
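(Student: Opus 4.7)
The plan is to prove the chain rule first via the Whitney stratification available for definable Lipschitz functions, and then derive the four gradient-flow identities as routine consequences.

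To begin, I would decompose the domain $D$ using the cell decomposition theorem for o-minimal structures into a finite disjoint union of definable $C^1$ manifolds $(M_k)_k$ forming a Whitney stratification. Local Lipschitzness of $f$ plus definability implies that each restriction $f|_{M_k}$ is $C^1$. Setting $I_k := z^{-1}(M_k)$, each $I_k$ is definable, hence a finite union of intervals and points; on the interior of each such interval, the arc $z$ is absolutely continuous with values in $M_k$, so for almost every $t$ there the velocity $\frac{\mathrm{d} z_t}{\mathrm{d} t}$ exists and lies in $T_{z_t}M_k$. Varying $k$, this accounts for a.e.\ $t \in I$.

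At such a time $t$, the classical chain rule on the $C^1$ manifold $M_k$ yields
\[
\frac{\mathrm{d}}{\mathrm{d} t} f(z_t) = \big\langle \nabla_{M_k} f(z_t),\, \tfrac{\mathrm{d} z_t}{\mathrm{d} t} \big\rangle,
\]
where $\nabla_{M_k} f$ denotes the Riemannian gradient of $f|_{M_k}$. I would then invoke the projection formula for definable locally Lipschitz functions (Bolte-Daniilidis-Lewis-Shiota, and also used in \citet{ji2020directional,davis2020stochastic}), which says that every Clarke subgradient $z_t^* \in \partial f(z_t)$ projects onto $T_{z_t}M_k$ to exactly $\nabla_{M_k}f(z_t)$. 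Because $\tfrac{\mathrm{d} z_t}{\mathrm{d} t}$ is tangent to $M_k$, this immediately gives $\langle z_t^*, \tfrac{\mathrm{d} z_t}{\mathrm{d} t}\rangle = \langle \nabla_{M_k} f(z_t), \tfrac{\mathrm{d} z_t}{\mathrm{d} t}\rangle$ for every subgradient $z_t^*$, establishing the chain rule.

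From the chain rule, the GF identities follow quickly. Applied with $f=\Loss$ and $z_t = \param_t$, it shows that $\langle z_t^*, \frac{\mathrm{d}\param_t}{\mathrm{d} t}\rangle$ takes a single common value over all $z_t^* \in \partial \Loss(\param_t)$. The GF inclusion $-\frac{\mathrm{d}\param_t}{\mathrm{d} t} \in \partial \Loss(\param_t)$ fixes that value as $-\|\frac{\mathrm{d}\param_t}{\mathrm{d} t}\|^2$. Since $\partial \Loss(\param_t)$ is convex and compact and every element has the same inner product with $\frac{\mathrm{d}\param_t}{\mathrm{d} t}$, the element of minimum norm in this affine slice is the orthogonal projection of the origin, which is $-\frac{\mathrm{d}\param_t}{\mathrm{d} t}$ itself (Cauchy–Schwarz saturates only for this vector). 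Hence $\frac{\mathrm{d}\param_t}{\mathrm{d} t} = -\bar\partial\Loss_t$ a.e. The identity for $\frac{\mathrm{d}\Loss_t}{\mathrm{d} t}$ then follows by plugging this into the chain rule; the identity for $\frac{\mathrm{d}\rho_t^2}{\mathrm{d} t}$ follows by applying the chain rule to the smooth function $\param \mapsto \|\param\|^2$ (whose Clarke subdifferential is the singleton $\{2\param_t\}$); and the identity for $\frac{\mathrm{d} \GFmargin(\param_t)}{\mathrm{d} t}$ follows likewise with $f = \GFmargin$, picking any subgradient on the right since the chain rule guarantees a common value.

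The main obstacle is the projection formula, which is the substantive o-minimal ingredient. Its proof rests on Whitney (a)-regularity of the stratification: any subgradient $z_t^*$ is a limit $\lim_i \nabla f(x_i)$ with $x_i \to z_t$, and by passing to a subsequence we may assume each $x_i$ lies on a fixed stratum $M_j$ with $M_k \subset \overline{M_j}$; decomposing $\nabla f(x_i)$ into tangent and normal components relative to $T_{x_i}M_j$, Whitney regularity forces the limit of the tangential components to coincide with $\nabla_{M_k} f(z_t)$, while the normal components are orthogonal to $T_{z_t}M_k$ in the limit and so vanish against $\frac{\mathrm{d} z_t}{\mathrm{d} t}$. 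Everything else in the lemma is bookkeeping around this key fact.
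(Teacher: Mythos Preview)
The paper does not supply its own proof of this lemma; it is quoted verbatim as Lemma~B.9 of \citet{ji2020directional}. Your proposal is correct and reproduces essentially the argument in that reference (and in \citet{davis2020stochastic}): Whitney stratification of the definable domain via cell decomposition, the projection formula of Bolte--Daniilidis--Lewis--Shiota guaranteeing that every Clarke subgradient projects onto the tangent space of the active stratum to the Riemannian gradient, and then the GF identities as straightforward corollaries. Your hyperplane argument for $\frac{\mathrm{d}\param_t}{\mathrm{d} t}=-\bar\partial\Loss_t$ (all subgradients share the same inner product with the velocity, so the subdifferential lies in an affine hyperplane whose closest point to the origin is $-\frac{\mathrm{d}\param_t}{\mathrm{d} t}$ itself) is exactly the device used in \citet{ji2020directional}.

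One small remark: the factor $-2$ in the displayed identity for $\frac{\mathrm{d}\GFmargin(\param_t)}{\mathrm{d} t}$ is a typo in the paper; the chain rule yields $-\langle\bar\partial\GFmargin(\param_t),\bar\partial\Loss_t\rangle$, and indeed that is how the paper itself uses the identity later (see the proof of \Cref{lem:decomp-radial-spherical}). Your argument gives the correct coefficient.
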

Clarke has the following lemma to characterize the subgradient of the composition of functions.

\begin{theorem}
[Clark chain rule, Theorems 2.3.9 and 2.3.0 in \citet{clarke1990optimization}]
\label{thm:Clark chain rule}
Let $z_1,\ldots ,z_n: \Rbb^d \to \Rbb$ and $f: \Rbb^n \to \Rbb$ be locally Lipschitz functions. Let $(f \circ z) (x) = f(z_1(x),\ldots ,z_n(x))$ be the composition of $f$ and $z$. Then, 
\[
\partial (f \circ z) (x) 
\subseteq 
\text{conv} \bigg\{\sum_{i=1}^n \alpha _i \hB_i : \alpha \in \partial f(z_1(x),\ldots, z_n(x)), \hB_i \in \partial z_i(x) \bigg\}.
\]
\end{theorem}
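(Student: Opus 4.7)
The plan is to prove the inclusion via the support function characterization of the Clarke subdifferential. Let $K := \operatorname{conv}\{\sum_{i=1}^n \alpha_i h_i : \alpha \in \partial f(z(x)), h_i \in \partial z_i(x)\}$; since each $\partial z_i(x)$ and $\partial f(z(x))$ is convex and compact (Corollary 6.1.2 cited in the paper), $K$ is a convex compact set. By the duality between the Clarke subdifferential and the generalized directional derivative $g^{\circ}(x;v) := \limsup_{y \to x,\ t \downarrow 0}(g(y+tv)-g(y))/t$ (Proposition 2.1.2 in Clarke), the inclusion $\partial(f \circ z)(x) \subseteq K$ is equivalent to
\[
(f \circ z)^{\circ}(x; v) \,\leq\, \sigma_K(v) := \sup_{k \in K} \langle k, v\rangle \qquad \text{for all } v \in \mathbb{R}^d,
\]
so the whole argument reduces to establishing this directional inequality.

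First I would rewrite the relevant difference quotient so that the outer function $f$ only needs to be evaluated along a segment. For $y$ near $x$ and $t > 0$ small, set $w_{y,t} := (z(y+tv) - z(y))/t \in \mathbb{R}^n$. Then $(f \circ z)(y+tv) - (f \circ z)(y) = f(z(y) + t w_{y,t}) - f(z(y))$, and Lebourg's mean value theorem applied to $f$ on the segment $[z(y), z(y) + t w_{y,t}]$ yields an intermediate point $\zeta_{y,t}$ and $\beta_{y,t} \in \partial f(\zeta_{y,t})$ with $f(z(y) + t w_{y,t}) - f(z(y)) = t \langle \beta_{y,t}, w_{y,t}\rangle$. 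Along any sequence $(y_k, t_k) \to (x, 0^+)$ on which the difference quotient tends to $(f \circ z)^{\circ}(x;v)$, local Lipschitzness of each $z_i$ and of $f$ keeps $\{w_{y_k,t_k}\}$ and $\{\beta_{y_k,t_k}\}$ in compact sets, so I can extract a subsequence with $w_{y_k,t_k} \to w_*$ and $\beta_{y_k,t_k} \to \alpha$; since $\zeta_{y_k,t_k} \to z(x)$ by continuity, upper semicontinuity of the Clarke subdifferential (Proposition 2.1.5 in Clarke) gives $\alpha \in \partial f(z(x))$.

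The main obstacle is that the cluster vector $w_*$ only satisfies the one-sided directional bound $w_{*,i} \leq z_i^{\circ}(x;v)$, which does not control $\alpha_i w_{*,i}$ when $\alpha_i < 0$. I would resolve this coordinate by coordinate using a symmetric lower bound: the definition of $w_{y,t,i}$ as a forward difference also yields $w_{*,i} \geq -z_i^{\circ}(x;-v)$. When $\alpha_i \geq 0$, $\alpha_i w_{*,i} \leq \alpha_i z_i^{\circ}(x;v) = \sup_{h_i \in \partial z_i(x)}\langle \alpha_i h_i, v\rangle$; when $\alpha_i < 0$, $\alpha_i w_{*,i} \leq -\alpha_i z_i^{\circ}(x;-v) = \sup_{h_i \in \partial z_i(x)}\langle \alpha_i h_i, v\rangle$ as well, using $z_i^{\circ}(x;-v) = \sup_{h_i}\langle h_i, -v\rangle$ and $-\alpha_i > 0$. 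Summing over $i$,
\[
(f \circ z)^{\circ}(x;v) \,=\, \langle \alpha, w_*\rangle \,\leq\, \sup_{h_1 \in \partial z_1(x),\ldots, h_n \in \partial z_n(x)}\Big\langle \sum_{i=1}^n \alpha_i h_i,\, v\Big\rangle \,\leq\, \sigma_K(v),
\]
which closes the argument. The vector-valued extension (Theorem 2.3.10) follows by applying the scalar case coordinate by coordinate in the output of $f$ and then re-convexifying; no new ideas are needed.
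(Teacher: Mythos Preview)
The paper does not give its own proof of this theorem; it simply cites Theorems~2.3.9 and~2.3.10 of Clarke (1990) and uses the result as a black box. Your argument is correct and is essentially Clarke's original proof: reduce the set inclusion to an inequality of support functions via $g^{\circ}(x;v)=\sigma_{\partial g(x)}(v)$, linearize the outer function along a segment using Lebourg's mean value theorem, pass to a subsequence using compactness and upper semicontinuity of $\partial f$, and then bound $\langle\alpha,w_*\rangle$ coordinate-wise with the two-sided estimate $-z_i^{\circ}(x;-v)\le w_{*,i}\le z_i^{\circ}(x;v)$ to handle the sign of $\alpha_i$. One minor remark: your equality $(f\circ z)^{\circ}(x;v)=\langle\alpha,w_*\rangle$ should be read as holding along the particular subsequence you extracted (so that the limsup is attained there), which is how you are using it; and compactness of $K$ follows because it is the convex hull of the continuous image of the compact product $\partial f(z(x))\times\prod_i\partial z_i(x)$.
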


Recall that: 
\[
    \Loss(\param) = \frac{1}{n} \sum_{i=1}^n e^{- y_i f(\param; \xB_i)}. 
\]
We have the following corollary which characterizes the min-norm subgradient of the exponential loss.

\begin{corollary}
[Subgradient of the exponential loss]
\label{cor:subgrad-exp-loss}
For the exponential loss $\Loss(\param)$, there exists $\hB_i \in \partial f(\param; \xB_i)$ such that
\[
\bar \partial \Loss(\param) = \frac{1}{n} \sum_{i=1}^n -  e^{- y_i f(\param; \xB_i)} y_i\hB_i.
\]
\end{corollary}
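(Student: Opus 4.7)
\textbf{Proof proposal for Corollary \ref{cor:subgrad-exp-loss}.}

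The plan is a direct application of the Clarke chain rule (Theorem \ref{thm:Clark chain rule}) combined with the Minkowski-sum structure of the Clarke subdifferential, exploiting the fact that the outer function $\ell(t) = e^{-t}$ is $C^1$. Decompose the loss as
\[
\Loss(\param) \;=\; \frac{1}{n} \sum_{i=1}^n g_i(\param), \qquad g_i(\param) \;:=\; \ell\big(y_i f(\param; \xB_i)\big).
\]
Each $g_i$ is locally Lipschitz, being the composition of the smooth outer map $t \mapsto e^{-t}$, the linear scaling $t \mapsto y_i t$, and the locally Lipschitz network output $f(\cdot;\xB_i)$. Applying Theorem \ref{thm:Clark chain rule} to $g_i$, and using that a $C^1$ outer function $\ell$ with derivative $\ell'(t) = -e^{-t}$ gives a singleton Clarke subdifferential, yields
\[
\partial g_i(\param) \;\subseteq\; \big(-y_i\, e^{-y_i f(\param;\xB_i)}\big)\, \partial f(\param;\xB_i).
\]

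Next, the standard Clarke sum rule (Proposition 2.3.3 in \citet{clarke1990optimization}) applied to the finite sum $\Loss = \tfrac{1}{n}\sum_i g_i$ gives
\[
\partial \Loss(\param) \;\subseteq\; \frac{1}{n}\sum_{i=1}^n \big(-y_i\, e^{-y_i f(\param;\xB_i)}\big)\, \partial f(\param;\xB_i),
\]
where the right-hand side is the Minkowski sum of the scaled convex compact sets. By definition of Minkowski sum, every vector $\vB$ in this set can be written as $\vB = \tfrac{1}{n} \sum_{i=1}^n -y_i e^{-y_i f(\param;\xB_i)} \hB_i$ for some selection $\hB_i \in \partial f(\param;\xB_i)$. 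Since $\bar\partial \Loss(\param) \in \partial \Loss(\param)$ by definition of the minimum-norm subgradient in \eqref{eq:min-norm-sub-grad}, it lies in the Minkowski sum and therefore admits such a representation, which is exactly the claim.

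I do not expect a real obstacle here: the argument is a bookkeeping of two classical Clarke-calculus facts (chain rule for a $C^1$ outer function and sum rule for finite sums). The only subtlety worth flagging is that we are not asserting any measurable or continuous selection of $\hB_i$ as a function of $\param$; we only need the pointwise existence of one representation of the single vector $\bar\partial \Loss(\param)$, which is immediate from the definition of Minkowski sum and the convex compactness of each $\partial f(\param;\xB_i)$ guaranteed by Corollary \ref{cor:Clarke Jacobian of locally Lipschitz functions}.
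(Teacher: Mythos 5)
Your proof is correct and follows essentially the same route as the paper's, which simply cites Theorem~\ref{thm:Clark chain rule} and the definition of $\Loss$. The only cosmetic difference is that you decompose the argument into a sum rule followed by a scalar chain rule, whereas the paper applies the multivariate form of the Clarke chain rule directly to the outer map $(t_1,\dots,t_n)\mapsto\frac{1}{n}\sum_i e^{-t_i}$; since the outer function is $C^1$ so that $\alpha$ is a singleton and each $\partial f(\param;\xB_i)$ is convex and compact (so the convex hull in Theorem~\ref{thm:Clark chain rule} is already the Minkowski sum), the two bookkeepings land on exactly the same inclusion, and your final step — that $\bar\partial\Loss(\param)$ lies in $\partial\Loss(\param)$ and hence admits a Minkowski-sum representation — is the correct way to close the argument.
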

\begin{proof}[Proof of \Cref{cor:subgrad-exp-loss}]
    This is a direct consequence of Theorem \ref{thm:Clark chain rule} and the definition of the exponential loss.
\end{proof}

Similarly, we can characterize the subgradient of the composition of block functions. 
\begin{corollary}
[Subgradient of the composition of block functions]
\label{cor:subgrad-comp-block}
Given two locally Lipschitz block functions 
$s_1(\param_1; \xB): \Rbb^{d_1} \times \Rbb^{d_2} \to \Rbb^{d_3}$ and $s_2(\param_2; \xB): \Rbb^{d_3} \times \Rbb^{d_4} \to \Rbb^{d_2}$, 
we have 
\begin{align*}
\partial_{\param} s_1(\param_1, s_2(\param_2;\xB)) 
&\subset \text{conv} \big\{ (\alphaB_1, \alphaB_2 \cdot\hB_1) : (\alphaB_1, \alphaB_2) \in \partial_{\param_1, \xB} s_1 (\param_1;\xB), \hB_1 \in \partial_{\param_2} s_2(\param_2;\xB)  \big\},\\   
\partial_{\xB} s_1(\param_1, s_2(\param_2;\xB)) 
&\subset \text{conv} \big\{ \alphaB_2 \cdot\hB_2 :  \alphaB_2 \in \partial_{\xB} s_1 (\param_1;\xB), \hB_2 \in \partial_{\xB} s_2(\param_2;\xB)  \big\}.  
\end{align*}
\end{corollary}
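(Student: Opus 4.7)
The plan is to deduce both inclusions from the scalar Clarke chain rule (Theorem \ref{thm:Clark chain rule}) applied componentwise. I fix an arbitrary output coordinate $j \in [d_3]$ and view the composite $F^{(j)}(\param_1, \param_2, \xB) := s_1^{(j)}(\param_1, s_2(\param_2; \xB))$ as a scalar locally Lipschitz function of $(\param_1, \param_2, \xB)$, written as $f \circ z$ with outer scalar map $f = s_1^{(j)}$ and with $z$ consisting of the $d_1$ coordinate projections $(\param_1)_i$ together with the $d_2$ scalar components $s_2^{(k)}(\param_2; \xB)$. Because the projections $(\param_1)_i$ are smooth and depend only on $\param_1$, while each $s_2^{(k)}$ does not depend on $\param_1$, Theorem \ref{thm:Clark chain rule} immediately produces the componentwise form of both inclusions.

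Expanding the resulting convex combination for $F^{(j)}$ and regrouping by $z$-components, the $\param_1$-block of every element of $\partial_{(\param_1,\param_2)} F^{(j)}$ has the form $\alphaB_1^{(j)}$, and the $\param_2$-block has the form $\sum_{k=1}^{d_2}(\alphaB_2^{(j)})_k \, \hB_1^{(k)} = (\alphaB_2^{(j)})^{\!\top} H$, where $(\alphaB_1^{(j)}, \alphaB_2^{(j)}) \in \partial s_1^{(j)}(\param_1; \,\cdot\,)$ evaluated at $s_2(\param_2; \xB)$ and $H$ is the row stack of the $\hB_1^{(k)} \in \partial_{\param_2} s_2^{(k)}(\param_2; \xB)$. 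Assembling these over $j \in [d_3]$ and taking convex hulls yields the stated block inclusion, once we identify $\partial_{\param_2} s_2$ with (or as a subset of) the admissible row stacks $H$. The second inclusion follows verbatim after freezing $(\param_1,\param_2)$ and varying only $\xB$; the $\alphaB_1$ contribution drops out because $\xB$ enters the composition only through $s_2$, so only the matrix–vector product $\alphaB_2 \cdot \hB_2$ survives.

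The main subtlety is reconciling the componentwise subgradients $\hB_1^{(k)} \in \partial s_2^{(k)}$ naturally produced by the scalar Clarke chain rule with the single block element $\hB_1 \in \partial_{\param_2} s_2$ appearing in the statement, since the generalized Jacobian $\partial s_2$ is in general only a subset of $\prod_k \partial s_2^{(k)}$. The cleanest remedy is to invoke the vector-valued Clarke chain rule, namely Theorem 2.6.6 of \citet{clarke1990optimization}, which directly yields the matrix-product form $\alphaB \cdot \hB$ for a single generalized Jacobian element $\hB$; our local-Lipschitz and definability assumptions on $s_1$ and $s_2$ make this result applicable. Alternatively, the weaker componentwise version (with $\prod_k \partial s_2^{(k)}$ in place of $\partial s_2$) already follows from the argument above and is sufficient for the downstream uses in the paper, since those applications only require a convex envelope of products of subgradients.
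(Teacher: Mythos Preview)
Your proposal is correct and follows the same route as the paper, which simply records that the result ``follows from Theorem~\ref{thm:Clark chain rule} and the definition of the block functions.'' You provide more detail than the paper does, and you correctly flag the subtlety that the scalar chain rule only yields componentwise elements $\hB_1^{(k)} \in \partial s_2^{(k)}$ rather than a single generalized Jacobian element $\hB_1 \in \partial_{\param_2} s_2$; the paper's one-line proof does not address this, and your suggested fix via the vector-valued chain rule (Clarke, Theorem~2.6.6) is the right way to close that gap.
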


\begin{proof}[Proof of \Cref{cor:subgrad-comp-block}]
    It follows from Theorem \ref{thm:Clark chain rule} and the definition of the block functions.
\end{proof}
Here we use $\partial_{\param_1, \xB} s_1 (\param_1;\xB)$ to denote the corresponding Jacobian. Note that $\alphaB_1 \in \Rbb^{d_3 \times d_1}, \alphaB_2 \in \Rbb^{d_3 \times d_2}, \hB_1 \in \Rbb^{d_2 \times d_3}, \hB_2 \in \Rbb^{d_2 \times d_4}$. Once we have this, to verify some properties of Jacobian of $s_1\circ s_2$, we can focus on the Jacobian of $s_1$ and $s_2$. 

The o-minimal structure eliminates many bad geometry, allowing us to focus on the good part of the optimization landscape. 



\section{Proof Overview}\label{sec:proof_overview}
In the main text, we present several results on the implicit bias of GF/GD for near-homogeneous networks. Below we briefly describe the approaches we use to prove these results, with actual proofs deferred to the subsequent appendices.

\paragraph{Margin Improvement and Convergence Rates.}
We first sketch the proof of \Cref{thm: Margin improving and convergence} (GF), and then highlight the major technical innovations in the proof of \Cref{thm: Margin improving and convergence-gd} (GD), as compared to the case of GF. The key ingredient for proving \Cref{thm: Margin improving and convergence} is the following lemma, which establishes the monotonicity of the modified margin under the strong separability condition.

\begin{lemma}[Restatement of \Cref{thm:gamma-a-increase}]\label{lem:restate_gamma_gf_increase}
    Denote $\Loss_t = \Loss (\param_t)$ and $\rho_t = \| \param_t \|$. Under \Cref{asp:nearhomo,asp:initial-cond-gf}, we have  
    $\Loss_t < e^{-\homop_a(\rho_t)}/n$ for all $t\ge s$, and
    \begin{equation}\label{eq:restate_gamma_gf_lowerbd}
        \frac{\mathrm{d} \log \GFmargin \big(\param_t\big)}{\mathrm{d} t} > \frac{\| \bar \partial \Loss_t \|^2 \rho_t^2 - \langle  \bar \partial \Loss_t , \param_t \rangle^2  }{\rho_t^2\Loss_t \big (\LinkFun(\Loss_t) - \homop_a(\rho_t)\big)} \ge 0.
    \end{equation}
\end{lemma}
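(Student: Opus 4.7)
The plan is to differentiate $\log\GFmargin(\param_t)$ directly, reduce the differential inequality to a single scalar lower bound on $-\langle\param_t,\bar\partial\Loss_t\rangle$, and discharge that via near-homogeneity together with an entropy identity for the soft-max loss. The bootstrap $\Loss_t < e^{-\homop_a(\rho_t)}/n$ is handled simultaneously because this condition is equivalent to positivity of the quantity $A_t := \phi(\Loss_t) - \homop_a(\rho_t)$, which equals $\rho_t^M\GFmargin(\param_t)$. Define $T^\star := \sup\{t \ge s : A_\tau > 0 \text{ for all } \tau \in [s,t]\}$; the initial positivity $A_s > 0$ is exactly \Cref{asp:initial-cond-gf}, and the goal is to rule out $T^\star < \infty$.

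On $[s, T^\star)$, the Clarke chain rule for definable functions (\Cref{lem:chain-rule-clark}) gives, for almost every $t$, $\dot\Loss_t = -\|\bar\partial\Loss_t\|^2$ and $\dot\rho_t = -\langle\param_t,\bar\partial\Loss_t\rangle/\rho_t$, where $\bar\partial\Loss_t := -\dot\param_t \in \partial\Loss(\param_t)$. Writing $\log\GFmargin_t = \log A_t - M\log\rho_t$ and expanding, a routine computation yields
\[
\frac{\mathrm{d}\log\GFmargin_t}{\mathrm{d} t} = \frac{\|\bar\partial\Loss_t\|^2}{\Loss_t\, A_t} + \frac{\langle\param_t,\bar\partial\Loss_t\rangle\bigl(\rho_t\homop_a'(\rho_t) + M A_t\bigr)}{\rho_t^2\, A_t}.
\]
Subtracting the target right-hand side of \eqref{eq:restate_gamma_gf_lowerbd} and collecting terms, the claim reduces to verifying the single scalar inequality
\[
-\langle\param_t,\bar\partial\Loss_t\rangle \;\ge\; \Loss_t\bigl(M\phi(\Loss_t) - M\homop_a(\rho_t) + \rho_t\homop_a'(\rho_t)\bigr).
\]

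To verify this bound, set $q_i := \exp(-y_i f(\param_t;\xB_i))/n$ and $p_i := q_i/\Loss_t$, so that $p_t$ is a probability distribution. Using the identity $y_i f(\param_t;\xB_i) = -\log(n q_i)$, a direct computation gives
\[
\sum_{i=1}^n q_i\, y_i f(\param_t;\xB_i) = \Loss_t\phi(\Loss_t) + \Loss_t H(p_t),
\]
where $H(p_t) := -\sum_i p_i\log p_i \ge 0$ is the Shannon entropy. Applying \Cref{def:nearhomo}(A) to each $f(\cdot;\xB_i)$ and each subgradient appearing in the decomposition of $\bar\partial\Loss_t$ then yields
\[
-\langle\param_t,\bar\partial\Loss_t\rangle \;\ge\; M\Loss_t\phi(\Loss_t) + M\Loss_t H(p_t) - \Loss_t\homop'(\rho_t).
\]
Matching this against the required bound reduces the problem to the purely algebraic inequality $M\homop_a(x) - x\homop_a'(x) \ge \homop'(x)$ for all $x \ge 0$. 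Term-by-term inspection shows that the construction \eqref{eq:def-pa} in fact gives the identity $M\homop_a(x) - x\homop_a'(x) = \homop'(x) + a_1/(2M-1)$, with the coefficient $1/(M-1/2)$ on the constant term of $\homop_a$ tuned precisely to produce this nonnegative slack. Combined with $M\Loss_t H(p_t) \ge 0$, this yields \eqref{eq:restate_gamma_gf_lowerbd} with strict inequality (whenever $n \ge 2$ or $a_1 > 0$).

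Finally, strict positivity of $\mathrm{d}\log\GFmargin_t/\mathrm{d} t$ on $[s,T^\star)$ forces $\GFmargin_t \ge \GFmargin_s > 0$, so $A_t = \rho_t^M\GFmargin_t$ stays bounded away from $0$ on any compact subinterval; continuity at $t = T^\star$ then rules out $T^\star < \infty$, closing the bootstrap and yielding the first assertion for all $t \ge s$. The principal obstacle is the joint design of $\homop_a$: on the one hand it must grow strictly slower than $\phi(\Loss_t)$ so that $\GFmargin$ asymptotically approximates $\Normalmargin$ in \Cref{thm: Margin improving and convergence}, and on the other hand it must satisfy the algebraic identity $M\homop_a - x\homop_a' \ge \homop'$ exploited above; the particular choice in \eqref{eq:def-pa}, with its somewhat unusual factor $1/(M-1/2)$ on the constant term, is essentially the tightest resolution of both demands. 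A secondary technical point is the rigorous application of the Clarke chain rule along GF, which is standard for locally Lipschitz definable functions \citep{davis2020stochastic,ji2020directional}.
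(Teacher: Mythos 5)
Your proposal is correct and follows essentially the same route as the paper's proof. You differentiate $\log\GFmargin_t$ directly, reduce to a single scalar lower bound on $v_t := -\langle\param_t,\bar\partial\Loss_t\rangle$, verify it using near-homogeneity (A1) applied to the subgradients in $\bar\partial\Loss_t$ together with the algebraic property $M\homop_a(x) - x\homop_a'(x) \ge \homop'(x)$ (\Cref{lem:property-pa}), and close the bootstrap via continuity of $A_t$. The paper organizes the same computation slightly differently (it first establishes the decomposition of $-\dot\Loss_t$ in \Cref{lem:GF-risk-decrease} and the lower bound on $\frac{1}{2}\frac{\mathrm{d}\rho_t^2}{\mathrm{d} t}$ in \Cref{lem:NH-rho-increase}, then combines), and your entropy identity $\frac{1}{n}\sum_i e^{-\bar f_i}\bar f_i = \Loss_t\phi(\Loss_t) + \Loss_t H(p_t)$ is a clean repackaging of the paper's use of $\bar f_i \ge \phi(\Loss_t)$ (Jensen) — the discarded entropy term corresponds exactly to the slack the paper throws away.
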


The proof of \Cref{lem:restate_gamma_gf_increase} is similar to the proof of \citet[Lemma 5.1]{lyu2020gradient}. Since $\GFmargin \big(\param_t\big)$ only depends on $\Loss_t$ and $\rho_t$, its growth rate can be attributed to two quantities: $\mathrm{d} \Loss_t / \mathrm{d} t$ and $\mathrm{d} \rho_t / \mathrm{d} t$. We use the same argument as that in \citet[Proof sketch of Lemma 5.1]{lyu2020gradient} to estimate each of these two quantities, which finally leads to the lower bound \eqref{eq:restate_gamma_gf_lowerbd} on $\mathrm{d} \GFmargin ( \param_t ) / \mathrm{d} t$. Further since
\begin{equation*}
    \LinkFun(\Loss_t) - \homop_a(\rho_t) > 0 \Longleftrightarrow \Loss_t < \frac{1}{n} e^{- \homop_a (\rho_t)},
\end{equation*}
we know that the strong separability condition is necessary for the lower bound \eqref{eq:restate_gamma_gf_lowerbd} at $t = s$, and can be established for $t \ge s$ using continuous induction. It is noteworthy that while we follow a similar proof scheme as \cite{lyu2020gradient}, the design of the modified margin $\GFmargin$ is completely novel and highly non-trivial.

From this lower bound, the ``margin improvement" part of \Cref{thm: Margin improving and convergence} directly follows. For the ``convergence rates" part, we use the monotonicity of modified margin to upper and lower bound $- \mathrm{d} \Loss_t / \mathrm{d} t$, thus establishing convergence rates for $\Loss_t$. The claim for $\rho_t$ can be proved similarly.

For the proof of \Cref{thm: Margin improving and convergence-gd}, we need to establish a lower bound on $\log \GDmargin ( \param_{t+1} ) - \log \GDmargin ( \param_{t} )$ similar to \Cref{eq:restate_gamma_gf_lowerbd}. Due to the discrete nature of GD, the Hessian of $\log \GDmargin$ should also be taken into account when dealing with $\log \GDmargin ( \param_{t} )$ as a function of $t$. To address this challenge, we analyze a modified loss $\ModifiedLoss (\param) = \exp(\homop_a(\| \param \|)) \Loss(\param)$ that is closely related to $\log \GDmargin$. We establish tight upper bounds on the Hessian of $\ModifiedLoss (\param_t)$ in terms of $\Loss_t$ and $\rho_t$, thus leading to a tight lower bound on $\log \GDmargin ( \param_{t+1} ) - \log \GDmargin ( \param_{t} )$. Notably, our lower bound allows for arbitrarily large step size $\eta$, as long as $\ModifiedLoss (\param_s) = O(1 / \eta)$, which is guaranteed by \Cref{asp:initial-cond-gd}. In contrast, \citet{lyu2020gradient} assumes that the step size is upper bounded by a function of the initial margin, preventing it to be large. The proof of other parts in  \Cref{thm: Margin improving and convergence-gd} is completely analogous to the case of GF.

\paragraph{Convergence to the KKT Direction.}
The proofs of \Cref{thm: directional convergence,thm: directional convergence-gd} largely rely on the techniques developed in \citet{ji2020directional}. As before, we will first sketch the proof for GF, and then explain how to adapt it to establish KKT convergence for GD. For GF, define the arc length swept by the direction of $\param_t$:
\begin{equation*}
	\zeta_t = \, \int_{s}^{t} \left\| \frac{\mathrm{d} \tilde{\param}_u}{\mathrm{d} u} \right\| \mathrm{d} u.
\end{equation*}
Similar to \citet{ji2020directional}, we construct a desingularizing function $\Psi$ that controls the growth rate of $\zeta_t$ using that of $\Psi (\gamma_* - \GFmargin (\param_t))$, where $\gamma_* \coloneqq \lim_{t \to \infty} \GFmargin (\param_t)$:
\begin{lemma}[Restatement of \Cref{lem: Existence of desingularizing function}]
\label{lem: restate of Existence of desingularizing function}
There exist $R>0, \nu>0$ and a definable desingularizing function $\Psi$ on $[0, \nu)$, such that for a.e. large enough $t$ with $\left\|\param_t\right\|>R$ and $\GFmargin(\param_t)>\gamma_*-\nu$, it holds that
\begin{equation}\label{eq:desingularizing_inequality}
	\frac{\mathrm{d} \zeta_t}{\mathrm{~d} t} \leq-c \frac{\mathrm{d} \Psi\left(\gamma_*-\GFmargin(\param_t)\right)}{\mathrm{d} t}
\end{equation}
for some constant $c>0$.
\end{lemma}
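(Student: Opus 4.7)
The plan is to follow the Kurdyka--Łojasiewicz (KL) argument of \citet{ji2020directional} while controlling the near-homogeneous correction terms using \Cref{asp:nearhomo} and \Cref{thm:homogenization}. Throughout, let $\boldsymbol{g}_t \in \partial \Loss(\param_t)$ be the subgradient driving the flow and write $\rho_t = \|\param_t\|$, $\tilde\param_t = \param_t/\rho_t$, $\Loss_t = \Loss(\param_t)$.

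The first step rewrites $\|\dot{\tilde\param}_t\|$ in terms of $\boldsymbol{g}_t$. Differentiating $\tilde\param_t = \param_t/\rho_t$ and using $\dot\param_t \in -\partial \Loss(\param_t)$ yields
\[
    \rho_t^4 \|\dot{\tilde\param}_t\|^2 \;=\; \|\boldsymbol{g}_t\|^2 \rho_t^2 - \langle \boldsymbol{g}_t, \param_t\rangle^2,
\]
which is exactly the numerator in the lower bound of \Cref{lem:restate_gamma_gf_increase}. Combining the two gives the clean estimate
\[
    \rho_t^2 \|\dot{\tilde\param}_t\|^2 \;\le\; \Loss_t\bigl(\phi(\Loss_t)-\homop_a(\rho_t)\bigr)\cdot \frac{\mathrm{d}\log\GFmargin(\param_t)}{\mathrm{d} t},
\]
reducing the problem to a lower bound on the transverse component of $\boldsymbol{g}_t$ in terms of the margin gap $\gamma_*-\GFmargin(\param_t)$.

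The second step constructs $\Psi$ by applying the definable KL inequality to the \emph{homogenized} smoothed margin on the unit sphere. By \Cref{thm:homogenization}, each $\homoPredictor(\cdot;\xB_i)$ is definable, $M$-homogeneous, and satisfies $|f(\param;\xB_i)-\homoPredictor(\param;\xB_i)| \le \homop_a(\|\param\|)$. Define the definable spherical potential
\[
    F(\tilde\param) \;:=\; -\log\Bigl(\tfrac{1}{n}\sum_{i=1}^n\exp\bigl(-y_i \homoPredictor(\tilde\param;\xB_i)\bigr)\Bigr),
\]
which approximates $\GFmargin(\param_t)$ along the trajectory with error $\Ocal(\homop_a(\rho_t)/\rho_t^M) = o(1)$. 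The definable KL inequality applied to $F$ near the limit direction $\param_*$ produces $\nu>0$ and a definable, strictly increasing, concave $\Psi:[0,\nu)\to\Rbb_{\ge 0}$ with $\Psi(0)=0$ such that
\[
    \Psi'\bigl(\gamma_* - F(\tilde\param)\bigr)\cdot \mathrm{dist}\bigl(0,\partial^{\mathrm{sph}} F(\tilde\param)\bigr) \;\ge\; 1
\]
for all $\tilde\param$ on the unit sphere with $F(\tilde\param) > \gamma_* - \nu$, where $\partial^{\mathrm{sph}}$ denotes the Clarke subdifferential restricted to the tangent space of the sphere. A direct softmax-weight computation then shows that, once $\rho_t > R$ is sufficiently large, the transverse component of $\boldsymbol{g}_t$ equals $-\Loss_t \rho_t^{M-1} \cdot \partial^{\mathrm{sph}} F(\tilde\param_t)$ up to a multiplicative factor $1+o(1)$.

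Finally, substituting the KL lower bound into the estimate of Step~1 and applying a Cauchy--Schwarz argument analogous to \citet[Lemma~C.11]{ji2020directional} yields
\[
    \|\dot{\tilde\param}_t\| \;\le\; -c\,\Psi'\bigl(\gamma_*-\GFmargin(\param_t)\bigr)\cdot\frac{\mathrm{d}\GFmargin(\param_t)}{\mathrm{d} t} \;=\; -c\,\frac{\mathrm{d}\Psi\bigl(\gamma_*-\GFmargin(\param_t)\bigr)}{\mathrm{d} t},
\]
which is the desired inequality. The principal technical obstacle is the subgradient comparison at the end of Step~2: in the homogeneous case of \citet{ji2020directional} one has $F = \GFmargin$ and the Clarke subdifferentials agree exactly, whereas here both the margin gap $\gamma_*-F(\tilde\param_t)$ and the transverse subgradient at $\tilde\param_t$ must be sandwiched against their homogenized counterparts with vanishing relative error. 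Making this comparison quantitative once $\rho_t > R$ (guaranteed eventually by the growth $\rho_t \eqsim (\log t)^{1/M}$ in \Cref{thm: Margin improving and convergence} under the strong separability condition) is where \Cref{asp:nearhomo} enters in an essential way.
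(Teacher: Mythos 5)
Your Step~1 is correct and matches the identity underlying the paper's argument (it is the spherical-projection calculation in \Cref{lem:decomp-radial-spherical} combined with the lower bound from \Cref{thm:gamma-a-increase}). The gap is in Step~2: you apply the definable KL inequality to the homogenized spherical potential $F(\tilde\param)$ built from $\homoPredictor$, and then transfer it to the actual flow by asserting that the transverse component of $\boldsymbol{g}_t \in \partial\Loss(\param_t)$ equals $-\Loss_t\rho_t^{M-1}\,\partial^{\mathrm{sph}} F(\tilde\param_t)$ up to a factor $1+o(1)$, attributing this to \Cref{asp:nearhomo}. That assumption, however, only controls the \emph{radial} derivative deviation $|\langle \boldsymbol{h},\param\rangle - Mf(\param)|$ and the norm $\|\boldsymbol{h}\|$; it gives no control over how the \emph{transverse} part of a subgradient of $f$ compares to that of $\homoPredictor$. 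The bound one would need is precisely $|\nabla f(\param;\xB) - (\nabla f)_\homo(\param;\xB)| \le \homor(\|\param\|)$ with $\homor(x) = o(x^{M-1})$, which is \Cref{asp:strongerhomo}; but that assumption is deliberately \emph{not} part of the hypotheses of \Cref{thm: directional convergence} (it is introduced only later, for KKT convergence in \Cref{thm: KKT convergence}). Without it, the transverse part of $\nabla f - \rho^{M-1}\nabla_{\mathrm{sph}}\homoPredictor(\tilde\param)$ can be of the same order $\rho^{M-1}$ as the leading term, and near critical directions of $F$ — which is exactly where a KL inequality has to do its work — the claimed relative error need not vanish. (A secondary issue is that your construction also tacitly assumes $\homoPredictor$ is definable so that the KL lemma applies to $F$; the paper never needs this because it never takes a limit inside the KL argument.)

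The paper's proof sidesteps this entirely: it applies \Cref{lem:KL1} and \Cref{lem:KL2} directly to the definable function $\gamma_* - \GFmargin(\param)$ on $\{\|\param\|>1\}$, splitting into two cases by whether the radial or the perpendicular component of $\tilde\partial\GFmargin(\param_t)$ dominates (with threshold $\rho_t^{1/8}$), and taking $\Psi = \max\{\Psi_1,\Psi_2\}$. The non-homogeneity is absorbed through the intermediate quantity $a_t = \phi(\Loss_t) - \homop_a(\rho_t)$ and the ratio estimate $\|\tilde\partial_r a_t\| \ge \GFmargin(\param_s)\rho_t^{M+1/2}\|\tilde\partial_r \GFmargin(\param_t)\|$ from \Cref{lem: Inequalities between a and gamma}, all of which require only \Cref{asp:nearhomo}. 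No gradient comparison with $\homoPredictor$ is ever made. To repair your route you would have to either add \Cref{asp:strongerhomo} to the hypotheses of \Cref{thm: directional convergence} (strictly weakening the result), or reformulate the KL step to avoid the homogenized potential — which essentially leads back to the paper's case-split argument.
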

The proof of the above lemma is similar to that of \citet[Lemma 3.1]{ji2020directional}. Integrating both sides of \Cref{eq:desingularizing_inequality}, we deduce that $\lim_{t \to \infty} \zeta_t < \infty$. Therefore, $\tilde{\param}_t$ must converge to some $\param_*$. This establishes directional convergence of GF path.

To go further and show that $\param_*$ satisfies the KKT conditions \Cref{eq: KKT}, the key ingredient is to show the asymptotic alignment between $\param_t$ and $\hB_M(\param_t)$ for subsequence of $t$, i.e., 
\[
\lim_{t_m\to \infty}\beta(t_m) = \lim_{t_m\to \infty} \frac{\langle\param_{t_m}, \hB_M(\param_{t_m}) \rangle}{ \|\param_{t_m}\| \cdot \|\hB_M(\param_{t_m})\|} = 1,
\]
where we define
\begin{equation*}
    \hB_M(\param_t) \coloneqq  \frac{1}{n}\sum_{i=1}^n e^{-y_i f(\param_t;\xB_i)} y_i\nabla \homoPredictor(\param_t;\xB_i)
\end{equation*}
as a proxy of $\grad f (\param_t; \xB)$.
We establish the following bound by looking further into the first inequality in \cref{eq:restate_gamma_gf_lowerbd}. 
\begin{lemma}[Restatement of \Cref{cor: beta bound}] For any $t_2>t_1$ large enough, there exists $t_*\in (t_1,t_2)$ such that 
\[
    \frac{1 - p_1(t_*)}{\big(\beta(t_*) + p_2(t_*) \big)^2}-1 \le \frac{1}{M} \cdot \frac{\log \GFmargin(\param_{t_2}) - \log\GFmargin(\param_{t_1})}{\log \|\param_{t_2}\| - \log \|\param_{t_1}\|}, 
\]
where $p_1(t), p_2(t) \to 0$ as $t\to \infty$. 
\end{lemma}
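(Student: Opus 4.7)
The plan is to combine the pointwise lower bound on $\mathrm{d}\log\GFmargin/\mathrm{d} t$ supplied by \Cref{lem:restate_gamma_gf_increase} with a Cauchy–type mean value argument on the ratio, and then recast the resulting estimate in terms of $\beta(t)$ using the near-homogeneity of $f$ and of its gradient. Set $F(t):=\log\GFmargin(\param_t)$ and $G(t):=\log\|\param_t\|$. By \Cref{thm: Margin improving and convergence}, $F,G$ are absolutely continuous on $(s,\infty)$ and $G'(t)=-\langle\param_t,\bar\partial\Loss_t\rangle/\rho_t^2>0$ almost everywhere (for large $t$). Writing
\[
\frac{F(t_2)-F(t_1)}{G(t_2)-G(t_1)}=\frac{\int_{t_1}^{t_2}F'(t)\,\mathrm{d} t}{\int_{t_1}^{t_2}G'(t)\,\mathrm{d} t}
\]
expresses the right-hand side as a $G'(t)\,\mathrm{d} t$-weighted average of $F'(t)/G'(t)$, so some $t_*\in(t_1,t_2)$ (at which both derivatives exist) satisfies $F'(t_*)/G'(t_*)\le(F(t_2)-F(t_1))/(G(t_2)-G(t_1))$. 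It will therefore suffice to establish the pointwise bound $(1-p_1(t))/(\beta(t)+p_2(t))^2-1\le(1/M)\,F'(t)/G'(t)$ for a.e.\ sufficiently large $t$.

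Abbreviate $\alpha_t:=-\langle\param_t,\bar\partial\Loss_t\rangle$, $\sigma_t:=\|\bar\partial\Loss_t\|$, $\cos\theta_t:=\alpha_t/(\sigma_t\rho_t)$, and $A_t:=\phi(\Loss_t)-\homop_a(\rho_t)$. Using $G'(t)=\alpha_t/\rho_t^2$, the lower bound of \Cref{lem:restate_gamma_gf_increase}, and the Cauchy–Schwarz identity $\sigma_t^2\rho_t^2-\alpha_t^2=\alpha_t^2\bigl(1/\cos^2\theta_t-1\bigr)$, I obtain
\[
\frac{F'(t)}{G'(t)}\ge \left(\frac{1}{\cos^2\theta_t}-1\right)\frac{\alpha_t}{\Loss_t\,A_t}.
\]
Writing $\alpha_t=\frac{1}{n}\sum_i y_i e^{-y_i f_i}\langle\param_t,\nabla f_i\rangle$ and invoking the near-homogeneity bound $|\langle\param_t,\nabla f_i\rangle-Mf_i|\le\homop'(\rho_t)$ from \Cref{def:nearhomo}, together with the softmin weights $q_i:=e^{-y_if_i}/(n\Loss_t)$, gives $\alpha_t\ge\Loss_t\bigl(M\sum_i q_i y_if_i-\homop'(\rho_t)\bigr)$. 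Since $\sum_i q_i y_i f_i\ge\min_i y_i f_i\ge\phi(\Loss_t)$, this yields $\alpha_t/(\Loss_t A_t)\ge M(1-p_1(t))$ with
\[
p_1(t)=\Ocal\!\left(\frac{\homop'(\rho_t)+\homop_a(\rho_t)}{\phi(\Loss_t)}\right),
\]
which tends to $0$ because $\phi(\Loss_t)\eqsim\rho_t^M$ by \Cref{thm: Margin improving and convergence} while $\homop',\homop_a$ have degree at most $M-1$.

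Finally, \Cref{asp:strongerhomo} gives $\|\nabla f(\param_t;\xB_i)-\nabla\homoPredictor(\param_t;\xB_i)\|\le \homor(\rho_t)$ with $\homor(\rho)/\rho^{M-1}\to 0$, so $\|\bar\partial\Loss_t+\hB_M(\param_t)\|\le\Loss_t\homor(\rho_t)$. Combined with the matching lower bound $\|\hB_M(\param_t)\|\gtrsim\Loss_t\rho_t^{M-1}$ (obtained from the asymptotic positivity of the normalized margin together with the $M$-homogeneity of $\homoPredictor$), standard perturbation estimates yield $|\cos\theta_t-\beta(t)|\le p_2(t)$ with $p_2(t)\to 0$, whence $1/\cos^2\theta_t-1\ge 1/(\beta(t)+p_2(t))^2-1$. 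Multiplying this with the bound of the previous paragraph and dividing by $M$ gives
\[
\frac{1}{M}\frac{F'(t)}{G'(t)}\ge \frac{1-p_1(t)}{(\beta(t)+p_2(t))^2}-(1-p_1(t))\ge \frac{1-p_1(t)}{(\beta(t)+p_2(t))^2}-1,
\]
and evaluating at $t_*$ produces the claim. The principal obstacle is the last step: converting the additive gradient-approximation error from \Cref{asp:strongerhomo} into a relative error on $\cos\theta_t$ requires a sharp $\Theta(\Loss_t\rho_t^{M-1})$ lower bound on $\|\hB_M(\param_t)\|$, which in turn depends on the asymptotic non-degeneracy of the homogenized margin; obtaining this in a form uniform in $t$ is the delicate technical point.
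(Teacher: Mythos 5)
Your proof is correct and follows essentially the same approach as the paper. The paper establishes the integral inequality $\int_{t_1}^{t_2}\big[(1-p_1)/(\beta+p_2)^2-1\big]\,\mathrm{d}\log\rho_\tau \le \tfrac{1}{M}\log\big(\GFmargin(\param_{t_2})/\GFmargin(\param_{t_1})\big)$ first (\Cref{lem: Characterization of beta}) and then extracts $t_*$ by contradiction, whereas you derive the equivalent pointwise bound on $F'(t)/G'(t)$ and invoke a weighted mean-value argument; both rest on the identical chain: the lower bound of \Cref{lem:restate_gamma_gf_increase}, the identity $\|\bar\partial\Loss_t\|^2\rho_t^2-\alpha_t^2=\alpha_t^2(\cos^{-2}\theta_t-1)$, the near-homogeneity bound $v_t\ge\Loss_t(M\phi(\Loss_t)-\homop'(\rho_t))$, and the $\homor$-perturbation of $\hB$ by $\hB_{\homo}$ together with the uniform lower bound $\|\hB_{\homo}\|\gtrsim\Loss_t\GFmargin(\param_s)\rho_t^{M-1}$. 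Your $p_1,p_2$ take a slightly different form than the paper's (you isolate the $\cos\theta_t$-to-$\beta$ perturbation additively; the paper divides through by $\|\hB_\homo\|^2$ directly) but both vanish, and the key obstacle you flag — the $\Theta(\Loss_t\rho_t^{M-1})$ lower bound on $\|\hB_\homo\|$ — is indeed handled in the paper exactly as you sketch, via monotonicity of $\GFmargin$ from the strong separability condition.
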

As $\GFmargin (\param_t)$ converges and $\|\param_t\|$ diverges, the right-hand side of the above inequality converges to $0$. Hence, there must exist a subsequence $\{ \beta(t_m) \}$ converging to $1$. This shows that $\param_t$ and $\hB_M(\param_t)$ asymptotically aligns along this subsequence, and consequently verifies that $\param_*$ is a KKT point.

To establish directional convergence for GD, we need to show that the discrete arc length swept by $\tilde{\param}_t$ is finite, i.e., 
\begin{equation}\label{eq:gd_sketch_finite_arc}
    \sum_{t=s}^{\infty} \left\| \tilde{\param}_{t+1} - \tilde{\param}_t \right\| < \infty.
\end{equation}
Similar to the proof of GF, we construct a desingularizing function $\Psi$, and show that there exists a constant $c > 0$, such that for all large enough $t$,
\begin{equation}\label{eq:gd_sketch_desingular}
     \left\| \tilde{\param}_{t+1} - \tilde{\param}_t \right\| \leq c \left( \Psi \left(\gamma_*-\GDmargin (\param_t) \right) - \Psi \left(\gamma_*-\GDmargin (\param_{t+1}) \right) \right),
\end{equation}
where $\GDmargin$ is the modified margin for GD and $\gamma_* = \lim_{t \to \infty} \GDmargin (\param_{t})$. To this end, we construct an arc by connecting all GD iterates using line segments, and carefully estimate the spherical and the radial parts of this arc, addressing several new technical challenges that arise in the analysis of GD.

Finally, summing up both sides of \Cref{eq:gd_sketch_desingular} immediately leads to \Cref{eq:gd_sketch_finite_arc}, thus proving directional convergence of $\param_t$ along the GD path. The proof of KKT convergence is completely similar to the case of GF.

\section{Proofs for Section \ref{sec:margin-direct}}  \label{sec:proof:margin-direct}
In this section, we will provide the proofs of the results for \Cref{eq:GF} in Section \ref{sec:margin-direct}. Recall that we have the following notation: 
\begin{align*}
    \LinkFun(x) \coloneqq  \log \frac{1}{nx}, \quad \nabla  \coloneqq \nabla_{\param} , \quad   
    \rho_t \coloneqq  \paramNormt,\quad  \Loss_t \coloneqq \Loss (\param_t), \quad \\ 
    f_i(\param) \coloneqq f(\param; \xB_i), \quad \bar f_i(\param) \coloneqq y_i f(\param, \xB_i), \quad \bar f_{\min}(\param) \coloneqq \min_{i \in [n]} \bar f_i(\param).
\end{align*}
For a function of the time, $h(t)$, we use $h'(t)$ as a shorthand of $\frac{\dif }{\dif t}h(t)$.

\subsection{Margin Improvement}  \label{sec:proof:margin}

    
 

In this section, we always assume $f(\param; \xB)$ satisfies \Cref{asp:nearhomo} with $\homop_a$ defined in \Cref{eq:def-pa}. 
We can directly verify the following property of $\homop$ and $\homop_a$.
\begin{lemma}
[Property of $\homop_a$]
\label{lem:property-pa}
For $\homop$ and $\homop_a$ defined in \Cref{eq:def-pa}, we have
\[
 \text{for every $x$},\quad    \homop_a^\prime (x) x+ \homop^{\prime}(x) \le  M  \homop_a(x). 
\]
\end{lemma}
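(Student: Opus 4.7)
The plan is to prove this by a direct coefficient-by-coefficient computation, since $\homop_a$ is defined explicitly in terms of the coefficients of $\homop$. The inequality is really an equality up to a non-negative remainder term, and the particular choice of the coefficient $\frac{a_1}{M-1/2}$ in \Cref{eq:def-pa} (rather than, say, $\frac{a_1}{M-1}$) is precisely what makes the estimate go through.

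Concretely, write $\homop(x) = \sum_{i=0}^{M} a_i x^i$ so that $\homop^\prime(x) = \sum_{j=1}^{M} j a_j x^{j-1}$. Differentiating \Cref{eq:def-pa} term by term gives $\homop_a^\prime(x) = \sum_{i=1}^{M-1} \frac{i(i+1) a_{i+1}}{M-i} x^{i-1}$, so
\[
M \homop_a(x) - \homop_a^\prime(x)\, x = \sum_{i=1}^{M-1} \frac{(M-i)(i+1) a_{i+1}}{M-i} x^i + \frac{M a_1}{M-1/2} = \sum_{i=1}^{M-1} (i+1) a_{i+1} x^i + \frac{M a_1}{M-1/2}.
\]
After the re-index $k = i$ in this sum and $k = j-1$ in $\homop^\prime(x) = \sum_{j=1}^{M} j a_j x^{j-1} = \sum_{k=0}^{M-1}(k+1) a_{k+1} x^k$, the two polynomial tails match term by term for $k \ge 1$, and only the $k=0$ contribution $-a_1$ remains.

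Putting it together, I expect to obtain the identity
\[
M \homop_a(x) - \homop_a^\prime(x)\, x - \homop^\prime(x) = \frac{M a_1}{M - 1/2} - a_1 = \frac{a_1}{2M-1},
\]
which is non-negative because $a_1 \ge 0$ (as $\homop \in \Rbb_{\ge 0}[x]$). For $M = 1$ the sum over $i$ is empty, $\homop_a(x) = 2 a_1$ is constant, $\homop^\prime(x) = a_1$, and the same identity holds, so no separate case analysis is needed.

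There is essentially no obstacle here; the only subtlety is to confirm that the specific constant $\frac{1}{M-1/2}$ in the definition of $\homop_a$ is the smallest one that yields a non-negative remainder, and in particular that replacing it by $\frac{1}{M-1}$ would give exactly zero slack at the constant-term level (which would fail to leave room for the strong separability argument later). Once this bookkeeping is done, the inequality follows immediately and holds for \emph{every} $x \in \mathbb{R}$, not just $x \ge 0$, since both sides are polynomials and the identity above is algebraic.
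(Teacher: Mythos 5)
Your proof is correct and follows the paper's approach exactly: a direct term-by-term expansion shows the polynomial parts of $M\homop_a(x) - x\homop_a'(x) - \homop'(x)$ cancel identically, leaving the constant remainder $\frac{Ma_1}{M-1/2} - a_1 = \frac{a_1}{2M-1} \ge 0$, and the $M=1$ case is subsumed. One tangential remark is off: replacing $\frac{1}{M-1/2}$ by $\frac{1}{M-1}$ in the constant term of $\homop_a$ would leave slack $\frac{a_1}{M-1}$, not zero; the zero-slack choice is $\frac{1}{M}$, which is what one gets by extending the pattern $\frac{(i+1)a_{i+1}}{M-i}$ to $i=0$. This slip does not affect the validity of your argument.
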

\begin{proof}[Proof of \Cref{lem:property-pa}] 

Recall that $\homop(x) = \sum_{i=0}^{M}a_i x^i$ and 
\begin{align*}
 \homop_a (x) := \sum_{i=1}^{M-1} \frac{(i+1)a_{i+1}}{M-i} x^{i} + \frac{a_1}{M-1/2}. 
\end{align*}
Hence, 
\begin{align*}
\homop_a^\prime (x) + \homop^\prime (x) - M \homop_a(x)  
&= \, \sum_{i=1}^{M-1}  \big[\frac{i(i+1)a_{i+1}}{M-i} x^{i} + (i+1)a_{i+1} x^i - \frac{M a_{i+1}}{M-i}x^i  \big] + a_1 -  \frac{Ma_1}{M-1/2} \\
&= -\frac{a_1/2}{M-1/2} \le 0. 
\end{align*}
This completes the proof of \Cref{lem:property-pa}.
\end{proof}

The following bounds from \citet{lyu2020gradient} connect $\bar  f_{\min}(\param_t)$ and $\Loss_t$. 
\begin{lemma}
    [property of $\bar f_{\min}(\param)$]
    \label{lem:f-min}
For the Loss function $\Loss$ defined in \Cref{eq:loss}, we have 
    \[
 \text{for every $\param$,}\quad        \log \frac{1}{n \Loss(\param)}\le  \bar f_{\min}(\param) \le \log \frac{1}{\Loss(\param)}.
    \]
\end{lemma}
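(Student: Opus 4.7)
The statement is a direct two-sided sandwich bound on the softmin $\bar f_{\min}(\param)$ by the log of the empirical exponential loss, so the plan is just to exploit monotonicity of $e^{-x}$ together with the trivial inequalities
\[
\frac{1}{n} e^{-\bar f_{\min}(\param)} \;\le\; \frac{1}{n}\sum_{i=1}^n e^{-\bar f_i(\param)} \;\le\; e^{-\bar f_{\min}(\param)}.
\]
The middle expression is exactly $\Loss(\param)$ by definition \eqref{eq:loss}, so the two inequalities above rearrange into the two desired bounds.

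More concretely, first I would set $m := \bar f_{\min}(\param)$ and note that $m$ is attained at some index $i^\star \in [n]$, so at least one term in the sum equals $e^{-m}$; dropping the other (non-negative) terms gives $\Loss(\param) \ge \tfrac{1}{n} e^{-m}$, and taking $\log$ yields the lower bound $m \ge \log(1/(n\Loss(\param)))$. Next, since $\bar f_i(\param) \ge m$ for every $i$, monotonicity of the exponential gives $e^{-\bar f_i(\param)} \le e^{-m}$ for every $i$, hence $\Loss(\param) \le e^{-m}$, and taking $\log$ yields the upper bound $m \le \log(1/\Loss(\param))$.

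There is no real obstacle here; the statement is a definitional fact about the log-sum-exp approximation of a minimum, used throughout the paper to translate between the smoothed quantity $\phi(\Loss(\param)) = \log(1/(n\Loss(\param)))$ and the hard margin quantity $\bar f_{\min}(\param)$. I would simply write the two displayed inequalities as a one-line computation and conclude.
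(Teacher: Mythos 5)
Your proof is correct and takes essentially the same route as the paper: both establish the sandwich $\tfrac{1}{n}e^{-\bar f_{\min}(\param)} \le \Loss(\param) \le e^{-\bar f_{\min}(\param)}$ by dropping nonnegative terms on one side and using monotonicity on the other, then take logarithms. Nothing further is needed.
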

\begin{proof}[Proof of \Cref{lem:f-min}]
Recall that we consider the exponential loss
\[
    \Loss(\param) = \frac{1}{n} \sum_{i=1}^n e^{-\bar f_i(\param)}.
\]
Therefore, we can get that
\[
    \frac{1}{n} \Loss(\param) \le \frac{1}{n} e^{-\bar f_{\min}(\param)} \le \Loss(\param) \implies \log \frac{1}{n \Loss(\param)}\le  \bar f_{\min}(\param) \le \log \frac{1}{\Loss(\param)}.
\]
This completes the proof of \Cref{lem:f-min}. 
\end{proof}

We have the following fundamental lemma from \citet{lyu2020gradient} about the dynamics of the GF. 
\begin{lemma}
[Decrease of the risk]
\label{lem:GF-risk-decrease}
For \Cref{eq:GF}, we have for a.e. $t\ge s$,
\[
    - \frac{\mathrm{d} \Loss_t}{\mathrm{d} t}  = \bigg(\frac{1}{2} \frac{\mathrm{d} \rho_t^2}{\mathrm{d}  t} \bigg) \cdot \frac{\mathrm{d}  \log \rho_t}{\mathrm{d}  t} + \frac{\| \bar \partial \Loss_t \|^2 \rho_t^2 - \langle  \bar \partial \Loss_t , \param_t \rangle^2 }{\rho_t^2},\quad t\ge 0. 
\]
\end{lemma}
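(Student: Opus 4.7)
The plan is to recognize the identity as the Pythagorean decomposition of $\|\bar\partial \Loss_t\|^2$ into its radial and tangential components with respect to $\param_t$. By \Cref{lem:chain-rule-clark}, gradient flow gives, for almost every $t \ge 0$,
\[
    \frac{\mathrm{d} \Loss_t}{\mathrm{d} t} = -\|\bar\partial \Loss_t\|^2,\qquad \frac{\mathrm{d} \rho_t^2}{\mathrm{d} t} = -2\langle \param_t, \bar\partial \Loss_t\rangle,
\]
so in particular $\tfrac{1}{2}\tfrac{\mathrm{d}\rho_t^2}{\mathrm{d} t} = -\langle \param_t, \bar\partial \Loss_t\rangle$ and
\[
    \frac{\mathrm{d}\log\rho_t}{\mathrm{d} t} \;=\; \frac{1}{2\rho_t^2}\frac{\mathrm{d}\rho_t^2}{\mathrm{d} t} \;=\; -\frac{\langle \param_t, \bar\partial \Loss_t\rangle}{\rho_t^2}.
\]

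With these four identities in hand, the proof is essentially algebraic. First I would compute the product of the two time-derivatives in the first term of the right-hand side, which gives
\[
    \Bigl(\tfrac{1}{2}\tfrac{\mathrm{d}\rho_t^2}{\mathrm{d} t}\Bigr)\cdot \tfrac{\mathrm{d}\log\rho_t}{\mathrm{d} t} \;=\; \frac{\langle \param_t, \bar\partial \Loss_t\rangle^2}{\rho_t^2}.
\]
Then I would observe that the second term on the right-hand side equals $\|\bar\partial \Loss_t\|^2 - \langle \param_t, \bar\partial \Loss_t\rangle^2/\rho_t^2$. Adding the two terms cancels the radial contribution and leaves $\|\bar\partial \Loss_t\|^2 = -\mathrm{d}\Loss_t/\mathrm{d} t$, as required.

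Geometrically this is just the orthogonal decomposition $\bar\partial \Loss_t = \pi_{\param_t}\bar\partial \Loss_t + \pi_{\param_t^\perp}\bar\partial \Loss_t$: the first term on the right-hand side is the squared length of the projection onto $\param_t$, and the second is the squared length of the orthogonal component. There is no real obstacle; the only care needed is to invoke the chain rule \Cref{lem:chain-rule-clark} on the ``a.e.\ $t$'' set where all of $\Loss_t$, $\rho_t^2$, and $\log \rho_t$ (which is well defined as soon as $\param_t \ne 0$, guaranteed under \Cref{asp:initial-cond-gf} for $t \ge s$) are simultaneously differentiable with the identified minimum-norm subgradient $\bar\partial \Loss_t$. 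Since the chain rule applied to an arc through a locally Lipschitz definable function fails only on a null set, the intersection of the four almost-everywhere sets is still of full measure, and the identity holds for a.e.\ $t \ge s$.
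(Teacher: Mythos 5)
Your proof is correct and takes essentially the same approach as the paper: both invoke the chain-rule identities $\frac{\mathrm{d}\Loss_t}{\mathrm{d}t}=-\|\bar\partial\Loss_t\|^2$ and $\frac{1}{2}\frac{\mathrm{d}\rho_t^2}{\mathrm{d}t}=-\langle\param_t,\bar\partial\Loss_t\rangle$ from \Cref{lem:chain-rule-clark} and then observe that the claimed identity is the Pythagorean split of $\|\bar\partial\Loss_t\|^2$ into the radial component $\langle\bar\partial\Loss_t,\param_t\rangle^2/\rho_t^2$ and the orthogonal remainder. The only cosmetic difference is that the paper expands $-\mathrm{d}\Loss_t/\mathrm{d}t$ forward into the two terms while you verify the right-hand side collapses to $\|\bar\partial\Loss_t\|^2$.
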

\begin{proof}[Proof of \Cref{lem:GF-risk-decrease}]
By \cref{lem:chain-rule-clark}, we have for a.e. $t\ge s$, 
\begin{align*}
    -  \frac{\mathrm{d} \Loss_t}{\mathrm{d} t} 
    & = \|  \bar \partial \Loss_t\|^2 &&\explain{ by \cref{lem:chain-rule-clark} } \\ 
    & = \frac{\langle  \bar \partial \Loss_t, \param_t \rangle ^2 }{\rho_t^2} + \frac{\|  \bar \partial \Loss_t\|^2 \rho_t^2 - \langle  \bar \partial \Loss_t, \param_t \rangle^2 }{\rho_t^2} &&\explain{ Projecting $\param$ to spherical and radical parts}  \\ 
    & = \frac{1}{\rho_t^2} \cdot \bigg(\frac{1}{2} \frac{\mathrm{d}\rho_t^2}{\mathrm{d} t} \bigg)^2 + \frac{\|  \bar \partial \Loss_t\|^2 \rho_t^2 - \langle  \bar \partial \Loss_t, \param_t \rangle^2 }{\rho_t^2} 
&&\explain{ Since $\frac{1}{2} \frac{\mathrm{d}\rho_t^2}{\mathrm{d}t} = -\langle  \bar \partial \Loss_t, \param_t \rangle$ } \\ 
    & = \bigg(\frac{1}{2} \frac{\mathrm{d}\rho_t^2}{\mathrm{d} t} \bigg) \cdot \frac{\mathrm{d}  \log \rho_t}{\mathrm{d}  t} + \frac{\|  \bar \partial \Loss_t\|^2 \rho_t^2 - \langle \bar \partial \Loss_t, \param_t \rangle^2 }{\rho_t^2}. &&\explain{ Since $ \frac{\mathrm{d} \log \rho_t}{\mathrm{d} t} = \frac{1}{\rho_t}\rho^{\prime}_t= \frac{1}{2\rho_t^2} \frac{\mathrm{d}\rho_t^2}{\mathrm{d}t}$ } 
\end{align*}
We have completed the proof of \Cref{lem:GF-risk-decrease}. 
\end{proof}

The next lemma shows the parameter norm is increasing under the strong separability condition.
\begin{lemma}
[Increase of the parameter norm]
\label{lem:NH-rho-increase}
Under \Cref{asp:nearhomo}, for almost every $t\ge s$, 
we have 
\[
\Loss_t < e^{-\homop_a(\rho_t)}/n \quad 
\Rightarrow\quad 
    \frac{1}{2} \frac{\mathrm{d}\rho_t^2}{\mathrm{d}t} \ge  \big( M  \LinkFun(\Loss_t) - \homop^\prime (\rho_t)\big) \Loss_t > 0.
\]
\end{lemma}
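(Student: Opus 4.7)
\textbf{Proof plan for \Cref{lem:NH-rho-increase}.}
My plan is to chain together three ingredients: the chain rule $\tfrac{1}{2}\tfrac{d\rho_t^2}{dt}=-\langle\bar\partial\Loss_t,\param_t\rangle$ from \Cref{lem:chain-rule-clark}, the near-homogeneity bound from \Cref{asp:nearhomo}(A), and a Jensen-type inequality that replaces the loss-weighted average of $\bar f_i$ by the soft-minimum $\LinkFun(\Loss_t)$. The final step is to convert the strong separability hypothesis into the positivity of $M\LinkFun(\Loss_t)-\homop'(\rho_t)$ using \Cref{lem:property-pa}.

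\textbf{Step 1 (Rewriting the radial velocity).} By \Cref{cor:subgrad-exp-loss}, choose representatives $\hB_i\in\partial f(\param_t;\xB_i)$ so that
\[
-\langle\bar\partial\Loss_t,\param_t\rangle=\frac{1}{n}\sum_{i=1}^n e^{-\bar f_i(\param_t)}\,y_i\langle\hB_i,\param_t\rangle.
\]
The near-homogeneity bound \Cref{asp:nearhomo}(A), combined with $y_i^2=1$, gives $y_i\langle\hB_i,\param_t\rangle\ge M\bar f_i(\param_t)-\homop'(\rho_t)$ for each $i$. Inserting this and pulling out the (non-negative) factor $\Loss_t=n^{-1}\sum_i e^{-\bar f_i(\param_t)}$ yields
\[
-\langle\bar\partial\Loss_t,\param_t\rangle\ \ge\ \frac{M}{n}\sum_{i=1}^n e^{-\bar f_i(\param_t)}\bar f_i(\param_t)\ -\ \homop'(\rho_t)\Loss_t.
\]

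\textbf{Step 2 (Log-sum-exp / Jensen step).} Set $p_i:=e^{-\bar f_i(\param_t)}/(n\Loss_t)$, which is a probability vector. A direct calculation gives
$\sum_i p_i \bar f_i(\param_t)=H(p)+\LinkFun(\Loss_t)\ge \LinkFun(\Loss_t)$,
where $H(p)$ is the Shannon entropy of $p$. Multiplying by $\Loss_t$ yields
$n^{-1}\sum_i e^{-\bar f_i(\param_t)}\bar f_i(\param_t)\ge \Loss_t\,\LinkFun(\Loss_t)$. Substituting into Step~1 gives the clean identity
\[
-\langle\bar\partial\Loss_t,\param_t\rangle\ \ge\ \bigl(M\LinkFun(\Loss_t)-\homop'(\rho_t)\bigr)\Loss_t.
\]

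\textbf{Step 3 (Positivity via strong separability).} The hypothesis $\Loss_t<e^{-\homop_a(\rho_t)}/n$ is equivalent to $\LinkFun(\Loss_t)>\homop_a(\rho_t)$. From \Cref{lem:property-pa} we have $\homop'(\rho_t)\le M\homop_a(\rho_t)-\homop_a'(\rho_t)\rho_t\le M\homop_a(\rho_t)$, since $\homop_a\in\Rbb_{\ge 0}[x]$ implies $\homop_a'(\rho_t)\rho_t\ge 0$. Combining,
\[
M\LinkFun(\Loss_t)-\homop'(\rho_t)\ >\ M\homop_a(\rho_t)-\homop'(\rho_t)\ \ge\ 0,
\]
and since $\Loss_t>0$ the lower bound in Step~2 is strictly positive. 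Finally, using $\tfrac{1}{2}\tfrac{d\rho_t^2}{dt}=-\langle\bar\partial\Loss_t,\param_t\rangle$ from \Cref{lem:chain-rule-clark} delivers the stated inequality for a.e.\ $t\ge s$.

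\textbf{Main obstacle.} None of the steps is individually difficult, but the only genuinely non-trivial ingredient is the Jensen step that produces $\LinkFun(\Loss_t)$ from the loss-weighted average of the $\bar f_i$'s; this is precisely where a homogeneous-case argument (e.g.\ \citet{lyu2020gradient}) would conclude, and the whole point of introducing the offset polynomial $\homop_a$ (and proving \Cref{lem:property-pa} beforehand) is to absorb the extra $\homop'(\rho_t)$ term arising from non-homogeneity, so that the strong separability condition still implies a strictly positive radial velocity.
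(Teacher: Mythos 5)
Your proof is correct and follows the same overall structure as the paper's: apply the chain rule to write $\tfrac{1}{2}\tfrac{\mathrm{d}\rho_t^2}{\mathrm{d}t}=-\langle\bar\partial\Loss_t,\param_t\rangle$, expand via \Cref{cor:subgrad-exp-loss}, use near-homogeneity to replace $y_i\langle\hB_i,\param_t\rangle$ by $M\bar f_i(\param_t)-\homop'(\rho_t)$, lower-bound the loss-weighted average of $\bar f_i$ by $\LinkFun(\Loss_t)$, and then deduce positivity from the strong-separability hypothesis together with \Cref{lem:property-pa}. The one place you deviate is Step~2: the paper simply invokes the termwise bound $\bar f_i(\param_t)\ge\bar f_{\min}(\param_t)\ge\LinkFun(\Loss_t)$ from \Cref{lem:f-min}, whereas you observe the exact identity $\sum_i p_i\bar f_i(\param_t)=H(p)+\LinkFun(\Loss_t)$ with $H(p)\ge 0$ being the Shannon entropy of the Gibbs weights; both routes are valid and yield the same conclusion, your identity being slightly sharper while the paper's being slightly more elementary.
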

\begin{proof}[Proof of \Cref{lem:NH-rho-increase}]
By \Cref{lem:f-min}, we have
\begin{equation*}
        \bar f_i(\param_t) \ge \bar f_{\min}(\param_t) \ge \log \frac{1}{n \Loss_t} = \LinkFun(\Loss_t). 
\end{equation*}
Then we have for almost every $t\ge s$,
\begin{align*}
\frac{1}{2} \frac{\mathrm{d}\rho_t^2}{\mathrm{d}t}
& = \langle -\bar \partial \Loss_t, \param_t \rangle  &&\explain{ by \Cref{lem:chain-rule-clark} } \\ 
& = \frac{1}{n}\sum_{i=1}^n e^{-\bar f_i(\param_t)} y_i  \langle \hB_i, \param_t \rangle 
&&\explain{ By \Cref{cor:subgrad-exp-loss}} \\
& \ge M \cdot \frac{1}{n}\sum_{i=1}^n e^{-\bar f_i(\param_t)} \bar f_i(\param_t) - \Loss_t \cdot \homop^{\prime}(\rho_t) &&\explain{ By  \Cref{asp:nearhomo}} \\
& \ge M \cdot\frac{1}{n}\sum_{i=1}^n e^{-\bar f_i(\param_t)} \LinkFun(\Loss_t) - \Loss_t \cdot \homop^{\prime}(\rho_t) &&\explain{ Since $\bar f_i(\param_t) \ge\LinkFun(\Loss_t)$} \\ 
& =   M \Loss_t \LinkFun(\Loss_t) - \homop^{\prime}(\rho_t)\Loss_t. 
\end{align*}
Note that
\begin{equation*}
    \Loss_t < e^{-\homop_a(\rho_t)}/n \quad 
    \Leftrightarrow \quad 
    M \LinkFun(\Loss_t)  >   M \homop_a \big(\rho_t \big).
\end{equation*}
Additionally,  \Cref{lem:property-pa} implies $M \homop_a(x) \ge \homop_a^\prime (x) x+ \homop^\prime(x)\ge \homop^\prime(x)$. 
As a consequence of these two inequalities, we have
\[
    \frac{1}{2} \frac{\mathrm{d}\rho_t^2}{\mathrm{d}t}  \ge  M\LinkFun(\Loss_t) \Loss_t - \homop^{\prime}(\rho_t)\Loss_t  >   M \homop_a (\rho_t) \Loss_t -\homop^{\prime}(\rho_t)\Loss_t \ge 0.
\]
 We have completed the proof of \Cref{lem:NH-rho-increase}. 
\end{proof}

\begin{theorem}
[Monotonicity of the modified margin]
\label{thm:gamma-a-increase}
Under \Cref{asp:nearhomo,asp:initial-cond-gf}, we have  
$\Loss_t < e^{-\homop_a(\rho_t)}/n$ for $t\ge s$. Moreover, we have
\begin{equation}
\label{eq: gamma-c-increase}
\frac{\mathrm{d} \log \GFmargin \big(\param_t\big)}{\mathrm{d} t} > \frac{\| \bar \partial \Loss_t \|^2 \rho_t^2 - \langle  \bar \partial \Loss_t , \param_t \rangle^2  }{\rho_t^2\Loss_t \big (\LinkFun(\Loss_t) - \homop_a(\rho_t)\big)} \ge 0,\quad \text{ for almost every } t\ge s.
\end{equation}
As a consequence, the modified margin $\GFmargin(\param_t)$ is increasing for $t\ge s$. 
\end{theorem}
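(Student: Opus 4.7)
My plan is to first compute $\tfrac{d}{dt}\log\GFmargin(\param_t)$ explicitly via the chain rule (\Cref{lem:chain-rule-clark}), then convert that derivative into the asserted lower bound by chaining \Cref{lem:property-pa} with \Cref{lem:NH-rho-increase}, and finally close a continuous induction to promote the strict inequality $\Loss_t < e^{-\homop_a(\rho_t)}/n$ from $t = s$ to all of $[s,\infty)$.

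Write $L_t := \LinkFun(\Loss_t) - \homop_a(\rho_t)$, $v_t := -\langle\bar\partial\Loss_t, \param_t\rangle$, and $w_t := \|\bar\partial\Loss_t\|^2$, so that $\Loss_t' = -w_t$ and $(\rho_t^2)' = 2v_t$ by \Cref{lem:chain-rule-clark}. Since $\log\GFmargin(\param_t) = \log L_t - M\log\rho_t$, a direct computation gives, for a.e.\ $t$ on the set where $L_t > 0$,
\[
\frac{d\log\GFmargin(\param_t)}{dt} \;=\; \frac{w_t\rho_t^2 \;-\; v_t\Loss_t\bigl(\homop_a'(\rho_t)\rho_t + M L_t\bigr)}{L_t\,\rho_t^2\,\Loss_t}.
\]
The crux is to upper bound the negative contribution in the numerator. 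By \Cref{lem:property-pa}, $\homop_a'(\rho_t)\rho_t + M L_t \le M\LinkFun(\Loss_t) - \homop'(\rho_t)$; and on $\{L_t > 0\}$, \Cref{lem:NH-rho-increase} yields $v_t \ge (M\LinkFun(\Loss_t) - \homop'(\rho_t))\Loss_t \ge 0$. Chaining these two inequalities produces $v_t\Loss_t(\homop_a'(\rho_t)\rho_t + ML_t) \le v_t^2 = \langle\bar\partial\Loss_t, \param_t\rangle^2$, which together with Cauchy--Schwarz delivers the displayed lower bound in \Cref{eq: gamma-c-increase} and shows $\GFmargin(\param_t)$ is non-decreasing where $L_t > 0$.

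To upgrade the above from ``on $\{L_t > 0\}$'' to every $t \ge s$, I will run a continuous induction. Set $T := \sup\{t \ge s : L_u > 0 \text{ on }[s,t]\}$; by \Cref{asp:initial-cond-gf} and continuity of $L_t$, we have $T > s$. Suppose $T < \infty$. The derivative bound just proved implies $\GFmargin(\param_t) \ge \GFmargin(\param_s) > 0$ for all $t \in [s, T)$, while \Cref{lem:NH-rho-increase} ensures $\rho_t$ is non-decreasing on $[s, T)$ and strictly positive for $t \in (s, T)$. Passing to $t \to T^-$ yields $L_T / \rho_T^M = \GFmargin(\param_T) \ge \GFmargin(\param_s) > 0$, contradicting $L_T = 0$ forced by continuity and the definition of $T$; hence $T = \infty$. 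The main obstacle is the algebraic matching in the second paragraph: the specific offset $\homop_a$ in \Cref{eq:def-pa} is engineered so that \Cref{lem:property-pa} exactly absorbs the slack between $ML_t + \homop_a'(\rho_t)\rho_t$ and the lower bound on $v_t$ from \Cref{lem:NH-rho-increase}, leaving precisely the Cauchy--Schwarz quantity $w_t\rho_t^2 - v_t^2$ in the numerator. Any weaker offset would break this matching and prevent both the monotonicity and the bootstrap.
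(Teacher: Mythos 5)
Your proof is correct and follows essentially the same route as the paper: you compute $\tfrac{d}{dt}\log\GFmargin(\param_t)$ via \Cref{lem:chain-rule-clark}, use \Cref{lem:property-pa} and \Cref{lem:NH-rho-increase} to absorb the $\homop_a$-offset terms into $v_t^2$, leave the Cauchy--Schwarz surplus $\|\bar\partial\Loss_t\|^2\rho_t^2 - \langle\bar\partial\Loss_t,\param_t\rangle^2$, and close with a continuous induction on the set $\{L_t>0\}$. The only cosmetic differences are that you bypass the intermediate decomposition in \Cref{lem:GF-risk-decrease} by writing the derivative out directly, and your bootstrap derives the contradiction from $\GFmargin(\param_T)=0$ rather than via the paper's identity $e^{-\homop_a(\rho_t)}/(n\Loss_t)=\exp(\GFmargin(\param_t)\rho_t^M)$, but both amount to the same argument.
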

\begin{proof}[Proof of \Cref{thm:gamma-a-increase}]
We first show that $\Loss_t < e^{-\homop_a(\rho_t)}/n$ implies $\GFmargin(\param_t)$ is positive and increasing. Then we show $\Loss_t < e^{-\homop_a(\rho_t)}/n$ for all $t\ge s$ by contradiction. 

\paragraph{Step 1: the condition that $\Loss_t < e^{-\homop_a(\rho_t)}/n$ implies that $\GFmargin(\param_t)$ is positive and  increasing.}
Recall that
\begin{equation*}
    \Loss_t < e^{-\homop_a(\rho_t)}/n \quad 
    \Leftrightarrow \quad 
    M \LinkFun(\Loss_t)  >   M \homop_a \big(\rho_t \big).
\end{equation*}
Under this condition, the modified margin $\GFmargin(\param_t)$ defined in \Cref{eq: modified margin} is positive. 
Moreover, we have for almost every $t \ge s$, 
\begin{align*}
    \frac{\mathrm{d} \log \GFmargin(\param_t)}{\mathrm{d} t} 
    &= \frac{\mathrm{d} \log\big( \LinkFun(\Loss_t) - \homop_a(\rho_t)\big)}{\mathrm{d} t} - M \frac{\mathrm{d} \log \rho_t}{\mathrm{d} t}\\ 
    &= \frac{-\frac{\mathrm{d} \Loss_t}{\mathrm{d} t}  - \Loss_t \homop_a^\prime (\rho_t)  \rho_t}{\Loss_t \big(\LinkFun(\Loss_t) - \homop_a(\rho_t)\big)} - M \frac{\mathrm{d} \log \rho_t}{\mathrm{d} t}.
\end{align*}
By \Cref{lem:NH-rho-increase} we have $\rho_t$ is increasing, which implies $\mathrm{d}  \log \rho_t / \mathrm{d} t >0$ almost everywhere.  
Then we have for almost every $t\ge s$,
\begin{align*}
        -  \frac{\mathrm{d} \Loss_t}{\mathrm{d} t}  
        &= \bigg(\frac{1}{2} \frac{\mathrm{d} \rho_t^2}{\mathrm{d}  t} \bigg) \cdot \frac{\mathrm{d}  \log \rho_t}{\mathrm{d}  t} + \frac{\| \bar \partial \Loss_t \|^2 \rho_t^2 - \langle  \bar \partial \Loss_t , \param_t \rangle^2 }{\rho_t^2} && \explain{By \Cref{lem:GF-risk-decrease}}\\
        &>\Loss_t \bigg(   \LinkFun(\Loss_t) - \frac{\homop^{\prime}(\rho_t)}{M}  \bigg) \cdot M\frac{\mathrm{d} \log \rho_t}{\mathrm{d} t}  + \frac{\|\bar \partial \Loss_t\|^2 \rho_t^2 - \langle \bar \partial \Loss_t, \param_t \rangle^2 }{\rho_t^2}. && \explain{By \Cref{lem:NH-rho-increase}}
\end{align*}
Furthermore, 
\begin{equation*}
        \Loss_t \homop_a^\prime (\rho_t)  \rho_t^{\prime} = \Loss_t \frac{\homop_a^{\prime}(\rho_t) \rho_t}{M} \cdot M\frac{\mathrm{d} \log \rho_t}{\mathrm{d} t}. \qquad\explain{ By $\frac{\mathrm{d} \log \rho_t}{\mathrm{d} t} = \frac{\rho_t^{\prime}}{\rho_t}$ } 
\end{equation*}
Applying the above two bounds in the derivative of $\log \GFmargin(\param_t)$, we get for almost every $t\ge s$,
\begin{align*}
   &\lefteqn{ \frac{\mathrm{d} \log \GFmargin(\param_t)}{\mathrm{d} t} = \frac{-\frac{\mathrm{d} \Loss_t}{\mathrm{d} t} - \Loss_t \homop_a^\prime (\rho_t)  \rho_t^{\prime}}{\Loss_t \big(\LinkFun(\Loss_t) - \homop_a(\rho_t)\big)} - M \frac{\mathrm{d} \log \rho_t}{\mathrm{d} t} }\\
     & > \frac{\Loss_t \Big(\LinkFun(\Loss_t) - \frac{p^\prime _a(\rho_t)\rho_t}{M} - \frac{\homop^{\prime}(\rho_t)}{M} \Big)}{\Loss_t \big(\LinkFun(\Loss_t) - \homop_a(\rho_t)\big)}\cdot M \frac{\mathrm{d} \log \rho_t}{\mathrm{d} t}  + \frac{\|\bar \partial \Loss_t\|^2 \rho_t^2 - \langle \bar \partial \Loss_t, \param_t \rangle^2 }{\rho_t^2\Loss_t \big(\LinkFun(\Loss_t) - \homop_a(\rho_t)\big)} - M \frac{\mathrm{d} \log \rho_t}{\mathrm{d} t} \\ 
    & \ge \frac{\Loss_t (\LinkFun(\Loss_t) - \homop_a(\rho_t) )}{\Loss_t \big(\LinkFun(\Loss_t) - \homop_a(\rho_t)\big)}\cdot M \frac{\mathrm{d} \log \rho_t}{\mathrm{d} t} - M \frac{\mathrm{d} \log \rho_t}{\mathrm{d} t}  + \frac{\|\bar \partial \Loss_t\|^2 \rho_t^2 - \langle \bar \partial \Loss_t, \param_t \rangle^2 }{\rho_t^2\Loss_t \big(\LinkFun(\Loss_t) - \homop_a(\rho_t)\big)}  \qquad \explain{ By \Cref{lem:property-pa}} \\ 
    & = \frac{\|\bar \partial \Loss_t\|^2 \rho_t^2 - \langle \bar \partial \Loss_t, \param_t \rangle^2 }{\rho_t^2\Loss_t \big(\LinkFun(\Loss_t) - \homop_a(\rho_t)\big)} \ge 0. 
\end{align*}
That is, $\GFmargin(\param_t)$ is increasing. 

\paragraph{Step 2: proving $\Loss_t < e^{-\homop_a(\rho_t)}/n$ for all $t\ge s$ by contradiction.} 
For $t=s$, $\Loss_s$ satisfies the condition by \Cref{asp:initial-cond-gf}.
Let $s_1$ be the first time where $L_{s_1}$ violates the condition,
that is,
\[
    s_1 \coloneqq  \sup \{ s^\prime | \Loss_t < e^{-\homop_a(\rho_t)}/n, \text{ for all } s\le t<s^\prime \}.
\]
We need to show $s_1 = + \infty$.
If not, we have $s_1 < \infty$. 
Note that 
\[
    \frac{e^{-\homop_a(\rho_t)}}{n\Loss_t} = \exp\Big( \log\big(1/n\Loss_t \big)   - \homop_a(\rho_t) \Big)  = \exp(\GFmargin(\param_t)\cdot \rho_t^M). 
\]
Since $\Loss_t < e^{-\homop_a(\rho_t)}/n$ for $s\le t<s_1$, by \textbf{Step 1} we have $\GFmargin(\param_t)$ and $\rho_t$ are both positive and increasing.  
Hence $\frac{e^{-\homop_a(\rho_t)}}{n\Loss_t}$ is increasing for $s\le t<s_1$. 
So we have
\[
    \frac{e^{-\homop_a(\rho_t)}}{n\Loss_t}\ge \frac{e^{-\homop_a(\rho_s)}}{n\Loss_s},\quad s\le t <s_1. 
\]
Since $\param_t$ forms an arc and the left-hand side is a continuous function of $\param_t$, it is continuous as a function of $t$. 
As a consequence, we have
\[
    \frac{e^{-\homop_a(\rho_{s_1})}}{n\Loss_{s_1}} \ge \frac{e^{-\homop_a(\rho_s)}}{n\Loss_s}> 1 \quad \implies \quad \Loss_{s_1} < e^{-\homop_a(\rho_{s_1})} /n. 
\]
Since $\Loss_t$ and $\homop_a(\rho_{t})$ are both continuous functions with respect to $t$ ($\param_t$ is an arc), this leads to that there exists $s_2>s_1$ and for all $s\le t\le s_2$, $\Loss_t < e^{-\homop_a(\rho_t)}/n$. This is a contradiction. So we have $\Loss_t < e^{-\homop_a(\rho_t)}/n$ for all $t\ge s$.  This completes the proof of \Cref{thm:gamma-a-increase}. 
\end{proof}


\subsection{Proof of Lemma~\ref{lem: Sanity check for gradient flow}} \label{sec:proof: sanity check gf}
\begin{proof}[Proof of Lemma~\ref{lem: Sanity check for gradient flow}]
    Recall that we have  the      decomposition: 
    \[
        f(\param; \xB) = \sum_{i=0}^\infty f^{(i)}(\param;\xB),\quad \xB\in(\xB_i)_{i=1}^n,
    \]
    where $f^{(i)}(\param;\xB)$ is $i$-homogeneous with respect to $\param$. Then \Cref{asp:nearhomo} implies that $f^{(i)}(\param;\xB)\equiv 0$ for all $i>M$. 
    By \Cref{lem:property-pa} we have 
    \[
        M \homop_a(x) - \homop_a^\prime (x) x \ge \homop^\prime (x). 
    \]
 Let $g(\param;\xB)\coloneqq f(\param;\xB) - \homop_a(\paramNorm)$. Then we have 
\begin{align*}
&\la \param, \nabla_{\param} g(\param;\xB) \rangle  - M g(\param;\xB)\\ 
= \, & \la \param, \nabla_{\param} f(\param;\xB) \rangle - M f(\param;\xB) - \la \param, \nabla_{\param} \homop_a(\paramNorm) \rangle + M \homop_a(\paramNorm)\\
= \, & \la \param, \nabla_{\param} f(\param;\xB) \rangle - M f(\param;\xB) -  \homop_a^\prime (\paramNorm) \|\param\|  + M \homop_a(\paramNorm) \\ 
\ge \, & -\homop^\prime (\paramNorm) - \homop_a^\prime (\paramNorm) \|\param\|  + M \homop_a(\paramNorm) \ge 0. 
\end{align*}
Let $h(\alpha;\xB) := g(\alpha \param_s;\xB)/(\alpha \|\param_s\|)^M$. 
Then we have 
\begin{align*}
\frac{\mathrm{d} h(\alpha ;\xB)}{\mathrm{d} \alpha }  
&= \frac{1}{(\alpha \|\param_s\|)^M} \frac{\mathrm{d} g(\alpha \param_s;\xB)}{\mathrm{d} \alpha} - M \frac{g(\alpha \param_s;\xB) \|\param_s\|}{(\alpha \|\param_s\|)^{M+1}}\\
&= \frac{\langle \alpha \param_s, \nabla g(\alpha \param_s ; \xB) \rangle }{\alpha^{M+1} \|\param_s\|^M}  - M \frac{g(\alpha \param_s;\xB)}{\alpha^{M+1} \|\param_s\|^{M}}\\
& = \frac{\langle \alpha \param_s, \nabla g(\alpha \param_s ; \xB) \rangle  - Mg(\alpha  \param_s;\xB)}{\alpha^{M+1} \|\param_s\|^M}\\ 
&\ge 0. 
\end{align*}
Since $\Loss(\param_s) <  e^{-\homop_a(\|\param_s\|)}/n$, we have $h(1;\xB)=g(\param_s;\xB)/(\alpha \|\param_s\|)^M >0$. As $\alpha \to \infty$, $h(\alpha;\xB)>0$. Note that 
\[
    \lim_{\alpha \to \infty} h(\alpha;\xB) = \lim_{\alpha \to \infty} \frac{ g(\alpha \param_s;\xB)}{(\alpha \|\param_s\|)^M}  = \lim_{\alpha \to \infty} \frac{ f(\alpha \param_s;\xB) - \homop_a(\alpha \|\param_s\|)}{(\alpha\|\param_s\|)^M} = \frac{ f^{(M)}( \param_s;\xB) }{ \|\param_s\|^M} >0. 
\] 
This leads to $f^{(M)}(\param;\xB) \not \equiv 0$.  This completes the proof of \Cref{lem: Sanity check for gradient flow}.
\end{proof}
\subsection{Convergence Rates} \label{sec:proof:rates}

We will show that $\GFmargin(\param_t)$ is a good approximation of $\gamma(\param_t)$. Before we proceed, we need one auxiliary margin. 
\begin{equation}
\label{eq:smooth-margin}
        \Smoothmargin(\param_t) \coloneqq \frac{\LinkFun(\Loss_t)}{\rho_t^M}.
\end{equation}

\begin{lemma}
    [Modified margin is a good approximation]
    \label{lem:gamma-approx}
    Under \Cref{asp:nearhomo} and $\Loss_t < e^{-\homop_a(\rho_t)}/n$, for the \Cref{eq:GF}, we have 
    \[
        \GFmargin (\param_t) \le \Smoothmargin(\param_t) \le \Normalmargin(\param_t)\le \bigg ( 1+ \frac{\log n + \homop_a(\rho_t)}{\LinkFun(\Loss_t) -\homop_a(\rho_t)} \bigg) \GFmargin(\param_t).
    \]
\end{lemma}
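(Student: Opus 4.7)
} The statement consists of three inequalities chained together, and each of them follows by direct algebraic manipulation combined with Lemma~\ref{lem:f-min}. The plan is to verify them one at a time.

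For the leftmost inequality $\GFmargin(\param_t)\le \Smoothmargin(\param_t)$, I would simply compute
\[
\Smoothmargin(\param_t)-\GFmargin(\param_t)
= \frac{\LinkFun(\Loss_t)}{\rho_t^M} - \frac{\LinkFun(\Loss_t)-\homop_a(\rho_t)}{\rho_t^M}
= \frac{\homop_a(\rho_t)}{\rho_t^M}\ge 0,
\]
where non-negativity holds because $\homop_a\in\Rbb_{\ge 0}[x]$ by construction in \Cref{eq:def-pa} and $\rho_t\ge 0$.

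For the middle inequality $\Smoothmargin(\param_t)\le \Normalmargin(\param_t)$, I would invoke the left half of Lemma~\ref{lem:f-min} to get $\LinkFun(\Loss_t)=\log\!\big(1/(n\Loss_t)\big)\le \bar f_{\min}(\param_t)=\min_i y_i f(\param_t;\xB_i)$, and then divide by $\rho_t^M$.

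For the rightmost inequality, I would use the right half of Lemma~\ref{lem:f-min} to write $\bar f_{\min}(\param_t)\le \log(1/\Loss_t)=\LinkFun(\Loss_t)+\log n$, so that $\Normalmargin(\param_t)\le \big(\LinkFun(\Loss_t)+\log n\big)/\rho_t^M$. Then I would observe the algebraic identity
\[
\bigg(1+\frac{\log n+\homop_a(\rho_t)}{\LinkFun(\Loss_t)-\homop_a(\rho_t)}\bigg)\GFmargin(\param_t)
=\frac{\LinkFun(\Loss_t)-\homop_a(\rho_t)+\log n+\homop_a(\rho_t)}{\rho_t^M}
=\frac{\LinkFun(\Loss_t)+\log n}{\rho_t^M},
\]
which matches the upper bound on $\Normalmargin(\param_t)$. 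The hypothesis $\Loss_t<e^{-\homop_a(\rho_t)}/n$, equivalent to $\LinkFun(\Loss_t)-\homop_a(\rho_t)>0$, is only needed to ensure that the denominator in the multiplicative factor is positive so that the manipulation is legitimate and $\GFmargin(\param_t)>0$.

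There is no real obstacle here: the lemma is essentially a bookkeeping result that records how the three margins differ by the additive correction $\homop_a(\rho_t)$ and the normalization constant $\log n$. The only subtlety worth flagging in the write-up is the sign check on $\LinkFun(\Loss_t)-\homop_a(\rho_t)$ to justify that $\GFmargin(\param_t)$ is positive and the multiplicative factor is well defined, which is exactly where \Cref{asp:initial-cond-gf} (equivalently, the hypothesis of this lemma) is used.
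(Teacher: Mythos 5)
Your proof is correct and follows essentially the same route as the paper's: both use the two-sided bound from Lemma~\ref{lem:f-min} on $\bar f_{\min}(\param_t)$ and then verify the three inequalities by direct algebraic rearrangement of the definitions of $\GFmargin$, $\Smoothmargin$, and $\Normalmargin$. The only cosmetic difference is that the paper carries out the final step as a ratio $\Normalmargin/\GFmargin$ while you expand the multiplicative factor and compare numerators, which is the same computation.
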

\begin{proof}[Proof of \Cref{lem:gamma-approx}]
Note that we have $\bar f_{\min}(\param_t) \ge \LinkFun(\Loss_t)$ from the proof of Lemma~\ref{lem:NH-rho-increase}. Then, we have 
\[
    \GFmargin (\param_t) = \frac{\LinkFun(\Loss_t) - \homop_a(\rho_t)}{\rho_t^M} \le \frac{\LinkFun(\Loss_t)}{\rho_t^M} = \Smoothmargin(\param_t) \le \frac{\bar f_{\min}(\param_t)}{\rho_t^M} = \Normalmargin (\param_t).
\]
For the last inequality, note that $\bar f_{\min}(\param_t) \le \log \frac{1}{\Loss_t}$ from Lemma~\ref{lem:f-min}, then using the definition of $\phi(\cdot)$ we have
\begin{align*}
    \frac{\gamma(\param_t)}{\GFmargin(\param_t)} 
    = \frac{\bar f_{\min}(\param_t)}{\LinkFun(\Loss_t) - \homop_a(\rho_t)} 
    \le \frac{ \log \frac{1}{\Loss_t}}{\LinkFun(\Loss_t) - \homop_a(\rho_t)}
    = 1 + \frac{\log n + \homop_a(\rho_t)}{\LinkFun(\Loss_t) -\homop_a(\rho_t)}.
\end{align*}
This completes the proof of \Cref{lem:gamma-approx}. 
\end{proof}
We will later show that 
\[
    \frac{\log n + \homop_a(\rho_t)}{\LinkFun(\Loss_t) -\homop_a(\rho_t)} \approx \frac{\log n + \homop_a(\rho_t)}{\log \frac{1}{\Loss_t}} \to 0.
\] 
This implies that $\GFmargin(\param_t)$ is a good multiplicative approximation of $\Normalmargin(\param_t)$. We need to characterize the behaviors of $\Loss_t$ and $\rho_t$.  

\begin{lemma}[Upper bound of the risk]
\label{lem:L-upperbound} 
Under \Cref{asp:nearhomo,asp:initial-cond-gf}, for \cref{eq:GF}, we have
\[
    \Loss_t = \Ocal \bigg( \frac{1}{t(\log t)^{2-2/M}} \bigg),
\]
where the hidden constant depends $\GFmargin(\param_s)^{\frac{2}{M}}$. 
\end{lemma}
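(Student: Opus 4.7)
}
The plan is to derive a differential inequality for $1/\Loss_t$ and then bootstrap it twice: once to get $\Loss_t \lesssim 1/t$, and a second time (feeding $\phi(\Loss_t)\gtrsim \log t$ back into the inequality) to obtain the sharper bound with the $(\log t)^{2-2/M}$ factor. The key quantity is $g(t):=\phi(\Loss_t) = \log(1/(n\Loss_t))$, which is already controlled by the modified margin through \Cref{thm:gamma-a-increase}.

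\emph{Deriving the differential inequality.}
By \Cref{lem:chain-rule-clark} we have $-\Loss_t' = \|\bar\partial \Loss_t\|^2$, and hence $(1/\Loss_t)' = \|\bar\partial \Loss_t\|^2/\Loss_t^2$. On the other hand, Cauchy--Schwarz together with \Cref{lem:NH-rho-increase} gives
\begin{equation*}
\|\bar\partial \Loss_t\|\,\rho_t \;\ge\; \langle -\bar\partial\Loss_t,\param_t\rangle \;=\; \tfrac{1}{2}(\rho_t^2)' \;\ge\; \bigl(Mg(t) - \homop'(\rho_t)\bigr)\Loss_t,
\end{equation*}
which is valid for $t \ge s$ by \Cref{thm:gamma-a-increase}. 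Squaring and dividing by $\Loss_t^2$ yields
\begin{equation*}
(1/\Loss_t)' \;\ge\; \bigl(Mg(t) - \homop'(\rho_t)\bigr)^2/\rho_t^2.
\end{equation*}
From \Cref{thm:gamma-a-increase} and the definition of the modified margin we have $g(t)-\homop_a(\rho_t) \ge \GFmargin(\param_s)\rho_t^M$, which gives the two-sided control $\rho_t^M \le g(t)/\GFmargin(\param_s)$ and, combined with \Cref{lem:property-pa} (to handle the cross term $M\homop_a-\homop'\ge 0$), the comparison $Mg(t) - \homop'(\rho_t) \ge M\bigl(g(t) - \homop_a(\rho_t)\bigr)$. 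Since $\deg \homop_a \le M-1$ and $\rho_t^{M-1} = o(\rho_t^M) = o(g(t))$, for $t$ large enough we get $Mg(t)-\homop'(\rho_t)\gtrsim g(t)$, and therefore
\begin{equation*}
(1/\Loss_t)' \;\gtrsim\; \GFmargin(\param_s)^{2/M}\, g(t)^{2-2/M}.
\end{equation*}

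\emph{Bootstrap.}
Because $g(t)\ge g(s)>0$, the above inequality first yields $(1/\Loss_t)'\gtrsim 1$, hence $1/\Loss_t \gtrsim t$ and so $\Loss_t\lesssim 1/t$. Feeding this crude bound back, $g(t) = \log(1/(n\Loss_t)) \gtrsim \log t$ for $t$ large, so
\begin{equation*}
(1/\Loss_t)' \;\gtrsim\; \GFmargin(\param_s)^{2/M} (\log t)^{2-2/M}.
\end{equation*}
Integrating from some $T$ onwards and using $\int^t(\log u)^{2-2/M}\,du \sim t(\log t)^{2-2/M}$ (integration by parts, valid since $2-2/M\in[0,2)$) gives $1/\Loss_t \gtrsim \GFmargin(\param_s)^{2/M}\, t(\log t)^{2-2/M}$, which is the desired bound.

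\emph{Main obstacle.}
The one delicate point is ensuring that the lower-order polynomial terms $\homop_a(\rho_t)$ and $\homop'(\rho_t)$ are genuinely negligible compared to $g(t)$ and $g(t)^{2/M}\rho_t^2$, respectively, rather than dominating the estimates for small $t$; this is handled by restricting to $t$ large enough that the strong separability condition (which forces $g(t)>\homop_a(\rho_t)$) and the inequality $\rho_t^{M-1}\lesssim g(t)^{(M-1)/M} = o(g(t))$ kick in. All transient behavior up to that threshold is absorbed into the hidden constant, which therefore depends on $\GFmargin(\param_s)$ and the coefficients of $\homop$, $\homoq$ — exactly matching the lemma's statement.
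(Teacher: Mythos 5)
Your overall structure (Cauchy--Schwarz $\Rightarrow$ differential inequality for $1/\Loss_t$ $\Rightarrow$ integrate) is the same as the paper's, but you finish with a two-round bootstrap, whereas the paper performs a single integration after the substitution $\Gcal_t = 1/(n\Loss_t)$, using $S(x)=\int_{\Gcal_s}^x(\log u)^{-(2-2/M)}\,du$ so that $\frac{d}{dt}S(\Gcal_t)\gtrsim 1$ and the $(\log t)^{2-2/M}$ factor emerges from inverting $S$ (their \Cref{lem:int-log}). Both finishes are valid; yours is slightly more elementary, theirs is more compact.

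However, there is a genuine gap in the step where you claim $Mg(t)-\homop'(\rho_t)\gtrsim g(t)$. Both justifications you offer are circular. The route through ``$\rho_t^{M-1}=o(\rho_t^M)=o(g(t))$ for $t$ large'' requires $\rho_t\to\infty$ (equivalently $g(t)\to\infty$), which is itself a consequence of the lemma being proved; at this point in the argument you only know $\rho_t$ is increasing and $g(t)\ge g(s)>0$, not that either diverges. The route via \Cref{lem:property-pa}, i.e.\ $M\homop_a\ge\homop'$, only yields $Mg-\homop'\ge M(g-\homop_a)>0$, which is strictly positive but not a priori a constant fraction of $g$, so the bootstrap's first round $(1/\Loss_t)'\gtrsim 1$ does not launch.

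The paper closes this gap with a uniform algebraic inequality that follows directly from the specific definition of $\homop_a$ in \Cref{eq:def-pa}: comparing coefficients term-by-term one gets $(M-1/2)\homop_a(x)\ge\homop'(x)$ for all $x\ge 0$ (the factor $M-1/2$ rather than $M$ in the $a_1$ term of $\homop_a$ is what makes this nonvacuous). Combined with strong separability $g(t)>\homop_a(\rho_t)$, this gives $Mg(t)-\homop'(\rho_t)>Mg(t)-(M-1/2)g(t)=\tfrac12 g(t)$ for every $t\ge s$, with an explicit constant $1/2$ and no asymptotic qualifier. Once you have this, your differential inequality becomes $(1/\Loss_t)'\ge\tfrac14\GFmargin(\param_s)^{2/M}\,g(t)^{2-2/M}$ for all $t\ge s$, and your bootstrap (or the paper's substitution) goes through. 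I'd recommend replacing the ``for $t$ large enough'' reasoning with this uniform bound.
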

\begin{proof}[Proof of \Cref{lem:L-upperbound}]
Recall that in \Cref{lem:GF-risk-decrease}, we have shown that 
\[
    - \frac{\mathrm{d} \Loss_t}{\mathrm{d} t} \ge \frac{1}{\rho_t^2} \cdot\bigg(\frac{1}{2} \frac{\mathrm{d}\rho_t^2}{\mathrm{d} t} \bigg)^2.
\]
By \Cref{thm:gamma-a-increase}, $\Loss_t < e^{-\homop_a(\rho_t)}/n$ for all $t\ge s$.  Note that when $\Loss_t < e^{-\homop_a(\rho_t)}/n$, we have for $M\ge 2$, 
\begin{align*}
    (M-1/2) \LinkFun(\Loss_t) &> (M-1/2) \homop_a(\rho_t) \\ 
    & = \sum_{i=1}^{M-1} \frac{(M-1/2)(i+1)a_{i+1}}{M-i}\rho_t^i + \frac{(M-1/2)a_1}{M-1/2}\\ 
    &\ge \sum_{i=1}^{M-1} (i+1)a_{i+1}\rho_t^i +a_1 = \homop^{\prime}(\rho_t).
\end{align*}
This leads to 
\begin{align}
    \label{eq:v_t log lb}
    M \LinkFun(\Loss_t) - \homop^{\prime}(\rho_t)   \ge \frac{1}{2}\LinkFun(\Loss_t) >0.
\end{align}

By Lemma~\ref{lem:NH-rho-increase}, we have
\begin{align*}
- \frac{\mathrm{d} \Loss_t}{\mathrm{d} t} 
& \ge \frac{1}{\rho_t^2} \cdot \big(  M \Loss_t \LinkFun(\Loss_t) - p^\prime (\rho_t)\Loss_t \big)^2\\ 
& \ge \frac{1}{4\rho_t^2} \cdot \big( \Loss_t \LinkFun(\Loss_t)\big)^2 &&\explain{ By the inequality above } \\ 
& = \frac{\Smoothmargin(\param_t)^{\frac{2}{M}}}{4\big(\LinkFun(\Loss_t)\big)^{\frac{2}{M}}} \cdot \big( \Loss_t \LinkFun(\Loss_t)\big)^2 &&\explain{ By the definition of $\Smoothmargin(\param_t)$ } \\
& \ge \frac{\GFmargin(\param_t)^{\frac{2}{M}}}{4} \cdot  \Loss_t^2\big( \LinkFun(\Loss_t)\big)^{2-2/M} &&\explain{ Since $\Smoothmargin(\cdot) \ge \GFmargin(\cdot)$ } \\ 
&\ge \frac{\GFmargin(\param_s)^{\frac{2}{M}}}{4}  \cdot  \Loss_t^2\big( \LinkFun(\Loss_t)\big)^{2-2/M} &&\explain{ By Theorem~\ref{thm:gamma-a-increase} }\\ 
& = c \Loss_t^2\big( \LinkFun(\Loss_t)\big)^{2-2/M},
\end{align*}
where $c = \frac{\GFmargin(\param_s)^{\frac{2}{M}}}{4}  $. Equivalently, we have
\[
    \frac{1}{\big( \LinkFun(\Loss_t)\big)^{2-2/M} } \cdot \frac{\mathrm{d}}{\mathrm{d}t} \frac{1}{n\Loss_t} \ge \frac{c}{n}. 
\]
Let $\Gcal_t \coloneqq \frac{1}{n\Loss_t}$ and $S(x) = \int_{\Gcal_s}^x \frac{1}{(\log t)^{2-2/M}} dt$. Then we have 
\[
    \frac{\Gcal_t^\prime }{(\log \Gcal_t)^{2-\frac{2}{L}}} \ge \frac{c}{n} \quad \Rightarrow \quad S(\Gcal_t) \ge \frac{c}{n}(t-s).
\]
Note that
\begin{align*}
    \Gcal_s = \frac{1}{n \Loss_s} \ge \frac{\exp\big(-\homop_a(\rho_s)\big)}{n\Loss_s} >1. 
\end{align*}
We can invoke Lemma~\ref{lem:int-log} to get 
\[
    \Gcal_t \ge S^{-1} \bigg( \frac{c}{n}(t-s) \bigg) = \Omega(t(\log t) ^{2-2/M}) \implies \Loss_t = \Ocal \bigg( \frac{1}{t(\log t)^{2-2/M}} \bigg).
\]
This completes the proof of \Cref{lem:L-upperbound}.  
\end{proof}

Then we control the rate of the parameter norm with respect to the risk. The following lemma characterizes the M-near homogeneity.

\begin{lemma}
[Near homogeneity]
\label{lem:near-homogeneity}
Let $f(\param;\xB_i)$ be locally Lipschitz and $M$-near-homogeneous, then for $\|\param\|\ge r$, there exists a constant $B_r$ such that 
\[
    |f(\param; \xB_i)| \le B_r \cdot \|\param_t\|^M \quad \text{ for all } i \in [n].
\]
\end{lemma}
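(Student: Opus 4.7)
The plan is an immediate application of Definition~\ref{def:nearhomo}(C). By the near-$M$-homogeneity hypothesis, there exists a polynomial $\homoq \in \Rbb_{\ge 0}[x]$ with $\deg \homoq \le M$ such that $|f(\param;\xB_i)| \le \homoq(\|\param\|)$ for every $\param \in \Rbb^D$ and every $i\in[n]$. Write $\homoq(x) = \sum_{j=0}^M q_j x^j$ with $q_j \ge 0$. The only content of the lemma is to convert this polynomial bound into a single-monomial bound valid on the region $\{\|\param\|\ge r\}$.

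To that end, observe that for $\|\param\| \ge r > 0$ and $0 \le j \le M$, the exponent $j - M$ is non-positive, so $\|\param\|^{j-M} \le r^{j-M}$. Therefore
\[
\homoq(\|\param\|) = \sum_{j=0}^M q_j \|\param\|^{j-M} \|\param\|^M \le \Big( \sum_{j=0}^M q_j\, r^{j-M} \Big) \|\param\|^M,
\]
and one can take $B_r := \sum_{j=0}^M q_j r^{j-M}$, which is finite and depends only on $r$ and the coefficients of $\homoq$. Combining this with the bound $|f(\param;\xB_i)| \le \homoq(\|\param\|)$ yields the claim.

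There is no genuine obstacle here: the lemma is a book-keeping reformulation of Definition~\ref{def:nearhomo}(C), replacing an arbitrary degree-$\le M$ polynomial bound by a clean monomial bound of the form $B_r \|\param\|^M$ on the tail region $\{\|\param\|\ge r\}$. This is precisely the form needed in the convergence-rate analyses of Section~\ref{sec:proof:rates}, where one repeatedly compares $|f(\param_t;\xB_i)|$ against $\|\param_t\|^M$ as $\|\param_t\|$ diverges.
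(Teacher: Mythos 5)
Your proof is correct, and it is also noticeably shorter than the paper's. But the two arguments use genuinely different hypotheses, and the difference matters for how the lemma gets reused later.

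You invoke condition (A3) of Definition~\ref{def:nearhomo} directly: $|f(\param;\xB_i)| \le \homoq(\|\param\|)$ with $\deg\homoq \le M$, and then convert the polynomial bound into a single monomial bound on the region $\|\param\| \ge r$ by the elementary estimate $q_j\|\param\|^j = q_j\|\param\|^{j-M}\|\param\|^M \le q_j r^{j-M}\|\param\|^M$. This is airtight. The paper, by contrast, deliberately avoids (A3) and uses only the Euler-type condition (A1): it sets $g(t) = f(t\bar\param;\xB_i)/t^M$ with $\bar\param = r\param/\|\param\|$, bounds $|g'(t)| \le \homop'(t r)/t^{M+1} \lesssim r^{M-1}/t^2$ via (A1), and integrates along the ray from the sphere of radius $r$ outward. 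The payoff for this extra work is made explicit in the paper's \Cref{lem: M-near homogeneity bound} and its corollary: the authors want the observation that (A1) alone implies (A3) (and likewise (B1) implies (B3) for blocks), so that when verifying near-homogeneity of a new block one only needs to check (A1) and (A2). Your argument, being an application of (A3), cannot deliver that implication. So as a proof of the lemma as stated your approach is fine, but it would not serve as the proof of \Cref{lem: M-near homogeneity bound} (which has the identical radial-integration structure), and the paper reuses this lemma precisely for that purpose.
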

\begin{proof}[Proof of \Cref{lem:near-homogeneity}]
We can fix an index $i$ and let $c_1 = \max_{\|\param\|= r} |f(\param; \xB_i)|/r^M$. For any $\|\param\|\ge r$, we let $\bar \param_t= r \param/\|\param\|$. Consider  $g(t) = f(t \bar \param; \xB_i)/t^M$. Since $f$ is locally Lipschitz, $ g(t)$ is also Locally Lipschitz and its derivative exists almost everywhere. Then we have for almost every $t>0$,
\[
    g^\prime (t) = \frac{\langle \nabla_{\theta}f(t\bar \param;\xB_i) , t\bar \param_t\rangle - M f(t\bar \param; \xB_i) }{t^{M+1}} \le \frac{a \|\bar \param\|^{M-1}}{t^2} = \frac{ar^{M-1}}{t^2}.  
\]
We can also show that $g^\prime (t) \ge -ar^{M-1}/t^2$. Note that $|g(1)|\le c_1r^M$.  Since $g(t)$ is locally Lipschitz, it's also absolutely continuous. Hence we can apply the fundamental theorem of calculus to get
\begin{align*}
\frac{|f(\param; \xB_i)|r^M}{\|\param\|^M} 
&=  g(\|\param\|/r) \\
& = g(1) + \int_1^{\|\param\|/r} g^\prime (t) dt \\ 
&\le c_1r^M + \int_1^{\|\param\|/r} |g^\prime (t)| dt  \\ 
&\le  c_1 r^M + ar^{M-1}.
\end{align*}
This is equivalent to 
\[
    |f(\param; \xB_i)| \le ( a/r + c_1) \cdot \|\param\|^M.
\]
Let $B_r = a/r + c_1$. We complete the proof of \Cref{lem:near-homogeneity}. 
\end{proof}

With this Lemma, we show that $\Normalmargin$ and $\GFmargin$ are both bounded. 

\begin{lemma}
    [Boundedness of margins]
    \label{lem:margin-bounded}
    Under \Cref{asp:nearhomo,asp:initial-cond-gf}, we have
    \[
    \GFmargin(\param_t) \le \Normalmargin(\param_t)\le B_{\|\rho_s\|}. 
    \] 
\end{lemma}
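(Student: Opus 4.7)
The plan is to combine two ingredients already at hand: the sandwich inequality from \Cref{lem:gamma-approx} for the left inequality, and the near-homogeneity envelope from \Cref{lem:near-homogeneity} for the right inequality. First, \Cref{thm:gamma-a-increase} guarantees that $\Loss_t < e^{-\homop_a(\rho_t)}/n$ for all $t \ge s$, which is exactly the hypothesis needed to apply \Cref{lem:gamma-approx}, giving $\GFmargin(\param_t) \le \Smoothmargin(\param_t) \le \Normalmargin(\param_t)$. This disposes of the first inequality immediately.

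For the second inequality, I would first observe that the parameter norm is nondecreasing along the flow for $t \ge s$. This follows from \Cref{lem:NH-rho-increase}, since the strong-separability condition $\Loss_t < e^{-\homop_a(\rho_t)}/n$ holds throughout $t \ge s$ by \Cref{thm:gamma-a-increase}, and the lemma then yields $\tfrac{1}{2} \mathrm{d}\rho_t^2 / \mathrm{d} t > 0$ almost everywhere. Since $\param_t$ is an arc, the absolute continuity of $\rho_t^2$ propagates this into $\rho_t \ge \rho_s$ for every $t \ge s$.

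Next, I would invoke \Cref{lem:near-homogeneity} with $r = \rho_s$: the near-$M$-homogeneity assumption (\Cref{asp:nearhomo}) together with local Lipschitzness gives a constant $B_{\rho_s}$ (depending only on the data, $\homoq$, and $\rho_s$) such that $|f(\param; \xB_i)| \le B_{\rho_s} \|\param\|^M$ whenever $\|\param\| \ge \rho_s$. Applying this at $\param = \param_t$ for every $t \ge s$ and every $i \in [n]$ yields
\[
y_i f(\param_t; \xB_i) \le |f(\param_t; \xB_i)| \le B_{\rho_s} \|\param_t\|^M,
\]
and dividing by $\|\param_t\|^M$ then taking the minimum over $i$ gives $\Normalmargin(\param_t) \le B_{\rho_s}$. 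Chaining the two inequalities produces the conclusion.

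I do not expect a substantial obstacle here: the statement is essentially a bookkeeping consequence of previously established facts. The only minor point of care is ensuring that $\rho_t \ge \rho_s$ holds for \emph{every} $t \ge s$ (not merely almost every $t$), so that \Cref{lem:near-homogeneity} applies pointwise in $t$; this is handled by the absolute continuity of $t \mapsto \rho_t^2$ plus the a.e.\ positivity of its derivative, both of which are immediate from the arc structure of $\param_t$ and \Cref{lem:NH-rho-increase}.
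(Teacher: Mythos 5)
Your proof is correct and follows essentially the same route as the paper: both rely on the monotonicity $\rho_t \ge \rho_s$ (via \Cref{lem:NH-rho-increase} under the strong-separability condition propagated by \Cref{thm:gamma-a-increase}), apply \Cref{lem:near-homogeneity} with $r = \rho_s$ to bound $\Normalmargin(\param_t) \le B_{\rho_s}$, and chain with $\GFmargin \le \Normalmargin$ from \Cref{lem:gamma-approx}. Your write-up is simply more explicit about the dependencies the paper leaves implicit.
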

\begin{proof}[Proof of \Cref{lem:margin-bounded}]
    Note that $\rho_t$ is increasing. Therefore, $\rho_t \ge \rho_s$. Then we can apply \Cref{lem:near-homogeneity} to get 
    \[
        \Normalmargin(\param_t) = \frac{\min y_i f(\param_t;\xB_i)}{\rho_t^M} \le B_{\|\rho_s\|} . 
    \]
    Therefore, $\GFmargin(\param_t) \le \Normalmargin(\param_t)\le B_{\|\rho_s\|}$.
\end{proof}

\begin{lemma}
[Rate of the parameter norm]
\label{lem:rho-rate}
Under \Cref{asp:nearhomo,asp:initial-cond-gf}, for \Cref{eq:GF}, we have 
\[
    \rho_t^M = \Theta \Big( \log \frac{1}{\Loss_t} \Big), 
\]
where the hidden constant depends on $\GFmargin(\param_s)$ and $\homoq$. 
\end{lemma}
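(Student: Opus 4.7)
The plan is to establish the two-sided bound $\rho_t^M \eqsim \LinkFun(\Loss_t) \eqsim \log(1/\Loss_t)$ by combining the monotonicity of the modified margin (which gives the upper bound on $\rho_t^M$) with the uniform upper bound on the normalized margin coming from near-homogeneity (which gives the lower bound on $\rho_t^M$). Both directions only require bounds already established earlier in the paper, so no new dynamical estimates should be needed.

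For the upper bound, I would use \Cref{thm:gamma-a-increase}, which guarantees $\GFmargin(\param_t) \ge \GFmargin(\param_s) > 0$ for all $t \ge s$. Unpacking the definition of $\GFmargin$ in \Cref{eq: modified margin} and using $\homop_a(\rho_t) \ge 0$ (all coefficients in $\homop_a$ are nonnegative because $\homop \in \Rbb_{\ge 0}[x]$), I get
\[
\GFmargin(\param_s) \le \GFmargin(\param_t) = \frac{\LinkFun(\Loss_t) - \homop_a(\rho_t)}{\rho_t^M} \le \frac{\LinkFun(\Loss_t)}{\rho_t^M},
\]
so $\rho_t^M \le \LinkFun(\Loss_t)/\GFmargin(\param_s) \le \log(1/\Loss_t)/\GFmargin(\param_s)$. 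The hidden constant depends only on $\GFmargin(\param_s)$, as claimed.

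For the lower bound, I would combine \Cref{lem:f-min} with the boundedness of the normalized margin from \Cref{lem:margin-bounded}. Concretely, the chain
\[
\LinkFun(\Loss_t) \le \bar f_{\min}(\param_t) \le \Normalmargin(\param_t) \cdot \rho_t^M \le B_{\rho_s} \cdot \rho_t^M
\]
gives $\rho_t^M \ge \LinkFun(\Loss_t)/B_{\rho_s} = \Omega(\log(1/\Loss_t))$. The constant $B_{\rho_s}$ arises from \Cref{lem:near-homogeneity} applied at $r = \rho_s$, and by inspection of that lemma's proof it depends only on the coefficient polynomial $\homoq$ from \Cref{asp:nearhomo} (and $\rho_s$), matching the statement.

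There is no serious obstacle here; the only subtlety is making sure the constants are traced back to the correct quantities, namely $\GFmargin(\param_s)$ for the upper bound and $\homoq$ (via $B_{\rho_s}$) for the lower bound. Combining the two inequalities yields $\rho_t^M = \Theta(\log(1/\Loss_t))$, with the $\Theta$ hiding constants depending on $\GFmargin(\param_s)$ and the coefficients of $\homoq$, completing the proof.
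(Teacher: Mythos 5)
Your proposal is correct and matches the paper's proof essentially step for step: the upper bound comes from the monotonicity $\GFmargin(\param_t)\ge\GFmargin(\param_s)$ (the paper phrases this through $\Smoothmargin$, you through the definition of $\GFmargin$ directly, but it is the same inequality), and the lower bound comes from chaining \Cref{lem:f-min} with the uniform bound $\bar f_{\min}(\param_t)\le B\rho_t^M$ from \Cref{lem:near-homogeneity} (you package this via \Cref{lem:margin-bounded}, which is just that bound restated). The constant tracking is also consistent with the paper.
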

 \begin{proof}[Proof of \Cref{lem:rho-rate}]
Recall that 
 \[
    \Smoothmargin(\param_t) \ge \GFmargin(\param_t) \ge \GFmargin(\param_s). 
 \]
 Then we have 
 \[
    \rho_t^M \le \frac{1}{\GFmargin(\param_s)} \cdot \LinkFun(\Loss_t) = \Ocal \Big( \log \frac{1}{\Loss_t} \Big).
 \]
 On the other hand, by \Cref{lem:near-homogeneity,lem:f-min} we have
 \[
    B \rho_t^M \ge \bar f_{\min}(\param_t) \ge  \log \frac{1}{n\Loss_t} \implies \rho_t^M = \Omega \Big( \log \frac{1}{\Loss_t} \Big).
 \]
 Combining these two bounds, we have
 \[
    \rho_t^M = \Theta \Big( \log \frac{1}{\Loss_t} \Big).
 \]
We complete the proof of \Cref{lem:rho-rate}. 
 \end{proof}

 \begin{lemma}
 [Lower bound of the risk]
 \label{lem:L-lowerbound}
 Under \Cref{asp:initial-cond-gf,asp:nearhomo}, for \Cref{eq:GF}, we have
 \[
        \Loss_t = \Omega \bigg( \frac{1}{t(\log t)^{2-2/M}} \bigg), 
 \]
where the hidden constant depends on $\GFmargin(\param_s)$ and $\homoq(x)$.
 \end{lemma}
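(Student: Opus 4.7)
The plan is to mirror the upper-bound proof of \Cref{lem:L-upperbound}, but starting from an upper bound on $-\mathrm{d}\Loss_t/\mathrm{d}t$ rather than a lower bound. Recall that $-\mathrm{d}\Loss_t/\mathrm{d}t = \|\bar\partial\Loss_t\|^2$, so by \Cref{cor:subgrad-exp-loss} and the subgradient norm bound $\|\hB_i\|\le \homoq'(\rho_t)$ from \Cref{asp:nearhomo}, the triangle inequality gives
\[
\|\bar\partial\Loss_t\| \le \frac{1}{n}\sum_{i=1}^n e^{-\bar f_i(\param_t)}\|\hB_i\| \le \homoq'(\rho_t)\,\Loss_t,
\]
and hence $-\mathrm{d}\Loss_t/\mathrm{d}t \le \homoq'(\rho_t)^2 \Loss_t^2$.

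Next I would plug in the control on $\rho_t$ coming from \Cref{lem:rho-rate}, which says $\rho_t^M = \Theta(\log(1/\Loss_t))$. Since $\homoq\in\Rbb_{\ge 0}[x]$ has degree at most $M$, its derivative $\homoq'$ has degree at most $M-1$, so $\homoq'(\rho_t)^2 = \Ocal(\rho_t^{2(M-1)}) = \Ocal\big((\log(1/\Loss_t))^{2-2/M}\big)$, where the hidden constant depends on the coefficients of $\homoq$ and on $\GFmargin(\param_s)$ through \Cref{lem:rho-rate}. Combining the two estimates yields
\[
-\frac{\mathrm{d}\Loss_t}{\mathrm{d}t} \le C\,\Loss_t^2\,\big(\log(1/\Loss_t)\big)^{2-2/M}
\]
for some constant $C>0$ depending on $\GFmargin(\param_s)$ and $\homoq$.

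Finally, with $\Gcal_t := 1/(n\Loss_t)$ and the same change of variable as in the proof of \Cref{lem:L-upperbound}, the last inequality becomes $\Gcal_t'/(\log \Gcal_t)^{2-2/M} \le C'$. Integrating from $s$ to $t$ and applying the inverse of $S(x)=\int_{\Gcal_s}^x (\log u)^{2-2/M}\,\mathrm{d}u^{-1}$ via \Cref{lem:int-log} gives $\Gcal_t \le S^{-1}(C'(t-s)) = \Ocal(t(\log t)^{2-2/M})$, which is exactly $\Loss_t = \Omega\big(1/(t(\log t)^{2-2/M})\big)$. The main (mild) technical point is to verify that the asymptotic estimate $\homoq'(\rho_t) = \Ocal(\rho_t^{M-1})$ is uniform in $t\ge s$, which follows because $\rho_t$ is non-decreasing (\Cref{lem:NH-rho-increase}) and bounded below by $\rho_s>0$, so low-degree terms of $\homoq'$ are absorbed into a constant factor; everything else is routine bookkeeping analogous to \Cref{lem:L-upperbound}.
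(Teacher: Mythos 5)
Your proof mirrors the paper's almost step for step: bound $\|\bar\partial\Loss_t\|\le \homoq'(\rho_t)\Loss_t$ from \Cref{cor:subgrad-exp-loss} and Assumption (A2), convert $\homoq'(\rho_t)^2$ to $\Ocal((\log(1/\Loss_t))^{2-2/M})$ via \Cref{lem:rho-rate}, then integrate using the same change of variable $\Gcal_t$ and \Cref{lem:int-log} as in \Cref{lem:L-upperbound}. The paper likewise absorbs lower-degree terms of $\homoq'$ using the monotone lower bound $\rho_t\ge\rho_s$, so your last remark on uniformity is exactly the paper's justification. This is the same approach and it is correct.
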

 \begin{proof}[Proof of \Cref{lem:L-lowerbound}]
 Note that by \Cref{lem:chain-rule-clark,cor:subgrad-exp-loss},
 \[
    \frac{\mathrm{d} \param_t}{\mathrm{d} t} = - \bar \partial \Loss (\param_t) = \frac{1}{n}\sum_{i=1}^n e^{-y_i f(\param;\xB_i)} y_i \hB_i,
 \]
where $\bar \partial $ \Cref{eq:min-norm-sub-grad} denotes the minimal norm subgradient and $\hB_i \in \partial f(\param;\xB_i)$. Therefore under \Cref{asp:nearhomo}, we have 
 \[
    \| \bar \partial \Loss_t \| \le \frac{1}{n}\sum_{i=1}^n e^{-y_i f(\param;\xB_i)} \| \hB_i\| \le \Loss_t\cdot  \homoq^\prime (\rho_t).
 \]
By \Cref{asp:nearhomo} and increase of $\rho_t$ , we can bound $\homoq^\prime (\rho_t)$ by $b \|\rho_t\|^{M-1}$ for some constant $b$. 
Then we have 
\[
    - \frac{\mathrm{d} \Loss_t}{\mathrm{d} t} = \| \bar \partial \Loss_t\|^2 \le \Loss_t^2 b^2 \rho_t^{2M-2} \le \Loss_t^2\cdot \Ocal \bigg( \Big(\log \frac{1}{\Loss_t}\Big)^{2-2/M}\bigg). 
\]
By the definition of $S(\cdot)$ in the proof of \Cref{lem:L-upperbound}, this implies that 
\[
    \frac{\mathrm{d}}{\mathrm{d}t} S\bigg(\frac{1}{\Loss_t}\bigg)\le c. 
\]
Therefore we have 
\[
    \frac{1}{\Loss_t} \le S^{-1}(c(t-s)) = \Ocal(t(\log t)^{2-2/M})\implies \Loss_t = \Omega \bigg( \frac{1}{t(\log t)^{2-2/M}} \bigg).
\]
This completes the proof of \Cref{lem:L-lowerbound}.
 \end{proof}

 \begin{theorem}
 [Rates of the risk and the parameter norm]
 \label{thm:rate-rho-L}
 Under \Cref{asp:initial-cond-gf,asp:nearhomo}, for \Cref{eq:GF}, we have
 \[
    \Loss_t = \Theta \bigg( \frac{1}{t(\log t)^{2-2/M}} \bigg), \quad \rho_t = \Theta \big(     (\log t)^{\frac{1}{M}} \big),
 \]
where the hidden constant depends on $\GFmargin(\param_s)$ and $\homoq(x)$.
 \end{theorem}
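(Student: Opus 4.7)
The plan is to assemble \Cref{thm:rate-rho-L} directly from the three preceding lemmas, treating it essentially as a bookkeeping step. First, I would note that the two-sided rate on $\Loss_t$ is immediate: the upper bound $\Loss_t = \Ocal\bigl(1/(t(\log t)^{2-2/M})\bigr)$ is provided by \Cref{lem:L-upperbound}, while the matching lower bound $\Loss_t = \Omega\bigl(1/(t(\log t)^{2-2/M})\bigr)$ is provided by \Cref{lem:L-lowerbound}, and both hidden constants already have the advertised dependence on $\GFmargin(\param_s)$ and coefficients of $\homoq$. Combining them yields the first claim.

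For the parameter norm, I would invoke \Cref{lem:rho-rate}, which gives the equivalence $\rho_t^M = \Theta\bigl(\log(1/\Loss_t)\bigr)$ with constants of the stated type. Substituting the $\Theta$-rate on $\Loss_t$ obtained above,
\begin{equation*}
\log \frac{1}{\Loss_t} = \log t + \Bigl(2 - \frac{2}{M}\Bigr) \log \log t + \Ocal(1) = \Theta(\log t),
\end{equation*}
where I am using that $\log \log t$ is asymptotically dominated by $\log t$ (so both the upper and lower $\Theta$-constants survive up to absolute multiplicative factors). Raising to the $1/M$-th power gives $\rho_t = \Theta\bigl((\log t)^{1/M}\bigr)$, which is the second claim.

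Since \Cref{asp:initial-cond-gf} only guarantees the bound starting from some time $s$, a small technical point is that the asymptotic $\Theta$-statement is only meaningful for $t$ large enough, e.g.\ $t \ge 2s$ or once $\log t$ dominates lower-order constants; this is standard in the $\eqsim$ notation used in the paper, so I would not dwell on it. There is no real obstacle here: the whole proof is a two-line assembly of \Cref{lem:L-upperbound,lem:L-lowerbound,lem:rho-rate}, and the only thing to verify is that the $\log \log t$ correction term inside $\log(1/\Loss_t)$ does not destroy the $\Theta(\log t)$ scaling, which is elementary.
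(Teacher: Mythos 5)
Your proposal matches the paper's proof exactly: the first rate is obtained by pairing \Cref{lem:L-upperbound} with \Cref{lem:L-lowerbound}, and the second by substituting that rate into \Cref{lem:rho-rate} and observing $\log(1/\Loss_t)=\Theta(\log t)$. The extra remark about the $\log\log t$ correction being dominated is just making explicit what the paper leaves implicit; there is no gap and no difference in approach.
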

 \begin{proof}[Proof of \Cref{thm:rate-rho-L}]
    By \Cref{lem:L-upperbound,lem:L-lowerbound}, we have
    \[
        \Loss_t = \Theta \bigg( \frac{1}{t(\log t)^{2-2/M}} \bigg).
    \] 
    By Lemma~\ref{lem:rho-rate}, we have 
    \[
        \rho_t^M = \Theta \Big( \log \frac{1}{\Loss_t} \Big) = \Theta ( \log t ) \implies \rho_t = \Theta \Big( (\log t)^{\frac{1}{M}} \Big).
    \]
    We complete the proof of \Cref{thm:rate-rho-L}. 
 \end{proof}

\begin{lemma}
[Multiplicative error]
\label{lem:eps}
The multiplicative error of $\GFmargin(\param_t)$ satisfies
\[
    \frac{\log n + \homop_a(\rho_t)}{\LinkFun(\Loss_t) -\homop_a(\rho_t)} 
    \to 0.
\]
\end{lemma}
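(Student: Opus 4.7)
The plan is to read off this limit directly from the asymptotic rates already established in \Cref{thm:rate-rho-L} (via \Cref{lem:L-upperbound,lem:L-lowerbound,lem:rho-rate}). The key observation is that the numerator grows strictly slower than $\log t$, while the denominator behaves like $\log t$.

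First, I would bound the numerator. By \Cref{asp:nearhomo} we have $\deg\homop \le M$, so from the definition \Cref{eq:def-pa} of $\homop_a$, we get $\deg \homop_a \le M-1$; in particular, $\homop_a(x) = O(x^{M-1})$ as $x\to\infty$ when $M\ge 2$, while $\homop_a$ is just the constant $a_1/(M-1/2)$ when $M=1$. Combining with $\rho_t = \Theta((\log t)^{1/M})$ from \Cref{thm:rate-rho-L}, this yields
\[
\log n + \homop_a(\rho_t) = O\bigl((\log t)^{(M-1)/M}\bigr) = O\bigl((\log t)^{1-1/M}\bigr),
\]
which holds for all $M\ge 1$ (the $M=1$ case being merely $O(1)$, i.e. $(\log t)^{0}$).

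Second, I would handle the denominator. By $\Loss_t = \Theta\bigl(1/(t(\log t)^{2-2/M})\bigr)$ from \Cref{thm:rate-rho-L},
\[
\LinkFun(\Loss_t) = \log\frac{1}{n\Loss_t} = \log t + (2-2/M)\log\log t + O(1) = \Theta(\log t).
\]
Since $\homop_a(\rho_t) = O((\log t)^{1-1/M}) = o(\log t)$, the positivity from \Cref{thm:gamma-a-increase} (i.e. $\LinkFun(\Loss_t)-\homop_a(\rho_t)>0$) combines with the above to give $\LinkFun(\Loss_t)-\homop_a(\rho_t) = \Theta(\log t)$.

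Finally, dividing,
\[
\frac{\log n + \homop_a(\rho_t)}{\LinkFun(\Loss_t)-\homop_a(\rho_t)} = \frac{O((\log t)^{1-1/M})}{\Theta(\log t)} = O\bigl((\log t)^{-1/M}\bigr) \longrightarrow 0,
\]
which is precisely the $\epsilon_t$ bound in \Cref{thm: Margin improving and convergence}. There is no real obstacle here: the whole statement is a calculation from the rate lemmas, and the only mild care needed is bookkeeping the $M=1$ versus $M\ge 2$ cases when reading off $\deg\homop_a$.
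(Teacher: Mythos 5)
Your proof is correct and takes essentially the same approach as the paper: both read off the limit directly from the rates $\Loss_t=\Theta(1/(t(\log t)^{2-2/M}))$ and $\rho_t=\Theta((\log t)^{1/M})$ in \Cref{thm:rate-rho-L} and conclude the ratio is $O((\log t)^{-1/M})$. Your use of $O$ rather than $\Theta$ for $\homop_a(\rho_t)$ is slightly more careful (the paper's $\Theta(\rho_t^{M-1})$ presumes the leading coefficient of $\homop_a$ is nonzero), but the argument is the same.
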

\begin{proof}[Proof of \Cref{lem:eps}]
Applying \Cref{thm:rate-rho-L}, we have 
\[
    \log \frac{1}{\Loss_t} = \Theta(\log t), \quad \homop_a(\rho_t) = \homop_a(\rho_t) = \Theta( \rho_t^{M-1}) = \Theta( (\log t)^{1-1/M}).
\]
Therefore we have 
\[
    \frac{\log n + \homop_a(\rho_t)}{\log \frac{1}{\Loss_t} -\log n -\homop_a(\rho_t)} = \Theta \bigg( \frac{(\log t)^{1-1/M}}{\log t} \bigg) = \Theta \bigg( \frac{1}{(\log t)^{\frac{1}{M}}}\bigg)\to 0.
\]
This completes the proof of \Cref{lem:eps}.
\end{proof}

\subsection{Proof of Theorem~\ref{thm: Margin improving and convergence}} \label{sec:proof:thm1}
\begin{proof}[Proof of Theorem~\ref{thm: Margin improving and convergence}]

    By \Cref{thm:gamma-a-increase}, we know that $\GFmargin(\param_t)$ is increasing and $\Loss_t < e^{-\homop_a(\rho_t)}/n$ for all $t\ge s$.  By \Cref{thm:rate-rho-L}, we get the rates of $\rho$ and $\Loss_t$. By \Cref{lem:eps}, we know $\GFmargin$ is a good approximator of $\Normalmargin$. This completes the proof of Theorem~\ref{thm: Margin improving and convergence}. 
\end{proof}

\subsection{Proof of Example~\ref{eg:counter-eg init cond}}\label{sec:proof:counter-eg}
\begin{proof}[Proof of Example~\ref{eg:counter-eg init cond}]
Note that for this example we have $\homop(\param)=|\param|^M$, $\homop_a(\param)=M|\param|^{M-1}$, and $f(\param)=\param^M+\homop_a(|\param|)$. This means \Cref{asp:initial-cond-gf} is equivalent to:
$$
\Loss(\param_s) = \exp (-f(\param_s))<\exp(-p_a(|\param_s|)) \Longleftrightarrow \param_s^M>0 \Longleftrightarrow \param_s >0.
$$
If the above condition does not hold, then $\param_s\le 0$. Note that $0$ is a stationary point for $\Loss(\param)$. So if \Cref{eq:GF} is initialized from $\param_s$, it cannot produce positive parameters in the future, that is, $\param_t\le 0$ for every $t\ge s$. Hence, \Cref{eq:GF} cannot minimize the loss or exhibit any implicit bias. This completes the proof of \Cref{eg:counter-eg init cond}.
\end{proof}

\subsection{Directional Convergence} \label{sec:proof:direct}
The main idea of the proof is to show that the curve swept by $\tilde{\param}_t$ has a finite length, thus $\tilde{\param}_t$ converges. We use $\zeta_t$ to denote the curve swept by $\tilde{\param}_t$. Another important quantity in our proof is the modified margin $\GFmargin$ defined in \eqref{eq: modified margin}. 

In \Cref{thm: Margin improving and convergence}, we show that $\GFmargin(\param_t)$ is nondecreasing with some limit $\gamma_* \in (0, +\infty)$ and $\|\param_t\|\to \infty$. 

Motivated by \citet{ji2020directional}, we will invoke a sophisticated but standard tool in the analysis of definable functions, the {\it desingularizing function}.  A desingularizing function will witness the flow is well-behaved and the curve swept by $\tilde{\param}_t$ has a finite length.

\begin{definition}
[Desingularizing function]
\label{def: designularizing function}
A function $\Psi:[0, \nu) \rightarrow \mathbb{R}$ is called a desingularizing function when $\Psi$ is continuous on $[0, \nu)$ with $\Psi(0)=0$, and continuously differentiable on $(0, \nu)$ with $\Psi^{\prime}>0$. 
\end{definition}

The following Lemma plays a key role in proving the directional convergence of params. 
\begin{lemma}
[Existence of desingularizing function]
\label{lem: Existence of desingularizing function}
There exist $R>0, \nu>0$ and a definable desingularizing function $\Psi$ on $[0, \nu)$, such that for a.e. large enough $t$ with $\left\|\param_t\right\|>R$ and $\GFmargin(\param_t)>\gamma_*-\nu$, it holds that

$$
\frac{\mathrm{d} \zeta_t}{\mathrm{~d} t} \leq-c \frac{\mathrm{d} \Psi\left(\gamma_*-\GFmargin(\param_t)\right)}{\mathrm{d} t}. 
$$
for some constant $c>0$.
\end{lemma}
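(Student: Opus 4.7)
The proof mirrors the desingularizing-function construction of \citet{ji2020directional}, adapted to the near-homogeneous setting by working with the homogenization $\homoPredictor$ from \Cref{thm:homogenization}. The core idea is to apply the Kurdyka--{\L}ojasiewicz (K{\L}) inequality for definable functions to the spherical margin $\gamma_\homo(\tilde\param)\coloneqq\min_{i\in[n]} y_i\homoPredictor(\tilde\param;\xB_i)$ on $S^{D-1}$, and then transfer the resulting inequality back to $\GFmargin$ via the uniform bound $|f(\param;\xB)-\homoPredictor(\param;\xB)|\le \homop_a(\|\param\|)$ from \Cref{thm:homogenization}.

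First I would rewrite $\mathrm{d}\zeta_t/\mathrm{d}t=\|\dot{\tilde\param}_t\|$ in terms of $\bar\partial\Loss_t$. Since $\tilde\param_t=\param_t/\|\param_t\|$ and $\dot\param_t=-\bar\partial\Loss_t$ a.e., a direct computation gives
\[
\Bigl(\tfrac{\mathrm{d}\zeta_t}{\mathrm{d}t}\Bigr)^{2}\;=\;\frac{\|\bar\partial\Loss_t\|^2\|\param_t\|^2-\langle\bar\partial\Loss_t,\param_t\rangle^2}{\|\param_t\|^4}.
\]
Combining this with the margin-monotonicity estimate \Cref{eq: gamma-c-increase} and the rates $\LinkFun(\Loss_t)-\homop_a(\rho_t)\eqsim \rho_t^M$ from \Cref{thm:rate-rho-L} yields, after a short Cauchy--Schwarz step, an inequality of the form $\|\dot{\tilde\param}_t\|^2\lesssim \Loss_t\,\rho_t^{M-2}\cdot\mathrm{d}\GFmargin(\param_t)/\mathrm{d}t$, which is the near-homogeneous analog of the Ji--Telgarsky ``gradient-to-arc-length'' bound.

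Second, I would invoke K{\L} on the sphere. By \Cref{thm:homogenization}, $\homoPredictor(\cdot;\xB_i)$ is continuous and $M$-homogeneous; as the definable limit $\lim_{r\to\infty}f(r\,\cdot;\xB_i)/r^M$ within the o-minimal structure, it is itself definable, so $\gamma_\homo$ is a definable continuous function on $S^{D-1}$. The K{\L} theorem for definable functions (as used in \citet{ji2020directional}), applied in a neighborhood of the cluster set of $\tilde\param_t$, then produces $\nu_0>0$ and a definable desingularizing $\Psi_0$ satisfying
\[
\bigl\|\mathrm{proj}_{T_{\tilde\param}S^{D-1}}\,\partial\gamma_\homo(\tilde\param)\bigr\|\;\ge\;\Psi_0'\bigl(\gamma_*-\gamma_\homo(\tilde\param)\bigr)
\]
whenever $\gamma_*-\gamma_\homo(\tilde\param)<\nu_0$. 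Via the chain rule, the spherical subgradient on the left controls $\|\dot{\tilde\param}_t\|$ in the exactly-homogeneous case.

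Finally, I would lift this to an inequality for $\GFmargin$ and close the loop. By \Cref{thm:homogenization}, $|\GFmargin(\param_t)-\gamma_\homo(\tilde\param_t)|\le \homop_a(\rho_t)/\rho_t^M=o(1)$ for large $\rho_t$, and a parallel estimate controls the relevant subgradients to leading order; the K{\L} inequality for $\gamma_\homo$ therefore upgrades to a desingularizing inequality for $\GFmargin$ after replacing $\Psi_0$ by $c_0\Psi_0(c_1\cdot)$ with small constants absorbing the error. Applying the chain-rule identity $\mathrm{d}\Psi(\gamma_*-\GFmargin(\param_t))/\mathrm{d}t=-\Psi'(\gamma_*-\GFmargin(\param_t))\,\mathrm{d}\GFmargin(\param_t)/\mathrm{d}t$ and combining with the Cauchy--Schwarz estimate from Step 1 then delivers $\mathrm{d}\zeta_t/\mathrm{d}t\le -c\,\mathrm{d}\Psi(\gamma_*-\GFmargin(\param_t))/\mathrm{d}t$ on the stated set $\{\|\param_t\|>R,\ \GFmargin(\param_t)>\gamma_*-\nu\}$. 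I expect the main obstacle to be precisely this transfer step: propagating the homogenization error into the K{\L} inequality uniformly in $\tilde\param_t$ without destroying the desingularizing property of $\Psi$. This requires a quantitative subgradient-level comparison between $f$ and $\homoPredictor$, where the near-homogeneity conditions \Cref{asp:nearhomo} (and, for the KKT analog, \Cref{asp:strongerhomo}) earn their keep; the remainder is formal bookkeeping atop the Ji--Telgarsky framework.
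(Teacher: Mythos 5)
Your proposal takes a fundamentally different route from the paper, and the difference matters: the transfer step you flag as "the main obstacle" is, in fact, a genuine gap that the paper's argument is specifically engineered to avoid.

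\textbf{What the paper does.} The paper never passes through the homogenization $\homoPredictor$ in this proof. Instead, it applies the two weighted Kurdyka--{\L}ojasiewicz lemmas of Ji--Telgarsky (\Cref{lem:KL1,lem:KL2}, giving bounds of the form $\Psi'(f(x))\,\|x\|\,\|\bar\partial f(x)\|\ge1$ and $\Psi'(f(x))\,\|x\|^{1+\lambda}\,\|\bar\partial f(x)\|\ge1$) directly to the definable function $\param\mapsto\gamma_*-\GFmargin(\param)$ on $\{\|\param\|>1\}$. The machinery needed for this is \Cref{lem:subdifferential of GFmargin,lem:Decomposition of tilde subdifferential,lem:decomp-radial-spherical,lem: Inequalities between a and gamma}: one introduces the modified subgradient $\tilde\partial\GFmargin$, decomposes it into radial and spherical parts, shows these have different scales in $\rho_t$ ($\rho_t^{M-1}$ vs. $\rho_t^{M+1}$, \Cref{lem: Inequalities between a and gamma}), and then splits into two cases according to whether the radial or spherical part dominates (threshold $\rho_t^{1/8}$). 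Case 1 uses \Cref{lem:KL2}; Case 2 uses \Cref{lem:KL1}; $\Psi=\max\{\Psi_1,\Psi_2\}$. The introduction of $\tilde\partial$ --- a specific, hand-picked element of the Clarke subdifferential --- is the innovation that makes this work for non-homogeneous $f$, because the minimum-norm subgradient $\bar\partial$ used by Ji--Telgarsky no longer aligns with $-\bar\partial\Loss_t$ once $\homop_a\ne0$.

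\textbf{Why the proposed transfer does not close.} Your plan is to get a desingularizing $\Psi_0$ from a K{\L} inequality for the homogenized spherical margin $\gamma_\homo(\tilde\param)$ on $S^{D-1}$, then "upgrade" it to an inequality for $\GFmargin$ using $|\GFmargin(\param_t)-\gamma_\homo(\tilde\param_t)|\le\homop_a(\rho_t)/\rho_t^M=o(1)$. This additive $o(1)$ bound is not enough: both $\gamma_*-\gamma_\homo(\tilde\param_t)$ and $\gamma_*-\GFmargin(\param_t)$ tend to $0$, the error $\homop_a(\rho_t)/\rho_t^M$ can be of the same or larger order than either gap, and $\Psi_0'$ is typically singular at $0$. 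An additive error that vanishes does not uniformly control the ratio $\Psi_0'(\gamma_*-\gamma_\homo)/\Psi_0'(\gamma_*-\GFmargin)$, so "replacing $\Psi_0$ by $c_0\Psi_0(c_1\cdot)$" does not absorb it. This is not bookkeeping; it is a uniform-in-$t$ comparison between two vanishing quantities whose ratio you have no control over.

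There is a second, independent problem: to connect the spherical subgradient of $\gamma_\homo$ to the actual dynamics $\dot{\tilde\param}_t=-\bar\partial_\perp\Loss_t/\rho_t$ (which is built from $\nabla f$, not $\nabla\homoPredictor$), you need a quantitative gradient-level comparison between $f$ and $\homoPredictor$. That comparison is exactly what \Cref{asp:strongerhomo} provides --- but \Cref{thm: directional convergence} (and hence \Cref{lem: Existence of desingularizing function}) is stated and proved \emph{without} \Cref{asp:strongerhomo}, only under \Cref{asp:nearhomo,asp:initial-cond-gf}. Your route would therefore prove a strictly weaker theorem. Finally, your proposal omits entirely the radial/spherical case split that drives the paper's proof; K{\L} "on the sphere" does not by itself produce the $\|\param\|$-weighted K{\L} inequalities of \Cref{lem:KL1,lem:KL2} that are needed to handle the unbounded radial direction, and your Step 1 bound is only usable once one knows which of the two components of $\tilde\partial\GFmargin$ dominates.
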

The central part of this subsection is to prove \Cref{lem: Existence of desingularizing function}, and then we will use this lemma to prove \Cref{thm: directional convergence} in \Cref{sec:direct-conv:direct-proof}. To prove \Cref{lem: Existence of desingularizing function}, we need many valuable tools. We present the tools first.

Following the notation in \citet{ji2020directional},
given any $f$ that is locally Lipschitz around a nonzero $\param$, let 
\[
    \bar  \partial_r f(\param) \coloneqq \langle \bar \partial f(\param), \tilde{\param} \rangle  \tilde{\param}  \quad \text{ and } \quad  \bar \partial_\perp f(\param) \coloneqq \bar \partial f(\param) - \bar  \partial_r f(\param)
\]
be the radial and spherical parts of $\bar \partial f(\param)$ respectively. We use 
\[a_t\coloneqq \LinkFun(\Loss_t) - \homop_a(\rho_t)\] to denote the numerator of $\GFmargin(\param_t)$. 
 Before we dive into the proof of \Cref{lem: Existence of desingularizing function}, we need to characterize the Clarke subdifferential of $\GFmargin(\param_t)$.

\begin{lemma}
[Subdifferential of $\GFmargin$]
\label{lem:subdifferential of GFmargin}
For \eqref{eq:GF}, we have 
\begin{align*}
\partial a_t
&= \bigg \{- \frac{ \nabla \Loss_t}{\Loss_t } - {\homop_a^\prime (\rho_t) \tilde \param_t}  \bigg | \, \text{ for any } \nabla \Loss_t \in \partial \Loss_t  \bigg\},
\\
\partial \GFmargin(\param_t) 
&= \bigg \{- \frac{ \nabla \Loss_t}{\Loss_t \rho_t^M} - \frac{\homop_a^\prime (\rho_t) \tilde \param_t}{\rho_t^M} - \frac{Ma_t \tilde \param_t}{\rho_t^{M+1}} \bigg | \,  \text{ for any }  \nabla \Loss_t \in \partial \Loss_t  \bigg\}. 
\end{align*}
\end{lemma}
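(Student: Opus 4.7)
The plan is to obtain both identities by applying the Clarke chain rule, exploiting the fact that $\param_t\neq 0$ (since $\rho_t$ is increasing by \Cref{lem:NH-rho-increase} and $\Loss_s < e^{-\homop_a(\rho_s)}/n$ forces $\rho_s > 0$ whenever $\homop_a \not\equiv 0$; otherwise, $\rho_t$ remains bounded away from $0$ by the same monotonicity after an initial time) and decomposing each margin as a composition of locally Lipschitz maps with $C^1$ outer functions, so that inclusion in the Clarke chain rule (\Cref{thm:Clark chain rule}) becomes equality.

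First I would handle $a_t = \phi(\Loss_t) - \homop_a(\rho_t) = -\log n - \log\Loss_t - \homop_a(\rho_t)$. Since $\Loss_t > 0$, the scalar map $u\mapsto -\log u$ is $C^1$ at $u = \Loss_t$ with derivative $-1/\Loss_t$; similarly, $\homop_a$ is a polynomial, hence $C^1$, and $\param\mapsto\|\param\|$ is $C^1$ at the nonzero point $\param_t$ with gradient $\tilde\param_t$. Because composition of a locally Lipschitz function with a $C^1$ function yields equality in the Clarke chain rule (see, e.g., the consequences of \Cref{thm:Clark chain rule} that make the inclusion tight when the outer function is $C^1$), we obtain
\begin{align*}
\partial a_t \;=\; -\frac{1}{\Loss_t}\,\partial\Loss_t \;-\; \homop_a^\prime(\rho_t)\,\tilde\param_t,
\end{align*}
which is exactly the first claimed identity.

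Next, for $\GFmargin(\param_t)=a_t/\rho_t^M$, I would write it as a $C^1$ scalar operation applied to the pair $(a_t,\rho_t)$. Since $\rho_t > 0$, the map $(u,v)\mapsto u/v^M$ is $C^1$ on $\Rbb\times(0,\infty)$, with partial derivatives $1/v^M$ and $-Mu/v^{M+1}$ at $(a_t,\rho_t)$. Applying the Clarke chain rule in its $C^1$-outer form to this composition, combining with the expression for $\partial a_t$ computed above, and using again that $\param\mapsto\|\param\|$ is $C^1$ at $\param_t\neq 0$, we obtain
\begin{align*}
\partial\GFmargin(\param_t)
\;=\; \frac{1}{\rho_t^M}\,\partial a_t \;-\; \frac{Ma_t}{\rho_t^{M+1}}\,\tilde\param_t
\;=\; -\frac{\partial\Loss_t}{\Loss_t\rho_t^M} \;-\; \frac{\homop_a^\prime(\rho_t)\,\tilde\param_t}{\rho_t^M} \;-\; \frac{Ma_t\,\tilde\param_t}{\rho_t^{M+1}},
\end{align*}
which is the second claimed identity.

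The one point that needs care — and what I expect to be the main obstacle — is justifying equality (rather than mere inclusion) in each application of the Clarke chain rule. This hinges on having a $C^1$ outer map composed with a locally Lipschitz inner map, which is the standard setting in which the calculus of Clarke subdifferentials is exact (see the chain rule results in \citet{clarke1990optimization}). All of our outer maps above ($-\log$, $\homop_a$, $\|\cdot\|$ at a nonzero point, and $(u,v)\mapsto u/v^M$ with $v>0$) are $C^1$ on the relevant domains, so equality holds term by term. Collecting terms gives the stated formulas and completes the proof.
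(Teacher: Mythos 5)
Your proposal is correct and takes essentially the same approach as the paper: identify $\Loss_t$ as the only genuinely nonsmooth ingredient, decompose $a_t$ and $\GFmargin$ into a locally Lipschitz inner piece (namely $\Loss_t$) composed with and summed against $C^1$ outer maps (noting $\rho_t = \|\param_t\|$ is smooth at $\param_t \neq 0$), and then invoke the fact that the Clarke chain rule is an \emph{equality} when the outer map is $C^1$. The only material difference is that the paper proves this chain-rule equality from scratch (by the definition of the Clarke subdifferential as a convex hull of limits of gradients, using continuity of $b'$ to pass the outer derivative through the limit), whereas you cite it as a known corollary of \Cref{thm:Clark chain rule}; conversely, you are somewhat more explicit than the paper about the exact $C^1$/Lipschitz decomposition of $\GFmargin$, including the two-variable outer map $(u,v)\mapsto u/v^M$. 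Two minor points you might tighten: the equality form of the chain rule requires $b'$ to be continuous (i.e.\ $b\in C^1$, not merely differentiable, which the paper itself also states imprecisely); and your parenthetical justification of $\param_t\neq 0$ is circuitous — since Lemma \ref{lem:NH-rho-increase} shows $\rho_t$ is strictly increasing for $t\ge s$, it suffices to observe $\rho_t>\rho_s\ge 0$ for $t>s$, and the lemma already presupposes $\tilde\param_t$ is defined.
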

\begin{proof}[Proof of \Cref{lem:subdifferential of GFmargin}]
The proof of these two results is a direct consequence of the definition of Clarke subdifferentials. Recall that 
\[
\begin{aligned}
\partial f(x) \coloneqq \text{conv} \bigg\{ 
 \lim_{i\to \infty} \nabla f(x_i) : x= \lim_{i\to \infty} x_i, 
\text{where}\ x_i \in D \ \text{and}\ \nabla f(x_i) \ \text{exists}
  \bigg\}.
\end{aligned}
\]
Recall that $a_t$ and $\GFmargin(\param_t)$ are differentiable functions of $\Loss_t$. 
Next, we prove that:
for a locally Lipschitz function $a(x):\Rbb^d \to \Rbb$ and a differentiable function $b(\cdot):\Rbb \to \Rbb$, we have 
\[
\partial_x [b(a(x))] = \big \{ b^\prime|_{a(x)} \cdot \hB \; \big | \text{ for any } \hB \in \partial_x a(x) \big \}.
\]
By \Cref{thm:Clark chain rule},  we know that 
\[
\partial_x [b(a(x))] \subset \big \{ b^\prime|_{a(x)} \cdot \hB \; \big | \text{ for any } \hB \in \partial_x a(x) \big \}. 
\]
It remains to show the other direction. 
For every $\hB \in \partial_x a(x)$, there exists $\lim_{i\to \infty} x_i = x$ such that $\hB  = \lim_{i\to \infty} \nabla a(x_i)$. Since $\lim_{i \to \infty} a(x_i) = a(x)$, we have 
\[
 \lim_{i \to \infty}  b^\prime (a(x_i)) = b^\prime (a(x)) \Rightarrow \lim_{i \to \infty}  b^\prime (a(x_i)) \nabla a(x_i) = b^\prime (a(x)) \hB.
\] 
Here, we use the facts that $b$ is differentiable at $a(x_i)$ and that $a$ is differentiable at $x_i$, which  
implies that $b(a(\cdot))$ is also differentiable at $x_i$. Hence we have 
\[
\big \{ b^\prime|_{a(x)} \cdot \hB \; \big | \text{ for any } \hB \in \partial_x a(x) \big \} \subset\partial_x [b(a(x))].
\]
The above result completes the proof of \Cref{lem:subdifferential of GFmargin}.
\end{proof}

Consider the following two special elements within the set of subdifferentials,
\begin{align}
\label{eq:tilde-partial-margin-GD}
\tilde \partial a_t 
&\coloneqq - \frac{ \bar \partial \Loss_t}{\Loss_t } - {\homop_a^\prime (\rho_t) \tilde \param_t}, \\
\tilde \partial \GFmargin(\param_t) 
&\coloneqq - \frac{ \bar \partial \Loss_t}{\Loss_t \rho_t^M} - \frac{\homop_a^\prime (\rho_t) \tilde \param_t}{\rho_t^M} - \frac{Ma_t \tilde \param_t}{\rho_t^{M+1}}. 
\end{align}
These two elements are crucial for our analysis, enabling us to deal with the non-homogeneity. 
Specifically, this together with \Cref{lem:chain-rule-clark} leads to \Cref{lem:decomp-radial-spherical}. Another remarkable point is that \citet{ji2020directional} did not consider these special elements, instead, they considered $\bar \partial a_t$ and $\bar \partial \GFmargin(\param_t)$. 
This is because their model is homogeneous so $\homop_a=0$ in their analysis. 
It is straightforward to check that $\bar \partial a_t$, and $ - \bar \partial \Loss_t$ are all in the same direction for homogeneous models. 
However, for non-homogeneous models, we need to consider the above elements. 
Thanks to \Cref{lem:chain-rule-clark}, we can use any element of $\partial a_t$ and $\partial \GFmargin(\param_t)$ with the chain rule, that is, we have
\[
   \frac{\mathrm{d}  a(\param_t)}{\mathrm{d}  t} = \langle \tilde \partial a(\param_t), - \bar \partial \Loss_t \rangle, \quad \frac{\mathrm{d}  \GFmargin(\param_t)}{\mathrm{d}  t} = \langle \tilde \partial \GFmargin(\param_t), - \bar \partial \Loss_t \rangle.
\]
This helps us to understand the increase of $\GFmargin(\param_t)$ in terms of the spherical and the radial change of $\bar \partial \Loss_t$.  To further elaborate this, we define: 
\begin{align}
\label{eq:tilde-radial-perp-margin-GD}
\tilde \partial_r a_t \coloneqq \langle \tilde \partial a_t, \tilde \param_t \rangle \tilde \param_t, 
&\quad \quad \tilde \partial_\perp a_t \coloneqq \tilde \partial a_t - \tilde \partial_r a_t,
\\ 
\tilde \partial_r \GFmargin(\param_t) \coloneqq \langle \tilde \partial \GFmargin(\param_t), \tilde \param_t \rangle \tilde \param_t, 
&\quad \quad \tilde \partial_\perp \GFmargin(\param_t) \coloneqq \tilde \partial \GFmargin(\param_t) - \tilde \partial_r \GFmargin(\param_t).
\end{align}
With simple calculations, we get the following inequalities.

\begin{lemma}
[Decomposition of tilde subdifferential]
\label{lem:Decomposition of tilde subdifferential}
Under \Cref{asp:nearhomo,asp:initial-cond-gf}, for $\tilde \partial_r \GFmargin(\param_t)$ and $\tilde \partial_\perp \GFmargin(\param_t)$ in \eqref{eq:tilde-radial-perp-margin-GD}, we have 
\begin{align}
\label{eq:tilde-radial-perp-margin-GD-2}
\langle \tilde \partial_r a_t, \tilde \param_t \rangle \ge \frac{M a_t}{ \rho_t}  \ge 0 , &\quad \tilde \partial_\perp a_t  = - \frac{ \bar \partial_\perp \Loss_t}{\Loss_t}, \\ 
\langle \tilde \partial_r \GFmargin(\param_t), \tilde \param_t \rangle  \ge 0 , &\quad \tilde \partial_\perp \GFmargin(\param_t)  = - \frac{ \bar \partial_\perp \Loss_t}{\Loss_t \rho_t^M}. 
\end{align}

\end{lemma}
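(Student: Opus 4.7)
The plan is to handle the spherical parts first (which are essentially immediate) and then reduce the radial parts to the pair of inequalities established earlier: the risk-decrease bound from \Cref{lem:NH-rho-increase} and the polynomial identity from \Cref{lem:property-pa}.

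First, observe from the defining formulas \Cref{eq:tilde-partial-margin-GD} that $\tilde\partial a_t + \bar\partial\Loss_t/\Loss_t$ and $\tilde\partial\GFmargin(\param_t) + \bar\partial\Loss_t/(\Loss_t\rho_t^M)$ are both scalar multiples of $\tilde\param_t$, hence purely radial. Therefore their projections onto the hyperplane orthogonal to $\tilde\param_t$ vanish, giving
\[
\tilde\partial_\perp a_t = -\bar\partial_\perp\Loss_t/\Loss_t,\qquad \tilde\partial_\perp\GFmargin(\param_t) = -\bar\partial_\perp\Loss_t/(\Loss_t\rho_t^M),
\]
which is exactly the right-hand column of the claim.

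For the radial part of $\tilde\partial a_t$, I would compute $\langle\tilde\partial a_t,\tilde\param_t\rangle = -\langle\bar\partial\Loss_t,\param_t\rangle/(\Loss_t\rho_t) - \homop_a'(\rho_t)$. The proof of \Cref{lem:NH-rho-increase} gives the unconditional lower bound $\langle -\bar\partial\Loss_t,\param_t\rangle \ge M\Loss_t\LinkFun(\Loss_t) - \homop'(\rho_t)\Loss_t$ (this part of the argument does not use the strong separability condition). Dividing by $\Loss_t\rho_t$ and subtracting $\homop_a'(\rho_t)$ yields
\[
\langle\tilde\partial a_t,\tilde\param_t\rangle \ge \frac{M\LinkFun(\Loss_t) - \homop'(\rho_t) - \rho_t\homop_a'(\rho_t)}{\rho_t}.
\]
Now \Cref{lem:property-pa} says precisely $\homop'(x) + x\homop_a'(x) \le M\homop_a(x)$ for all $x$, so the numerator is at least $M(\LinkFun(\Loss_t) - \homop_a(\rho_t)) = Ma_t$, giving the desired $\langle\tilde\partial_r a_t,\tilde\param_t\rangle \ge Ma_t/\rho_t$. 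Non-negativity then follows because under \Cref{asp:initial-cond-gf} together with \Cref{thm:gamma-a-increase} we have $a_t > 0$ for all $t \ge s$.

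For the radial part of $\tilde\partial\GFmargin(\param_t)$, the key observation is the algebraic identity
\[
\tilde\partial\GFmargin(\param_t) = \frac{1}{\rho_t^M}\tilde\partial a_t - \frac{Ma_t}{\rho_t^{M+1}}\tilde\param_t,
\]
which one reads off directly from the definitions. Taking the inner product with $\tilde\param_t$ gives $\langle\tilde\partial_r\GFmargin(\param_t),\tilde\param_t\rangle = \rho_t^{-M}\langle\tilde\partial_r a_t,\tilde\param_t\rangle - Ma_t/\rho_t^{M+1}$, and substituting the bound just proved yields non-negativity. The main (minor) obstacle is simply keeping track of which scalar factors are purely radial in each expression so that the decomposition into radial and spherical components is clean; once that bookkeeping is done, the two non-trivial ingredients are exactly the two lemmas already proved.
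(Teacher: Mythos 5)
Your proof is correct and rests on exactly the same two ingredients the paper uses (the near-homogeneity bound $\langle -\bar\partial\Loss_t,\param_t\rangle \ge M\Loss_t\LinkFun(\Loss_t)-\homop'(\rho_t)\Loss_t$, and $\homop_a'(x)x+\homop'(x)\le M\homop_a(x)$), so it is the same approach in essence. Where you improve the bookkeeping is in deriving the $\GFmargin$ inequality from the $a_t$ inequality via the exact identity
\[
\tilde\partial\GFmargin(\param_t)=\frac{1}{\rho_t^M}\tilde\partial a_t-\frac{Ma_t}{\rho_t^{M+1}}\tilde\param_t,
\]
whereas the paper re-expands the full sum over data points a second time; your route also exposes that the lower bound on $\langle\tilde\partial a_t,\tilde\param_t\rangle$ is \emph{tight} with respect to the second inequality, since equality in the algebraic identity forces $\langle\tilde\partial_r\GFmargin,\tilde\param_t\rangle\ge 0$ as a one-line consequence. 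Two small points worth making explicit: (i) the estimate you pull from the proof of \Cref{lem:NH-rho-increase} is a pointwise inner-product bound that does not depend on the chain rule, so it holds for all $t$, not merely a.e.\ $t$ — the ``a.e.'' caveat there belongs to the dynamical identification of $\tfrac12\tfrac{\mathrm{d}\rho_t^2}{\mathrm{d}t}$ with $\langle -\bar\partial\Loss_t,\param_t\rangle$, not to the inequality itself; and (ii) for the final ``$\ge 0$'' in the $a_t$ line, one needs $a_t=\LinkFun(\Loss_t)-\homop_a(\rho_t)>0$, which as you note is exactly the strong separability condition propagated to all $t\ge s$ by \Cref{thm:gamma-a-increase}.
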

\begin{proof}[Proof of \Cref{lem:Decomposition of tilde subdifferential}]
We prove the results for $a_t$ and $\GFmargin(\param_t)$ separately.

\myunder{The first inequality for $a_t$.} 
We have 
\begin{align*}
\langle \tilde \partial_r a_t, \tilde \param_t \rangle 
&= - \frac{ \langle\bar \partial \Loss_t, \tilde \param_t  \rangle }{\Loss_t} - {\homop_a^\prime (\rho_t)} 
= \frac{ \langle - \bar \partial \Loss_t,  \param_t  \rangle }{\Loss_t \rho_t} - {\homop_a^\prime (\rho_t)} 
\\ 
&= \frac{1}{ n \Loss_t \rho_t}\sum_{i=1}^n e^{-\bar f_i(\param_t )} \langle \nabla \bar f_i (\param_t ), \param_t \rangle - {\homop_a^\prime (\rho_t)}
\\ 
&\ge  \frac{1}{ n \Loss_t \rho_t}\sum_{i=1}^n e^{-\bar f_i(\param_t )} (M \bar f_i(\param_t) - \homop^\prime (\rho_t))  -{\homop_a^\prime (\rho_t)}
\\ 
&\ge \frac{1}{ n \Loss_t \rho_t}\sum_{i=1}^n e^{-\bar f_i(\param_t )} (M \log (1/ n \Loss_t) - \homop^\prime (\rho_t) - \homop_a^\prime (\rho_t) \rho_t)   
\\ 
&\ge \frac{1}{ n \Loss_t \rho_t}\sum_{i=1}^n e^{-\bar f_i(\param_t )} M a_t   
= \frac{M a_t}{ \rho_t}
\ge 0.
\end{align*}

\myunder{The second inequality for $a_t$.} 
We have 
\begin{align*}
\tilde \partial_\perp a_t
&=  
- \frac{ \bar \partial \Loss_t}{\Loss_t} -{\homop_a^\prime (\rho_t) \tilde \param_t} - 
\bigg \langle - \frac{ \bar \partial \Loss_t}{\Loss_t } -{\homop_a^\prime (\rho_t) \tilde \param_t}, \tilde \param_t \bigg\rangle \tilde \param_t \\ 
& = - \frac{ \bar \partial \Loss_t}{\Loss_t} + \frac{ \bar \partial_r \Loss_t}{\Loss_t } = - \frac{ \bar \partial_\perp \Loss_t}{\Loss_t}.
\end{align*}


\myunder{The first inequality for $\GFmargin(\param_t)$.} We have 
\begin{align*}
\langle \tilde \partial_r \GFmargin(\param_t), \tilde \param_t \rangle 
&= - \frac{ \langle\bar \partial \Loss_t, \tilde \param_t  \rangle }{\Loss_t \rho_t^M} - \frac{\homop_a^\prime (\rho_t)}{\rho_t^M} - \frac{Ma_t}{\rho_t^{M+1}} 
= \frac{ \langle - \bar \partial \Loss_t,  \param_t  \rangle }{\Loss_t \rho_t^{M+1}} - \frac{\homop_a^\prime (\rho_t)}{\rho_t^M} - \frac{Ma_t}{\rho_t^{M+1}} 
\\ 
&= \frac{1}{ n \Loss_t \rho_t^{M+1}}\sum_{i=1}^n e^{-\bar f_i(\param_t )} \langle \nabla \bar f_i (\param_t ), \param_t \rangle - \frac{\homop_a^\prime (\rho_t)}{\rho_t^M} - \frac{Ma_t}{\rho_t^{M+1}} 
\\ 
&\ge  \frac{1}{ n \Loss_t \rho_t^{M+1}}\sum_{i=1}^n e^{-\bar f_i(\param_t )} (M \bar f_i(\param_t) - \homop^\prime (\rho_t))  - \frac{\homop_a^\prime (\rho_t)}{\rho_t^M} - \frac{Ma_t}{\rho_t^{M+1}} 
\\ 
&\ge \frac{1}{ n \Loss_t \rho_t^{M+1}}\sum_{i=1}^n e^{-\bar f_i(\param_t )} (M \log (1/ n \Loss_t) - \homop^\prime (\rho_t) - \homop_a^\prime (\rho_t) \rho_t)   - \frac{Ma_t}{\rho_t^{M+1}} \\ 
&\ge \frac{1}{ n \Loss_t \rho_t^{M+1}}\sum_{i=1}^n e^{-\bar f_i(\param_t )} M a_t   - \frac{Ma_t}{\rho_t^{M+1}}
= 0.
\end{align*}

\myunder{The second inequality for $\GFmargin(\param_t)$.} We have 
\begin{align*}
\tilde \partial_\perp \GFmargin(\param_t) 
&=  
- \frac{ \bar \partial \Loss_t}{\Loss_t \rho_t^M} - \frac{\homop_a^\prime (\rho_t) \tilde \param_t}{\rho_t^M} - \frac{Ma_t \tilde \param_t}{\rho_t^{M+1}} - 
\bigg \langle - \frac{ \bar \partial \Loss_t}{\Loss_t \rho_t^M} - \frac{\homop_a^\prime (\rho_t) \tilde \param_t}{\rho_t^M} - \frac{Ma_t \tilde \param_t}{\rho_t^{M+1}}, \tilde \param_t \bigg\rangle \tilde \param_t \\ 
& = - \frac{ \bar \partial \Loss_t}{\Loss_t \rho_t^M} + \frac{ \bar \partial_r \Loss_t}{\Loss_t \rho_t^M} = - \frac{ \bar \partial_\perp \Loss_t}{\Loss_t \rho_t^M}.
\end{align*}
This completes the proof of \Cref{lem:Decomposition of tilde subdifferential}. 
\end{proof}

After a detailed characterization of $\tilde \partial \param_t$, we can give a decomposition of the increase of margin. This decomposition will help us construct the desired desingularizing function. 

\begin{lemma}[Decomposition of radial and spherical parts]
\label{lem:decomp-radial-spherical}
Under \Cref{asp:nearhomo,asp:initial-cond-gf}, for almost every $t\ge s$, we have  
\begin{equation}
\label{eq: decomp-gamma}
    \frac{\mathrm{d} \GFmargin(\param_t)}{ \mathrm{d} t} = \| \tilde  \partial_r \GFmargin(\param_t) \| \| \bar \partial_r \Loss_t\| + \| \tilde \partial_\perp \GFmargin(\param_t) \| \| \bar \partial_\perp \Loss_t\|,
\end{equation}
and 
\begin{equation}
\label{eq: decom-zeta}
\|\tilde \partial_\perp \GFmargin(\param_t) \|= \frac{\|\tilde \partial_\perp a_t\|}{ \rho_t^M} , \quad \text{ and } \quad \frac{\mathrm{d} \zeta_t}{\mathrm{d} t}  = \frac{\| \bar \partial_\perp \Loss_t\|}{\rho_t}.
\end{equation}
\begin{proof}[Proof of \Cref{lem:decomp-radial-spherical}]
We prove the two equations separately. 

\myunder{Proof of \eqref{eq: decomp-gamma}.} By \Cref{lem:chain-rule-clark}, we have for almost every $t \ge s$, 
\[
    \frac{\mathrm{d}  \GFmargin(\param_t)}{\mathrm{d}  t} = \langle \tilde \partial \GFmargin(\param_t), - \bar \partial \Loss_t \rangle  = \langle \tilde \partial_r \GFmargin(\param_t), - \bar \partial_r \Loss_t \rangle + \langle \tilde \partial_\perp \GFmargin(\param_t), - \bar \partial_\perp \Loss_t \rangle.  
\]
To prove \eqref{eq: decomp-gamma}, we need to show two arguments: 
\begin{itemize}
    \item $\langle \tilde \partial_r \GFmargin(\param_t), \param_t \rangle$ and $\langle -\bar  \partial_r \Loss_t, \param_t \rangle  $ share the same sign. 
    \item $\tilde \partial_\perp \GFmargin(\param_t)$ and $-\bar \partial_\perp \Loss_t$ point the same direction.  
\end{itemize}

\noindent{\bf $\bullet$ The first argument.} By \Cref{lem:Decomposition of tilde subdifferential} we have that $\langle \tilde \partial_r \GFmargin(\param_t), \tilde \param_t \rangle  \ge 0$. On the other hand, we have 
\begin{align*}
    \langle -\nabla  \Loss_t, \param_t \rangle
    &=   \frac{1}{n } \sum_{i=1}^n e^{-\bar f_i(\param_t )} \langle \nabla \bar f_i (\param_t ), \param_t \rangle\\ 
    &\ge \frac{1}{n } \sum_{i=1}^n e^{-\bar f_i(\param_t)} M \bar f_i(\param_t)  - p^\prime (\rho_t) \Loss_t &&\explain{ By \Cref{asp:nearhomo}}\\ 
    &\ge M \Loss_t \LinkFun(\Loss_t) - \Loss_t  p^\prime (\rho_t)   &&\explain{ By \Cref{lem:f-min}}\\ 
    & \ge \Loss_t M a_t  &&\explain{ By \Cref{lem:property-pa}}\\
    & \ge 0. &&\explain{ By monotonicity of $\GFmargin(\param_t)$ in \Cref{thm: Margin improving and convergence}} 
\end{align*}
Therefore $\langle \bar \partial_r \GFmargin(\param_t), \param_t \rangle$ and $\langle -\bar  \partial_r \Loss_t, \param_t \rangle  $ share the same sign. This completes the proof of the first argument. 

\noindent{\bf $\bullet$ The second argument.} This is a direct result of \Cref{lem:Decomposition of tilde subdifferential}.

\myunder{\noindent{Proof of \eqref{eq: decom-zeta}.}} The first part is also a direction result of \Cref{lem:Decomposition of tilde subdifferential}. For the second part, since $\param_t$ is an arc and $\|\param_t\| \ge \|\param_s\| >0$, it follows that $\tilde{\param}_t$ is also an arc.  Moreover, for almost every $t\ge 0$, 
\[
    \frac{\mathrm{d} \tilde{\param}_t}{\mathrm{d}  t} = \frac{1}{\rho_t} \frac{\mathrm{d}  \param_t}{\mathrm{d}  t} - \frac{1}{\rho_t} \bigg\langle\frac{\mathrm{d}  \param_t}{\mathrm{d}  t}, \tilde{\param}_t  \bigg \rangle \cdot  \tilde{\param}_t = - \frac{\bar \partial_\perp \Loss_t}{\rho_t}.
\]
Since $\tilde{\param}_t$ is an arc, $ \mathrm{d} \tilde{\param}_t / \mathrm{d}  t$ and $\| \mathrm{d} \tilde{\param}_t / \mathrm{d}  t \|$ are both integrable. By definition of the curve length, 
\[
    \zeta_t = \int_s^t  \bigg\|\frac{\mathrm{d} \tilde{\param}_t}{ \mathrm{d}  t} \bigg \| \mathrm{d} t,  
\]
and for almost every $t\ge s$, we have 
\[
    \frac{\mathrm{d} \zeta (t)}{\mathrm{d} t}  =  \bigg\|\frac{\mathrm{d} \tilde{\param}_t}{ \mathrm{d}  t} \bigg \|= \frac{\| \bar \partial_\perp \Loss_t\|}{\rho_t}.
\]
This completes the proof of \eqref{eq: decom-zeta}. Therefore, we have proved \Cref{lem:decomp-radial-spherical}.
\end{proof}

\end{lemma}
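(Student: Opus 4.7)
The plan is to reduce everything to the chain rule from \Cref{lem:chain-rule-clark} plus the subgradient computations already carried out in \Cref{lem:subdifferential of GFmargin} and \Cref{lem:Decomposition of tilde subdifferential}. By \Cref{lem:chain-rule-clark}, for almost every $t\ge s$, $\mathrm{d}\GFmargin(\param_t)/\mathrm{d} t=\langle \tilde\partial\GFmargin(\param_t),-\bar\partial\Loss_t\rangle$, since $\tilde\partial\GFmargin(\param_t)$ is one valid element of $\partial\GFmargin(\param_t)$. Writing both vectors as a sum of their radial and spherical parts, the cross terms vanish by orthogonality, giving
\begin{equation*}
\frac{\mathrm{d}\GFmargin(\param_t)}{\mathrm{d} t}=\langle \tilde\partial_r\GFmargin(\param_t),-\bar\partial_r\Loss_t\rangle+\langle \tilde\partial_\perp\GFmargin(\param_t),-\bar\partial_\perp\Loss_t\rangle.
\end{equation*}
The target identity \eqref{eq: decomp-gamma} is obtained once I upgrade each inner product to a product of norms, i.e.\ verify that the two radial parts are aligned and the two spherical parts are aligned.

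For the spherical parts this is immediate from \Cref{lem:Decomposition of tilde subdifferential}: it gives $\tilde\partial_\perp\GFmargin(\param_t)=-\bar\partial_\perp\Loss_t/(\Loss_t\rho_t^M)$ and $\tilde\partial_\perp a_t=-\bar\partial_\perp\Loss_t/\Loss_t$, so $\tilde\partial_\perp\GFmargin(\param_t)$ is a positive scalar multiple of $-\bar\partial_\perp\Loss_t$, and simultaneously $\|\tilde\partial_\perp\GFmargin(\param_t)\|=\|\tilde\partial_\perp a_t\|/\rho_t^M$, which proves the first half of \eqref{eq: decom-zeta}. For the radial parts I need two one-sided inequalities. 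The side $\langle\tilde\partial_r\GFmargin(\param_t),\tilde\param_t\rangle\ge 0$ is already \Cref{lem:Decomposition of tilde subdifferential}. The side $\langle -\bar\partial_r\Loss_t,\tilde\param_t\rangle\ge 0$, equivalently $\langle -\bar\partial\Loss_t,\param_t\rangle\ge 0$, is exactly the content of \Cref{lem:NH-rho-increase} combined with \Cref{thm:gamma-a-increase}, which guarantees the strong separability condition $\Loss_t<e^{-\homop_a(\rho_t)}/n$ persists for all $t\ge s$; that is the single place where the hypotheses \Cref{asp:nearhomo,asp:initial-cond-gf} do real work.

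For the second half of \eqref{eq: decom-zeta}, I differentiate $\tilde\param_t=\param_t/\rho_t$. Using $\mathrm{d}\param_t/\mathrm{d} t=-\bar\partial\Loss_t$ from \Cref{lem:chain-rule-clark} and the identity $\mathrm{d}\rho_t/\mathrm{d} t=\langle -\bar\partial\Loss_t,\tilde\param_t\rangle$, the quotient rule gives
\begin{equation*}
\frac{\mathrm{d}\tilde\param_t}{\mathrm{d} t}=\frac{1}{\rho_t}\bigl(-\bar\partial\Loss_t\bigr)-\frac{1}{\rho_t}\bigl\langle -\bar\partial\Loss_t,\tilde\param_t\bigr\rangle\tilde\param_t=-\frac{\bar\partial_\perp\Loss_t}{\rho_t}.
\end{equation*}
Since $\param_t$ is an arc and $\rho_t\ge\rho_s>0$, $\tilde\param_t$ is also an arc, so the arclength satisfies $\mathrm{d}\zeta_t/\mathrm{d} t=\|\mathrm{d}\tilde\param_t/\mathrm{d} t\|=\|\bar\partial_\perp\Loss_t\|/\rho_t$.

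The only genuine obstacle is the radial alignment step, because upgrading an inner product $\langle u,v\rangle$ to $\|u\|\,\|v\|$ requires that $u$ and $v$ be proportional with a nonnegative constant. I expect to handle this precisely by observing that both $\tilde\partial_r\GFmargin(\param_t)$ and $-\bar\partial_r\Loss_t$ are scalar multiples of the single unit vector $\tilde\param_t$, and so two scalar sign checks suffice; under the invariant $\Loss_t<e^{-\homop_a(\rho_t)}/n$ both scalars are nonnegative, reducing the alignment to a sign check rather than a geometric argument. Everything else is a rearrangement of formulas already produced in \Cref{lem:subdifferential of GFmargin,lem:Decomposition of tilde subdifferential}.
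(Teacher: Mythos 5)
Your proof is correct and follows essentially the same route as the paper: apply the chain rule from \Cref{lem:chain-rule-clark} with the particular subgradient $\tilde\partial\GFmargin(\param_t)$, split into radial and spherical parts, and verify alignment in each. The only cosmetic difference is that where the paper re-derives $\langle -\bar\partial\Loss_t,\param_t\rangle\ge 0$ from scratch (via \Cref{asp:nearhomo}, \Cref{lem:f-min}, and \Cref{lem:property-pa}), you correctly observe this is already the content of \Cref{lem:NH-rho-increase} together with \Cref{thm:gamma-a-increase}, which guarantees the strong separability invariant persists; this is a cleaner citation of existing machinery rather than a genuinely different argument.
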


Then we need some inequalities to connect $\big( \bar \partial_r \Loss_t, \bar \partial_\perp \Loss_t \big)$ and $\big( \tilde \partial_r\GFmargin(\param_t), \tilde \partial_\perp \GFmargin(\param_t) \big)$.  The main idea is to show that their ratios are close, that is, 
\[
    \frac{\| \bar \partial_\perp \Loss_t\|}{\| \bar \partial_r \Loss_t\|} \approx \frac{\| \tilde \partial_\perp \GFmargin(\param_t)\|}{\| \tilde \partial_r \GFmargin(\param_t)\|}.
\] 
The technique is to use $a_t$ to bridge these two ratios.

\begin{lemma}
[Inequalities between $a_t$ and $\GFmargin(\param_t)$]
\label{lem: Inequalities between a and gamma}
Under \Cref{asp:nearhomo} and the condition $\Loss_s \le e^{-\homop_a(\rho_s)}/n$, for almost every $t\ge s$, we have 
\begin{equation}
\label{eq: lowerbound partial_r a}
    \| \tilde \partial_r a_t\| \ge M \GFmargin(\param_s) \rho_t^{M-1}, 
\end{equation}
and 
\begin{equation}
\label{eq: upperbound partial_r gamma^c}
    \| \tilde \partial_r \GFmargin(\param_t)\| \le \frac{M \log n + 2M \homop_a(\rho_t)}{\rho_t^{M+1}}. 
\end{equation}
Combining these two inequalities, there exists a threshold $s_1>s>0$, for almost every $t \ge s_1$,  we have 
\begin{equation}
\label{eq: partial_r a large partial_r gamma^c}
    \| \tilde \partial_r a_t\| \ge  \GFmargin(\param_s) \rho_t^{M+1 /2}  \| \tilde \partial_r \GFmargin(\param_t)\|.
\end{equation}

\end{lemma}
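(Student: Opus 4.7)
The plan is to prove the three inequalities in order, chaining the first two to obtain the third.

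For \eqref{eq: lowerbound partial_r a}, I would observe that $\tilde\partial_r a_t$ is by construction parallel to $\tilde\param_t$, so $\|\tilde\partial_r a_t\| = |\langle \tilde\partial a_t, \tilde\param_t\rangle|$. Lemma \ref{lem:Decomposition of tilde subdifferential} gives $\langle \tilde\partial_r a_t, \tilde\param_t\rangle \ge M a_t/\rho_t \ge 0$, and since $a_t = \GFmargin(\param_t)\rho_t^M$, the monotonicity part of Theorem \ref{thm: Margin improving and convergence} (i.e.\ $\GFmargin(\param_t)\ge\GFmargin(\param_s)$) yields the claim immediately.

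For \eqref{eq: upperbound partial_r gamma^c}, I would compute $\langle\tilde\partial_r \GFmargin(\param_t),\tilde\param_t\rangle$ from the explicit formula in \eqref{eq:tilde-radial-perp-margin-GD}, giving
\[
\langle\tilde\partial_r \GFmargin(\param_t),\tilde\param_t\rangle = \frac{\langle-\bar\partial\Loss_t,\param_t\rangle}{\Loss_t\rho_t^{M+1}} - \frac{\homop_a'(\rho_t)}{\rho_t^M} - \frac{M a_t}{\rho_t^{M+1}}.
\]
The main computation is an upper bound on $\langle-\bar\partial\Loss_t,\param_t\rangle/\Loss_t$. Using \Cref{cor:subgrad-exp-loss} and the near-homogeneity inequality $\langle \nabla\bar f_i,\param_t\rangle \le M\bar f_i(\param_t) + \homop'(\rho_t)$, I would reduce to bounding $\tfrac{1}{n\Loss_t}\sum_i e^{-\bar f_i}\bar f_i$. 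A direct entropy-style rewrite, $\sum_i e^{-\bar f_i}\bar f_i = -\sum_i w_i \log w_i$ with $w_i = e^{-\bar f_i}$, shows this weighted average equals $\log(1/(n\Loss_t)) + H(p) \le \log(1/\Loss_t)$, where $p_i = w_i/\sum_j w_j$. Plugging in, expanding $a_t = \log(1/(n\Loss_t)) - \homop_a(\rho_t)$, and cancelling the $\log(1/\Loss_t)$ terms leaves a numerator $M\log n + \homop'(\rho_t) - \homop_a'(\rho_t)\rho_t + M\homop_a(\rho_t)$. Applying Lemma \ref{lem:property-pa} to substitute $\homop'(\rho_t) \le M\homop_a(\rho_t) - \homop_a'(\rho_t)\rho_t$ and discarding the nonpositive $-2\homop_a'(\rho_t)\rho_t$ term (which is $\le 0$ since $\homop_a$ has nonnegative coefficients), I get the desired numerator $\le M\log n + 2M\homop_a(\rho_t)$. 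Since $\langle\tilde\partial_r \GFmargin,\tilde\param_t\rangle \ge 0$ by Lemma \ref{lem:Decomposition of tilde subdifferential}, this upper bound is exactly $\|\tilde\partial_r\GFmargin(\param_t)\|$.

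For \eqref{eq: partial_r a large partial_r gamma^c}, I would just take the ratio of \eqref{eq: lowerbound partial_r a} to \eqref{eq: upperbound partial_r gamma^c}, obtaining
\[
\frac{\|\tilde\partial_r a_t\|}{\|\tilde\partial_r \GFmargin(\param_t)\|} \ge \frac{\GFmargin(\param_s)\,\rho_t^{2M}}{\log n + 2\homop_a(\rho_t)}.
\]
So it suffices to show $\rho_t^{M-1/2} \ge \log n + 2\homop_a(\rho_t)$ eventually. Since $\deg \homop_a \le M-1$ by \Cref{eq:def-pa}, the right-hand side grows like $\rho_t^{M-1}$, which is strictly dominated by $\rho_t^{M-1/2}$ as $\rho_t\to\infty$; combined with $\rho_t \eqsim (\log t)^{1/M} \to \infty$ from \Cref{thm: Margin improving and convergence}, a threshold $s_1 > s$ exists beyond which the required inequality holds. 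I expect no genuine obstacle; the main care points are the softmax-entropy identity in step~2 and tracking that $\homop_a'(\rho_t)\rho_t \ge 0$ to tighten the final cancellation.
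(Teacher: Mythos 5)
Your proposal is correct and follows essentially the same route as the paper's proof: establish \eqref{eq: lowerbound partial_r a} from \Cref{lem:Decomposition of tilde subdifferential} and the monotonicity of $\GFmargin$, establish \eqref{eq: upperbound partial_r gamma^c} by bounding $\langle\tilde\partial a_t,\param_t\rangle - Ma_t$ via near-homogeneity and \Cref{lem:property-pa}, and conclude \eqref{eq: partial_r a large partial_r gamma^c} by taking the ratio and using that $\deg\homop_a\le M-1$ so $\log n + 2\homop_a(\rho_t)\ll\rho_t^{M-1/2}$ as $\rho_t\to\infty$. The only cosmetic differences are that you derive the bound $\frac{1}{n\Loss_t}\sum_i e^{-\bar f_i(\param_t)}\bar f_i(\param_t)\le\log(1/\Loss_t)$ through an explicit entropy identity rather than via the convexity of log-sum-exp (the paper's \Cref{lem:log-sum-convex}), and you keep the $-\homop_a'(\rho_t)\rho_t$ term around and discard the resulting nonpositive $-2\homop_a'(\rho_t)\rho_t$ at the end, whereas the paper loosens by adding $2\Loss_t\homop_a'(\rho_t)\rho_t$ earlier before invoking \Cref{lem:property-pa}; both yield the identical numerator $M\log n + 2M\homop_a(\rho_t)$.
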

\begin{proof}[Proof of \Cref{lem: Inequalities between a and gamma}]
We prove the three inequalities separately. 

\noindent{\bf Proof of \eqref{eq: lowerbound partial_r a}.}  We  expand the formula. For almost every  $t\ge s$, we have
\begin{align*}
\| \tilde  \partial_r a_t\| & = \langle \tilde  \partial a_t, \tilde{\param}_t \rangle \ge \frac{M a_t}{\rho_t} &&\explain{ By \Cref{lem:Decomposition of tilde subdifferential} } \\ 
& = M \GFmargin(\param_t) \rho_t^{M-1} &&\explain{ By definition of $\GFmargin(\param_t)$}\\ 
& \ge M \GFmargin(\param_s) \rho_t^{M-1}. &&\explain{ By monotonicity  in \Cref{thm: Margin improving and convergence}}  
\end{align*}

\noindent{\bf Proof of \eqref{eq: upperbound partial_r gamma^c}.}  
For almost every $t\ge s$, we have
\begin{align}
\label{eq: sphere-gamma^c-upperbound}
\| \tilde \partial_r \GFmargin(\param_t)\| 
= \langle \tilde \partial \GFmargin(\param_t), \tilde{\param}_t \rangle = - \frac{ \langle\bar \partial \Loss_t, \tilde \param_t  \rangle }{\Loss_t \rho_t^M} - \frac{\homop_a^\prime (\rho_t)}{\rho_t^M} - \frac{Ma_t}{\rho_t^{M+1}} = \frac{\langle \tilde \partial a_t, \param_t \rangle - M a_t }{\rho_t^{M+1}}.  
\end{align}
Now we derive an upper bound 
for $\langle \tilde \partial  a_t, \param_t \rangle$.
For almost every $t\ge s$, we have 
\begin{align*}
    \langle \tilde \partial a_t, \param_t \rangle  
    & = \frac{1}{ \Loss_t}  \bigg [\frac{1}{n}\sum_{i=1}^n e^{-\bar f_i(\param_t )} \langle \nabla \bar f_i (\param_t ), \param_t \rangle - \Loss_t \homop_a^\prime \big( \rho_t \big) \rho_t  \bigg]\\  
    & \le \frac{1}{ \Loss_t}\bigg [\frac{1}{n}\sum_{i=1}^n e^{-\bar f_i(\param_t )} M\bar f_i(\param_t) + \Loss_t \rho ^\prime (\rho_t) - \Loss_t \homop_a^\prime \big( \rho_t \big) \rho_t  \bigg] &&\explain{ By \Cref{asp:nearhomo} } \\
    & \le  \frac{1}{ \Loss_t} \bigg [\frac{1}{n}\sum_{i=1}^n e^{-\bar f_i(\param_t )} M\bar f_i(\param_t) + \Loss_t \rho ^\prime (\rho_t) + \Loss_t \homop_a^\prime \big( \rho_t \big) \rho_t  \bigg]  \\ 
    & = \frac{1}{ n\Loss_t}\sum_{i=1}^n e^{-\bar f_i(\param_t )} M\bar f_i(\param_t) +  M \homop_a(\rho_t).  &&\explain{ By \Cref{lem:property-pa} }. 
\end{align*}
By \Cref{lem:log-sum-convex}, we have $ -\nabla \pi (v) v = \nabla \pi(v) (0 -v) \le \pi(0) - \pi(v)$. 
Applying this to the above inequality, we have 
\[
    \langle \tilde \partial  a_t, \param_t  \rangle \le  M  \log(1 / \Loss_t) + M \homop_a(\rho_t). 
\]
Therefore, 
\[
    \langle \tilde \partial a_t, \param_t  \rangle - M a_t \le M \log(n) + 2M \homop_a(\rho_t).  
\]
Plugging this into \eqref{eq: sphere-gamma^c-upperbound}, we have proved  
\[
    \| \tilde \partial_r \GFmargin(\param_t)\| \le \frac{M \log n + 2M \homop_a(\rho_t)}{\rho_t^{M+1}}.
\]
This completes the proof of \eqref{eq: upperbound partial_r gamma^c}. 

\noindent{\bf Proof of \eqref{eq: partial_r a large partial_r gamma^c}.}   To prove \eqref{eq: partial_r a large partial_r gamma^c}, we combine \eqref{eq: lowerbound partial_r a} and \eqref{eq: upperbound partial_r gamma^c} to get
\[
    \| \tilde \partial_r a_t\| \ge M \GFmargin(\param_t) \rho_t^{M-1} \ge  \frac{\GFmargin(\param_t) \rho_t^{2M}}{ \log n + 2p_a(\rho_t)} \| \tilde \partial_r \GFmargin(\param_t)\|.
\]
Since $\homop_a(\rho_t)$ is a polynomial of $\rho_t$ with degree $(M-1)$ and $\rho_t$ is increasing with rate $O([\log(t)]^{\frac{1}{M}}) $ (by \Cref{thm: Margin improving and convergence}), 
there exists a threhold $s_1 >s>0$ such that for almost every $t\ge s_1$, we have 
\[
    \log n + 2 \homop_a(\rho_t) \le \rho_t^{M-\frac{1}{2}}. 
\]
Therefore for almost every $t\ge s_1$, we have
\[
    \| \tilde \partial_r a_t\| \ge  \GFmargin(\param_s) \rho_t^{M+1 /2}  \| \tilde \partial_r \GFmargin(\param_t)\|.
\]
This completes the proof of \eqref{eq: partial_r a large partial_r gamma^c}. 
We have proved \Cref{lem: Inequalities between a and gamma}.
\end{proof}

We proceed to prove \Cref{lem: Existence of desingularizing function}.  The following two Kurdyka-Lojasiewicz inequalities in \citet{ji2020directional} will help us construct the desired desingularizing function. We refer the readers to Appendix B in \citet{ji2020directional} for the proofs of them. 

\begin{lemma}[Lemma 3.6 in \citet{ji2020directional}]
    \label{lem:KL1}
 Given a locally Lipschitz definable function $f$ with an open domain $D \subset\{x \mid\|x\|>1\}$, for any $c, \eta>0$, there exists $\nu>0$ and a definable desingularizing function $\Psi$ on $[0, \nu)$ such that

$$
\Psi^{\prime}(f(x))\|x\|\|\bar{\partial} f(x)\| \geq 1, \quad \text { if } f(x) \in(0, \nu) \text { and }\left\|\bar{\partial}_{\perp} f(x)\right\| \geq c\|x\|^\eta\left\|\bar{\partial}_r f(x)\right\|. 
$$
\end{lemma}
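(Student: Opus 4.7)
The plan is to reduce the claim to the classical definable Kurdyka--Lojasiewicz (KL) inequality on a compact domain, performed on the unit sphere $S^{d-1}$, and then to convert the resulting bound on the tangential subgradient $\bar\partial_\perp f$ into a bound on the full subgradient $\bar\partial f$ using the hypothesis that $\|\bar\partial_\perp f\|$ dominates $\|\bar\partial_r f\|$. The extra factor $\|x\|$ appearing in the conclusion emerges naturally when one rewrites the spherical derivative in ambient coordinates.

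First, I would introduce polar coordinates $x = r\omega$ with $r>1$ and $\omega\in S^{d-1}$, and set $g(r,\omega) := f(r\omega)$. Closure of definability under composition, together with local Lipschitzness of $f$, ensures that $g$ is a locally Lipschitz definable function on the definable manifold $(1,\infty)\times S^{d-1}$, and at points of differentiability the chain rule gives $\|\nabla_\omega g(r,\omega)\| = r\,\|\bar\partial_\perp f(r\omega)\| = \|x\|\,\|\bar\partial_\perp f(x)\|$, where $\nabla_\omega g$ is viewed as a tangent vector on $S^{d-1}$ in the induced metric. Applying the classical definable KL inequality to $g$, uniformly in the parameter $r$ via o-minimal cell decomposition, would produce $\nu>0$ and a definable desingularizing function $\Psi_0$ on $[0,\nu)$ such that
$$\Psi_0'(f(x))\,\|x\|\,\|\bar\partial_\perp f(x)\|\ \ge\ 1\qquad\text{whenever } f(x)\in(0,\nu).$$

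Second, I would upgrade this spherical KL estimate into one involving the full subgradient. Since $\bar\partial f = \bar\partial_r f + \bar\partial_\perp f$ is an orthogonal decomposition, and since the hypothesis $\|\bar\partial_\perp f(x)\|\ge c\|x\|^{\eta}\|\bar\partial_r f(x)\|$ together with $\|x\|>1$ yields $\|\bar\partial_\perp f(x)\|\ge c\,\|\bar\partial_r f(x)\|$, one obtains
$$\|\bar\partial f(x)\|^{2} \ \le\ \bigl(1+c^{-2}\bigr)\,\|\bar\partial_\perp f(x)\|^{2},$$
so $\|\bar\partial_\perp f(x)\|\ge C\,\|\bar\partial f(x)\|$ with $C := 1/\sqrt{1+c^{-2}}$. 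Plugging this into the spherical KL bound and defining $\Psi := \Psi_0/C$ yields $\Psi'(f(x))\,\|x\|\,\|\bar\partial f(x)\|\ge 1$ on the desired set; definability, continuity, and positivity of the derivative are inherited from $\Psi_0$, so $\Psi$ is a legitimate desingularizing function.

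The main obstacle is the uniformity in $r$ required in the first step. A naive pointwise application of the KL inequality to each slice $g(r,\cdot): S^{d-1}\to\Rbb$ would produce a different desingularizing function for each $r$, whereas the conclusion requires a single $\Psi$ that works for all $x\in D$ simultaneously. Extracting such a $\Psi$ forces one to invoke o-minimal cell decomposition of $g$ on the unbounded manifold $(1,\infty)\times S^{d-1}$ in order to stratify the possible behaviors of $g$ as $r\to\infty$ into finitely many definable pieces and then glue the slicewise desingularizing functions. This uniformity argument is the technical heart of the proof, and is precisely the content of Appendix B of \citet{ji2020directional} that is being invoked here; note also that the role of the exponent $\eta$ is mild and only enters through the factor $c^{-2}\|x\|^{-2\eta}\le c^{-2}$, so any $\eta>0$ (or even $\eta=0$) is handled uniformly by the same proof.
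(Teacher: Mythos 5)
The paper does not prove this lemma; it is cited verbatim as Lemma~3.6 of \citet{ji2020directional}, and the surrounding text explicitly defers the proof to that reference's Appendix~B. So there is no in-paper argument to compare against, and your proposal can only be assessed on its own correctness.

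Your second step is fine: the orthogonal decomposition $\|\bar\partial f\|^2 = \|\bar\partial_r f\|^2 + \|\bar\partial_\perp f\|^2$ together with the side condition and $\|x\|>1$ does give $\|\bar\partial_\perp f(x)\| \ge C\,\|\bar\partial f(x)\|$ for $C = (1+c^{-2})^{-1/2}$, and replacing $\Psi_0$ by $\Psi_0/C$ is a legitimate rescaling of a desingularizing function. The gap is in the first step. You claim that a spherical KL argument yields $\Psi_0'(f(x))\,\|x\|\,\|\bar\partial_\perp f(x)\| \ge 1$ for \emph{all} $x$ with $f(x)\in(0,\nu)$, with no side condition. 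This intermediate inequality is false in general: take $f(x)=1/\|x\|$ on $\{\|x\|>1\}$, which is definable and locally Lipschitz with $f(x)\in(0,1)$ everywhere, yet $\bar\partial f(x) = -x/\|x\|^3$ is purely radial, so $\bar\partial_\perp f \equiv 0$ and the left-hand side vanishes identically. (This $f$ satisfies the conclusion of the lemma vacuously because the side condition is never met---which is exactly the point: the side condition has to be used \emph{inside} the KL argument, not bolted on afterward to convert a $\bar\partial_\perp f$ bound into a $\bar\partial f$ bound.) Conceptually, the classical definable KL inequality controls the gradient near a level set where the critical value is actually attained; on an unbounded domain $f$ can decay to zero as $\|x\|\to\infty$ through slices $g(r,\cdot)$ that are nearly constant, in which case the slicewise spherical KL gives no information. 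Ruling out this degeneracy is precisely what the hypothesis $\|\bar\partial_\perp f\| \ge c\|x\|^\eta\|\bar\partial_r f\|$ is for, so it must be incorporated from the start. Combined with the fact that you explicitly punt the uniformity-in-$r$ issue back to the cited appendix, the substantive content of the lemma remains unproved in your sketch.
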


\begin{lemma}[Lemma 3.7 in \citet{ji2020directional}] 
    \label{lem:KL2} 
 Given a locally Lipschitz definable function $f$ with an open domain $D \subset\{x \mid\|x\|>1\}$, for any $\lambda>0$, there exists $\nu>0$ and a definable desingularizing function $\Psi$ on $[0, \nu)$ such that

$$
\max \left\{1, \frac{2}{\lambda}\right\} \Psi^{\prime}(f(x))\|x\|^{1+\lambda}\|\bar{\partial} f(x)\| \geq 1, \quad \text { if } f(x) \in(0, \nu). 
$$
\end{lemma}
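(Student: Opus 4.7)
The plan is to reduce the lemma to the standard Kurdyka--Lojasiewicz (KL) inequality for definable functions on bounded domains (the same tool underlying \Cref{lem:KL1}) via a radial change of variables that compactifies the region at infinity. Set $\alpha := 1 + 1/\lambda$ and define $T(y) := y / \|y\|^{\alpha}$, a definable $C^{1}$ diffeomorphism from the punctured unit ball $\{y \mid 0 < \|y\| < 1\}$ onto $\{x \mid \|x\| > 1\}$. Writing $r = \|y\|$ and $u = y/\|y\|$, one has $T(y) = r^{1-\alpha} u$, so $\|T(y)\| = r^{-1/\lambda}$ and $r = \|x\|^{-\lambda}$; in particular $y \to 0$ corresponds to $\|x\| \to \infty$.

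\textbf{Chain rule and KL.} Pull back $f$ by setting $\tilde f := f \circ T$, which is locally Lipschitz and definable on $T^{-1}(D)$, an open subset of the punctured ball. A direct computation gives $DT(y) = r^{-\alpha}(I - \alpha u u^{\top})$, and since $T$ is a smooth definable diffeomorphism, the Clarke chain rule yields the set equality $\partial \tilde f(y) = (DT(y))^{\top} \partial f(T(y))$. Decomposing any $v \in \partial f(x)$ into radial and tangential parts $v = v_r + v_\perp$ with respect to $u$, the linear map $v \mapsto (DT(y))^{\top} v = r^{-\alpha}(v_\perp + (1-\alpha) v_r)$ has operator norm $r^{-\alpha} \max\{1, 1/\lambda\}$, so taking minimum-norm subgradients gives
\[
\|\bar\partial \tilde f(y)\| \;\le\; r^{-\alpha} \max\{1, 1/\lambda\} \|\bar\partial f(x)\| \;=\; \|x\|^{1+\lambda} \max\{1, 1/\lambda\} \|\bar\partial f(x)\|.
\]
Applying Kurdyka's KL theorem to the definable, locally Lipschitz function $\tilde f$ on its bounded open domain produces $\nu > 0$ and a definable desingularizing function $\Psi$ on $[0, \nu)$ with $\Psi'(\tilde f(y))\|\bar\partial \tilde f(y)\| \ge 1$ whenever $\tilde f(y) \in (0, \nu)$. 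Substituting the gradient bound above and translating back to $x = T(y)$ gives $\max\{1, 1/\lambda\} \Psi'(f(x)) \|x\|^{1+\lambda} \|\bar\partial f(x)\| \ge 1$ whenever $f(x) \in (0, \nu)$; replacing $\Psi$ by $2\Psi$, which is still a definable desingularizing function, upgrades the constant to $\max\{1, 2/\lambda\}$ as required.

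\textbf{Main obstacle.} The technical heart of the argument is twofold: first, obtaining the Clarke chain rule as a set \emph{equality} $\partial \tilde f = (DT)^{\top} \partial f$ (not a mere inclusion) in the locally Lipschitz setting, which does hold since $T$ is a smooth definable diffeomorphism but needs careful verification; second, invoking Kurdyka's KL theorem in a way that produces a desingularizing function that is itself definable in the same o-minimal structure as $f$, so that the downstream definability-based arguments in the paper (in particular \Cref{lem: Existence of desingularizing function}) go through. Both are standard in the o-minimal literature, but they are the genuinely nontrivial ingredients that upgrade the bounded-domain KL inequality into the unbounded statement with precisely the $\|x\|^{1+\lambda}$ weight demanded by the lemma.
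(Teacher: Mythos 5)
The paper does not prove this statement: it is quoted as ``Lemma 3.7 in \citet{ji2020directional}'' and the authors explicitly refer the reader to Appendix~B of that reference rather than supplying an argument. So there is no in-paper proof to compare against; I will evaluate your argument on its own terms and against my understanding of the cited source.

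Your proof is correct, and the core idea --- compactifying the exterior region $\{\|x\| > 1\}$ by the radial inversion $T(y) = y/\|y\|^{\alpha}$ with $\alpha = 1 + 1/\lambda$, pulling $f$ back to a locally Lipschitz definable $\tilde{f}$ on a bounded domain, and then invoking a bounded-domain uniform Kurdyka--\L{}ojasiewicz inequality --- is in the same spirit as Ji--Telgarsky's original argument, which also reduces the unbounded problem to the punctured ball via a polar reparameterization before appealing to an o-minimal desingularization lemma. The Jacobian computation $DT(y) = r^{-\alpha}(I - \alpha u u^{\top})$ with operator norm $r^{-\alpha}\max\{1, \alpha - 1\} = \|x\|^{1+\lambda}\max\{1, 1/\lambda\}$ is right, and the bound $\|\bar\partial\tilde{f}(y)\| \le \|DT(y)\|\,\|\bar\partial f(x)\|$ follows correctly from the Clarke chain-rule set equality (which does hold here because $T$ is a $C^1$ diffeomorphism, hence $DT$ is onto) together with the fact that the min-norm element of $(DT)^\top\partial f$ is dominated by $(DT)^\top$ applied to the min-norm element of $\partial f$. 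Two minor remarks. First, your constant $\max\{1, 1/\lambda\}$ is actually tighter than the stated $\max\{1, 2/\lambda\}$, so the final ``replace $\Psi$ by $2\Psi$'' step is unnecessary --- the weaker statement already follows by monotonicity of the prefactor --- though the replacement does no harm. Second, the one place where the argument leans on a result you state only informally is the ``uniform KL inequality for a definable, locally Lipschitz function on a bounded open domain.'' This is genuinely the nontrivial o-minimal ingredient; it is not Kurdyka's original $C^1$ theorem but the nonsmooth extension (e.g., Bolte--Daniilidis--Lewis--Shiota 2007, or the form reproved in Ji--Telgarsky's Appendix~B), and it is worth citing or proving carefully because the infimum is approached precisely at the puncture $y \to 0$, i.e., at a boundary point of the domain where $\tilde f$ is undefined; the definable curve-selection and monotonicity arguments underlying the uniform KL do handle this, but that is exactly the ``genuinely nontrivial ingredient'' you flag, and it should not be waved through as a direct application of Kurdyka's theorem.
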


\begin{proof}
[Proof of \Cref{lem: Existence of desingularizing function}]

Recall that we have the following decomposition by \Cref{eq: decomp-gamma}: 
\[
\frac{\mathrm{d} \GFmargin(\param_t)}{ \mathrm{d} t} = \| \tilde  \partial_r \GFmargin(\param_t) \| \| \bar \partial_r \Loss_t\| + \| \tilde \partial_\perp \GFmargin(\param_t) \| \| \bar \partial_\perp \Loss_t\|.
\]
Two cases will be considered in this proof: 
\begin{itemize}
    \item Case 1: $\| \tilde  \partial_r \GFmargin(\param_t) \| \| \bar \partial_r \Loss_t\|$ is larger, and we will apply \Cref{lem:KL2} for construction. 
    \item Case 2: $\| \tilde \partial_\perp \GFmargin(\param_t) \| \| \bar \partial_\perp \Loss_t\|$ is larger, and we will apply \Cref{lem:KL1} for construction.
\end{itemize}
The two cases will be determined by the ratio of $\| \tilde \partial_\perp \GFmargin(\param_t) \|$ and $\| \tilde \partial_r \GFmargin(\param_t) \|$. Specifically, case 1 corresponds to
\begin{equation}
\label{eq: case1}
    \| \tilde \partial_\perp \GFmargin(\param_t) \| \le \rho_t^{\frac{1}{8}} \| \tilde \partial_r \GFmargin(\param_t) \|, 
\end{equation}
and case 2 corresponds to
\begin{equation}
\label{eq: case2}
\| \tilde \partial_\perp \GFmargin(\param_t) \| \ge  \rho_t^{\frac{1}{8}} \| \tilde \partial_r \GFmargin(\param_t) \|. 
\end{equation}

\noindent{\bf Case 1.} 
By \Cref{eq: partial_r a large partial_r gamma^c}, we have 
\begin{align}
    \label{eq: ratio-a-GF}
    \| \tilde \partial_r a_t \| 
    &\ge \GFmargin(\param_s) \rho_t^{M+\frac{1}{2}} \| \tilde \partial_r \GFmargin(\param_t) \|  \notag\\
    &\ge \GFmargin(\param_s) \rho_t^{M + \frac{3}{8}} \| \tilde \partial_\perp \GFmargin(\param_t)\| &&\explain{ By \Cref{eq: case1}} \notag \\ 
    &= \GFmargin(\param_s) \rho_t^{\frac{3}{8}} \| \tilde \partial_\perp a_t\|. &&\explain{ By \Cref{lem:decomp-radial-spherical}} 
\end{align}
Now we transfer this inequality to the ratio of $\| \bar \partial_\perp \Loss_t\|$ and $\| \bar \partial_r \Loss_t\|$.  Note that 
\begin{align}
\| \tilde \partial_r a_t \| 
& = \langle \tilde \partial a_t, \param_t \rangle 
=\frac{1}{\rho_t} \langle \tilde \partial a_t, \param_t \rangle 
= -\frac{\langle \bar  \partial \Loss_t, \param_t \rangle }{\Loss_t \rho_t} - \homop_a^\prime (\rho_t)  
\notag \\ 
& \le -\frac{\langle \bar \partial \Loss_t, \param_t \rangle }{\Loss_t \rho_t} 
= -\frac{\langle \bar \partial \Loss_t, \tilde \param_t \rangle }{\Loss_t} 
= \frac{1}{\Loss_t} \| \bar \partial_r \Loss_t\|. 
\label{eq:ratio-ra-rL}
\end{align}
On the other hand, we have 
\begin{equation}
\label{eq:ratio-ta-tL}
    \| \tilde \partial_\perp a_t \| = \frac{1}{\Loss_t} \| \bar \partial_\perp \Loss_t\|.
\end{equation}
Plugging \Cref{eq:ratio-ra-rL,eq:ratio-ta-tL} into \eqref{eq: ratio-a-GF}, we have 
\begin{equation}
\label{eq: ratio-L-GF}
        \| \bar \partial_r \Loss_t\| \ge \GFmargin(\param_s)\rho_t^{\frac{3}{8}} \| \bar \partial_\perp \Loss_t\|.
\end{equation}
On the other hand, we know that there exists $s_2>s>0$ such that for almost every $t\ge s_2$, we have $\rho_t >1$.  Hence we have 
\begin{align}
\label{eq: gamma_rad-GF}
\| \tilde  \partial \GFmargin (\param_t) \| 
&\le  \| \tilde \partial_r \GFmargin(\param_t) \| + \| \tilde \partial_\perp \GFmargin(\param_t) \|
\notag\\ 
& \le (1+\rho_t^{\frac{1}{8}} ) \| \tilde \partial_r \GFmargin(\param_t) \|
&&\explain{ By \Cref{eq: case1}}  
\notag\\   
& \le 2 \rho_t^{\frac{1}{8}} \| \tilde \partial_r \GFmargin(\param_t) \|.
&&\explain{ By $\rho_t \ge 1$ } 
\end{align}
Therefore by \eqref{eq: decomp-gamma}, we have 
\begin{align}
\label{eq: psi1_key_gf}
\frac{\mathrm{d}  \GFmargin(\param_t)}{\mathrm{d}  t} 
&\ge \| \tilde \partial_r \GFmargin(\param_t) \| \| \bar \partial_r \Loss_t\| \notag
\\ 
&\ge \frac{1}{2} \rho_t^{\frac{1}{4}}  \GFmargin(\param_s) \| \tilde  \partial \GFmargin(\param_t) \| \| \bar  \partial_\perp \Loss_t\|  
&&\explain{ By \Cref{eq: ratio-L-GF} and \eqref{eq: gamma_rad-GF}} 
\notag
\\
&\ge \frac{1}{2} \rho_t^{\frac{1}{4}}  \GFmargin(\param_s) \| \bar   \partial \GFmargin(\param_t) \| \| \bar  \partial_\perp \Loss_t\|
&&\explain{ By $ \|\tilde  \partial \GFmargin(\param_t)\| \ge \|\bar   \partial \GFmargin(\param_t)\| $ } 
\notag
\\
&\ge  \rho_t^{\frac{5}{4}} \| \bar \partial \GFmargin(\param_t) \| \frac{\mathrm{d}  \zeta_t}{\mathrm{d}  t}. &&\explain{ By \Cref{lem:decomp-radial-spherical}} 
\end{align}
Now we invoke \Cref{lem:KL2} to construct the desingularizing function.  We apply it to the definable function $\gamma_* - \GFmargin(\param_t)$ with $\lambda=\frac{1}{4}$. Then there exists $\nu_1 >0$ and a definable desingularizing function $\Psi_1$ on $[0, \nu_1)$ such that 
\[
    8 \Psi_1^\prime(\gamma_* - \GFmargin(\param_t)) \rho_t^{5/4} \|\bar \partial \GFmargin(\param_t)\| \ge 1, \quad \text{ if }  \GFmargin(\param_t) \ge \gamma_* - \nu_1.
\] 
Plugging \eqref{eq: psi1_key_gf} into the above inequality, we have 
\[
    8 \Psi_1^\prime(\gamma_* - \GFmargin(\param_t))  \frac{\mathrm{d} \GFmargin(\param_t)}{\mathrm{d}  t} \ge \frac{\mathrm{d}  \zeta_t}{\mathrm{d}  t}, \quad \text{ if }  \GFmargin(\param_t) \ge \gamma_* - \nu_1.
\]
This completes the proof for case 1. 

\noindent{\bf Case 2.}
By \Cref{eq: decomp-gamma}, we have 
\begin{equation}
\label{eq: case2-key1-gf}
       \frac{\mathrm{d} \GFmargin(\param_t)}{\mathrm{d}  t} \ge  \| \tilde \partial_\perp \GFmargin(\param_t) \| \| \bar  \partial_\perp \Loss_t\| =\rho_t \|  \tilde \partial_\perp \GFmargin(\param_t) \| \frac{\mathrm{d}  \zeta_t}{\mathrm{d} t}. 
\end{equation}
For almost every $t\ge s_1 >s >0$, we have $\rho_t \ge 1$ and 
\[
    \| \tilde \partial_\perp \GFmargin(\param_t) \| \ge  \rho_t^{\frac{1}{8}} \| \tilde \partial_r \GFmargin(\param_t) \| \ge \| \tilde \partial_r \GFmargin(\param_t) \|.
\]
This leads to 
\[
    \| \tilde \partial_\perp \GFmargin(\param_t) \| \ge \frac{1}{2} ( \| \tilde \partial_\perp \GFmargin(\param_t) \| + \| \tilde \partial_r\GFmargin(\param_t) \|) \ge \frac{1}{2} \| \tilde \partial \GFmargin(\param_t)\|. 
\]
Plugging this into \eqref{eq: case2-key1-gf}, we have 
\begin{equation}
\label{eq: case2-key2}
\frac{\mathrm{d} \GFmargin(\param_t)}{\mathrm{d}  t} \ge  \frac{\rho_t}{2} \| \tilde \partial  \GFmargin(\param_t)\| \frac{\mathrm{d} \zeta_t}{\mathrm{d} t} \ge \frac{\rho_t}{2} \| \bar  \partial  \GFmargin(\param_t)\| \frac{\mathrm{d} \zeta_t}{\mathrm{d} t}.
\end{equation}
We invoke \Cref{lem:KL1} to construct the desingularizing function.  We apply it to the definable function $\gamma_* - \GFmargin(\param_t)$ with $c=1$ and $\eta=\frac{1}{8}$. Then there exists $\nu_2 >0$ and a definable desingularizing function $\Psi_2$ on $[0, \nu_2)$ such that
\[
    \Psi_2^\prime(\gamma_* - \GFmargin(\param_t)) \rho_t \|\bar \partial \GFmargin(\param_t)\| \ge 1, \quad \text{ if }  \GFmargin(\param_t) \ge \gamma_* - \nu_2.
\]
Plugging \eqref{eq: case2-key2} into the above inequality, we have 
\[
    2\Psi_2^\prime(\gamma_* - \GFmargin(\param_t)) \frac{\mathrm{d} \GFmargin(\param_t)}{\mathrm{d}  t} \ge \frac{\mathrm{d} \zeta_t}{\mathrm{d}  t}, \quad \text{ if }  \GFmargin(\param_t) \ge \gamma_* - \nu_2.
\]
This completes the proof for case 2. 

\noindent{\bf Combining the two cases.}
Since $\Psi_1^\prime  - \Psi_2^\prime$ is a definable function, it is nonnegative or nonpositive on some interval $(0, \nu)$. Let $\Psi = \max \{\Psi_1, \Psi_2\}$. Then we have for almost every large enough $t$ such that $\rho_t \ge 1$ and $\GFmargin(\param_t) \ge \gamma_* - \nu$, and $\log n + 2p_a(\rho_t) \le \rho_t^{M-\frac{1}{2}}$, it holds that 
\[
    \frac{\mathrm{d}  \GFmargin(\param_t)}{\mathrm{d} t} \ge \frac{1}{c \Psi' (\gamma_* - \GFmargin(\param_t))} \frac{\mathrm{d}  \zeta_t}{\mathrm{d}  t},
\]
for some constant $c>0$. This completes the proof of \Cref{lem: Existence of desingularizing function}.
\end{proof}

\subsection{Proof of Theorem \ref{thm: directional convergence}}
\label{sec:direct-conv:direct-proof}

Now we are ready to prove \Cref{thm: directional convergence} with the powerful tool, \Cref{lem: Existence of desingularizing function}. 
\begin{proof}[Proof of \Cref{thm: directional convergence}]
Let $t_0$ be large enough so that the condition in \Cref{lem: Existence of desingularizing function} holds. Then we have  
\[
    \lim_{t\to \infty} \zeta_t \le \zeta_{t_0} + c \Psi( \gamma_* - \GFmargin(\param_{t_0})) \le \infty.
\]
This completes the proof of \Cref{thm: directional convergence}.
\end{proof}

\subsection{KKT convergence} \label{sec:proof:KKT}

We now prove \Cref{thm: KKT convergence} by verifying the KKT conditions. Recall that the optimization problem \Cref{eq: KKT} is defined as follows: 
\[
    \min  \| \param\|_2^2 \quad \text{ s.t. } y_i \homoPredictor (\param;\xB_i) \ge 1 \text{ for all } i\in [n]. 
\]
Recall that $\bar f_{i}(\param) = y_i f(\param;\xB_i)$ and $\bar f_{\min}(\param) = \min_{i\in [n]} \bar f_{i}(\param)$.  
We use $\bar f_{\homo,i}(\param)$ to denote $y_i f_\homo(\param;\xB_i)$ and   $\bar f_{\homo,\min}$ to denote $\min_{i\in [n]} \bar f_{\homo,i}(\param)$. Recall that by \Cref{thm:homogenization}, $\bar f_{\homo,i}$ is continuously differentiable on $\Rbb^d / \{0\}$. We also assume that $f$ is continuous differentiable with respect to $\param$ on $\Rbb^d / \{0\}$ in \Cref{asp:strongerhomo}. 
For simplicity, we use $\hB_i$ to denote $\nabla \bar f_i$ and $\hB_{\homo,i}$ to denote $\nabla \bar  f_{\homo,i}$. 
We define
\[
    \hB = \frac{1}{n} \sum_{i=1}^n e^{- \bar f_{i}(\param_t)}\hB_i, \quad \hB_{\homo} = \frac{1}{n} \sum_{i=1}^n e^{- \bar f_{i}(\param_t)}\hB_{\homo,i}.
\]
Notice that $\hB$ and $\hB_{\homo}$ are weighted sum of $\hB_i$'s and $\hB_{\homo,i}$'s respectively, where the weights are the same. 
Actually, this $\hB_{\homo}$ is very tricky. Unlike $\hB = - \nabla \Loss_t$, $\hB_{\homo}$ is not the negative version of gradient of 
\[
\Loss_{\homo}(\param) \coloneqq \frac{1}{n} \sum_{i=1}^n e^{- \bar f_{\homo,i}(\param;\xB_i)}. 
\] 
It is just a weighted sum of  $\nabla \bar f_{\homo,i}$. Next, we define the KKT conditions and approximate KKT conditions for the optimization problem \Cref{eq: KKT}. 

\begin{definition}
[KKT conditions]
\label{def: KKT point of (P)}
A feasible point $\param$ of \Cref{eq: KKT} is a KKT point if there exist $\lambda_1,\ldots ,\lambda_n \ge 0$ such that 
\begin{enumerate}
    \item  $\param- \sum_{i=1}^n \lambda_i \hB_{\homo,i} =0$ for some $\hB_{\homo,1},\ldots ,\hB_{\homo,n}$ satisfying $\hB_{\homo,i}=  \nabla \bar f_{\homo,i}(\param)$ for every $i\in [n]$;
    \item For every $i\in [n]$, $\lambda_i \big(\bar f_{\homo,i}(\param) -1 \big) =0$. 
\end{enumerate}
\end{definition}

\begin{definition}
[Approximate KKT conditions]
\label{def: approx KKT point of (P)}
A feasible point $\param$ of \Cref{eq: KKT} is an $(\epsilon, \delta)$-KKT point if ther exists $\lambda_1,\ldots ,\lambda_n \ge 0$ such that 
\begin{enumerate}
    \item  $\|\param- \sum_{i=1}^n \lambda_i \hB_{\homo,i} \| \le \epsilon $ for some $\hB_{\homo,1},\ldots ,\hB_{\homo,n}$ satisfying $\hB_{\homo,i}= \nabla \bar f_{\homo,i}(\param)$ for every $i\in [n]$;
    \item For every $i\in [n]$, $\lambda_i  \big | \bar f_{\homo,i}(\param) -1 \big | \le \delta $. 
\end{enumerate}
\end{definition}
The first step is to bound the difference between $\bar f_{\min}$ and $ \bar f_{\homo,\min}$.

\begin{lemma}
[Bound between $ \bar f_{\min}$ and $ \bar f_{\homo,\min}$]
\label{lem:bound-of-two-min}
Under \Cref{asp:nearhomo,asp:initial-cond-gf}, for any $\epsilon$, there exists $s_2>s>0$ such that 
\[
\text{for almost every $t\ge s_2$,}\quad 
    \Bigg \| \frac{\param_t}{\big(\bar f_{\min}(\param_t)\big)^{1 / M}} - \frac{\param_t}{\big(\bar f_{\homo,\min}(\param_t)\big)^{1 / M}} \Bigg\| \le \epsilon.
\]
\end{lemma}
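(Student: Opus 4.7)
My plan is to factor out $\param_t$ and reduce the claim to a scalar comparison: writing
\[
\Bigg\| \frac{\param_t}{\bar f_{\min}(\param_t)^{1/M}} - \frac{\param_t}{\bar f_{\homo,\min}(\param_t)^{1/M}} \Bigg\| = \rho_t \cdot \Big| \bar f_{\min}(\param_t)^{-1/M} - \bar f_{\homo,\min}(\param_t)^{-1/M} \Big|,
\]
it suffices to show that the scalar difference on the right is $o(\rho_t^{-1})$. I would combine two ingredients already supplied by the paper: the pointwise homogenization error $|f(\param;\xB) - \homoPredictor(\param;\xB)| \le \homop_a(\|\param\|)$ from \Cref{thm:homogenization}, and the margin improvement from \Cref{thm: Margin improving and convergence}, which ensures that the modified margin is nondecreasing and therefore $\bar f_{\min}(\param_t) \ge \GFmargin(\param_s)\rho_t^M$ for all $t \ge s$ (up to the harmless lower-order offset $\homop_a(\rho_t)$).

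Taking $\min_{i\in[n]} y_i (\cdot)$ on each side of the homogenization bound yields $|\bar f_{\min}(\param_t) - \bar f_{\homo,\min}(\param_t)| \le \homop_a(\rho_t)$, which is $\Ocal(\rho_t^{M-1})$ since $\deg \homop_a \le M-1$. Combined with the margin lower bound, this gives $\bar f_{\homo,\min}(\param_t) \ge \GFmargin(\param_s)\rho_t^M - \homop_a(\rho_t) \ge (\GFmargin(\param_s)/2)\rho_t^M$ for all sufficiently large $t$, using $\rho_t \to \infty$ from \Cref{thm: Margin improving and convergence}. On the interval between $\bar f_{\min}(\param_t)$ and $\bar f_{\homo,\min}(\param_t)$, the derivative of $u \mapsto u^{-1/M}$ is bounded by a constant times $\rho_t^{-(M+1)}$, so the mean value theorem gives
\[
\Big| \bar f_{\min}(\param_t)^{-1/M} - \bar f_{\homo,\min}(\param_t)^{-1/M} \Big| \lesssim \rho_t^{-(M+1)} \cdot \homop_a(\rho_t) = \Ocal(\rho_t^{-2}).
\]
Multiplying by $\rho_t$ produces an $\Ocal(\rho_t^{-1})$ bound, which can be made smaller than any prescribed $\epsilon$ by choosing $s_2$ large enough.

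I do not anticipate a real obstacle; the lemma is essentially an order-of-magnitude calculation. The only subtlety worth flagging is ensuring that $\bar f_{\homo,\min}(\param_t)$ remains bounded away from zero and of the expected order $\rho_t^M$, so that $u \mapsto u^{-1/M}$ is smooth on the interval used above. This is exactly where the strict inequality $\deg \homop_a \le M-1$ in \Cref{def:nearhomo} does its work: the lower-order homogeneous error is absorbed by the leading margin term, and this requires nothing beyond the divergence $\rho_t \to \infty$ already established.
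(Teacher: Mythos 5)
Your proof is correct, but it takes a genuinely different route from the paper's. The paper's argument invokes \Cref{thm: directional convergence}: because $\tilde\param_t \to \param_*$, one has $\bar f_{\homo,i}(\param_t)/\rho_t^M = \bar f_{\homo,i}(\tilde\param_t) \to \bar f_{\homo,i}(\param_*)$ by continuity of the homogenization, and combining with $|\bar f_i(\param_t) - \bar f_{\homo,i}(\param_t)|/\rho_t^M \le \homop_a(\rho_t)/\rho_t^M \to 0$ shows that $\rho_t/\bar f_{\min}(\param_t)^{1/M}$ and $\rho_t/\bar f_{\homo,\min}(\param_t)^{1/M}$ both converge to the common finite limit $1/\bar f_{\homo,\min}(\param_*)^{1/M}$, so their difference vanishes. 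Your version bypasses directional convergence entirely: you only use the monotone margin lower bound $\bar f_{\min}(\param_t) \ge \GFmargin(\param_t)\rho_t^M \ge \GFmargin(\param_s)\rho_t^M$ (via \Cref{lem:gamma-approx} and \Cref{thm:gamma-a-increase}), propagate it to $\bar f_{\homo,\min}(\param_t) \gtrsim \rho_t^M$ through the pointwise error $\homop_a(\rho_t) = \Ocal(\rho_t^{M-1})$, and then run a mean-value argument on $u\mapsto u^{-1/M}$ to get the explicit rate $\Ocal(\rho_t^{-1}) = \Ocal((\log t)^{-1/M})$. This is more elementary --- it sidesteps the desingularizing-function machinery behind \Cref{thm: directional convergence} --- and also quantitatively stronger, since it produces a convergence rate rather than a bare limit. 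The one small slip is the parenthetical aside ``(up to the harmless lower-order offset $\homop_a(\rho_t)$)'': the bound $\bar f_{\min}(\param_t) \ge \GFmargin(\param_s)\rho_t^M$ holds as stated with no offset, because the soft-margin inequality $\GFmargin(\param_t) \le \Normalmargin(\param_t)$ already absorbs it. This does not affect the argument.
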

\begin{proof}[Proof of \Cref{lem:bound-of-two-min}]
By \Cref{thm: directional convergence}, we have $\tilde{\param}_t $ converges to $\param_*$.  Then we are going to show that 
$ \bar f_{i}( \param_t) / \rho_t^M$ converges for all $i$. Recall that in \Cref{thm:homogenization}, we have 
\[
    |f(\param; \xB) - f_\homo(\param_t;\xB) | \le \homop_a(\|\param_t\|). 
\]
Therefore we have 
\[
    \bigg | \frac{\bar f_{i}(\param_t )}{\rho_t^M} - \frac{\bar f_{\homo,i}(\param_t )}{\rho_t^M} \bigg | \le \frac{\homop_a(\rho_t)}{\rho_t^M} \to 0, \quad  \text{ as } t\to \infty.
\]
Hence we have 
\[
    \lim_{t\to \infty} \frac{\bar f_{i}(\param_t )}{\rho_t^M} = \lim_{t\to \infty} \frac{\bar f_{\homo,i}(\param_t )}{\rho_t^M} = \bar f_{\homo,i}(\param_*).
\]
As a consequence, we have 
\[
    \lim_{t\to \infty} \frac{\rho_t}{\big(\bar f_{i}(\param_t ) \big)^{1 / M}} = \lim_{t\to \infty} \frac{\rho_t}{\big(\bar f_{\homo,i}(\param_t ) \big)^{1 / M}} = \frac{1}{\big(\bar f_{\homo,i}(\param_*)\big)^{1 / M}}.
\]
Because we only have a finite number of $i$, we must have 
\[
    \lim_{t\to \infty} \frac{\rho_t}{\big(\bar f_{\min}(\param_t ) \big)^{1 / M}} = \lim_{t\to \infty} \frac{\rho_t}{\big(\bar f_{\homo,\min}(\param_t ) \big)^{1 / M}} = \frac{1}{\big(\bar f_{\homo,\min}(\param_*)\big)^{1 / M}}.
\]
Therefore, for every $\epsilon$, there exists $s_2>s>0$ such that for almost every $t\ge s_2$, we have 
\[
    \Bigg \| \frac{\param_t}{\big(\bar f_{\min}(\param_t)\big)^{1 / M}} - \frac{\param_t}{\big(\bar f_{\homo,\min}(\param_t)\big)^{1 / M}} \Bigg\| \le \epsilon.
\]
This completes the proof of \Cref{lem:bound-of-two-min}.
\end{proof}

Due to the above lemma, we can focus on 
\[\hat \param_t \coloneqq \param_t  / \big(\bar f_{\homo,\min}(\param_t)\big)^{1 / M}\] 
and verify it satisfies the $(\epsilon, \delta)$-KKT conditions. The key is to construct the coefficients $\lambda_i$: 
\begin{equation}
\label{eq: lambda_i}
    \lambda_i(\param_t ) = \frac{\bar f_{\homo,\min}^{1-2 / M} (\param_t) \rho_t e^{-\bar f_i(\param_t )}}{n \|\hB_{\homo}\|}. 
\end{equation}
We define the following key quantity for our proof,
\begin{equation}
\label{eq: beta}
    \beta(t) = \frac{\langle \param_t , \hB_{\homo} (\param_t ) \rangle }{\| \param_t \|\cdot \|\hB_{\homo}\|}.
\end{equation}

\begin{lemma}[Lower bound of the homogeneous margin]
\label{lem:lowerbound-homo-margin}
Under \Cref{asp:nearhomo,asp:initial-cond-gf}, we have 
\[
\frac{f_{\homo,\min} (\param_t)}{\|\param_t\|^M} \ge \GFmargin(\param_s) >0.  
\]  
Specifically, there exists $B$, such that for all $t$,
\begin{equation}
\label{eq:f_M-upperbound}
\bar f_{\homo,i}^{-1 / M} (\param_t) \cdot \|\param_t\| \le B.
\end{equation}
\end{lemma}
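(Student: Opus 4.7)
The plan is to derive both bounds as immediate consequences of three ingredients already established in the paper: the homogenization error bound from Theorem~\ref{thm:homogenization}, the soft-margin estimate $\bar f_{\min}(\param) \ge \phi(\Loss(\param))$ from Lemma~\ref{lem:f-min}, and the monotonicity of the modified margin $\GFmargin(\param_t)$ from Theorem~\ref{thm: Margin improving and convergence}. The key observation is that $\GFmargin$ has been engineered to subtract off exactly the error $\homop_a(\|\param_t\|)$ that the homogenization introduces, so the lemma essentially unpacks by definition; no new dynamics arguments are needed.

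First I would apply Theorem~\ref{thm:homogenization} to write, for every $i\in[n]$,
\[
|\bar f_i(\param_t) - \bar f_{\homo,i}(\param_t)| = |f(\param_t;\xB_i) - \homoPredictor(\param_t;\xB_i)| \le \homop_a(\|\param_t\|),
\]
and therefore $\bar f_{\homo,i}(\param_t) \ge \bar f_i(\param_t) - \homop_a(\|\param_t\|)$. Taking the minimum over $i$ and then applying Lemma~\ref{lem:f-min} gives
\[
\bar f_{\homo,\min}(\param_t) \ge \bar f_{\min}(\param_t) - \homop_a(\|\param_t\|) \ge \phi(\Loss_t) - \homop_a(\|\param_t\|).
\]
Dividing both sides by $\|\param_t\|^M$, the right-hand side is exactly $\GFmargin(\param_t)$, and the monotonicity from Theorem~\ref{thm: Margin improving and convergence} lower bounds this by $\GFmargin(\param_s)$. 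Positivity $\GFmargin(\param_s)>0$ follows from Assumption~\ref{asp:initial-cond-gf}, since $\Loss(\param_s) < e^{-\homop_a(\|\param_s\|)}/n$ is equivalent to $\phi(\Loss(\param_s)) > \homop_a(\|\param_s\|)$.

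For the quantitative consequence \eqref{eq:f_M-upperbound}, since $\bar f_{\homo,i}(\param_t) \ge \bar f_{\homo,\min}(\param_t) \ge \GFmargin(\param_s)\|\param_t\|^M > 0$, raising to the power $-1/M$ and multiplying through by $\|\param_t\|$ yields
\[
\bar f_{\homo,i}^{-1/M}(\param_t) \cdot \|\param_t\| \le \GFmargin(\param_s)^{-1/M},
\]
so the choice $B := \GFmargin(\param_s)^{-1/M}$ suffices. I anticipate no substantive obstacle here: every required building block is already in place earlier in the paper, and the only subtle point is lining up the sign conventions so that the $y_i$ factor in $\bar f_{\homo,i}$ preserves the homogenization error bound, which it does because $|y_i|=1$.
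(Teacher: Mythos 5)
Your proof is correct and follows essentially the same route as the paper's: apply the homogenization error bound from Theorem~\ref{thm:homogenization}, combine with Lemma~\ref{lem:f-min} to recognize the resulting quantity as $\GFmargin(\param_t)\|\param_t\|^M$, invoke monotonicity from Theorem~\ref{thm: Margin improving and convergence}, and rearrange for the explicit constant $B=\GFmargin(\param_s)^{-1/M}$. The only difference is that you spell out the positivity of $\GFmargin(\param_s)$ and the $|y_i|=1$ point more explicitly, which the paper leaves implicit.
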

\begin{proof}
Note that 
\begin{align*}
f_{\homo,\min} (\param_t)
&\ge f_{\min} (\param_t) - p_a(\rho_t)
&&\explain{ by \Cref{thm:homogenization}}
\\ 
&\ge \log \frac{1}{\Loss_t n} - p_a(\rho_t). 
&&\explain{ by \Cref{lem:f-min}} 
\end{align*}
Hence, 
\[
\frac{f_{\homo,\min} (\param_t)}{\|\param_t\|^M} \ge \GFmargin(\param_t)  \ge \GFmargin(\param_s)>0.  
\]  
And, 
\[
    \bar f_{\homo,i}^{-1 / M} (\param_t) \cdot \|\param_t\| \le \big(\GFmargin(\param_s) \big)^{-\frac{1}{M}} \eqqcolon B. 
\]
This completes the proof of \Cref{lem:lowerbound-homo-margin}. 
\end{proof}

\begin{lemma}
[$\tilde \param$ satisfies the $(\epsilon, \delta)$-KKT conditions]
\label{lem: tilde theta KKT condition}
Under \Cref{asp:nearhomo,asp:initial-cond-gf}, there exists $s_2>s>0$ such that for a.e. $t\ge s_2$, $\hat \param_t $ is an $(\epsilon, \delta)$-KKT point of \Cref{eq: KKT}, where 
\[
    \epsilon = \sqrt{2}B \sqrt{1-\beta(t)} \quad  \text{ and } \quad \delta = n B^2 \frac{1+2 \homop_a(\rho_t)}{M \bar f_{\homo,\min }(\param_t)}.
\]
\end{lemma}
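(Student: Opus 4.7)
The plan is to verify the two approximate KKT conditions with the candidate multipliers $\lambda_i$ given by \Cref{eq: lambda_i}. Feasibility of $\hat{\param}_t$ for \Cref{eq: KKT} is immediate from the $M$-homogeneity of $f_{\homo}$: $\bar f_{\homo,i}(\hat{\param}_t) = \bar f_{\homo,i}(\param_t)/\bar f_{\homo,\min}(\param_t) \ge 1$, and $\lambda_i\ge 0$ since all factors are positive (noting $\bar f_{\homo,\min}(\param_t)>0$ by \Cref{lem:lowerbound-homo-margin}). We then treat the two conditions separately, taking $s_2>s$ large enough so that \Cref{lem:bound-of-two-min} applies.

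For the first (stationarity) condition, the key algebraic step is to use that $\nabla \bar f_{\homo,i}$ is $(M-1)$-homogeneous (inherited from \Cref{thm:homogenization}), so $\nabla \bar f_{\homo,i}(\hat{\param}_t) = \bar f_{\homo,\min}^{-(M-1)/M}(\param_t)\,\hB_{\homo,i}(\param_t)$. Substituting $\lambda_i$ and collecting exponents, using $1 - 2/M - (M-1)/M = -1/M$, yields
\[
\sum_{i=1}^n \lambda_i\,\nabla \bar f_{\homo,i}(\hat{\param}_t) \;=\; \frac{\bar f_{\homo,\min}^{-1/M}(\param_t)\,\rho_t}{\|\hB_{\homo}\|}\,\hB_{\homo}.
\]
Since $\hat{\param}_t = \bar f_{\homo,\min}^{-1/M}(\param_t)\,\param_t$, the residual factors as $\bar f_{\homo,\min}^{-1/M}(\param_t)\,\rho_t\,\big(\tilde{\param}_t - \hB_{\homo}/\|\hB_{\homo}\|\big)$, and its norm equals $\bar f_{\homo,\min}^{-1/M}(\param_t)\,\rho_t\,\sqrt{2(1-\beta(t))}$. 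The scalar prefactor is bounded by $B$ by \eqref{eq:f_M-upperbound}, producing exactly $\epsilon = \sqrt{2}\,B\sqrt{1-\beta(t)}$.

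For the second (approximate complementarity) condition, I first lower bound $\|\hB_{\homo}\|$ by Cauchy--Schwarz combined with the $M$-homogeneity identity applied to $\bar f_{\homo,i}$:
\[
\rho_t\,\|\hB_{\homo}\| \;\ge\; \langle \param_t, \hB_{\homo}\rangle \;=\; \frac{M}{n}\sum_{i=1}^n e^{-\bar f_i(\param_t)}\,\bar f_{\homo,i}(\param_t) \;\ge\; M\,\bar f_{\homo,\min}(\param_t)\,\Loss_t,
\]
where the last step uses $\bar f_{\homo,i}\ge \bar f_{\homo,\min}>0$. Plugging this into the definition of $\lambda_i$ and absorbing $\bar f_{\homo,\min}^{-2/M}(\param_t)\rho_t^2 \le B^2$ via \eqref{eq:f_M-upperbound} gives $\lambda_i \le B^2 e^{-\bar f_i(\param_t)}/(nM\,\Loss_t)$. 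On the other hand, $|\bar f_{\homo,i}(\hat{\param}_t)-1| = (\bar f_{\homo,i}(\param_t)-\bar f_{\homo,\min}(\param_t))/\bar f_{\homo,\min}(\param_t)$, so I need to control $e^{-\bar f_i(\param_t)}\,(\bar f_{\homo,i}-\bar f_{\homo,\min})$. Using the homogenization approximation $|\bar f_{\cdot}-\bar f_{\homo,\cdot}|\le \homop_a(\rho_t)$ from \Cref{thm:homogenization}, I can pivot through $(\bar f_i-\bar f_{\min})$ and apply the elementary inequality $x e^{-x}\le 1$ together with $e^{-\bar f_{\min}(\param_t)} \le n\Loss_t$ to obtain $e^{-\bar f_i(\param_t)}(\bar f_{\homo,i}-\bar f_{\homo,\min})\le n\Loss_t\,(1+2\homop_a(\rho_t))$. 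Multiplying these two estimates and dividing by $\bar f_{\homo,\min}(\param_t)$ yields $\lambda_i\,|\bar f_{\homo,i}(\hat{\param}_t)-1| \le B^2(1+2\homop_a(\rho_t))/(M\,\bar f_{\homo,\min}(\param_t)) \le \delta$.

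The main obstacle is the second condition, where one must simultaneously produce a sharp enough lower bound on $\|\hB_{\homo}\|$ (to tame $\lambda_i$) and a clean pointwise estimate on the weighted homogeneous margin gap $e^{-\bar f_i}(\bar f_{\homo,i}-\bar f_{\homo,\min})$ that absorbs the non-homogeneity slack $\homop_a(\rho_t)$. Everything falls into place because the same exponential weights $e^{-\bar f_i(\param_t)}$ appear in $\Loss_t$, in $\hB_{\homo}$, and in $\lambda_i$, so the $e^{-\bar f_i}$ and $\Loss_t$ factors cancel exactly; the remaining slack is exactly of size $(1+2\homop_a(\rho_t))/\bar f_{\homo,\min}(\param_t)$, which is the quantity appearing in $\delta$.
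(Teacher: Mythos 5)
Your proof is correct and follows essentially the same approach as the paper: feasibility via $M$-homogeneity, the stationarity residual collapses to $\bar f_{\homo,\min}^{-1/M}(\param_t)\rho_t\big(\tilde\param_t-\hB_\homo/\|\hB_\homo\|\big)$ with norm controlled by $B\sqrt{2(1-\beta(t))}$ via \eqref{eq:f_M-upperbound}, and the complementarity condition is handled by lower bounding $\|\hB_\homo\|$ through $\langle\param_t,\hB_\homo\rangle$ and absorbing the non-homogeneity slack $2\homop_a(\rho_t)$ via $xe^{-x}\le 1$. The only cosmetic differences are that you bound each $\lambda_i|\bar f_{\homo,i}(\hat\param_t)-1|$ pointwise rather than summing, and you use $\Loss_t$ directly in place of $e^{-\bar f_{\min}}/n$; both routes close the same way.
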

\begin{proof}[Proof of \Cref{lem: tilde theta KKT condition}]
We need to verify:
\begin{enumerate}
    \item $\|\hat \param- \sum_{i=1}^n \lambda_i \hB_{\homo,i} \| \le \epsilon $ for some $\hB_{\homo,1},\ldots ,\hB_{\homo,n}$ satisfying $\hB_{\homo,i}=\grad \bar f_{\homo,i}(\hat\param)$ for all $i\in [n]$.
    \item For any $i\in [n]$, $\lambda_i \big| \bar f_{\homo,i}(\hat \param) -1 \big| \le \delta $. 
\end{enumerate}
Note that we only verify one side of condition 2, since by the definition of $ \hat \param $, we have $\bar f_{\homo,i}(\hat \param) \ge 1$ for all $i$.

\vspace{1em}
\noindent{\bf Condition 1.}  We expand the left side of the inequality: 
\begin{align*}
    \hat \param_t - \sum_{i=1}^n \lambda_i \hB_{\homo,i}
    &= \bar f_{\homo,\min}^{-1 / M} (\param_t ) \rho_t\cdot \tilde{\param}_t  - \bar f_{\homo,\min}^{-1 / M}(\param_t ) \cdot \frac{1}{n} \sum_{i=1}^{n}  \frac{\bar f_{\homo,\min}^{1-1 / M}(\param_t ) \rho_t e^{-\bar f_i(\param_t )}}{ \|\hB_{\homo}\|}  \nabla \bar f_{\homo,i}(\hat \param_t )  \\ 
    & =  \bar f_{\homo,\min}^{-1 / M}(\param_t ) \rho_t\cdot \tilde{\param_t} - \bar f_{\homo,\min}^{-1 / M}(\param_t )\rho_t \cdot \frac{1}{n} \sum_{i=1}^{n}  \frac{  e^{-\bar f_i(\param_t )}}{ \|\hB_{\homo}\|}  \nabla \bar f_{\homo,i}(\param_t ) \\ 
    & = \bar f_{\homo,\min}^{-1 / M}(\param_t ) \rho_t\cdot \bigg ( \tilde{\param}_t  - \frac{\hB_{\homo}}{\|\hB_{\homo}\|} \bigg),
\end{align*}
where the second equation is by the definition of $\hat \param$ and the homogeneity of $\bar f_{\homo,i}$, and the last equation is by the definition of $\hB_{\homo}$.
Hence we have 
\[
    \bigg\| \hat \param_t - \sum_{i=1}^n \lambda_i \hB_{\homo,i} \bigg\|^2 \le B^2 \bigg \|\tilde{\param}_t  - \frac{\hB_{\homo}}{\|\hB_{\homo}\|} \bigg\|^2  = 2B^2 (1-\beta(t)).
\]
This completes the proof of Condition 1. 

\noindent{\bf Condition 2.}  For condition 2, we have shown that $\Loss_t \le e^{-\homop_a(\rho_t)}/n$. This leads to 
\[\bar f_i(\param_t)> \homop_a(\rho_t)\quad \Rightarrow\quad \bar f_{\homo,i}(\param_t) > \bar f_i(\param_t) - \homop_a(\rho_t) \ge 0.\] 
Therefore we have $\bar f_{\homo,i}(\hat \param_t ) \ge 1$ since $\hat \param_t \coloneqq \param_t  / \big(\bar f_{\homo,\min} (\param_t )\big)^{1 / M}$. It suffices to show $\sum \lambda_i \big( f_{\homo,i}(\hat \param_t ) -1 \big) \le \delta$. This is because
\begin{align}
    \label{eq: KKT cond2 verify}
    \sum_{i=1}^n \lambda_i \big( f_{\homo,i}(\hat \param_t ) -1 \big) 
    & =  \frac{\rho_t \bar f_{\homo,\min}^{-2 /M}(\param_t )}{\| \hB_{\homo}\|} \cdot \frac{1}{n}  \sum_{i=1}^n e^{-\bar f_i(\param_t )} \big(\bar f_{\homo,i}(\param_t ) - \bar f_{\homo,\min}(\param_t ) \big)  \notag\\ 
    & = \frac{\rho_t \bar f_{\homo,\min}^{-2 /M}(\param_t )}{\| \hB_{\homo}\| e^{ \bar f_{\min}(\param_t)}}\cdot \frac{1}{n}  \sum_{i=1}^n e^{-\big(\bar f_i(\param_t)- \bar f_{\min}(\param_t) \big)} \big(\bar f_{\homo,i}(\param_t) - \bar f_{\homo,\min}(\param_t) \big)  \notag\\ 
    &\le \frac{\rho_t \bar f_{\homo,\min}^{-2 /M}(\param_t )}{\| \hB_{\homo}\| e^{ \bar f_{\min}(\param_t )}}\cdot \frac{1}{n}  \sum_{i=1}^n e^{-\big(\bar f_{i}(\param_t)- \bar f_{\min}(\param_t) \big) } \big(\bar f_{i}(\param_t) - \bar f_{\min}(\param_t) + 2\homop_a(\rho_t) \big)\notag\\ 
    & \qquad \explain{ By $ | \bar f_{\homo,i} (\param_t) - \bar f_{i} (\param_t)| \le \homop_a(\rho_t)$ in  \Cref{thm:homogenization}} \notag\\ 
    &\le \frac{\rho_t \bar f_{\homo,\min}^{-2 /M}(\param_t )}{\| \hB_{\homo}\| e^{ \bar f_{\min}(\param_t)}}\cdot  \big(1+ 2\homop_a(\rho_t) \big). \\ 
    &\qquad \explain{ By $\bar f_i(\param_t) - \bar f_{\min}(\param_t) \ge 0$ and $e^{-x}x \le 1$ when $x>0$} \notag 
\end{align}
The remaining part is to lower bound $\| \hB_{\homo}\|$. We have 
\begin{align*}
\|\hB_{\homo}\| &\ge \frac{\langle \hB_{\homo}, \param_t\rangle }{\|\param_t\|} \\
&= \frac{M}{n \|\param_t\|} \sum_{i=1}^{n} e^{-\bar f_i(\param_t)} \bar f_{\homo,i} (\param_t)\\
&\ge  \frac{M e^{-\bar f_{\min}(\param_t)} \bar f_{\homo,\min }(\param_t) }{n \rho_t}, &&\explain{ by $\bar f_{\homo,i}(\param) \ge 0$ and \Cref{lem:bound-of-two-min} } 
\end{align*}
Plugging this into \eqref{eq: KKT cond2 verify}, we get 
\begin{align*}
\sum_{i=1}^n \lambda_i \big( f_{\homo,i}(\tilde \param_t) -1 \big)  
&\le \frac{n}{M}\cdot \rho_t^2 \bar f_{\homo,\min}^{-2 /M}(\param_t)\cdot \frac{ 1+ 2\homop_a(\rho_t) }{ \bar f_{\homo,\min }(\param_t)} 
&&\explain{ by \Cref{eq:f_M-upperbound} } \\ 
&\le \frac{nB^2}{M} \cdot  \frac{1+ 2 \homop_a(\rho_t)}{ \bar f_{\homo,\min }(\param_t)}. 
\end{align*}
This completes the proof of \Cref{lem: tilde theta KKT condition}.
\end{proof}

It remains to show that $\epsilon$ and $\delta$ are both close to zero.
The second is easy to verify because $f_{\homo,\min}$ is M-homogeneous.   

\begin{lemma}
[$\delta$ goes to $0$]
\label{lem: q goes to 0}
Under \Cref{asp:nearhomo,asp:initial-cond-gf}, we have  
\[
    \delta =n B^2 \frac{1+2p_a(\rho_t)}{M \bar f_{\homo,\min }(\param_t)} \to 0, \quad \text{ as } t \to \infty. 
\]
\end{lemma}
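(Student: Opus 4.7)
The plan is to directly compare the growth rates of the numerator $1 + 2p_a(\rho_t)$ and the denominator $\bar f_{\homo,\min}(\param_t)$ using results already established. The key observation is that by Assumption~\ref{asp:nearhomo} together with the definition in \Cref{eq:def-pa}, $p_a$ is a polynomial of degree at most $M-1$, whereas $\bar f_{\homo,\min}$ scales like $\rho_t^M$ along the GF path.

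More concretely, I would first note that $p_a(\rho_t) = O(\rho_t^{M-1})$ as $\rho_t \to \infty$, since $\deg p_a \le M-1$. Next, I would invoke \Cref{lem:lowerbound-homo-margin}, which yields
\[
\bar f_{\homo,\min}(\param_t) \ge \GFmargin(\param_s) \cdot \rho_t^M,
\]
where $\GFmargin(\param_s) > 0$ by the strong separability condition (\Cref{asp:initial-cond-gf}) and the monotonicity established in \Cref{thm: Margin improving and convergence}. Combining these two bounds yields
\[
\frac{1 + 2 p_a(\rho_t)}{\bar f_{\homo,\min}(\param_t)} = O\!\left( \frac{\rho_t^{M-1}}{\rho_t^M} \right) = O\!\left( \rho_t^{-1} \right).
\]

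Finally, I would apply \Cref{thm: Margin improving and convergence} once more to conclude $\rho_t \eqsim (\log t)^{1/M} \to \infty$, so that the ratio above tends to zero. Multiplying by the constant $B^2 M / n$ (or $B^2 / n$, depending on the convention in the displayed formula) preserves the limit, giving $\delta \to 0$ as $t \to \infty$. There is no real obstacle here: the lemma is a direct consequence of the polynomial degree bound on $p_a$, the $M$-homogeneous lower bound on $\bar f_{\homo,\min}$, and the divergence rate of $\rho_t$ already established.
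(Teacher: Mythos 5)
Your proof is correct and follows essentially the same strategy as the paper: lower-bound $\bar f_{\homo,\min}(\param_t)$ by a positive constant times $\rho_t^M$, note $\deg p_a \le M-1$, and conclude the ratio vanishes since $\rho_t \to \infty$. The only difference is cosmetic: you invoke \Cref{lem:lowerbound-homo-margin} to get the uniform bound $\bar f_{\homo,\min}(\param_t) \ge \GFmargin(\param_s)\rho_t^M$ directly, whereas the paper instead uses the directional-convergence limit $\lim_{t\to\infty}\bar f_{\homo,i}(\param_t)/\rho_t^M = \bar f_{\homo,i}(\param_*) =: \gamma_* > 0$ and then takes $\gamma_*/2$ as the constant for large $t$. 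Your route is marginally cleaner since it does not need to pass through the directional limit, but the two arguments are substantively identical.
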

\begin{proof}[Proof of \Cref{lem: q goes to 0}]
Recall that 
\[
    \lim_{t\to \infty} \frac{\bar f_{\homo,i}(\param_t)}{\rho_t^M} = \bar f_{\homo,i}(\param_*).
\]
Denote the limit as $\gamma_*$. Then there exists $s_3 >s>0$ such that for almost every $t\ge s_3$, we have 
\[
     \frac{\bar f_{\homo,i}(\param_t)}{\rho_t^M} \ge \gamma_*/2 \implies \bar f_{\homo,i}(\param_t) \ge \gamma_* \rho_t^M/2.
\]
Then we have 
\[
    \lim_{t\to \infty}\delta \le \lim_{\rho_t \to \infty} nB^2 \frac{1+ 2\homop_a(\rho_t)}{ M \gamma_* \rho_t^M/2} = 0. 
\]
This completes the proof of \Cref{lem: q goes to 0}.
\end{proof}


We next show $\epsilon = \Ocal(\sqrt{1-\beta(t)})$ is close to zero. 
However, $\beta(t)$ might not converge as it is given by
\[
    \beta(t) = \frac{\langle \param_t , \hB_{\homo} \rangle }{\| \param_t \|\cdot \|\hB_{\homo}\|}, \quad \hB_{\homo} = \frac{1}{n} \sum_{i=1}^n e^{- \bar f_{i}(\param_t)}\nabla \bar f_{\homo,i}(\param_t),
\]
where 
$\hB_{\homo}$ might not converge in direction.
To address this issue,
our strategy is to show that there exists a subsequence $(t_k)_{k\ge 0}$ such that $\beta(t_k) \to 1$. 
To this end, we need to characterize $\beta(t)$ using the stronger \Cref{asp:strongerhomo}.
By \Cref{asp:strongerhomo} and \Cref{thm:homogenization}, there is a function $\homor(x) = o(x^{M-1})$ as $x \to \infty$, such that for almost every $\param_t$ and any $i\in [n]$, we have
\[
    \| \nabla \bar f_{i}(\param_t) - \nabla \bar f_{\homo,i}(\param_t)\| \le \homor(\|\param_t\|) = \homor(\rho_t) .
\]

\begin{lemma}
[Characterization of $\beta(t)$]
\label{lem: Characterization of beta}
Under \Cref{asp:strongerhomo,asp:initial-cond-gf}
, there exists $s_5>s>0$ such that for almost every $t_2>t_1\ge s_5$, we have 
\[
    \int_{t_1}^{t_2}  \Bigg( \frac{1 - p_1(t)}{\big(\beta(t) + p_2(t) \big)^2}-1\Bigg) \cdot \frac{\mathrm{d} }{\mathrm{d} \tau} \log \rho(\tau) \cdot \mathrm{d} \tau \le  \frac{1}{M} \log \frac{\GFmargin(\param_{t_2})}{\GFmargin(\param_{t_1})}, 
\]
where 
\[
    p_1(t) := \frac{2 \homor(\rho_t)}{ M\GFmargin(\param_s) \rho^{M-1}_t}, \quad p_2(t) := \frac{2 \homop_a(\rho_t)}{\GFmargin(\param_s) \rho^{M}_t}.
\]
\end{lemma}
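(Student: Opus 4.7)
The plan is to reduce the integrated inequality to a pointwise one. Since $\log \GFmargin(\param_t) = \log a_t - M \log \rho_t$ and $\frac{\mathrm{d}\log\rho_t}{\mathrm{d} t} > 0$ by \Cref{lem:NH-rho-increase}, the target follows by integrating the pointwise bound
$\frac{M(1-p_1(t))}{(\beta(t)+p_2(t))^2} \cdot \frac{\mathrm{d}\log\rho_t}{\mathrm{d} t} \le \frac{\mathrm{d}\log a_t}{\mathrm{d} t}$
for a.e.\ $t \ge s_5$. Looking further into the first inequality of \eqref{eq:restate_gamma_gf_lowerbd}, one obtains $\frac{\mathrm{d}\log a_t}{\mathrm{d} t} \ge \frac{\|\bar\partial_\perp \Loss_t\|^2}{\Loss_t a_t} + M \frac{\mathrm{d}\log\rho_t}{\mathrm{d} t}$. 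Setting $\alpha(t) := \cos\angle(\param_t, -\bar\partial\Loss_t) > 0$, the identities $\|\bar\partial_\perp \Loss_t\|^2 = \|\bar\partial\Loss_t\|^2 (1 - \alpha(t)^2)$ and $(\mathrm{d}\rho_t/\mathrm{d} t)^2 = \alpha(t)^2 \|\bar\partial\Loss_t\|^2$, combined with $\langle \param_t, -\bar\partial\Loss_t\rangle \ge M \Loss_t a_t$ from \Cref{lem:NH-rho-increase}, reduce the target to the single scalar inequality $\alpha(t)^2 \le \frac{(\beta(t)+p_2(t))^2}{1-p_1(t)}$.

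\paragraph{Comparing $\alpha$ and $\beta$.}
The heart of the proof is this bound on $\alpha(t)$ via \Cref{asp:strongerhomo}. With $\hB := -\bar\partial\Loss_t$ and $\hB_\homo$ as defined in \Cref{sec:proof:KKT}, I would control two independent error channels. First, \Cref{asp:strongerhomo} gives the \emph{norm error} $\|\hB - \hB_\homo\| \le \Loss_t \homor(\rho_t)$. Second, the \emph{inner-product error} satisfies $|\langle \hB - \hB_\homo, \param_t\rangle| \le 2M \Loss_t \homop_a(\rho_t)$, obtained by combining the near-homogeneity bound $|\langle \nabla \bar f_i, \param_t\rangle - M \bar f_i| \le \homop'(\rho_t)$ (\Cref{asp:nearhomo}), the exact Euler identity $\langle \nabla \bar f_{\homo, i}, \param_t\rangle = M \bar f_{\homo, i}$ (from the $M$-homogeneity of $f_\homo$ in \Cref{thm:homogenization}), the homogenization error $|\bar f_i - \bar f_{\homo, i}| \le \homop_a(\rho_t)$, and the estimate $\homop'(x) \le M \homop_a(x)$ from \Cref{lem:property-pa} (using $\homop_a'(x) x \ge 0$). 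Together with the lower bound $\|\hB_\homo\| \ge \langle\hB_\homo, \tilde\param_t\rangle \ge M\Loss_t a_t/\rho_t \ge M \Loss_t \GFmargin(\param_s) \rho_t^{M-1}$, dividing the two error bounds by $\|\hB_\homo\|$ gives $|\alpha(t) \|\hB\|/\|\hB_\homo\| - \beta(t)| \le p_2(t)$ and $\|\hB\|/\|\hB_\homo\| \ge 1 - \delta_t$, where $\delta_t := \Loss_t \homor(\rho_t)/\|\hB_\homo\| \le p_1(t)/2$.

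\paragraph{Assembling the pointwise bound.}
These combine to $\alpha(t)(1-\delta_t) \le \beta(t) + p_2(t)$, and hence $\alpha(t) \le (\beta(t)+p_2(t))/(1-p_1(t)/2) \le (\beta(t)+p_2(t))/\sqrt{1-p_1(t)}$, where the last step uses $(1-p_1/2)^2 \ge 1-p_1$. Squaring yields exactly the required $\alpha(t)^2 \le (\beta(t)+p_2(t))^2/(1-p_1(t))$, which closes the pointwise argument. The existence of $s_5 > s$ only demands $\delta_t \le 1/2$ so that the division $1/(1-\delta_t)$ is controlled; this holds for all sufficiently large $t$ because $\delta_t \lesssim \homor(\rho_t)/\rho_t^{M-1} \to 0$ by \Cref{asp:strongerhomo} together with $\rho_t \to \infty$ from \Cref{thm: Margin improving and convergence}. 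The main technical obstacle is the delicate interplay between the two error channels: the \emph{multiplicative} form $\alpha(1-\delta) \le \beta + p_2$ is essential, since a naive additive bound on $|\alpha - \beta|$ loses a constant factor that causes $\alpha \le (\beta+p_2)/\sqrt{1-p_1}$ to just barely fail in the borderline regime where $\beta$ is close to but strictly below $1$.
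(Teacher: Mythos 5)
Your proposal is correct and follows essentially the same route as the paper: both arguments start from the first inequality in \eqref{eq: gamma-c-increase}, bound the ``source'' term $\| \bar\partial_\perp\Loss_t\|^2/(\Loss_t a_t)$ from below using the near-homogeneity and strong-homogeneity assumptions to compare $\hB = -\bar\partial\Loss_t$ against the homogenized surrogate $\hB_\homo$, and both use the same three ingredients — the norm error $\|\hB - \hB_\homo\| \le \Loss_t\homor(\rho_t)$, the inner-product error $|\langle\hB - \hB_\homo, \param_t\rangle| \le 2M\Loss_t\homop_a(\rho_t)$, and the lower bound $\|\hB_\homo\| \ge M\Loss_t\GFmargin(\param_s)\rho_t^{M-1}$ — to produce the same $p_1, p_2$. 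The one expositional difference is that you introduce the cosine $\alpha(t) = \cos\angle(\param_t, -\bar\partial\Loss_t)$ and reduce the entire pointwise estimate to the clean scalar inequality $\alpha(t)^2 \le (\beta(t)+p_2(t))^2/(1-p_1(t))$, whereas the paper manipulates the Rayleigh-type quotient $\big(2\|\hB\|^2-2\langle\hB,\tilde\param_t\rangle^2\big)/\big(\Loss_t a_t \tfrac{\mathrm{d}\rho_t^2}{\mathrm{d} t}/\rho_t^2\big)$ directly; also your route bounds $\|\hB\|/\|\hB_\homo\| \ge 1-\delta_t$ linearly and then uses $(1-p_1/2)^2 \ge 1-p_1$, while the paper bounds $\|\hB\|^2 \ge \|\hB_\homo\|^2 - 2\|\hB_\homo\|\Loss_t\homor(\rho_t)$ quadratically — the two are interchangeable and land on the same constants.
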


\begin{proof}[Proof of \Cref{lem: Characterization of beta}]
Recall that we have \eqref{eq: gamma-c-increase}, that is,
\begin{align*}
\frac{\mathrm{d}  \log \GFmargin(\param_t)}{\mathrm{d}  t} 
&\ge \frac{\| \param_t ^\prime \|^2 \rho_t^2 - \langle  \param_t^\prime , \param_t \rangle^2 }{\rho_t^2\Loss_t (\LinkFun(\Loss_t) - \homop_a(\rho_t))} 
\\
&\ge \frac{\| \hB \|^2 \rho_t^2 - \langle  \hB , \param_t \rangle^2 }{\rho_t^2\Loss_t (\LinkFun(\Loss_t) - \homop_a(\rho_t))} 
\\
&= \frac{2\| \hB \|^2  - 2\langle  \hB , \tilde{\param}_t \rangle^2 }{\Loss_t a_t \cdot \frac{\mathrm{d}\rho_t^2}{\mathrm{d} t} / \rho_t^2 } \cdot\frac{\mathrm{d}  \log \rho_t}{\mathrm{d} t}. && \explain{ By $\frac{\mathrm{d}  \log \rho_t}{\mathrm{d} t} = \frac{\mathrm{d}  \rho_t^2}{2\rho_t^2 \mathrm{d} t}$ } 
\end{align*}
Note that for  almost every $t\ge s$, 
\begin{align}
    \label{eq: beta-bound-denom-1}
    \frac{1}{2} \frac{\mathrm{d} \rho_t^2}{\mathrm{d}  t} 
    &= \frac{1}{n} \sum_{i=1}^n e^{-\bar f_i(\param_t)} \langle \nabla \bar f_i(\param_t), \param_t \rangle \notag\\ 
    &\le  \frac{1}{n} \sum_{i=1}^n e^{-\bar f_i(\param_t)} \big[ M \bar f_i(\param_t) + \homop^\prime (\|\param\|) \big] \notag\\ 
    & \qquad \explain{ By \Cref{asp:nearhomo} } \notag\\ 
    & \le \frac{1}{n} \sum_{i=1}^n e^{-\bar f_i(\param_t)} \big[ M \bar f_{\homo,i}(\param_t) + M \homop_{a}(\|\param_t\|) +  \homop^\prime (\|\param\|) \big] \notag\\ 
    & \qquad \explain{ By $|f(\param; \xB) - f_\homo(\param_t;\xB) | \le \homop_a(\|\param_t\|)$} \notag\\
    & \le \frac{1}{n} \sum_{i=1}^n e^{-\bar f_i(\param_t)} \langle \nabla \bar f_{\homo,i}(\param_t), \param_t \rangle  + 2M\Loss_t  \homop_a(\rho_t) \notag \\ 
    &\qquad \explain{ By $M \homop_a(x) \ge \homop^\prime (x)$ for $x>0$} \notag\\
    & = \langle \hB_{\homo}, \param_t \rangle + 2M\Loss_t \homop_a(\rho_t). 
\end{align}
Similarly, we have 
\begin{align}
    \label{eq: beta-bound-denom-2}
    \Loss_t a_t &= \Loss_t \LinkFun(\Loss_t) - \homop_a(\rho_t) \Loss_t  \notag\\ 
    &\le \Loss_t \LinkFun(\Loss_t) - \homop^\prime (\rho_t) \Loss_t /M \notag\\ 
    &\qquad \explain{ By $Mp_a(x) \ge p^\prime (x)$ for $x>0$} \notag\\
    &\le \frac{1}{2M} \frac{\mathrm{d} \rho_t^2}{\mathrm{d}  t} \notag \\ 
    &\qquad \explain{ By \Cref{lem:NH-rho-increase}} \notag\\
    & \le \frac{1}{M}\langle \hB_{\homo}, \param_t \rangle + 2\Loss_t \homop_a(\rho_t).
\end{align}
The remaining part is to bound the term $\|\hB\|^2 = \| \param_t^\prime  \|^2$ in the numerator.  Recall that 
\[
    \| \nabla f_i(\param_t) - \nabla f_{\homo,i}(\param_t)\| \le\homor(\|\param_t\|)  \quad \text{ for all } i\in [n].
\]
Hence we have 
\[
    \|\hB - \hB_{\homo} \| \le \frac{1}{n} \sum_{i=1}^n e^{-\bar f_i(\param_t)} \| \hB_i - \hB_{\homo,i} \| \le \Loss_t\homor(\|\param_t\|).
\]
We get 
\[
    \|\hB\| \ge \|\hB_{\homo}\| - \| \hB - \hB_{\homo}\|  \ge \|\hB_{\homo}\| - \Loss_t\homor(\|\param_t\|).
\] 
Since $\|\param_t\| \to \infty$, for all sufficiently large $t$, we have 
\begin{align}
\| \hB_{\homo}\|
&\ge \frac{\langle \hB_{\homo}, \param_t \rangle }{\rho_t} 
\\
&= \frac{M\sum_{i=1}^{n} e^{-\bar f_i(\param_t)} \bar f_{\homo,i}(\param_t) }{n \rho_t}  \notag\\ 
& \ge  \frac{M \sum_{i=1}^{n} e^{-\bar f_i(\param_t)} \big (\bar f_i(\param_t) - \homop_a(\rho_t) \big) }{n \rho_t} 
&&\explain{ by \Cref{thm:homogenization}} 
\notag\\ 
& \ge \frac{M\Loss_t  \Big(\log \frac{1}{n \Loss_t} - p_a(\rho_t)\Big)}{\rho_t} 
&&\explain{ by \Cref{lem:f-min} } 
\\ 
&= M\Loss_t\GFmargin(\param_t) \rho_t^{M-1} 
&& \explain{By the definition of $\GFmargin$} 
\notag\\ 
&\ge M\Loss_t \GFmargin(\param_s) \rho_t^{M-1}  \label{eq: h_M lb} && \explain{by \Cref{thm: Margin improving and convergence}} \\ 
& \ge 3M\Loss_t\homor(\rho) \notag. &&\explain{ Since the degree of $r$ is $M-2$ } 
\end{align}
Therefore, there exists $s_5 > s>0$ such that for almost every $t\ge s_5$, we have 
\begin{align}
\label{eq: beta-bound-num1}
\|\hB\|_2^2 
&\ge  \|\hB_{\homo}\|_2^2 - 2\|\hB_{\homo}\| \Loss_t\homor(\|\param_t\|) + \Loss_t^2\homor(\|\param_t\|)^2 
\notag \\ 
&\ge \|\hB_{\homo}\|_2^2 - 2\|\hB_{\homo}\| \Loss_t\homor(\rho_t) \\ 
&\ge \|\hB_{\homo}\| ( 3M-2) \Loss_t\homor(\rho_t)  \ge 0. \notag
&&\explain{  by \Cref{eq: h_M lb} }
\end{align}
Moreover, we have 
\begin{align}
\label{eq: beta-bound-num2}
\langle  \hB , \param_t \rangle  
= \frac{1}{\rho_t} \frac{1}{2} \frac{\mathrm{d}  \rho_t^2}{\kd t}
\le  \langle \hB_{\homo},  \tilde \param_t
\rangle + 2M\Loss_t \homop_a(\rho_t) / \rho_t.
&&\explain{ by \Cref{eq: beta-bound-denom-2} } 
\end{align}

Combining \Cref{eq: beta-bound-denom-1,eq: beta-bound-denom-2,eq: beta-bound-num1,eq: beta-bound-num2}, when $t\ge s_5$, we have 
\begin{align*}
    \frac{2\|\hB\|^2  - 2\langle \hB, \tilde{\param}_t \rangle^2 }{\Loss_t a_t \cdot \frac{\mathrm{d}\rho_t^2}{\mathrm{d} t} / \rho_t^2 } &\ge M \frac{\|\hB_{\homo}\|_2^2 - 2\|\hB_{\homo}\| \Loss_t\homor(\rho_t) -\big(\langle \hB_{\homo}, \tilde \param_t \rangle + 2M\Loss_t \homop_a(\rho_t) / \rho_t \big)^2}{\big(\langle \hB_{\homo}, \tilde \param_t \rangle + 2M\Loss_t \homop_a(\rho_t) / \rho_t \big)^2}\\ 
    &= M \cdot \bigg( \frac{1- 2\|\hB_{\homo}\|^{-1} \Loss_t\homor(\rho_t) }{\big(\beta(t) + 2M\Loss_t \homop_a(\rho_t)/(\rho_t \|\hB_{\homo}\|)\big)^2} -1\bigg)\\ 
    & \ge M \cdot \Bigg(\frac{1 - 2r(\rho_t)/ \big(M\GFmargin(\param_s) \rho^{M-1}_t \big)}{\Big( \beta(t) + 2 \homop_a(\rho_t) / \big(\GFmargin(\param_s) \rho^{M}_t \big)  \Big)^2}-1 \Bigg) \\ 
    & = M \cdot \Bigg( \frac{1 - p_1(t)}{\big(\beta(t) + p_2(t) \big)^2}-1\Bigg), 
\end{align*}
where 
\[
    p_1(t) := \frac{2 \homor(\rho_t)}{ M\GFmargin(\param_s) \rho^{M-1}_t}, \quad p_2(t) := \frac{2 \homop_a(\rho_t)}{\GFmargin(\param_s) \rho^{M}_t}.
\]
This completes the proof of \Cref{lem: Characterization of beta}.
\end{proof}

\begin{corollary}
[$\beta$ bound]
\label{cor: beta bound}
Under \Cref{asp:nearhomo,asp:initial-cond-gf}, there exists $s_5>s>0$ such that for almost every $t_2>t_1\ge s_5$, there exists $t_*\in (t_1, t_2)$ satisfying
\[
    \frac{1 - p_1(t_*)}{\big(\beta(t_*) + p_2(t_*) \big)^2}-1 \le \frac{1}{M} \cdot \frac{\log \GFmargin(\param_{t_2}) - \log\GFmargin(\param_{t_1})}{\log \rho_{t_2} - \log \rho_{t_1}}. 
\]
\end{corollary}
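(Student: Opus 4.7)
The plan is to obtain \Cref{cor: beta bound} as a direct mean-value consequence of \Cref{lem: Characterization of beta}. That lemma already carries all of the analytic work; the corollary just extracts a pointwise statement from the integral inequality by averaging against a suitable positive measure.

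First I would introduce the shorthand $F(t) := (1 - p_1(t))/(\beta(t)+p_2(t))^2 - 1$ and note that by \Cref{lem:NH-rho-increase}, together with the strong separability preserved by \Cref{thm:gamma-a-increase}, the derivative $\tfrac{\mathrm{d}}{\mathrm{d}\tau}\log \rho(\tau)$ is strictly positive for almost every $\tau \ge s$. Hence $\mathrm{d}\mu(\tau) := \tfrac{\mathrm{d}}{\mathrm{d}\tau}\log \rho(\tau)\, \mathrm{d}\tau$ defines a positive (absolutely continuous) Borel measure on $(t_1, t_2)$ with total mass $\mu\big((t_1, t_2)\big) = \log \rho_{t_2} - \log \rho_{t_1} > 0$ whenever $t_2 > t_1 \ge s_5$.

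Second I would divide the conclusion of \Cref{lem: Characterization of beta} through by this total mass to obtain
\[
\frac{1}{\mu((t_1,t_2))}\int_{t_1}^{t_2} F(t)\, \mathrm{d}\mu(t) \;\le\; \frac{1}{M} \cdot \frac{\log \GFmargin(\param_{t_2}) - \log \GFmargin(\param_{t_1})}{\log \rho_{t_2} - \log \rho_{t_1}},
\]
which says that the $\mu$-weighted average of $F$ on $(t_1,t_2)$ is bounded above by the right-hand side of the target inequality.

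Third I would apply a standard mean-value argument: suppose toward contradiction that $F(t_*) > $ (RHS) for every admissible $t_* \in (t_1, t_2)$ (equivalently, $\mu$-a.e.\ $t \in (t_1, t_2)$). Integrating this strict pointwise inequality against $\mathrm{d}\mu$ would yield a strictly reversed inequality, contradicting the display above. Therefore there exists some $t_* \in (t_1, t_2)$ for which $F(t_*) \le $ (RHS), which is exactly the claim of the corollary. The only mild subtlety is the ``almost every'' caveats that permeate the subdifferential calculus used in the proof of \Cref{lem: Characterization of beta}; since $\mathrm{d}\mu$ is absolutely continuous with respect to Lebesgue measure, null sets agree under either convention and the averaging step is unaffected. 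I do not anticipate any genuine obstacle here, as the entire analytic difficulty already resides in \Cref{lem: Characterization of beta}.
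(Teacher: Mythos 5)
Your proposal is correct and is essentially the same argument as the paper's: the paper proves the corollary by assuming $F(t) > C$ for all $t \in (t_1, t_2)$, multiplying by the positive weight $\tfrac{\mathrm{d}}{\mathrm{d}\tau}\log\rho(\tau)$, integrating, and contradicting \Cref{lem: Characterization of beta}. Your measure-theoretic phrasing in terms of $\mu$-weighted averages is a cosmetic repackaging of the same contradiction step, and your remark that the a.e.\ caveats are harmless because $\mu \ll \text{Lebesgue}$ is a correct (if unstated in the paper) observation.
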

\begin{proof}[Proof of \Cref{cor: beta bound}]
Denote the RHS as $C$. Assume to the contrary that 
\[
    \frac{1 - p_1(t_*)}{\big(\beta(t_*) + p_2(t_*) \big)^2}-1 > C \quad \text{ for all } t_*\in (t_1, t_2).
\]
By \Cref{lem: Characterization of beta}, we have 
\[
    \frac{1}{M} \log \frac{\GFmargin(\param_{t_2})}{\GFmargin(\param_{t_1})} > \int_{t_1}^{t_2} C \cdot \frac{\mathrm{d} }{\mathrm{d} \tau} \log \rho(\tau) \cdot \mathrm{d} \tau = C( \log \rho_{t_2} - \log \rho_{t_1}) = \frac{1}{M}\log \frac{\GFmargin(\param_{t_2})}{\GFmargin(\param_{t_1})},
\]
which leads to a contradiction. This completes the proof of \Cref{cor: beta bound}.
\end{proof}

\begin{lemma}
[$\beta$ converges to 1]
\label{lem: beta goes to 1}
Under \Cref{asp:nearhomo,asp:initial-cond-gf} and $\nabla f$ is near-$(M-1)$-homogeneous,  there exists a sequence $t_k$ such that $\lim_{k\to \infty}\beta_{t_k} \to 1$.
\end{lemma}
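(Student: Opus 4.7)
The plan is to apply \Cref{cor: beta bound} along a carefully chosen sequence of intervals $(t_1^{(k)}, t_2^{(k)})$ whose endpoints tend to infinity and on which the right-hand side of the displayed inequality tends to zero. By \Cref{thm: Margin improving and convergence}, the modified margin $\GFmargin(\param_t)$ is monotonically nondecreasing and bounded, so it has a finite positive limit $\gamma_*$; in particular, $\log \GFmargin(\param_t)$ is Cauchy. At the same time, \Cref{thm:rate-rho-L} gives $\rho_t \to \infty$, hence $\log \rho_t \to \infty$ monotonically (for all sufficiently large $t$).

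To exploit this gap between a convergent quantity and a divergent one, for each large $k$ I would set $t_1^{(k)} = k$ and define
\[
t_2^{(k)} := \inf \big\{ t > t_1^{(k)} : \log \rho_t \ge \log \rho_{t_1^{(k)}} + 1 \big\}.
\]
Since $\rho_t$ is continuous and unbounded, $t_2^{(k)}$ is finite and $\log \rho_{t_2^{(k)}} - \log \rho_{t_1^{(k)}} = 1$. By the convergence of $\log \GFmargin(\param_t)$, the numerator $\log \GFmargin(\param_{t_2^{(k)}}) - \log \GFmargin(\param_{t_1^{(k)}}) \to 0$. Hence \Cref{cor: beta bound} supplies a point $t_k \in (t_1^{(k)}, t_2^{(k)})$ with $t_k \to \infty$ and
\[
\frac{1 - p_1(t_k)}{\big(\beta(t_k) + p_2(t_k)\big)^2} \le 1 + C_k, \qquad C_k \to 0.
\]

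Now I would let $t_k \to \infty$. Because $\homor(x) = o(x^{M-1})$ and $\homop_a(x) = O(x^{M-1})$ while $\rho_{t_k} \to \infty$, the definitions in \Cref{lem: Characterization of beta} give $p_1(t_k), p_2(t_k) \to 0$. Moreover, using the near-homogeneity bounds together with the asymptotic estimates $\Loss_t \asymp 1/(t(\log t)^{2-2/M})$ and $\rho_t \asymp (\log t)^{1/M}$ from \Cref{thm:rate-rho-L}, one checks that the correction $2M\Loss_t \homop_a(\rho_t)/\rho_t$ is of strictly smaller order than $\tfrac{1}{2\rho_t}\tfrac{\mathrm{d}\rho_t^2}{\mathrm{d} t}$; combined with \Cref{eq: beta-bound-denom-1}, this forces $\langle \hB_{\homo}, \param_{t_k}\rangle > 0$ for all large $k$, so $\beta(t_k) > 0$. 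Taking square roots in the displayed inequality then yields
\[
\beta(t_k) \ge \sqrt{\frac{1 - p_1(t_k)}{1 + C_k}} - p_2(t_k) \longrightarrow 1,
\]
and since $\beta(t_k) \le 1$ by Cauchy--Schwarz, we conclude $\beta(t_k) \to 1$.

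The main obstacle is ensuring that the auxiliary terms $p_1(t_k), p_2(t_k)$ and the correction of order $\Loss_t \homop_a(\rho_t)$ in the denominator of \Cref{lem: Characterization of beta} do not prevent the squeeze from closing; this is handled by the quantitative rates of \Cref{thm: Margin improving and convergence} together with \Cref{asp:strongerhomo}, which guarantee that both $p_1$ and $p_2$ vanish while $\beta(t_k) + p_2(t_k)$ stays positive. The choice of intervals with $\log \rho_{t_2^{(k)}} - \log \rho_{t_1^{(k)}} = 1$ is essentially arbitrary; any positive constant lower bound on this increment would work since the numerator vanishes regardless.
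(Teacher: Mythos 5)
Your proposal is correct and follows essentially the same route as the paper's proof: fix a window over which $\log\rho_t$ increases by a constant, invoke \Cref{cor: beta bound} to produce an intermediate time $t_k$, and let the vanishing of $p_1, p_2$ and the bounded-monotone convergence of $\log\GFmargin(\param_t)$ squeeze $\beta(t_k)$ to $1$. The one place you go slightly beyond the paper is in explicitly justifying $\beta(t_k) > 0$ before extracting the square root; the paper leaves this implicit (it follows directly from $\langle \hB_{\homo}, \param_t\rangle = \frac{M}{n}\sum_i e^{-\bar f_i(\param_t)} \bar f_{\homo,i}(\param_t) > 0$, which the paper already established), and your rate-based argument for the same conclusion, while valid, is more indirect than the structural one.
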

\begin{proof}[Proof of \Cref{lem: beta goes to 1}]
By \Cref{cor: beta bound}, there exists $s_5 >s>0$  such that for almost every $t_2>t_1\ge s_5$, there exists $t_*\in (t_1, t_2)$ satisfying 
\[
    \beta(t_*) \ge \sqrt{  \frac{1-p_1(t_*)}{1+ \frac{1}{M} \cdot \frac{\log \GFmargin(\param_{t_2}) - \log\GFmargin(\param_{t_1})}{\log \rho_{t_2} - \log \rho_{t_1}}}} - p_2(t_*).
\]
We know that as $t_* \to \infty$, $p_1(t_*), p_2(t_*) \to 0$.  Besides, we know that $\log \rho_t \to \infty$ and $\log \GFmargin(\param_t) \to \gamma_*$. For any $\epsilon_m$, there exists $t_2 >t_2 >s_5$ such that 
\[
    p_1(t_*) \le \epsilon_m/2, \log(\rho(t_2) - \log(\rho(t_1)) \ge \frac{1}{M}, \log \GFmargin(\param_{t_2}) - \log \GFmargin(\param_{t_1}) \le \epsilon_m /2, \text{ and } p_2(t_*) \le \epsilon_m/2. 
\]
Hence we have 
\[
    \beta(t_*) \ge \sqrt{\frac{1-\epsilon_m/2}{1+\epsilon _m /2}} - \epsilon_m/2 \ge 1 - \epsilon _m. 
\]
Hence, for each $\epsilon _m$ we can find a corresponding $t_m$ such that $\beta(t_m) \ge 1 - \epsilon_m$. This completes the proof of \Cref{lem: beta goes to 1}.
\end{proof}

It's worthy noting that \Cref{lem: beta goes to 1} shows that  a subsequence of $(\hB_{\homo}(\param_t), \param_t)$ aligns, which matches the result in \citet{lyu2020gradient}. However, \citet{ji2020directional} showed that $(\hB_{\homo}(\param_t), \param_t)$ aligns assuming that $f$ is homogeneous. We leave investigating gradient alignment for near-homogeneous functions as a future direction.

\begin{lemma}
    [approximate KKT point]
    \label{lem: approx kkT}
    Under \Cref{asp:nearhomo,asp:initial-cond-gf} and $\nabla f$ is $(M-1)$ near-homogeneous, there exists a sequence $t_k$ such that $\hat \param_{t_k}$ is an $(\epsilon_k, \delta_k)$-KKT point of \Cref{eq: KKT} for all $k\in \Nbb$, where $\epsilon_k \to 0$ and $\delta_k \to 0$ as $k\to \infty$.
\end{lemma}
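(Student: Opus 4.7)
The plan is to directly combine the three preceding lemmas: \Cref{lem: tilde theta KKT condition}, which quantifies how approximate KKT $\hat{\param}_t$ is in terms of $\beta(t)$ and $\bar f_{\homo,\min}(\param_t)$; \Cref{lem: q goes to 0}, which shows the $\delta$-error tends to $0$ along the entire GF trajectory; and \Cref{lem: beta goes to 1}, which extracts a subsequence along which $\beta$ converges to $1$ and hence the $\epsilon$-error vanishes.

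Concretely, let $s_2$ be as in \Cref{lem: tilde theta KKT condition} so that for every $t \ge s_2$ the point $\hat{\param}_t$ is an $(\epsilon(t),\delta(t))$-KKT point of \Cref{eq: KKT} with
\[
\epsilon(t) = \sqrt{2}\,B\,\sqrt{1-\beta(t)},\qquad \delta(t) = \frac{n B^2 M\,\big(1+2\homop_a(\rho_t)\big)}{\bar f_{\homo,\min}(\param_t)}.
\]
By \Cref{lem: beta goes to 1} there is a sequence $t_k \to \infty$ (which we may assume satisfies $t_k \ge s_2$) with $\beta(t_k) \to 1$; consequently $\epsilon_k := \epsilon(t_k) \to 0$. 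Meanwhile, \Cref{lem: q goes to 0} shows $\delta(t) \to 0$ as $t \to \infty$, so in particular $\delta_k := \delta(t_k) \to 0$ along the same sequence. Thus $\hat{\param}_{t_k}$ is an $(\epsilon_k,\delta_k)$-KKT point with $(\epsilon_k,\delta_k)\to(0,0)$.

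There is no real obstacle here; the bookkeeping step is just to check that the sequence produced by \Cref{lem: beta goes to 1} can be taken inside the time window $t \ge \max\{s_2, s_5\}$ on which both \Cref{lem: tilde theta KKT condition} and the $\beta$-bound apply, and to note that $\delta(t) \to 0$ holds unconditionally along $t \to \infty$ (so no additional subsequence selection is needed for $\delta$). The nontrivial analytic content is already isolated in \Cref{lem: beta goes to 1}; this final lemma is just its packaging into the KKT-point statement that feeds into \Cref{thm: KKT convergence}.
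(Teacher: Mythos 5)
Your proposal is correct and takes essentially the same approach as the paper, which simply invokes \Cref{lem: tilde theta KKT condition}, \Cref{lem: q goes to 0}, and \Cref{lem: beta goes to 1} in sequence. You merely spell out the bookkeeping (matching the subsequence from \Cref{lem: beta goes to 1} with the unconditional decay of $\delta(t)$), which the paper leaves implicit.
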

\begin{proof}[Proof of \Cref{lem: approx kkT}]
We just apply \Cref{lem: tilde theta KKT condition}, \Cref{lem: q goes to 0}, and \Cref{lem: beta goes to 1}. This completes the proof of \Cref{lem: approx kkT}.
\end{proof}
Now we can prove the main theorem.

\subsection{Proof of Theorem~\ref{thm: KKT convergence}}

\begin{proof}[Proof of \Cref{thm: KKT convergence}] 


Appplying \Cref{lem: approx kkT}, we have a sequence $\{t_k\}$ such that $\hat \param_{t_k}$ is an $(\epsilon_k, \delta_k)$-KKT point of \Cref{eq: KKT} for all $k\in \Nbb$, where $\epsilon_k \to 0$ and $\delta_k \to 0$ as $k\to \infty$.  Since $\tilde \param_t$ and $\tilde{\param} $ share the same direction, $\tilde{\param}_{t_k}$ will converge to the same direction as one of the KKT points of \Cref{eq: KKT}.  By \Cref{thm: directional convergence}, we know $\tilde{\param}_{t_k}$ also converges to the limit $\param_*$. Hence, $\param_*/\big(\bar f_{\min}(\param_*)\big)^{1/M}$ is a KKT point of \Cref{eq: KKT}. This completes the proof of \Cref{thm: KKT convergence}. 
\end{proof}

\section{Proofs for Section \ref{sec:near-homo-nn}}  

The proofs in this section are divided into two parts. In the first part, we provide some basic results about near-homogeneous block functions. In the second part, we verify that many commonly used neural network architectures are near-homogeneous block functions.

\subsection{Near-Homogeneous Networks} \label{sec:pf-near-homo-nn}

The first thing we want to show in this section is that if we want to verify a model (block map) is near-homogeneous (dual-homogeneous), we only need to verify the first two conditions. This is because the frist condition will imply the the last condition. 
\begin{lemma}
[M-near homogeneity bound]
\label{lem: M-near homogeneity bound}
Given a locally Lipschitz function $ s(\param)$, $M\in \Zbb_+$ and $p_s(x) \in \Rbb_+[x]$ with $\deg \homop_s = M$, 
we assume that for all $\param$,
\[
  \|\langle \nabla s(\param ), \param \rangle - M \cdot s(\param)\| \le \homop_s^\prime (\|\param\|).
\]
Then, there exists $p_s^+ \in \Rbb_+[x]$ such that $\deg \homop_s^+ = M$ and
\[
    \|s(\param)\| \le \homop_s^+(\|\param\|). 
\]
\end{lemma}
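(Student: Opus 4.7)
The plan is to prove the desired polynomial bound on $|s(\theta)|$ by reducing to a one-dimensional radial analysis: for any nonzero $\theta$, I will control $s(\theta)$ along the ray $t\mapsto t\theta_0$, where $\theta_0 = \theta/\|\theta\|$, by turning the near-homogeneity hypothesis into a differential inequality for $t\mapsto s(t\theta_0)/t^M$. The main identity to exploit is
\[
\frac{\mathrm{d}}{\mathrm{d}t}\!\left(\frac{s(t\theta_0)}{t^M}\right)
= \frac{\langle \nabla s(t\theta_0),\,t\theta_0\rangle - M\,s(t\theta_0)}{t^{M+1}},
\]
so that the hypothesis immediately yields
\[
\left|\frac{\mathrm{d}}{\mathrm{d}t}\!\left(\frac{s(t\theta_0)}{t^M}\right)\right| \le \frac{\homop_s'(t)}{t^{M+1}}
\]
for almost every $t>0$ (the a.e.\ step is where local Lipschitzness of $s$ and the Clarke chain rule from Lemma~\ref{lem:chain-rule-clark} enter, interpreting $\nabla s$ as an arbitrary Clarke subgradient).

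Next I would integrate this inequality over $[1,T]$ for any $T\ge 1$. Since $\deg \homop_s' \le M-1$, the integrand $\homop_s'(t)/t^{M+1}$ is $O(t^{-2})$, so $\int_1^\infty \homop_s'(t)/t^{M+1}\,\mathrm{d}t \le C$ for a finite constant $C$ depending only on the coefficients of $\homop_s$. This gives the ray bound
\[
\left|\frac{s(T\theta_0)}{T^M} - s(\theta_0)\right| \le C,\qquad \text{for all } T\ge 1 \text{ and all unit }\theta_0.
\]
Because $s$ is locally Lipschitz, it is continuous; hence $C' := \sup_{\|\theta_0\|=1}|s(\theta_0)|$ is finite by compactness of the unit sphere. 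Setting $T=\|\theta\|$ and $\theta_0=\theta/\|\theta\|$ yields $|s(\theta)|\le (C+C')\|\theta\|^M$ whenever $\|\theta\|\ge 1$. For $\|\theta\|<1$, local Lipschitzness on the closed unit ball gives a uniform bound $|s(\theta)|\le C''$. Combining, we may take $\homop_s^+(x) := (C+C'+C'')(x^M+1)\in\Rbb_{\ge 0}[x]$, which has degree $M$ and satisfies $|s(\theta)|\le \homop_s^+(\|\theta\|)$ for all $\theta$, completing the argument.

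The main obstacle I expect is the rigorous handling of the a.e.\ derivative and the integration when $s$ is only locally Lipschitz rather than continuously differentiable. The hypothesis $|\langle \nabla s(\theta),\theta\rangle - Ms(\theta)|\le \homop_s'(\|\theta\|)$ must be read as holding for every Clarke subgradient (consistent with the way \Cref{def:nearhomo} is phrased), and one needs to invoke the chain rule of Lemma~\ref{lem:chain-rule-clark} applied to the absolutely continuous arc $t\mapsto t\theta_0$ to justify that $g(t)=s(t\theta_0)$ is absolutely continuous with $tg'(t)-Mg(t)$ satisfying the claimed bound a.e.; then $h(t)=g(t)/t^M$ is absolutely continuous on $[1,T]$ and may be recovered from the integral of $h'$. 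Once this technical point is in place, the rest of the argument is a one-line ODE computation plus a compactness bound, and the degree of the resulting polynomial is easily tracked to be exactly $M$.
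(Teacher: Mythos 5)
Your proof is correct and is essentially the same argument the paper uses: the paper does not write out a separate proof for this lemma, instead noting that it "is almost the same as the proof of \Cref{lem:near-homogeneity}," and that proof is exactly the radial integration you perform — set $g(t) = s(t\theta_0)/t^M$, bound $g'(t)$ via the near-homogeneity hypothesis and the Clarke chain rule, and integrate over $[1,T]$ using that $\homop_s'(t)/t^{M+1} = O(t^{-2})$. The only cosmetic difference is that you anchor the ray at the unit sphere and patch in a compactness bound on the unit ball, whereas Lemma~\ref{lem:near-homogeneity} anchors at a general radius $r$ and obtains a bound only for $\|\param\|\ge r$; both routes yield a degree-$M$ polynomial majorant.
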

We skip the proof here as it is almost the same as the proof of \Cref{lem:near-homogeneity}. Indeed, note that in the proof of \Cref{lem:near-homogeneity} we only use Assumption (A1). However, this lemma indicates that the near homogeneity will lead to the boundedness of the block function. In other words, we have 
\[
    (A1)  \implies (A3),\quad  (B1) \implies (B3). 
\]

In detail, we have the following corollary.  
\begin{corollary}
[From near homogeneity to boundedness]
\label{cor: Near homogeneity to boundedness}
Given a model $f(\param;\xB)$ satisfying (A1) and (A2) in \Cref{def:nearhomo}, then $f(\param; \xB)$ satisfies (A1), (A2) and (A3) in \Cref{def:nearhomo}. Similarly, given a block function $s(\param;\xB)$ satisfying (B1) and (B2) in \Cref{def:dual-homo}, then it satisfies (B1), (B2) and (B3) in \Cref{def:dual-homo}.
\end{corollary}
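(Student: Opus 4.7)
The corollary is a direct consequence of \Cref{lem: M-near homogeneity bound} together with the standard integration of the gradient bound along a line segment. The plan is to first handle the model case and then extend the argument to the block case by evaluating (B1) at the zero parameter.

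For the model statement, fix $i\in[n]$ and view $f(\cdot;\xB_i)$ as a univariate function of $\param$. Condition (A1) supplies exactly the hypothesis of \Cref{lem: M-near homogeneity bound} with polynomial $\homop$, so the lemma yields a polynomial $\homop_i^+$ of degree $M$ with non-negative coefficients satisfying $|f(\param;\xB_i)|\le \homop_i^+(\|\param\|)$. Setting $\homop^+:=\sum_{i=1}^n \homop_i^+$ gives a single polynomial bound on every $|f(\cdot;\xB_i)|$. To align this with the common $\homoq$ appearing in both (A2) and (A3), I would replace the original $\homoq$ from (A2) by $\tilde\homoq := \homoq + \homop^+$, which still has degree $\le M$ and non-negative coefficients; then $\tilde\homoq'\ge \homoq'$ so (A2) still holds, while $\tilde\homoq\ge \homop^+\ge |f(\cdot;\xB_i)|$ gives (A3). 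Condition (A1) is untouched since it only involves $\homop$.

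For the block statement, the key observation is that, for $M\ge 1$, the first inequality in (B1) evaluated at $\param=\boldsymbol{0}$ already gives a uniform-in-$\xB$ bound on $s(\boldsymbol{0};\xB)$: any subgradient component $\hB_\param\in\partial_\param s(\boldsymbol{0};\xB)$ satisfies $\langle \hB_\param,\boldsymbol{0}\rangle=0$, so
\begin{equation*}
|s(\boldsymbol{0};\xB)|\;\le\; \frac{\homop_s'(0)}{M}\,\homor_s(\|\xB\|).
\end{equation*}
Condition (B2) then provides Lipschitz control in $\param$: integrating the subgradient bound along the segment from $\boldsymbol{0}$ to $\param$ yields
\begin{equation*}
|s(\param;\xB)-s(\boldsymbol{0};\xB)|\;\le\;\|\param\|\int_0^1 \homoq_s'(t\|\param\|)\,\homot_s(\|\xB\|)\,\mathrm{d} t\;=\;\homot_s(\|\xB\|)\bigl(\homoq_s(\|\param\|)-\homoq_s(0)\bigr).
\end{equation*}
Combining the two displays and factoring gives $|s(\param;\xB)|\le (\homoq_s(\|\param\|)+\homop_s'(0)/M)\,(\homot_s(\|\xB\|)+\homor_s(\|\xB\|))$, which is (B3) with the enlarged polynomials $\tilde\homoq_s := \homoq_s+\homop_s'(0)/M$ and $\tilde\homot_s := \homot_s+\homor_s$. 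Enlarging polynomials in this way preserves (B2) (the derivative bound only relaxes) and leaves (B1) untouched, so all three conditions hold with one consistent choice. The symmetric case $M=0$, $N\ge 1$ is handled by swapping the roles of $\param$ and $\xB$ and invoking the second inequality of (B1) instead.

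The only real obstacle is the bookkeeping needed to unify the four polynomials $\homop_s,\homoq_s,\homor_s,\homot_s$ so that all three block conditions use a single consistent pair; once one commits to absorbing $\homor_s$ into $\homot_s$ and a constant into $\homoq_s$, the bounds line up and the verification is purely algebraic, with no further analytic input beyond \Cref{lem: M-near homogeneity bound} and the fundamental theorem of calculus.
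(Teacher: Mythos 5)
Your proposal is correct, and the model case follows the paper exactly (invoking \Cref{lem: M-near homogeneity bound}). The block case, however, takes a genuinely different and arguably cleaner route. The paper only remarks that ``$(B1) \implies (B3)$'' via \Cref{lem: M-near homogeneity bound}, applied to $s(\cdot;\xB)$ with $\xB$ frozen. If you trace that lemma's proof, the constant $B_r$ it produces contains $c_1 = \max_{\|\param\|=r} |s(\param;\xB)|/r^M$, whose dependence on $\xB$ is not obviously of the product form $\homot_s(\|\xB\|)$ that (B3) requires; pinning that down requires extra work, and in practice one does want to bring in (B2). Your approach sidesteps this entirely: evaluating the first inequality of (B1) at $\param = \boldsymbol{0}$ yields $|s(\boldsymbol{0};\xB)| \le (\homop_s'(0)/M)\,\homor_s(\|\xB\|)$ with the $\xB$-dependence already factored, and integrating the (B2) subgradient bound along the radial segment supplies the increment $|s(\param;\xB)-s(\boldsymbol{0};\xB)| \le \homot_s(\|\xB\|)\bigl(\homoq_s(\|\param\|)-\homoq_s(0)\bigr)$, again already factored. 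The polynomial enlargements $\tilde\homoq_s = \homoq_s + \homop_s'(0)/M$ and $\tilde\homot_s = \homot_s + \homor_s$ preserve degrees and only relax (B2), so the bookkeeping goes through as you say. What the student's approach buys: it makes explicit \emph{which} hypotheses control the $\xB$-dependence (namely (B2), not (B1) alone), and it handles the $M=0$ case by a clean symmetry rather than an application of a lemma whose hypotheses ($M\in\Zbb_+$) it would not satisfy. What the paper's approach buys: brevity, by reusing an already-proved lemma wholesale. Both are acceptable; yours is more careful.
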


Once we have this corollary, we must only verify the first two conditions in the dual homogeneity definition. The first result we want to verify is that near-homogeneous gradients lead to near-homogeneous functions. 

\begin{lemma}[Near-homogeneous gradient leads to near-homogeneous function]
\label{lem:near-homo gradients lead to near-homo functions}
Assume that $f(\param;\xB)$ is continuously differentiable and $\nabla f$ is locally Lipschitz and satisfies \Cref{asp:strongerhomo} with parameter $M-1$ for some $M\ge 2$. Then, $f$ is near-M-homogeneous.
\end{lemma}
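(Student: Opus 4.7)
The goal is to verify the three conditions of \Cref{def:nearhomo} for $f$ at homogeneity order $M$, given that each component $g_j := [\nabla f]_j$ is near-$(M-1)$-homogeneous (so for every subgradient $h_j \in \partial g_j(\param)$ one has $|\langle h_j, \param\rangle - (M-1)g_j(\param)| \le \homop_j'(\|\param\|)$, $\|h_j\| \le \homoq_j'(\|\param\|)$, and $|g_j(\param)| \le \homoq_j(\|\param\|)$ with $\deg \homop_j, \deg \homoq_j \le M-1$). Conditions (A2) and (A3) for $f$ are straightforward. For (A2), stacking component-wise bounds gives $\|\nabla f(\param)\| \le \sqrt{D}\max_j \homoq_j(\|\param\|)$, which is dominated by $\homoq'(\|\param\|)$ for a suitable polynomial $\homoq$ of degree $\le M$ with non-negative coefficients. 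For (A3), I would apply the fundamental theorem of calculus along the ray $t\mapsto t\param$,
\begin{equation*}
f(\param) - f(0) \,=\, \int_0^1 \langle \nabla f(t\param), \param\rangle\, dt,
\end{equation*}
and insert the (A2) bound to obtain $|f(\param)| \le |f(0)| + C(\|\param\|^M + \|\param\|)$.

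The substantive step is condition (A1), which concerns the ``homogeneity defect'' $E(\param) := \langle \nabla f(\param), \param\rangle - M f(\param)$. The plan is to differentiate $E$ coordinatewise and recognize each entry as a near-$(M-1)$-homogeneity defect of some $g_k$:
\begin{equation*}
\partial_k E(\param) \,=\, \sum_j \partial_k g_j(\param)\, \param_j + g_k(\param) - M g_k(\param) \,=\, \sum_j \partial_j g_k(\param)\, \param_j - (M-1) g_k(\param) \,=\, \langle \nabla g_k(\param), \param\rangle - (M-1)g_k(\param),
\end{equation*}
where the middle equality uses the symmetry of mixed partials $\partial_k g_j = \partial_k\partial_j f = \partial_j\partial_k f = \partial_j g_k$. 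The hypothesis then yields $|\partial_k E(\param)| \le \homop_k'(\|\param\|)$, and summing the squared component bounds gives $\|\nabla E(\param)\| \le C(1 + \|\param\|^{M-2})$. A second application of the fundamental theorem of calculus along rays then produces
\begin{equation*}
|E(\param)| \le |E(0)| + \int_0^1 \|\nabla E(t\param)\|\,\|\param\|\, dt \le M|f(0)| + C(\|\param\|^{M-1} + \|\param\|),
\end{equation*}
which is bounded by $\homop'(\|\param\|)$ for a suitable polynomial $\homop \in \Rbb_{\ge 0}[x]$ of degree $\le M$, as required by \Cref{def:nearhomo}.

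The main obstacle is justifying the symmetry of mixed partials $\partial_k g_j = \partial_j g_k$ used in the computation of $\partial_k E$: the hypothesis only gives that $\nabla f$ is locally Lipschitz, so $\nabla^2 f$ need not exist pointwise. I would handle this in two complementary ways. First, by Rademacher's theorem, $\nabla f$ is differentiable almost everywhere, and since $f \in C^{1,1}_{\mathrm{loc}}$ we have $f \in W^{2,\infty}_{\mathrm{loc}}$, whose weak Hessian is automatically symmetric and coincides a.e.\ with the pointwise one, so $\partial_k g_j = \partial_j g_k$ almost everywhere. Second, definability in the o-minimal structure provides an even cleaner route: $f$ is $C^2$ on an open dense subset of full measure, on which the identity for $\partial_k E$ holds pointwise and classically. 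Since $E$ is locally Lipschitz (built from locally Lipschitz pieces), it is absolutely continuous along rays, so the fundamental-theorem step is valid via its a.e.\ derivative, and the resulting bound on $|E(\param)|$ extends to every $\param$ by continuity.
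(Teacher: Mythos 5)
Your proof is correct and follows essentially the same strategy as the paper's: bound the radial derivative of the homogeneity defect $E(\param)=\langle\nabla f(\param),\param\rangle-Mf(\param)$ using the near-$(M-1)$-homogeneity of the components $g_i:=[\nabla f]_i$, then integrate along rays to obtain a degree-$(M-1)$ polynomial bound on $|E|$ (and hence a valid $\homop$ of degree $\le M$). The only deviation is your detour through the full gradient $\nabla E$, which forces you to establish symmetry of mixed partials ($\partial_k g_j=\partial_j g_k$ a.e.) --- a correct but avoidable step, since the paper differentiates $g(r):=E(r\tilde\param)$ directly in $r$, arriving at $g'(r)=\sum_i\tilde\param_i\big[\langle\nabla g_i(r\tilde\param),r\tilde\param\rangle-(M-1)g_i(r\tilde\param)\big]$, a double sum over $(i,j)$ weighted by $\tilde\param_i\tilde\param_j$ that is insensitive to the order of differentiation and so never needs $\partial_k g_j=\partial_j g_k$.
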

\begin{proof}
Since $\nabla f$ is locally Lipschitz, $\nabla^2 f$ exists almost everywhere. Here, we focus on showing (A1) in \Cref{asp:nearhomo} since (A2) is straightforward. For simplicity, we omit the $\xB$ in the function. Note (A1) is  to show there exists $\homop$ s.t. $\deg \homop \le M$ and 
\[
|\langle \nabla f(\param), \param \rangle - M f(\param) | \le \homop'(\|\param\|). 
\]
Here we assume there is a $\homop_1 \in \Rbb_+[x]$ such that $\deg \homop_1 \le M-1$ and 
\[
|\langle \nabla [\nabla f(\param)]_i, \param \rangle - (M-1) [\nabla f(\param)]_i | \le \homop_1'(\|\param\|). 
\]
Since $f$ is a continuous function, we can assume for all $\param \in \overline{B(0,1)}$, 
\[
|\langle \nabla f(\param), \param \rangle - M f(\param) | \le C. 
\]
For general $\param$, let $\tilde \param \coloneqq \param/ \|\param\|$ and  
\[
g(r) \coloneqq \langle \nabla f(r \tilde \param), r\tilde \param \rangle - M f(r \tilde \param). 
\]
We aim to show that $|g(r) - g(1)|$ is bounded by some polynomial in $r$.   Note that 
\begin{align*}
|g'(r)| 
&= \big|\langle \tilde \param^\top \nabla^2 f(r \tilde \param), r\tilde \param \rangle  +\nabla f(r \tilde \param) ^\top \tilde \param -  M \nabla f(r \tilde \param) ^\top \tilde \param \big | \\ 
&= \big|\langle \tilde \param^\top \nabla^2 f(r \tilde \param), r\tilde \param \rangle  -  (M-1) \nabla f(r \tilde \param) ^\top \tilde \param \big | \\ 
&= \Big |\tilde \param^\top \big[ \langle \tilde  \nabla^2 f(r \tilde \param), r\tilde \param \rangle  -  (M-1) \nabla f(r \tilde \param) \big] \Big | \\ 
&\le \| \langle \tilde  \nabla^2 f(r \tilde \param), r\tilde \param \rangle  -  (M-1) \nabla f(r \tilde \param) \|  \\ 
&\le \homop_1'(r). 
\end{align*}
Hence, we have 
\[
g(r) \le |g(1)| + |g(r) - g(1)| \le C + \int_1^r \homop_1'(s)\, \kd s \le \homop_1(r) + C - \homop_1(1)\le \homop_1(r) +C. 
\]
Therefore, we can set $\homop  = \int (\homop_1+C)$. This verifies (A1). Hence, $f$ is near-M-homogeneous. This completes the proof. 
\end{proof}

Now we can prove \Cref{prop: Composition of block mappings}.

\subsection{Proof of Proposition~\ref{prop: Composition of block mappings}} 
\label{sec:proof:42}
\begin{proof}[Proof of Proposition~\ref{prop: Composition of block mappings}]

For this proposition, we will only present proofs for argument 1 and 2. For argument 3, 4 and 5, they are direct consequences of argument 1 and 2. 

\mybf{Argument 1 for $s(\param,\xB) = s^1_{\param_1} \circ s^2_{\param_2}(\xB)$.} 

By \Cref{cor:Clarke Jacobian of locally Lipschitz functions}, we know the existence of the Clarke Jacobian of $s^1(\param_1; s^2(\param_2;\xB))$, since it's locally Lipschitz. Right now, for $s(\param;\xB) = s^1(\param_1; s^2(\param_2;\xB))$, we want to verify $(B1)$ and $(B2)$. Since they are some convex constraints, we only need to verify the constraints at the boundary. 

To prove this, we need to invoke \Cref{cor:subgrad-comp-block}.  Recall that the statement in \Cref{cor:subgrad-comp-block} is: 

\vspace{1em}
\hrule 
\vspace{1em}

Given two locally Lipschitz block functions 
$s^1(\param_1; \xB): \Rbb^{d_1} \times \Rbb^{d_2} \to \Rbb^{d_3}$ and $s^2(\param_2; \xB): \Rbb^{d_3} \times \Rbb^{d_4} \to \Rbb^{d_2}$, 
we have 
\begin{align*}
\partial_{\param} s^1(\param_1, s^2(\param_2;\xB)) 
&\subset \text{conv} \big\{ (\alphaB_1, \alphaB_2 \cdot\hB_1) : (\alphaB_1, \alphaB_2) \in \partial_{\param_1, \xB} s^1 (\param_1;\xB), \hB_1 \in \partial_{\param_2} s^2(\param_2;\xB)  \big\},\\   
\partial_{\xB} s^1(\param_1, s^2(\param_2;\xB)) 
&\subset \text{conv} \big\{ \alphaB_2 \cdot\hB_2 :  \alphaB_2 \in \partial_{\xB} s^1 (\param_1;\xB), \hB_2 \in \partial_{\xB} s^2(\param_2;\xB)  \big\}.  
\end{align*} 

\hrule 
\vspace{1em}

This indicates if suffices to verify $(B1)$ and $(B2)$ for all combinations of $(\alphaB_1, \alphaB_2, \hB_1, \hB_2)$, where $(\alphaB_1, \alphaB_2) \in \partial_{\param_1, \xB} s^1 (\param_1;\xB)$ and $(\hB_1,\hB_2)\in \partial_{\param_2, \xB} s^2(\param_2;\xB)$.

Recall that we already know $(\alphaB_1, \alphaB_2, \hB_1, \hB_2)$ satisfy the following conditions:
\begin{align*}
&\| \la\alphaB_1,\param \ra  - M_1 \cdot {s^1}(\param;\xB)\| \le \homop_{s^1}^\prime (\|\param\|) \homor_{s^1}(\|\xB\|),  \\  
&\| \la \alphaB_2, \xB \ra - M_2 \cdot {s^1}(\param;\xB)\| \le \homop_{s^1} (\|\param\|) \homor_{s^1}^\prime(\|\xB\|),\\ 
&\| \alphaB_1\| \le \homoq_{s^1}^\prime (\|\param\|) \homot_{s^1}(\|\xB\|),  \\  
&\| \alphaB_2 \| \le \homoq_{s^1} (\|\param\|) \homot_{s^1}^\prime(\|\xB\|),\\ 
&\|{s^1}(\param;\xB)\| \le \homoq_{s^1} (\|\param\|) \homot_{s^1}(\|\xB\|).
\end{align*}
For $s^2$, we have 
\begin{align*}
&\| \la \hB_1, \param \ra - M_3 \cdot {s^2}(\param;\xB)\| \le \homop_{s^2}^\prime (\|\param\|) \homor_{s^2}(\|\xB\|),  \\  
&\| \la \hB_2, \xB \ra - M_4 \cdot {s^2}(\param;\xB)\| \le \homop_{s^2} (\|\param\|) \homor_{s^2}^\prime(\|\xB\|), \\ 
&\| \hB_1\| \le \homoq_{s^2}^\prime (\|\param\|) \homot_{s^2}(\|\xB\|), \\  
&\| \hB_2 \| \le \homoq_{s^2} (\|\param\|) \homot_{s^2}^\prime(\|\xB\|), \\ 
&\|{s^2}(\param;\xB)\| \le \homoq_{s^2} (\|\param\|) \homot_{s^2}(\|\xB\|).
\end{align*}
Then, we will verify the two conditions (B1) and (B2). 

\myunder{Verify (B1) in \Cref{def:dual-homo}.}

As for (B1), we have 
\begin{align*}
&\langle \alphaB_1, \param_1 \rangle 
+ \langle \alphaB_2\cdot \hB_1 , \param_2 \rangle \\ 
& = \langle \alphaB_1, \param_1 \rangle 
+ \langle \alphaB_2 , M_3 s^2_{\param_2}(\xB) \rangle  + \langle \alphaB_2 , \langle \hB_1,\param_2 \rangle - M_3 s^2_{\param_2}(\xB) \rangle. 
\end{align*}

Therefore, 
\begin{align*}
& \sup_{\gB_1 \in \partial_{\param} s(\param; \xB)} \|\langle \gB_1 ,\param \rangle - (M_1 + M_2M_3) s(\param;\xB)\|\\ 
&\le  \sup_{(\alphaB_1, \alphaB_2) \in \partial s^1, (\hB_1, \hB_2) \in \partial s^2}\| \langle \alphaB_1, \param_1 \rangle - M_1 s(\param;\xB)\| + M_3 \| \langle \alphaB_2 ,  s^2_{\param_2}(\xB) \rangle - M_2 s(\param;\xB)\|\\
&\qquad + \| \langle \alphaB_2 , \langle \hB_1,\param_2 \rangle - M_3 s^2_{\param_2}(\xB) \rangle\| \\ 
&\le \homop_{s^1}^\prime (\|\param_1\|) \homor_{s^1}(\|s^2_{\param_2}(\xB)\|) + M_3 \homop_{s^1} (\|\param_1\|) \homor_{s^1}^\prime (\|s^2_{\param_2}(\xB)\|) + \homop_{s^1}(\|\param_1\|) \homor_{s^1}^\prime (\|s^2_{\param_2}(\xB)\|) \homop_{s^2}^\prime ( \| \param_2\|)  \homor_{s^2}(\|\xB\|)\\ 
&\le \homop_{s^1}^\prime (\|\param_1\|) \cdot \homor_{s^1} \circ \homoq_{s^2}(\|\param_2\|) \cdot  \homor_{s^1} \circ \homot_{s^2}(\|\xB\|) +  M_3 \homop_{s^1} (\|\param_1\|)\cdot \homor_{s^1}^\prime  \circ \homoq_{s^2}(\|\param_2\|) \cdot  \homor_{s^1}^\prime  \circ \homot_{s^2}(\|\xB\|) \\
&\qquad +  \homop_{s^1}(\|\param_1\|) \homor_{s^1}^\prime  \circ \homoq_{s^2}(\|\param_2\|) \cdot  \homor_{s^1}^\prime  \circ \homot_{s^2}(\|\xB\|)\cdot \homop_{s^2}^\prime ( \| \param_2\|)\cdot  \homor_{s^2}(\|\xB\|)\\ 
&\le \underbrace{\big [ \homop_{s^1}^\prime (\|\param_1\|) \homor_{s^1} \circ \homoq_{s^2}(\|\param_2\|) + M_3 \homop_{s^1} (\|\param_1\|) \homor_{s^1}^\prime  \circ \homoq_{s^2}(\|\param_2\|) +  \homop_{s^1}(\|\param_1\|)\cdot \homor_{s^1}^\prime  \circ \homoq_{s^2}(\|\param_2\|) \homop_{s^2}^\prime ( \| \param_2\|) \big ]}_{\text{ of order } \le M_1 + M_2M_3-1} \\ 
&\quad \times \underbrace{\big [ \homor_{s^1} \circ \homot_{s^2}(\|\xB\|)+ M_3 \homor_{s^1}^\prime  \circ \homot_{s^2}(\|\xB\|) +  \homor_{s^1}^\prime  \circ \homot_{s^2}(\|\xB\|)\cdot \homor_{s^2}(\|\xB\|) \big ]}_{\text{ of order } \le  M_2 M_4}.
\end{align*}

We can prove something similar for $ \langle \gB_2 ,\xB\rangle - M_2M_4 s(\param;\xB) $, where $\gB_2 \in \partial_{\xB} s(\param;\xB)$. Then the orders for $ \langle \gB_2 ,\xB\rangle - M_2M_4 s(\param;\xB) $ will be bound by $ (M_1 + M_2 M_3, M_2 M_4-1) $.  
Combining them, there exist $\homop_s , \homor_s \in \Rbb_+[x]$ and $\deg \homop_s \le  M_1 + M_2M_3, \deg \homor_s \le  M_2M_4$ such that 
\begin{align*}
   \sup_{\gB_1 \in \partial_{\param} s(\param;\xB)} \|\langle \gB_1 ,\param\rangle- (M_1 +M_2M_3)s(\param;\xB)\| \le \homop_s^\prime (\|\param\|) \homor_s(\|\xB\|), \\ 
   \sup_{\gB_2 \in \partial_{\xB} s(\param;\xB)}\|\langle \gB_2 ,\xB\rangle- (M_2M_4)s(\param;\xB)\| \le \homop_s (\|\param\|) \homor_s^\prime(\|\xB\|). 
\end{align*}

\myunder{Verify (B2) in \Cref{def:dual-homo}.}

Before we proceed, we have the following two inequalities: 
\begin{align*}
    \homot_{s^1}(\|s^2_{\param_2}(\xB)\|) \le \homot_{s^1}\big(\homoq_{s^2}(\|\param_2\|) \homot_{s^2}(\|\xB\|)\big)\le  [ \homot_{s^1}\circ \homoq_{s^2}(\|\param_2\|)] \cdot [\homot_{s^1}\circ \homot_{s^2}(\|\xB\|)],\\ 
    \homot^\prime _{s^1}(\|s^2_{\param_2}(\xB)\|) \le \homot^\prime _{s^1}\big(\homoq_{s^2}(\|\param_2\|) \homot_{s^2}(\|\xB\|)\big)\le  [ \homot^\prime _{s^1}\circ \homoq_{s^2}(\|\param_2\|)] \cdot [\homot^\prime _{s^1}\circ \homot_{s^2}(\|\xB\|)]. 
\end{align*}

Note that 
\begin{align*}
\sup_{\gB_1 \in \partial_{\param} s} \|\gB_1\|^2 
&\le  \sup_{(\alphaB_1, \alphaB_2) \in \partial s^1, (\hB_1, \hB_2) \in \partial s^2} \|\alphaB_1\|^2 + \|\alphaB_2 \cdot \hB_1\|^2 
\\
&\le \big [ \homoq_{s^1}^\prime (\|\param_1\|) \homot_{s^1}(\|s^2_{\param_2}(\xB)\|) \big]^2  + \big [\homoq_{s^1} (\|\param_1\|) \homot_{s^1}^\prime (\|s^2_{\param_2}(\xB)\|) \homoq_{s^2}^\prime (\|\param_2\|) \homot_{s^2} (\|\xB\|)  \big]^2  
\\ 
&\qquad \explain{ By (B2) and \Cref{lem: M-near homogeneity bound}} 
\\
&\le \big [\homoq_{s^1}^\prime (\|\param_1\|) \cdot \homot_{s^1}\circ \homoq_{s^2}(\|\param_2\|) \cdot \homot_{s^1}\circ \homot_{s^2}(\|\xB\|) \big]^2\\ 
& \quad + \big [\homoq_{s^1} (\|\param_1\|) \cdot   \homot^\prime _{s^1}\circ \homoq_{s^2}(\|\param_2\|) \cdot \homot^\prime _{s^1}\circ \homot_{s^2}(\|\xB\|)\cdot  \homoq_{s^2}^\prime (\|\param_2\|)\cdot  \homot_{s^2} (\|\xB\|)  \big]^2\\ 
&\qquad \explain{ By two inequalities above } \\ 
&\le \big [ \underbrace{\homoq_{s^1}^\prime (\|\param_1\|) \cdot \homot_{s^1}\circ \homoq_{s^2}(\|\param_2\|) + \homoq_{s^1} (\|\param_1\|) \cdot   \homot^\prime _{s^1}\circ \homoq_{s^2}(\|\param_2\|)\cdot  \homoq_{s^2}^\prime (\|\param_2\|)}_{\text{ of order } \le  M_1 + M_2M_3-1} \big]^2 \\ 
&\quad \times \big [\underbrace{ \homot_{s^1}\circ \homot_{s^2}(\|\xB\|) +  \homot^\prime _{s^1}\circ \homot_{s^2}(\|\xB\|) \cdot \homot_{s^2} (\|\xB\|)}_{\text{ of order } \le M_2M_4} \big]^2. 
\end{align*}
Therefore, there exist $\homoq_s , \homot_s \in \Rbb_+[x]$ and $\deg \homoq_s = M_1 + M_2M_3, \deg \homot_s = M_2M_4$ such that  \[\sup_{\gB_1 \in \partial_{\param} s} \|\gB_1\|^2 \le \homoq_s^\prime (\|\param\|) \homot_s(\|\xB\|).\]
On the other hand, 
\begin{align*}
    \sup_{\gB_2 \in \partial_{\xB} s}\|\gB_2\| &\le  \sup_{(\alphaB_1, \alphaB_2) \in \partial s^1, (\hB_1, \hB_2) \in \partial s^2} \|\alphaB_2 \cdot \hB_2\|\\ 
    &\le \homoq_{s^1} (\|\param_1\|) \homot_{s^1}^\prime (\|s^2_{\param_2}(\xB)\|) \homoq_{s^2} (\|\param_2\|) \homot_{s^2}^\prime (\|\xB\|)\\
    & \le \homoq_{s^1} (\|\param_1\|)\cdot \homot_{s^1}^\prime \circ \homoq_{s^2}(\|\param_2\|)\cdot \homot_{s^1}^\prime \circ \homot_{s^2}(\|\xB\|)\cdot \homoq_{s^2} (\|\param_2\|) \cdot \homot_{s^2}^\prime (\|\xB\|)\\
    & = \underbrace{\big [\homoq_{s^1} (\|\param_1\|)\cdot \homot_{s^1}^\prime \circ \homoq_{s^2}(\|\param_2\|) \cdot \homoq_{s^2} (\|\param_2\|) \big]}_{\text{ of order } \le M_1 + M_2M_3} \times \underbrace{ \big [\homot_{s^1}^\prime \circ \homot_{s^2}(\|\xB\|)\cdot \homot_{s^2}^\prime (\|\xB\|) \big] }_{\text{ of order } \le  M_2M_4-1}. 
\end{align*}
Therefore, there exist $\homoq_s , \homot_s \in \Rbb_+[x]$ and $\deg \homoq_s \le M_1 + M_2M_3, \deg \homot_s \le  M_2M_4$ such that  
\begin{align*}
    \sup_{\gB_1 \in \partial_{\param} s}\|\gB_1\| \le \homoq_s^\prime (\|\param\|) \homot_s(\|\xB\|),\\ 
    \sup_{\gB_2 \in \partial_{\xB} s}\|\gB_2\| \le \homoq_s (\|\param\|) \homot_s^\prime (\|\xB\|). 
\end{align*}
We have shown that $s$ satisfies (B1) and (B2) with parameter $(M_1 + M_2M_3, M_2M_4)$.  Since (B1) and (B2) lead to (B3), This indicates that $s(\param;\xB)$ is near-$(M_1 + M_2M_3, M_2M_4)$-homogeneous.

\mybf{Argument 2 for $s(\param,\xB) = s^1_{\param_1} \otimes s^2_{\param_2}(\xB)$.} 
Without loss of generality, we can focus on just one entry of the output like: 
\[
\big[ s(\param,\xB) \big]_{i,j} =  \big[ s^1_{\param_1}(\xB) \big]_i \cdot \big[ s^2_{\param_2}(\xB) \big]_j.
\] 
To simplify the notation, we can assume $s^1, s^2$ are both scalar functions. Then, by \Cref{lem:chain-rule-clark}, we have 
\[
\partial (s^1 \cdot s^2) \subseteq \text{conv} \{  (s^2\alphaB_1 , s^1 \hB_1, s^2 \alphaB_2 + s^1 \hB_2)   : (\alphaB_1, \alphaB_2) \in \partial s^1, (\hB_1, \hB_2) \in \partial s^2 \}.  
\] 
We will also use $ (\gB_1, \gB_2) $ to denote the Jacobian of $ s $. Then, we can verify $(B1)$ and $(B2)$. 

\myunder{Verify (B1) in \Cref{def:dual-homo}.}
As for $ \gB_1  \in \partial_{\param} s$, we have 
\begin{align*}
&\sup_{\gB_1 \in \partial_{\param} s} \big \| \langle \gB_1, \param \rangle  - (M_1 +M_3) s \big \|
\\ 
&\le \sup_{\alphaB_1 \in \partial_{\param} s^1, \hB_1 \in \partial_{\param} s^2} \big \| \langle s^2 \alphaB_1, \param_1 \rangle  + \langle s^1 \hB_1, \param_2 \rangle - (M_1 + M_3) s \| 
\\ 
&\le \sup_{\alphaB_1 \in \partial_{\param} s^1} \|s^2\| \cdot \| \langle \alphaB_1, \param_1\rangle  - M_1 s^1 \| + \sup_{\hB_1 \in \partial_{\param} s^2} \|s^1\| \cdot \| \langle \hB_1, \param_2\rangle  - M_3 s^2 \| 
\\ 
&\le \homoq_{s^2}(\|\param\|) \homot_{s^2}(\|\xB\|) \homop_{s^1}^\prime (\|\param\|) \homor_{s^1}(\|\xB\|) + \homoq_{s^1}(\|\param\|) \homot_{s^1}(\|\xB\|) \homop_{s^2}^\prime (\|\param\|) \homor_{s^2}(\|\xB\|) \\ 
&= \underbrace{\homoq_{s^2}(\|\param\|)  \homop_{s^1}^\prime (\|\param\|)}_{ \text{of order } (M_1 +M_3-1)} \underbrace{\homot_{s^2}(\|\xB\|)\homor_{s^1}(\|\xB\|)}_{ \text{of order } \le (M_2 +M_4)} + \underbrace{\homoq_{s^1}(\|\param\|) \homop_{s^2}^\prime (\|\param\|)}_{ \text{of order } \le (M_1 +M_3-1)}  \underbrace{\homot_{s^1}(\|\xB\|)\homor_{s^2}(\|\xB\|)}_{ \text{of order } \le (M_2 +M_4)}.
\end{align*}

Similarly, we can get that 
\begin{align*}
    &\sup_{\gB_2 \in \partial_{\xB} s} \big \| \langle \gB_2, \xB \rangle  - (M_2 +M_4) s \big \|
    \\ 
    &\le \sup_{\alphaB_2 \in \partial_{\xB} s^1, \hB_2 \in \partial_{\xB} s^2} \big \| \langle s^2 \alphaB_2, \xB \rangle  + \langle s^1 \hB_2, \xB \rangle - (M_1 + M_3) s \| 
    \\ 
    &\le \sup_{\alphaB_2 \in \partial_{\xB} s^1} \|s^2\| \cdot \| \langle \alphaB_2, \xB\rangle  - M_1 s^1 \| + \sup_{\hB_2 \in \partial_{\xB} s^2} \|s^1\| \cdot \| \langle \hB_2, \xB\rangle  - M_3 s^2 \| 
    \\ 
    &\le \homoq_{s^2}(\|\param\|) \homot_{s^2}(\|\xB\|) \homop_{s^1} (\|\param\|) \homor_{s^1}^\prime(\|\xB\|) + \homoq_{s^1}(\|\param\|) \homot_{s^1}(\|\xB\|) \homop_{s^2} (\|\param\|) \homor_{s^2}^\prime(\|\xB\|) \\ 
    &= \underbrace{\homoq_{s^2}(\|\param\|)  \homop_{s^1} (\|\param\|)}_{ \text{of order } (M_1 +M_3)} \underbrace{\homot_{s^2}(\|\xB\|)\homor_{s^1}^\prime(\|\xB\|)}_{ \text{of order } \le (M_2 +M_4-1)} + \underbrace{\homoq_{s^1}(\|\param\|) \homop_{s^2} (\|\param\|)}_{ \text{of order } \le (M_1 +M_3)}  \underbrace{\homot_{s^1}(\|\xB\|)\homor_{s^2}^\prime(\|\xB\|)}_{ \text{of order } \le (M_2 +M_4-1)}.
    \end{align*}

Combining these two inequalities, we can see that there exists $\homoq_s,\homor_s \in \Rbb_+[x]$ such that $\deg \homoq_s \le (M_1 +M_3)$, $\deg \homor_s \le (M_2+M_4)$, and 
\begin{align*}
    \sup_{\gB_1 \in \partial_{\param} s(\param;\xB)} \|\langle \gB_1 ,\param\rangle- (M_1 +M_3)s(\param;\xB)\| \le \homop_s^\prime (\|\param\|) \homor_s(\|\xB\|), \\ 
    \sup_{\gB_2 \in \partial_{\xB} s(\param;\xB)}\|\langle \gB_2 ,\xB\rangle- (M_2+M_4)s(\param;\xB)\| \le \homop_s (\|\param\|) \homor_s^\prime(\|\xB\|). 
 \end{align*}

\myunder{Verify (B2) in \Cref{def:dual-homo}.}

As for $ \gB_1 $, we have 
\begin{align*}
\sup_{\gB_1 \in \partial_{\param}s} \|\gB_1\|^2  
&\le \sup_{\alphaB_1 \in \partial_{\param}s^1, \hB_1 \in \partial_{\param} s^2 } \|s^2 \alphaB_1 \|^2 + \|s^1 \hB_1\|^2 
\\ 
&\le \big[\homoq_{s^2}(\|\param\|) \homot_{s^2}(\|\xB\|) \homoq_{s^1}^\prime (\|\param\|) \homot_{s^1}(\|\xB\|)\big]^2 + \big[\homoq_{s^1}(\|\param\|) \homot_{s^1}(\|\xB\|) \homoq_{s^2}^\prime (\|\param\|) \homot_{s^2}(\|\xB\|)\big]^2 \\ 
&= \big[\underbrace{\homoq_{s^2}(\|\param\|)  \homoq_{s^1}^\prime (\|\param\|)}_{ \text{  of order } \le (M_1 +M_3-1)} \cdot \underbrace{ \homot_{s^1}(\|\xB\|)\homot_{s^2}(\|\xB\|)}_{\text{ of order }\le (M_2 + M_4)}\big]^2 + \big[\underbrace{\homoq_{s^1}(\|\param\|)  \homoq_{s^2}^\prime (\|\param\|)}_{ \text{  of order } \le (M_1 +M_3-1)} \cdot \underbrace{ \homot_{s^2}(\|\xB\|)\homot_{s^1}(\|\xB\|)}_{\text{ of order }\le (M_2 + M_4)}\big]^2. 
\end{align*}
Similarly, we have 
\begin{align*}
\sup_{\gB_2 \in \partial_{\xB}s} \|\gB_2\|^2  
&\le \sup_{\alphaB_2 \in \partial_{\xB}s^1, \hB_2 \in \partial_{\xB} s^2 } \|s^2 \alphaB_2 + s^1 \hB_2\|^2
\\
&\le \sup_{\alphaB_2 \in \partial_{\xB}s^1, \hB_2 \in \partial_{\xB} s^2 } 2\|s^2 \alphaB_2 \|^2 + 2\|s^1 \hB_2\|^2 
\\ 
&\le \big[\homoq_{s^2}(\|\param\|) \homot_{s^2}(\|\xB\|) \homoq_{s^1} (\|\param\|) \homot_{s^1}^\prime(\|\xB\|)\big]^2 + \big[\homoq_{s^1}(\|\param\|) \homot_{s^1}(\|\xB\|) \homoq_{s^2} (\|\param\|) \homot^\prime_{s^2}(\|\xB\|)\big]^2 \\ 
&= \big[\underbrace{\homoq_{s^2}(\|\param\|)  \homoq_{s^1} (\|\param\|)}_{ \text{  of order } \le (M_1 +M_3)} \cdot \underbrace{ \homot_{s^1}^\prime(\|\xB\|)\homot_{s^2}(\|\xB\|)}_{\text{ of order }\le (M_2 + M_4-1)}\big]^2 + \big[\underbrace{\homoq_{s^1}(\|\param\|)  \homoq_{s^2} (\|\param\|)}_{ \text{  of order } \le (M_1 +M_3)} \cdot \underbrace{ \homot_{s^2}^\prime(\|\xB\|)\homot_{s^1}(\|\xB\|)}_{\text{ of order }\le (M_2 + M_4-1)}\big]^2. 
\end{align*}

Therefore, there exist $\homoq_s , \homot_s \in \Rbb_+[x]$ and $\deg \homoq_s \le M_1 + M_2M_3, \deg \homot_s \le  M_2M_4$ such that  
\begin{align*}
    \sup_{\gB_1 \in \partial_{\param} s}\|\gB_1\| \le \homoq_s^\prime (\|\param\|) \homot_s(\|\xB\|),\\ 
    \sup_{\gB_2 \in \partial_{\xB} s}\|\gB_2\| \le \homoq_s (\|\param\|) \homot_s^\prime (\|\xB\|). 
\end{align*}
We have shown that $s$ satisfies (B1) and (B2) with parameter $(M_1 + M_3, M_2+ M_4)$.  Since (B1) and (B2) lead to (B3), this leads to that $s(\param;\xB)$ is near-$(M_1 + M_3, M_2+ M_4)$-homogeneous.

This completes the proof of \Cref{prop: Composition of block mappings}. 
\end{proof}

\subsection{Proof of Corollary~\ref{cor: Near homogeneity order of networks}}
\label{sec:proof-43}
\begin{proof}[Proof of Corollary~\ref{cor: Near homogeneity order of networks}]
This is a direct consequence of \Cref{prop: Composition of block mappings} and induction on the number of layers.  To prove this, we need a stronger result. For 
\[
    f(\param;\xB) = s^1_{\param_1} \circ s^2_{\param_2} \circ \cdots \circ s^L_{\param_L}(\xB),
\]
it's 
\[
    \bigg( \sum_{j=1}^L M_1^j \prod_{i=1}^{j-1} M_2^i,\; \prod_{i=1}^{L} M_2^i \bigg)
\]
dual-homogeneous. 
When $L=1$, the result is trivial. We assume it holds for $L-1$ layers. Then we have $f^{L-1}(\param;\xB) = s^1_{\param_1} \circ s^2_{\param_2} \circ \cdots \circ s^{L-1}_{\param_{L-1}}(\xB)$  is dual-homogeneous with parameters
\[
    \bigg( \sum_{j=1}^{L-1} M_1^j \prod_{i=1}^{j-1} M_2^i,\; \prod_{i=1}^{L-1} M_2^i \bigg).
\]  
Then we apply \Cref{prop: Composition of block mappings} to $f(\param;\xB) = f^{L-1}(\param;\xB) \circ s^L_{\param_L}(\xB)$, we have $f$ is dual-homogeneous with parameters
\[
    \bigg( \sum_{j=1}^{L-1} M_1^j \prod_{i=1}^{j-1} M_2^i + M_1^{L} \prod_{i=1}^{L-1} M_2^i,\; \prod_{i=1}^{L} M_2^i \bigg) = \bigg( \sum_{j=1}^{L} M_1^j \prod_{i=1}^{j-1} M_2^i,\; \prod_{i=1}^{L} M_2^i \bigg).  
\]
This completes the proof of \Cref{cor: Near homogeneity order of networks}.
\end{proof}

\subsection{Dual-Homogeneous Examples} \label{sec:proof-dual-homo-eg}
In this subsection, we will try to provide some examples of dual-homogeneous blocks. The frist example we want to verify is the linear layers in neural networks.

\begin{example}
[Linear transform]
\label{eg:linear-trans}
Let the linear transform block function $s(\param;\xB) = A \xB + b$. Then, $s$ is near-homogeneous of parameter $(1,1)$. We assume that $\xB\in \Rbb^{d_1}, A \in \Rbb^{d_2 \times d_1}$ and $b \in \Rbb^{d_2}$. 
\end{example}
\begin{proof}
Note that 
\[
    \nabla_A s = I_{d_2\times d_2} \otimes \xB, \quad \nabla_b s = I_{d_2 \times d_2}, \quad \nabla_{\xB} s = A. 
\]
Therefore, we have 
\[
    \langle \nabla_{\xB} s, \xB \rangle -s = b, \quad \langle \nabla_{\param} s, \param \rangle - s= 0.   
\]
Furthermore, if we define the norm on $A$ as the $\| \text{vec} (A)\|_2$, then we have 
\[
    \|\nabla_{\param} s\|_2 \le \sqrt{d_2}\| \xB \|_2 + \sqrt{d_1}, \quad \|\nabla_{\xB} s\|_2 \le \|A\|_2.  
\]
Then we can choose 
\[
   \begin{cases}
    \homop(x) = x \\ 
    \homor(x) =  x \\ 
    \homoq(x) =  x \\ 
    \homot(x) = \sqrt{d_2} x + \sqrt{d_1}.
   \end{cases}
\]
Therefore, $s$ satisfies (B1), (B2) and (B3) in \Cref{def:dual-homo} with parameter $(1,1)$. 
\end{proof}

The second kind of block functions is the activation functions $\phi(\xB)$. Note that the activation function doesn't have parameter $\param$. Therefore, we can reduce the dual homogeneity assumptions (B1), (B2) and (B3) to:

\begin{definition}
[Near-homogeneous activation function]
\label{def:near-homo-activation}
{M-homogeneous activation assumptions.} Given a definable and locally Lipschitz activation function $\phi(\xB)$, we assume that there exists $M\in \Zbb_+$ and $\homop, \homoq \in \Rbb_+[x]$ with $\deg \homor, \deg \homot  \le  M$, such that for all $\xB\in (\xB)_{i=1}^n$ and all $ \hB \in \partial \phi(\xB) $: 
\begin{itemize}
    \item [(C1)] \textbf{$M$-Near homogeneity.} 
    \[
        \|\langle \hB, \xB \rangle - M \phi(\xB)\| \le \homor^\prime (\|\xB\|).
    \]
    \item [(C2)] \textbf{$M$-bounded gradient.}
    \[
        \| \hB\| \le \homot^\prime (\|\xB\|).
    \]
    \item [(C3)]  \textbf{$M$-bounded value.}
    \[
        \|\phi(\xB)\| \le \homot(\|\xB\|).  
    \]
\end{itemize}
In this case, we say that $\phi(\xB)$ satisfies (C1), (C2) and (C3) with parameter $M$.
\end{definition}

A direct property of this definition is that: 

\begin{corollary}
[Near-homo activation as dual homo block]
\label{cor:Near-homo activation as dual homo block}
Given a near-homogeneous activation function $\phi(x)$ with parameter $M$, then it's dual homogeneous with parameters $(0,M)$. 
\end{corollary}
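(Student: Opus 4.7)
\textbf{Proof proposal for Corollary~\ref{cor:Near-homo activation as dual homo block}.}
The plan is a direct verification from the definitions. Viewing the activation as a block $s(\param;\xB) := \phi(\xB)$ with no $\param$-dependence, the Clarke subdifferential factors as $\partial s(\param;\xB) = \{0\} \times \partial \phi(\xB)$, so every $(\hB_\param, \hB_\xB) \in \partial s(\param;\xB)$ has $\hB_\param = 0$ and $\hB_\xB \in \partial \phi(\xB)$. I will exhibit the four polynomials required by \Cref{def:dual-homo} for the pair $(M_1, M_2) = (0, M)$ as follows: take $\homop_s(x) := 1$ and $\homoq_s(x) := 1$ (each in $\Rbb_{\ge 0}[x]$ with degree $0 \le M_1 = 0$), and take $\homor_s := \homor$ and $\homot_s := \homot$, the polynomials provided by (C1)--(C3), which have degrees at most $M = M_2$.

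With these choices I will verify (B1)--(B3) in turn. For (B1), the $\param$-part reads $|\la 0,\param\ra - 0\cdot s| = 0 \le 0 = \homop_s'(\|\param\|)\homor_s(\|\xB\|)$ (since $\homop_s$ is constant), while the $\xB$-part reduces to $|\la \hB_\xB,\xB\ra - M\phi(\xB)| \le \homor'(\|\xB\|) = \homop_s(\|\param\|)\homor_s'(\|\xB\|)$, which is exactly (C1). For (B2), the $\param$-bound $\|0\| \le 0 = \homoq_s'(\|\param\|)\homot_s(\|\xB\|)$ is trivial, and the $\xB$-bound $\|\hB_\xB\| \le \homot'(\|\xB\|) = \homoq_s(\|\param\|)\homot_s'(\|\xB\|)$ is (C2). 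Finally (B3), $\|\phi(\xB)\| \le \homot(\|\xB\|) = \homoq_s(\|\param\|)\homot_s(\|\xB\|)$, is (C3).

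Since $M \ge 1$ by assumption on the activation, the parity condition $M_1 + M_2 = 0 + M \ge 1$ required by \Cref{def:dual-homo} is met, completing the verification. There is essentially no technical obstacle: the only thing to be careful about is that degree-zero constants contribute zero derivatives, which is precisely what matches the vanishing $\hB_\param = 0$ on the left-hand sides of (B1) and (B2).
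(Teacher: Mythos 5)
Your proof is correct and matches the paper's (implicit) justification: the paper simply labels this corollary "a direct property of this definition" without writing out the verification, and what it has in mind is exactly your argument — take constant polynomials $\homop_s \equiv 1$, $\homoq_s \equiv 1$ so that the $\param$-related bounds in (B1)--(B2) collapse to $0 \le 0$ (matching $\hB_\param = 0$), while the $\xB$-related bounds in (B1)--(B3) become literally (C1)--(C3) with $\homor_s = \homor$, $\homot_s = \homot$. Your remark that $M\ge 1$ (from $M\in\Zbb_+$ in \Cref{def:near-homo-activation}) ensures $M_1+M_2\ge1$ closes the only potential loose end.
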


We can easily verify  the following results: 

\begin{example}
[Activation functions]
\label{eg: Activation functions}
The following activation functions are near-homogenneous activations with parameter $1$: ReLU, Leaky ReLU, Softplus, Huberized ReLU, Swish, GELU, SiLU. 
\end{example}

Another important property of the activation function is that: 
\begin{lemma}
    [Power of activation functions]
    \label{lem:power-activation}
    Given an activation function $s$ of parameter $(0,M)$, $s^k$ is of parameter $(0,kM)$. 
\end{lemma}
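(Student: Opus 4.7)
The plan is induction on $k \ge 1$. The base case $k=1$ is the hypothesis. For the inductive step, by the last sentence of \Cref{def:dual-homo} it suffices to verify the conditions componentwise, so I may assume $s$ is scalar-valued. Since $x \mapsto x^k$ is continuously differentiable and $s$ is locally Lipschitz, \Cref{thm:Clark chain rule} yields the power rule
\[
\partial (s^k)(\xB) \,\subseteq\, \bigl\{ k\, s(\xB)^{k-1}\, \hB : \hB \in \partial s(\xB)\bigr\},
\]
so every subgradient of $s^k$ factors as $k\,s(\xB)^{k-1}\,\hB$ with $\hB\in\partial s(\xB)$. This is the key identity that reduces all three estimates for $s^k$ to the corresponding estimates for $s$, multiplied by the scalar factor $k|s|^{k-1}$.

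With this factorization, I would verify (C1)--(C3) of \Cref{def:near-homo-activation} in order. For (C1), using (C1) and (C3) for $s$,
\[
\bigl| \bigl\langle k s^{k-1}\hB, \xB\bigr\rangle - kM\, s^k \bigr|
\,=\, k |s|^{k-1}\,\bigl|\langle \hB,\xB\rangle - M s\bigr|
\,\le\, k\,\homot(\|\xB\|)^{k-1}\,\homor'(\|\xB\|),
\]
which is a polynomial in $\|\xB\|$ of degree at most $(k-1)M + (M-1) = kM - 1$. For (C2),
\[
\| k s^{k-1}\hB\| \,\le\, k\,\homot(\|\xB\|)^{k-1}\,\homot'(\|\xB\|) \,=\, \bigl(\homot^{k}\bigr)'(\|\xB\|).
\]
For (C3), $|s^k| \le \homot(\|\xB\|)^k$. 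Setting $\homot_k := \homot^{k}$ and choosing any $\homor_k\in\Rbb_{\ge 0}[x]$ of degree at most $kM$ whose derivative dominates $k\,\homot^{k-1}\,\homor'$ completes the verification; \Cref{cor:Near-homo activation as dual homo block} then upgrades this to near-$(0,kM)$-homogeneity of $s^k$ as a block.

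The one technical point is the validity of the Clarke power rule for non-smooth $s$, but this is standard: $x\mapsto x^k$ is $C^1$ and $\partial s(\xB)$ is already convex, so the convex hull appearing in \Cref{thm:Clark chain rule} collapses to the singleton formula above. Alternatively, one can avoid induction by applying \Cref{prop: Composition of block mappings}(B) directly $k-1$ times to $s \otimes s \otimes \cdots \otimes s$ and reading off the diagonal components $[s(\xB)]_i^k$; since each tensor factor contributes $(0,M)$, the dual-homogeneity orders add to $(0,kM)$, giving the same conclusion. I do not anticipate a substantive obstacle.
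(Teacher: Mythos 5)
Your argument is correct and matches the paper's proof: both rest on the Clarke power rule $\partial(s^k)(\xB) \subseteq \{k\,s(\xB)^{k-1}\hB : \hB\in\partial s(\xB)\}$, which reduces (C1)--(C2) for $s^k$ to the corresponding bounds for $s$ scaled by $k|s|^{k-1}\le k\homot^{k-1}$ (the paper's displayed identity omits the factor $k$ but uses it in the bounds, so that's a typo on their end). The ``induction on $k$'' framing you announce is never actually invoked---your argument is direct---and the alternative via iterated tensor products and Lemma~\ref{prop: Composition of block mappings}(B) is also valid, but neither changes the substance.
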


\begin{proof}
    Similar to \Cref{lem: M-near homogeneity bound}, we can show that (C1) implies (C3). Therefore, it suffices to prove (C1) and (C2) for $s^k$ with $M$ replaced by $kM$. By the definition of Clarke subdifferential, we have 
    \[
        \partial s^k (\xB) = \{ s^{k-1} \cdot \alphaB : \alphaB \in \partial s(\xB) \}.
    \] 
    By direct calculation, we obtain for any $ \hB \in \partial s^k(x) $,
    \begin{align*}
        \left\| \langle \hB, \xB \rangle - kM s^k (\xB) \right\| = \, & \left\vert k s^{k-1} (\xB) \right\vert \left\| \langle \nabla s (\xB), \xB \rangle - M s (\xB) \right\| \\
        \le \, & k \underbrace{\homot^{k-1} (\| \xB \|) \homor'(\| \xB \|)}_{\text{ of order } \le (kM-1)},
    \end{align*}
    which implies (C1). For (C2), note that
\[
 \left\| \hB \right\| = \, \left\vert k s^{k-1} (\xB) \right\vert \left\| \alphaB \right\| \le \underbrace{k\homot^{k-1} (\| \xB \|) \homot'(\| \xB \|)}_{\text{ of order }\le (kM-1)} ,
\]

    completing the proof.
\end{proof}

With this lemma, we can understand the near homogeneity of the polynomial activation functions. Then we have the following result for a layer of the neural network, i.e., 
\[
    s(\param; \xB) = \phi(A \xB + b). 
\]

\begin{corollary}
    [Near homogeneity layer] 
    \label{cor: Near homogeneity layer}
    Given a layer of the neural network $s(\param; \xB) = \phi(A \xB + b)$, where $\phi$ is a near-homogeneous activation with parameter $M$, then $s$ is dual-homogeneous with parameter $(1,M)$.
\end{corollary}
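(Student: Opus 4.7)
The plan is to view the layer $s(\param;\xB) = \phi(A\xB+b)$ as a composition of two simpler blocks---the affine map $s_{\rm lin}(\param;\xB) := A\xB+b$ with $\param = (A,b)$, and the parameter-free activation $\phi$---and then invoke the composition rule in \Cref{prop: Composition of block mappings}(A). By \Cref{eg:linear-trans}, the affine block $s_{\rm lin}$ is near-$(1,1)$-homogeneous. By \Cref{cor:Near-homo activation as dual homo block}, the activation $\phi$, viewed as a block with no trainable parameters and near-homogeneity order $M$ in the sense of \Cref{def:near-homo-activation}, is near-$(0,M)$-homogeneous as a block mapping.

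Applying \Cref{prop: Composition of block mappings}(A) to this composition, with the affine block entering with orders $(M_1, M_2) = (1,1)$ and the activation entering with orders $(M_3, M_4) = (0, M)$, the composite inherits near-$(M_1 + M_2 M_3,\ M_2 M_4) = (1 + 1\cdot 0,\ 1\cdot M) = (1, M)$-homogeneity. This yields exactly the order claimed in the corollary, so no further argument is needed beyond invoking the composition lemma with these inputs.

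The remaining items to verify are the hypotheses of \Cref{prop: Composition of block mappings}(A): dimensional compatibility between the two sub-blocks, and local Lipschitzness plus definability of each. Dimensional compatibility is immediate from the formula for $s$. Local Lipschitzness of $s_{\rm lin}$ is clear, and local Lipschitzness of $\phi$ is a standing assumption on our activation functions (built into \Cref{def:near-homo-activation}); definability of both blocks follows from the closure properties of o-minimal structures recorded in \Cref{prop:Property of definable functions}. I do not anticipate any substantive obstacle: the main content of the proof is the already-proven composition lemma, and the corollary amounts to reading off the resulting near-homogeneity orders from the formula.
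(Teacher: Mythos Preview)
Your assignment of $s^1$ and $s^2$ is reversed. In \Cref{prop: Composition of block mappings}(A), the composition $s^1_{\param_1}\circ s^2_{\param_2}(\xB)$ means $s^1(\param_1; s^2(\param_2;\xB))$, so $s^1$ is the \emph{outer} block and $s^2$ the \emph{inner} one. For the layer $s(\param;\xB)=\phi(A\xB+b)$, the outer block is $\phi$ and the inner block is the affine map $(\param,\xB)\mapsto A\xB+b$. You instead took the affine map as $s^1$ and $\phi$ as $s^2$; with that assignment you are computing the near-homogeneity orders of $A\phi(\xB)+b$, not of $\phi(A\xB+b)$.

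The paper's proof makes the correct assignment: $s^1=\phi$, which by \Cref{cor:Near-homo activation as dual homo block} is near-$(0,M)$-homogeneous, and $s^2(\param;\xB)=A\xB+b$, which by \Cref{eg:linear-trans} is near-$(1,1)$-homogeneous. Substituting $(M_1,M_2,M_3,M_4)=(0,M,1,1)$ into the formula $(M_1+M_2M_3,\ M_2M_4)$ yields $(0+M\cdot 1,\ M\cdot 1)=(M,M)$, not $(1,M)$. This is consistent with \Cref{eg:Examples of dual homogeneous blocks}(B), where the perceptron layer $\phi^M(A\xB+b)$ is stated to be near-$(M,M)$-homogeneous. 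Your calculation accidentally reproduces the $(1,M)$ appearing in the corollary statement, but only because the composition was applied in the wrong order; the argument as written does not establish the intended result, and the stated order $(1,M)$ appears to be a typo for $(M,M)$.
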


\begin{proof}
    Apply \Cref{prop: Composition of block mappings} to $s_{\param_1}^1 (\xB) = \phi(\xB)$ and $s_{\param_2}^2 (\xB) = A \xB + b$.
\end{proof}

With all the tools we build here, we can now understand the near homogeneity of many commonly used network architectures.

\subsection{Proof of Example~\ref{eg:Examples of dual homogeneous blocks}}
\begin{proof}[Proof of Example~\ref{eg:Examples of dual homogeneous blocks}]
There are multiple arguments in this example. We will prove them one by one.

\myunder{Argument 1.} This is the result of \Cref{eg:linear-trans}. 

\myunder{Argument 2.} This is the result of \Cref{cor: Near homogeneity layer}. 

\myunder{Argument 3.} Max pooling layer, convolution layer, max layer, average layer are 1-homogeneous operations. Hence, they are dual-homogeneous with parameter $(0,1)$. For the residual connection, we just keep the input as the output. Therefore, it's dual-homogeneous with parameter $(0,1)$.

\myunder{Argument 4.} Recall that the SwiGLU is defined as: 
\[
\text{SwiGLU}(x, W, V, b,c,\beta) = \text{Swish}_{\beta} (xW+ b) \otimes (xV+c), 
\]
where $\beta$ is one hyperparameter and $W,V,b,c$ are trainable. We can see that SwiGLU is a tensor product of two near-homogeneous functions. In terms of \Cref{cor: Near homogeneity layer}, $\text{Swish}_{\beta} (xW+ b)$ is dual-homogeneous with parameter $(1,1)$. $(xV+c)$ is also dual-homogeneous with parameter $(1,1)$. Then by \Cref{prop: Composition of block mappings}, we have $\text{SwiGLU}$ is dual-homogeneous with parameter $(2,2)$.  

\myunder{Argument 5.} The self-linear attention \citep{zhang2024trained} is defined as: 
\[
 f(\param;\HB)  =  \HB + \WB^{PV} \HB \cdot \frac{\HB^\top \WB^{KQ} \HB}{\rho}, 
\]
where $\HB$ is the input token matrix and $\param = (\WB^{PV}, \WB^{KQ})$.  We can observe the dominating term in the linear attention layer is $\WB^{PV} \HB \cdot \frac{\HB^\top \WB^{KQ} \HB}{\rho}$. Therefore, the self-linear attention layer is dual-homogeneous with parameter $(2,3)$.

The ReLU attention \citep{wortsman2023replacing} is defined as: 
\[
f(\param;\HB)  =  \HB + \WB^{P} \WB^{V} \HB \cdot \relu  \bigg(\frac{\HB^\top \WB^{K} \WB^{Q} \HB}{\sqrt{d}L} \bigg), 
\]
where $\HB$ is the input token matrix, $L$ is the contextual length,  and $\param = (\WB^{P}, \WB^{V}, \WB^{K}, \WB^{Q})$. The dominating term here is 
\[
\WB^{P} \WB^{V} \HB \cdot \relu  \bigg(\frac{\HB^\top \WB^{K} \WB^{Q} \HB}{\sqrt{d}L} \bigg),
\]
which is $(4,3)$-dual-homogeneous. Hence, the ReLU attention layer is dual-homogeneous with parameter $(4,3)$.
This completes the proof of Example~\ref{eg:Examples of dual homogeneous blocks}.
\end{proof}



\subsection{Proof of Example~\ref{eg: Near homogeneous networks}}
\begin{proof}[Proof of Example~\ref{eg: Near homogeneous networks}]
We will prove the following arguments one by one.

\myunder{Argument 1.}

This is a direct result of \Cref{cor: Near homogeneity order of networks} and \Cref{eg:Examples of dual homogeneous blocks}. Now we have 
\[
    f(\param;\xB) = s^1_{\param_1} \circ s^2_{\param_2} \circ \cdots \circ s^L_{\param_L}(\xB),
\]
and 
\[
(M_{1}^i, M_{2}^i) = (1,1), \quad \forall i \in [L].
\]
Hence, by \Cref{cor: Near homogeneity order of networks}, we have $f(\param;\xB)$ is near-homogeneous with parameter 
\[
M = \sum_{j=1}^L M_1^j \cdot \bigg [  \prod_{i=1}^{j-1} M^i_2 \bigg] = L.
\]

\myunder{Argument 2.}

For this case, we have 
\[
(M_{1}^i, M_{2}^i) = (1,k), \quad \forall i \in [L].
\]
Hence, by \Cref{cor: Near homogeneity order of networks}, we have $f(\param;\xB)$ is near-homogeneous with parameter 
\begin{align*}
M = \sum_{j=1}^L M_1^j \cdot \bigg [  \prod_{i=1}^{j-1} M^i_2 \bigg]
=  \sum_{j=1}^L k^{j-1} 
= \frac{k^L-1}{k-1}. 
\end{align*}

\myunder{Argument 3.}

For VGG networks, all the layers are average pooling, max pooling, and convolutional layers followed by a ReLU activation. Since convolutional layers are (1,1)-dual-homogeneous and ReLU activation is (0,1)-dual-homogeneous, we can apply \Cref{cor: Near homogeneity order of networks} to VGG networks and get VGG-i is near-homogeneous with parameter $i$.

\myunder{Argument 4.}

For ResNet and DenseNet without batch normalization, all the layers are average pooling, max pooling, convolutional layers followed by a ReLU, and residual connections. 
By argument C in \Cref{prop: Composition of block mappings}, residual connections won't influence the homogeneous degree of the network. 
Since convolutional layers are (1,1)-dual-homogeneous and ReLU activation is (0,1)-dual-homogeneous, we can apply \Cref{cor: Near homogeneity order of networks} to ResNet and DenseNet and get ResNet-$L$ and DenseNet-$L$ are near-homogeneous with parameter $L$.
\end{proof}

\section{Proofs for Section \ref{sec:example}}  
\label{sec:proof-eg}

The proofs in this section also has two parts. In Part I, we introduce the homogeneization of models and show the existence of parmeter which satisfies the initial condition. In Part II, we focus on a toy example and show that the initial condition can be reached in the GF. 

\subsection{Homogenization} \label{sec:homogenization}

For those Homogenization results, the most important idea is to analysis: 

\begin{equation*}
g(r) \coloneqq \frac{f(r \param; \xB)}{r^M}.
\end{equation*}
Since $f$ is locally Lipschitz and definable, we know that $g$ is also locally Lipschitz and definable in any open subset of $\Rbb \backslash \{ 0 \}$. Hence, $g$ is differentiable almost everywhere. Actually, the best thing is that $g$ is $M$-homogeneous. 

\subsection{Proof of Proposition \ref{thm:homogenization}}
\begin{proof}[Proof of \Cref{thm:homogenization}]
By direct calculation, we get (see also the proof of \Cref{lem:near-homogeneity}) that for a.e. $r > 0$:
\begin{align*}
g'(r) = \, \frac{r^M \langle \param, \nabla_{\param} f(r \param; \xB) \rangle - Mr^{M-1} f(r \param; \xB)}{r^{2M}} = \, \frac{\langle r \param, \nabla_{\param} f(r \param; \xB) \rangle - M f(r \param; \xB)}{r^{M+1}},
\end{align*}
which leads to the following estimate
\begin{equation*}
\left\vert g'(r) \right\vert \le \, \frac{\homop'(r \| \param \| )}{r^{M+1}} =  \sum_{i=0}^{M-1} a_{i+1} (i+1) r^{-(M+1-i)} \|\param\|^{i},
\end{equation*}
where the equality follows that $\homop (x) = \sum_{i=0}^M a_i x^i$. This further implies that for any $0 < r_1 < r_2$:
\begin{equation*}
\vert g(r_1) - g(r_2) \vert \le \, \int_{r_1}^{r_2} \vert g'(r) \vert \mathrm{d} r \le \sum_{i=0}^{M-1} \frac{a_{i+1} (i+1)}{M-i} {r_1}^{-(M-i)} \|\param\|^{i} \le \frac{\homop_a(r_1 \|\param\|)}{r_1^M}.
\end{equation*}
Note that $\deg \homop_a = M-1$, and this leads to $\homop_a(r_1\|\param\|) / r_1^M \to 0$ as $r_1 \to \infty$. As a consequence, for any sequence $\{ r_k \}$ such that $r_k \to +\infty$ as $k \to \infty$, $\{ g(r_k) \}$ is a Cauchy sequence. Therefore, $\lim_{r \to +\infty} g(r)$ exists, and $\homoPredictor (\param; \xB)$ is well-defined. From its definition, it is easily seen that $\homoPredictor (\param; \xB)$ is $M$-homogeneous.

We next show that $\homoPredictor (\param; \xB)$ is continuous and a.e. differentiable in $\param$ for fixed $\xB$. To this end, it suffices to show that $\homoPredictor (\param; \xB) \vert_{\param \in \mathbb{S}^{d-1}}$ is Lipschitz continuous. For any $r > 0$ and $\param_1, \param_2 \in \mathbb{S}^{d-1}$, we have the following estimate:
\begin{align*}
\left\vert \frac{f(r \param_1; \xB)}{r^M} - \frac{f(r \param_2; \xB)}{r^M} \right\vert = \, & \frac{1}{r^M} \left\vert f(r \param_1; \xB) - f(r \param_2; \xB) \right\vert \\
\stackrel{(i)}{\le} \, & \frac{\homop'(r)}{r^M} \left\| r \param_1 - r \param_2 \right\| \le C_{M, \homop} \left\| \param_1 - \param_2 \right\|,
\end{align*}
where $(i)$ follows from Assumption (A2) in \Cref{def:nearhomo}, and $C_{M, \homop}$ is a constant depending on $M$ and $\homop$. Taking $r \to \infty$ yields
\begin{equation*}
\left\vert \homoPredictor (\param_1; \xB) - \homoPredictor (\param_2; \xB) \right\vert \le \, C_{M, \homop} \left\| \param_1 - \param_2 \right\|,
\end{equation*}
i.e., $\homoPredictor (\param; \xB)$ is locally Lipschitz continuous. The error bound follows directly from our previous calculation:
\[
\left\vert f(\param; \xB) - \homoPredictor (\param; \xB) \right\vert = \, \left\vert g(1) - g(+\infty) \right\vert \le \homop_a( \left\| \param \right\| ).
\]
At last, we show that if $f$ is differentiable and $\nabla f$ satisfies \Cref{asp:strongerhomo}, $f_{\homo}$ is differentiable.  This is equivalent to showing that the following limit exists for any $\theta$ and $j\in [d]$: 
\[
\lim_{t\to 0} \frac{f_\homo(\param + t e_j;\xB) - f_\homo(\param;\xB)}{t}. 
\]
For simplicity, we omit the $\xB$ in the function. Note that 
\[
\lim_{t\to 0} \frac{f_\homo(\param + t e_j) - f_\homo(\param)}{t} = \lim_{t\to 0} \lim_{r_n \to \infty} \frac{f(r_n\param + r_n t e_j) - f(r_n\param)}{r_n^M t},
\]
due to the definition of $f_{\homo}$.  We let 
\[
    \tilde f_n(t) \coloneqq \frac{f(r_n\param + r_n t e_j) - f(r_n\param)}{r_n^M t}, \quad \tilde f_{\homo}(t) \coloneqq \frac{f_\homo(\param + t e_j) - f_\homo(\param)}{t}. 
\]
And we want to invoke \Cref{lem:exchange-limit} to finish this. 
To invoke \Cref{lem:exchange-limit}, we need to verify two conditions.  
\begin{itemize}
    \item Condition 1: There exists an open interval $(-a, a)$ such that $(\tilde f_n(t))_{n=1}^\infty$ converges uniformly to $\tilde f_\homo (t)$.
    \item Condition 2: Dor any fixed $n$, as $t \to 0$, $\tilde f_n(t)$ converges to $[\nabla f]_j (r_n \param) / r_n^{M-1}$. 
\end{itemize}
Once these two conditions are verified, we can exchange the limit and get
\[
\lim_{n\to \infty} \frac{[\nabla f]_j (r_n \param)}{r_n^{M-1}} =\lim_{n\to \infty} \lim_{t\to 0} \tilde f_n(t) = \lim_{t\to 0} \lim_{n\to \infty}  \tilde f_n(t) = \lim_{t\to 0} \tilde f_{\homo} (t). 
\]
By \Cref{asp:strongerhomo}, this will lead to: 
\[
[(\nabla f)_\homo ]_j (\param) = \lim_{t\to 0} \tilde f_{\homo} (t) = \lim_{t\to 0} \frac{f_\homo(\param + t e_j) - f_\homo(\param)}{t} = [\nabla (f_\homo)]_j (\param). 
\]
This indicates that $\nabla (f_\homo)$ exists and is equal to $(\nabla f)_\homo$. Now we focus on verifying the two conditions.

\myunder{Verify condition 1.} 
Here we fix some $\param \in \Rbb^d / \{0\}$ and pick some special $a$ such that 
\[
\argmax_{t \in [-a,a], j \in [d]} \| \param + t e_j \| \le \bar{r}_*, \quad \argmin_{t \in [-a,a], j \in [d]} \| \param + t e_j \| \ge \underline{r}_*,
\]
for some $\bar{r}_*,\underline{r}_* > 0$. Then we will show that for any $t\in[a, -a]$ $\tilde f_n(t)$ converges uniformly to 
\[
\int_0^1  \langle (\nabla f)_\homo (\param + s \cdot t e_j), t e_j  \rangle\, \mathrm{d}s. 
\] 
The first thing to do is to show this intergral is well-defined. 
Note that $\nabla f$ is measurable. Hence, the limit $(\nabla f)_\homo$ is also measurable. Furthermore, we have $\|\nabla f \| \le \homoq'(x)$ where $\deg \homoq \le M$. We must have 
\[
    (\nabla f)_\homo (\param) \coloneqq \lim_{r\to \infty} \frac{\nabla f(r\param)}{r^{M-1}} \le \lim_{r\to \infty} \frac{\homoq'(r\|\param\|)}{r^{M-1}} \le C \|\param\|^{M-1},
\]
Since $\arg \max_{s \in [0,1],t\in[-a, a]} \|\param + s \cdot t e_j\|$ is bounded, the integral is well-defined.

Now, we will show the uniform convergence. By \Cref{asp:strongerhomo}, we know for any $\epsilon$:
\[
\lim_{x\to \infty} \frac{\homor(x)}{ x^{M-1}} =0 \implies  \exists M, \text{ if } x \ge M, \homor(x) \le \frac{\epsilon}{(\bar r_*)^{M-1}} \cdot x^{M-1}. 
\]
Furthermore, since $r_n \to \infty$, there exists an $M'$ such that for any $n>M'$: 
\[
r_n \ge M/\underline{r}_* \implies \argmin_{t \in [-a,a], j \in [d]} r_n \| \param + t e_j \| \ge r_n  \underline{r}_* \ge M.
\]
Therefore, for any $t\in [-a, a]$ and $n>M'$,
\begin{align*}
&\tilde f_n(t)  - \int_0^1  \langle (\nabla f)_\homo (\param + s \cdot t e_j), t e_j  \rangle\, \mathrm{d}s \\ 
&= 
\frac{f(r_n\param + r_n t e_j) - f(r_n\param)}{r_n^M t}  - \int_0^1  \langle (\nabla f)_\homo (\param + s \cdot t e_j), t e_j  \rangle\, \mathrm{d}s
\\ 
&= \frac{1}{r_n^M t}  \int_{0}^1 \langle \nabla  f(r_n\param + s\cdot  r_n t e_j),  r_n t e_j \rangle \, \mathrm{d}s  - \int_0^1  \langle (\nabla f)_\homo (\param + s \cdot t e_j), t e_j  \rangle\, \mathrm{d}s
\\ 
&=   \int_{0}^1 \bigg\langle   \frac{\nabla f(r_n\param + s\cdot  r_n t e_j)}{r_n^{M-1}} - (\nabla f)_\homo (\param + s \cdot t e_j),   e_j \bigg\rangle \, \mathrm{d}s
\\ 
&=   \int_{0}^1 \bigg\langle  \frac{\nabla  f(r_n\param + s\cdot  r_n t e_j)-  (\nabla f)_\homo (r_n\param + s \cdot r_n t e_j)}{r_n^{M-1}} ,   e_j \bigg\rangle \, \mathrm{d}s
\\ 
&\le   \int_{0}^1   \frac{ \homor ( \|r_n\param + s\cdot  r_n t e_j\|)}{r_n^{M-1}}  \, \mathrm{d}s
\\
&\le \int_{0}^1   \frac{\epsilon}{(\bar r_*)^{M-1}} \cdot \frac{  \|r_n\param + s\cdot  r_n t e_j\|^{M-1}}{r_n^{M-1}}  \, \mathrm{d}s 
\\ 
&\qquad  \explain{ By $\argmin_{t \in [-a,a], j \in [d]} r_n \| \param + t e_j \|  \ge M$ }
\\
&= \int_{0}^1    \frac{\epsilon}{(\bar r_*)^{M-1}}  \|\param + s\cdot   t e_j\|^{M-1}  \, \mathrm{d}s
\\
&\le \frac{\epsilon}{(\bar r_*)^{M-1}} (\bar r_*)^{M-1} = \epsilon.
\end{align*}

At last, since we already know that $\tilde f_n(t)$ pointwise converges to $\tilde f_H(t)$ and $\tilde f_n(t)$ uniformly converges to $\int_0^1  \langle (\nabla f)_\homo (\param + s \cdot t e_j), t e_j  \rangle\, \mathrm{d}s$, we must have $\tilde f_h(t)$ uniformly converges to $\tilde f_H(t)$ and 
\[
\tilde f_H(t) = \int_0^1  \langle (\nabla f)_\homo (\param + s \cdot t e_j), t e_j  \rangle\, \mathrm{d}s. 
\]

\myunder{Verify condition 2.} 
Recall that $f$ is differentiable. Hence, for any fixed $n$, as $t \to 0$, we have 
\[
    \lim_{t \to 0} \tilde f_n(t) = \lim_{t \to 0} \frac{f(r_n\param + r_n t e_j) - f(r_n\param)}{r_n^{M-1} \cdot ( r_n t )} = \frac{1}{r_n^{M-1}} [\nabla f]_j (r_n \param). 
\]
Therefore, $f_{\homo}$ is continuously differentiable on $\Rbb^d / \{0\}$.
 Note that this further shows that 
\[
    \nabla (f_\homo) = (\nabla f)_\homo. 
\]
This completes the proof of \Cref{thm:homogenization}.
\end{proof}

\subsection{Proof of Proposition~\ref{thm:block_compo_homogenization}}
\begin{proof}[Proof of \Cref{thm:block_compo_homogenization}]
This proof mainly has two parts. 

\begin{itemize}
    \item Part I: We show that 
    $$
    \lim_{r_1, r_2 \to \infty} \frac{f(r_1 \param; r_2 \xB)}{r_1^{M_1} r_2^{M_2}}
    $$
    is well-defined, continuous, almost everywhere differentiable and $(M_1,M_2)$-homogeneous.
    \item Part II: We show that 
    \begin{equation}
    \label{eq:homo_decomp_f}
        f_\homo(\param;\xB) = s_{\homo, \param_1}^1 \circ s_{\homo, \param_2}^2 \circ \cdots \circ s_{\homo, \param_L}^L (\xB).
    \end{equation}
\end{itemize}

\mybf{Part I.}
Similar to the proof of \Cref{thm:homogenization}, we define for fixed $\param$ and $\xB$:
\begin{equation*}
g(r_1, r_2) = \frac{s(r_1 \param, r_2 \xB)}{r_1^{M_1} r_2^{M_2}}.
\end{equation*}
By Assumption (B1), we obtain the following estimates on the partial derivatives of $g$:
\begin{align*}
\partial_{r_1} g(r_1, r_2) = \, & \frac{1}{r_1^{M_1 + 1} r_2^{M_2}} \left\| \nabla_{\param} s(r_1 \param; r_2 \xB) r_1 \param - M_1 s(r_1 \param; r_2 \xB) \right\| \le \frac{\homop_s' (r_1 \| \param \|) \homor_s (r_2 \| \xB \|)}{r_1^{M_1 + 1} r_2^{M_2}}, \\
\partial_{r_2} g(r_1, r_2) = \, & \frac{1}{r_1^{M_1} r_2^{M_2 + 1}} \left\| \nabla_{\xB} s(r_1 \param; r_2 \xB) r_2 \xB - M_2 s(r_1 \param; r_2 \xB) \right\| \le \frac{\homop_s (r_1 \| \param \|) \homor_s' (r_2 \| \xB \|)}{r_1^{M_1} r_2^{M_2 + 1}}.
\end{align*}
As a consequence, for $r_3 > r_1$ and $r_4 > r_2$, we have
\begin{align*}
\left\vert g(r_3, r_4) - g(r_1, r_2) \right\vert \le \, & \left\vert g(r_3, r_4) - g(r_3, r_2) \right\vert + \left\vert g(r_3, r_2) - g(r_1, r_2) \right\vert \\
\le \, & \frac{\homop_s (r_3 \| \param \|)}{r_3^{M_1}} \int_{r_2}^{r_4} \frac{\homor_s' (r \| \xB \|)}{r^{M_2 + 1}} d r + \frac{\homor_s (r_2 \| \xB \|)}{r_2^{M_2}} \int_{r_1}^{r_3} \frac{\homop_s' (r \| \param \|)}{r^{M_1 + 1}} \mathrm{d} r \\
\stackrel{(i)}{\le} \, & \frac{\homop_s (r_1 \| \param \|)}{r_1^{M_1}} \int_{r_2}^{+\infty} \frac{\homor_s' (r \| \xB \|)}{r^{M_2 + 1}} d r + \frac{\homor_s (r_2 \| \xB \|)}{r_2^{M_2}} \int_{r_1}^{+\infty} \frac{\homop_s' (r \| \param \|)}{r^{M_1 + 1}} \mathrm{d} r,
\end{align*}
where $(i)$ follows from the fact that the coefficients of $\homop_s$ (resp. $\homor_s$) are all non-negative, so $r \mapsto \homop_s (r \| \param \|) / r^{M_1}$ (resp. $r \mapsto \homor_s (r \| \param \|) / r^{M_2}$) is non-increasing. Further, we can check that the right hand side goes to $0$ as $r_1, r_2 \to \infty$. Using a similar argument as in the proof of \Cref{thm:homogenization}, we deduce that $s_\homo (\param; \xB) = \lim_{r_1, r_2 \to \infty} g(r_1, r_2)$ exists, and is $M_1$-homogeneous in $\param$ and $M_2$-homogeneous in $\xB$. This completes the proof of well-definedness. The proof of local Lipschitz is completely the same as that in the proof of \Cref{thm:homogenization}.

\mybf{Part II.}
We prove via induction on $L$. For $L = 1$, \Cref{eq:homo_decomp_f} holds automatically since $f(\param; \xB) = s^1 (\param_1; \xB)$ and $\param = \param_1$. Now assume the conclusion holds for $L-1$, then we can write 
\begin{equation*}
f(\param; \xB) = s^1 (\param_1; f^{L-1}(\param^{L-1}; \xB)),
\end{equation*}
where $\param^{L-1} = (\param_2, \cdots, \param_L)$, and
\begin{equation*}
f^{L-1}(\param^{L-1}; \xB) = s_{\param_2}^2 \circ \cdots \circ s_{\param_L}^L (\xB).
\end{equation*}
Therefore,
\begin{equation*}
\homoPredictor (\param; \xB) = \, \lim_{r_1, r_2 \to \infty} \frac{f(r_1 \param; r_2 \xB)}{r_1^{M_1} r_2^{M_2}} = \lim_{r_1, r_2 \to \infty} \frac{s^1 (r_1 \param_1; f^{L-1}(r_1 \param^{L-1}; r_2 \xB))}{r_1^{M_1} r_2^{M_2}}.
\end{equation*}
Define $M_1' = \sum_{j=2}^{L} M_1^j \cdot \prod_{i=2}^{j-1} M_2^i$ and $M_2' = \prod_{j=2}^{L} M_2^j$. By our induction assumption,
\begin{align*}
f^{L-1}(r_1 \param^{L-1}; r_2 \xB) 
= \, & r_1^{M_1'} r_2^{M_2'} \left( f_\homo ^{L-1}(\param^{L-1}; \xB) + o(1) \right) \\
= \, & r_1^{M_1'} r_2^{M_2'} \left( s_{\homo, \param_2}^2 \circ \cdots \circ s_{\homo, \param_L}^L (\xB) + o(1) \right).
\end{align*}
Use a similar estimate as in the proof of \Cref{thm:homogenization}, we obtain that
\begin{align*}
& \left\vert s^1 (r_1 \param_1; f^{L-1}(r_1 \param^{L-1}; r_2 \xB)) - s^1 \left( r_1 \param_1; r_1^{M_1'} r_2^{M_2'} f_{\homo}^{L-1}(\param^{L-1}; \xB) \right) \right\vert \\
\stackrel{(i)}{\le} \, & C (p, q, \param, \xB) r_1^{M_1^1} (r_1^{M_1'} r_2^{M_2'})^{M_2^1 - 1} o(r_1^{M_1'} r_2^{M_2'}) \stackrel{(ii)}{=} o(r_1^{M_1} r_2^{M_2}),
\end{align*}
where $(i)$ follows from Assumption (B2), and the constant $C (p, q, \param, \xB)$ might depend on the polynomials $(p_s, \homor_s)$ associated with $s_{\param_i}^i$ and $(\param, \xB)$, but not $r_1$ and $r_2$, and in $(ii)$ we use the identity $M_1 = M_1^1 + M_1' M_2^1$, $M_2 = M_2' M_2^1$. This implies that
\begin{align*}
    \homoPredictor (\param; \xB) = \, & \lim_{r_1, r_2 \to \infty} \frac{s^1 \left( r_1 \param_1; r_1^{M_1'} r_2^{M_2'} f_\homo ^{L-1}(\param^{L-1}; \xB) \right)}{r_1^{M_1} r_2^{M_2}} \\
    = \, & \lim_{r_1, r_2 \to \infty} \frac{s^1 \left( r_1 \param_1; r_1^{M_1'} r_2^{M_2'} f_\homo^{L-1}(\param^{L-1}; \xB) \right)}{r_1^{M_1^1} \cdot (r_1^{M_1'} r_2^{M_2'})^{M_2^1}} \\
    = \, & s_M^1 \left( \param_1; f_\homo^{L-1}(\param^{L-1}; \xB) \right) \\
    = \, & s_{\homo, \param_1}^1 \circ s_{\homo, \param_2}^2 \circ \cdots \circ s_{\homo, \param_L}^L (\xB),
\end{align*}
completing the proof of \Cref{thm:block_compo_homogenization}.
\end{proof}

\subsection{Proof of Proposition~\ref{cor:Initial condition via homogeneization}}
\begin{proof}[Proof of \Cref{cor:Initial condition via homogeneization}]
First, we let 
\[
\gamma = \arg\min_{i\in [n]} y_i f_{\homo}(\param^\prime ;\xB_i) >0.   
\] 
If we replace $\param$ with $c \param^\prime $, then we have 
\[
\arg\min_{i\in [n]} y_i f_{\homo}(c \param^\prime ;\xB_i) = \gamma c^M. 
\]
Therefore, 
\[
\arg\min_{i\in [n]} y_i f(c \param^\prime ;\xB_i) \ge \arg\min_{i\in [n]} y_i f_{\homo}(c \param^\prime ;\xB_i) - \homop_a(c \|\param^\prime\|) = \gamma c^M - \homop_a(c \|\param^\prime\|). 
\]
Hence, 
\[
\Loss(c\param') \le \exp( - \gamma c^M + \homop_a(c \|\param^\prime\|) ) . 
\]
To satisfy the initial condition in \Cref{asp:initial-cond-gf}, we need to find $c$ such that
\[
- \gamma c^M + \homop_a(c \|\param^\prime\|) < -\homop_a(c \|\param^\prime\|) - \log n \Leftrightarrow \gamma c^M > 2  \homop_a(c \|\param^\prime\|) + \log n. 
\]
Since $\deg p_a \le M-1$, we can find $c$ such that the above inequality holds. Similar argument can be applied to the case in \Cref{asp:initial-cond-gd}. This completes the proof of \Cref{cor:Initial condition via homogeneization}.
\end{proof}


\subsection{Two-Layer Example}
The following subsections are devoted to the proof of the two-layer example in \Cref{sec:example}. We first show that the GD dynamics is symmetric and the limit solution of the GD dynamics is the solution to the ODE \eqref{eq: dynamic of w and a}. Then, we show that the loss is of rate $O(1/t)$ and the parameter norm $\|\param_t\|$ is of rate $O(\sqrt{\log t})$. Combining these two rates, we have 
\[
\frac{\Loss(\param_t)}{\|\param_t\|} = \Omega(\sqrt{\log t}) \to  \infty.  
\]

The first thing we are going to show here is that due to symmetry, the limit of GD can be reduced to a GF with the following dynamics. 

\begin{lemma}
[Symmetry of the parameters]
\label{lem:symmetry-toy}
We assume the dataset satisfies \Cref{asp: symmetric}, and the model is \eqref{eq: toy-resnet} with initial condition  $\wB_{1,0} = \wB_{2,0} = \boldsymbol 0$ and $a_{1,0} = a_{2,0} = 0$. Then, $\wB_{1,t} = \wB_{2,t}= \wB_t$ and $a_{1,t} = a_{2,t} = a_t$ is a solution to the gradient flow dynamics \eqref{eq:GF}, where $\wB_t$ and $a_t$ satifies the following ODE:  
\begin{equation}
\label{eq: dynamic of w and a}
\begin{cases}
\dot \wB_t = \frac{1}{n} \sum_{i=1}^n e^{-y_i f(\param_t;\xB_i)} (1+ c_L a)y_i \xB_i\\ 
\dot a_t = \frac{1}{n} \sum_{i=1}^n e^{-y_i f(\param_t;\xB_i)}  c_L y_i \xB_i^\top \wB_t
\end{cases}
\end{equation}
initialized at $(\wB_0, a_0) = (\boldsymbol{0}, 0)$, with $c_L \coloneqq (1+\alpha_L)/2$. Furthermore, $(\wB_t, a_t)$ is the limit of the GD dynamics \eqref{eq:GD} with respect to \Cref{eq: toy-resnet}, as the step size goes to $0$: 
\[
(\wB_t, a_t) = \, \lim_{\eta \to 0} ( \wB_{\eta t}, a_{\eta t} ). 
\]
\end{lemma}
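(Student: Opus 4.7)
The plan is to exploit the fact that both the symmetric initialization and the symmetry of the dataset force all iterates to remain in the invariant subspace $\mathcal{S} := \{(\wB_1, \wB_2, a_1, a_2) : \wB_1 = \wB_2, \, a_1 = a_2\}$. On $\mathcal{S}$, the two leaky-ReLU identities $\varphi(z) - \varphi(-z) = (1+\alpha_L)z$ and $\varphi'(z) + \varphi'(-z) = 1+\alpha_L$ (a.e.) collapse \eqref{eq: toy-resnet} to the smooth bilinear form $f(\param; \xB) = 2(1+c_L a)\,\wB^\top \xB$. Consequently the reduced loss $\widetilde{\Loss}(\wB, a) := \tfrac{1}{n}\sum_i \exp(-2 y_i (1+c_L a) \wB^\top \xB_i)$ is $C^\infty$, and the ODE \eqref{eq: dynamic of w and a} is essentially its gradient flow (up to a factor of two coming from the tying of two copies of each parameter).

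First I would verify that the symmetric ansatz $\wB_{1,t}=\wB_{2,t}=\wB_t$, $a_{1,t}=a_{2,t}=a_t$, with $(\wB_t, a_t)$ solving \eqref{eq: dynamic of w and a}, satisfies \Cref{eq:GF}. Using \Cref{cor:subgrad-exp-loss} to write out the Clarke subgradient of $\Loss$, I would pair index $i$ with $i+n/2$ via $y_{i+n/2}=-y_i$ and $\xB_{i+n/2}=-\xB_i$, together with the identity $y_{i+n/2}\, f(\param;\xB_{i+n/2}) = y_i\, f(\param;\xB_i)$ (which holds on $\mathcal{S}$ because $f$ is odd in $\xB$ there). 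For the $\wB_1$-update this pairing combines $1 + a\varphi'(\wB^\top \xB_i)$ with $1 + a\varphi'(-\wB^\top \xB_i)$, whose sum equals $2(1+c_L a)$ by the derivative identity; the $\wB_2$-update yields the same value, so $\dot{\wB}_1 = \dot{\wB}_2$ equals the right-hand side of \eqref{eq: dynamic of w and a}. The $a_1$ and $a_2$ coordinates are handled identically using $\varphi(z) - \varphi(-z) = (1+\alpha_L)z$.

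To complete the argument, I would establish global existence for \eqref{eq: dynamic of w and a} via Picard-Lindel\"of: the right-hand side is locally Lipschitz in $(\wB, a)$, its magnitude is controlled by the exponential weights, and $\widetilde{\Loss}$ is monotone nonincreasing along the flow, yielding a Gronwall-type bound on $(\| \wB_t \|^2 + a_t^2)$ that prevents finite-time blowup. For the identification of the GD limit with gradient flow, the same pairing argument shows that GD updates preserve $\mathcal{S}$ whenever initialized there, and on $\mathcal{S}$ a GD step on $\Loss$ coincides with a GD step on the smooth $\widetilde{\Loss}$; classical convergence of the Euler scheme for smooth vector fields then yields $(\wB_{\eta t}, a_{\eta t}) \to (\wB_t, a_t)$ uniformly on compact time intervals as $\eta \to 0$.

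The main technical nuisance will be the non-differentiability of $\varphi$ at $0$, which could render $\nabla \Loss$ ambiguous at points with $\wB^\top \xB_i = 0$ for some $i$. This is resolved cleanly by noting that $\partial \varphi(0) = [\alpha_L, 1]$, and that in the pairing above $\varphi'(\wB^\top \xB_i) + \varphi'(-\wB^\top \xB_i)$ always equals $1+\alpha_L$ regardless of which subgradient is selected: if $\alpha \in [\alpha_L, 1]$ is chosen for the first term, then $(1+\alpha_L) - \alpha \in [\alpha_L, 1]$ is a valid Clarke subgradient for the second. Hence the symmetrized dynamics is unambiguous, and the non-global Lipschitzness of $f$ is similarly benign because the monotonicity of $\widetilde{\Loss}$ keeps the trajectory bounded on any finite interval, reducing all convergence statements to the compact, smooth setting.
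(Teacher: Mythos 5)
Your proposal is correct and follows the same strategy as the paper: use the $i \leftrightarrow i+n/2$ pairing and the leaky-ReLU identities $\varphi(z)-\varphi(-z)=(1+\alpha_L)z$, $\varphi'(z)+\varphi'(-z)=1+\alpha_L$ to show the subspace $\{\wB_1=\wB_2,\,a_1=a_2\}$ is invariant (the paper does this as an induction on the GD iterates), reduce the dynamics there to the collapsed smooth model $f=2(1+c_L a)\wB^\top\xB$, and invoke Euler-scheme convergence to the ODE \eqref{eq: dynamic of w and a}. Your extra care over $\partial\varphi(0)$ and over global existence (monotone $\widetilde{\Loss}$ implies $e^{-y_i f_i}\le n$, hence at most linear growth of the vector field and a Gronwall bound) only fills in technical details the paper's terse two-step proof leaves implicit.
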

\begin{proof}[Proof of Lemma~\ref{lem:symmetry-toy}]

\mybf{Step 1.} We show that the GD dynamics \eqref{eq:GD} is symmetric.  

\noindent We can prove this by induction. We use the following notation for GD: $\param_t = (a_{1,t}, a_{2,t},\wB_{1,t}, \wB_{2,t})$. For $t=0$, we have $\wB_{1,0} = \wB_{2,0} = \vec 0$ and $a_{1,0} = a_{2,0} = 0$. Assume that $\wB_{1,t} = \wB_{2,t}$ and $a_{1,t} = a_{2,t}$ for some $t$. Under these induction assumptions, 
\begin{align*}
    y_i f(\param_t;\xB_i) &= y_i \wB_{1,t}^\top \xB_i + y_i \wB_{2,t}^\top \xB_i + y_i a_{1,t} \varphi(\wB_{1,t}^\top \xB_i) - y_i a_{2,t} \varphi(-\wB_{2,t}^\top \xB_i) \\ 
    &= 2y_i \wB_{1,t}^\top \xB_i + y_i (1+ \alpha_L)a_{1,t} \varphi(\wB_{1,t}^\top \xB_i)  \qquad \explain{ Since $\varphi(x) - \varphi(-x) = (1+\alpha _L) x$ } \\ 
    & = y_{i+n/2} f(\param_t;\xB_{i+n/2}), \qquad \explain{ By \cref{asp: symmetric} } 
\end{align*}
for $i=1,\ldots ,n/2$. Then, we have 
\begin{align*}
    \nabla_{\wB_1} \Loss (\param_t) &= \frac{1}{n} \sum_{i=1}^n e^{-y_if_i(\param_t)} (1+ c_L a_{1,t} \varphi'(\wB_{1,t}^\top \xB_i))y_i \xB_i \\ 
    &= \frac{1}{n} \sum_{i=1}^n e^{-f_i(\param_{t+n /2})} (1+ c_L a_{2,t} \varphi'(-\wB_{2,t}^\top \xB_{i+n/2}))y_{i+n/2} \xB_{i+n/2} \\ 
    & \qquad \explain{ By the symmetry of the dataset and the paremeter }  \\
    & = \nabla_{\wB_2} \Loss (\param_t). 
\end{align*}
Therefore, we have $\wB_{1,t+1} = \wB_{2,t+1}$ for all $t$. Similarly, we can show that $a_{1,t+1} = a_{2,t+1}$ for all $t$.

\textbf{Step 2.} We show that the limit solution of the GD dynamics is the solution to \eqref{eq: dynamic of w and a}. 

\noindent Since the GD dynamics is symmetric, we can treat $(a_{1,t}, \wB_{1,t})$  as the result of Euler method with time step $\eta$ on the ODE \eqref{eq: dynamic of w and a}. Since the ODE is locally Lipschitz continuous, there exists a unique solution $(a_t,\wB_t)$ to the ODE \eqref{eq: dynamic of w and a}, and the limit of results of Euler method as $\eta$ goes to zero is the solution. At last, it's easy to verify that $(a_t,a_t, \wB_t, \wB_t)$ satifies the \eqref{eq:GF} for the model in \eqref{eq: toy-resnet} with respect to the zero initialization.

This completes the proof of \Cref{lem:symmetry-toy}.
\end{proof}
\subsection{Loss Convergence}  \label{sec:proof:eg-loss}

Therefore, we can focus on the ODE \eqref{eq: dynamic of w and a}. We will show that the loss is of rate $O(1/t)$ and the parameter norm $\|\param_t\|$ is of rate $O(\sqrt{\log t})$. The main trick we use here is the balancing of the parameters.

\begin{lemma}
[Parameter balancing]
\label{lem: Parameter balancing} 
Consider the dynamic of the parameters in \eqref{eq: dynamic of w and a} with initialization $\wB_0 = \vec 0$ and $a_0=0$. Then we have
\begin{equation}
\label{eq: balancing}
    \bigg(a_t + \frac{1}{c_L} \bigg)^2 = \| \wB_t\|^2 + \frac{1}{c_L^2}. 
\end{equation}

\end{lemma}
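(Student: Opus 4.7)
The plan is to establish the identity by showing that both sides agree at $t=0$ and have identical time derivatives, so the conserved quantity
\[
\mathcal{I}(t) := \Bigl(a_t + \tfrac{1}{c_L}\Bigr)^2 - \|\wB_t\|^2 - \tfrac{1}{c_L^2}
\]
is identically zero along the flow \eqref{eq: dynamic of w and a}.

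First, I would verify the initial value: at $t = 0$ we have $a_0 = 0$ and $\wB_0 = \boldsymbol{0}$, so $\mathcal{I}(0) = (1/c_L)^2 - 0 - 1/c_L^2 = 0$. Next, I would compute $\mathcal{I}'(t)$ directly. Differentiating gives
\[
\mathcal{I}'(t) = 2\Bigl(a_t + \tfrac{1}{c_L}\Bigr)\dot a_t - 2\,\wB_t^\top \dot \wB_t.
\]
Substituting the expressions from \eqref{eq: dynamic of w and a}:
\[
\Bigl(a_t + \tfrac{1}{c_L}\Bigr)\dot a_t = \Bigl(a_t + \tfrac{1}{c_L}\Bigr)\cdot \frac{1}{n}\sum_{i=1}^n e^{-y_i f(\param_t;\xB_i)} c_L\, y_i \xB_i^\top \wB_t = \frac{1}{n}\sum_{i=1}^n e^{-y_i f(\param_t;\xB_i)}(c_L a_t + 1)\, y_i \xB_i^\top \wB_t,
\]
while
\[
\wB_t^\top \dot \wB_t = \frac{1}{n}\sum_{i=1}^n e^{-y_i f(\param_t;\xB_i)}(1 + c_L a_t)\, y_i \xB_i^\top \wB_t.
\]
The two sums are identical, so $\mathcal{I}'(t) \equiv 0$. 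Combined with $\mathcal{I}(0) = 0$, this yields $\mathcal{I}(t) \equiv 0$, which is \eqref{eq: balancing}.

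There is essentially no obstacle here; the identity is a direct conservation law arising from the specific coupling between $\wB$ and $a$ in \eqref{eq: toy-resnet} under zero initialization, and the key observation is simply that the factor $(1 + c_L a_t)$ appearing in $\dot \wB_t$ matches the factor $c_L(a_t + 1/c_L)$ appearing when $\dot a_t$ is multiplied by the shifted variable $a_t + 1/c_L$. The only minor point worth flagging is that $\varphi$ is only piecewise smooth, but since we are working with the ODE \eqref{eq: dynamic of w and a} (where $\varphi$ has already been integrated against the smooth weights $y_i \xB_i^\top \wB_t$), the dynamics in $(\wB_t, a_t)$ is locally Lipschitz and the derivative computation is valid in the classical sense, so no additional care with subdifferentials is required.
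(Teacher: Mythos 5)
Your proof is correct and takes essentially the same route as the paper: both differentiate the quantity $(a_t + 1/c_L)^2 - \|\wB_t\|^2$ along the flow \eqref{eq: dynamic of w and a}, observe that the two time derivatives agree term by term, and conclude from the initial condition. The remark about $\varphi$ and local Lipschitzness of the reduced ODE is a reasonable sanity check but is not needed here, since the paper takes the ODE \eqref{eq: dynamic of w and a} as the given dynamics at this point.
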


\begin{proof}
Note that 

\begin{align*}
    \frac{d}{dt} \bigg (a_t + \frac{1}{c_L}\bigg)^2 
    & = 2 \dot a_t \bigg(a_t + \frac{1}{c_L}\bigg) \\
    & = \frac{1}{n} \sum_{i=1}^n e^{-y_i f(\param_t;\xB_i)} y_i \xB_i^\top \wB_t\big(2c_La_t + 2\big)\\ 
    & = 2 \langle \dot \wB_t, \wB_t \rangle = \frac{d}{dt} \|\wB_t\|_2^2. 
\end{align*}
Therefore, we have 
\[
    \bigg(a_t + \frac{1}{c_L} \bigg)^2 - \| \wB_t\|^2  = \bigg(a_0 + \frac{1}{c_L} \bigg)^2 - \| \wB_0\|^2  = \frac{1}{c_L^2}. 
\]
This completes the proof of \Cref{lem: Parameter balancing}.
\end{proof}

Then, we can show that $a_t>0$ for all $t$.

\begin{lemma}
[Positiveness of $a_t$]
\label{lem: positiveness of a_t }
Consider the dynamic of the parameters in \eqref{eq: dynamic of w and a} with initialization $\wB_0 = \vec 0$ and $a_0=0$. We have $a_t\ge 0$ for all $t$. 
\end{lemma}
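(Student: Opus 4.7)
The plan is to combine the balancing identity from Lemma~\ref{lem: Parameter balancing} with a simple continuity argument. By Lemma~\ref{lem: Parameter balancing},
\[
\Bigl(a_t + \tfrac{1}{c_L}\Bigr)^2 \;=\; \|\wB_t\|^2 + \tfrac{1}{c_L^2} \;\ge\; \tfrac{1}{c_L^2}, \qquad \text{for all } t\ge 0,
\]
so the continuous function $t \mapsto a_t + 1/c_L$ satisfies $|a_t+1/c_L|\ge 1/c_L>0$ and in particular never vanishes.

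First, I would note that the right-hand side of the ODE \eqref{eq: dynamic of w and a} is locally Lipschitz in $(\wB,a)$: the Leaky ReLU $\varphi$ is globally Lipschitz, the exponential weights $e^{-y_i f(\param;\xB_i)}$ are smooth in $(\wB,a)$, and all other ingredients are polynomial. Hence \eqref{eq: dynamic of w and a} admits a unique continuous solution on $[0,\infty)$ starting from $(\wB_0,a_0)=(\boldsymbol 0,0)$, so $t\mapsto a_t+1/c_L$ is continuous on $[0,\infty)$.

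At $t=0$ we have $a_0 + 1/c_L = 1/c_L >0$. Because $t\mapsto a_t+1/c_L$ is continuous and never vanishes, the intermediate value theorem forces it to remain strictly positive for all $t\ge 0$. Taking the positive square root in the balancing identity then gives
\[
a_t + \tfrac{1}{c_L} \;=\; \sqrt{\|\wB_t\|^2 + \tfrac{1}{c_L^2}} \;\ge\; \tfrac{1}{c_L},
\]
and therefore $a_t \ge 0$ for every $t\ge 0$, as claimed.

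There is no substantive obstacle in this argument; the only thing to watch is ensuring global existence of the solution, which follows from the fact that $\dot{\wB}_t$ and $\dot{a}_t$ grow at most polynomially in $(\wB_t,a_t)$ (with the exponential factor bounded by $1$), preventing finite-time blow-up and guaranteeing the continuity used above on all of $[0,\infty)$.
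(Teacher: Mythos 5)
Your proof is correct and is essentially the same argument as the paper's: both rely on the balancing identity from Lemma~\ref{lem: Parameter balancing} to conclude $|a_t + 1/c_L| \ge 1/c_L$, then use continuity of $a_t$ and the initial value $a_0 = 0$ to rule out the region $a_t \le -2/c_L$. Your phrasing in terms of $a_t + 1/c_L$ never vanishing is a slightly cleaner packaging of the paper's contradiction argument, but the underlying idea is identical.
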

\begin{proof}
Note that 
\[
    \bigg(a_t + \frac{1}{c_L} \bigg)^2 = \| \wB_t\|^2 + \frac{1}{c_L^2} \ge \frac{1}{c_L^2} \implies a_t \le -\frac{2}{c_L} \text{ or } a_t \ge 0.  
\]
if there exits $T$ such that $a_t< - 2/ c_L$. Due to the continuity of $a_t$ and $a_0=0$, there exists $0<T^\prime <T$, such that $a(T^\prime) = -1/c_L$. Then, it contradicts with the condition that $ a_t\le -2/c_L \text{ or } a_t \ge 0$. This completes the proof of \Cref{lem: positiveness of a_t }.
\end{proof}

Now we can already write $a_t$ as a function of $\wB_t$, i.e., 
\[
    a_t = \sqrt{\| \wB_t\|^2 + \frac{1}{c_L^2}} - \frac{1}{c_L}. 
\]
We will show that the the loss is of rate $O(1/t)$.

\begin{lemma}
[Loss upper bound]  
\label{lem: toy model loss upper bound}   
Given the dataset satisfying \Cref{asp: symmetric} and the model in \eqref{eq: toy-resnet} with initial condition  $\wB_{1,0} = \wB_{2,0} = \vec 0$ and $a_{1,0} = a_{2,0} = 0$. Then we must have 
\[
    \Loss (\param_T) \le \frac{1+ \log^2 (T) / 4 \gamma_*^2}{T}. 
\]
\end{lemma}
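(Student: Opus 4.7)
The plan is to reduce the two-layer dynamics to the scalar system of \Cref{lem:symmetry-toy} and then derive a Gronwall-type bound on $\Loss_t^{-1}$ by projecting the flow onto the max-margin direction. Under the symmetric ansatz $\wB_{1,t}=\wB_{2,t}=\wB_t$ and $a_{1,t}=a_{2,t}=a_t$, the identity $\varphi(x)-\varphi(-x)=(1+\alpha_L)x$ collapses the network into a scaled linear predictor,
\[
y_i f(\param_t;\xB_i) \;=\; 2(1+c_L a_t)\, y_i\, \wB_t^\top \xB_i,
\]
where $c_L=(1+\alpha_L)/2>0$. Together with the positivity $a_t\ge 0$ from \Cref{lem: positiveness of a_t }, this yields the uniform bound $1+c_L a_t\ge 1$ along the whole trajectory, which is the key structural fact that the argument exploits.

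Next I would project $\dot\wB_t$ onto the unit max-margin direction $\wB_*$. Using $y_i\xB_i^\top\wB_*\ge\gamma$ from \Cref{asp: symmetric},
\[
\dot\wB_t^\top \wB_* \;=\; \frac{1}{n}\sum_{i=1}^n e^{-y_i f(\param_t;\xB_i)}\,(1+c_L a_t)\, y_i\,\xB_i^\top \wB_* \;\ge\; (1+c_L a_t)\,\gamma\,\Loss_t.
\]
Since $-\dot\Loss_t=\|\dot\wB_t\|^2+\dot a_t^2\ge(\dot\wB_t^\top \wB_*)^2$, this gives the differential inequality
\[
\frac{\mathrm d}{\mathrm d t}\,\Loss_t^{-1} \;\ge\; (1+c_L a_t)^2\,\gamma^2 \;\ge\; \gamma^2.
\]
Integrating from $t=0$ to $t=T$ and using $\Loss_0=1$ (which follows from $\param_0=\boldsymbol 0$, so $f(\param_0;\xB_i)\equiv 0$) produces the clean rate $\Loss_T\le 1/(1+\gamma^2 T)$.

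The claimed bound $(1+\log^2(T)/(4\gamma_*^2))/T$ then follows from this clean rate together with the identity $\gamma_*=c_L\gamma$, which one reads off from the homogenization $f_\homo(\param;\xB)=2c_L a\,\wB^\top \xB$ by optimizing $2c_L a\gamma$ over $\|\wB\|^2+a^2=1$ with $\wB\propto\wB_*$ and $a\ge 0$. The main obstacle is that the stated bound carries a $\log^2(T)$ factor in the numerator, which is looser than the $c_L^2/\gamma_*^2$ constant produced by the Gronwall argument above. The route to that exact form is either (i) a direct algebraic comparison, absorbing the slack into the additive ``$+1$'' for $T$ small and noting that $\log^2(T)\ge 4c_L^2$ eventually, or (ii) a more delicate accounting in which the integrand $(1+c_L a_t)^2$ is bounded below via \Cref{lem: Parameter balancing} (rewriting it as $c_L^2(\|\wB_t\|^2+1/c_L^2)$), with $\|\wB_t\|$ itself controlled by the sublogarithmic growth $\|\param_t\|\lesssim\sqrt{\log T}$ anticipated for near-$2$-homogeneous flows as in \Cref{thm: Margin improving and convergence}. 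Either route produces the claimed bound for all $T$ large enough.
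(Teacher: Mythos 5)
Your proof is correct and takes a genuinely different---and in fact tighter---route than the paper's. The paper uses a Novikoff-style comparator argument: it differentiates $\|\wB_t - \beta\wB_*\|^2$, upper-bounds the derivative by $e^{-2\beta\gamma_*}-\Loss(\param_t)$, integrates and invokes monotonicity of $\Loss_t$, and finally sets $\beta=\log T/(2\gamma_*)$ to trade $\|\wB^c\|^2=\beta^2$ against $Te^{-2\beta\gamma_*}$, which is where the $\log^2 T$ factor comes from. You instead project the velocity onto $\wB_*$, use the margin and $a_t\ge 0$ to get $\dot\wB_t^\top\wB_*\ge\gamma\Loss_t$, and feed this into the energy identity to obtain a Gronwall bound on $\Loss_t^{-1}$, giving $\Loss_T\le 1/(1+\gamma^2 T)$. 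This is strictly stronger: the comparator route gives $\Theta(\log^2 T/T)$ even with the optimal $\beta$, whereas your projection gives the clean $\Theta(1/T)$ rate. The one advantage of the paper's route is that it simultaneously bounds $\|\wB_T-\wB^c\|$, which is reused in \Cref{lem: Parameter norm upper bound}; for the present lemma your route is cleaner.

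Three small corrections. (a) The energy identity should carry a factor of $2$: the reduced flow is $\dot\wB_t=-\nabla_{\wB_1}\Loss$ and $\dot a_t=-\nabla_{a_1}\Loss$, while the full loss moves both copies of each parameter, so $-\dot\Loss_t = 2\|\dot\wB_t\|^2+2\dot a_t^2$; this only strengthens your inequality. (b) In your closing paragraph the identification $\gamma_*=c_L\gamma$ via the homogenized max-margin is both unnecessary and inconsistent with the paper: in the lemma statement and its proof, $\gamma_*$ is used interchangeably with the margin $\gamma$ from \Cref{asp: symmetric}, so the homogenization need not enter at all. (c) The hedge ``for all $T$ large enough'' is unnecessary --- the comparison $\tfrac{1}{1+\gamma^2 T}\le\tfrac{1+\log^2 T/(4\gamma^2)}{T}$ holds for every $T>0$: for $T\ge e^2$ the single term $\log^2 T/(4\gamma^2)\ge 1/\gamma^2$ already exceeds the left side, while near $T=1$ the difference of the two sides is, to leading order, proportional to $(\log T - 2\gamma^2)^2\ge 0$. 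Closing this gap would make the argument fully self-contained.
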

\begin{proof}
Let $\wB^c = \beta \wB_*$. Note that 
\begin{align*}
    &\lefteqn{\frac{\mathrm{d} }{\mathrm{d}  t} \| \wB_t - \wB^c\|^2 
    =  2 \langle \dot \wB_t, \wB_t - \wB^c \rangle }\\ 
    &=  \frac{2}{n} \sum_{i=1}^n e^{-y_i f(\param_t;\xB_i)} (1+ c_L a)y_i \xB_i^\top (\wB_t - \wB^c) \\ 
    & = \frac{2}{n} \sum_{i=1}^n e^{-y_i f(\param_t;\xB_i)} \bigg( \frac{1}{2} y_i f(\param;\xB_i) -  (1+c_L a) \beta y_i \xB_i^\top \wB_* \bigg) &&\explain{ By $f(\param;\xB) = (2+ 2c_L a) \wB^\top \xB$}\\ 
    & = \frac{1}{n}\sum_{i=1}^n e^{-y_i f(\param_t;\xB_i)} \bigg(  y_i f(\param;\xB_i) -  2(1+c_L a) \beta y_i \xB_i^\top \wB_* \bigg) \\ 
    & \le \frac{1}{n}\sum_{i=1}^n e^{-y_i f(\param_t;\xB_i)} \big(  y_i f(\param;\xB_i) -  2\beta \gamma_* \big) &&\explain{ by $y_i\xB_i^\top \wB_*\ge \gamma_*$ and $a\ge 0$ } \\ 
    & = \frac{1}{n}\sum_{i=1}^n -e^{-y_i f(\param_t;\xB_i)} \big(   2\beta \gamma_* - y_i f(\param;\xB_i)  \big) \\ 
    & \le e^{-2\beta \gamma_*} - \Loss (\param_t).  &&\explain{ By convexity of  $e^{-x}$}
\end{align*}
This implies that 
\[
    \|\wB_{T} - \wB^c\|^2 + \int_0^T \Loss (\param_t)\, \mathrm{d}  t \le T e^{-2\beta \gamma_*}  + \|\wB_0 - \wB^c\|^2.  
\]
Note that $\Loss (\param_t)$ is decreasing, $\wB_0 = \vec 0$ and we can set $\beta = \log (T) / (2\gamma^*)$. We have 
\[
    T \Loss(\param_t) \le 1  + \log^2 (T) / 4 \gamma_*^2.
\]
This completes the proof of \Cref{lem: toy model loss upper bound}.
\end{proof}
\subsection{Parameter Norm Bounds} \label{sec:proof:eg-parameter}
Now we can use the loss bound to achieve a bound for the parameter norm. 

\begin{lemma}
[Parameter norm bound]
\label{lem: Parameter norm bound}
Under the same assumptions of \Cref{lem: toy model loss upper bound}, when \[t \ge \max \{ 16 , ( 4/ \gamma_* \log (4 / \gamma_* ))^4  \},\] we have  
\begin{equation}
\label{eq: parameter norm bound}
    \|\wB_t\|^2 c_L^2 + 1 \ge \frac{ \log t}{16}. 
\end{equation}
\end{lemma}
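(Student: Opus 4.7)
The plan is to combine the loss upper bound of \Cref{lem: toy model loss upper bound} with a tight upper bound on $\bar f_i(\param_t)$ obtained from the explicit form of $f$ after symmetry reduction, and then invert the resulting inequality using the balancing identity of \Cref{lem: Parameter balancing}.

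First, from the symmetry proof of \Cref{lem:symmetry-toy}, for the reduced parameters $(\wB_t, a_t)$ we have the clean identity
\begin{equation*}
f(\param_t; \xB_i) \;=\; (2+2c_L a_t)\,\wB_t^{\top}\xB_i,
\end{equation*}
since the leaky--ReLU terms collapse via $\varphi(x)-\varphi(-x)=(1+\alpha_L)x=2c_L x$. Cauchy--Schwarz together with $\|\xB_i\|\le 1$ (\Cref{asp: symmetric}) therefore gives
\begin{equation*}
\bar f_i(\param_t) \;\le\; 2(1+c_L a_t)\,\|\wB_t\|\quad\text{for every $i\in[n]$}.
\end{equation*}
Invoking \Cref{lem: positiveness of a_t } ($a_t\ge 0$) and \Cref{lem: Parameter balancing}, I can rewrite $(1+c_L a_t)^2 = c_L^2\|\wB_t\|^2 + 1$, so with $z_t\coloneqq 1+c_L a_t=\sqrt{c_L^2\|\wB_t\|^2+1}$ we obtain
\begin{equation*}
\bar f_{\min}(\param_t) \;\le\; 2 z_t\|\wB_t\| \;=\; \tfrac{2}{c_L}\sqrt{z_t^{2}(z_t^{2}-1)}.
\end{equation*}

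Next I combine this with the standard lower bound $\bar f_{\min}(\param_t)\ge \log(1/(n\Loss_t))$ (\Cref{lem:f-min}). Setting $L_t\coloneqq \log(1/(n\Loss_t))$ yields $4 z_t^2(z_t^2-1)/c_L^2 \ge L_t^2$. Provided $z_t^2\ge 2$, this simplifies (using $z_t^2-1\ge z_t^2/2$) to
\begin{equation*}
z_t^{2} \;\ge\; \frac{c_L\,L_t}{\sqrt{2}} \;\ge\; \frac{L_t}{2\sqrt{2}},
\end{equation*}
where the last step uses $c_L=(1+\alpha_L)/2\ge 1/2$. Since $2\sqrt{2}<16$, it therefore suffices to show $L_t\ge \tfrac{1}{2}\log t$ under the stated lower bound on $t$; the hypothesis $z_t^2\ge 2$ will then follow from $L_t\ge \log t /2 \ge 2$ under $t\ge 16$ by monotonicity of the bound.

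The remaining step is a plain calculation with the loss estimate of \Cref{lem: toy model loss upper bound}. That lemma gives $\Loss_t \le (1+\log^2(t)/(4\gamma_*^2))/t$, so
\begin{equation*}
L_t \;\ge\; \log t \;-\; \log n \;-\; \log\!\Bigl(1+\tfrac{\log^2 t}{4\gamma_*^2}\Bigr).
\end{equation*}
The correction term grows only like $2\log\log t$, and I expect the threshold $t\ge (4\gamma_*^{-1}\log(4\gamma_*^{-1}))^4$ to be exactly what makes $\log n + \log(1+\log^2 t/(4\gamma_*^2)) \le \tfrac{1}{2}\log t$; this reduces to elementary manipulation of iterated logarithms. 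The main obstacle of the proof is entirely cosmetic: propagating the explicit numerical constants ($1/16$, the threshold $16$, the $(4\gamma_*^{-1}\log(4\gamma_*^{-1}))^4$ scale) through the two estimates above so that the final constant comes out exactly as stated, rather than in some weaker asymptotic form.
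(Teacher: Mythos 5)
Your overall strategy is the right one and matches the paper's: use the balancing identity to write everything in terms of $z_t=\sqrt{c_L^2\|\wB_t\|^2+1}$, bound $\bar f_i$ via Cauchy--Schwarz, combine with the loss upper bound of the preceding lemma, and invert. But there is a genuine gap in how you pass from the pointwise bound on $\bar f_i$ to a bound on the loss.

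You write $\bar f_{\min}(\param_t)\le 2z_t\|\wB_t\|$ and then invoke the lower bound $\bar f_{\min}(\param_t)\ge \log(1/(n\Loss_t))$ from \Cref{lem:f-min}, obtaining $\log(1/(n\Loss_t))\le 2z_t\|\wB_t\|$, i.e.\ $\log(1/\Loss_t)\le 2z_t\|\wB_t\|+\log n$. The extra $\log n$ cannot be absorbed: the stated threshold $t\ge\max\{16,(4\gamma_*^{-1}\log(4\gamma_*^{-1}))^4\}$ is independent of $n$, so for $n$ large your chain does not close. Indeed, your intended final reduction ``it suffices to show $L_t\ge\tfrac12\log t$'' with $L_t=\log(1/(n\Loss_t))$ would require $\log n + \log 2 \le 0$ once you insert the paper's estimate $\log(1+\log^2 t/(4\gamma_*^2))\le\log 2+\tfrac12\log t$, which is impossible for any $n\ge 1$; so the remaining step is not cosmetic. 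The fix is to skip $\bar f_{\min}$ and \Cref{lem:f-min} entirely: since $\bar f_i(\param_t)\le 2z_t\|\wB_t\|$ for every $i$, each summand satisfies $e^{-\bar f_i(\param_t)}\ge e^{-2z_t\|\wB_t\|}$, hence $\Loss_t=\frac1n\sum_i e^{-\bar f_i}\ge e^{-2z_t\|\wB_t\|}$ with no factor of $n$, giving the clean bound $-\log\Loss_t\le 2z_t\|\wB_t\|$ that the paper uses.

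Separately, the intermediate step ``$z_t^2\ge c_L L_t/\sqrt2$ using $z_t^2-1\ge z_t^2/2$'' is inverted: that hypothesis lower-bounds $z_t^2(z_t^2-1)$, whereas to conclude $z_t^2$ is large from $L_t^2\le\frac{4}{c_L^2}z_t^2(z_t^2-1)$ you need the trivial upper bound $z_t^2-1< z_t^2$, which gives $z_t^2\ge c_L L_t/2$ with no extra hypothesis on $z_t$ (and is all you need, since together with $c_L>1/2$ and $-\log\Loss_t\ge\tfrac14\log t$ this yields $z_t^2\ge\tfrac1{16}\log t$, exactly as in the paper). This slip is salvageable, but the $\log n$ issue above is not under the stated assumptions.
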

\begin{proof}
Applying the balancing equation \cref{eq: balancing}, we have 
\begin{align*}
    \Loss (\param_t) 
    &= \frac{1}{n} \sum_{i=1}^n e^{-y_i f(\param_t;\xB_i)} \\ 
    &= \frac{1}{n} \sum_{i=1}^n e^{-y_i \xB_i^\top  \wB_t \cdot 2 \sqrt{\|\wB_t\|^2 c_L^2 +1} } \\ 
    &\ge e^{- 2 \|\wB_t\| \sqrt{\|\wB_t\|^2 c_L^2 +1} } &&\explain{ By $\|\xB_i\|\le 1$ }. 
\end{align*}
Combining this with the Loss upper bound in \Cref{lem: toy model loss upper bound}, we have
\begin{align}
    \label{eq: log mess}
    2 \|\wB_t\| \sqrt{\|\wB_t\|^2 c_L^2 +1} &\ge  - \log (\Loss (\param_t) ) \notag\\ 
    &\ge  \log (t) - \log \bigg( 1+ \frac{\log^2t}{4 \gamma_*^2} \bigg) .
\end{align}
Since $t \ge \max \{ 16 , ( 4/ \gamma_* \log (4 / \gamma_* ))^4  \}$, we have $1 \le \log^2 t/4 \gamma_*^2$. Therefore, we have 
\begin{align*}
    \log \bigg( 1+ \frac{\log^2t}{4 \gamma_*^2} \bigg) 
    &\le \log \bigg(  \frac{2\log^2t}{4 \gamma_*^2} \bigg) \le \log 2 + 2 \log \bigg(  \frac{\log t}{2 \gamma_*} \bigg) \\ 
    &\le \log2 + 2 \log (t^{\frac{1}{4}}) &&\explain{ By \Cref{lem: log over linear}}\\
    & = \log 2 + \frac{1}{2} \log t \le  \frac{3}{4}\log t. &&\explain{ By $t \ge 16$}
\end{align*}
Plugging in this into \Cref{eq: log mess}, we have 
\[
    \frac{1}{4} \log t \le  2 \|\wB_t\| \sqrt{\|\wB_t\|^2 c_L^2 +1} \le  \frac{2}{c_L} \big(\| \wB_t\|^2 c_L^2 + 1 \big). 
\]
At last, we use that $c_L >1/2 $ and we have 
\[
    \|\wB_t\|^2 c_L^2 + 1 \ge \frac{1}{16} \log t.
\]
This completes the proof of \Cref{lem: Parameter norm bound}.
\end{proof}

Now we turn back to the analysis of the loss and try to extract the upper bound of the norm. Now we use $t_0$ to denote 
$ \max \{ 16 , ( 4/ \gamma_* \log (4 / \gamma_* ))^4  \}$.

\begin{lemma}
[Parameter norm upper bound]
\label{lem: Parameter norm upper bound}
Under the same assumptions of \Cref{lem: toy model loss upper bound}, let $t_0 \coloneqq \max \{ 16 , ( 4/ \gamma_* \log (4 / \gamma_* ))^4  \}$. Then we have  
\begin{equation}
\label{eq: parameter norm upper bound}
    \|\wB_t \|_2 \le  \bigg(\frac{4}{\gamma_*}+1 \bigg) \sqrt{\log t} + \|\wB_{t_0}\|_2, \quad \forall t\ge t_0. 
\end{equation}
\end{lemma}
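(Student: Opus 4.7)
The plan is to adapt the Lyapunov argument from the proof of Lemma~\ref{lem: toy model loss upper bound} to obtain an upper bound on $\|\wB_t\|_2$. The target rate $O(\sqrt{\log t})$ is consistent with the general $\|\param_t\| = \Theta((\log t)^{1/M})$ asymptotic for near-$M$-homogeneous networks, which here reduces to $M = 2$ due to the residual-linear skip.

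First, I would integrate the inequality
\[
\frac{d}{dt}\|\wB_t - \beta\wB_*\|^2 \le e^{-2\beta\gamma_*} - \Loss(\param_t)
\]
established in the proof of Lemma~\ref{lem: toy model loss upper bound} from $t_0$ to $t$ rather than from $0$, and drop the non-negative integral of $\Loss$ to get
\[
\|\wB_t - \beta\wB_*\|^2 \le \|\wB_{t_0} - \beta\wB_*\|^2 + (t - t_0)\,e^{-2\beta\gamma_*}.
\]
Choosing $\beta = \log t/(2\gamma_*)$ ensures $(t - t_0) e^{-2\beta\gamma_*} \le 1$, and the triangle inequality $\|\wB_t\| \le \beta + \|\wB_t - \beta\wB_*\|$ already yields a first, loose bound of order $\log t/\gamma_*$.

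Second, to sharpen to the $\sqrt{\log t}$ rate, I would exploit the near-$2$-homogeneous structure via the balancing identity $u_t = \sqrt{1 + c_L^2\|\wB_t\|^2}$ from Lemma~\ref{lem: Parameter balancing} together with $y_i f(\param_t;\xB_i) = 2 u_t\, y_i \xB_i^\top \wB_t$. Using the decomposition $\wB_t = \beta\wB_* + (\wB_t - \beta\wB_*)$ and the margin condition $y_i\xB_i^\top\wB_* \ge \gamma_*$, I would obtain $\min_i y_i \xB_i^\top \wB_t \ge \beta\gamma_* - \|\wB_t - \beta\wB_*\|$, and hence a lower bound on $\min_i y_i f(\param_t;\xB_i)$. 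Combining this with the soft-margin upper bound $\min_i y_i f(\param_t;\xB_i) \le \log(1/\Loss(\param_t))$ and the loss upper bound $\Loss(\param_t) \le (1 + \log^2 t/(4\gamma_*^2))/t$ from Lemma~\ref{lem: toy model loss upper bound} (which gives $\log(1/\Loss(\param_t)) \le \log t + O(\log\log t)$), and using $u_t \ge c_L\|\wB_t\|$, one extracts a bound of the form $\|\wB_t\|^2 \lesssim \log t/\gamma_*$, and hence $\|\wB_t\| \le (4/\gamma_* + 1)\sqrt{\log t} + \|\wB_{t_0}\|_2$ after careful tracking of constants. The definition $t_0 \coloneqq \max\{16, (4/\gamma_* \log(4/\gamma_*))^4\}$ appears exactly at the stage where lower-order $\log\log t$ terms must be absorbed.

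The main obstacle will be closing the gap between the naive $O(\log t)$ bound from the comparator method and the target $O(\sqrt{\log t})$ rate. The key technical move is to use the balancing identity as a mechanism converting an $O(\log t)$ control on the product $u_t\|\wB_t\|$ into an $O(\sqrt{\log t})$ bound on $\|\wB_t\|$ alone; making this precise (and in particular ensuring the effective margin $\beta\gamma_* - \|\wB_t - \beta\wB_*\|$ remains positive for small $\gamma_*$) may require iterating the comparator bound with a refined choice of $\beta$, possibly invoking the matching lower bound $\|\wB_t\|^2 c_L^2 + 1 \ge (\log t)/16$ from Lemma~\ref{lem: Parameter norm bound}.
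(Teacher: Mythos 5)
You correctly identify the comparator (Lyapunov) argument and the balancing identity as the two key ingredients, and you even name the right lower bound (Lemma~\ref{lem: Parameter norm bound}) as the tool to close the gap from $O(\log t)$ to $O(\sqrt{\log t})$. But the concrete mechanism you propose in your second step has a genuine flaw, and the actual proof uses the lower bound in a different place.

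The problem with your margin-conversion step: you want to lower-bound $\min_i y_i\xB_i^\top\wB_t$ by $\beta\gamma_* - \|\wB_t - \beta\wB_*\|$. From the comparator bound with $\beta = \log t/(2\gamma_*)$, you only control $\|\wB_t - \beta\wB_*\|^2 \le \|\wB_{t_0} - \beta\wB_*\|^2 + 1$, and $\|\wB_{t_0} - \beta\wB_*\| \le \|\wB_{t_0}\| + \beta$, so $\|\wB_t - \beta\wB_*\|$ is of size $\beta$, not $O(1)$. The effective margin is then $\beta\gamma_* - O(\beta) = \beta(\gamma_* - O(1))$, which is negative whenever $\gamma_*$ is small — precisely the regime the lemma must cover. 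So the inequality $2u_t\,\min_i y_i\xB_i^\top\wB_t \le \log(1/\Loss_t)$ gives you no information about $u_t$. You flag this circularity yourself, but proposing to ``iterate with a refined choice of $\beta$'' does not resolve it; the needed information (a positive lower bound on the normalized directional margin of $\wB_t$) is never established.

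The paper's route is different and avoids the issue entirely. It re-runs the comparator inequality, but inside the drift term it uses the balancing identity to write $(1 + c_L a_t) = \sqrt{\|\wB_t\|^2 c_L^2 + 1}$ and then substitutes the \emph{lower} bound $\sqrt{\|\wB_t\|^2 c_L^2 + 1} \ge \sqrt{\log t}/4$ from Lemma~\ref{lem: Parameter norm bound}, so the favourable cross term $-(1+c_La_t)\beta y_i\xB_i^\top\wB_*$ becomes at most $-\frac{1}{2}\beta\gamma_*\sqrt{\log t}$. This strengthens the drift bound from $e^{-2\beta\gamma_*}$ to $e^{-\beta\gamma_*\sqrt{\log t}/2}$. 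With $\beta = 2\sqrt{\log T}/\gamma_*$ the integrand becomes $e^{-\sqrt{\log T\,\log t}} \le 1/t$ on $[t_0,T]$, so $\int_{t_0}^T e^{-\beta\gamma_*\sqrt{\log t}/2}\,\kd t \le \log T$. Then $\|\wB_T - \beta\wB_*\| \le \|\wB_{t_0} - \beta\wB_*\| + \sqrt{\log T}$ and the triangle inequality $\|\wB_T\| \le \|\wB_T - \beta\wB_*\| + \beta$ give exactly the claimed $(4/\gamma_* + 1)\sqrt{\log T} + \|\wB_{t_0}\|$. In short, the lower bound on $\|\wB_t\|$ is used to boost the Gr\"onwall drift rate, not to convert a product bound on $u_t\|\wB_t\|$ into a norm bound; the latter route is circular without an additional directional-margin estimate you do not have.
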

\begin{proof}
Just like the proof of \Cref{lem: toy model loss upper bound},
we let $\wB^c = \beta \wB_*$. Note that 
\begin{align*}
    &\frac{d}{dt} \| \wB_t - \wB^c\|^2 =  2 \langle \dot \wB_t, \wB_t - \wB^c \rangle \\ 
    = \, & \frac{2}{n} \sum_{i=1}^n e^{-y_i f(\param_t;\xB_i)} (1+ c_L a)y_i \xB_i^\top (\wB_t - \wB^c) \\ 
    = \, & \frac{2}{n} \sum_{i=1}^n e^{-y_i f(\param_t;\xB_i)} \bigg( \frac{1}{2} y_i f(\param;\xB_i) -  (1+c_L a) \beta y_i \xB_i^\top \wB_* \bigg) &&\explain{ By $f(\param;\xB) = (2+ 2c_L a) \wB^\top \xB$}\\ 
    = \, & \frac{1}{n}\sum_{i=1}^n e^{-y_i f(\param_t;\xB_i)} \bigg(  y_i f(\param;\xB_i) -  2 \sqrt{\|\wB_t\|^2 c_L^2 + 1}  \beta y_i \xB_i^\top \wB_* \bigg) &&\explain{ By \Cref{eq: balancing}}  \\ 
    \le \, & \frac{1}{n}\sum_{i=1}^n e^{-y_i f(\param_t;\xB_i)} \bigg(  y_i f(\param;\xB_i) -  \frac{1}{2} \sqrt{\log t}  \beta \gamma_* \bigg) &&\explain{ By \Cref{eq: parameter norm bound}  }\\ 
    \le \, & e^{- \beta \gamma_* \sqrt{\log t} / 2} - \Loss (\param_t).  &&\explain{ By convexity of  $e^{-x}$}\\ 
    \le \, & e^{- \beta \gamma_* \sqrt{\log t} / 2}. 
\end{align*}
 We integral the above inequality from $t_0$ to $t$ and we have
\[
    \|\wB_T- \wB^c\|^2 \le \|\wB_{t_0} - \wB^c\|^2  + \int_{t_0}^t e^{- \beta \gamma_* \sqrt{\log t} / 2} dt.
\] 
Here we set $\beta = 2\sqrt{ \log T}/\gamma_*$, then we have  
\begin{align*}
        \|\wB_{T}- \wB^c\|^2  
        &\le \|\wB_{t_0} - \wB^c\|^2  + \int_{t_0}^T e^{- \sqrt{\log T \log t} } dt\\ 
        &\le \|\wB_{t_0} - \wB^c\|^2  + \int_{t_0}^T  \frac{1}{t} dt &&\explain{ By $\log T\log t \ge \log^2 t$ } \\ 
        &\le \|\wB_{t_0} - \wB^c\|^2 + \log T. 
\end{align*}
Hence, 
\begin{align*}
        \|\wB_t\|
        &\le \|\wB_T- \wB^c\|  + \|\wB^c\| \\ 
        &\le \|\wB_{t_0} - \wB^c\| + \sqrt{ \log T}  + \|\wB^c\| \\ 
        & \le \|\wB_{t_0}\| + 2 \|\wB^c\| + \sqrt{ \log T}\\ 
        & = \|\wB_{t_0}\| + \frac{4}{\gamma_*} \sqrt{\log T}  + \sqrt{ \log T}.  
\end{align*}
This completes the proof of \Cref{lem: Parameter norm upper bound}.
\end{proof}
Now we can give a theorem about the margin. 
\subsection{Proof of Theorem~\ref{thm: Near-two homogeneous example achieves init bound}}
\begin{proof}[Proof of Theorem~\ref{thm: Near-two homogeneous example achieves init bound}]
By \Cref{lem: toy model loss upper bound} and \Cref{lem: Parameter norm upper bound}, we have 
\[
    \frac{\log 1/\Loss (\param_t)}{\|\param_t\|_2} \ge \frac{\log t - \log \bigg(1 + \frac{\log^2 t}{4 \gamma_*^2} \bigg)}{\|\wB_{t_0} \| + (1+ 4/ \gamma_*) \sqrt{\log t} } \to \infty, \text{ as } t\to \infty.
    \]
    Therefore, there exists $T$ such that the initial bound is achieved. Now we complete the proof of \Cref{thm: Near-two homogeneous example achieves init bound}.
\end{proof}

\section{Proofs for Section \ref{sec:margin-direct-gd}}  
In this section, we will provide the proofs omitted in Section \ref{sec:margin-direct-gd}. Recall the gradient descent dynamics:
\begin{equation}
\label{eq: gd}
        \param_{t+1} = \param_t - \eta \nabla \Loss(\param_t), \quad \text{ for } t\ge 0.
\end{equation}
In order to analyze the convergence of the GD, we need to focus on the modified loss. We define 
\begin{equation}\label{eq:modified-loss}
    \ModifiedLoss(\param_t) \coloneqq e^{\homop_a( \| \param_t\|)}\Loss(\param_t),
\end{equation}
where we recall the definition of $\homop_a$ from Eq.~\eqref{eq:def-pa-gd}. Note that we assume $\deg (\homop_a) \ge 1$ if $M \ge 2$. For notational simplicity, we denote 
\begin{equation*}
    \Loss_t \coloneqq \Loss (\param_t), \, \nabla \Loss_t \coloneqq \nabla_{\param} \Loss (\param_t), \quad \ModifiedLoss_t \coloneqq \ModifiedLoss(\param_t), \, \nabla \ModifiedLoss_t \coloneqq \nabla_{\param}  \ModifiedLoss(\param_t). 
\end{equation*}
We recall the definitions of the link functions
\begin{equation}
\label{eq:link-gd}
    \phi (x) = \log \frac{1}{n x}, \quad \Phi(x)= \log \phi(x) - \frac{2}{\phi(x)},
\end{equation}
and the modified margin: 
\begin{equation}\label{eq:modified_margin_gd}
    \GDmargin(\param) \coloneqq \frac{e^{\Phi(\ModifiedLoss(\param))}}{\|\param\|^M}. 
\end{equation}
We also recall $\GFmargin$, the modified margin defined for gradient flow:
\begin{equation}\label{eq:redefine_modified_margin}
    \GFmargin \big( \param \big)  \coloneqq \frac{1}{\| \param \|^M} \left( \log \frac{1}{n \Loss ( \param )} - \homop_a \big( \| \param \| \big) \right) = \frac{\phi(\ModifiedLoss (\param) )}{\| \param \|^M}.
\end{equation}
Finally, unless otherwise stated, the constants $B$ and $B_i$'s in this section only depend on $(M, \homop, \homoq, A)$ and $(\homor, \homos)$ in \Cref{asp:strongerhomo,asp:initial-cond-gd}.

\subsection{Preliminary Results} \label{sec:proof:gd:prelim}
We collect below some important properties of $\homop_a$ that will be used in this section.
\begin{lemma}\label{lem:modified_p_a}
    For all $M \ge 1$, the function $\homop_a: [0, +\infty) \to [0, +\infty)$ is increasing, and satisfies the followings:
    \begin{itemize}
        \item [(i)] $x \homop_a' (x) + \homop' (x) \le M \homop_a (x)$.
        \item [(ii)] If $M \ge 2$, there exists a constant $B_1$, such that for all $x \ge 0$:
        \begin{align}
            & \homop_a (x) \le B_1 \left( x^{M-1} + 1 \right), \quad \homop_a' (x) \le B_1 \cdot \left(x I \{ x \le 1 \} + x^{M-2} I \{ x > 1 \} \right), \\
            & \homop_a'' (x) \le B_1 \cdot \left( I \{ x \le 1 \} + x^{M-3} I \{ x > 1 \} \right).
        \end{align}
    \end{itemize}
\end{lemma}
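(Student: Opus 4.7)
My plan is to dispose of the monotonicity first, then tackle parts (i) and (ii) by handling the two pieces $x \ge 1$ and $0 \le x < 1$ of the definition \eqref{eq:def-pa-gd} separately. Before anything else I would check that $\homop_a$ glues continuously and in $C^1$ across $x=1$: plugging $x=1$ into both branches gives the same value since the term $\frac{2a_2}{M-1} \cdot \frac{x^2+1}{2}$ equals $\frac{2a_2}{M-1}$ at $x=1$, which matches $\frac{2a_2}{M-1} x^i$ with $i=1$ from the upper branch; differentiating and evaluating at $x=1$ also gives agreement. The second derivative jumps at $x=1$, but this only affects part (ii) on $\homop_a''$ and will be absorbed into the constant $B_1$. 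Monotonicity follows immediately since all coefficients $a_i$ are nonnegative, so $\homop_a'(x) \ge 0$ on each piece (for $M=1$ the function is constant, hence non-decreasing).

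For part (i), I would repeat the algebraic cancellation used in \Cref{lem:property-pa}, adapted to the two branches. On $\{x \ge 1\}$, writing $\homop'(x) = a_1 + \sum_{i=1}^{M-1}(i+1)a_{i+1} x^i$ and plugging in the upper formula for $\homop_a$, the coefficient of $x^i$ (for $1 \le i \le M-1$) in $x\homop_a'(x) + \homop'(x) - M \homop_a(x)$ simplifies to $(i+1)a_{i+1}\bigl[\tfrac{i}{M-i} + 1 - \tfrac{M}{M-i}\bigr] = 0$, leaving the constant $a_1 - \tfrac{M a_1}{M-1/2} = -\tfrac{a_1/2}{M-1/2} \le 0$. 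On $\{0 \le x < 1\}$ the same cancellation eliminates the $x^i$ terms for $i \ge 2$, and I will be left with the residual
\begin{equation*}
g(x) := \frac{a_2}{M-1}\bigl[(2-M)x^2 - M\bigr] + 2a_2 x - \frac{a_1/2}{M-1/2}.
\end{equation*}
A short critical-point calculation shows that $g'(x) = 2a_2 \cdot \frac{M-1+(2-M)x}{M-1}$, whose unique zero lies at $x = \tfrac{M-1}{M-2} > 1$ for $M \ge 3$ and is absent for $M=2$; hence $g$ is nondecreasing on $[0,1]$, and $g(1) = -\tfrac{a_1/2}{M-1/2} \le 0$, giving (i) on the lower branch.

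For part (ii), I simply read off polynomial bounds from each branch. On $\{x \ge 1\}$, $\homop_a$, $\homop_a'$, $\homop_a''$ are polynomials of degrees at most $M-1$, $M-2$, $M-3$ respectively (with the convention that a polynomial of negative degree is identically zero, covering $M=2$), so bounding each $x^i$ by $x^{M-1}$, $x^{M-2}$, $x^{M-3}$ respectively yields the claimed bounds with a constant depending only on the $a_i$ and $M$. On $\{0 \le x < 1\}$, all three functions are bounded on the compact interval $[0,1]$; moreover $\homop_a'$ has no constant term (every summand in its lower-branch expression carries at least one factor of $x$), so $\homop_a'(x) \lesssim x$ there. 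Combining the two branches with a single constant $B_1$ gives (ii).

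The only mild subtlety will be handling the boundary behavior at $x=1$ cleanly when stating the bounds---since $\homop_a''$ has a jump there, I will either state (ii) using one-sided derivatives near $1$ or absorb the jump into $B_1$. I do not anticipate a genuine obstacle; the whole argument is algebraic bookkeeping once the two branches are kept separate.
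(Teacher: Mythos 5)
Your proposal is correct and follows the same overall structure as the paper's proof: handle $M=1$ separately, then for $M\ge 2$ exploit the coefficient cancellation on each branch of \eqref{eq:def-pa-gd}. A few observations on the details.

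On the $x\ge 1$ branch of part~(i), your computation yields the constant $-\frac{a_1/2}{M-1/2}$, which is the correct residual; the paper's own display records $-\frac{a_1}{M-1/2}$, a benign factor-of-two slip with the same sign. On the $0\le x<1$ branch your route differs from the paper in a way that is an improvement: the paper asserts the residual simplifies to $-a_2(x-1)^2-\frac{a_1}{M-1/2}$, but that closed form does not match (for $M=2$, say, the $x^2$-coefficient of the true residual is $\frac{(2-M)a_2}{M-1}=0$, not $-a_2$). Your $g(x)=\frac{a_2}{M-1}\bigl[(2-M)x^2-M\bigr]+2a_2 x-\frac{a_1/2}{M-1/2}$ is the correct residual, and the critical-point argument (the unique zero of $g'$ lies at $\frac{M-1}{M-2}>1$ for $M\ge 3$ and $g'\equiv 2a_2\ge 0$ for $M=2$, so $g$ is nondecreasing on $[0,1]$ with $g(1)=-\frac{a_1/2}{M-1/2}\le 0$) correctly closes the branch where the paper's algebra is off.

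One small omission: your discussion of part~(i) jumps straight to the two branches of $\homop_a$, which are only present for $M\ge 2$. For $M=1$ the definition reduces to $\homop_a(x)=\frac{a_1}{M-1/2}=2a_1$ and $\homop'(x)=a_1$, so the inequality is $0+a_1\le 2a_1$, which holds since $a_1\ge 0$; the paper likewise disposes of this case with a one-line remark. Your part~(ii) argument (degree count on $x\ge 1$, boundedness plus the observation that $\homop_a'$ has no constant term on $[0,1)$) is correct and matches what the paper leaves to direct calculation.
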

\begin{proof}
    The conclusion of part $(i)$ for $M = 1$ can be directly verified, since $\homop_a (x) = a_1 / (M - 1/2)$. From now on, we assume $M \ge 2$. For part $(i)$, recall that $\homop(x) = \sum_{i=0}^{M}a_i x^i$ and 
\begin{align*}
 \homop_a (x) := 
     \begin{cases}
    \sum_{i=1}^{M-1} \frac{(i+1)a_{i+1}}{M-i} x^{i} + \frac{a_1}{M-1/2}, & x \ge 1, \\
    \sum_{i=2}^{M-1} \frac{(i+1)a_{i+1}}{M-i} x^{i}  + \frac{2 a_2}{M - 1} \frac{x^2 + 1}{2} + \frac{a_1}{M-1/2}, & 0 \le x < 1.
\end{cases}
\end{align*}
Hence, 
\begin{align*}
&\homop_a^\prime (x) + \homop^\prime (x) - M \homop_a(x) \\ 
&= \begin{dcases}
\sum_{i=1}^{M-1}  \big[\frac{i(i+1)a_{i+1}}{M-i} x^{i} + (i+1)a_{i+1} x^i - \frac{M a_{i+1}}{M-i}x^i  \big] + a_1 -  \frac{Ma_1}{M-1/2}  , & x \ge 1 \\
\begin{aligned}
& \sum_{i=2}^{M-1} \big[\frac{i(i+1)a_{i+1}}{M-i} x^{i} + (i+1)a_{i+1} x^i - \frac{M a_{i+1}}{M-i}x^i  \big] \\
& \quad + \frac{2 a_2}{M - 1} x^2 + 2a_2 x - \frac{2Ma_2}{M-1} \frac{x^2 +1 }{2} + a_1 -  \frac{Ma_1}{M-1/2},
\end{aligned} & 0 \le x < 1
\end{dcases} \\ 
&= \begin{dcases}
 -\frac{a_1}{M-1/2}  , & x \ge 1 \\
 - 2a_2 \frac{(x-1)^2 }{2} -  \frac{a_1}{M-1/2}, & 0 \le x < 1
\end{dcases} \\ 
&\le -  \frac{a_1}{M-1/2} \le 0. 
\end{align*}

Part $(ii)$ can be verified via direct calculation.
\end{proof}
The next lemma characterizes the derivatives of functions related to $\phi(x)$.

\begin{lemma}
[Derivatives of functions related to $\phi(x)$]
\label{lem: Derivative of phi}
Given the function $\phi(x) = \log (1/(nx))$, we have 
\[
 \big(\log \phi (x)\big)^\prime  = - \frac{1}{x \phi(x)}, \quad \bigg( \frac{1}{\phi(x)}\bigg)^\prime  = \frac{1}{\phi(x)^2 x}, \quad \text{ and } \quad \bigg( \frac{1}{\phi(x)^2}\bigg)^\prime  = \frac{2}{\phi(x)^3 x}.
\] 
\end{lemma}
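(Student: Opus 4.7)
The plan is straightforward since all three identities reduce to a single elementary differentiation followed by the chain rule. The key observation is that $\phi(x) = \log\bigl(1/(nx)\bigr) = -\log n - \log x$, so its derivative is simply
\[
\phi'(x) \;=\; -\frac{1}{x}.
\]
All three identities in the lemma follow mechanically once this base derivative is available. Note that $\log \phi(x)$ is only defined where $\phi(x) > 0$, i.e.\ on $x \in (0, 1/n)$; this is exactly the regime in which the lemma is later applied (since $\ModifiedLoss_t$ sits in this range by \Cref{asp:initial-cond-gd}), so no additional domain discussion is needed.

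For the first identity, I would apply the chain rule to obtain $(\log \phi(x))' = \phi'(x)/\phi(x)$ and then substitute $\phi'(x) = -1/x$ to get $-1/(x\phi(x))$. For the second identity, differentiating $1/\phi(x)$ gives $-\phi'(x)/\phi(x)^2 = 1/(x\phi(x)^2)$. For the third, differentiating $1/\phi(x)^2$ yields $-2\phi'(x)/\phi(x)^3 = 2/(x\phi(x)^3)$; alternatively, one can invoke the second identity together with $(u^2)' = 2u u'$ applied to $u = 1/\phi(x)$.

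There is no real obstacle here; the proof is a one-line chain-rule computation for each of the three expressions. The lemma is a purely computational tool that will be used repeatedly in the gradient-descent analysis (for instance, when differentiating $\Phi(\ModifiedLoss_t)$ to bound discrete increments of $\GDmargin(\param_t)$), and its entire content is captured by $\phi'(x) = -1/x$ combined with standard quotient/chain rules.
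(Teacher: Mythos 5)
Your proof is correct and is the only natural route: compute $\phi'(x)=-1/x$ and apply the chain rule three times, exactly matching the identities as stated. The paper does not spell out a proof for this lemma, precisely because it is a one-line computation of the kind you give.
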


\begin{lemma}
[Convexity of $\Phi$]
\label{lem: Convexity of Phi}
For $x \in (0, 1/ne^2)$, $\Phi (x)$ is convex. 
\end{lemma}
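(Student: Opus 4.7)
The plan is to prove convexity by computing $\Phi''(x)$ directly and showing it is nonnegative on the given interval. Since $\phi(x) = \log(1/(nx))$, the constraint $x \in (0, 1/(ne^2))$ is equivalent to $\phi(x) > 2$, which is the key numerical fact I will exploit at the end.

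First, I would use the previous lemma (Derivatives of functions related to $\phi$) to write down $\Phi'(x)$ in closed form. Differentiating $\Phi(x) = \log \phi(x) - 2/\phi(x)$ yields
\[
\Phi'(x) \;=\; -\frac{1}{x\,\phi(x)} \;-\; \frac{2}{x\,\phi(x)^2}.
\]
Next I would differentiate each of the two terms. Using $\phi'(x) = -1/x$ and the quotient rule, a short calculation gives
\[
\frac{\mathrm{d}}{\mathrm{d} x}\!\left[\frac{1}{x\,\phi(x)}\right] \;=\; \frac{\phi(x)-1}{x^2\,\phi(x)^2}, \qquad
\frac{\mathrm{d}}{\mathrm{d} x}\!\left[\frac{1}{x\,\phi(x)^2}\right] \;=\; \frac{\phi(x)-2}{x^2\,\phi(x)^3},
\]
where the sign combines the contributions from differentiating the $1/x$ factor and from the chain-rule hit on $1/\phi(x)^k$. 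Substituting gives
\[
\Phi''(x) \;=\; -\frac{\phi(x)-1}{x^2\,\phi(x)^2} \;-\; \frac{2\bigl(\phi(x)-2\bigr)}{x^2\,\phi(x)^3} \;=\; \frac{\phi(x)^2 + \phi(x) - 4}{x^2\,\phi(x)^3},
\]
after placing everything over the common denominator $x^2\phi(x)^3$ and simplifying the numerator.

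Finally, for $x \in (0, 1/(ne^2))$ we have $nx < e^{-2}$, hence $\phi(x) > 2$. Therefore $\phi(x)^3 > 0$ and $\phi(x)^2 + \phi(x) - 4 > 4 + 2 - 4 = 2 > 0$, so $\Phi''(x) > 0$ on the whole interval and $\Phi$ is (strictly) convex there. The only step that requires care is the bookkeeping of signs in the two derivatives above, since a sign error would reverse the final inequality; I expect no deeper obstacle.
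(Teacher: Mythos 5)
Your overall approach is the same as the paper's: compute $\Phi''$ explicitly and show it is positive using the fact that $\phi(x) > 2$ on $(0, 1/(ne^2))$. The final formula $\Phi''(x) = \dfrac{\phi(x)^2 + \phi(x) - 4}{x^2\,\phi(x)^3}$ is correct, and the conclusion follows correctly from $\phi > 2$.

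However, your intermediate derivative formulas carry the wrong sign. Writing $f(x) = x\phi(x)$, we have $f'(x) = \phi(x) + x\phi'(x) = \phi(x) - 1$, hence
\[
\frac{\mathrm{d}}{\mathrm{d} x}\!\left[\frac{1}{x\,\phi(x)}\right] = -\frac{\phi(x)-1}{x^2\,\phi(x)^2},
\qquad
\frac{\mathrm{d}}{\mathrm{d} x}\!\left[\frac{1}{x\,\phi(x)^2}\right] = -\frac{\phi(x)-2}{x^2\,\phi(x)^3},
\]
each with an overall minus sign that you dropped. (Indeed, since $\phi > 1$ on the interval, $x\phi(x)$ is increasing, so $1/(x\phi(x))$ must be decreasing.) After the correct substitution into $\Phi'' = -\frac{\mathrm{d}}{\mathrm{d} x}[\tfrac{1}{x\phi}] - 2\,\frac{\mathrm{d}}{\mathrm{d} x}[\tfrac{1}{x\phi^2}]$, the displayed formula $\Phi'' = -\frac{\phi-1}{x^2\phi^2} - \frac{2(\phi-2)}{x^2\phi^3}$ should instead read $\Phi'' = +\frac{\phi-1}{x^2\phi^2} + \frac{2(\phi-2)}{x^2\phi^3}$. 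As written, the expression you gave simplifies to $-\frac{\phi^2+\phi-4}{x^2\phi^3}$, not $+\frac{\phi^2+\phi-4}{x^2\phi^3}$ — a second, compensating sign error. You flagged sign bookkeeping as the one delicate step and then slipped twice; the two slips happen to cancel, so the final formula and the conclusion are right, but the intermediate steps as written are inconsistent and would not survive a careful read. Incidentally, the paper avoids the common-denominator step by regrouping the positive expression as $\frac{1}{\phi} - \frac{1}{\phi^2} + 2\bigl(\frac{1}{\phi^2} - \frac{2}{\phi^3}\bigr)$ and noting each group is positive when $\phi > 2$; both finishes are equivalent.
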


\begin{proof}

    By direct calculation, we get
    \begin{equation*}
        \Phi^\prime  (x) = \left( \frac{1}{\phi(x)} + \frac{2}{\phi(x)^2} \right) \phi^\prime (x),
    \end{equation*}
    and
    \begin{align*}
        \Phi^{\prime \prime}  (x) = \, & \left( \frac{1}{\phi(x)} + \frac{2}{\phi(x)^2}  \right) \phi''(x) + \left( - \frac{1}{\phi(x)^2} - \frac{4}{\phi(x)^3} \right) \phi'(x)^2 \\
        \stackrel{(i)}{=} \, & \frac{1}{x^2} \left( \frac{1}{\phi(x)} + \frac{1}{\phi(x)^2} - \frac{4}{\phi(x)^3} \right) \\
        \stackrel{(ii)}{=} \, & \frac{1}{x^2} \left( \frac{1}{\phi(x)} - \frac{1}{\phi(x)^2} + 2 \left( \frac{1}{\phi(x)^2} - \frac{2}{\phi(x)^3} \right)  \right)  > 0,
    \end{align*}
    where $(i)$ follows from the fact that $\phi''(x) = 1/x^2 = \phi'(x)^2$, $(ii)$ is due to $\phi(x) > 2$ whenever $x < 1/ne^2$. This completes the proof.
\end{proof}

\begin{lemma}\label{lem:relation_modified_margins}
    For any $\param$ satisfying $\ModifiedLoss(\param) < 1/n$, we have $\GFmargin (\param) > \GDmargin (\param)$.
\end{lemma}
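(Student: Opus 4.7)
The plan is to reduce the inequality to a one-line algebraic comparison by substituting the definitions and then canceling the common factor $\|\param\|^{-M}$. Concretely, since
\[
\GFmargin(\param) = \frac{\phi(\ModifiedLoss(\param))}{\|\param\|^M}, \qquad \GDmargin(\param) = \frac{e^{\Phi(\ModifiedLoss(\param))}}{\|\param\|^M},
\]
the claim $\GFmargin(\param) > \GDmargin(\param)$ is equivalent to $\phi(\ModifiedLoss(\param)) > e^{\Phi(\ModifiedLoss(\param))}$. Writing $y := \phi(\ModifiedLoss(\param))$ and using the definition $\Phi(x) = \log \phi(x) - 2/\phi(x)$, we have $e^{\Phi(\ModifiedLoss(\param))} = y \cdot e^{-2/y}$, so the desired inequality becomes $y > y \, e^{-2/y}$.

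The main step is to verify that $y = \phi(\ModifiedLoss(\param))$ is strictly positive under the hypothesis. This is immediate: the assumption $\ModifiedLoss(\param) < 1/n$ gives $\phi(\ModifiedLoss(\param)) = \log(1/(n\ModifiedLoss(\param))) > 0$, and in particular the expression $\log \phi$ in the definition of $\Phi$ is well-defined. With $y > 0$, one has $-2/y < 0$ and therefore $e^{-2/y} < 1$, yielding $y > y e^{-2/y}$ as required.

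There is no real obstacle here; the lemma is essentially unpacking the relationship between the two modified margins and noting that the extra correction term $-2/\phi$ in $\Phi$ (which was introduced to handle the discretization error in the GD analysis) makes $e^\Phi$ strictly smaller than $\phi$ whenever $\phi > 0$. I would present the proof as a short chain of equalities followed by the strict inequality $e^{-2/y} < 1$, and conclude.
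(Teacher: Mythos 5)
Your proof is correct and takes essentially the same approach as the paper: both reduce to comparing $\phi(\ModifiedLoss(\param))$ with $e^{\Phi(\ModifiedLoss(\param))} = \phi(\ModifiedLoss(\param))\,e^{-2/\phi(\ModifiedLoss(\param))}$ and use $\phi(\ModifiedLoss(\param)) > 0$ (equivalent to $\Phi < \log\phi$) to conclude. Your write-up simply makes the algebra explicit where the paper states it in one line.
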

\begin{proof}
    Recall that
    \begin{equation*}
        \GDmargin (\param) = \frac{\exp(\Phi(\ModifiedLoss(\param)))}{\| \param \|^M}, \quad \GFmargin (\param) = \frac{\phi(\ModifiedLoss(\param))}{\| \param \|^M}.
    \end{equation*}
    The result follows directly from the fact that $\Phi < \log \phi$ when $n \ModifiedLoss(\param) < 1$.
\end{proof}

We next give some a priori estimates on $f$ and $\homop_a$.
\begin{lemma}
[Homogeneous constant]
\label{lem: Homogeneous constant}
Under \Cref{asp:initial-cond-gd}, there exists some constant $B_2$, such that for any $\param$:
\begin{equation}
\label{eq: homogeneous-constant}
    \|\nabla f(\param ) \| \le B_2 \left( \|\param \|^{M-1} + 1 \right), \quad \| \nabla^2 f(\param )\| \le B_2 \left( \| \param \|^{(M-2)_+} + 1 \right).  
\end{equation}
\end{lemma}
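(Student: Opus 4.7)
Both bounds are essentially verbatim consequences of the standing assumptions, and the only real work is bookkeeping of a few edge cases in $M$. I will treat the two inequalities separately.

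For the gradient bound, apply part (B) of \Cref{def:nearhomo} (which is invoked via \Cref{asp:nearhomo}): for every $\xB \in (\xB_i)_{i=1}^n$ and every $\mathbf{h} \in \partial_{\param} f(\param;\xB)$,
\[
    \| \mathbf{h} \| \le \homoq'(\|\param\|).
\]
Since $f$ is continuously differentiable in $\param$ (\Cref{asp:initial-cond-gd}), $\partial_{\param} f(\param;\xB) = \{\nabla f(\param;\xB)\}$, so this reads $\|\nabla f(\param;\xB)\| \le \homoq'(\|\param\|)$. By \Cref{asp:nearhomo}, $\deg \homoq \le M$, hence $\homoq'$ is a polynomial of degree at most $M-1$ with non-negative coefficients. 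Any such polynomial $\sum_{i=0}^{M-1} b_i x^i$ with $b_i\ge 0$ satisfies $\sum_{i=0}^{M-1} b_i x^i \le \big(\sum_{i=0}^{M-1} b_i\big) (x^{M-1}+1)$ for all $x \ge 0$, giving the desired $\|\nabla f(\param;\xB)\| \le B_2(\|\param\|^{M-1}+1)$ with $B_2$ a constant depending only on the coefficients of $\homoq$.

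For the Hessian bound, we simply read off the pointwise estimate stated in \Cref{asp:initial-cond-gd} and check the matching with $(M-2)_+$ in each case. If $M=1$, the assumption directly gives $\|\nabla^2 f(\param;\xB)\| \le A$, and since $(M-2)_+=0$ we have $B_2(\|\param\|^{(M-2)_+}+1) = 2B_2 \ge A$ provided $B_2 \ge A$. If $M=2$, the assumption yields $\|\nabla^2 f(\param;\xB)\| \le A(\|\param\|^0+1) = 2A$, again matching $(M-2)_+ = 0$. If $M\ge 3$, we have $(M-2)_+=M-2$ and the assumption gives the bound $A(\|\param\|^{M-2}+1)$ directly. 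Absorbing all prefactors into a single constant $B_2$ depending only on $(M,\homop,\homoq,A)$ yields the stated inequality.

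The only subtlety worth flagging is ensuring $B_2$ is chosen large enough to simultaneously serve both inequalities and both the $\|\param\|\le 1$ and $\|\param\|>1$ regimes; this is a finite maximization over the finitely many coefficients of $\homoq$ and the constant $A$, hence presents no obstacle. No continuation/induction argument in $\|\param\|$ (of the sort used in \Cref{lem:near-homogeneity}) is needed here, because both bounds come from hypotheses that are already pointwise in $\param$.
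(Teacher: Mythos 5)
Your proof is correct and matches what the paper (implicitly) does: the paper states this lemma without a proof block precisely because it is a direct readoff of Definition \ref{def:nearhomo}(A2) for the gradient — with $\homoq'$ of degree $\le M-1$ and non-negative coefficients bounded by $B_2(x^{M-1}+1)$ — and of the Hessian hypothesis in \Cref{asp:initial-cond-gd}, unified by the $(M-2)_+$ notation. Your case analysis in $M$ and the observation that no continuation argument is needed are both accurate.
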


\begin{lemma}
[Gradient and Hessian bound for $\Loss$]
\label{lem:gradient_hessian_bound_L}
Under \Cref{asp:initial-cond-gd}, there exists some constant $B_3$:
\begin{equation}
\label{eq: bound of gradient and hessian 2}
   \|\nabla \Loss (\param) \| \le B_3 \Loss (\param ) \left( \| \thetaB \|^{M-1} + 1 \right), \quad \| \nabla^2 \Loss (\param) \| \le B_3 \Loss ( \thetaB) \left( \| \thetaB \|^{2M - 2} + 1 \right).
\end{equation}
\end{lemma}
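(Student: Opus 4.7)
The plan is to compute $\nabla \Loss$ and $\nabla^2 \Loss$ directly from the definition $\Loss(\param)=\frac{1}{n}\sum_{i=1}^n e^{-y_i f(\param;\xB_i)}$, and then apply the already-established a priori bounds in \Cref{lem: Homogeneous constant} term by term. Nothing deeper than the chain rule, the triangle inequality, and the elementary observation $(a+b)^2\le 2a^2+2b^2$ should be needed.

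First I would differentiate once to obtain
\[
\nabla \Loss(\param)=-\frac{1}{n}\sum_{i=1}^n e^{-y_i f(\param;\xB_i)}\,y_i\,\nabla f(\param;\xB_i),
\]
and bound it by
\[
\|\nabla \Loss(\param)\|\le \frac{1}{n}\sum_{i=1}^n e^{-y_i f(\param;\xB_i)}\|\nabla f(\param;\xB_i)\|
\le B_2\bigl(\|\param\|^{M-1}+1\bigr)\Loss(\param),
\]
using \Cref{lem: Homogeneous constant}. This yields the first inequality with $B_3\ge B_2$.

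Next I would differentiate once more, obtaining the Hessian
\[
\nabla^2 \Loss(\param)=\frac{1}{n}\sum_{i=1}^n e^{-y_i f(\param;\xB_i)}\Bigl[\nabla f(\param;\xB_i)\,\nabla f(\param;\xB_i)^\top - y_i\,\nabla^2 f(\param;\xB_i)\Bigr],
\]
(using $y_i^2=1$). Taking operator norms and applying \Cref{lem: Homogeneous constant} twice yields, for each $i$,
\[
\|\nabla f\|^2+\|\nabla^2 f\|\le B_2^2\bigl(\|\param\|^{M-1}+1\bigr)^2+B_2\bigl(\|\param\|^{(M-2)_+}+1\bigr).
\]
Expanding $(\|\param\|^{M-1}+1)^2\le 2\|\param\|^{2M-2}+2$ and observing that $\|\param\|^{(M-2)_+}\le 1+\|\param\|^{2M-2}$ in all cases ($M=1$ gives $1$ directly; for $M\ge 2$ split on $\|\param\|\le 1$ or $>1$), everything is bounded by a constant multiple of $\|\param\|^{2M-2}+1$. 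Summing over $i$ with the exponential weights then gives the claimed Hessian bound.

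The only genuine obstacle is the mild case split between $M=1$ and $M\ge 2$ caused by the $(M-2)_+$ in \Cref{lem: Homogeneous constant}; handling this cleanly amounts to the elementary inequality above, so no delicate analysis is required. The constant $B_3$ can finally be taken as any value $\ge 2B_2^2+2B_2$ (with the precise numerical factor absorbed).
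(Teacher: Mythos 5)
Your proposal is correct and matches the paper's argument essentially line for line: both differentiate $\Loss$ twice from its definition, bound each term by \Cref{lem: Homogeneous constant}, and then use an elementary inequality ($(a+b)^2\le 2a^2+2b^2$, which the paper calls ``AM-GM'') to collect everything under $\|\param\|^{2M-2}+1$. You are slightly more explicit than the paper about the $M=1$ versus $M\ge 2$ case split for $(M-2)_+$, but the underlying reasoning is identical.
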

\begin{proof}
According to \Cref{lem: Homogeneous constant}, we obtain that
\begin{align*}
    \| \nabla \Loss (\param) \| = \, & \left\| \frac{1}{n} \sum_{i=1}^n e^{-\bar f_i(\param )} \nabla \bar f_i(\param ) \right\| \le B_2 \Loss (\param ) \left( \|\param \|^{M-1} + 1 \right).\\ 
    \| \nabla^2 \Loss (\param) \| = \, & \left\| \frac{1}{n} \sum_{i=1}^n e^{-\bar f_i(\param )} (\nabla  \bar f_i(\param ) \nabla \bar f_i(\param )^\top -\nabla^2 \bar f_i(\param ) \right\| \\
    \le \, & \Loss (\param ) \left( B_2^2 (\|\param \|^{M-1} + 1)^2 + B_2 (\|\param \|^{(M-2)_+} + 1) \right).
\end{align*}
Using AM-GM inequality, the desired result follows immediately.
\end{proof}
\begin{lemma}[Gradient and Hessian bound for $\ModifiedLoss$]\label{lem:gradient_hessian_bound}
Under \Cref{asp:initial-cond-gd}, there exists a constant $B_4$ such that for any $\param$:
\begin{equation}
\label{eq:gradient_hessian_bound}
   \|\nabla \ModifiedLoss (\param) \| \le B_4 \ModifiedLoss(\param ) \left( \| \param \|^{M-1} + 1 \right), \quad \| \nabla^2 \ModifiedLoss (\param) \| \le B_4 \ModifiedLoss(\param ) \left( \| \param \|^{2M-2} + 1 \right). 
\end{equation}
\end{lemma}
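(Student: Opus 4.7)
The plan is to apply the product rule to $\ModifiedLoss(\param) = e^{\homop_a(\|\param\|)} \Loss(\param)$ and bound each resulting term using the two lemmas already proved: \Cref{lem:modified_p_a} for the derivatives of $\homop_a$, and \Cref{lem:gradient_hessian_bound_L} for the derivatives of $\Loss$. Writing $g(\param) := e^{\homop_a(\|\param\|)}$, so that $\ModifiedLoss = g \cdot \Loss$, I first compute
\[
    \nabla \ModifiedLoss = \ModifiedLoss \cdot \homop_a'(\|\param\|) \tilde{\param} + g \cdot \nabla \Loss,
\]
where $\tilde{\param} = \param/\|\param\|$. For the case $M=1$ the function $\homop_a$ is a constant, so the first term vanishes and the bound follows immediately from \Cref{lem:gradient_hessian_bound_L}. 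For $M \ge 2$, I would use part $(ii)$ of \Cref{lem:modified_p_a} to bound $\homop_a'(\|\param\|) \le B_1(\|\param\|^{M-1}+1)$ (verifying both the $\|\param\|\le 1$ and $\|\param\|>1$ regimes), combine with $\|\nabla \Loss\| \le B_3 \Loss(\|\param\|^{M-1}+1)$, and absorb $g\Loss = \ModifiedLoss$ into a single constant.

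For the Hessian, the product rule gives
\[
    \nabla^2 \ModifiedLoss = \Loss \, \nabla^2 g + \nabla g \, (\nabla \Loss)^\top + \nabla \Loss \, (\nabla g)^\top + g \, \nabla^2 \Loss,
\]
and a direct calculation yields
\[
    \nabla^2 g = g \left[ \bigl( \homop_a'(\|\param\|)^2 + \homop_a''(\|\param\|) \bigr) \tilde{\param}\tilde{\param}^\top + \frac{\homop_a'(\|\param\|)}{\|\param\|} \bigl( I - \tilde{\param}\tilde{\param}^\top \bigr) \right].
\]
Each of the three scalar factors $\homop_a'(\|\param\|)^2$, $\homop_a''(\|\param\|)$, and $\homop_a'(\|\param\|)/\|\param\|$ is bounded by $B_1'(\|\param\|^{2M-2}+1)$ for some constant $B_1'$ using \Cref{lem:modified_p_a}$(ii)$; the cross terms $\nabla g \,(\nabla \Loss)^\top$ combine the same $(\|\param\|^{M-1}+1)$ factors via $(x+1)(y+1)\le 2(xy+1)$; and the last term is controlled directly by \Cref{lem:gradient_hessian_bound_L}. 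Summing yields the claimed bound $\|\nabla^2 \ModifiedLoss\| \le B_4 \ModifiedLoss (\|\param\|^{2M-2}+1)$.

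The only subtle point, and the main obstacle, is the term $\homop_a'(\|\param\|)/\|\param\|$ appearing in $\nabla^2 g$, which is singular at the origin for a generic polynomial $\homop_a$. This is precisely the reason for the modification in \Cref{eq:def-pa-gd}, where on the interval $0 \le x < 1$ the linear term in the original $\homop_a$ is replaced by $(x^2+1)/2$: this guarantees $\homop_a'(x)/x$ stays bounded as $x \to 0^+$ (it tends to $2a_2/(M-1)$), and part $(ii)$ of \Cref{lem:modified_p_a} then delivers the uniform bound $\homop_a'(x)/x \le B_1(1 + x^{M-2})$ for $M\ge 2$. Once this a priori estimate is in hand, the rest of the proof is bookkeeping: collect all constants into a single $B_4$ depending only on $(M, \homop, \homoq, A)$.
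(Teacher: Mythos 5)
Your proposal is correct and follows essentially the same route as the paper's proof: apply the product rule to $\ModifiedLoss = e^{\homop_a(\|\param\|)}\Loss$, use \Cref{lem:modified_p_a}$(ii)$ for the derivatives of $\homop_a$ and \Cref{lem:gradient_hessian_bound_L} for the derivatives of $\Loss$, and collect constants. You also correctly pinpoint the one genuinely delicate step — the term $\homop_a'(\|\param\|)/\|\param\|$ in $\nabla^2 g$ — and correctly identify that the redefinition of $\homop_a$ on $[0,1)$ in \Cref{eq:def-pa-gd} (replacing the linear term with $(x^2+1)/2$) exists precisely to keep this quantity bounded near the origin, which is the same observation the paper relies on.

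Two minor nits that don't affect correctness: the limit of $\homop_a'(x)/x$ as $x\to 0^+$ is $2a_2/(M-1)$ only when $M=2$; for $M\ge 3$ the constant terms from the higher-degree coefficients $a_3,\ldots,a_M$ also contribute, though the limit remains finite. Also, \Cref{lem:modified_p_a}$(ii)$ actually gives the slightly sharper bound $\homop_a'(x)/x \le B_1(1+x^{(M-3)_+})$ rather than $B_1(1+x^{M-2})$; your weaker bound still suffices for the final $\|\param\|^{2M-2}$ estimate, so this is immaterial.
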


\begin{proof}
We first upper bound $\| \nabla \ModifiedLoss(\param )\|$. By \Cref{lem: Homogeneous constant} and \Cref{lem:gradient_hessian_bound_L}, we have
\begin{align*}
    \| \nabla \ModifiedLoss(\param )\| = \, & \Big\|e^{ \homop_a(\|\param \|)} \nabla \Loss (\param ) + \Loss(\param ) e^{\homop_a(\|\param \|)} \homop_a^\prime(\|\param \|) \tilde \param  \Big\| \\
    \le \, & e^{ \homop_a(\|\param \|)} \left\| \nabla \Loss (\param ) \right\| + \ModifiedLoss(\param) \homop_a^\prime (\|\param \|) \\
    \le \, & B_3 \ModifiedLoss (\param) \left( \| \param \|^{M-1} + 1 \right) + B_1 \ModifiedLoss (\param) \left( \| \param \|^{(M-2)_+} + 1 \right) \\
    \le \, & B_4 \ModifiedLoss(\param ) \left( \| \param \|^{M-1} + 1 \right),
\end{align*}
where $B_4$ only depends on $B_3$ and $B_1$. For $\| \nabla^2 \ModifiedLoss(\param )\|$, note that
\begin{align*}
    \| \nabla^2 \ModifiedLoss(\param ) \| 
    = \, & \Big \| e^{\homop_a(\|\param \|)} \nabla^2 \Loss(\param) + 2 e^{\homop_a(\|\param \|)} \homop_a' (\| \param \|) \nabla \Loss(\param) \tilde{\param}^\top \\
    & + e^{\homop_a(\|\param \|)} \Loss (\param) \left( \homop_a'' (\| \param \|) + \homop_a' (\| \param \|)^2 \right) \tilde{\param} \tilde{\param}^\top \\
    & + e^{\homop_a(\|\param \|)} \Loss(\param) \homop_a' (\| \param \|) \left( I - \tilde{\param} \tilde{\param}^\top \right) / \| \param \| \Big\| \\
    \le \, & e^{\homop_a(\|\param \|)} \left\| \nabla^2 \Loss(\param) \right\| + 2 e^{\homop_a(\|\param \|)} \homop_a' (\| \param \|) \left\| \nabla \Loss(\param) \right\| \\
    & + \ModifiedLoss (\param) \left( \homop_a'' (\| \param \|) + \homop_a' (\| \param \|)^2 \right) + \ModifiedLoss(\param) \frac{\homop_a' (\| \param \|)}{\| \param \|} \\
    \le \, & B_3 \ModifiedLoss (\param) \left( \| \thetaB \|^{2M - 2} + 1 \right) + 2 B_1 B_3 \ModifiedLoss (\param) \left( \| \thetaB \|^{M-1} + 1 \right) \left( \| \thetaB \|^{(M-2)_+} + 1 \right) \\
    & + \ModifiedLoss (\param) \left( B_1 \left( \|\param \|^{(M-3)_+} + 1 \right) + B_1^2 \left( \|\param \|^{(M-2)_+} + 1 \right)^2 \right) + B_1 \ModifiedLoss(\param) \left( \|\param \|^{(M-3)_+} + 1 \right).
\end{align*}
Hence, there exists a constant $B_4$ only depending on $B_1$ and $B_3$, such that
\begin{equation}
    \| \nabla^2 \ModifiedLoss(\param ) \| \le \, B_4 \ModifiedLoss (\param) \left( \| \thetaB \|^{2M - 2} + 1 \right).
\end{equation}
This completes the proof.
\end{proof}

We introduce a crucial quantity, $v_t$, defined as the inner product of the gradient and the negative weight vector: 
\begin{equation}
\label{eq: vt}
    v_t \coloneqq  \langle \nabla \Loss(\param_t), - \param_t \rangle.  
\end{equation}
We have the following bound for $v_t$, recall that $\rho_t = \| \param_t \|$.

\begin{lemma}
[Bound of $v_t$]
\label{lem: Bound of v_t} Under \Cref{asp:initial-cond-gd}, for any $\param_t$ satisfying $\Loss (\param_t) < (1/n) e^{-\homop_a(\rho_t)}$, we have 
\[
   M \Loss (\param_t) \log \frac{1}{\Loss (\param_t)} + \homop'(\rho_t) \Loss (\param_t) \ge  v_t \ge M \Loss (\param_t) \log \frac{1}{n\Loss (\param_t)} - \homop^{\prime}(\rho_t)\Loss (\param_t)>0.
\]
\end{lemma}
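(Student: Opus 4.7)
The plan is to mimic the argument used for gradient flow in \Cref{lem:NH-rho-increase} and the monotonicity proof of \Cref{thm:gamma-a-increase}, now carried out for a single iterate rather than along a flow. The starting point is the identity
\[
v_t \;=\; \langle -\nabla\Loss(\param_t),\param_t\rangle \;=\; \frac{1}{n}\sum_{i=1}^n e^{-\bar f_i(\param_t)}\langle \nabla \bar f_i(\param_t),\param_t\rangle,
\]
combined with the near-homogeneity bound (A1) from \Cref{def:nearhomo}, which gives
\[
M\bar f_i(\param_t)-\homop'(\rho_t) \;\le\; \langle \nabla \bar f_i(\param_t),\param_t\rangle \;\le\; M\bar f_i(\param_t)+\homop'(\rho_t).
\]
Averaging with weights $e^{-\bar f_i(\param_t)}$ reduces the two inequalities to bounds on $\tfrac{1}{n}\sum_i e^{-\bar f_i(\param_t)}\bar f_i(\param_t)$.

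For the lower bound I would invoke \Cref{lem:f-min}, which gives $\bar f_i(\param_t)\ge \phi(\Loss_t)=\log\tfrac{1}{n\Loss_t}$, and note that the assumption $\Loss_t<e^{-\homop_a(\rho_t)}/n$ makes $\phi(\Loss_t)>\homop_a(\rho_t)\ge 0$; thus $e^{-\bar f_i}\bar f_i\ge e^{-\bar f_i}\phi(\Loss_t)$ pointwise, so $\tfrac{1}{n}\sum e^{-\bar f_i}\bar f_i\ge \Loss_t\,\phi(\Loss_t)$. This yields
\[
v_t \;\ge\; M\Loss_t\log\tfrac{1}{n\Loss_t}-\homop'(\rho_t)\Loss_t,
\]
which is the desired lower bound. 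Positivity then follows from \Cref{lem:modified_p_a}(i): $\homop'(\rho_t)\le M\homop_a(\rho_t)-\rho_t\homop_a'(\rho_t)\le M\homop_a(\rho_t)<M\phi(\Loss_t)$, so the right-hand side is strictly positive.

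For the upper bound the key is to bound $\tfrac{1}{n}\sum e^{-\bar f_i}\bar f_i$ from above. Setting $u_i=e^{-\bar f_i}$, this sum equals $\tfrac{1}{n}\sum -u_i\log u_i$, and the function $u\mapsto -u\log u$ is concave on $(0,\infty)$, so Jensen's inequality gives
\[
\frac{1}{n}\sum_{i=1}^n (-u_i\log u_i)\;\le\;-\bar u\log \bar u\;=\;-\Loss_t\log\Loss_t,
\]
where $\bar u=\tfrac{1}{n}\sum u_i=\Loss_t$. Combining with the upper half of the near-homogeneity bound immediately produces
\[
v_t \;\le\; M\Loss_t\log\tfrac{1}{\Loss_t}+\homop'(\rho_t)\Loss_t.
\]
There is no serious obstacle here; the only subtlety I anticipate is making sure the sign condition needed for the pointwise inequality $e^{-\bar f_i}\bar f_i\ge e^{-\bar f_i}\phi(\Loss_t)$ is properly justified via the strong separability condition, which is exactly why the hypothesis $\Loss(\param_t)<(1/n)e^{-\homop_a(\rho_t)}$ appears in the statement.
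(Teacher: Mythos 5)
Your proof is correct and follows the same route as the paper: the upper bound via Jensen's inequality applied to the concave map $u\mapsto -u\log u$, the lower bound via $\bar f_i(\param_t)\ge\phi(\Loss_t)$ from \Cref{lem:f-min}, and positivity via \Cref{lem:modified_p_a}(i). One small clarification: the pointwise substitution $e^{-\bar f_i}\bar f_i\ge e^{-\bar f_i}\phi(\Loss_t)$ needs only $\bar f_i\ge\phi(\Loss_t)$ and $e^{-\bar f_i}>0$ (no sign condition on $\phi(\Loss_t)$), so the strong separability hypothesis is used solely to establish the final strict positivity $v_t>0$, not the lower bound itself.
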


\begin{proof}
We first prove the first ``$\ge$". Note that
\begin{align*}
    v_t = \, & \langle -\nabla \Loss (\param_t),\param_t \rangle = \frac{1}{n}\sum_{i=1}^n e^{-\bar f_i(\param_t)} \langle \nabla \bar f_i(\param_t), \param_t \rangle &&\explain{ Since $-\nabla \Loss_t = \sum_{i=1}^n e^{-y_if_i(t)} y_i \nabla f_i(t)$ } \\
    \le \, & \frac{1}{n}\sum_{i=1}^n e^{-\bar f_i(\param_t)} \left( M \bar f_i(\param_t) + \homop'(\rho_t) \right) && \explain{By \Cref{asp:nearhomo} } \\
    = \, & M \frac{1}{n} \sum_{i=1}^{n} e^{-\bar f_i(\param_t)} \bar f_i(\param_t) + \homop'(\rho_t) \Loss (\param_t) \\
    \le \, & - M \Loss (\param_t) \log \Loss (\param_t) + \homop' (\rho_t) \Loss (\param_t). && \explain{By Jensen's inequality}
\end{align*}
For the second ``$\ge$", by \Cref{lem:f-min}, we have 
\begin{equation}
\label{eq:f_min_lbbb}
        \bar f_i(\param_t) \ge \bar f_{\min}(\param_t) \ge \log \frac{1}{n\Loss (\param_t)}. 
\end{equation}
Furthermore, 
\begin{align*}
    v_t
    & = \langle -\nabla \Loss (\param_t),\param_t \rangle = \frac{1}{n}\sum_{i=1}^n e^{-\bar f_i(\param_t)} y_i  \langle \nabla f_i(\param_t), \param_t \rangle &&\explain{ Since $-\nabla \Loss_t = \sum_{i=1}^n e^{-y_if_i(t)} y_i \nabla f_i(t)$ } \\
    & \ge M \cdot \frac{1}{n}\sum_{i=1}^n e^{-\bar f_i(\param_t)} \bar f_i(\param_t) - \frac{1}{n}\sum_{i=1}^n e^{-\bar f_i(\param_t)}\homop^{\prime}(\rho_t) &&\explain{ By \Cref{asp:nearhomo} } \\
    & = M \cdot\frac{1}{n}\sum_{i=1}^n e^{-\bar f_i(\param_t)} \bar f_i(\param_t) - \Loss (\param_t) \homop^{\prime}(\rho_t) \\ 
    & \ge M \cdot\frac{1}{n}\sum_{i=1}^n e^{-\bar f_i(\param_t)} \log \frac{1}{n \Loss (\param_t)}- \Loss (\param_t) \homop^{\prime}(\rho_t) &&\explain{ By Eq.~\Cref{eq:f_min_lbbb}} \\ 
    & =   M \Loss (\param_t) \log \frac{1}{n\Loss (\param_t)} - \homop^{\prime}(\rho_t)\Loss (\param_t). 
\end{align*}
Note that since $\rho_t \ge 0$, we have 
\begin{align*}
    \Loss (\param_t) < e^{-\homop_a(\rho_t)}/n \Longleftrightarrow M\log \frac{1}{n \Loss (\param_t)} & >   M \homop_a \big(\rho_t \big).
\end{align*}
By \Cref{lem:modified_p_a}, we have 
\[
    M \homop_a(x) - \homop' (x) \ge x \homop_a^\prime(x) > 0. 
\]
As a consequence, 
\[
     M\log \frac{1}{n \Loss (\param_t)}  >  M \homop_a (\rho_t) \ge \homop^\prime\big(\rho_t \big) \implies v_t >0. 
\]
This completes the proof of \Cref{lem: Bound of v_t}. 
\end{proof}

\subsection{Margin Improvement: Proof of \texorpdfstring{\Cref{thm: Margin improving and convergence-gd}}{Theorem 6.1}}\label{sec:margin_improve_gd}
This section is devoted to the proof of the margin improvement part of \Cref{thm: Margin improving and convergence-gd}. We first establish the desired result under another set of assumptions in the following theorem, and then show that \Cref{asp:initial-cond-gd} implies this set of assumptions.
\begin{theorem}\label{thm:combined_argument}
    Let $f(\param;\xB)$ be a twice-differentiable network (with respect to $\param$) satisfying \Cref{asp:nearhomo} with $(M,\homop, \homoq)$, and that
    \begin{equation*}
        \left\| \nabla_{\param}^2 f(\param; \xB) \right\| \le A \left( \| \param \|^{(M - 2)_+} + 1 \right)
    \end{equation*}
    for some constant $A > 0$. Further, assume that
    \begin{equation}
    \label{eq:init-GD-bound}
        \ModifiedLoss_s < \frac{1}{n e^2} \Longleftrightarrow \Loss_s < \frac{1}{n e^2} \exp \left( - \homop_a (\rho_s) \right).
    \end{equation}
    Recall $B_3$ and $B_4$ from \Cref{lem:gradient_hessian_bound_L} and \Cref{lem:gradient_hessian_bound}, we define for $\eta, r > 0$:
    \begin{equation*}
        B_5 (\eta, r) \coloneqq \, \left( r + \eta \ModifiedLoss_s B_3 e^{-\homop_a(r)} \left( r^{M-1} + 1 \right) \right)^{2M - 2} + 1,
    \end{equation*}
    and for $t \ge s$:
    \begin{align*}
        R_1 (t) \coloneqq B_3^2 \Loss_t (\rho_t^{M - 1} + 1)^2, \quad R_2 (t) \coloneqq B_4 B_5 (\eta, \rho_t) \Loss_t \phi(\Loss_t).
    \end{align*}
    Further, assume that
    \begin{equation*}
        \eta \sup_{\rho_t \ge \rho_s, \ModifiedLoss_t \le \ModifiedLoss_s} R_1 (t) \le \frac{1}{2}, \quad \eta \sup_{\rho_t \ge \rho_s, \ModifiedLoss_t \le \ModifiedLoss_s} R_2 (t)  \le \frac{1}{2}.
    \end{equation*}
    For all $t \ge s$, $t \in \mathbb{N}$, we interpolate between $\param_t$ and $\param_{t+1}$ by defining
    \begin{equation*}
        \param_{t+\alpha} = \param_t + \alpha \left( \param_{t+1} - \param_t \right) = \param_t - \alpha \eta \nabla \Loss (\param_t)
    \end{equation*}
    for $\alpha \in [0, 1]$. Then, we have for all $t \ge s$ and $\alpha \in [0, 1]$, $v_t > 0$, and
    \begin{itemize}
        \item[(1)] $ \GFmargin (\param_{t+\alpha}) > \GDmargin (\param_{s}) . $
        \item[(2)] $ 2 \eta \alpha v_t \le \rho_{t+\alpha}^2 - \rho_t^2 \le 2 \eta \alpha v_t \Big ( 1+ \frac{\eta \alpha R_1 (t) }{2M \phi(\ModifiedLoss_t)} \Big)$. 
        \item[(3)] $ \ModifiedLoss_{t+\alpha} - \ModifiedLoss_t \le \, - \alpha \eta \left( 1- \alpha \eta R_2 (t) \right) M \ModifiedLoss_t \phi (\ModifiedLoss_t) v_t^{-1} \|\nabla \Loss_t\|^2$.
        \item[(4)] $ \log \GDmargin(\param_{t+\alpha}) - \log \GDmargin(\param_t) \ge \frac{M\rho_t^2\| \partial_\perp \Loss_t \|_2^2}{v_t^2 } \log \frac{\rho_{t+\alpha}}{\rho_t} . $
\end{itemize}
\end{theorem}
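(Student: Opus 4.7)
We prove claims (1)--(4) jointly by induction on the integer time $t \ge s$, combined with a continuous-extension (bootstrap) argument over each interval $[t,t+1]$. The induction hypothesis at integer time $t$ is that $\ModifiedLoss_t \le \ModifiedLoss_s$, $\rho_t \ge \rho_s$, and $\GFmargin(\param_t) > \GDmargin(\param_s)$; the base case $t=s$ follows from \eqref{eq:init-GD-bound} together with \Cref{lem:relation_modified_margins}. For the inductive step, let
\[
\alpha^{\ast} \;=\; \sup\bigl\{\alpha \in [0,1] \;:\; \ModifiedLoss_{t+\beta} \le \ModifiedLoss_s,\; \rho_{t+\beta} \ge \rho_s,\; \GFmargin(\param_{t+\beta}) > \GDmargin(\param_s)\text{ for all }\beta\in[0,\alpha]\bigr\}.
\]
Continuity of $\param_{t+\alpha}$ in $\alpha$ ensures $\alpha^{\ast} > 0$. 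Under these bootstrap hypotheses, $\Loss_{t+\alpha} < e^{-\homop_a(\rho_{t+\alpha})}/n$, so \Cref{lem: Bound of v_t} applies and yields $v_{t+\alpha} > 0$. The plan is to establish (2)--(4) for $\alpha \in [0,\alpha^{\ast})$ and then use (4) to upgrade the bootstrap hypotheses to strict inequalities, forcing $\alpha^{\ast}=1$ by continuity.

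\emph{Claim (2)} follows from the identity $\rho_{t+\alpha}^2 - \rho_t^2 = 2\alpha\eta v_t + \alpha^2\eta^2\|\nabla\Loss_t\|^2$: the lower bound is immediate, and the upper bound combines $\|\nabla\Loss_t\|^2 \le R_1(t)\Loss_t$ (\Cref{lem:gradient_hessian_bound_L}) with the sharpened inequality $v_t \ge M\Loss_t\phi(\ModifiedLoss_t)$, obtained from \Cref{lem: Bound of v_t} by absorbing $\homop'(\rho_t)$ into $M\homop_a(\rho_t)$ via \Cref{lem:modified_p_a}(i). \emph{Claim (3)} is the technical heart. Taylor's theorem with integral remainder gives
\[
\ModifiedLoss_{t+\alpha}-\ModifiedLoss_t \;=\; -\alpha\eta\langle\nabla\ModifiedLoss_t,\nabla\Loss_t\rangle + \eta^2\!\int_0^\alpha (\alpha-s)\,\nabla\Loss_t^{\top}\nabla^2\ModifiedLoss(\param_{t+s})\,\nabla\Loss_t\,\mathrm{d} s.
\]
Expanding $\nabla\ModifiedLoss_t = e^{\homop_a(\rho_t)}(\nabla\Loss_t + \Loss_t\homop_a'(\rho_t)\tilde{\param}_t)$ and using $\langle\tilde{\param}_t,\nabla\Loss_t\rangle = -v_t/\rho_t$ gives $\langle\nabla\ModifiedLoss_t,\nabla\Loss_t\rangle = e^{\homop_a(\rho_t)}\|\nabla\Loss_t\|^2 - \ModifiedLoss_t\homop_a'(\rho_t)v_t/\rho_t$. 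Showing this exceeds $M\ModifiedLoss_t\phi(\ModifiedLoss_t)v_t^{-1}\|\nabla\Loss_t\|^2$ reduces, via Cauchy--Schwarz $\|\nabla\Loss_t\|\ge v_t/\rho_t$, to the \emph{further} sharpened bound $v_t \ge M\Loss_t\phi(\ModifiedLoss_t) + \Loss_t\rho_t\homop_a'(\rho_t)$, which again follows from \Cref{lem: Bound of v_t} and \Cref{lem:modified_p_a}(i) but now retaining the $\rho_t\homop_a'(\rho_t)$ correction. The quadratic remainder is controlled using \Cref{lem:gradient_hessian_bound}, the bootstrap $\ModifiedLoss_{t+s}\le\ModifiedLoss_s$, and the crude path estimate $\rho_{t+s}\le\rho_t+\eta B_3\ModifiedLoss_s e^{-\homop_a(\rho_t)}(\rho_t^{M-1}+1)$, yielding $\|\nabla^2\ModifiedLoss(\param_{t+s})\|\le B_4\ModifiedLoss_s B_5(\eta,\rho_t)$ and producing the $(1-\alpha\eta R_2(t))$ multiplicative factor.

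\emph{Claim (4)} combines (2) and (3) with the convexity of $\Phi$ on $(0, 1/ne^2)$ (\Cref{lem: Convexity of Phi}). Writing $\log\GDmargin(\param_{t+\alpha}) - \log\GDmargin(\param_t) = [\Phi(\ModifiedLoss_{t+\alpha})-\Phi(\ModifiedLoss_t)] - M\log(\rho_{t+\alpha}/\rho_t)$, the convexity inequality $\Phi(\ModifiedLoss_{t+\alpha})-\Phi(\ModifiedLoss_t) \ge \Phi'(\ModifiedLoss_t)(\ModifiedLoss_{t+\alpha}-\ModifiedLoss_t)$ with $\Phi'(x) = -(1+2/\phi(x))/(x\phi(x))$ (\Cref{lem: Derivative of phi}) together with claim (3) produces a lower bound proportional to $\alpha\eta(1-\alpha\eta R_2(t))M(1+2/\phi(\ModifiedLoss_t))v_t^{-1}\|\nabla\Loss_t\|^2$; meanwhile $\log(\rho_{t+\alpha}/\rho_t)\le(\rho_{t+\alpha}^2-\rho_t^2)/(2\rho_t^2)$ together with claim (2) gives a matching upper bound on the radial contribution. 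The orthogonal decomposition $\|\nabla\Loss_t\|^2 = v_t^2/\rho_t^2 + \|\partial_\perp\Loss_t\|^2$ lets these combine into the claimed form, and the smallness assumptions $\eta R_1(t),\eta R_2(t)\le 1/2$ ensure the residual factors $(1-\alpha\eta R_2)$ and $(1+\alpha\eta R_1/(2M\phi))$ cancel cleanly. Finally, monotonicity of $\GDmargin$ from claim (4) combined with \Cref{lem:relation_modified_margins} gives $\GFmargin(\param_{t+\alpha}) > \GDmargin(\param_{t+\alpha}) \ge \GDmargin(\param_s)$, while (2) and (3) upgrade $\rho_{t+\alpha}\ge\rho_s$ and $\ModifiedLoss_{t+\alpha}\le\ModifiedLoss_s$ to strict inequalities; by continuity, $\alpha^{\ast}=1$ and the induction closes.

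\emph{Main obstacle.} The hard part is claim (3): the first-order term carries a problematic piece $\alpha\eta\ModifiedLoss_t\homop_a'(\rho_t)v_t/\rho_t$ arising from non-homogeneity of $\homop_a$, which cannot be handled by the crude bound $v_t \ge M\Loss_t\phi(\ModifiedLoss_t)$ alone. Cancelling it requires the refined estimate that retains the $\Loss_t\rho_t\homop_a'(\rho_t)$ correction in \Cref{lem: Bound of v_t}, and this refinement is precisely what permits $\eta$ to be arbitrarily large under the scaling $\ModifiedLoss_s = O(1/\eta)$ hardwired into \Cref{asp:initial-cond-gd}.
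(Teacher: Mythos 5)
Your proof plan follows essentially the same structure as the paper: a joint integer-time induction combined with a continuous ``bootstrap'' on the interpolated path over each $[t,t+1]$, with exactly the same chain of deductions — use \Cref{lem: Bound of v_t} with the sharpened $v_t$ lower bound, obtain (2) from the expansion of $\rho_{t+\alpha}^2$, obtain (3) from Taylor's theorem and the Hessian estimate of \Cref{lem:gradient_hessian_bound}, combine (2) and (3) via convexity of $\Phi$ and the orthogonal split $\|\nabla\Loss_t\|^2=v_t^2/\rho_t^2+\|\partial_\perp\Loss_t\|^2$ to get (4), and use (4) with \Cref{lem:relation_modified_margins} to close the bootstrap. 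You also correctly identify that the critical step in (3) reduces via Cauchy--Schwarz to $v_t\ge M\Loss_t\phi(\ModifiedLoss_t)+\Loss_t\rho_t\homop_a'(\rho_t)$, which is exactly the combination of \Cref{lem: Bound of v_t} and \Cref{lem:modified_p_a}(i) used in the paper, and this is indeed the step where the structure of $\homop_a$ earns its keep.

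There is, however, one genuine gap in your handling of the quadratic remainder for claim (3). You bound $\|\nabla^2\ModifiedLoss(\param_{t+\epsilon})\|$ by $B_4\ModifiedLoss_s B_5(\eta,\rho_t)$, using the bootstrap hypothesis $\ModifiedLoss_{t+\epsilon}\le\ModifiedLoss_s$. But running the algebra with $\ModifiedLoss_s$ in place of $\ModifiedLoss_t$ yields an $R_2'(t)$ proportional to $\ModifiedLoss_s e^{-\homop_a(\rho_t)}$ rather than $\Loss_t$, which is strictly larger than the stated $R_2(t)=B_4 B_5(\eta,\rho_t)\Loss_t\phi(\Loss_t)$ whenever $t>s$ (since $\ModifiedLoss_s\ge\ModifiedLoss_t$). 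This gives a weaker version of (3) than the theorem claims, and the mismatch would propagate: the theorem's smallness hypothesis $\eta R_2\le 1/2$ would no longer be what you need to cancel the residual factors in (4). To obtain the stated $R_2$, the Hessian term must be controlled using the \emph{sharper} monotonicity $\ModifiedLoss_{t+\epsilon}\le\ModifiedLoss_t$ (not merely $\le\ModifiedLoss_s$). The paper establishes this via a nested continuity-and-contradiction argument: one computes $\frac{\mathrm{d}\ModifiedLoss_{t+\epsilon}}{\mathrm{d}\epsilon}\big|_{\epsilon=0}\le -(v_t/\rho_t^2)\eta M\ModifiedLoss_t\phi(\ModifiedLoss_t)<0$, then defines $\epsilon_0=\inf\{\epsilon:\ModifiedLoss_{t+\epsilon}\ge\ModifiedLoss_t\}$ and shows $\epsilon_0$ cannot be finite within $[0,\alpha]$ because the already-established strict decrease on $(0,\epsilon_0)$ forces $\ModifiedLoss_{t+\epsilon_0}<\ModifiedLoss_t$. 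Your plan omits this sub-argument; incorporating it makes the plan airtight and delivers the exact $R_2$ of the statement.
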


To prove \Cref{thm:combined_argument}, we use a similar strategy as the proof of Lemma E.8 in \cite{lyu2020gradient}. To this end, it suffices to prove the following lemma (analogous to Lemma E.9 in \cite{lyu2020gradient}):

\begin{lemma}
    Fix an integer $T \geq s$. Suppose that (1), (2), (3), (4) hold for any $t+\alpha \leq T$. Then if (1) holds for $(t, \alpha) \in\{T\} \times[0, A)$ for some $A \in(0,1]$, then all of (1), (2), (3), (4) hold for $(t, \alpha) \in\{T\} \times[0, A]$.
\end{lemma}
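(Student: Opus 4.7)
My plan is to prove this inductive extension lemma by a continuity argument for (1), followed by direct one-step Taylor expansions for (2), (3), (4). Since the iterates $\param_{T+\alpha} = \param_T - \alpha \eta \nabla \Loss_T$ are affine in $\alpha$, all relevant functionals ($\rho_{T+\alpha}$, $\Loss_{T+\alpha}$, $\ModifiedLoss_{T+\alpha}$, $\GFmargin(\param_{T+\alpha})$, $\GDmargin(\param_{T+\alpha})$) are continuous in $\alpha$ on $[0,1]$. Moreover, items (2)--(4) are discrete-calculus bounds that depend only on quantities evaluated at $\param_T$ together with a one-step a priori control of $\param_{T+\alpha'}$ for $\alpha' \in [0, A]$; thus once (1) is in hand at $\alpha = A$, (2)--(4) will follow from second-order Taylor expansion and the gradient/Hessian estimates in \Cref{lem:gradient_hessian_bound_L,lem:gradient_hessian_bound}.

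The first step is to extend (1) from $[0, A)$ to $[0, A]$. By continuity, $\GFmargin(\param_{T+A}) = \lim_{\alpha \uparrow A}\GFmargin(\param_{T+\alpha}) \ge \GDmargin(\param_s)$. To upgrade this to strict inequality, I apply (4) along $[0, A)$ (available by hypothesis) to conclude $\log\GDmargin(\param_{T+\alpha})$ is nondecreasing in $\alpha$, whence $\GDmargin(\param_{T+A}) \ge \GDmargin(\param_T) \ge \GDmargin(\param_s)$. The weak inequality $\GFmargin(\param_{T+A}) \ge \GDmargin(\param_s) > 0$ forces $\phi(\ModifiedLoss_{T+A}) > 0$, i.e.\ $\ModifiedLoss_{T+A} < 1/n$, so \Cref{lem:relation_modified_margins} yields the strict gap $\GFmargin(\param_{T+A}) > \GDmargin(\param_{T+A}) \ge \GDmargin(\param_s)$, establishing (1) at $\alpha = A$.

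The second step verifies (2)--(4) at $(T, A)$. For (2), I expand $\rho_{T+A}^2 - \rho_T^2 = 2A\eta v_T + A^2\eta^2 \|\nabla \Loss_T\|^2$, then combine $\|\nabla \Loss_T\|^2 \le \Loss_T R_1(T)$ (from \Cref{lem:gradient_hessian_bound_L}) with the lower bound $v_T \ge M\Loss_T\phi(\ModifiedLoss_T)$ obtained from \Cref{lem: Bound of v_t} together with the key inequality $M\homop_a \ge \homop' + x\homop_a'$ of \Cref{lem:modified_p_a}. For (3), I perform a second-order Taylor expansion of $\ModifiedLoss$ around $\param_T$; the first-order term $-A\eta\langle \nabla \ModifiedLoss_T, \nabla \Loss_T\rangle$ is expanded via $\nabla \ModifiedLoss = e^{\homop_a}(\nabla \Loss + \Loss \homop_a' \tilde\param)$ and shown to dominate using Cauchy--Schwarz ($v_T^2 \le \rho_T^2 \|\nabla \Loss_T\|^2$) plus the refined lower bound $v_T \ge M\Loss_T\phi(\ModifiedLoss_T) + \Loss_T\homop_a'(\rho_T)\rho_T$. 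The Hessian error is controlled by \Cref{lem:gradient_hessian_bound}, with the factor $B_5(\eta, \rho_T)$ arising from the one-step bound $\rho_{T+\alpha'} \le \rho_T + \eta\|\nabla \Loss_T\| \le \rho_T + \eta \ModifiedLoss_s B_3 e^{-\homop_a(\rho_T)}(\rho_T^{M-1}+1)$. For (4), I decompose $\log \GDmargin(\param_{T+A}) - \log \GDmargin(\param_T) = [\Phi(\ModifiedLoss_{T+A}) - \Phi(\ModifiedLoss_T)] - M[\log\rho_{T+A} - \log\rho_T]$; convexity of $\Phi$ (\Cref{lem: Convexity of Phi}, available since $\ModifiedLoss_{T+A} < 1/(ne^2)$) combined with (3) lower-bounds the first bracket, while $\log(1+x) \le x$ combined with (2) upper-bounds the second. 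Rewriting $\rho_T^2\|\bar\partial_\perp \Loss_T\|^2 = \rho_T^2\|\nabla \Loss_T\|^2 - v_T^2$ then yields (4).

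The main obstacle I expect is (3): keeping the first-order decrease of $\ModifiedLoss$ separated from the Hessian error while absorbing the additional $\homop_a'$-terms created by differentiating the exponential factor $e^{\homop_a(\|\param\|)}$. Without the inequality $M\homop_a \ge \homop' + x\homop_a'$ one cannot even guarantee that the gradient step of $\Loss$ decreases $\ModifiedLoss$ when $M \ge 2$, so this algebraic identity is essential. The stepsize smallness assumptions $\eta \sup R_1 \le 1/2$ and $\eta \sup R_2 \le 1/2$ ensure both the quadratic error in (2) and the Hessian error in (3) remain a bounded fraction of the linear terms; once this is in place, (4) is a clean consequence of $\Phi$-convexity and $\log(1+x) \le x$.
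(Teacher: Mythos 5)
Your ordering is circular: you prove (1) at $\alpha=A$ first by invoking ``(4) along $[0,A)$ (available by hypothesis)'', but (4) is \emph{not} in the hypothesis for $(t,\alpha)\in\{T\}\times[0,A)$. The hypothesis grants (1)--(4) only for $t+\alpha\le T$ and grants \emph{only} (1) on $\{T\}\times[0,A)$; you cannot assume $\alpha\mapsto\log\GDmargin(\param_{T+\alpha})$ is nondecreasing on $[0,A)$ without first proving (4) there, which you have not done. The paper resolves this by reversing the order: it first extends (1) \emph{weakly} to $\alpha=A$ by continuity, then proves (2), (3), (4) at $\alpha=A$ using only quantities at $\param_T$ together with the one-step a priori control of $\param_{T+\epsilon}$, and only then deduces the strict inequality (1) at $\alpha=A$ from (4) at $\alpha=A$ and \Cref{lem:relation_modified_margins}. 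Your second step is essentially the paper's argument, so the fix is simply to move your ``first step'' to the end and, in it, replace ``(4) along $[0,A)$'' by the freshly-proved ``(4) at $\alpha=A$''.

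There is a second, smaller gap in your treatment of (3): the Taylor remainder involves $\|\nabla^2\ModifiedLoss_{T+\epsilon}\|\le B_4\ModifiedLoss_{T+\epsilon}(\rho_{T+\epsilon}^{2M-2}+1)$ for some unknown $\epsilon\in(0,A)$, and the factor $\ModifiedLoss_{T+\epsilon}$ is not under control from the hypothesis (you only know $\ModifiedLoss_T\le\ModifiedLoss_s$). You account for $\rho_{T+\epsilon}$ via the one-step bound feeding into $B_5(\eta,\rho_T)$, but you do not address $\ModifiedLoss_{T+\epsilon}$. The paper handles this with a nested contradiction argument: it sets $\epsilon_0=\inf\{\epsilon\in(0,\alpha]:\ModifiedLoss_{T+\epsilon}\ge\ModifiedLoss_T\}$, computes $\frac{\mathrm{d}}{\mathrm{d}\epsilon}\ModifiedLoss_{T+\epsilon}\big\vert_{\epsilon=0}<0$ to force $\epsilon_0>0$, then feeds the resulting bound $\ModifiedLoss_{T+\epsilon}<\ModifiedLoss_T$ for $\epsilon<\epsilon_0$ back into the Taylor estimate to contradict $\ModifiedLoss_{T+\epsilon_0}\ge\ModifiedLoss_T$. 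Without this step the Hessian term in your expansion is not bounded by $\ModifiedLoss_T B_5(\eta,\rho_T)$, and the $R_2(T)$ control fails. Aside from these two issues, the key ingredients you identify --- the inequality $M\homop_a\ge\homop'+x\homop_a'$, Cauchy--Schwarz, the gradient/Hessian bounds, and convexity of $\Phi$ --- are exactly those used in the paper.
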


\begin{proof}
    By our assumption, we know that $\ModifiedLoss_t \le \ModifiedLoss_s < 1/ne^2$ for all $t \le T$, hence \Cref{lem: Bound of v_t} implies $v_t > 0$. Further, we have $\rho_t \ge \rho_s$ and $\GFmargin (\param_t) \ge \GFmargin (\param_s)$ for all $t \le T$.

    Now we fix $t = T$. Since $\GFmargin (\param_{t+\alpha}) > \GDmargin (\param_{s})$ for all $\alpha \in [0, A)$, by continuity of $\GFmargin$ we know that $\GFmargin (\param_{t+\alpha}) \ge \GDmargin (\param_{s})$ for $\alpha = A$.

    \paragraph{Proof of (2) for $\alpha = A$.} By definition, we have 
    \begin{align*}
        \rho_{t + \alpha}^2 - \rho_t^2 &= 2 \eta \alpha \langle \nabla \Loss_t, - \param_t \rangle  + \eta^2 \alpha^2 \|\nabla \Loss_t\|^2 \\
        & = 2 \eta \alpha v_t + \eta^2 \alpha^2 \|\nabla \Loss_t\|^2 \ge 2 \eta \alpha v_t > 0,
    \end{align*}
    where the last inequality is due to \Cref{lem: Bound of v_t}. According to \Cref{lem:gradient_hessian_bound_L}, we have
    \begin{align*}
        \| \nabla \Loss (\param) \| \le \Loss (\param) \cdot B_3 \left( \| \param \|^{M - 1} + 1 \right),
    \end{align*}
    namely
    \begin{equation*}
        \| \nabla \Loss_t \| \le B_3 \Loss_t \left( \rho_t^{M-1} + 1 \right).
    \end{equation*}
    Further, applying \Cref{lem: Bound of v_t} leads to
    \begin{equation*}
        v_t \ge M \Loss_t \phi(\ModifiedLoss_t).
    \end{equation*}
    We finally obtain that
    \begin{align*}
        \rho_{t+\alpha}^2 - \rho_t^2 
        & = 2 \eta \alpha v_t \bigg( 1+ \frac{\eta \alpha \|\nabla \Loss_t\|^2}{2 v_t} \bigg) \\ 
        & \le 2 \eta \alpha v_t \bigg ( 1+ \frac{\eta \alpha B_3^2 \Loss_t^2 (\rho_t^{M - 1} + 1)^2 }{2M \Loss_t \phi(\ModifiedLoss_t)} \bigg) \\ 
        & = 2 \eta \alpha v_t \bigg ( 1+ \frac{\eta \alpha B_3^2 \Loss_t (\rho_t^{M - 1} + 1)^2 }{2M \phi(\ModifiedLoss_t)} \bigg) \\
        & = \, 2 \eta \alpha v_t \bigg ( 1+ \frac{\eta \alpha R_1 (t) }{2M \phi(\ModifiedLoss_t)} \bigg),
    \end{align*}
    where we denote
    \begin{equation*}
        R_1 (t) = B_3^2 \Loss_t (\rho_t^{M - 1} + 1)^2.
    \end{equation*}
    This completes the proof of (2) for $\alpha = A$.

    \paragraph{Proof of (3) for $\alpha = A$.} Using Taylor expansion, we know that there exists $\epsilon \in (0, \alpha)$ such that
    \begin{align*}
       \ModifiedLoss_{t+\alpha} = \, & \ModifiedLoss_t +  \nabla \ModifiedLoss_t^\top \big(\param_{t+\alpha} - \param_t\big)  + \frac{1 }{2} \big(\param_{t+\alpha} - \param_t\big)^\top \nabla^2 \ModifiedLoss_{t+\epsilon} \big(\param_{t+\alpha} - \param_t\big) \\ 
       \le \, & \ModifiedLoss_t - \alpha \eta e^{\homop_a(\rho_t)} \|\nabla \Loss_t\|^2 + \eta \alpha \ModifiedLoss_t \homop_a^\prime (\rho_t) \frac{\langle \param_t, - \nabla \Loss_t \rangle }{\rho_t} +   \frac{\alpha^2 \eta^2 }{2} \|\nabla^2 \ModifiedLoss_{t+\epsilon}\| \cdot \|\nabla \Loss_t\|^2\\ 
       = \, & \ModifiedLoss_t - \alpha \eta e^{\homop_a(\rho_t)} \|\nabla \Loss_t\|^2 + \eta \alpha \ModifiedLoss_t \homop_a^\prime (\rho_t) \frac{v_t }{\rho_t} +   \frac{\alpha^2 \eta^2 }{2} \|\nabla^2 \ModifiedLoss_{t+\epsilon}\| \cdot \|\nabla \Loss_t\|^2\\ 
       \le \, & \ModifiedLoss_t - \alpha \eta e^{\homop_a(\rho_t)} \|\nabla \Loss_t\|^2 + \eta \alpha \ModifiedLoss_t \homop_a^\prime (\rho_t) \frac{\rho_t}{v_t} \|\nabla \Loss_t\|^2 \\
       & + \frac{\alpha^2 \eta^2 B_4}{2} \ModifiedLoss_{t+\epsilon} (\rho_{t+\epsilon}^{2M - 2} + 1) \cdot \|\nabla \Loss_t\|^2\\ 
       & \qquad \explain{By Cauchy-Schwarz and \Cref{lem:gradient_hessian_bound}} \\
       = \, & \ModifiedLoss_t - \alpha \eta \left[ e^{\homop_a(\rho_t)} v_t - \ModifiedLoss_t \homop_a^\prime (\rho_t) \rho_t - \frac{\alpha  \eta}{2}  B_4 \ModifiedLoss_{t+\epsilon} (\rho_{t+\epsilon}^{2M - 2} + 1) v_t  \right] \frac{\|\nabla \Loss_t\|^2}{v_t}.
    \end{align*}
    To estimate the right hand side of the above equation, we first note that 
    \begin{align*}
        \rho_{t+\epsilon}^{2M - 2} + 1 \le \, & \left( \rho_t + \eta \| \nabla \Loss_t \| \right)^{2M - 2} + 1 \le \left( \rho_t + \eta B_3 \Loss_t \left( \rho_t^{M-1} + 1 \right) \right)^{2M - 2} + 1 \\
        \le \, & \left( \rho_t + \eta \ModifiedLoss_t B_3 e^{-\homop_a(\rho_t)} \left( \rho_t^{M-1} + 1 \right) \right)^{2M - 2} + 1 \\
        \le \, &  \left( \rho_t + \eta \ModifiedLoss_s B_3 e^{-\homop_a(\rho_t)} \left( \rho_t^{M-1} + 1 \right) \right)^{2M - 2} + 1 := B_5 (\eta, \rho_t).
    \end{align*}
    Next we show that $\ModifiedLoss_{t+\epsilon} < \ModifiedLoss_t$ for all $\epsilon \in (0, \alpha]$. If this is true, then we get
    \begin{equation*}
        \ModifiedLoss_{t+\epsilon} \left( \rho_{t+\epsilon}^{2M - 2} + 1 \right) \le \ModifiedLoss_t B_5 (\eta, \rho_t),
    \end{equation*}
    and consequently
    \begin{equation*}
        \ModifiedLoss_{t+\alpha} \le \, \ModifiedLoss_t - \alpha \eta \left[ e^{\homop_a(\rho_t)} v_t - \ModifiedLoss_t \homop_a^\prime (\rho_t) \rho_t - \frac{\alpha  \eta}{2}  B_4 \ModifiedLoss_t B_5 (\eta, \rho_t) v_t  \right] \frac{\|\nabla \Loss_t\|^2}{v_t}.
    \end{equation*}
    By \Cref{lem: Bound of v_t}, we have
    \begin{align*}
        & e^{\homop_a(\rho_t)} v_t - \ModifiedLoss_t \homop_a^\prime (\rho_t) \rho_t - \alpha  \eta  B_4 \ModifiedLoss_t B_5 (\eta, \rho_t) v_t /2 \\
        \ge \, & \left( e^{\homop_a(\rho_t)} - \alpha \eta B_4 \ModifiedLoss_t B_5 (\eta, \rho_t) /2 \right) \left( M \Loss_t \LinkFun(\Loss_t) -\homop'(\rho_t) \Loss_t \right) - \ModifiedLoss_t \homop_a^\prime (\rho_t) \rho_t \\
        \ge \, & \left( e^{\homop_a(\rho_t)} - \alpha \eta B_4 \ModifiedLoss_t B_5 (\eta, \rho_t) /2 \right) M \Loss_t \LinkFun(\Loss_t) - \ModifiedLoss_t \left( \homop'(\rho_t) + \homop_a^\prime (\rho_t) \rho_t \right) \\
        = \, & \left( e^{\homop_a(\rho_t)} - \alpha \eta B_4 \ModifiedLoss_t B_5 (\eta, \rho_t) /2 \right) M \Loss_t \LinkFun(\Loss_t) - M \ModifiedLoss_t \homop_a (\rho_t) \\
        = \, & M \ModifiedLoss_t \phi (\ModifiedLoss_t) - M \ModifiedLoss_t \cdot \frac{\alpha \eta}{2} B_4 \Loss_t B_5 (\eta, \rho_t) \LinkFun(\Loss_t) \\
        \stackrel{(i)}{=} \, & M \ModifiedLoss_t \phi (\ModifiedLoss_t) - M \ModifiedLoss_t \cdot \frac{\alpha \eta}{2} R_2 (t) \stackrel{(ii)}{\ge} \, M \ModifiedLoss_t \phi (\ModifiedLoss_t) \left( 1- \alpha \eta R_2 (t) \right),
    \end{align*}
    where in $(i)$ we define
    \begin{equation*}
        R_2 (t) = B_4 B_5 (\eta, \rho_t) \Loss_t \phi(\Loss_t),
    \end{equation*}
    and $(ii)$ is because $\ModifiedLoss_t \le \ModifiedLoss_s < 1/ne^2$ implies $\phi(\ModifiedLoss_t) \ge 2$.
    It finally follows that
    \begin{align*}
        \ModifiedLoss_{t+\alpha} - \ModifiedLoss_t \le \, - \alpha \eta \left( 1- \alpha \eta R_2 (t) \right) M \ModifiedLoss_t \phi (\ModifiedLoss_t) v_t^{-1} \|\nabla \Loss_t\|^2.
    \end{align*}
    Now it suffices to show that $\ModifiedLoss_{t+\epsilon} < \ModifiedLoss_t$ for all $\epsilon \in (0, \alpha]$. To this end, assume that $\epsilon_0 = \inf \{ \epsilon \in (0, \alpha]: \ModifiedLoss_{t+\epsilon} \ge \ModifiedLoss_t \}$ exists. Note that
    \begin{align*}
        \frac{\mathrm{d} \ModifiedLoss_{t+\epsilon}}{\mathrm{d} \epsilon } \Big\vert_{\epsilon = 0} &=  \nabla G_{t}^\top (- \eta \nabla \Loss_t) \\ 
        &= -\eta e^{\homop_a(\rho_t)} \|\nabla \Loss_t\|^2 + \eta \ModifiedLoss_t \homop_a^\prime (\rho_t) v_t / \rho_t \\ 
        &\le -\eta e^{\homop_a(\rho_t)} v_t^2 /\rho_t^2 + \eta \ModifiedLoss_t \homop_a^\prime (\rho_t) v_t / \rho_t \\ 
        &= -v_t /\rho_t^2 (\eta e^{\homop_a(\rho_t)} v_t - \eta \ModifiedLoss_t \homop_a^\prime (\rho_t)  \rho_t)\\ 
        &\le - (v_t /\rho_t^2) \eta M \ModifiedLoss_t \phi(\ModifiedLoss_t) <0,
    \end{align*}
    which implies $\epsilon_0 > 0$. Therefore, $\ModifiedLoss_{t+\epsilon} < \ModifiedLoss_t$ for all $\epsilon \in (0, \epsilon_0)$, thus leading to
    \begin{equation*}
        \ModifiedLoss_{t + \epsilon_0} \le \ModifiedLoss_t - \epsilon_0 \eta \left( 1- \epsilon_0 \eta R_2 (t) \right) M \ModifiedLoss_t \phi (\ModifiedLoss_t) v_t^{-1} \|\nabla \Loss_t\|^2 < \ModifiedLoss_t,
    \end{equation*}
    which contradicts $\ModifiedLoss_{t + \epsilon_0} \ge \ModifiedLoss_t$. Hence, we must have $\ModifiedLoss_{t+\epsilon} < \ModifiedLoss_t$ for all $\epsilon \in (0, \alpha]$. This concludes the proof of (3).

    \paragraph{Proof of (4) for $\alpha = A$.} We only need to show that (2) and (3) together imply $(4)$. 
    Note that 
    \[
        \ModifiedLoss_{t+\alpha} - \ModifiedLoss_t \le \, - \alpha \eta \left( 1- \alpha \eta R_2 (t) \right) M \ModifiedLoss_t \phi (\ModifiedLoss_t) v_t^{-1} \|\nabla \Loss_t\|^2. 
    \] 
    Hence,  
    \begin{align*}
        \frac{\ModifiedLoss_{t+\alpha}  - \ModifiedLoss_t}{(1- \alpha \eta R_2 (t)) \ModifiedLoss_t \phi(\ModifiedLoss_t)} \le \, - \eta \alpha M v_t^{-1}  \|\nabla \Loss_t\|^2. 
    \end{align*}
    Multiplying $1+ \frac{\eta \alpha R_1 (t) }{2M \phi(\ModifiedLoss_t)}$ on both sides, we have
    \begin{align*}
        & \frac{1+ \eta \alpha R_1 (t) / 2M \phi(\ModifiedLoss_t)}{(1- \alpha \eta R_2 (t) ) \ModifiedLoss_t \phi(\ModifiedLoss_t)} (\ModifiedLoss_{t+\alpha}  - \ModifiedLoss_t) \\
        \le \, & - \eta \alpha M  v_t^{-1}  \|\nabla \Loss_t\|^2 \bigg ( 1+ \frac{\eta \alpha R_1 (t) }{2M \phi(\ModifiedLoss_t)} \bigg) \\ 
        \stackrel{(i)}{\le} \, & - M v_t^{-2}  \|\nabla \Loss_t\|^2 (\rho_{t+\alpha}^2 - \rho_t^2) /2 \\ 
        \stackrel{(ii)}{=} \, & - \frac{1}{\rho_t^2}(\rho_{t+\alpha}^2 - \rho_t^2)   \bigg( \frac{M}{2}+  \frac{M\rho_t^2\| \bar \partial_\perp \Loss_t\|_2^2}{2v_t^2} \bigg),
    \end{align*}
    where $(i)$ follows from part (2), and $(ii)$ is because we know that: 
    \[
        \|\nabla \Loss_t \|^2 = \| \bar \partial_r \Loss_t\|^2 + \| \bar \partial_\perp \Loss_t\|^2  =  \frac{v_t^2}{\rho_t^2} + \| \bar \partial_\perp \Loss_t\|^2. 
    \] 
    Note that since $\rho_t \ge \rho_s$, our assumption implies that
    \begin{align*}
        \alpha \eta R_1 (t) \le \, \eta R_1 (t) \le \frac{1}{2}, \quad \alpha \eta R_2 (t) \le \, \eta R_2 (t) \le \frac{1}{2}.
    \end{align*}
    Therefore,
    \begin{align*}
        \frac{1+ \eta \alpha R_1 (t) / 2M \phi(\ModifiedLoss_t)}{(1- \alpha \eta R_2 (t)) \ModifiedLoss_t \phi(\ModifiedLoss_t)} \le \, & \frac{1+ \eta R_1 (t) / 2M \phi(\ModifiedLoss_t)}{(1- \eta R_2 (t)) \ModifiedLoss_t \phi(\ModifiedLoss_t)} \\
        \le \, & \frac{1 + \eta R_1 (t) / 2M \phi(\ModifiedLoss_t) + 2 \eta R_2 (t) }{\ModifiedLoss_t \phi(\ModifiedLoss_t)} \\
        \le \, & \frac{1}{\ModifiedLoss_t \phi(\ModifiedLoss_t)} \left( 1 + \frac{1}{ M \phi(\ModifiedLoss_t)} + \frac{1}{\phi(\ModifiedLoss_t)} \right) \\
        \le \, & \frac{1}{\ModifiedLoss_t \phi(\ModifiedLoss_t)} + \frac{2}{\ModifiedLoss_t \phi(\ModifiedLoss_t)^2} = - \Phi'(\ModifiedLoss_t),
    \end{align*}
    where we recall that
    \begin{equation*}
        \Phi(x) = \log \phi(x) - \frac{2}{\phi(x)}.
    \end{equation*}
    By convexity of $\Phi$, we get that
    \begin{align*}
        &\Phi(\ModifiedLoss_{t+\alpha}) - \Phi(\ModifiedLoss_t)  - (\log \rho_{t+\alpha} - \log \rho_t) \bigg( M+  \frac{M\rho_t^2\|\bar \partial_\perp  \Loss_t\|_2^2}{v_t^2} \bigg)\\ 
        & \ge  \Phi^\prime (\ModifiedLoss_t) (\ModifiedLoss_{t+\alpha} - \ModifiedLoss_t)  - \frac{1}{\rho_t^2}(\rho_{t+\alpha}^2 - \rho_t^2)   \bigg( \frac{M}{2}+  \frac{M\rho_t^2\|\bar \partial_\perp \Loss_t\|_2^2}{2v_t^2} \bigg) \ge 0. 
    \end{align*}
    This leads to 
    \begin{align*}
        \log \GDmargin(\param_{t+\alpha}) - \log \GDmargin(\param_t) = \, & \Phi (\ModifiedLoss_{t+\alpha}) - \Phi(\ModifiedLoss_t)- M (\log \rho_{t+\alpha} - \log \rho_t) \\
        \ge \, & \frac{M \rho_t^2 \| \bar \partial_\perp \Loss_t \|_2^2}{v_t^2 } \log \frac{\rho_{t+\alpha}}{\rho_t},
    \end{align*}
    completing the proof of (4).

    \paragraph{Proof of (1) for $\alpha = A$.} By \Cref{lem:relation_modified_margins} and (4), we have
    \begin{equation*}
        \GFmargin (\param_{t + \alpha}) > \GDmargin (\param_{t+\alpha}) \ge \GDmargin (\param_t) \ge \GDmargin (\param_s),
    \end{equation*}
    which proves (1) for $\alpha = A$. This completes the proof of \Cref{thm:combined_argument}. 
\end{proof}

We next show that \Cref{asp:initial-cond-gd} verifies the conditions in \Cref{thm:combined_argument}.

\begin{lemma}\label{residual_estimates}
    There exists a constant $B$ that only depends on $(M, \homop, \homoq, A)$ and $(\homor, \homos)$ in \Cref{asp:strongerhomo,asp:initial-cond-gd}, such that for any step size $\eta > 0$,
    \begin{equation*}
        \ModifiedLoss_s < \min \left\{ \frac{1}{n e^2}, \, \frac{1}{B \eta} \right\}
    \end{equation*}
    implies
    \begin{equation*}
        \eta \sup_{\rho_t \ge \rho_s, \ModifiedLoss_t \le \ModifiedLoss_s} R_1 (t) \le \frac{1}{2}, \quad \eta \sup_{\rho_t \ge \rho_s, \ModifiedLoss_t \le \ModifiedLoss_s} R_2 (t)  \le \frac{1}{2}.
    \end{equation*}
    As a consequence, the conclusions of \Cref{thm:combined_argument} hold.
\end{lemma}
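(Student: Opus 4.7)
}
The plan is to verify the two suprema-hypotheses of \Cref{thm:combined_argument} by choosing the universal constant $B$ large enough, after which the conclusions of the theorem follow automatically. The central lever is the bootstrap
\[
\eta \Loss_t \;=\; \eta \ModifiedLoss_t \, e^{-\homop_a(\rho_t)} \;\le\; B^{-1} e^{-\homop_a(\rho_t)},
\]
which holds whenever $\ModifiedLoss_t \le \ModifiedLoss_s < 1/(B\eta)$, and combines with \Cref{lem:modified_p_a} to absorb polynomial-in-$\rho_t$ factors. Indeed, when $M \ge 2$ we have $\deg \homop_a \ge 1$ (this is exactly why \Cref{asp:initial-cond-gd} requires it), and when $M = 1$ the polynomial $\homop_a$ reduces to a positive constant while $2M-2=0$ eliminates all nontrivial polynomial factors. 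Hence the suprema $K_k \coloneqq \sup_{r \ge 0}(r^k+1)e^{-\homop_a(r)}$ and $K'_k \coloneqq \sup_{r \ge 0}(r^k+1)\homop_a(r)e^{-\homop_a(r)}$ are finite and depend only on $k$ and the coefficients of $\homop_a$.

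First I bound $B_5(\eta,\rho_t)$: the inner term satisfies $\eta \ModifiedLoss_s B_3 e^{-\homop_a(\rho_t)}(\rho_t^{M-1}+1) \le (B_3/B) K_{M-1}$, so
\[
B_5(\eta,\rho_t) \;\le\; \bigl(\rho_t + B_3 K_{M-1}/B\bigr)^{2M-2} + 1 \;\le\; C_{B_5}(\rho_t^{2M-2}+1),
\]
with $C_{B_5}$ independent of $(\eta,\ModifiedLoss_s)$. The bound on $\eta R_1(t)$ is then straightforward:
\[
\eta R_1(t) \;=\; B_3^2 (\eta \Loss_t)(\rho_t^{M-1}+1)^2 \;\le\; \frac{B_3^2}{B} (\rho_t^{M-1}+1)^2 e^{-\homop_a(\rho_t)} \;\le\; \frac{B_3^2 K_{2M-2}}{B},
\]
which is $\le 1/2$ once $B \ge 2 B_3^2 K_{2M-2}$.

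The main obstacle is $\eta R_2(t)$, because of the extra factor $\phi(\Loss_t) = \log(1/(n\Loss_t))$. Since $\Loss_t \le \ModifiedLoss_s e^{-\homop_a(\rho_t)} < 1/(ne^2) < 1/(ne)$ and $x \mapsto x \phi(x)$ is strictly increasing on $(0, 1/(ne))$, the supremum over $\Loss_t$ in the feasible interval is attained at the right endpoint, yielding
\[
\Loss_t \phi(\Loss_t) \;\le\; \ModifiedLoss_s e^{-\homop_a(\rho_t)}\bigl(\phi(\ModifiedLoss_s) + \homop_a(\rho_t)\bigr).
\]
Plugging this in, using $\eta \ModifiedLoss_s \le 1/B$, and the $B_5$ estimate:
\[
\eta R_2(t) \;\le\; \frac{B_4 C_{B_5}}{B}(\rho_t^{2M-2}+1)e^{-\homop_a(\rho_t)}\bigl(\phi(\ModifiedLoss_s) + \homop_a(\rho_t)\bigr) \;\le\; \frac{B_4 C_{B_5}}{B}\bigl(K_{2M-2}\phi(\ModifiedLoss_s) + K'_{2M-2}\bigr).
\]
The subtle point is that $\phi(\ModifiedLoss_s)$ is not bounded a priori; handling it cleanly will require a more delicate decomposition of $\Loss_t\phi(\Loss_t)$ that separates an $\eta$-independent piece (controlled by the universal inequality $x\phi(x) \le 1/(ne)$ in its sharp form $\Loss_t\phi(\Loss_t) \le \ModifiedLoss_t \phi(\ModifiedLoss_t) e^{-\homop_a(\rho_t)} + \ModifiedLoss_t \homop_a(\rho_t) e^{-\homop_a(\rho_t)}$) from an $\eta \ModifiedLoss_s$-controlled piece, together with a final choice of $B$ that absorbs the resulting constants. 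Once this estimate is established, $\eta R_2(t) \le 1/2$ follows, the hypotheses of \Cref{thm:combined_argument} are verified, and its conclusions transfer directly, completing the proof.
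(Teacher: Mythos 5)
Your bound on $\eta R_1(t)$ is correct and essentially matches the paper's. For $\eta R_2(t)$, however, you stop precisely where the proof needs its last idea: after decomposing $\Loss_t\phi(\Loss_t) \le e^{-\homop_a(\rho_t)}\ModifiedLoss_s\phi(\ModifiedLoss_s) + \ModifiedLoss_s\homop_a(\rho_t)e^{-\homop_a(\rho_t)}$ and absorbing polynomial factors in $\rho_t$ with $e^{-\homop_a(\rho_t)/4}$, you are left with a term proportional to $\eta\ModifiedLoss_s\,\phi(\ModifiedLoss_s)$, and you explicitly acknowledge that $\phi(\ModifiedLoss_s)$ is ``not bounded a priori'' without saying how to handle it. It genuinely is not bounded: $\phi(\ModifiedLoss_s)\to\infty$ as $\ModifiedLoss_s\to 0$, so shrinking $\ModifiedLoss_s$ does not automatically control $\ModifiedLoss_s\phi(\ModifiedLoss_s)$ with a universal constant (the sharp universal bound $x\phi(x)\le 1/(ne)$ discards the $e^{-\homop_a(\rho_t)}$ decay that you still need to kill the $\rho_t^{2M-2}$ from $B_5$).

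The missing ingredient is an \emph{a priori lower} bound on $\ModifiedLoss_s$ that rules out this bad regime. From the near-homogeneity assumption, $|f(\param_s;\xB_i)| \le \homoq(\rho_s)$, so $\Loss_s \ge \frac{1}{n}e^{-\homoq(\rho_s)}$ and hence $\ModifiedLoss_s \ge \frac{1}{n}\exp(\homop_a(\rho_s) - \homoq(\rho_s))$. This gives $\phi(\ModifiedLoss_s) \le \homoq(\rho_s)$, a \emph{polynomial} in $\rho_s$; since $\rho_s \le \rho_t$ and $\deg\homop_a \ge 1$ (guaranteed for $M\ge 2$ by \Cref{asp:initial-cond-gd}), one has $\homoq(\rho_s) \le B\exp(\homop_a(\rho_t)/4)$, which is exactly the exponential slack left over after the $B_5$ and $\Loss_t\phi(\Loss_t)$ estimates. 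After this absorption, the bound on $\eta R_2(t)$ reduces to a function of $\eta\ModifiedLoss_s$ alone (the paper arrives at $\eta R_2(t) \le B\eta\ModifiedLoss_s(\eta^{2M-2}\ModifiedLoss_s^{2M-2}+1)$), and choosing $B$ large enough closes the argument. Without this lower bound on $\ModifiedLoss_s$, your estimate does not yield a constant $B$ that works uniformly over all admissible $\ModifiedLoss_s$, so the proof as written has a genuine gap.
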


\begin{proof}
    In this proof, we will use $B$ to denote a generic constant that depends on $(M, \homop, \homoq, \homor, \homos)$, while the meaning of $B$ can change from line to line. We begin with estimating $R_1 (t)$. Since $\deg (\homop_a) \ge 1$ when $M \ge 2$, we know that there exists a constant $B$ such that
    \begin{equation*}
        r^{M - 1} + 1 \le B \exp \left( \frac{1}{2} \homop_a (r) \right), \quad \forall r \ge 0.
    \end{equation*}
    Therefore, if $\rho_t \ge \rho_s, \ModifiedLoss_t \le \ModifiedLoss_s$, we have
    \begin{equation*}
        \eta R_1 (t) = \, \eta B_3^2 \Loss_t (\rho_t^{M - 1} + 1)^2 \le \eta B_3^2 B^2 \Loss_t \exp (\homop_a (\rho_t)) = \eta B \ModifiedLoss_t \le \eta B \ModifiedLoss_s.
    \end{equation*}
    For estimating $R_2 (t)$, we first note that $\forall r > 0$:
    \begin{align*}
        B_5 (\eta, r) = \, & \left( r + \eta \ModifiedLoss_s B_3 e^{-\homop_a(r)} \left( r^{M-1} + 1 \right) \right)^{2M - 2} + 1 \\
        \le \, & \left( r + \eta \ModifiedLoss_s B_3 B e^{-\homop_a(r) / 2} \right)^{2M - 2} + 1 \\
        \le \, & 2^{2M - 3} \left( r^{2 M - 2} + \eta^{2M - 2} \ModifiedLoss_s^{2M - 2} B^{2M - 2} e^{- (2M - 2) \homop_a(r) / 2} \right) + 1 \\
        \le \, & B \left( r^{2 M - 2} + \eta^{2M - 2} \ModifiedLoss_s^{2M - 2} e^{- \homop_a(r)} + 1 \right).
    \end{align*}
    Second, since $\ModifiedLoss_t \le \ModifiedLoss_s$, we have
    \begin{align*}
        \Loss_t \phi(\Loss_t) = \, & \exp(- \homop_a (\rho_t)) \ModifiedLoss_t \left( \phi(\ModifiedLoss_t) + \homop_a (\rho_t) \right) \\
        \stackrel{(i)}{\le} \, & \exp(- \homop_a (\rho_t)) \ModifiedLoss_s  \phi(\ModifiedLoss_s) + \homop_a (\rho_t) \exp(- \homop_a (\rho_t)) \ModifiedLoss_s \\
        \stackrel{(ii)}{\le} \, & B \exp \left(- \frac{1}{2} \homop_a (\rho_t) \right) \ModifiedLoss_s  \phi(\ModifiedLoss_s),
    \end{align*}
    where $(i)$ follows from the fact that $x \mapsto x \phi(x)$ is increasing, and $(ii)$ is because that there exists a constant $B$ such that $\homop_a (\rho_t) \le B \exp(\homop_a (\rho_t) / 2)$ and $\phi(\ModifiedLoss_s) \ge 2$. We thus obtain that
    \begin{align*}
        R_2 (t) = \, & B_4 B_5 (\eta, \rho_t) \Loss_t \phi(\Loss_t) \le B \left( \rho_t^{2 M - 2} + \eta^{2M - 2} \ModifiedLoss_s^{2M - 2} e^{- \homop_a(\rho_t)} + 1 \right) \Loss_t \phi(\Loss_t) \\
        \le \, & B \exp \left(- \frac{1}{2} \homop_a (\rho_t) \right) \left( \rho_t^{2 M - 2} + \eta^{2M - 2} \ModifiedLoss_s^{2M - 2} e^{- \homop_a(\rho_t)} + 1 \right) \ModifiedLoss_s  \phi(\ModifiedLoss_s) \\
        \le \, & B \exp \left(- \frac{1}{4} \homop_a (\rho_t) \right) \left( \eta^{2M - 2} \ModifiedLoss_s^{2M - 2} + 1 \right) \ModifiedLoss_s  \phi(\ModifiedLoss_s).
    \end{align*}
    To further estimate this term, we use the following naive bound on $\ModifiedLoss_s$:
    \begin{equation*}
        \ModifiedLoss_s \ge \, \exp \left( \homop_a (\rho_s) - \homoq (\rho_s) \right),
    \end{equation*}
    which further implies
    \begin{equation*}
        \phi(\ModifiedLoss_s) = \, - \log n - \log \ModifiedLoss_s \le \homoq (\rho_s) \le B \exp \left( \frac{1}{4} \homop_a (\rho_s) \right) \le B \exp \left( \frac{1}{4} \homop_a (\rho_t) \right)
    \end{equation*}
    for some constant $B$, since the degree of $\homoq$ is at most $M - 1$. In the above display, the last step is due to $\rho_s \le \rho_t$. It finally follows that
    \begin{align*}
        \eta R_2 (t) \le \, & B \eta \ModifiedLoss_s \left( \eta^{2M - 2} \ModifiedLoss_s^{2M - 2} + 1 \right).
    \end{align*}
    Therefore, the requirements $\eta R_1 (t) \le 1/2$ and $\eta R_2 (t) \le 1/2$ are satisfied as long as
    \begin{equation*}
        \eta \ModifiedLoss_s \left( \eta^{2M - 2} \ModifiedLoss_s^{2M - 2} + 1 \right) \le \, \frac{1}{2 B},
    \end{equation*}
    which is equivalent to
    \begin{equation*}
        \eta \ModifiedLoss_s \le \frac{1}{B}
    \end{equation*}
    for some constant $B$ only depending on $(M, \homop, \homoq, \homor, \homos)$. This completes the proof.
\end{proof}
We have completed the proof of margin improvement. The proof of 
\begin{equation*}
    \GDmargin\big(\param_t\big) \le \gamma\big(\param_t\big) \le \big(1+\epsilon_t\big) \GDmargin\big(\param_t\big), \quad \epsilon_t \to 0
\end{equation*}
is the same as the gradient flow case, so we omit it here for simplicity.

\subsection{Convergence Rates: Proof of Theorem \ref{thm: Margin improving and convergence-gd}}\label{sec:converge_rate_gd}
In this section, we give refined bounds on the decrease rate of $\Loss_t = \Loss (\param_t )$ and the increase rate of $\rho_t = \| \param_t \|$, completing the proof of the convergence rates part of \Cref{thm: Margin improving and convergence-gd}. In a similar spirit to \Cref{residual_estimates}, we can show that the conditions of the following \Cref{lem: decrease of L} are satisfied as long as
\begin{equation*}
    \ModifiedLoss_s < \min \left\{ \frac{1}{n e^2}, \, \frac{1}{B \eta} \right\},
\end{equation*}
which holds under \Cref{asp:initial-cond-gd}.

\begin{lemma}
[Decrease of the loss function]
\label{lem: decrease of L}
Under the same assumptions as \Cref{thm:combined_argument}, and further assume that
\begin{equation*}
    \eta \sup_{\rho_t \ge \rho_s, \ModifiedLoss_t \le \ModifiedLoss_s} \left\{ B_3 \Loss_t \left( \rho_t^{2M - 2} + 1 \right) \right\} \le \frac{1}{2}.
\end{equation*}
Then, we have for all $t \ge s$:
\begin{equation}
\label{eq: decrease of L}
- C_2 \eta \Loss_t^2 \bigg(\LinkFun(\Loss_t)\bigg)^{2-2 / M} \le \Loss_{t+1} - \Loss_t \le  - C_1 \eta \Loss_t^2 \bigg(\LinkFun(\Loss_t)\bigg)^{2-2 / M},  
\end{equation}
where $C_1, C_2$ are some constants depending on $\GDmargin(\param_s)$, $\rho_s$, $A$ and $(M, \homop, \homoq, \homor, \homos)$.
\end{lemma}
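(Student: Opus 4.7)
The plan is to derive the discrete analogue of the continuous identity $-\mathrm{d}\Loss_t/\mathrm{d}t = \|\nabla\Loss_t\|^2$ by a one-step Taylor expansion of $\Loss$ between $\param_t$ and $\param_{t+1}$. By the mean value form of Taylor's theorem there exists $\xi$ on the segment $[\param_t,\param_{t+1}]$ with
\[
\Loss_{t+1}-\Loss_t \;=\; -\eta\|\nabla\Loss_t\|^2 \;+\; \tfrac{\eta^2}{2}\,\nabla\Loss_t^\top\,\nabla^2\Loss(\xi)\,\nabla\Loss_t.
\]
The conclusions of \Cref{thm:combined_argument} show that $\Loss$ is monotonically decreasing along the interpolated trajectory $\param_{t+\alpha}$, hence $\Loss(\xi)\le\Loss_t$, and the argument used in \Cref{residual_estimates} to bound $B_5(\eta,\rho_t)$ gives $\|\xi\|^{2M-2}+1\le B_5(\eta,\rho_t)$. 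Combining with the Hessian estimate from \Cref{lem:gradient_hessian_bound_L} and the assumption $\eta B_3 \Loss_t(\rho_t^{2M-2}+1)\le 1/2$, one obtains a two-sided sandwich
\[
-\tfrac{3}{2}\eta\|\nabla\Loss_t\|^2 \;\le\; \Loss_{t+1}-\Loss_t \;\le\; -\tfrac{1}{2}\eta\|\nabla\Loss_t\|^2,
\]
so the problem reduces to two-sided control of $\|\nabla\Loss_t\|^2$.

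The next step is to translate those bounds on $\|\nabla\Loss_t\|^2$ into bounds in terms of $\Loss_t$ and $\phi(\Loss_t)$. For the upper bound on $\|\nabla\Loss_t\|^2$, \Cref{lem:gradient_hessian_bound_L} already yields $\|\nabla\Loss_t\|\lesssim \Loss_t(\rho_t^{M-1}+1)$. For the lower bound, project onto the radial direction to get $\|\nabla\Loss_t\|\ge v_t/\rho_t$, and then invoke \Cref{lem: Bound of v_t} together with $\ModifiedLoss_t<1/(ne^2)$ to get $v_t\ge M\Loss_t\,\phi(\ModifiedLoss_t)$. To unify these with $\LinkFun(\Loss_t)=\phi(\Loss_t)$ one needs the two-sided equivalence $\rho_t^M \eqsim \phi(\Loss_t)$, proved as follows. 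The upper bound $\phi(\Loss_t)\lesssim \rho_t^M$ follows from $\phi(\Loss_t)\le \bar f_{\max}(\param_t)\le\homoq(\rho_t)\lesssim \rho_t^M+1$ via \Cref{asp:nearhomo}. The lower bound uses $\GFmargin(\param_t)>\GDmargin(\param_s)>0$ from conclusion (1) of \Cref{thm:combined_argument}, giving $\phi(\ModifiedLoss_t)\ge\GDmargin(\param_s)\rho_t^M$, and then $\phi(\Loss_t)=\phi(\ModifiedLoss_t)+\homop_a(\rho_t)\ge\GDmargin(\param_s)\rho_t^M$. Since $\homop_a$ has degree at most $M-1$, the same calculation also shows $\phi(\ModifiedLoss_t)\gtrsim\phi(\Loss_t)$ for $t\ge s$ (absorbing the subleading correction), so the gradient lower bound becomes $\|\nabla\Loss_t\|\gtrsim \Loss_t\,\phi(\Loss_t)/\rho_t \gtrsim \Loss_t\,\phi(\Loss_t)^{1-1/M}$.

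Plugging these into the sandwich yields the claimed two-sided inequality with constants $C_1,C_2$ that depend only on $M$, $\GDmargin(\param_s)$, the coefficients of $\homop,\homoq$, and the constant $A$ in \Cref{asp:initial-cond-gd}. The main technical nuisance (rather than a deep obstacle) will be the bookkeeping in the bound $\rho_t^M\eqsim\phi(\Loss_t)$: one must carefully check that the lower-order offset $\homop_a(\rho_t)$ is genuinely absorbed into the leading polynomial behaviour uniformly in $t\ge s$, using $\deg\homop_a\le M-1$ (and the stipulation $\deg\homop_a\ge 1$ when $M\ge 2$) together with the boundedness of $\rho_s$ from below, so that both the $\phi(\ModifiedLoss_t)$-based and $\phi(\Loss_t)$-based expressions differ only by multiplicative constants. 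Once this equivalence is pinned down, the rest of the proof is a direct, routine combination of the Taylor step with the gradient/Hessian estimates already established in \Cref{lem:gradient_hessian_bound_L,lem:gradient_hessian_bound} and the $v_t$-bound in \Cref{lem: Bound of v_t}.
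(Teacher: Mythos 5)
Your proposal is essentially correct and follows the same route as the paper's proof: a one-step Taylor expansion of $\Loss$, a Hessian bound from \Cref{lem:gradient_hessian_bound_L} combined with the stepsize restriction to sandwich $\Loss_{t+1}-\Loss_t$ between constant multiples of $-\eta\|\nabla\Loss_t\|^2$, and then two-sided bounds on $\|\nabla\Loss_t\|$ (the upper bound from $\|\nabla\Loss_t\|\lesssim\Loss_t\rho_t^{M-1}$, the lower bound from the radial projection $\|\nabla\Loss_t\|\ge v_t/\rho_t$ together with \Cref{lem: Bound of v_t}). The only difference is one of bookkeeping: you package the final translation as explicit intermediate equivalences $\rho_t^M\eqsim\phi(\Loss_t)$ and $\phi(\ModifiedLoss_t)\eqsim\phi(\Loss_t)$ (correctly noting that the uniformity hinges on $\deg\homop_a\le M-1$, $\rho_t\ge\rho_s$, and $\GFmargin(\param_t)\ge\GDmargin(\param_s)>0$), whereas the paper chains directly through the definition of $\GFmargin$ and the inequality $M\phi(\Loss_t)-\homop'(\rho_t)\gtrsim\phi(\Loss_t)$ from \Cref{lem:property-pa}. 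Both resolve the same subleading-term absorption, so the proofs are interchangeable; the constants you obtain depend on the same quantities as claimed.
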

\begin{proof}
Using Taylor expansion, we get that for some $\epsilon \in [0, 1]$:
\[
\Loss_{t+1} - \Loss_t  = -\eta \|\nabla \Loss_t\|^2 +  \frac{\eta^2}{2} \nabla \Loss_t^\top \nabla^2 \Loss_{t+\epsilon} \nabla \Loss_t.  
\] 
Note that by our assumption,
\begin{align*}
    \frac{\eta^2}{2} \nabla \Loss_t^\top \nabla^2 \Loss_{t+\epsilon} \nabla \Loss_t \le \, & \frac{\eta^2}{2} \left\| \nabla^2 \Loss_{t+\epsilon} \right\| \left\| \nabla \Loss_t \right\|^2 \\
    \stackrel{(i)}{\le} \, & \frac{\eta^2}{2} \left\| \nabla \Loss_t \right\|^2 \cdot B_3 \Loss_{t+\epsilon} \left( \rho_{t+\epsilon}^{2M-2} + 1 \right) \\
    \stackrel{(ii)}{\le} \, & \frac{\eta}{4} \left\| \nabla \Loss_t \right\|^2,
\end{align*}
where $(i)$ follows from \Cref{lem:gradient_hessian_bound_L}, and $(ii)$ is due to our assumption. This implies that
\begin{equation*}
    - \frac{5}{4} \eta \|\nabla \Loss_t\|^2 \le \, \Loss_{t+1} - \Loss_t \le \, - \frac{3}{4} \eta \|\nabla \Loss_t\|^2.
\end{equation*}
We now establish lower and upper bounds on $\|\nabla \Loss_t\|$. On the one hand, we have
\begin{align*}
    \|\nabla \Loss_t \| \le \, & B_3 \Loss_t \left( \rho_t^{M-1} + 1 \right) \le \, C_2 \Loss_t \rho_t^{M-1} \le C_2 \Loss_t \left( \frac{\phi(\ModifiedLoss_t)}{\GFmargin (\param_t)} \right)^{1 - 1/M} \\
    \le \, & C_2 \Loss_t \left( \frac{\phi(\Loss_t)}{\GDmargin (\param_s)} \right)^{1 - 1/M} \le C_2 \Loss_t \phi(\Loss_t)^{1 - 1/M},
\end{align*}
where $C_2$ depends on $\GDmargin(\param_s)$, $\rho_s$, and $(M, \homop, \homoq, \homor, \homos)$. On the other hand,
\begin{align*}
    \| \nabla \Loss_t \| \ge \, & \frac{v_t}{\rho_t} \ge \frac{M\Loss_t \phi(\Loss_t) - \homop^\prime (\rho_t) \Loss_t}{\rho_t} &&\explain{ By \Cref{lem: Bound of v_t}} \\
    \ge \, & \frac{\GFmargin (\param_t)^{1/M} \Loss_t (M \phi(\Loss_t) - \homop^\prime (\rho_t))}{(\phi(\Loss_t) - \homop_a(\rho_t))^{1/M}} &&\explain{ By definition of $\GFmargin$ } \\ 
    \ge \, & M^{1/M} \frac{\GFmargin (\param_t)^{1/M} \Loss_t (M \phi(\Loss_t) - \homop^\prime (\rho_t))}{(M\phi(\Loss_t) - \homop^\prime (\rho_t))^{1/M}} &&\explain{ By \Cref{lem:property-pa} } \\ 
    = \, & M^{1/M} \GFmargin (\param_t)^{1/M} \Loss_t (M \phi(\Loss_t) - \homop^\prime (\rho_t))^{1-1/M}\\ 
    \ge \, & M^{1/M} \GFmargin (\param_t)^{1/M} \Loss_t \bigg(\LinkFun(\Loss_t)\bigg)^{1 - 1/M} &&\explain{ By Eq.~\Cref{eq:v_t log lb} } \\ 
    \ge \, & M^{1/M} \GDmargin (\param_s)^{1/M} \Loss_t \bigg(\LinkFun(\Loss_t)\bigg)^{1 - 1/M} &&\explain{ By monotonicity of $\GFmargin$ }\\ 
    := \, & C_1 \Loss_t \bigg(\LinkFun(\Loss_t)\bigg)^{1 - 1/M}. 
\end{align*}
This completes the proof of \Cref{eq: decrease of L}.
\end{proof}

We first show that $\Loss_t$ converges to zero and derive its convergence rate.

\begin{lemma}
[Loss convergence rate for GD]
\label{lem: Loss convergence GD}
Under the same assumptions as \Cref{lem: decrease of L}, we have
\begin{equation}
\label{eq: loss convergence GD}
        \Loss_t =  \Theta \bigg( \frac{1}{\eta t (\log  \eta t)^{2-2/M}} \bigg) \to 0, 
\end{equation}
as $t \to \infty$. Here, the constants hided by $\Theta$ only depend on $\GDmargin(\param_s)$, $\rho_s$, and $(M, \homop, \homoq, \homor, \homos)$.
\end{lemma}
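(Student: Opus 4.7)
The strategy is to reduce the discrete recursion in \Cref{lem: decrease of L} to the same kind of integral estimate that appears in the gradient flow arguments \Cref{lem:L-upperbound,lem:L-lowerbound}. Introducing $G_t := 1/(n\Loss_t)$, so that $\phi(\Loss_t) = \log G_t$, I will aim to prove the two-sided discrete bound
\begin{equation*}
\frac{C_1 \eta}{n}(\log G_t)^{2-2/M} \,\le\, G_{t+1} - G_t \,\le\, \frac{C_3 \eta}{n}(\log G_t)^{2-2/M}, \qquad t \ge s,
\end{equation*}
and then pass to a continuous integral to conclude $G_t = \Theta\big((\eta t/n)(\log(\eta t))^{2-2/M}\big)$.

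Since \Cref{lem: decrease of L} already implies that $\Loss_t$ is monotonically decreasing and $G_t$ is increasing, the lower bound on $G_{t+1}-G_t$ is immediate: dividing $\Loss_t - \Loss_{t+1} \ge C_1 \eta \Loss_t^2 \phi(\Loss_t)^{2-2/M}$ by $n\Loss_t\Loss_{t+1}$ and using $\Loss_{t+1} \le \Loss_t$ gives $G_{t+1}-G_t \ge C_1 \eta (\log G_t)^{2-2/M}/n$. For the companion upper bound I need a matching ratio estimate $\Loss_{t+1} \ge \Loss_t/2$. This will follow from the identity $\Loss_{t+1}/\Loss_t \ge 1 - C_2 \eta \Loss_t \phi(\Loss_t)^{2-2/M}$ together with the observation that, for $x$ small enough that $\phi(x) \ge 2$ (which holds at $t=s$ because $\ModifiedLoss_s < 1/(ne^2)$ forces $\phi(\Loss_s) \ge \homop_a(\rho_s) + 2 \ge 2$), the map $x \mapsto x\phi(x)^{2-2/M}$ is increasing. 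Since $\Loss_t$ is decreasing, $\eta \Loss_t \phi(\Loss_t)^{2-2/M}$ is therefore nonincreasing in $t$, so it suffices to bound it at the initial time. A short calculation in the spirit of \Cref{residual_estimates}, using the stepsize assumption $\ModifiedLoss_s \le 1/(B\eta)$ from \Cref{asp:initial-cond-gd} together with the polynomial estimates on $\homop_a$ from \Cref{lem:modified_p_a}, yields $\eta \Loss_s \phi(\Loss_s)^{2-2/M} \le 1/(2C_2)$ for $B$ chosen sufficiently large, giving the desired ratio bound uniformly in $t$.

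With the two-sided bound on $G_{t+1}-G_t$ in place I mimic the gradient flow proof. Define $S(x) := \int_{G_s}^x (\log \tau)^{-(2-2/M)}\, d\tau$. By the mean value theorem there exists $\xi_t \in [G_t, G_{t+1}]$ with $S(G_{t+1})-S(G_t) = (G_{t+1}-G_t)/(\log \xi_t)^{2-2/M}$. Because $G_{t+1}/G_t \le 2$, we have $\log \xi_t = \log G_t + O(1)$, so $(\log \xi_t)^{2-2/M}$ is comparable to $(\log G_t)^{2-2/M}$ up to a universal constant. This yields $S(G_{t+1}) - S(G_t) \eqsim \eta/n$, and telescoping from $s$ to $t$ gives $S(G_t) \eqsim \eta(t-s)/n$. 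Inverting via \Cref{lem:int-log}, which asserts $S^{-1}(y) = \Theta(y(\log y)^{2-2/M})$, produces $G_t = \Theta\big((\eta t/n)(\log \eta t)^{2-2/M}\big)$, equivalently $\Loss_t = \Theta\big(1/(\eta t (\log \eta t)^{2-2/M})\big)$, as claimed.

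The only genuinely new ingredient compared to the gradient flow case is the uniform propagation of the one-step ratio bound $\Loss_{t+1} \ge \Loss_t/2$, which is where the stepsize condition $\ModifiedLoss_s \le 1/(B\eta)$ enters essentially; once that is handled, the discrete-to-continuous comparison through $S$ is a routine transcription of the arguments used for the gradient flow.
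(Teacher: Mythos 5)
Your overall strategy coincides with the paper's: both start from the two-sided bound of \Cref{lem: decrease of L}, establish a one-step ratio estimate $\Loss_{t+1} \ge \Loss_t / 2$, pass to a telescoping comparison against an integral of the form $\int (\log \cdot)^{-(2-2/M)}$, and then invert. Working with $G_t = 1/(n\Loss_t)$ and $\int_{G_s}^x (\log\tau)^{-(2-2/M)}d\tau$ rather than directly with $\Loss_t$ and $\int_x^{\Loss_s} u^{-2}(\log(1/nu))^{-(2-2/M)}du$ is merely a change of variables; the inversion lemma you cite (\Cref{lem:int-log}) is the companion of \Cref{lem:int-log-2} that the paper invokes.

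The one genuine divergence is in how you get the ratio bound $\Loss_{t+1} \ge \Loss_t/2$. The paper derives it directly from $|\Loss_{t+1}-\Loss_t| \le \eta\|\nabla\Loss_t\|\|\nabla\Loss_{t+\epsilon}\| \le \eta R_1(t+\epsilon)\Loss_t \le \frac{1}{2}\Loss_t$, where the last inequality is literally one of the standing assumptions inherited from \Cref{lem: decrease of L}; no new constants are introduced. You instead argue that $x \mapsto x\phi(x)^{2-2/M}$ is increasing for $\phi(x)\ge 2$, so $\eta \Loss_t \phi(\Loss_t)^{2-2/M}$ is nonincreasing along the trajectory and it suffices to check $C_2\eta\Loss_s\phi(\Loss_s)^{2-2/M}\le 1/2$ at $t=s$. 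The monotonicity observation is correct and elegant, but the reduction is not free: the constant $C_2$ in \Cref{lem: decrease of L} already depends on $\GDmargin(\param_s)$ and $\rho_s$ (it absorbs a factor $\GDmargin(\param_s)^{-(1-1/M)}$ when the gradient bound $B_3\Loss_t(\rho_t^{M-1}+1)$ is converted to $C_2\Loss_t\phi(\Loss_t)^{1-1/M}$). Your claim that ``$B$ chosen sufficiently large'' closes the gap would therefore require verifying that, after unfolding $C_2$ and using $\eta\ModifiedLoss_s\le 1/B$ and $\phi(\ModifiedLoss_s)\le \homoq(\rho_s)$, the $\rho_s$- and $\GDmargin(\param_s)$-dependence actually cancels against the $e^{-\homop_a(\rho_s)}$ factor hidden in $\Loss_s = e^{-\homop_a(\rho_s)}\ModifiedLoss_s$ — which does appear to be the case for $M\ge 2$ (exponential beats polynomial) and is trivial for $M=1$, but this is exactly the careful bookkeeping that your phrase ``a short calculation in the spirit of \Cref{residual_estimates}'' is eliding. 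The paper sidesteps this verification entirely by reusing the assumed bound on $\eta R_1$, whose sufficiency under \Cref{asp:initial-cond-gd} is checked once and for all in \Cref{residual_estimates} without reference to $C_2$ or the margin. So: same skeleton, same integral-comparison engine, but your ratio-bound step routes through the derived constant $C_2$ where the paper routes through a primitive assumption, and the claim that the requisite initial bound holds ``for $B$ large enough'' is plausible but needs to be carried out explicitly to confirm that it does not quietly tie $B$ to quantities that \Cref{asp:initial-cond-gd} says it should not depend on.
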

\begin{proof}
With \Cref{lem: decrease of L}, we know that
\[
\frac{ \Loss_{t+1} - \Loss_t}{\Loss_t^2 \Big(\LinkFun(\Loss_t)\Big)^{2-2 / M}} \le - C_1 \eta. 
\] 
Now we consider the function, 
\[
S(x) \coloneqq \int_x^{\Loss_s} \frac{1}{u^2 \Big(\log \frac{1}{n u}\Big)^{2-2 / M}} \, \mathrm{d} u, \quad S^\prime (x) = -\frac{1}{x^2 \Big(\log \frac{1}{n x}\Big)^{2-2 / M}}.
\] 
When $x\le \Loss_s < \frac{1}{ne^2}$, we have $S^\prime (x)$ is increasing and $S(x)$ is convex. Hence, 
\[
S(\Loss_{t+1}) - S(\Loss_t)\ge S^\prime (\Loss_t) (\Loss_{t+1} - \Loss_t) \ge C_1 \eta.  
\] 
This leads to 
\[
S(\Loss_t) - S(\Loss_s) \ge C_1 \eta(t-s) \Rightarrow S(\Loss_t) = \Omega( \eta t).
\] 
Hence, we must have $S(\Loss_t) \to +\infty$ as $t \to \infty$. This leads to $\Loss_t \to 0$. Hence, there exists $t > s$ such that $\Loss_t < \Loss_s / 2$. To get an upper bound on $S(\Loss_t)$, we show that $\Loss_{t+1} \ge \Loss_t / 2$ for $t \ge s$. Note that
\begin{align*}
    \left\vert \Loss_{t+1} - \Loss_t \right\vert \le \, & \left\| \nabla \Loss_{t+\epsilon} \right\| \left\| \param_{t+1} - \param_t \right\| = \eta \left\| \nabla \Loss_t \right\| \left\| \nabla \Loss_{t+\epsilon} \right\| \\
    \stackrel{(i)}{\le} \, & \eta B_3^2 \Loss_t \Loss_{t+\epsilon} \left( \rho_t^{M - 1} + 1 \right) \left( \rho_{t+\epsilon}^{M - 1} + 1 \right) \\
    \stackrel{(ii)}{\le} \, & \eta \Loss_t B_3^2 \Loss_{t+\epsilon} \left( \rho_{t+\epsilon}^{M - 1} + 1 \right)^2 = \eta R_1 (t+\epsilon) \Loss_t \le \frac{1}{2} \Loss_t,
\end{align*}
where $(i)$ follows from \Cref{lem:gradient_hessian_bound_L}, $(ii)$ is because of $\rho_{t+\epsilon} \ge \rho_t$, and the last inequality is due to our assumption.
Hence, we have (note that $\Loss_{t+1} \le \Loss_t$)
\[
    \Loss_t^2 \Big(\LinkFun(\Loss_t)\Big)^{2-2 / M} \le 4 \Loss_{t+1}^2 \Big(\log \frac{1}{n \Loss_{t+1}}\Big)^{2-2 / M}. 
\] 
Therefore, 
\[
    \frac{ \Loss_{t+1} - \Loss_t}{\Loss_{t+1}^2 \Big(\log \frac{1}{n \Loss_{t+1}}\Big)^{2-2 / M}} \ge - 4 C_2 \eta. 
\] 
Now since $\Loss_{t+1} \le \Loss_{t+\epsilon} \le \Loss_t$, we have
\[
S(\Loss_{t+1}) - S(\Loss_t)  = S^\prime (\Loss_{t+\epsilon}) (\Loss_{t+1} - \Loss_t)  \le  S^\prime (\Loss_{t+1}) (\Loss_{t+1} - \Loss_t) \le 4 C_2 \eta. 
\] 
This leads to 
\[
S(\Loss_t) - S(\Loss_s) \le 4 C_2 \eta(t-s) \Rightarrow S(\Loss_t) = O( \eta t).
\] 
Therefore, $S(\Loss_t) = \Theta( \eta t)$, where $\Theta$ only hides a constant depending on $C_1$ and $C_2$.
By \Cref{lem:int-log-2}, we know that 
\[
    S^{-1}(x) = \Theta \bigg( \frac{1}{x (\log(nx))^{2-2/M}} \bigg). 
\]
Hence, 
\[
\Loss_t =  \Theta \bigg( \frac{1}{\eta t (\log  \eta t)^{2-2/M}} \bigg) \to 0.
\] 
This completes the proof of \Cref{lem: Loss convergence GD}. 
\end{proof}

\begin{lemma}
[Parameter convergence rate for GD]
\label{lem: Parameter convergence rate for GD}
Under the same assumptions as \Cref{lem: decrease of L}, we have
\begin{equation}
\label{eq:param-convergence-rate-gd}
       \rho_t = \Theta  ( \log \eta t)^{\frac{1}{M}},  
\end{equation}
as $t \to \infty$. Here, the constants hided by $\Theta$ only depend on $\GDmargin(\param_s)$, $\rho_s$, and $(M, \homop, \homoq, \homor, \homos)$.
\end{lemma}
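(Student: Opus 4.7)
}
The plan is to mirror the strategy used for gradient flow in \Cref{lem:rho-rate}, namely to sandwich $\rho_t^M$ between two quantities of order $\log(1/\Loss_t)$, and then invoke the already-established rate $\Loss_t = \Theta(1/(\eta t(\log\eta t)^{2-2/M}))$ from \Cref{lem: Loss convergence GD} to conclude.

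For the upper bound on $\rho_t$, I will use the modified margin. By \Cref{thm: Margin improving and convergence-gd}, $\GDmargin(\param_t)$ is nondecreasing, so $\GDmargin(\param_t)\ge \GDmargin(\param_s)>0$. By \Cref{lem:relation_modified_margins}, $\GFmargin(\param_t)>\GDmargin(\param_t)\ge \GDmargin(\param_s)$ (valid since $\ModifiedLoss_t\le \ModifiedLoss_s<1/n$). Using the definition $\GFmargin(\param_t)=\phi(\ModifiedLoss_t)/\rho_t^M$ and the trivial bound $\phi(\ModifiedLoss_t)=\phi(\Loss_t)-\homop_a(\rho_t)\le \phi(\Loss_t)=\log(1/(n\Loss_t))$, I obtain
\[
\rho_t^M\le \frac{\phi(\Loss_t)}{\GDmargin(\param_s)}=O\!\left(\log\tfrac{1}{\Loss_t}\right).
\]

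For the lower bound, I will use the near-homogeneity-based growth bound of \Cref{lem:near-homogeneity} applied at the (constant) radius $r=\rho_s$: there is a constant $B_{\rho_s}$ with $|f(\param;\xB_i)|\le B_{\rho_s}\rho_t^M$ for $\rho_t\ge \rho_s$, hence $\bar f_{\min}(\param_t)\le B_{\rho_s}\rho_t^M$. Combined with $\bar f_{\min}(\param_t)\ge \log(1/(n\Loss_t))=\phi(\Loss_t)$ from \Cref{lem:f-min}, this gives $\rho_t^M\ge \phi(\Loss_t)/B_{\rho_s}=\Omega(\log(1/\Loss_t))$. Putting both bounds together yields $\rho_t^M=\Theta(\log(1/\Loss_t))$, and substituting the rate from \Cref{lem: Loss convergence GD} gives $\log(1/\Loss_t)=\Theta(\log \eta t)$, completing the proof.

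I expect no serious obstacle here: the argument is structurally identical to the gradient flow case in \Cref{lem:rho-rate}, and the only novelty is replacing the GF-specific inputs (monotone $\GFmargin$ and the GF loss rate) with their GD counterparts (monotone $\GDmargin$, \Cref{lem:relation_modified_margins}, and the GD loss rate \Cref{lem: Loss convergence GD}). The one mild subtlety is that one must be careful to use $\GDmargin(\param_s)$ rather than $\GFmargin(\param_s)$ as the lower bound in the upper-bound step, since $\GDmargin$ is the quantity guaranteed to be monotone under GD; this is handled cleanly by the comparison $\GFmargin>\GDmargin$ already established.
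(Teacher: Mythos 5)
Your proof is correct and takes essentially the same route as the paper: the upper bound via $\rho_t^M = \phi(\ModifiedLoss_t)/\GFmargin(\param_t) \le \phi(\Loss_t)/\GDmargin(\param_s)$ (using the monotonicity of $\GDmargin$ together with $\GFmargin > \GDmargin$), and the lower bound via the near-homogeneity bound $\bar f_{\min}(\param_t) \le B\rho_t^M$ paired with $\bar f_{\min}(\param_t) \ge \phi(\Loss_t)$, then plugging in the loss rate. In fact you make explicit the step $\GFmargin(\param_t) > \GDmargin(\param_t) \ge \GDmargin(\param_s)$ that the paper's one-line display uses implicitly.
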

\begin{proof}
    Note that by definition
    \[
    \rho_t^M = \frac{\LinkFun(\ModifiedLoss_t)}{\GFmargin(\param_t)} \le \frac{\LinkFun(\Loss_t)}{\GDmargin(\param_s)} = O(\log \eta t). 
    \] 
On the other hand, by the near-homogeneity of $f$, we have 
\[
    B \rho_t^M \ge \bar f_{\min} (\param_t) \ge \log \frac{1}{n \Loss_t} \Rightarrow \rho_t^M = \Omega( \log \eta t).
\] 
This completes the proof of \Cref{lem: Parameter convergence rate for GD}.
\end{proof}

Combining \Cref{lem: Loss convergence GD,lem: Parameter convergence rate for GD} concludes the proof of convergence rates in \Cref{thm: Margin improving and convergence-gd}.

\subsection{Directional Convergence: Proof of Theorem \ref{thm: directional convergence-gd} (Part 1)}\label{sec:direct_converge_gd}

Similar to the proof in \Cref{sec:proof:direct}, the main idea is to construct the desingularizing function $\Phi$. For GD, we use $\GDmargin (\param_t)$ to denote $\GDmargin(\param_t)$ and $\zeta_t$ to denote $\|\tilde{\param}_{t+1} - \tilde{\param}_t \|$. Our goal is to show that $\sum_{t\ge s} \zeta_t < \infty$. This leads to that the curve swept by $\tilde{\param}_t$ has finite length, and that $\tilde{\param}_t$ converges to a limit, thus proving \Cref{thm: directional convergence-gd}.

The rest of this section will be devoted to the proof of $\sum_{t\ge s} \zeta_t < \infty$. Since $\{ \GDmargin(\param_t) \}_{t \ge s}$ is increasing and bounded from above, we know that
\begin{equation*}
    \gamma_* = \, \lim_{t \to \infty} \GDmargin(\param_t)
\end{equation*}
exists. To show that $\sum_{t\ge s} \zeta_t < \infty$, we prove the following lemma. 

\begin{lemma}
[Desingularizing function for GD]
\label{lem: Desingularizing function for GD}
There exist $R>0, \nu>0$ and a definable desingularizing function $\Psi$ on $[0, \nu)$, such that for all large enough $t$ with $\left\|\param_t\right\|>R$ and $\GDmargin(\param_t)>\gamma_*-\nu$, and all $\alpha \in [0, 1]$, it holds that
$$
\zeta_t \leq c \Psi^\prime \left(\gamma_*-\GDmargin (\param_{t+\alpha}) \right) (\GDmargin (\param_{t+1}) - \GDmargin (\param_t)). 
$$
for some constant $c>0$. As a consequence, we have
\begin{equation*}
    \zeta_t \leq c \left( \Psi \left(\gamma_*-\GDmargin (\param_t) \right) - \Psi \left(\gamma_*-\GDmargin (\param_{t+1}) \right) \right).
\end{equation*}
\end{lemma}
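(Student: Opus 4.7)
\medskip

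\noindent\textbf{Proposal for proving Lemma~\ref{lem: Desingularizing function for GD}.} The plan is to discretize the GF argument of Lemma~\ref{lem: Existence of desingularizing function}, using the interpolated iterates $\param_{t+\alpha}$ and the four estimates in Theorem~\ref{thm:combined_argument} as a replacement for the chain rule. First, I will relate the arc increment $\zeta_t=\|\tilde\param_{t+1}-\tilde\param_t\|$ to the perpendicular part of the gradient: writing $\param_{t+1}=\param_t-\eta\nabla\Loss_t$ and projecting the displacement onto the tangent space of the unit sphere at $\tilde\param_t$, one obtains $\zeta_t \le C\eta\|\bar\partial_\perp\Loss_t\|/\rho_t$ up to a second-order correction controlled by $\eta R_1(t)$, which by the analog of Lemma~\ref{residual_estimates} is at most a constant. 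This is the discrete counterpart of $\dif\zeta_t/\dif t=\|\bar\partial_\perp\Loss_t\|/\rho_t$.

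Next, I will derive a discrete lower bound on the margin increment that plays the role of \eqref{eq: decomp-gamma}. Plugging $\rho_{t+\alpha}^2-\rho_t^2\ge 2\eta\alpha v_t$ from item~(2) into item~(4) of Theorem~\ref{thm:combined_argument} gives
\[
\log\GDmargin(\param_{t+1})-\log\GDmargin(\param_t) \;\gtrsim\; \eta M \frac{\|\bar\partial_\perp\Loss_t\|^2}{v_t} \;+\; (\text{radial contribution}),
\]
and combined with the bound on $\zeta_t$ above this shows
\[
\GDmargin(\param_{t+1})-\GDmargin(\param_t) \;\gtrsim\; \rho_t\,\|\bar\partial_\perp\Loss_t\|\cdot \zeta_t.
\]
I will also control the radial contribution in exactly the same way by lower bounding $\log\GDmargin(\param_{t+1})-\log\GDmargin(\param_t)$ using $v_t/\rho_t^2$; the resulting expression mirrors the GF identity up to multiplicative constants that are uniform in $\eta$ thanks to \Cref{asp:initial-cond-gd}.

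With these two discrete surrogates in hand, I will carry out the same two-case analysis as in the proof of Lemma~\ref{lem: Existence of desingularizing function}, thresholded by $\|\tilde\partial_\perp\GDmargin(\param_t)\|\lessgtr \rho_t^{1/8}\|\tilde\partial_r\GDmargin(\param_t)\|$. The analogs of inequalities \eqref{eq: lowerbound partial_r a}--\eqref{eq: partial_r a large partial_r gamma^c} transfer verbatim since they only rely on \Cref{asp:nearhomo} and the margin/loss properties of $\param_t$, both of which are provided by \Cref{thm: Margin improving and convergence-gd}. In the radial-dominant case, apply Lemma~\ref{lem:KL2} with $\lambda=1/4$ to the definable function $\gamma_*-\GDmargin$ to get $\Psi_1'(\gamma_*-\GDmargin(\param_{t+\alpha}))\rho_{t+\alpha}^{5/4}\|\bar\partial\GDmargin(\param_{t+\alpha})\|\gtrsim 1$; in the perpendicular-dominant case use Lemma~\ref{lem:KL1} with $\eta=1/8,c=1$. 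Setting $\Psi=\max\{\Psi_1,\Psi_2\}$ on a suitable interval $[0,\nu)$ yields the pointwise bound $\zeta_t\le c\,\Psi'(\gamma_*-\GDmargin(\param_{t+\alpha}))\bigl(\GDmargin(\param_{t+1})-\GDmargin(\param_t)\bigr)$ for some $\alpha\in[0,1]$, where the choice of $\alpha$ comes from a discrete mean value argument applied along the continuous path $\alpha\mapsto\GDmargin(\param_{t+\alpha})$. Finally, the telescoping consequence $\zeta_t\le c\bigl(\Psi(\gamma_*-\GDmargin(\param_t))-\Psi(\gamma_*-\GDmargin(\param_{t+1}))\bigr)$ follows from the fundamental theorem of calculus applied to $\Psi(\gamma_*-\GDmargin(\param_{t+\alpha}))$ over $\alpha\in[0,1]$ together with monotonicity of $\GDmargin$.

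The principal obstacle is the discretization: in GF one may differentiate $\GFmargin(\param_t)$ directly and match radial/perpendicular parts of $\bar\partial\GFmargin$ with $\bar\partial\Loss$, whereas in GD the analogous matching must be routed through the interpolated iterate $\param_{t+\alpha}$ and through second-order Taylor corrections. Controlling those corrections uniformly in $\eta$ requires the quadratic scaling built into \Cref{asp:initial-cond-gd} (with constant $B$) and the Hessian bounds in \Cref{lem:gradient_hessian_bound,lem:gradient_hessian_bound_L}, exactly as in \Cref{residual_estimates}. A subtler point is that the $\tilde\partial$-decomposition in Lemma~\ref{lem:Decomposition of tilde subdifferential} was proved for GF using the arc chain rule; for GD I will need to re-derive the corresponding radial/perpendicular identities as inequalities between $\param_t,\param_{t+1}$, which introduces $O(\eta)$ error terms that are absorbed into the near-homogeneous offsets $\homop_a,\homop$ exactly as in the proof of Theorem~\ref{thm:combined_argument}.
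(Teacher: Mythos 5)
Your proposal tracks the paper's proof quite closely: the same two‐case analysis thresholded by $\rho_t^{1/8}$, the same use of Lemma~\ref{lem:KL1} and Lemma~\ref{lem:KL2}, the combination $\Psi=\max\{\Psi_1,\Psi_2\}$, and the final mean‐value step to get the telescoping inequality. The discrete replacements you extract from Theorem~\ref{thm:combined_argument} (parts (2) and (4)) for $\mathrm d\zeta_t/\mathrm d t$ and $\mathrm d\GFmargin/\mathrm d t$ are the right moves; the paper packages these as Lemma~\ref{lem: Decomposition of raidal and spherical parts for GD}, and the radial bounds you claim ``transfer'' correspond to Lemma~\ref{lem: Inequalities between a and gamma GD} (which is proved afresh, since the GD surrogate $a_t=e^{\Phi(\ModifiedLoss_t)}$ differs from the GF surrogate $\phi(\Loss_t)-\homop_a(\rho_t)$, but the conclusions are indeed analogous).

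One point you slide over is not cosmetic. Lemma~\ref{lem:KL2} produces $\Psi_1'(\gamma_*-\GDmargin(\param_{t+\alpha}))\,\rho_{t+\alpha}^{5/4}\,\|\bar\partial\GDmargin(\param_{t+\alpha})\|\gtrsim 1$ at the interpolated point, whereas your discrete margin‐increment estimate and the radial/perpendicular decomposition are expressed at $\param_t$. To splice the two you need a perturbation lemma asserting that $\|\param_{t+\alpha}\|\lesssim\|\param_t\|$ and, more delicately, $\|\partial\GDmargin(\param_{t+\alpha})\|\lesssim\|\partial\GDmargin(\param_t)\|$ uniformly in $\alpha\in[0,1]$; the paper proves exactly this as Lemma~\ref{lem:gd_interpolation_bound}, and its proof requires a dedicated lower bound on $\|\partial\GDmargin(\param_t)\|$ and careful Hessian‐type bounds on the auxiliary quantity $\nabla a(\param)-Ma(\param)\param/\|\param\|^2$. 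This step cannot be absorbed into the generic ``$O(\eta)$ error terms'' you gesture at; it is a separate estimate whose proof uses Lemma~\ref{lem:gradient_hessian_bound} and the convergence rates from Section~\ref{sec:converge_rate_gd}. Adding this lemma to your plan closes the gap, and after that the rest of your outline follows the paper's argument.
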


Once we have this lemma, we immediately know that 
\[
\sum_{t\ge s} \zeta_t \le c \sum_{t\ge s} \left( \Psi \left(\gamma_*-\GDmargin (\param_t) \right) - \Psi \left(\gamma_*-\GDmargin (\param_{t+1}) \right) \right) < + \infty.   
\] 
The remainder of this section will be devoted to the proof of \Cref{lem: Desingularizing function for GD}.

Recall that we defined the spherical parts and radial parts of $\partial f(\param) = \nabla f(\param)$ for a differentiable function $f$.  In this section, we also use $a_t$ to denote $e^{\Phi(\ModifiedLoss_t)}$, where we recall that
\begin{equation*}
    \Phi(x) = \log \phi(x) - \frac{2}{\phi(x)}, \quad \Phi'(x) = - \frac{1}{x \phi(x)} - \frac{2}{x \phi(x)^2}.
\end{equation*}
We will first show that

\begin{lemma}
 [Decomposition of radial and spherical parts for GD]
\label{lem: Decomposition of raidal and spherical parts for GD}
Under \Cref{asp:initial-cond-gd}, we have for all sufficiently large $t \ge s$, 
\begin{equation}
\label{eq: decomp-gamma-GD}
    \GDmargin (\param_{t+1}) - \GDmargin (\param_t) \ge c \eta \Big(\| \partial_r \GDmargin (\param_t) \| \| \partial_r \Loss_t\| + \| \partial_\perp \GDmargin (\param_t) \| \| \partial_\perp \Loss_t\| ]\Big).
\end{equation}
and 
\begin{equation}
\label{eq: decomp-zeta-GD}
\|\partial_\perp \GDmargin (\param_t) \|= \frac{\|\partial_\perp a_t \|}{\rho_t^M} , \quad \text{ and } \quad \zeta_t \le  \eta \frac{\| \partial_\perp \Loss_t\|}{\rho_t}.
\end{equation}
\end{lemma}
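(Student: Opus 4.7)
The proof has two parts, corresponding to the two displays; the second is largely computational while the first adapts the gradient-flow argument in \Cref{lem:decomp-radial-spherical} to the discrete setting.

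\textbf{Second display.} Writing $\GDmargin(\param) = a(\param)/\|\param\|^M$ with $a(\param) := e^{\Phi(\ModifiedLoss(\param))}$, the factor $\|\param\|^M$ depends on $\param$ only through its norm and thus has zero perpendicular gradient. This immediately gives $\partial_\perp\GDmargin(\param_t) = \partial_\perp a_t/\rho_t^M$, hence the claimed identity. For the bound on $\zeta_t$, I will compute
\[
\tilde\param_{t+1} - \tilde\param_t = \frac{(\rho_t - \rho_{t+1})\param_t - \eta\rho_t \nabla\Loss_t}{\rho_t\rho_{t+1}},
\]
decompose $\nabla\Loss_t$ into its radial part of signed magnitude $-v_t/\rho_t$ and its perpendicular part of norm $\|\partial_\perp\Loss_t\|$, and invoke the identity $\rho_{t+1}^2 = (\rho_t+\eta\|\partial_r\Loss_t\|)^2 + \eta^2\|\partial_\perp\Loss_t\|^2$ furnished by part (2) of \Cref{thm:combined_argument}. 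The radial contributions cancel, and the remainder is bounded by $\eta\|\partial_\perp\Loss_t\|/\rho_t$ upon using $\rho_{t+1}\ge\rho_t$.

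\textbf{First display: perpendicular contribution.} Since both summands on the right-hand side are nonnegative, it suffices to establish the perpendicular and radial lower bounds separately and then take the minimum of the two constants. The perpendicular bound follows by chaining parts (2) and (4) of \Cref{thm:combined_argument}: part (4) supplies
\[
\log\GDmargin(\param_{t+1}) - \log\GDmargin(\param_t) \ge \frac{M\rho_t^2\|\partial_\perp\Loss_t\|^2}{v_t^2}\log(\rho_{t+1}/\rho_t),
\]
while part (2) yields $\log(\rho_{t+1}/\rho_t) \ge \eta v_t/\rho_t^2 \cdot (1-o(1))$ for large $t$, since the norm grows only polylogarithmically. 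Multiplying these, applying $e^x-1\ge x$ to convert $\log\GDmargin$ differences into $\GDmargin$ differences, and substituting the asymptotic $v_t = M\Loss_t\phi(\ModifiedLoss_t)(1+o(1))$ from \Cref{lem: Bound of v_t} together with the explicit formula $\|\partial_\perp\GDmargin(\param_t)\| = a_t|\Phi'(\ModifiedLoss_t)|e^{\homop_a(\rho_t)}\|\partial_\perp\Loss_t\|/\rho_t^M$ produces the desired perpendicular bound with a constant tending to $M$.

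\textbf{First display: radial contribution and main obstacle.} For the bound $\GDmargin(\param_{t+1}) - \GDmargin(\param_t) \ge c\eta\|\partial_r\GDmargin(\param_t)\|\|\partial_r\Loss_t\|$, my plan is a first-order Taylor expansion
\[
\GDmargin(\param_{t+1}) - \GDmargin(\param_t) = -\eta\langle\nabla\GDmargin(\param_t),\nabla\Loss_t\rangle + O\bigl(\eta^2\|\nabla^2\GDmargin(\xi)\|\cdot\|\nabla\Loss_t\|^2\bigr),
\]
decomposing the inner product along the radial and perpendicular directions and showing that $\partial_r\GDmargin$ and $-\partial_r\Loss_t$ are co-oriented, in parallel with the gradient-flow analog \Cref{lem:Decomposition of tilde subdifferential}. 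The chief obstacle is that $\|\partial_r\GDmargin\|$ is small: the two leading $Ma_t/\rho_t^{M+1}$ contributions in $\langle\nabla\GDmargin,\tilde\param\rangle$ nearly cancel, leaving only a $2Ma_t/(\rho_t^{M+1}\phi(\ModifiedLoss_t))$ term plus genuinely lower-order corrections. Absorbing the second-order Taylor remainder into this small linear term requires the Hessian estimate \Cref{lem:gradient_hessian_bound} together with the smallness condition $\eta\ModifiedLoss_s = O(1)$ from \Cref{asp:initial-cond-gd}, which suffices for all sufficiently large $t$ and completes the proof.
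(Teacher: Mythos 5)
Your treatment of the second display is correct, though it takes a different route from the paper's. You compute $\tilde\param_{t+1}-\tilde\param_t$ directly and verify $\|\tilde\param_{t+1}-\tilde\param_t\|^2\le\eta^2\|\partial_\perp\Loss_t\|^2/\rho_t^2$ algebraically (I checked: with $u=\rho_t+\eta v_t/\rho_t$, $w=\eta\|\partial_\perp\Loss_t\|$, the radial discrepancy is $\rho_t(\sqrt{u^2+w^2}-u)\le w^2/2$, so the bound follows with room to spare). The paper instead shows $\alpha\mapsto\|\mathrm{d}\tilde\param_{t+\alpha}/\mathrm{d}\alpha\|$ is non-increasing and integrates. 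Both work; yours is more self-contained, the paper's reuses the GF identity $\|\mathrm{d}\tilde\param_t/\mathrm{d}t\|=\eta\|\partial_\perp\Loss_t\|/\rho_t$. Your plan for the perpendicular half of the first display, chaining parts (2) and (4) of \Cref{thm:combined_argument} with $e^x-1\ge x$, is also plausible and is genuinely different from the paper's derivation.

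The gap is in the radial half of the first display. First, the Taylor remainder you need to control involves $\|\nabla^2\GDmargin(\xi)\|$, but \Cref{lem:gradient_hessian_bound}, which you cite, bounds $\|\nabla^2\ModifiedLoss\|$, not $\|\nabla^2\GDmargin\|$. Since $\GDmargin=e^{\Phi(\ModifiedLoss)}/\rho^M$ and $\Phi$ contains a $\log\phi$ term, the Hessian of $\GDmargin$ involves $\Phi''$, $(\Phi')^2$, and $\nabla\ModifiedLoss\nabla\ModifiedLoss^\top$ pieces that require their own estimate; the paper only establishes a cousin of this bound much later (\Cref{lem:gd_interpolation_bound}, for $\nabla A$ with $A=\rho^M\nabla\GDmargin$), and it is not automatic from the $\ModifiedLoss$ bound. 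Second, and more seriously, the Taylor remainder scales as $\eta^2\|\nabla^2\GDmargin\|\|\nabla\Loss_t\|^2$, which is dominated by $\eta^2\|\nabla^2\GDmargin\|\|\partial_\perp\Loss_t\|^2$ when the perpendicular gradient is large; such a term cannot be absorbed into the (much smaller) radial linear term $\eta\|\partial_r\GDmargin\|\|\partial_r\Loss_t\|$, which is the goal of your radial bound. It can only be absorbed into the full linear term, including its perpendicular piece — at which point you would already have the first display directly, making the split-and-take-minimum framing moot.

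The paper's proof circumvents both problems via the product structure $\GDmargin=a_t/\rho_t^M$: it lower-bounds $a_{t+1}-a_t$ by convexity of $x\mapsto e^{\Phi(x)}$ (Taylor-expanding only $\ModifiedLoss$, where the cited Hessian bound genuinely applies), lower-bounds $\rho_{t+1}^{-M}-\rho_t^{-M}$ separately, and then shows the two correction terms are each at most a small fraction of $\mathrm{d}\GDmargin(\param_t)/\mathrm{d}t$, using the lower bound $\mathrm{d}\GDmargin(\param_t)/\mathrm{d}t\gtrsim\eta a_t\|\nabla\Loss_t\|^2/(\rho_t^M\phi(\ModifiedLoss_t)v_t)$ that keeps the full $\|\nabla\Loss_t\|^2$, precisely so that the $\|\partial_\perp\Loss_t\|^2$-heavy remainders have somewhere to go. To make your sketch rigorous you would need to either derive the $\GDmargin$-Hessian estimate and absorb the remainder into the \emph{sum} of both linear pieces (thereby proving the first display in one shot), or adopt the paper's factorized decomposition.
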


\begin{proof}
    Recall from our definition that
    \begin{align*}
        & \param_{t + \alpha} = \param_t - \alpha \eta \nabla \Loss_t, \quad \ModifiedLoss_{t+\alpha} = \ModifiedLoss (\param_{t+\alpha}), \\
        & a_{t+\alpha} = \exp (\Phi(\ModifiedLoss_{t+\alpha})), \quad \rho_{t+\alpha} = \| \param_{t+\alpha} \|, \quad \GDmargin (\param_{t + \alpha}) = \frac{a_{t+\alpha}}{\rho_{t + \alpha}^M}.
    \end{align*}
    We first show that
    \begin{equation*}
        \frac{\mathrm{d} \GDmargin(\param_{t+\alpha})}{\mathrm{d} \alpha}  \Big\vert_{\alpha = 0} = \frac{\kd \GDmargin(\param_t)}{\kd t} = \eta \Big(\| \partial_r \GDmargin (\param_t) \| \| \partial_r \Loss_t\| + \| \partial_\perp \GDmargin (\param_t) \| \| \partial_\perp \Loss_t\| ]\Big).
    \end{equation*}
    To this end, we follow the proof scheme of \Cref{lem:decomp-radial-spherical}. Note that
    \begin{equation*}
        \frac{\kd \GDmargin(\param_t)}{\kd t} = \langle \partial \GDmargin (\param_t), - \eta \nabla \Loss_t \rangle = \eta \left( \langle \partial_r \GDmargin(\param_t), - \partial_r \Loss_t \rangle + \langle \partial_\perp \GDmargin(\param_t), - \partial_\perp \Loss_t \rangle \right),
    \end{equation*}
    it suffices to prove the following two statements:
    \begin{itemize}
        \item [(a)] $\langle \partial_r \GDmargin(\param_t), \param_t \rangle$ and $\langle -\partial_r \Loss_t, \param_t \rangle $ have the same sign. 
        \item [(b)] $\partial_\perp \GDmargin(\param_t)$ and $-\partial_\perp \Loss_t$ point to the same direction.  
\end{itemize}
\paragraph{Proof of (a).} It suffices to show that $\langle \nabla \GDmargin(\param_t), \param_t \rangle$ and $- \langle \nabla \Loss_t, \param_t \rangle$ have the same sign. By \Cref{lem: Bound of v_t}, we have $- \langle \nabla \Loss_t, \param_t \rangle = v_t \ge 0$. Next, we show that $\langle \nabla \GDmargin(\param_t), \param_t \rangle \ge 0$. Note that
\begin{equation*}
    \nabla \GDmargin (\param_t) = \frac{\nabla a_t}{\rho_t^M} - \frac{M a_t \tilde{\param}_t}{\rho_t^{M+1}},
\end{equation*}
which implies
\begin{equation*}
    \langle \nabla \GDmargin(\param_t), \param_t \rangle = \frac{\langle \nabla a_t, \param_t \rangle}{\rho_t^M} - \frac{M a_t}{\rho_t^{M}}.
\end{equation*}
Through direct calculation, we obtain that
\begin{align*}
    & \langle \nabla a_t, \param_t \rangle = a_t \Phi' (\ModifiedLoss_t) \langle \nabla \ModifiedLoss_t, \param_t \rangle \\
    = \, & a_t \Phi' (\ModifiedLoss_t) \left\langle e^{\homop_a (\rho_t)} \nabla \Loss_t + \ModifiedLoss_t \homop_a' (\rho_t) \tilde{\param}_t, \param_t \right\rangle \\
    = \, & a_t \Phi' (\ModifiedLoss_t) \left( - e^{\homop_a (\rho_t)} v_t + \ModifiedLoss_t \homop_a' (\rho_t) \rho_t \right).
\end{align*}
From the proof of \Cref{lem: Convexity of Phi}, we know that
\begin{equation*}
     \Phi'(x) < - \frac{1}{x \phi(x)} < 0, \quad 0 < x < \frac{1}{n e^2}.
\end{equation*}
Further since (see, e.g., the proof of \Cref{thm:combined_argument})
\begin{equation*}
    e^{\homop_a (\rho_t)} v_t - \ModifiedLoss_t \homop_a' (\rho_t) \rho_t \ge M \ModifiedLoss_t \phi(\ModifiedLoss_t),
\end{equation*}
we obtain that
\begin{align*}
    \langle \nabla a_t, \param_t \rangle = \, & a_t \Phi' (\ModifiedLoss_t) \left( - e^{\homop_a (\rho_t)} v_t + \ModifiedLoss_t \homop_a' (\rho_t) \rho_t \right) \\
    \ge \, & a_t \frac{1}{\ModifiedLoss_t \phi(\ModifiedLoss_t)} \cdot M \ModifiedLoss_t \phi(\ModifiedLoss_t) = M a_t,
\end{align*}
which leads to $\langle \nabla \GDmargin(\param_t), \param_t \rangle \ge 0$. This proves part (a).

\paragraph{Proof of (b).} Straightforward calculation leads to
\begin{align*}
    \partial_\perp \GDmargin(\param_t) = \, & \nabla \GDmargin (\param_t) - \langle \nabla \GDmargin(\param_t), \tilde{\param}_t \rangle \tilde{\param}_t = \frac{\nabla a_t - \langle \nabla a_t, \tilde{\param}_t \rangle \tilde{\param}_t}{\rho_t^M} \\
    = \, & a_t \Phi'(\ModifiedLoss_t) e^{\homop_a (\rho_t)} \left( \nabla \Loss_t - \langle \nabla \Loss_t, \tilde{\param}_t \rangle \tilde{\param}_t \right) \\
    = \, & a_t \Phi'(\ModifiedLoss_t) e^{\homop_a (\rho_t)} \partial_{\perp} \Loss_t,
\end{align*}
which points to the same direction as $- \partial_{\perp} \Loss_t$ since $\Phi'(\ModifiedLoss_t) < 0$.

To complete the proof, we only need to show that
\begin{equation*}
    \GDmargin (\param_{t+1}) - \GDmargin (\param_t) \ge c \frac{\kd \GDmargin(\param_t)}{\kd t}
\end{equation*}
for some constant $c$. We first compute
\begin{equation*}
    \frac{\kd \GDmargin(\param_t)}{\kd t} = \frac{a_t^{\prime}}{\rho_t^M} - \frac{M a_t \rho_t'}{\rho_t^{M+1}}.
\end{equation*}
Further since $x \mapsto e^{\Phi (x)}$ and $\alpha \mapsto \rho_{t + \alpha}^{-M} = \| \param_{t} - \alpha \nabla \Loss_t \|^{-M}$ are all convex, it follows that
\begin{align*}
    & \GDmargin (\param_{t+1}) - \GDmargin (\param_t) = \, \frac{a_{t+1}}{\rho_{t+1}^M} - \frac{a_t}{\rho_t^M} \\
    = \, & \frac{a_{t+1} - a_t}{\rho_{t+1}^M} + a_t \left( \rho_{t+1}^{-M} - \rho_t^{-M} \right) \\
    \ge \, & \frac{a_t \Phi'(\ModifiedLoss_t) (\ModifiedLoss_{t+1} - \ModifiedLoss_t)}{\rho_{t+1}^M} - \frac{M a_t \rho_t'}{\rho_t^{M+1}} \\
    \stackrel{(i)}{\ge} \, & \frac{a_t \Phi'(\ModifiedLoss_t) \ModifiedLoss_t'}{\rho_{t+1}^M} - \frac{M a_t \rho_t'}{\rho_t^{M+1}} - \frac{a_t \vert \Phi'(\ModifiedLoss_t) \vert \vert \ModifiedLoss_{t + \epsilon}'' \vert }{2 \rho_{t+1}^M} \\
    \stackrel{(ii)}{\ge} \, & \frac{a_t \Phi'(\ModifiedLoss_t) \ModifiedLoss_t'}{\rho_t^M} - \frac{M a_t \rho_t'}{\rho_t^{M+1}} - \frac{M a_t \Phi'(\ModifiedLoss_t) \ModifiedLoss_t' \rho_t'}{\rho_t^{M+1}} - \frac{a_t \vert \Phi'(\ModifiedLoss_t) \vert \vert \ModifiedLoss_{t + \epsilon}'' \vert }{2 \rho_{t+1}^M} \\
    = \, & \frac{\kd \GDmargin(\param_t)}{\kd t} - \frac{M a_t \Phi'(\ModifiedLoss_t) \ModifiedLoss_t' \rho_t'}{\rho_t^{M+1}} - \frac{a_t \vert \Phi'(\ModifiedLoss_t) \vert \vert \ModifiedLoss_{t + \epsilon}'' \vert }{2 \rho_{t+1}^M},
\end{align*}
where $(i)$ follows from Taylor expansion: $\epsilon \in [0, 1]$, and $(ii)$ follows again from the fact that $\alpha \mapsto \rho_{t + \alpha}^{-M}$ is convex. To proceed, we first give a lower bound on $\frac{\kd \GDmargin(\param_t)}{\kd t}$. By definition,
\begin{align*}
    \frac{\kd \GDmargin(\param_t)}{\kd t} = \, & \frac{1}{\rho_t^{M+1}} \left( a_t^\prime \rho_t - M a_t \rho_t' \right) = \frac{\eta a_t}{\rho_t^{M}} \left( \Phi'(\ModifiedLoss_t) \left\langle \nabla \ModifiedLoss_t, - \nabla \Loss_t \right\rangle - M  \frac{v_t}{\rho_t^2} \right) \\
    = \, & \frac{\eta a_t}{\rho_t^{M}} \left( - \Phi'(\ModifiedLoss_t) \left( e^{\homop_a (\rho_t)} \| \nabla \Loss_t \|^2 - \ModifiedLoss_t \homop_a' (\rho_t) \frac{v_t}{\rho_t} \right) - M  \frac{v_t}{\rho_t^2} \right).
\end{align*}
To deal with this term, we note that
\begin{align*}
    & - \Phi'(\ModifiedLoss_t) \left( e^{\homop_a (\rho_t)} \| \nabla \Loss_t \|^2 - \ModifiedLoss_t \homop_a' (\rho_t) \frac{v_t}{\rho_t} \right) - M  \frac{v_t}{\rho_t^2} \\
    \stackrel{(i)}{=} \, & \frac{1}{\ModifiedLoss_t \phi (\ModifiedLoss_t)} \left( e^{\homop_a (\rho_t)} \| \nabla \Loss_t \|^2 - \ModifiedLoss_t \homop_a' (\rho_t) \frac{v_t}{\rho_t} \right) - M  \frac{v_t}{\rho_t^2} \\
    & + \frac{2}{\ModifiedLoss_t \phi (\ModifiedLoss_t)^2} \left( e^{\homop_a (\rho_t)} \| \nabla \Loss_t \|^2 - \ModifiedLoss_t \homop_a' (\rho_t) \frac{v_t}{\rho_t} \right) \\
    \stackrel{(ii)}{\ge} \, & \frac{2}{\ModifiedLoss_t \phi (\ModifiedLoss_t)^2} \left( e^{\homop_a (\rho_t)} \| \nabla \Loss_t \|^2 - \ModifiedLoss_t \homop_a' (\rho_t) \frac{v_t}{\rho_t} \right),
\end{align*}
where $(i)$ is due to $\Phi'(x) = - 1 / x \phi(x) - 2 / x \phi(x)^2$, and $(ii)$ follows from the fact that
\begin{align*}
    e^{\homop_a (\rho_t)} \| \nabla \Loss_t \|^2 - \ModifiedLoss_t \homop_a' (\rho_t) \frac{v_t}{\rho_t} \ge \, & e^{\homop_a (\rho_t)} \frac{v_t^2}{\rho_t^2} - \ModifiedLoss_t \homop_a' (\rho_t) \frac{v_t}{\rho_t} && \explain{By Cauchy-Schwarz} \\ 
    = \, & \frac{v_t}{\rho_t^2} \left( e^{\homop_a (\rho_t)} v_t - \ModifiedLoss_t \homop_a' (\rho_t) \rho_t \right) \\
    \ge \, & \frac{v_t}{\rho_t^2} \left( M \ModifiedLoss_t \phi(\Loss_t) - \homop'(\rho_t) \ModifiedLoss_t - \ModifiedLoss_t \homop_a' (\rho_t) \rho_t \right) && \explain{By \Cref{lem: Bound of v_t}} \\
    \ge \, & \frac{v_t}{\rho_t^2} \left( M \ModifiedLoss_t \phi(\Loss_t) - M \homop_a (\rho_t) \ModifiedLoss_t \right) && \explain{By \Cref{lem:modified_p_a}} \\
    = \, &  M \ModifiedLoss_t \phi (\ModifiedLoss_t) \frac{v_t}{\rho_t^2}.
\end{align*}
We thus obtain that
\begin{align}\label{eq:initial_lowerbd_diff_GDmargin}
\frac{\kd \GDmargin(\param_t)}{\kd t} 
\ge \, & \frac{\eta a_t}{\rho_t^{M}} \frac{2}{\ModifiedLoss_t \phi(\ModifiedLoss_t)^2} \left( e^{\homop_a (\rho_t)} \| \nabla \Loss_t \|^2 - \ModifiedLoss_t \homop_a' (\rho_t) \frac{v_t}{\rho_t} \right) \\
\ge \, & \frac{\eta a_t}{\rho_t^{M}} \frac{2}{\ModifiedLoss_t \phi(\ModifiedLoss_t)^2} \left( e^{\homop_a (\rho_t)} \| \nabla \Loss_t \|^2 - \ModifiedLoss_t \homop_a' (\rho_t) \rho_t \frac{\| \nabla \Loss_t \|^2}{v_t} \right)  &&\explain{ By Cauchy-Schwarz } \notag \\
= \, & \frac{\eta a_t}{\rho_t^{M}} \frac{2}{\ModifiedLoss_t \phi(\ModifiedLoss_t)^2} \frac{\| \nabla \Loss_t \|^2}{v_t} \left( e^{\homop_a (\rho_t)} v_t - \ModifiedLoss_t \homop_a' (\rho_t) \rho_t \right) \notag  \\
\ge \, & \frac{\eta a_t}{\rho_t^{M}} \frac{2}{\ModifiedLoss_t \phi(\ModifiedLoss_t)^2} \frac{\| \nabla \Loss_t \|^2}{v_t} M \ModifiedLoss_t \phi(\ModifiedLoss_t)  &&\explain{ Similar to the proof of $(ii)$ above } \notag \\ 
= \,& \frac{\eta M a_t}{\rho_t^{M}} \frac{2}{\phi(\ModifiedLoss_t)} \frac{\| \nabla \Loss_t \|^2}{v_t} \notag .
\end{align}
We next prove two useful claims:
\begin{itemize}
    \item [(c)] There exists a constant $c_1 < 1/2$, such that
    \begin{equation*}
        \frac{a_t \vert \Phi'(\ModifiedLoss_t) \vert \vert \ModifiedLoss_{t + \epsilon}'' \vert }{2 \rho_{t+1}^M} \le c_1 \frac{\eta M a_t}{\rho_t^{M}} \frac{2}{\phi(\ModifiedLoss_t)} \frac{\| \nabla \Loss_t \|^2}{v_t} \le c_1 \frac{\kd \GDmargin(\param_t)}{\kd t}.
    \end{equation*}
    \item [(d)] There exists a constant $c_2 < 1/2$, such that
    \begin{equation*}
        \frac{M a_t \Phi'(\ModifiedLoss_t) \ModifiedLoss_t' \rho_t'}{\rho_t^{M+1}} \le c_2 \frac{a_t \vert \ModifiedLoss_t' \vert}{\rho_t^{M}} \frac{2}{\ModifiedLoss_t \phi(\ModifiedLoss_t)^2} \le c_2 \frac{\kd \GDmargin(\param_t)}{\kd t}.
    \end{equation*}
\end{itemize}

\paragraph{Proof of (c).} 
According to \Cref{lem:gradient_hessian_bound_L} and \Cref{lem:gradient_hessian_bound}, we have
\begin{equation*}
    \vert \ModifiedLoss_{t + \epsilon}'' \vert \le \, \eta^2 \| \nabla^2 \ModifiedLoss_{t+\epsilon} \| \| \nabla \Loss_t \|^2 \le B_4 \eta^2 \ModifiedLoss_{t+\epsilon} (\rho_{t+\epsilon}^{2M-2} + 1) \| \nabla \Loss_t \|^2
\end{equation*}
for sufficiently large $t$. This implies
\begin{align*}
    \frac{a_t \vert \Phi'(\ModifiedLoss_t) \vert \vert \ModifiedLoss_{t + \epsilon}'' \vert }{2 \rho_{t+1}^M} \le \, & B_4 \eta^2 \| \nabla \Loss_t \|^2 \frac{a_t \vert \Phi'(\ModifiedLoss_t) \vert \ModifiedLoss_t (\rho_{t+1}^{M-2} + 1)}{2} \\
    \le \, & C \eta^2 \| \nabla \Loss_t \|^2 a_t \vert \Phi'(\ModifiedLoss_t) \vert \ModifiedLoss_t (\rho_t^{M-2}+1) \\
    \le \, & C \eta^2 \| \nabla \Loss_t \|^2 a_t \frac{1}{\phi(\ModifiedLoss_t)} \rho_t^{M-2},
\end{align*}
where the last line follows from the fact
\begin{equation*}
    \left\vert \Phi'(\ModifiedLoss_t) \right\vert = \, \frac{1}{\ModifiedLoss_t \phi(\ModifiedLoss_t)} + \frac{2}{\ModifiedLoss_t \phi(\ModifiedLoss_t)^2} \le \frac{C}{\ModifiedLoss_t \phi(\ModifiedLoss_t)}
\end{equation*}
when $\ModifiedLoss_t < 1/ne^2$.
Hence, it suffices to show that
\begin{align*}
    C \eta^2 \| \nabla \Loss_t \|^2 a_t \frac{1}{\phi(\ModifiedLoss_t)} \rho_t^{M-2} \le c_1 \frac{\eta M a_t}{\rho_t^{M}} \frac{2}{\phi(\ModifiedLoss_t)} \frac{\| \nabla \Loss_t \|^2}{v_t}
    \Longleftrightarrow \, C \eta \rho_t^{2M-2} v_t \le c_1,
\end{align*}
where $C$ depends on $\ModifiedLoss_s$ and $M$. Using \Cref{lem:gradient_hessian_bound_L}, we know that for sufficiently large $t$,
\begin{equation*}
    \| \nabla \Loss_t \| \le B_3 \Loss_t (\rho_t^{M-1} + 1) \le C \Loss_t \rho_t^{M-1},
\end{equation*}
thus leading to
\begin{align*}
    C \eta \rho_t^{2M-2} v_t \le \, C \eta \rho_t^{2M-1} \| \nabla \Loss_t \| \le C \eta \rho_t^{3M-2} \Loss_t,
\end{align*}
which can be arbitrarily small as $t \to \infty$ due to our result on convergence rates of $\rho_t$ and $\Loss_t$. This proves part (c). Note that our proof works for any $\eta$ and sufficiently large $t$, and the constant $c_1$ can be arbitrarily small.

\paragraph{Proof of (d).} In Eq.~\Cref{eq:initial_lowerbd_diff_GDmargin}, we already proved that
\begin{align*}
    \frac{\kd \GDmargin(\param_t)}{\kd t} \ge \, & \frac{\eta a_t}{\rho_t^{M}} \frac{2}{\ModifiedLoss_t \phi(\ModifiedLoss_t)^2} \left( e^{\homop_a (\rho_t)} \| \nabla \Loss_t \|^2 - \ModifiedLoss_t \homop_a' (\rho_t) \frac{v_t}{\rho_t} \right) \\
    = \, & \frac{a_t \vert \ModifiedLoss_t' \vert}{\rho_t^{M}} \frac{2}{\ModifiedLoss_t \phi(\ModifiedLoss_t)^2}.
\end{align*}
Therefore, it suffices to show that
\begin{equation*}
    \frac{M a_t \vert \Phi'(\ModifiedLoss_t) \vert \vert \ModifiedLoss_t' \vert \rho_t'}{\rho_t^{M+1}} \le c_2 \frac{a_t \vert \ModifiedLoss_t' \vert}{\rho_t^{M}} \frac{2}{\ModifiedLoss_t \phi(\ModifiedLoss_t)^2},
\end{equation*}
which reduces to
\begin{equation*}
    \frac{M \vert \Phi'(\ModifiedLoss_t) \vert \rho_t'}{\rho_t} \le c_2 \frac{2}{\ModifiedLoss_t \phi(\ModifiedLoss_t)^2}.
\end{equation*}
Since
\begin{equation*}
    \vert \Phi'(\ModifiedLoss_t) \vert \le \frac{C}{\ModifiedLoss_t \phi(\ModifiedLoss_t)}, \quad \rho_t' = \eta \frac{v_t}{\rho_t},
\end{equation*}
we deduce that
\begin{align*}
    \frac{M \vert \Phi'(\ModifiedLoss_t) \vert \rho_t'}{\rho_t} \le \eta \frac{C M}{\ModifiedLoss_t \phi(\ModifiedLoss_t)} \frac{v_t}{\rho_t^2}.
\end{align*}
It remains to show
\begin{equation*}
    C \eta v_t \le c_2 \frac{\rho_t^2}{\phi(\ModifiedLoss_t)}.
\end{equation*}
Using our estimates on $v_t$ from the proof of part (c), we know that
\begin{equation*}
    C \eta v_t \le C \eta \rho_t^{M} \Loss_t.
\end{equation*}
To complete the proof, we need an upper bound on $\phi(\ModifiedLoss_t)$. By definition,
\begin{equation*}
    \phi (\ModifiedLoss_t) = \log \frac{1}{n e^{\homop_a(\rho_t)} \Loss_t} \le \log \frac{1}{\Loss_t}.
\end{equation*}
We thus obtain that
\begin{equation*}
    C \eta v_t \phi(\ModifiedLoss_t) \le C \eta \rho_t^{M} \Loss_t \log \frac{1}{\Loss_t} \ll \rho_t^2
\end{equation*}
as $t \to \infty$, which is easily seen from the converges rates of $\Loss_t$ and $\rho_t$.  Similarly, $c_2$ can be arbitrarily small and our proof works for any $\eta > 0$.

Combining parts (c) and (d), it follows immediately that
\begin{equation*}
    \GDmargin (\param_{t+1}) - \GDmargin (\param_t) \ge (1 - c_1 - c_2) (\GDmargin)' (t),
\end{equation*}
completing the proof of \Cref{eq: decomp-gamma-GD}. The first part of \Cref{eq: decomp-zeta-GD} can be shown by direct calculation. For the second part, we have
\begin{align*}
    \zeta_t = \, \| \tilde{\param}_{t+1} - \tilde{\param}_t \| = \left\| \int_{0}^{1} \frac{\mathrm{d} \tilde{\param}_{t+z} }{\mathrm{d} z } \mathrm{d} z \right\| \le \int_{0}^{1} \left\| \frac{\mathrm{d} \tilde{\param}_{t+z} }{\mathrm{d} z } \right\| \mathrm{d} z.
\end{align*}
We will show that $\| \dot{\tilde{\param}}_{t+z} \| := \left\| \frac{\mathrm{d} \tilde{\param}_{t+z} }{\mathrm{d} z } \right\|$ is a non-increasing function of $z$, so that
\begin{equation*}
    \zeta_t \le \, \| \dot{\tilde{\param}}_t \| = \eta \frac{\| \partial_\perp \Loss_t\|}{\rho_t},
\end{equation*}
where the last equality follows from the proof of \cref{lem:decomp-radial-spherical}. Note that by direct calculation,
\begin{align*}
    \left\| \dot{\tilde{\param}}_{t+z} \right\| = \, & \left\| \frac{1}{\rho_{t+z}} \frac{\mathrm{d}  \param_{t + z}}{\mathrm{d} z} - \frac{1}{\rho_{t+z}} \bigg\langle\frac{\mathrm{d} \param_{t + z}}{\mathrm{d} z}, \tilde{\param}_{t+z} \bigg \rangle \cdot \tilde{\param}_{t+z} \right\| \\
    \le \, & \frac{\eta}{\rho_t} \left\| \nabla \Loss (\param_t) - \bigg\langle \nabla \Loss (\param_t), \tilde{\param}_{t+z} \bigg \rangle \cdot \tilde{\param}_{t+z} \right\| \\
    \le \, & \frac{\eta}{\rho_t} \left\| \nabla \Loss (\param_t) - \bigg\langle \nabla \Loss (\param_t), \tilde{\param}_t \bigg \rangle \cdot \tilde{\param}_t \right\| = \left\| \dot{\tilde{\param}}_{t} \right\|,
\end{align*}
since $\rho_{t+z} \ge \rho_t$ and $\langle - \nabla \Loss (\param_t), \tilde{\param}_t \rangle \ge 0$. This proves \Cref{eq: decomp-zeta-GD}.
\end{proof}

Then, we need some inequalities to connect $\big( \partial_r \Loss_t, \partial_\perp \Loss_t \big)$ and $\big( \partial_r\GDmargin (\param_t), \partial_\perp \GDmargin (\param_t) \big)$.  The main idea is to show that their ratios are close. i.e., 
\[
    \frac{\| \partial_\perp \Loss_t\|}{\| \partial_r \Loss_t\|} \approx \frac{\| \partial_\perp \GDmargin (\param_t)\|}{\| \partial_r \GDmargin (\param_t)\|}.
\] 
The technique is to use $a_t$ to bridge these two ratios. To this end, we need the following auxiliary result:

\begin{lemma}\label{lem:gd_interpolation_bound}
    There exists a constant $C > 0$ such that for all sufficiently large $t \ge s$ and $\alpha \in [0, 1]$,
    \begin{equation}\label{eq:gd_interpolation_bound}
        \| \param_{t+\alpha} \| \le C \| \param_t \|, \quad \| \partial \GDmargin (\param_{t+\alpha}) \| \le C \| \partial \GDmargin (\param_t) \|.
    \end{equation}
\end{lemma}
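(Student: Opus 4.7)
The plan is to begin with the norm comparability $\|\param_{t+\alpha}\| \le C\|\param_t\|$. Since $\param_{t+\alpha} = \param_t - \alpha\eta\nabla\Loss_t$, the triangle inequality gives $\|\param_{t+\alpha}\| \le \rho_t + \eta\|\nabla\Loss_t\|$. Invoking \Cref{lem:gradient_hessian_bound_L}, one has $\|\nabla\Loss_t\| \le B_3\Loss_t(\rho_t^{M-1}+1)$; combining this with the convergence rates from \Cref{lem: Loss convergence GD} and \Cref{lem: Parameter convergence rate for GD}, namely $\Loss_t = O(1/(\eta t(\log\eta t)^{2-2/M}))$ and $\rho_t = \Theta((\log\eta t)^{1/M})$, the perturbation satisfies $\eta\|\nabla\Loss_t\|/\rho_t = O(1/(t(\log\eta t)^{1/M})) \to 0$, so for all sufficiently large $t$ and every $\alpha\in[0,1]$ we get $\|\param_{t+\alpha}\|/\rho_t \le 1+o(1)$, which proves the first bound (with $C=2$, say).

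Next, I would establish that the scalar quantities controlling $\GDmargin$ are comparable along the interpolation path. From the previous step $\rho_{t+\alpha}\eqsim\rho_t$. From the proof of \Cref{lem: Loss convergence GD} we already have $\Loss_{t+1}\ge\Loss_t/2$ for large $t$, which by the same argument interpolated yields $\Loss_{t+\alpha}\eqsim\Loss_t$; combined with $\homop_a(\rho_{t+\alpha})=\homop_a(\rho_t)+o(1)$ (since $\rho_{t+\alpha}-\rho_t=o(1)$ times $\rho_t^{2-M}$ when $M\ge 2$, and trivially when $M=1$), this gives $\ModifiedLoss_{t+\alpha}\eqsim\ModifiedLoss_t$ and hence $\phi(\ModifiedLoss_{t+\alpha})\eqsim\phi(\ModifiedLoss_t)$ and $a_{t+\alpha}=e^{\Phi(\ModifiedLoss_{t+\alpha})}\eqsim a_t$.

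For the gradient comparison, since $f(\param;\xB)$ is twice differentiable under \Cref{asp:initial-cond-gd}, so is $\GDmargin$ away from the origin, and $\partial\GDmargin(\param)=\{\nabla\GDmargin(\param)\}$. The chain rule gives
\[
\nabla\GDmargin(\param)=\frac{a(\param)}{\|\param\|^M}\Phi'(\ModifiedLoss(\param))\nabla\ModifiedLoss(\param)-\frac{Ma(\param)}{\|\param\|^{M+1}}\tilde\param,
\]
with $\nabla\ModifiedLoss=e^{\homop_a(\|\param\|)}\nabla\Loss+\ModifiedLoss\,\homop_a'(\|\param\|)\tilde\param$. I would then bound $\|\nabla\Loss_{t+\alpha}\|$ in terms of $\|\nabla\Loss_t\|$ through the mean-value inequality $\|\nabla\Loss_{t+\alpha}-\nabla\Loss_t\|\le\alpha\eta\sup_{s\in[0,\alpha]}\|\nabla^2\Loss_{t+s}\|\,\|\nabla\Loss_t\|$, where \Cref{lem:gradient_hessian_bound_L} gives $\|\nabla^2\Loss_{t+s}\|\le B_3\Loss_{t+s}(\rho_{t+s}^{2M-2}+1)$, and the condition $\eta\sup R_1\le 1/2$ from \Cref{residual_estimates} together with the previous comparability estimates implies this multiplicative correction is at most a constant. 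Plugging the above comparabilities into the explicit formula for $\nabla\GDmargin$ term by term, and using $|\Phi'(x)|\le C/(x\phi(x))$ for $x<1/ne^2$, each factor changes by at most a bounded multiplicative constant, which yields $\|\nabla\GDmargin(\param_{t+\alpha})\|\le C\|\nabla\GDmargin(\param_t)\|$.

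The main obstacle is bookkeeping: the formula for $\nabla\GDmargin$ is a sum of two terms of possibly different magnitudes, and one needs to ensure that the upper bound on $\|\nabla\GDmargin(\param_{t+\alpha})\|$ can be absorbed into a multiplicative constant times $\|\nabla\GDmargin(\param_t)\|$ rather than an additive term that could dominate. This is handled by noting that the step $\alpha\eta\nabla\Loss_t$ is asymptotically negligible compared to $\rho_t$, so the path from $\param_t$ to $\param_{t+\alpha}$ stays in a ``small'' Euclidean neighborhood relative to all the length scales that enter $\nabla\GDmargin$, and every quantity varies by a factor arbitrarily close to $1$ as $t\to\infty$; the uniform constant $C$ in the statement then comes from taking $t$ past some threshold depending on $(\eta,\ModifiedLoss_s,M,\homop,\homoq,A,\homor,\homos)$.
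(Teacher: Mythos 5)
Your first inequality, $\|\param_{t+\alpha}\|\le C\|\param_t\|$, is fine (the paper actually derives the simpler bound $\eta\|\nabla\Loss_t\|\le\rho_t$ directly from $\Loss_t<\tfrac{1}{n}e^{-\homop_a(\rho_t)}$, but your rate-based argument also works). The genuine gap is in the second inequality. You claim that, since $\rho_{t+\alpha}\eqsim\rho_t$, $\ModifiedLoss_{t+\alpha}\eqsim\ModifiedLoss_t$, $a_{t+\alpha}\eqsim a_t$, $\|\nabla\Loss_{t+\alpha}\|\eqsim\|\nabla\Loss_t\|$, etc., one can ``plug the comparabilities into the explicit formula for $\nabla\GDmargin$ term by term''. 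But $\nabla\GDmargin$ is a \emph{difference} of two terms whose radial components nearly cancel: along $\tilde\param_t$ one has $\langle\nabla a_t,\param_t\rangle = Ma_t\bigl(1+\Theta(1/\phi(\ModifiedLoss_t))\bigr)$, while the second term contributes $-Ma_t/\rho_t$, so the radial part of $\nabla\GDmargin(\param_t)$ is of order $a_t/(\rho_t^{M+1}\phi(\ModifiedLoss_t))$, smaller than either term by a factor $\phi(\ModifiedLoss_t)\to\infty$. Knowing that each factor individually moves by a multiplicative $1+o(1)$ does not control the \emph{ratio} of the differences: a toy example is $T_1=1+\epsilon$, $T_2=1$ with $\epsilon$ tiny, where perturbing $T_1$ by $1\%$ multiplies $T_1-T_2$ by roughly $1+0.01/\epsilon$, which blows up. So the final step is not justified.

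The paper circumvents this exactly because it never compares term by term. Setting $A(\param)=\nabla a(\param)-Ma(\param)\param/\|\param\|^2$ (so $\nabla\GDmargin=\|\param\|^{-M}A$), it first proves the \emph{lower bound} $\|A(\param_t)\|\ge \tfrac{Ma_t}{\rho_t}\cdot\tfrac{2}{\phi(\ModifiedLoss_t)}$, then applies a mean-value bound on the full vector, $\|A(\param_{t+\alpha})-A(\param_t)\|\le \eta\|\nabla\Loss_t\|\,\|\nabla A(\param_{t+\epsilon})\|$, and finally shows the right-hand side is $\lesssim \eta\ModifiedLoss_t a_t e^{-\homop_a(\rho_t)}\rho_t^{3M-3}$, which is eventually smaller than the lower bound on $\|A(\param_t)\|$ because $e^{-\homop_a(\rho_t)}\rho_t^{3M-2}\phi(\ModifiedLoss_t)\to 0$. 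This simultaneously absorbs the cancellation and yields a clean $\|A(\param_{t+\alpha})\|\le 2\|A(\param_t)\|$. If you want to salvage your outline, you would need to replace the term-by-term step with this kind of ``lower bound on $\|A\|$ plus a Lipschitz estimate for the whole vector $A$'' argument; the comparability estimates you derived are still useful inputs to bounding $\|\nabla A(\param_{t+\epsilon})\|$, but they cannot substitute for it.
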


\begin{proof}
    We first prove $\| \param_{t+\alpha} \| \le C \| \param_t \|$. By definition,
    \begin{equation*}
        \| \param_{t+\alpha} \| \le \, \| \param_t \| + \eta \| \nabla \Loss (\param_t) \|.
    \end{equation*}
    From the proof of \Cref{lem: Decomposition of raidal and spherical parts for GD}, we get the following estimate:
    \begin{equation*}
        \| \nabla \Loss (\param_t) \| \le \, B_1 \Loss_t \rho_t^{M-1} \le \frac{B_1}{n} \exp(- \homop_a(\rho_t)) \rho_t^{M-1} \le \rho_t
    \end{equation*}
    for large enough $t$. Hence, $\| \param_{t+\alpha} \| \le C \| \param_t \|$. For the second part of \Cref{eq:gd_interpolation_bound}, we have for any $\param$,
    \begin{align*}
        \partial \GDmargin (\param) = \, & \frac{1}{\| \param \|^M} \left( \nabla a(\param) - M a(\param) \frac{\param}{\| \param \|^2} \right),
    \end{align*} 
    where $a(\param) = \exp(\Phi(\ModifiedLoss(\param)))$. It suffices to show that
    \begin{equation*}
        \left\| \nabla a(\param_{t+\alpha}) - M a(\param_{t+\alpha}) \frac{\param_{t+\alpha}}{\| \param_{t+\alpha} \|^2} \right\| \le \, C \left\| \nabla a(\param_t) - M a(\param_t) \frac{\param_t}{\| \param_t \|^2} \right\|.
    \end{equation*}
    To this end, we first give a lower bound on the right hand side. By Cauchy-Schwarz inequality,
    \begin{equation}\label{eq:lowerbd_A_theta}
    \begin{split}
        & \left\| \nabla a(\param_t) - M a(\param_t) \frac{\param_t}{\| \param_t \|^2} \right\| \ge \, \frac{1}{\rho_t} \left\vert \langle \nabla a_t, \param_t \rangle - M a_t \right\vert \\
        \ge \, & \frac{1}{\rho_t} \left\vert a_t \Phi' (\ModifiedLoss_t) \left( - e^{\homop_a (\rho_t)} v_t + \ModifiedLoss_t \homop_a' (\rho_t) \rho_t \right) - M a_t \right\vert \\
        \stackrel{(i)}{\ge} \, & \frac{1}{\rho_t} a_t \frac{2}{\ModifiedLoss_t \phi(\ModifiedLoss_t)^2} \cdot M \ModifiedLoss_t \phi(\ModifiedLoss_t) = \frac{M a_t}{\rho_t} \cdot \frac{2}{\phi(\ModifiedLoss_t)}, 
    \end{split}
    \end{equation}
    where $(i)$ follows similarly as in the proof of \Cref{lem: Decomposition of raidal and spherical parts for GD}. Denoting
    \begin{equation*}
        A(\param) = \, \nabla a(\param) - M a(\param) \frac{\param}{\| \param \|^2},
    \end{equation*}
    then we have for some $\epsilon \in [0, \alpha]$,
    \begin{align*}
        \left\Vert A(\param_{t+\alpha}) - A(\param_t) \right\Vert \le \, & \| \nabla A(\param_{t+\epsilon}) \| \| \param_{t+\alpha} - \param_t \| \\
        = \, & \eta \| \nabla \Loss (\param_t) \| \| \nabla A(\param_{t+\epsilon}) \|.
    \end{align*}
    We now upper bound $\| \nabla A(\param_{t+\epsilon}) \|$ in the above display. For any $\param$,
    \begin{align*}
        \| \nabla A(\param) \| = \, & \left\| \nabla^2 a(\param) - M \frac{\param \nabla a (\param)^\top}{\| \param \|^2} - M a(\param) \left( \frac{\| \param \|^2 I - \param \param^\top}{\| \param \|^4} \right) \right\| \\
        \le \, & \Vert \nabla^2 a(\param) \Vert + M \frac{\| \nabla a(\param) \|}{\| \param \|} + M a(\param) \frac{1}{\| \param \|^2}.
    \end{align*}
    We estimate these terms respectively. For $\| \param \|$ large enough:
    \begin{align*}
        \Vert \nabla a(\param) \Vert = \, & a(\param) \vert \Phi'(\ModifiedLoss(\param)) \vert \Vert \nabla \ModifiedLoss(\param) \Vert \le C_1 a(\param) \vert \Phi'(\ModifiedLoss(\param)) \vert \ModifiedLoss(\param) \left( \| \param \|^{M-1} + 1 \right) \\
        \le \, & C_1 a(\param) \frac{\| \param \|^{M-1} + 1}{\phi(\ModifiedLoss(\param))} \le C a(\param) \frac{\| \param \|^{M-1}}{\phi(\ModifiedLoss(\param))}, \\
        \Vert \nabla^2 a(\param) \Vert \le \, & \vert \Phi'(\ModifiedLoss(\param)) \vert \Vert \nabla \ModifiedLoss(\param) \Vert \Vert \nabla a(\param) \Vert + a(\param) \vert \Phi'' (\ModifiedLoss(\param)) \vert \Vert \nabla \ModifiedLoss(\param) \Vert^2 + a(\param) \vert \Phi' (\ModifiedLoss(\param)) \vert \Vert \nabla^2 \ModifiedLoss(\param) \Vert \\
        \le \, & C_1^2 a(\param) \frac{\left( \| \param \|^{M-1} + 1 \right)^2}{\phi(\ModifiedLoss(\param))^2} + C_1 a(\param) \frac{\left( \| \param \|^{M-1} + 1 \right)^2}{\phi(\ModifiedLoss(\param))} + C_2 a(\param) \frac{\| \param \|^{2M-2} + 1}{\phi(\ModifiedLoss(\param))} \\
        \le \, & C a(\param) \frac{\left( \| \param \|^{M-1} + 1 \right)^2}{\phi(\ModifiedLoss(\param))} \le C a(\param) \frac{\| \param \|^{2M - 2} }{\phi(\ModifiedLoss(\param))}.
    \end{align*}
    Here, the upper bounds on $\| \nabla \ModifiedLoss(\param) \|$ and $\| \nabla^2 \ModifiedLoss(\param) \|$ are due to \Cref{lem:gradient_hessian_bound}. We thus obtain that
    \begin{align*}
        \| \nabla A(\param) \| \le \, C \left( a(\param) \frac{ \| \param \|^{2 M - 2}}{\phi(\ModifiedLoss(\param))} + M a(\param) \frac{ \| \param \|^{M - 2}}{\phi(\ModifiedLoss(\param))} + M \frac{a(\param)}{\| \param \|^2} \right) \le C a(\param) \| \param \|^{2 M - 2}
    \end{align*}
    for sufficiently large $\| \param \|$.
    Further since $\vert \ModifiedLoss (\param_{t+\epsilon} ) - \ModifiedLoss (\param_t) \vert = o (\vert \ModifiedLoss (\param_t) \vert)$ as $t \to \infty$, we know that for sufficiently large $t$, $a(\param_{t+\epsilon}) \le 2 a(\param_t)$,
    thus leading to
    \begin{equation*}
        \| \nabla A(\param_{t+\epsilon}) \| \le \, C a(\param_{t+\epsilon}) \rho_{t+\epsilon}^{2M - 2} \le C a(\param_t) \rho_t^{2M - 2}.
    \end{equation*}
    It finally follows that
    \begin{align*}
        \left\Vert A(\param_{t+\alpha}) - A(\param_t) \right\Vert \le \, & \eta \| \nabla \Loss (\param_t) \| \| \nabla A(\param_{t+\epsilon}) \| \le C \eta \| \nabla \Loss_t \| a_t \rho_t^{2M - 2} \\
        \le \, & C \eta \Loss_t a_t \rho_t^{3M - 3} \le C \eta \ModifiedLoss_t a_t \exp(- \homop_a(\rho_t)) \rho_t^{3M - 3} \\
        \le \, & \frac{M a_t}{\rho_t} \cdot \frac{2}{\phi(\ModifiedLoss_t)} \le \Vert A(\param_t) \Vert
    \end{align*}
    for sufficiently large $t$,
    where the last inequality is just \Cref{eq:lowerbd_A_theta}, and the second-to-last one is because of $\ModifiedLoss_t \phi(\ModifiedLoss_t) \le C$ for any $\ModifiedLoss_t < 1 / ne^2$.
    This finally leads to
    \begin{equation*}
        \left\| \nabla a(\param_{t+\alpha}) - M a(\param_{t+\alpha}) \frac{\param_{t+\alpha}}{\| \param_{t+\alpha} \|^2} \right\| \le \, 2 \left\| \nabla a(\param_t) - M a(\param_t) \frac{\param_t}{\| \param_t \|^2} \right\|,
    \end{equation*}
    completing the proof of \Cref{lem:gd_interpolation_bound}.
\end{proof}

\begin{lemma}
    [Inequalities between $a_t$ and $\GDmargin (\param_t)$]
    \label{lem: Inequalities between a and gamma GD}
    Under \Cref{asp:initial-cond-gd}, for all sufficiently large $t\ge s$,
    \begin{equation}
    \label{eq: lowerbound partial_r a GD}
        \| \partial_r a_t\| \ge M \GDmargin(\param_s) \rho_t^{M-1}, 
    \end{equation}
    and 
    \begin{equation}
    \label{eq: upperbound partial_r gamma^c GD}
        \| \partial_r \GDmargin (\param_t)\| \le \frac{C_1 M \log n + C_2 \homop'(\rho_t)}{\rho_t^{M+1}}. 
    \end{equation}
    Combining these two inequalities, there exists a threhold $s_1>s>0$, for all sufficiently large $t \ge s_1$, we have 
    \begin{equation}
    \label{eq: partial_r a large partial_r gamma^c GD}
        \| \partial_r a_t\| \ge  \GDmargin(\param_s) \rho_t^{M+1 /2}  \| \partial_r \GDmargin (\param_t)\|.
    \end{equation}
\end{lemma}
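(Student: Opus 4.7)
These three bounds are the GD analog of Lemma~\ref{lem: Inequalities between a and gamma} for GF, now formulated with the GD margin $\GDmargin(\param_t)=a_t/\rho_t^M$ where $a_t=e^{\Phi(\ModifiedLoss_t)}$. The plan is to follow the GF template but carefully track the correction arising from $\Phi'(\ModifiedLoss_t)=-\tfrac{1}{\ModifiedLoss_t\phi(\ModifiedLoss_t)}-\tfrac{2}{\ModifiedLoss_t\phi(\ModifiedLoss_t)^2}$, piggybacking on computations already done in the proof of \Cref{lem: Decomposition of raidal and spherical parts for GD}.

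For \Cref{eq: lowerbound partial_r a GD}, the chain rule and $\nabla \ModifiedLoss_t=e^{\homop_a(\rho_t)}\nabla \Loss_t+\ModifiedLoss_t\homop_a'(\rho_t)\tilde\param_t$ give
\[
\langle \nabla a_t,\param_t\rangle=-a_t\Phi'(\ModifiedLoss_t)\bigl(e^{\homop_a(\rho_t)}v_t-\ModifiedLoss_t\homop_a'(\rho_t)\rho_t\bigr).
\]
Combining $-\Phi'(\ModifiedLoss_t)\ge 1/(\ModifiedLoss_t\phi(\ModifiedLoss_t))$ (since $\phi(\ModifiedLoss_t)>2$) with the inequality $e^{\homop_a(\rho_t)}v_t-\ModifiedLoss_t\homop_a'(\rho_t)\rho_t\ge M\ModifiedLoss_t\phi(\ModifiedLoss_t)$ (already derived from \Cref{lem: Bound of v_t} and \Cref{lem:property-pa} in the proof of \Cref{thm:combined_argument}) yields $\langle\nabla a_t,\param_t\rangle\ge Ma_t$. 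Therefore
\[
\|\partial_r a_t\|=|\langle \nabla a_t,\tilde\param_t\rangle|\ge Ma_t/\rho_t=M\GDmargin(\param_t)\rho_t^{M-1}\ge M\GDmargin(\param_s)\rho_t^{M-1}
\]
by monotonicity of $\GDmargin$ (\Cref{thm: Margin improving and convergence-gd}).

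For \Cref{eq: upperbound partial_r gamma^c GD}, I would use $\|\partial_r\GDmargin(\param_t)\|=(\langle\nabla a_t,\param_t\rangle-Ma_t)/\rho_t^{M+1}$. Setting $A_t:=e^{\homop_a(\rho_t)}v_t-\ModifiedLoss_t\homop_a'(\rho_t)\rho_t$ and using the explicit formula for $\Phi'$, straightforward algebra gives
\[
\langle\nabla a_t,\param_t\rangle-Ma_t=a_t\left[\frac{A_t-M\ModifiedLoss_t\phi(\ModifiedLoss_t)}{\ModifiedLoss_t\phi(\ModifiedLoss_t)}+\frac{2A_t}{\ModifiedLoss_t\phi(\ModifiedLoss_t)^2}\right].
\]
The upper bound on $v_t$ in \Cref{lem: Bound of v_t} combined with the identity $\log(1/\Loss_t)=\phi(\ModifiedLoss_t)+\log n+\homop_a(\rho_t)$ yields $A_t-M\ModifiedLoss_t\phi(\ModifiedLoss_t)\le \ModifiedLoss_t(M\log n+M\homop_a(\rho_t)+\homop'(\rho_t))$. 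Since $\homop_a$ and $\homop'$ are both of degree $M-1$ with matching leading coefficients, this is further dominated by $\ModifiedLoss_t(C_1 M\log n+C_2\homop'(\rho_t))$ for absolute constants. Inserting this, using $\phi(\ModifiedLoss_t)\ge 2$, and the key identity $a_t/\phi(\ModifiedLoss_t)=e^{-2/\phi(\ModifiedLoss_t)}\le 1$ (from $a_t=\phi(\ModifiedLoss_t)e^{-2/\phi(\ModifiedLoss_t)}$) produces the desired bound. Finally, \Cref{eq: partial_r a large partial_r gamma^c GD} follows from dividing the two previous bounds and noting that $C_1 M\log n+C_2\homop'(\rho_t)=O(\rho_t^{M-1})=o(\rho_t^{M-1/2})$ as $\rho_t\to\infty$ by \Cref{lem: Parameter convergence rate for GD}.

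The main technical obstacle, absent in the GF setting, is the extra correction $2/\phi(\ModifiedLoss_t)^2$ in $\Phi'$: handling it cleanly requires the observation $a_t/\phi(\ModifiedLoss_t)\le 1$ to absorb the dependence on the slowly-varying quantity $\phi(\ModifiedLoss_t)$, and then \Cref{lem:property-pa} to bound the homogeneity-correction polynomials $\homop_a,\homop_a'$ in terms of $\homop'$ without losing order.
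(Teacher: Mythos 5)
Your proof is correct and follows essentially the same route as the paper: you use the chain-rule identity $\langle \nabla a_t, \param_t\rangle = a_t |\Phi'(\ModifiedLoss_t)|\bigl(e^{\homop_a(\rho_t)}v_t - \ModifiedLoss_t\homop_a'(\rho_t)\rho_t\bigr)$ already established in the proof of \Cref{lem: Decomposition of raidal and spherical parts for GD}, the bound $\langle\nabla a_t,\param_t\rangle \ge Ma_t$ for (\ref{eq: lowerbound partial_r a GD}), the upper bound on $v_t$ from \Cref{lem: Bound of v_t} for (\ref{eq: upperbound partial_r gamma^c GD}), and then divide. The only genuine difference is bookkeeping: you split $|\Phi'|$ into its two summands, drop the nonnegative term $\ModifiedLoss_t \homop_a'(\rho_t)\rho_t$ separately, and absorb $\phi(\ModifiedLoss_t)$-dependence through the explicit inequality $a_t/\phi(\ModifiedLoss_t) = e^{-2/\phi(\ModifiedLoss_t)} \le 1$, whereas the paper folds $M\homop_a - \rho_t\homop_a'$ into $\homop'$ via \Cref{lem:modified_p_a}(i) and packages the $\phi$-dependence into two suprema $C_1 = \sup_x e^{\Phi(x)}|\Phi'(x)|x$ and $C_2 = \sup_x e^{\Phi(x)}(|\Phi'(x)|x\phi(x)-1)$. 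Both decompositions arrive at the same final bound.

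Two small imprecisions worth flagging: (a) the constants $C_1, C_2$ are not ``absolute''; they depend on $M$ and the coefficients of $\homop$ (as they must, since $\homop_a$ is not literally a constant multiple of $\homop'$ monomial by monomial — the coefficient ratio $[\homop_a]_i / [\homop']_i = 1/(M-i)$ varies with $i$). (b) When you say ``this is further dominated by $\ModifiedLoss_t(C_1 M\log n + C_2 \homop'(\rho_t))$'', the part that actually needs care is the leftover $2M$ coming from the $\tfrac{2A_t}{\ModifiedLoss_t\phi^2}$ term after applying $a_t/\phi \le 1$ and $\phi \ge 2$; this constant has to be swept into $C_1 M\log n$, which is fine once $n \ge 2$ but should be stated. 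Neither issue affects the correctness of the argument.
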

\begin{proof}
    First, note that from the proof of \Cref{lem: Decomposition of raidal and spherical parts for GD}, we have
    \begin{equation}
        \| \partial_r a_t \| = \frac{1}{\rho_t} \langle \nabla a_t, \param_t \rangle \ge \frac{M a_t}{\rho_t} = M \GDmargin (\param_t) \rho_t^{M - 1} \ge M \GDmargin (\param_s) \rho_t^{M - 1},
    \end{equation}
    which proves \cref{eq: lowerbound partial_r a GD}.

    Second, again from the proof of \Cref{lem: Decomposition of raidal and spherical parts for GD}, we get
    \begin{align*}
        \| \partial_r \GDmargin (\param_t)\| = \, & \frac{1}{\rho_t^{M+1}} \left( \langle \nabla a_t, \param_t \rangle - M a_t \right) \\
        = \, & \frac{1}{\rho_t^{M+1}} \left( a_t \vert \Phi' (\ModifiedLoss_t) \vert \left( e^{\homop_a (\rho_t)} v_t - \ModifiedLoss_t \homop_a' (\rho_t) \rho_t \right) - M a_t \right) \\
        \stackrel{(i)}{\le} \, & \frac{1}{\rho_t^{M+1}} \left( a_t \vert \Phi' (\ModifiedLoss_t) \vert \ModifiedLoss_t \left( M \log \frac{1}{\Loss_t} + \homop'(\rho_t) - \homop_a' (\rho_t) \rho_t \right) - M a_t \right) \\
        = \, & \frac{1}{\rho_t^{M+1}} \left( a_t \vert \Phi' (\ModifiedLoss_t) \vert \ModifiedLoss_t \left( M \log n + M \phi(\ModifiedLoss_t) + 2 \homop'(\rho_t) \right) - M a_t \right) \\
        = \, & \frac{1}{\rho_t^{M+1}} \left( a_t \vert \Phi' (\ModifiedLoss_t) \vert \ModifiedLoss_t \left( M \log n + 2 \homop'(\rho_t) \right) + M a_t \left( \vert \Phi' (\ModifiedLoss_t) \vert \ModifiedLoss_t \phi(\ModifiedLoss_t) - 1 \right) \right)  \\
        \le \, & \frac{1}{\rho_t^{M+1}} \left( C_1 \left( M \log n + 2 \homop'(\rho_t) \right) + C_2 M \right) \\
        \le \, & \frac{1}{\rho_t^{M+1}} \left( C_1' M \log n + C_2' \homop'(\rho_t) \right),
    \end{align*}
    where
    \begin{equation*}
        C_1 = \sup_{0 < x < 1/ne^3} e^{\Phi(x)} \vert \Phi'(x) \vert x, \quad C_2 = \sup_{0 < x < 1/ne^3} e^{\Phi(x)} \left( \vert \Phi' (x) \vert x \phi(x) - 1 \right)
    \end{equation*}
    are positive constants (easily seen from the definition of $\Phi$), and $(i)$ follows from \cref{lem: Bound of v_t}.

    Finally, the proof of \cref{eq: partial_r a large partial_r gamma^c GD} follows the same way as \cref{eq: partial_r a large partial_r gamma^c}.
\end{proof}

Now we will use  \Cref{lem:KL1,lem:KL2} to prove \Cref{lem: Desingularizing function for GD}.

\begin{proof}
    [Proof of \Cref{lem: Desingularizing function for GD}]
    
    Recall that we have the following decompisiton in \Cref{eq: decomp-gamma-GD}: 
    \[
        \GDmargin (\param_{t+1}) - \GDmargin (\param_t) \ge c\eta \Big(\| \partial_r \GDmargin(\param_t) \| \| \partial_r \Loss_t\| + \| \partial_\perp \GDmargin (\param_t) \| \| \partial_\perp \Loss_t\| ]\Big).
    \]
    Two cases will be considered in this proof: 
    \begin{itemize}
        \item Case 1: $\| \partial_r \GDmargin (\param_t) \| \| \partial_r \Loss_t\|$ is larger, and we will apply \Cref{lem:KL2} for construction. 
        \item Case 2: $\| \partial_\perp \GDmargin (\param_t) \| \| \partial_\perp \Loss_t\|$ is larger, and we will apply \Cref{lem:KL1} for construction.
    \end{itemize}
    The two cases will be determined by the ratio of $\| \partial_\perp \GDmargin(\param_t) \|$ and $\| \partial_r \GDmargin(\param_t) \|$. In case 1, we assume that: 
    \begin{equation}
    \label{eq: case1-GD}
        \| \partial_\perp \GDmargin (\param_t) \| \le \rho_t^{\frac{1}{8}} \| \partial_r \GDmargin (\param_t) \|. 
    \end{equation}
    For case 2, the condition is: 
    \begin{equation}
    \label{eq: case2-GD}
    \| \partial_\perp \GDmargin (\param_t) \| \ge  \rho_t^{\frac{1}{8}} \| \partial_r \GDmargin (\param_t) \|. 
    \end{equation}
    
    \noindent{\bf Case 1.} By \Cref{eq: partial_r a large partial_r gamma^c GD}, we have 
    \begin{align}
        \label{eq: ratio-a}
        \| \partial_r a_t \| 
        &\ge \GDmargin(\param_s) \rho_t^{M+\frac{1}{2}} \| \partial_r \GDmargin (\param_t) \|  \notag\\
        &\ge \GDmargin(\param_s) \rho_t^{M + \frac{3}{8}} \| \partial_\perp \GDmargin (\param_t)\| &&\explain{ By \Cref{eq: case1}} \notag \\ 
        &\ge \GDmargin(\param_s) \rho_t^{\frac{3}{8}} \| \partial_\perp a_t\|. &&\explain{ By \Cref{lem: Decomposition of raidal and spherical parts for GD}}. 
    \end{align}
    
    Now we need to transfer this inequality to the ratio of $\| \partial_\perp \Loss_t\|$ and $\| \partial_r \Loss_t\|$.  Note that 
    \begin{align*}
        \| \partial_r a_t \| &=\frac{1}{\rho_t} \langle \nabla a_t, \param_t  \rangle = -\frac{\langle \nabla \Loss_t, \param_t \rangle }{\Loss_t \rho_t} - \homop_a^\prime (\rho_t)  \\ 
        & \le -\frac{\langle \nabla \Loss_t, \param_t \rangle }{\Loss_t \rho_t} = \frac{1}{\Loss_t} \| \partial_r \Loss_t\|.
    \end{align*}
    On the other hand, we have 
    \[
        \| \partial_\perp a_t \| = \frac{1}{\Loss_t} \| \partial_\perp \Loss_t\|.
    \]
    Combining these two inequalities and pluging them into \eqref{eq: ratio-a}, we have 
    \begin{equation}
    \label{eq: ratio-L}
            \| \partial_r \Loss_t\| \ge \GDmargin(\param_s)\rho_t^{\frac{3}{8}} \| \partial_\perp \Loss_t\|.
    \end{equation}
    On the other hand, we know that there exists $s_2>s>0$ such that for a.e. $t\ge s_2$, we have $\rho_t >1$. Hence, we have 
    \begin{equation}
    \label{eq: gamma_rad}
        \| \partial \GDmargin (\param_t) \| \le  \| \partial_r \GDmargin (\param_t) \| + \| \partial_\perp \GDmargin (\param_t) \| \le 2 \rho_t^{\frac{1}{8}} \| \partial_r \GDmargin (\param_t) \|.
    \end{equation}
    Therefore, in terms of \eqref{eq: decomp-gamma-GD}, we have 
    \begin{align}
        \label{eq: psi1_key}
        \GDmargin (\param_{t+1}) - \GDmargin (\param_t)
        &\ge  c \eta \| \partial_r \GDmargin (\param_t) \| \| \partial_r \Loss_t\| \notag\\ 
        &\ge  c \eta \frac{\GDmargin(\param_s)}{2} \rho_t^{\frac{1}{4}} \| \partial  \GDmargin (\param_t) \| \| \partial_\perp \Loss_t\|  &&\explain{ By \Cref{eq: ratio-L} and \eqref{eq: gamma_rad}} \notag\\
        &\ge  c\rho_t^{\frac{5}{4}} \| \partial  \GDmargin (\param_t) \| \zeta_t. &&\explain{ By \Cref{lem: Decomposition of raidal and spherical parts for GD}} 
    \end{align}
    
    Now we invoke \Cref{lem:KL2} to construct the desingularizing function.  We apply it to the definable function $\gamma_* - \GDmargin(\param)$ with $\lambda=\frac{1}{4}$. Then there exists $\nu_1 >0$ and a definable desingularizing function $\Psi_1$ on $[0, \nu_1)$ such that 
    \[
        8 \Psi_1^\prime(\gamma_* - \GDmargin(\param)) \| \param \|^{5/4} \|\partial \GDmargin(\param)\| \ge 1, \quad \text{ if }  \GDmargin(\param) \ge \gamma_* - \nu_1.
    \] 
    Further, \Cref{lem:gd_interpolation_bound} implies that there exists a constant $c>0$, such that for all $\alpha \in [0, 1]$,
    \begin{equation}
        8 \Psi_1^\prime(\gamma_* - \GDmargin(t+\alpha)) \rho_t^{5/4} \|\partial \GDmargin(\param_t) \| \ge c, \quad \text{ if }  \GDmargin(\param_t) \ge \gamma_* - \nu_1.
    \end{equation}
    Plugging \eqref{eq: psi1_key} into the above inequality, we have 
    \[
        8 \Psi_1^\prime(\gamma_* - \GDmargin (\param_{t+\alpha})) \left( \GDmargin (\param_{t+1}) - \GDmargin (\param_t) \right) \ge c \zeta_t, \quad \text{ if }  \GDmargin (\param_t) \ge \gamma_* - \nu_1.
    \]
    This completes the proof for case 1. 
    
    \noindent{\bf Case 2.} By \cref{lem: Decomposition of raidal and spherical parts for GD}, we have 
    \begin{equation}
    \label{eq: case2-key1-gd}
           \GDmargin (\param_{t+1}) - \GDmargin (\param_t) \ge \, c \eta \| \partial_\perp \GDmargin(\param_t) \| \| \partial_\perp \Loss_t\| \ge c \rho_t \|  \partial_\perp \GDmargin(\param_t) \| \zeta_t. 
    \end{equation}
    For a.e. $t\ge s_1 >s >0$, we have 
    \[
        \| \partial_\perp \GDmargin (\param_t) \| \ge  \rho_t^{\frac{1}{8}} \| \partial_r \GDmargin (\param_t) \| \ge \| \partial_r \GDmargin (\param_t) \|.
    \]
    This leads to 
    \[
        \| \partial_\perp \GDmargin (\param_t) \| \ge \frac{1}{2} \| \partial \GDmargin (\param_t)\|. 
    \]
    Plugging this into \eqref{eq: case2-key1-gd}, we have 
    \begin{equation}
    \label{eq: case2-key2-gd}
    \GDmargin (\param_{t+1}) - \GDmargin (\param_t) \ge \, \frac{c \rho_t}{2} \| \partial  \GDmargin (\param_t)\| \zeta_t.
    \end{equation}
    
    We invoke \Cref{lem:KL1} to construct the desingularizing function.  We apply it to the definable function $\gamma_* - \GDmargin(\param)$ with $c=1$ and $\eta=\frac{1}{8}$. Then there exists $\nu_2 >0$ and a definable desingularizing function $\Psi_2$ on $[0, \nu_2)$ such that
    \[
        \Psi_2^\prime(\gamma_* - \GDmargin(\param)) \| \param \| \|\partial \GDmargin(\param)\| \ge 1, \quad \text{ if }  \GDmargin(\param) \ge \gamma_* - \nu_2.
    \]
    Further, \Cref{lem:gd_interpolation_bound} implies that there exists a constant $c>0$, such that for all $\alpha \in [0, 1]$,
    \begin{equation}
        \Psi_2^\prime(\gamma_* - \GDmargin(t+\alpha)) \rho_t \|\partial \GDmargin(\param_t) \| \ge c, \quad \text{ if }  \GDmargin(\param_t) \ge \gamma_* - \nu_2.
    \end{equation}
    Plugging \eqref{eq: case2-key2-gd} into the above inequality, we have 
    \[
        2\Psi_2^\prime(\gamma_* - \GDmargin (\param_{t+\alpha})) \left( \GDmargin (\param_{t+1}) - \GDmargin (\param_t) \right) \ge c \zeta_t, \quad \text{ if }  \GDmargin (\param_t) \ge \gamma_* - \nu_2.
    \]
    This completes the proof for case 2. 
    
    \noindent{\bf Combining the two cases.}
    Since $\Psi_1^\prime  - \Psi_2^\prime$ is a definable function, it's nonnegative or nonpositive on some interval $(0, \nu)$. Let $\Psi = \max \{\Psi_1, \Psi_2\}$. Then, we have for a.e. large enought $t$ such that $\rho_t \ge 1$ and $\GDmargin (\param_t) \ge \gamma_* - \nu$, and $\log n + 2p_a(\rho_t) \le \rho_t^{M-\frac{1}{2}}$, it holds that 
    \[
        c \Psi' (\gamma_* - \GDmargin (\param_{t+\alpha})) \left( \GDmargin (\param_{t+1}) - \GDmargin (\param_t) \right) \ge \, \zeta_t,
    \]
    for some constant $c>0$. The final conclusion follows directly from the Lagrange mean value theorem. This completes the proof of \Cref{lem: Desingularizing function for GD}.
\end{proof}

\subsection{KKT Convergence: Proof of Theorem \ref{thm: directional convergence-gd} (Part 2)}\label{sec:KKT_convergence_gd}
The main idea is to verify the KKT conditions. Recall that the optimization problem \Cref{eq: KKT} is defined as follows: 
\[
    \min  \| \param\|_2^2, \quad \text{ s.t. } y_i f_\homo (\param;\xB_i) \ge 1 \text{ for all } i\in [n]. 
\]
Following the notations in \Cref{sec:proof:KKT}, we have 
\begin{align*}
    &\bar f_i (\param ) = y_i f(\param ;\xB_i), \quad \bar f_{\min} (\param ) = \min_{i\in [n]} \bar f_i (\param ),\\ 
    &\bar f_{\homo,i} (\param ) = y_i f_\homo(\param ;\xB_i), \quad \bar f_{\homo,\min} (\param ) = \min_{i\in [n]} \bar f_{\homo,i} (\param ),\\ 
    &\hB_i(\param ) = \partial \bar{f}_i(\param ), \quad \hB (\param )= \frac{1}{n} \sum_{i=1}^{n} e^{-\bar f_i(\param )} \hB_i(\param ), \\  
    &\hB_{\homo,i}(\param ) = \partial \bar{f}_{\homo,i}(\param ), \quad \hB_{\homo} (\param )= \frac{1}{n} \sum_{i=1}^{n} e^{-\bar f_i(\param )} \hB_{\homo,i}(\param ). 
\end{align*}

We are going to verify that $\hat \param_t \coloneqq \param_t / (\bar f_{\homo, \min} (\param_t))^{1/M}$ satisfies the two conditions in \Cref{def: approx KKT point of (P)}, i.e., 
\begin{itemize}
    \item [1.] $\|\param - \sum_{i=1}^n \lambda_i \hB_{\homo,i} (\param) \| \le \epsilon $, where $\hB_{\homo,i} (\param) = \partial \bar{f}_{\homo,i}(\param)$ for all $i\in [n]$;
    \item [2.]For any $i\in [n]$, $\lambda_i \big( \bar f_{\homo,i}(\param ) -1 \big) \le \delta $. 
\end{itemize}
Recall that $\lambda_i$ and $\beta$ are constructed as follows:
\begin{align*}
    \lambda_i(\param ) \coloneqq \frac{\bar f_{\homo,\min}^{1-2 / M} \rho_t e^{-\bar f_i(\param )}}{n \| \hB_{\homo} (\param) \|} , \quad 
    \beta_t \coloneqq \frac{\langle \param_t, \hB_{\homo} (\param_t)  \rangle }{ \| \param_t \| \|\hB_{\homo}(\param_t)\|}.
\end{align*}
Since our model assumptions are the same as those of the GF case, \Cref{lem:bound-of-two-min}, \Cref{lem: tilde theta KKT condition} and \Cref{lem: q goes to 0} hold here. Hence, the second condition is satisfied. All we need to show is that 
\[
    \beta_t = \frac{\langle \param_t, \hB_{\homo} (\param_t)  \rangle }{ \| \param_t \| \|\hB_{\homo}(\param_t)\|} \to 1. 
\] 
Similarly, we will only show a subsequence of $\beta$ goes to $0$. Note that by \Cref{asp:strongerhomo} and \Cref{thm:homogenization}, there is a function $\homor(x) = o(x^{M-1})$ as $x \to \infty$, such that for almost every $\param_t$ and any $i\in [n]$, we have
\[
\| \nabla \bar f_{i}(\param_t) - \nabla \bar f_{\homo,i}(\param_t)\| \le \homor(\|\param_t\|) = \homor(\rho_t) .
\]

\begin{lemma}
[Bound of $\beta$ in GD]
\label{lem: Bound of beta in GD}
Under \Cref{asp:initial-cond-gd,asp:strongerhomo}, we have for any $t_2>t_1 >s$,
\begin{equation}
\label{eq: bound of beta in GD}
    \sum_{t=t_1}^{t_2} \bigg[ \frac{1-p_1(t)}{(p_2(t)+\beta_t)^2} -1\bigg] \cdot \log \frac{\rho_{t+1}}{\rho_t} \le \frac{1}{M} \log \frac{\GDmargin(\param_{t_2})}{\GDmargin(\param_{t_1})},
\end{equation}
where 
\[
p_1(t) = \frac{2\homor (\rho_t)}{M\GDmargin(\param_s) \rho^{M-1}_t}, \quad p_2(t) = \frac{ \homop_a(\rho_t)}{M\GDmargin(\param_s) \rho^{M}_t}.  
\] 
\end{lemma}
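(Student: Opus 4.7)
The plan is to mirror the proof of \Cref{lem: Characterization of beta} (the gradient-flow analog), replacing the integrated form of the inequality (\ref{eq: gamma-c-increase}) with the discrete lower bound provided by part (4) of \Cref{thm:combined_argument}. The heavy machinery, particularly the handling of discretization error, is already encoded in that theorem, so the remaining task is essentially algebraic manipulation of the near-homogeneity estimates.

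Concretely, I would apply part (4) of \Cref{thm:combined_argument} with $\alpha = 1$ at each step $t \in [t_1, t_2-1]$, giving
\[
\log \GDmargin(\param_{t+1}) - \log \GDmargin(\param_t) \;\ge\; \frac{M\rho_t^2 \|\partial_\perp \Loss_t\|^2}{v_t^2}\,\log \frac{\rho_{t+1}}{\rho_t}.
\]
Telescoping from $t_1$ to $t_2-1$ and dividing by $M$ reduces the claim to the pointwise inequality
\[
\frac{\rho_t^2\|\partial_\perp \Loss_t\|^2}{v_t^2} \;\ge\; \frac{1-p_1(t)}{(p_2(t)+\beta_t)^2} - 1,
\]
and since $\|\nabla \Loss_t\|^2 = (v_t/\rho_t)^2 + \|\partial_\perp \Loss_t\|^2$, this is equivalent to $\rho_t^2\|\nabla \Loss_t\|^2/v_t^2 \ge (1-p_1(t))/(p_2(t)+\beta_t)^2$.

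To establish the latter, I would reuse the three ingredients from the proof of \Cref{lem: Characterization of beta}, verifying that each one remains valid under the GD hypotheses. First, an upper bound on $v_t$ via near-homogeneity: combining Asm (A1) of \Cref{def:nearhomo}, the homogenization error bound $|\bar f_i - \bar f_{\homo,i}| \le \homop_a(\rho_t)$ from \Cref{thm:homogenization}, Euler's identity applied to the $M$-homogeneous $f_\homo$, and \Cref{lem:property-pa}, one obtains
\[
v_t \;\le\; \langle \hB_\homo(\param_t), \param_t\rangle + 2M\Loss_t\, \homop_a(\rho_t).
\]
Second, a lower bound $\|\nabla \Loss_t\| = \|\hB(\param_t)\| \ge \|\hB_\homo(\param_t)\| - \Loss_t\, \homor(\rho_t)$ coming directly from \Cref{asp:strongerhomo} and the triangle inequality. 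Third, a lower bound on $\|\hB_\homo\|$ obtained from $\|\hB_\homo\| \ge \langle \hB_\homo,\tilde\param_t\rangle \ge M\Loss_t \GFmargin(\param_t)\rho_t^{M-1}$, where the final step uses $\bar f_{\homo,i} \ge \phi(\Loss_t) - \homop_a(\rho_t)$; the key adaptation to GD is to invoke \Cref{lem:relation_modified_margins} together with the monotonicity of $\GDmargin$ in \Cref{thm: Margin improving and convergence-gd} to conclude $\|\hB_\homo\| \ge M\Loss_t\GDmargin(\param_s)\rho_t^{M-1}$.

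Substituting these three estimates and dividing numerator and denominator by $\|\hB_\homo\|$ converts the ratios $\Loss_t\homor/\|\hB_\homo\|$ and $\Loss_t\homop_a/(\rho_t\|\hB_\homo\|)$ exactly into the quantities $p_1(t)$ and $p_2(t)$ appearing in the lemma statement. The main obstacle is really only bookkeeping: one must be vigilant that every constant in these estimates depends on $\GDmargin(\param_s)$ (not $\GFmargin(\param_s)$) and that the bound $(1-x/2)^2 \ge 1-x$ together with choices of constants in $p_1, p_2$ produce exactly the stated form. Once the pointwise inequality is in hand, summing over $t$ and assembling with the telescoped margin bound completes the proof; in contrast to the GF case, the discretization error in the margin inequality has already been absorbed into part (4) of \Cref{thm:combined_argument}, so no additional Taylor-expansion argument is required here.
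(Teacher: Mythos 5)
Your proposal matches the paper's proof of this lemma essentially step for step: it telescopes the discrete lower bound from part (4) of \Cref{thm:combined_argument}, reduces the claim to a pointwise inequality on $\rho_t^2\|\partial_\perp\Loss_t\|^2/v_t^2$, and establishes that inequality by comparing $v_t$ with $\langle\hB_\homo,\param_t\rangle$, comparing $\|\hB\|$ with $\|\hB_\homo\|$ via \Cref{asp:strongerhomo}, and lower-bounding $\|\hB_\homo\|$ through $\GFmargin(\param_t)\ge\GDmargin(\param_t)\ge\GDmargin(\param_s)$. The only caveat — which you already flag yourself — is bookkeeping on the constants in $p_1,p_2$; tracking them exactly as you describe gives a $p_2$ that differs from the stated one by an absolute multiplicative constant, but this is immaterial since the downstream argument (\Cref{lem: beta goes to 1gd}) uses only that $p_1(t),p_2(t)\to 0$.
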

\begin{proof}

Note that in \Cref{thm:combined_argument}, we have shown: 
\[
    \log \GDmargin(\param_{t+1}) - \log \GDmargin(\param_t) \ge  \log \frac{\rho_{t+1}}{\rho_t} \frac{M\rho_t^2\|\partial_\perp  \Loss_t\|_2^2}{v_t^2 } . 
\] 
We will give a lower bound of $\frac{M\rho_t^2\|\partial_\perp  \Loss_t\|_2^2}{v_t^2 }$. For the denominator, we have 
\begin{equation}
\label{eq:bound-denom-kkt}
    v_t  = \langle -\nabla \Loss_t, \param_t \rangle = \frac{1}{n}\sum_{i=1}^n e^{-\bar f_i(\param_t)} \langle  \nabla \bar f_i(\param_t), \param_t \rangle \le  \langle \hB_{\homo}(\param_t), \param_t \rangle  + 2M \Loss_t \homop_a(\rho_t). 
\end{equation}
For the numerator, we have 
\begin{equation}
\label{eq:nabla-Lt-square}
        \|\partial_\perp \Loss_t\|_2^2 = \|\nabla \Loss_t\|_2^2 - \langle \nabla \Loss_t, \tilde{\param}_t \rangle ^2 = \| \hB (\param_t) \|^2 - \langle \hB (\param_t), \tilde{\param}_t \rangle ^2. 
\end{equation}
Similarly, we have 
\[
    \|\hB (\param_t) - \hB_{\homo} (\param_t) \| \le \frac{1}{n} \sum_{i=1}^n e^{-\bar f_i(\param_t )} \| \hB_i (\param_t) - \hB_{\homo,i} (\param_t) \| \le \Loss_t\homor (\rho_t).
\]
And we can get 
\[
    \| \hB (\param_t) \| \ge \| \hB_{\homo} (\param_t) \| - \| \hB (\param_t) - \hB_{\homo} (\param_t) \|  \ge \| \hB_{\homo} (\param_t) \| - \Loss_t\homor (\rho_t).
\] 
In what follows, we will use $\hB$ and $\hB_{\homo}$ as shorthands for $\hB (\param_t)$ and $\hB_{\homo} (\param_t)$, repsectively. As $\rho_t \to \infty$, 
\begin{align}
     \| \hB_{\homo}\| &\ge \frac{\langle \hB_{\homo}, \param_t  \rangle }{\rho_t} = \frac{M\sum_{i=1}^{n} e^{-\bar f_i(\param_t )} \bar f_{\homo,i}(\param_t ) }{n \rho_t}  \notag\\ 
     & \ge  \frac{M\sum_{i=1}^{n} e^{-\bar f_i(\param_t )} \big (\bar f_i(\param_t ) - \homop_a(\rho_t) \big) }{n \rho_t} \notag\\ 
     & \ge \frac{M\Loss_t \phi(\ModifiedLoss_t)}{\rho_t} \notag \ge M\Loss_t\GDmargin(\param_t) \rho_t^{M-1} \notag \\
     & \ge M\Loss_t \GDmargin(\param_s) \rho_t^{M-1} \ge 2\Loss_t\homor (\rho_t) \label{eq: h_M lb gd}
\end{align}
for sufficiently large $\rho_t$, since $\homor (\rho_t)= o(\rho_t^{M-1})$ as $t \to \infty$.
Therefore, there exists $s_5 > s>0$ such that for a.e. $t\ge s_5$, we have 
\begin{align}
     \|\hB\|^2 & = \|\hB_{\homo}\|^2 + 2 \langle \hB_{\homo}, \hB - \hB_{\homo} \rangle + \|\hB -\hB_{\homo}\|^2 \notag   \\
     &\ge \|\hB_{\homo}\|^2 + 2 \langle \hB_{\homo}, \hB - \hB_{\homo} \rangle \notag\\ 
     &\ge \|\hB_{\homo}\|^2(1 - 2\|\hB_{\homo}\|^{-1} \Loss_t\homor (\rho_t)) \notag\\ 
     &\ge  \|\hB_{\homo}\|^2 \bigg(1 -\frac{ 2 \homor (\rho_t)}{M\GDmargin(\param_s) \rho_t^{M-1}} \bigg) .\label{eq: beta-bound-num1-GD}
\end{align}
At last, we have 
\begin{equation}
\label{eq: beta-bound-num2-GD}
      \langle \hB, \tilde \param_t  \rangle  = v_t/\rho_t \le  \langle \hB_{\homo}, \tilde{\param}_t  \rangle + \Loss_t \homop_a(\rho_t) / \rho_t.
\end{equation}
Plugging  \Cref{eq: beta-bound-num1-GD,eq: beta-bound-num2-GD} into \Cref{eq:nabla-Lt-square}, we can give a lower bound for $\|\partial_\perp \Loss_t\|_2^2$, 
\begin{equation}
\label{eq:bound-nabla-Lt}
\|\partial_\perp \Loss_t\|_2^2 \ge \|\hB_{\homo}\|_2^2 \bigg(1 -\frac{ 2 \homor (\rho_t)}{M\GDmargin(\param_s) \rho_t^{M-1}} \bigg)  - \big( \langle \hB_{\homo}, \tilde{\param}_t  \rangle + \Loss_t \homop_a(\rho_t) / \rho_t \big)^2. 
\end{equation}

Combining \Cref{eq:bound-nabla-Lt,eq:bound-denom-kkt}, we have 
\begin{align*}
    \frac{\rho_t^2\|\partial_\perp \Loss_t\|_2^2}{v_t^2 } &\ge  \frac{\|\hB_{\homo}\|_2^2 \Big\{1 -  2 \homor(\rho_t)/ \big[M\GDmargin(\param_s) \rho_t^{M-1}\big] \Big\}- \big( \langle \hB_{\homo}, \tilde{\param}_t  \rangle + \Loss_t \homop_a(\rho_t) / \rho_t \big)^2 }{ \big(\langle \hB_{\homo}, \tilde{\param}_t  \rangle + \Loss_t \homop_a(\rho_t) / \rho_t\big)^2}\\
    & = \frac{\|\hB_{\homo}\|_2^2 \Big\{1 -  2 \homor(\rho_t)/ \big[M\GDmargin(\param_s) \rho_t^{M-1}\big] \Big\}  }{ \big(\langle \hB_{\homo}, \tilde{\param}_t  \rangle + \Loss_t \homop_a(\rho_t) / \rho_t\big)^2} - 1\\ 
    & = \frac{1 -  2 \homor(\rho_t)/ \big[M\GDmargin(\param_s) \rho_t^{M-1}\big]  }{\big( \beta_t + \Loss_t \homop_a(\rho_t) / \rho_t \|\hB_{\homo}\|^{-1} \big)^2} - 1\\ 
    &\ge  \frac{1- 2 \homor(\rho_t)/(M\GDmargin(\param_s) \rho^{M-1}_t)}{\big( \beta_t +  \homop_a(\rho_t) / (M\GDmargin(\param_s) \rho^{M}_t) \big)^2} - 1\\ 
    & = \frac{1-p_1(t)}{\big(\beta_t + p_2(t)\big)^2}-1 . 
\end{align*}
This completes the proof of \Cref{lem: Bound of beta in GD}.
\end{proof}
Once we have this, we can show the following analogous results. We omit their proofs since they are completely similar to those of \Cref{lem: beta goes to 1} and \Cref{lem: approx kkT}.

\begin{lemma}
    [$\beta$ converges to 1]
    \label{lem: beta goes to 1gd}
    Under \Cref{asp:strongerhomo,asp:initial-cond-gd},  there exists a sequence $t_k$ such that $\lim_{k\to \infty}\beta_{t_k} \to 1$.
\end{lemma}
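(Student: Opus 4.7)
The plan is to mirror the gradient-flow argument in Lemma~\ref{lem: beta goes to 1}, adapting the continuous integral bound to the discrete sum bound supplied by Lemma~\ref{lem: Bound of beta in GD}. The key reduction is a pigeonhole analog of Corollary~\ref{cor: beta bound}: for any $t_2 > t_1$ sufficiently large, there exists some $t_* \in \{t_1, t_1+1, \ldots, t_2-1\}$ satisfying
\[
\frac{1 - p_1(t_*)}{(\beta_{t_*} + p_2(t_*))^2} - 1 \le \frac{1}{M} \cdot \frac{\log \GDmargin(\param_{t_2}) - \log \GDmargin(\param_{t_1})}{\log \rho_{t_2} - \log \rho_{t_1}}.
\]
Indeed, if the reverse strict inequality held for every $t_* \in \{t_1, \ldots, t_2-1\}$, then multiplying by $\log(\rho_{t_*+1}/\rho_{t_*}) \ge 0$ and summing would telescope the $\log \rho$ factor and contradict \eqref{eq: bound of beta in GD}.

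Next, given any $\epsilon_m > 0$, I would exhibit a choice of $t_2 > t_1$ making the right-hand side of the pigeonhole bound small and forcing $\beta_{t_*}$ close to $1$. Three ingredients suffice: (i) $p_1(t), p_2(t) \to 0$ as $t \to \infty$, which follows from $\rho_t \to \infty$ (by Lemma~\ref{lem: Parameter convergence rate for GD}) together with the degree constraints $\deg \homor < M-1$ (from \Cref{asp:strongerhomo}, since the $o(x^{M-1})$ remainder in \Cref{thm:homogenization} inherits these bounds) and $\deg \homop_a \le M-1$; (ii) $\log \rho_{t_2} - \log \rho_{t_1} \to \infty$ can be arranged because $\rho_t \to \infty$, so in particular we can ensure $\log \rho_{t_2} - \log \rho_{t_1} \ge 1/M$; (iii) $\log \GDmargin(\param_{t_2}) - \log \GDmargin(\param_{t_1}) \le \epsilon_m/2$ for $t_1$ large, because $\GDmargin(\param_t)$ is increasing and bounded above (by Theorem~\ref{thm: Margin improving and convergence-gd}), hence convergent, hence Cauchy in the logarithm.

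Combining these, the pigeonhole inequality yields
\[
(\beta_{t_*} + p_2(t_*))^2 \ge \frac{1 - p_1(t_*)}{1 + \tfrac{1}{M} \cdot \tfrac{\epsilon_m/2}{1/M}} \ge \frac{1 - \epsilon_m/2}{1 + \epsilon_m/2},
\]
so $\beta_{t_*} \ge \sqrt{(1-\epsilon_m/2)/(1+\epsilon_m/2)} - \epsilon_m/2 \ge 1 - \epsilon_m$. Taking $\epsilon_m = 1/m$ and selecting the corresponding $t_m := t_*$ produces a subsequence with $\beta_{t_m} \to 1$.

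The main obstacle is purely bookkeeping: ensuring that conditions (i)–(iii) can be simultaneously realized by a single pair $(t_1, t_2)$. This is straightforward because the Cauchy condition (iii) only requires $t_1$ large, whereas (ii) merely fixes the separation $t_2 - t_1$ in terms of the (slow) growth rate of $\log \rho_t$; and (i) only demands $t_* \ge t_1$ large. Thus all three can be secured by first choosing $t_1$ large enough for (i) and (iii), then selecting $t_2$ large enough for (ii). No new analytic input beyond Lemma~\ref{lem: Bound of beta in GD} and the convergence/divergence rates already established in Theorem~\ref{thm: Margin improving and convergence-gd} is required.
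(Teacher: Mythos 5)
Your proposal is correct and follows the exact route the paper intends: the authors omit this proof and state it is ``completely similar'' to \Cref{lem: beta goes to 1} (the GF case), and you have made that analogy explicit. The discrete pigeonhole argument you supply to replace \Cref{cor: beta bound} is the right substitute---if $\frac{1-p_1(t)}{(\beta_t+p_2(t))^2}-1 > C$ for every index in the range, multiplying by $\log(\rho_{t+1}/\rho_t) > 0$ (strictly positive since $\rho_{t+1} > \rho_t$ by $v_t > 0$) and summing telescopes the logarithm and contradicts \Cref{lem: Bound of beta in GD}. The rest follows from $p_1, p_2 \to 0$ (driven by $\rho_t \to \infty$ and the $o(\rho_t^{M-1})$ bound on $\homor$ together with $\deg \homop_a \le M-1$), from $\GDmargin(\param_t)$ being increasing and bounded (hence Cauchy), and from $\log\rho_t \to \infty$ so the denominator $\log\rho_{t_2}-\log\rho_{t_1}$ can be made $\ge 1/M$. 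One small cosmetic note: Assumption~\ref{asp:strongerhomo} constrains $\homor$ only through the limit $\homor(x)/x^{M-1}\to 0$; it is not required to be a polynomial, so the phrase ``degree constraints $\deg\homor < M-1$'' is imprecise, though the conclusion $p_1(t)\to 0$ you draw from it is still valid.
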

\begin{lemma}
    [Approximate KKT point]
    \label{lem: approx kkTgd}
    Under \Cref{asp:strongerhomo,asp:initial-cond-gd}, there exists a sequence $t_k$ such that $\tilde \param_{t_k}$ is an $(\epsilon_k, \delta_k)$-KKT point of \Cref{eq: KKT} for all $k\in \Nbb$, where $\epsilon_k \to 0$ and $\delta_k \to 0$ as $k\to \infty$.
\end{lemma}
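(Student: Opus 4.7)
The plan is to verify the approximate KKT conditions of \eqref{eq: KKT} at the rescaled iterates $\hat{\param}_t := \param_t/\big(\bar f_{\homo,\min}(\param_t)\big)^{1/M}$ along a suitably chosen subsequence, by combining three auxiliary facts that parallel exactly the three steps used in the GF proof of \Cref{lem: approx kkT}. Since the coefficients $\lambda_i(\param_t)$ and the alignment quantity $\beta_t$ are defined identically in the GD setting, and since \Cref{thm: Margin improving and convergence-gd} already yields $\rho_t\to\infty$, $\GDmargin(\param_t)\to\gamma_*\in(0,\infty)$, and $\Loss_t\le e^{-\homop_a(\rho_t)}/n$ for all $t\ge s$, the entire algebraic framework developed for GF carries over to the GD iterates with essentially no change.

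First, I would reuse the statement of \Cref{lem: tilde theta KKT condition} verbatim: its proof only manipulates the Clarke subgradient identity at $\param_t$, the inequality $\Loss_t<e^{-\homop_a(\rho_t)}/n$, and the lower bound $\|\hB_\homo(\param_t)\|\gtrsim M\Loss_t e^{-\bar f_{\min}(\param_t)}\bar f_{\homo,\min}(\param_t)/\rho_t$; none of these steps invoke the continuous-time dynamics. Consequently, for all large enough $t$, $\hat{\param}_t$ is an $(\epsilon_t,\delta_t)$-KKT point of \eqref{eq: KKT} with
\[
\epsilon_t=\sqrt{2}B\sqrt{1-\beta_t},\qquad \delta_t=\frac{nB^2M(1+2\homop_a(\rho_t))}{\bar f_{\homo,\min}(\param_t)}.
\]
Second, to show $\delta_t\to0$, I would invoke the GD analog of \Cref{lem: q goes to 0}: since $\lim_{t\to\infty}\bar f_{\homo,i}(\param_t)/\rho_t^M=\bar f_{\homo,i}(\param_*)>0$ for each $i$ (by directional convergence from the first part of \Cref{thm: directional convergence-gd}) and $\deg\homop_a\le M-1$, the ratio $\homop_a(\rho_t)/\bar f_{\homo,\min}(\param_t)\to0$.

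The main obstacle, and the only genuinely new ingredient compared to GF, is showing that $\epsilon_t\to 0$ along a subsequence, i.e., extracting $t_k$ with $\beta_{t_k}\to 1$. For this I would invoke \Cref{lem: beta goes to 1gd}, whose proof in turn relies on the summed bound of \Cref{lem: Bound of beta in GD}:
\[
\sum_{t=t_1}^{t_2-1}\Bigl[\tfrac{1-p_1(t)}{(p_2(t)+\beta_t)^2}-1\Bigr]\log\tfrac{\rho_{t+1}}{\rho_t}\;\le\;\tfrac{1}{M}\log\tfrac{\GDmargin(\param_{t_2})}{\GDmargin(\param_{t_1})}.
\]
Since $p_1(t),p_2(t)\to0$, $\log\rho_t\to\infty$, and $\log\GDmargin(\param_t)\to\log\gamma_*<\infty$, for any prescribed tolerance $\varepsilon_m>0$ one can choose $t_2>t_1$ large enough so that $\log\rho_{t_2}-\log\rho_{t_1}\ge 1/M$ and the right-hand side is below $\varepsilon_m/2$; the mean-value argument then produces some $t_m\in(t_1,t_2)$ at which $\beta_{t_m}\ge 1-\varepsilon_m$. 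Iterating this construction for $\varepsilon_m\downarrow0$ yields the desired subsequence.

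Combining the three ingredients, along the subsequence $\{t_k\}$ so obtained, $\hat{\param}_{t_k}$ is an $(\epsilon_k,\delta_k)$-KKT point of \eqref{eq: KKT} with both $\epsilon_k=\sqrt{2}B\sqrt{1-\beta_{t_k}}\to0$ and $\delta_k\to0$, which is exactly the claim. The only place where the discrete-time nature of GD enters is in the derivation of \Cref{lem: Bound of beta in GD}, where the per-step lower bound $\log\GDmargin(\param_{t+1})-\log\GDmargin(\param_t)\ge (M\rho_t^2\|\partial_\perp\Loss_t\|^2/v_t^2)\log(\rho_{t+1}/\rho_t)$ from \Cref{thm:combined_argument}(4) replaces its GF counterpart \eqref{eq: gamma-c-increase}; all subsequent manipulations are identical.
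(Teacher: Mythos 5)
Your proposal is correct and follows essentially the same route as the paper: reuse \Cref{lem: tilde theta KKT condition} and the GD analog of \Cref{lem: q goes to 0} (valid since they depend only on $\Loss_t<e^{-\homop_a(\rho_t)}/n$, directional convergence, and the degree bound on $\homop_a$, all established for GD), then extract a subsequence with $\beta_{t_k}\to1$ via \Cref{lem: Bound of beta in GD} and \Cref{lem: beta goes to 1gd}, using \Cref{thm:combined_argument}(4) in place of the GF inequality. The paper simply declares the proof "completely similar" to the GF case; you supply the details, and they check out.
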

We are now in position to prove the KKT convergence. Appplying \Cref{lem: approx kkTgd}, we have a sequence $\{t_k\}$ such that $\tilde \param_{t_k}$ is an $(\epsilon_k, \delta_k)$-KKT point of \Cref{eq: KKT} for all $k\in \Nbb$, where $\epsilon_k \to 0$ and $\delta_k \to 0$ as $k\to \infty$. Since $\tilde{\param}_{t_k}$ converges in direction, $\tilde{\param}_{t_k}$ will converge to the same direction as one of the KKT points of \Cref{eq: KKT}.  By \Cref{thm: directional convergence-gd}, we know $\tilde{\param}_{t_k}$ also converges to the limit $\param_*$. Hence, $\param_*/\big(\bar f_{\homo,\min}(\param _*)\big)^{1/M}$ is a KKT point of \Cref{eq: KKT}. This completes the proof of \Cref{thm: directional convergence-gd}. 

\section{Additional Lemmas} \label{sec:proof:additional}
\begin{lemma}
\label{lem:log-sum-convex}
    The function $\pi: \Rbb^n \to \Rbb$: 
    \[
        \pi(v) = \log \bigg ( \frac{1}{n} \sum_{i=1}^{n} \exp(-v_i) \bigg)
    \]
    is convex. 
\end{lemma}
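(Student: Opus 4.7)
The statement is the classical convexity of the log-sum-exp function (composed with the negation map $v \mapsto -v$, which preserves convexity), so the plan is to reduce to that standard fact. First I would observe that $\pi(v) = -\log n + g(v)$ where $g(v) := \log\bigl(\sum_{i=1}^n e^{-v_i}\bigr)$, so it suffices to prove convexity of $g$. Since $g$ is smooth, I will verify convexity by showing that its Hessian is positive semidefinite at every $v \in \mathbb{R}^n$.

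Concretely, I would compute the gradient and Hessian explicitly. Writing $S(v) := \sum_j e^{-v_j}$ and $w_i(v) := e^{-v_i}/S(v)$, the chain rule gives $\partial_i g = -w_i$, and a second differentiation yields
\[
\partial_i \partial_j g(v) \;=\; w_i(v)\,\delta_{ij} \;-\; w_i(v)\,w_j(v).
\]
Thus the Hessian has the form $H(v) = \operatorname{diag}(w) - w w^\top$, where the vector $w = w(v)$ has non-negative entries summing to $1$, i.e., $w$ is a probability vector on $[n]$.

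The key step is to show $u^\top H(v) u \geq 0$ for every $u \in \mathbb{R}^n$. Expanding,
\[
u^\top H(v) u \;=\; \sum_{i=1}^n w_i u_i^2 \;-\; \Bigl(\sum_{i=1}^n w_i u_i\Bigr)^{\!2},
\]
which is exactly $\operatorname{Var}_w(u)$, the variance of the coordinates of $u$ under the probability measure $w$, and is therefore non-negative (equivalently, by Cauchy--Schwarz applied to $\sqrt{w_i}$ and $\sqrt{w_i}\,u_i$). This gives $H(v) \succeq 0$ for all $v$, hence $g$, and therefore $\pi$, is convex.

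There is no real obstacle here: the computation of $H$ is routine, and the PSD check reduces to Jensen's inequality for the measure $w$. If a derivative-free argument were preferred, one could instead apply H\"older's inequality with conjugate exponents $1/\lambda$ and $1/(1-\lambda)$ to obtain $\sum_i e^{-(\lambda u_i + (1-\lambda) v_i)} \leq \bigl(\sum_i e^{-u_i}\bigr)^\lambda \bigl(\sum_i e^{-v_i}\bigr)^{1-\lambda}$ and then take logarithms; but the Hessian route above is the shortest.
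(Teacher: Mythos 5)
Your proof is correct but uses a genuinely different argument from the paper. The paper proves convexity directly from the definition by applying H\"older's inequality with exponents $1/\theta$ and $1/(1-\theta)$ to the sum $\frac{1}{n}\sum_i u_i^\theta v_i^{1-\theta}$ (where $u_i = e^{-x_i}$, $v_i = e^{-y_i}$), then takes logarithms to obtain $\pi(\theta x + (1-\theta)y) \le \theta\pi(x) + (1-\theta)\pi(y)$ -- exactly the ``derivative-free'' route you sketch at the end of your proposal. You instead compute the Hessian $H(v) = \operatorname{diag}(w) - w w^\top$ with $w$ the softmax probability vector, and verify $u^\top H u = \operatorname{Var}_w(u) \ge 0$. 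Both are standard and fully correct; your Hessian computation (sign conventions included) checks out. The H\"older route is more elementary in the sense that it needs no differentiability and works verbatim for the non-smooth variants of log-sum-exp, while the Hessian route gives extra structural information (the explicit PSD form of $H$ and the variance interpretation) that the paper does not need here but which is often useful elsewhere.
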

\begin{proof}
Let $u_i = e^{-x_i}, v_i = e^{-y_i}$. So 
\[
    \pi\big(\theta x_i + (1-\theta) y_i\big) = \log \bigg( \frac{1}{n} \sum_{i=1}^{n} \exp(-\theta x_i - (1-\theta) y_i) \bigg) = \log \bigg( \frac{1}{n} \sum_{i=1}^{n} u_i^{\theta } v_i^{1-\theta } \bigg). 
\]
From H\"older's inequality, we have 
\[
    \sum_{i=1}x_i y_i \le  \bigg( \sum_{i=1} x_i^{\frac{1}{\theta }} \bigg)^{\theta} \bigg( \sum_{i=1} y_i^{\frac{1}{1-\theta }} \bigg)^{(1-\theta)}.
\]
Therefore, 
\[
  \log \bigg( \frac{1}{n} \sum_{i=1}^{n} u_i^{\theta } v_i^{1-\theta } \bigg) \le \log \Bigg[ \bigg(\frac{1}{n}\sum_{i=1}^n u_i^{\theta \cdot \frac{1}{\theta }} \bigg)^\theta  \cdot \bigg( \frac{1}{n} \sum_{i=1}^n v_i^{(1-\theta) \cdot \frac{1}{1-\theta}} \bigg)^{1-\theta} \Bigg] . 
\]
The right formula can be reduced to: 
\[
    \theta \log \bigg( \frac{1}{n} \sum_{i=1}^{n} u_i \bigg) + (1-\theta) \log \bigg( \frac{1}{n} \sum_{i=1}^{n} v_i \bigg).
\]
Therefore, we get: 
\[
    \pi(\theta x + (1-\theta) y) \le \theta \pi(x) + (1-\theta) \pi(y).
\]
This completes the proof of \Cref{lem:log-sum-convex}.
\end{proof}
\begin{lemma} 
\label{lem:int-log}
Let $s>1$ and $S(x) = \int_s^x \frac{1}{(\log t)^{2-2/M}} dt$. Then, 
\[
    S(x) = \Theta \bigg( \frac{x}{ (\log x)^{2-2/M}} \bigg), \quad S^{-1}(y) = \Theta \Big( y (\log y)^{2-2/M} \Big). 
\]
\end{lemma}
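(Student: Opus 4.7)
}
The plan has two stages: first pin down the asymptotics of $S(x)$ itself, then invert the asymptotic relation to read off the behavior of $S^{-1}$.

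\emph{Stage 1: Asymptotic for $S(x)$.} The case $M=1$ is immediate, since then the exponent $2-2/M$ vanishes and $S(x)=x-s=\Theta(x)=\Theta(x/(\log x)^{2-2/M})$. For $M\ge 2$ the exponent $\alpha\coloneqq 2-2/M$ lies in $[1,2)$, so the integrand $1/(\log t)^\alpha$ is continuous and positive on $(s,\infty)$ and $S(x)\to\infty$ as $x\to\infty$. I would apply L'H\^opital's rule to the ratio $S(x)/\bigl(x/(\log x)^\alpha\bigr)$. Differentiating gives numerator derivative $1/(\log x)^\alpha$ and denominator derivative $1/(\log x)^\alpha-\alpha/(\log x)^{\alpha+1}$, whose ratio tends to $1$. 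Thus $S(x)\sim x/(\log x)^\alpha$, which in particular yields $S(x)=\Theta(x/(\log x)^\alpha)$.

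\emph{Stage 2: Asymptotic for $S^{-1}(y)$.} Because $S'(x)=1/(\log x)^\alpha>0$ on $(s,\infty)$, $S$ is strictly increasing and continuous with $S(s)=0$ and $\lim_{x\to\infty}S(x)=\infty$; hence $S^{-1}\colon[0,\infty)\to[s,\infty)$ is well defined, and $S^{-1}(y)\to\infty$ as $y\to\infty$. Writing $x=S^{-1}(y)$ and applying Stage~1 gives constants $c_1,c_2>0$ such that, for all large $y$,
\[
    c_1\,\frac{x}{(\log x)^\alpha}\le y\le c_2\,\frac{x}{(\log x)^\alpha}.
\]
Taking logarithms yields $\log x=\log y+\alpha\log\log x+O(1)$. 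Since $x\to\infty$, we have $\log\log x=o(\log x)$, which forces $\log x\sim\log y$ and in particular $(\log x)^\alpha=(1+o(1))(\log y)^\alpha$. Substituting this into the two-sided bound $x\asymp y(\log x)^\alpha$ gives $x=\Theta(y(\log y)^\alpha)$, as required.

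\emph{Main obstacle.} Neither stage is difficult, but some care is needed at the inversion step to ensure the implicit equation $\log x=\log y+\alpha\log\log x+O(1)$ really does give $\log x\sim\log y$ before one raises to the power $\alpha$; a short bootstrap argument (first showing $\log x=O(\log y)$ from the upper bound $x\le c_2^{-1}y(\log x)^\alpha\le c_2^{-1}y(\log x)^2$, hence $\log x\le 2\log y+O(\log\log x)$, and then feeding this back) handles it cleanly. Everything else is routine calculus.
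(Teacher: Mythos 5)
Your proposal is correct, and for the first stage it takes a genuinely different (and cleaner) route than the paper. The paper establishes $S(x)=\Theta(x/(\log x)^{2-2/M})$ by two direct integral comparisons: for the upper bound it pulls the factor $(\log x)^{2/M}$ out of the integrand $(\log t)^{2/M}/(\log t)^2$, and for the lower bound it restricts the integral to $[\sqrt{x},x]$ where $\log t\ge(\log x)/2$. Your L'H\^opital argument instead delivers the sharper conclusion $S(x)\sim x/(\log x)^{2-2/M}$, from which the $\Theta$-bound is immediate, and it handles the $M=1$ boundary case explicitly, which the paper glosses over. The trade-off is that L'H\^opital produces asymptotic equivalence but not explicit constants, while the paper's splitting argument yields explicit (if unimportant) bounds; for the purposes of this lemma either is fine. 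Your second stage is essentially the paper's argument — take logarithms of $y\asymp x/(\log x)^{2-2/M}$ to conclude $\log x=\Theta(\log y)$, then substitute back — but your bootstrap step justifying $\log x\sim\log y$ is a welcome addition, since the paper merely asserts ``taking logarithm on both sides, $\log y=\Theta(\log x)$'' without spelling out why the lower-order $\log\log$ term is harmless.
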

\begin{proof}

First, we bound the rate of $S(x)$.  Note that 
\[
    S(x)  = \int_s^x \frac{1}{(\log t)^{2-2/M}} dt  = \int_s^x \frac{(\log t)^{2/M}}{(\log t)^{2}} dt \le (\log x)^{2/M} \int_s^x \frac{1}{(\log t)^{2}} dt  = \Ocal  \bigg(  \frac{x}{ (\log x)^{2-2/M}}\bigg). 
\]
On the other hand, we have 
\[
    S(x) \ge \int_{\sqrt{x}}^x \frac{1}{(\log t)^{2-2/M}} dt \ge (  ( \log x)/2)^{2/M} \int_{\sqrt{x}}^x \frac{1}{(\log t)^{2}} dt = \Omega \bigg(  \frac{x}{ (\log x)^{2-2/M}}\bigg).
\]
Now we bound the rate of $S^{-1}(y)$. Let $x = S^{-1}(y)$ for $y\ge 0$. By the previous results, we have $x\to \infty$ as $y\to \infty$. Besides, we know that $y = S(x) = \Theta \Big( \frac{x}{ (\log x)^{2-2/M}} \Big)$. Taking logarithm on both sides, we have $\log y = \Theta ( \log x)$. Therefore, 
\[
    y (\log y)^{2-2/M} = \Theta ( y (\log x)^{2-2/M}) = \Theta (x). 
\] 
This implies that $x = \Theta (y (\log y)^{2-2/M})$. This completes the proof of \Cref{lem:int-log}.
\end{proof}

\begin{lemma} 
\label{lem:int-log-2}
Given two integers $m,n \in \Zbb_+$ and a constant $s<\frac{1}{n e^2}$, let $S(x) \coloneqq \int_x^s \frac{1}{t^2(\log (n t))^{2-2/M}} \mathrm{d} t$ for $x<\frac{s}{2}$. Then, 
\[
    S(x) = \Theta \bigg( \frac{1}{ x(\log (n x))^{2-2/M}} \bigg), \quad S^{-1}(y) = \Theta \bigg( \frac{1}{ y(\log (n y))^{2-2/M}} \bigg). 
\]
\end{lemma}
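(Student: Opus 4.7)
The plan is to reduce Lemma~\ref{lem:int-log-2} to the already-proved Lemma~\ref{lem:int-log} through an explicit change of variables. First I would adopt the convention (consistent with the role of $\phi(t) = \log(1/(nt))$ throughout the paper) that $(\log(nt))^{2-2/M}$ means $|\log(nt)|^{2-2/M} = (\log(1/(nt)))^{2-2/M}$, which is well-defined and positive since $nt < 1$ on the range of integration.

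Next, apply the substitution $u = 1/(nt)$, under which $dt/t^2 = -n\,du$ and $\log(1/(nt)) = \log u$. As $t$ traverses $[x, s]$, the new variable $u$ traverses $[1/(ns), 1/(nx)]$, and both endpoints exceed $1$ since $s < 1/(ne^2)$ forces $1/(ns) > e^2$. This gives
\begin{equation*}
S(x) \;=\; n\int_{1/(ns)}^{1/(nx)} \frac{du}{(\log u)^{2-2/M}}.
\end{equation*}
Applying Lemma~\ref{lem:int-log} with reference point $1/(ns) > 1$ and upper limit $1/(nx) \to \infty$ as $x \to 0^+$ yields
\begin{equation*}
S(x) \;=\; \Theta\!\left(\frac{n \cdot 1/(nx)}{(\log(1/(nx)))^{2-2/M}}\right) \;=\; \Theta\!\left(\frac{1}{x(\log(nx))^{2-2/M}}\right),
\end{equation*}
absorbing the factor of $n$ into the multiplicative constant.

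For the second assertion, set $y = S(x)$, so that $xy \cdot |\log(nx)|^{2-2/M}$ lies between two positive constants. Taking logarithms yields
\begin{equation*}
\log y + \log x + \left(2 - \tfrac{2}{M}\right)\log|\log(nx)| \;=\; O(1).
\end{equation*}
As $y \to \infty$ we have $x \to 0^+$, so $\log(1/x) \to \infty$ while $\log|\log(nx)| = \log\log(1/(nx))$ is of lower order. This forces $\log y \sim -\log x$, and hence $|\log(nx)| = -\log(nx) \sim \log y$. Substituting back into the original relation $xy|\log(nx)|^{2-2/M} = \Theta(1)$ gives $x = \Theta\bigl(1/(y(\log y)^{2-2/M})\bigr)$, which equals $\Theta\bigl(1/(y(\log(ny))^{2-2/M})\bigr)$ since $\log(ny) = \log n + \log y \sim \log y$.

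The main subtlety is ensuring that the asymptotic identification $|\log(nx)| \sim \log y$ is tight enough to survive being raised to the power $2 - 2/M$ while still yielding a two-sided $\Theta$-bound (rather than a weaker $O$ or $o$ statement). This is handled in the standard way: after passing to logarithms, the doubly-logarithmic correction $\log\log(1/x)$ is of strictly lower order than $\log(1/x)$, so the ratio $|\log(nx)|/\log y$ tends to $1$; raising to any fixed power preserves this, giving the desired $\Theta$-bound. This mirrors exactly the inversion argument in the proof of Lemma~\ref{lem:int-log}.
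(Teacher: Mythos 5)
Your proof is correct, and it takes a genuinely different route from the paper's. The paper proves Lemma~\ref{lem:int-log-2} from scratch by the same direct estimation strategy used for Lemma~\ref{lem:int-log}: pull the (approximately constant) logarithmic factor outside the integral to get a lower bound, then compare against the explicitly integrable kernel $1/(t^2(\log(nt))^2)$ for the upper bound, and finally invert by passing to logarithms. You instead observe that the substitution $u = 1/(nt)$ (so that $dt/t^2 = -n\,du$ and $\log(1/(nt)) = \log u$) maps the integrand exactly onto the one in Lemma~\ref{lem:int-log}, turning this lemma into a corollary of that one. This buys you a shorter argument that avoids re-deriving the antiderivative bounds, and it also makes the structural duality between the two lemmas — one tracking $\mathcal{G}_t = 1/(n\Loss_t) \to \infty$, the other $\Loss_t \to 0$ — completely transparent, which is not visible from the paper's side-by-side direct computations. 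One minor presentational remark: you were right to flag the sign convention up front; the paper's notation $(\log(nt))^{2-2/M}$ for $nt<1$ is abusive and should be read as $\phi(t)^{2-2/M} = (\log(1/(nt)))^{2-2/M}$, and your substitution makes this precise by working with $\log u > 0$ throughout. Your inversion step matches the paper's in spirit (take logarithms, note the doubly-logarithmic term is lower order, raise the resulting $\sim$ to the fixed exponent $2-2/M$), and your explicit justification that this preserves two-sided $\Theta$-bounds is a welcome addition that the paper itself glosses over.
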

\begin{proof}
We prove the LHS rate first. 
 Now we can bound the rate of $S(x)$. Note that
\begin{align*}
    S(x) &= \int_x^s \frac{1}{t^2(\log (n t))^{2-2/M}} \mathrm{d} t 
    \ge   \frac{1}{(\log (n x))^{2-2/M}} \int_x^s \frac{1}{t^2} \mathrm{d} t \\ 
    & = \frac{1}{(\log (n x))^{2-2/M}} \bigg( \frac{1}{x} - \frac{1}{s} \bigg) = \Omega \bigg( \frac{1}{x(\log (n x))^{2-2/M}} \bigg).
\end{align*}
Besides, we know that 
\begin{align*}
    S(x) &= \int_x^s \frac{1}{t^2(\log (n t))^{2-2/M}} \mathrm{d} t \\ 
    &\le (\log(nx))^{2/M} \int_x^s \frac{1}{t^2 (\log (n t))^{2}} \mathrm{d} t \\
    &\le  (\log(nx))^{2/M} \bigg( -\frac{3}{t(\log (n t))^2} \bigg|_x^s \bigg) \\
    &\le  \frac{3(\log(nx))^{2/M}}{x(\log (n x))^2} = \Ocal \bigg( \frac{1}{x(\log (n x))^{2-2/M}} \bigg).
\end{align*}
Combine them, we get that 
\[
S(x) = \Theta \bigg( \frac{1}{x(\log (n x))^{2-2/M}} \bigg).
\] 
Let $y = S(x)$. Then, we have $ \log(x) = \Theta(-\log(y))$. Hence, we have 
\[
x = \frac{1}{y (\log (n x))^{2-2/M}}  = \Theta \bigg( \frac{1}{y(\log (n y))^{2-2/M}} \bigg).
\] 
This completes the proof of \Cref{lem:int-log-2}.

\end{proof}

\begin{lemma}
\label{lem: log over linear}
Given $0<\gamma <1$, when $t \ge \big(4/\gamma ( \log( 4/\gamma))\big)^4$, we have 
\[
    \frac{\log t}{2 \gamma} \le  t^{\frac{1}{4}}. 
\]
\end{lemma}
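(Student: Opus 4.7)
The inequality is purely analytic, with no probabilistic or geometric content, so the plan is to reduce it to two elementary facts: monotonicity of $t \mapsto t^{1/4} - (\log t)/(2\gamma)$ past a threshold, and the universal bound $x \ge \log x$ for $x > 0$.

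First I would set $h(t) := t^{1/4} - (\log t)/(2\gamma)$ and compute
\[
h'(t) = \tfrac{1}{4} t^{-3/4} - \tfrac{1}{2\gamma t} = \tfrac{1}{4 t}\bigl(t^{1/4} - 2/\gamma\bigr).
\]
Thus $h$ is non-decreasing on $[(2/\gamma)^4, +\infty)$. Since $\gamma < 1$ gives $4/\gamma > e$, hence $\log(4/\gamma) > 1$, we have
\[
\bigl(4/\gamma \cdot \log(4/\gamma)\bigr)^4 \ge (4/\gamma)^4 \ge (2/\gamma)^4,
\]
so the threshold $T := \bigl(4/\gamma \cdot \log(4/\gamma)\bigr)^4$ lies in the region where $h$ is non-decreasing. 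It therefore suffices to verify $h(T) \ge 0$.

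Next I would evaluate $h$ at $t = T$ directly. One has $T^{1/4} = (4/\gamma) \log(4/\gamma)$ and
\[
\log T = 4\bigl[\log(4/\gamma) + \log\log(4/\gamma)\bigr],
\]
so the desired inequality $T^{1/4} \ge (\log T)/(2\gamma)$ rearranges, after multiplying by $\gamma/2$, to
\[
2\log(4/\gamma) \ge \log(4/\gamma) + \log\log(4/\gamma) \iff \log(4/\gamma) \ge \log\log(4/\gamma).
\]
This last inequality is a direct consequence of the universal bound $y \ge \log y$ applied to $y = \log(4/\gamma) > 0$ (which is valid because $4/\gamma > 1$). Combining monotonicity of $h$ with $h(T) \ge 0$ yields $h(t) \ge 0$ for all $t \ge T$, which is the claim.

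There is no real obstacle here; the only subtlety is ensuring that $T$ is large enough to land in the region of monotonicity, which follows from $\log(4/\gamma) > 1$ under the hypothesis $\gamma < 1$. The proof is essentially one-dimensional calculus plus the $y \ge \log y$ inequality.
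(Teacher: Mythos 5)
Your proof is correct and is essentially the same argument as the paper's: the paper substitutes $z = t^{1/4}$ and works with $g(z) = \gamma z - 2\log z$, which is precisely your $h$ after the change of variable, then likewise uses monotonicity past $z \ge 2/\gamma$ and evaluates at the threshold using $y \ge \log y$. The only difference is cosmetic.
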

\begin{proof}

Let $z = t^{\frac{1}{4}}$. Then, we want to show: 
\[
   g(z) \coloneqq  \gamma z  - 2 \log z \ge 0. 
\]
Note that $g^\prime (z) = \gamma - \frac{2}{z}$. When $z \ge 2/\gamma$, $g(z)$ is increasing. Furthermore, we have 
\[
    g(4/\gamma \log( 4/\gamma))  = 4 \log(4/\gamma)  -  2 \log(4/ \gamma)   - 2 \log \log(4/\gamma) = 2\log \frac{4/\gamma}{\log(4/\gamma)}>0. 
\]
Since $4/\gamma \log( 4/\gamma)) > 2/\gamma$, we have finished the proof of \Cref{lem: log over linear}.
\end{proof}

\begin{lemma}[Moore-Osgood Theorem]
\label{lem:exchange-limit}
Assume that a series of functions $(f_n(x))_{n=1}^\infty$ converge uniformly to $f(x)$ in $(-a,a)$ for some $a>0$, and for any $n$, $\lim_{x\to 0} f_n(x) = L_n$ exists. Then, both $\lim_{x \to 0} f(x)$ and $\lim_{n \to \infty} L_n$ exist and are equal, namely
\[ 
\lim_{n\to\infty} L_n = \lim_{n\to \infty} \lim_{x\to 0} f_n(x) = \lim_{x\to 0} \lim_{n\to \infty} f_n(x) = \lim_{x\to 0} f(x). 
\]

\end{lemma}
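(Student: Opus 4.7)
The plan is to establish this as a standard application of the classical Moore--Osgood interchange-of-limits argument, which proceeds in two stages. First I would show that the sequence $(L_n)$ is convergent, and only then use its limit as a candidate value for $\lim_{x \to 0} f(x)$.

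For the first step, I would exploit uniform convergence to show $(L_n)$ is Cauchy. Fix $\varepsilon > 0$. Since $(f_n)$ converges uniformly on $(-a,a)$, it is uniformly Cauchy, so there exists $N$ such that for all $n, m \ge N$ and all $x \in (-a,a)$,
\[
|f_n(x) - f_m(x)| < \varepsilon/3.
\]
Holding $n, m$ fixed and taking the limit $x \to 0$ (which exists by assumption for each individual function), we obtain $|L_n - L_m| \le \varepsilon/3 < \varepsilon$. Hence $(L_n)$ is a Cauchy sequence in $\mathbb{R}$, so it converges to some limit $L := \lim_{n \to \infty} L_n$.

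For the second step, I would show $\lim_{x \to 0} f(x) = L$ by a three-term triangle-inequality argument. Given $\varepsilon > 0$, first choose $N_1$ so that $|L_n - L| < \varepsilon/3$ for all $n \ge N_1$. Next, using uniform convergence, choose $N_2$ so that $|f_n(x) - f(x)| < \varepsilon/3$ for all $n \ge N_2$ and all $x \in (-a,a)$. Fix any $n_0 \ge \max\{N_1, N_2\}$. Since $\lim_{x \to 0} f_{n_0}(x) = L_{n_0}$, there exists $\delta > 0$ with $\delta < a$ such that $|f_{n_0}(x) - L_{n_0}| < \varepsilon/3$ whenever $0 < |x| < \delta$. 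Then for such $x$,
\[
|f(x) - L| \le |f(x) - f_{n_0}(x)| + |f_{n_0}(x) - L_{n_0}| + |L_{n_0} - L| < \varepsilon,
\]
which gives $\lim_{x \to 0} f(x) = L$, completing the proof.

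There is no real obstacle here; the argument is entirely standard. The only delicate point is to make sure that the $\delta$ in the third step can be chosen after $n_0$ is fixed (so that $\delta$ may depend on $n_0$), and that uniform convergence is essential for the first step --- mere pointwise convergence would not allow the passage from $|f_n(x) - f_m(x)| < \varepsilon/3$ uniformly in $x$ to $|L_n - L_m| \le \varepsilon/3$. I would briefly emphasize these two points in the written proof for clarity.
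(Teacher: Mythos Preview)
Your proposal is correct and follows essentially the same approach as the paper's proof: both first show $(L_n)$ is Cauchy by passing to the limit $x\to 0$ in the uniform Cauchy inequality, then conclude $\lim_{x\to 0} f(x)=L$ via the same three-term triangle inequality. The only cosmetic difference is that the paper extracts all three $\varepsilon/3$ bounds from a single index $N$ (coming from the uniform Cauchy condition), whereas you introduce separate $N_1$ and $N_2$; this is immaterial.
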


\begin{proof}
Due to uniform convergence, for any $\epsilon>0$ there exist $N(\epsilon) \in \mathbb{N}$, such that:
for all $x \in (-a, a) \backslash \{ 0 \}$, $n, m>N$ implies $\left|f_n(x)-f_m(x)\right|<\frac{\epsilon}{3}$.
As $x \rightarrow 0$, we have $\left|L_n-L_m\right|<\frac{\epsilon}{3}$, which means that $L_n$ is a Cauchy sequence which converges to a limit $L$. In addition, as $m \rightarrow \infty$, we have $\left|L_n-L\right|<\frac{\epsilon}{3}$.
On the other hand, if we take $m \rightarrow \infty$ first, we have $\left|f_n(x)-f(x)\right|<\frac{\epsilon}{3}$.
By the existence of pointwise limit, for any $\epsilon>0$ and $n>N$, there exist $\delta(\epsilon, n)>0$, such that $0<|x|<\delta$ implies $\left|f_n(x)-L_n\right|<\frac{\epsilon}{3}$.
Then for that fixed $n, 0<|x|<\delta$ implies $|f(x)-L| \leq\left|f(x)-f_n(x)\right|+\left|f_n(x)-L_n\right|+\left|L_n-L\right| \leq \epsilon$.
This proves that $\lim _{x \rightarrow 0} f(x)=L=\lim _{n \rightarrow \infty} L_n$.
\end{proof}



\end{document}